\DeclareMathOperator*{\argmin}{arg\,min}
\DeclareMathOperator{\prox}{prox}         
\DeclareMathOperator{\dom}{dom}
\DeclareMathOperator{\support}{support}
\newcolumntype{C}[1]{>{\centering\let\newline\\\arraybackslash\hspace{0pt}}m{#1}}
\newcommand{\eqdef}{\overset{\text{def}}{=}}
\newcommand\numberthis{\addtocounter{equation}{1}\tag{\theequation}}
\newcommand{\R}{\mathbb{R}}
\newcommand{\E}[1]{\mathbb{E}\left[ #1 \right]}
\newcommand{\Ei}[1]{\mathbb{E}_i\left[ #1 \right]}
\newcommand{\Prob}[1]{\mathbb{P}\left[ #1 \right]}
\newcommand{\ip}[2]{\left\langle #1, #2 \right\rangle}
\def\<#1,#2>{\left\langle #1, #2 \right\rangle}
\newcommand{\grad}{ g} 
\newcommand{\ivar}{ y} 
\newcommand{\ovar}{ w} 
\newcommand{\iidx}{ t} 
\newcommand{\oidx}{ k} 
\newcommand{\rsample}{ i} 
\newcommand{\csample}{ j} 
\newcommand{\weightinG}{q} 
\newcommand{\weightofnorm}{v}
\newcommand{\hL}{ \hat{L}}
\newcommand{\cocoa}{\textsc{CoCoA}\xspace} 
\newcommand{\cocoap}{\textsc{CoCoA$\!^{\bf \textbf{\footnotesize+}}$}\xspace}
\newcommand{\vsubset}[2]{#1_{[#2]}}
\newcommand{\vc}[2]{#1^{#2}}
\newcommand{\vsub}[2]{#1_{[#2]}}
\newcommand{\hv}{{h}}
\newcommand{\sv}{{s}}
\newcommand{\uv}{{u}}
\newcommand{\vv}{{v}}
\newcommand{\wv}{{w}}
\newcommand{\xv}{{x}}
\newcommand{\alphav}{{\alpha}}
\newcommand{\bD}{D}
\newcommand{\Ggk}{\mathcal{G}^{\sigma'}_k\hspace{-0.08em}}
\newcommand{\gap}{{Gap}}
\newcommand{\aggpar}{\nu}
\newcommand{\hk}{\vsub{\hv}{k}}
\newcommand{\Xk}{\vsub{X}{k}}
\newcommand{\alphak}{\vsub{\alphav}{k}}
\newcommand{\Gk}{\mathcal{G}_k}
\newcommand{\Gks}{\mathcal{G}^{\sigma'}_k\hspace{-0.08em}}
\newcommand{\nBG}{G} 
\newcommand{\FedOpt}{Federated Optimization\xspace}
\newcommand{\fedopt}{federated optimization\xspace}
\newcommand{\SVRG}{SVRG\xspace}
\newcommand{\algname}{FSVRG\xspace}  
\newcommand{\node}{node\xspace} 
\newcommand{\nodes}{nodes\xspace}
\newcommand{\iid}{IID\xspace}
\newcommand{\A}{\mathcal{A}}
\newcommand{\BO}{\mathcal{O}}
\newcommand{\pp}{\mathcal{P}}
\newcommand{\StructLowRank}{\texttt{StructLowRank}\xspace}
\newcommand{\StructMask}{\texttt{StructMask}\xspace}
\newcommand{\SketchMask}{\texttt{SketchMask}\xspace}
\newcommand{\SketchRotMask}{\texttt{SketchRotMask}\xspace}
\newcommand{\SketchMaskBin}{\texttt{SketchMaskBin}\xspace}
\newcommand{\SketchRotMaskBin}{\texttt{SketchRotMaskBin}\xspace}
\newcommand{\bW}{{W}}
\newcommand{\bH}{{H}}
\newcommand{\bA}{{A}}
\newcommand{\bB}{{B}}
\newcommand{\EE}[2]{{\mathbb E}_{#1}\left[#2\right] } 
\newcommand{\VV}[2]{{\bf Var}_{#1}\left[#2\right] }
\newtheorem{theorem}{Theorem}
\newtheorem{lemma}[theorem]{Lemma}
\newtheorem{proposition}[theorem]{Proposition}
\newtheorem{corollary}[theorem]{Corollary}
\newtheorem{definition}[theorem]{Definition}
\newtheorem{assumption}[theorem]{Assumption}
\newtheorem{example}[theorem]{Example}
\newtheorem{remark}[theorem]{Remark}
\title{Stochastic, Distributed and Federated Optimization for Machine Learning}
\author{Jakub Kone\v{c}n\'{y}}
\date{2017}
\begin{document}

\maketitle

\declaration

\begin{abstract}
We study optimization algorithms for the finite sum problems frequently arising in machine learning applications. First, we propose novel variants of stochastic gradient descent with a variance reduction property that enables linear convergence for strongly convex objectives. Second, we study distributed setting, in which the data describing the optimization problem does not fit into a single computing node. In this case, traditional methods are inefficient, as the communication costs inherent in distributed optimization become the bottleneck. We propose a communication-efficient framework which iteratively forms local subproblems that can be solved with arbitrary local optimization algorithms. Finally, we introduce the concept of Federated Optimization/Learning, where we try to solve the machine learning problems without having data stored in any centralized manner. The main motivation comes from industry when handling user-generated data. The current prevalent practice is that companies collect vast amounts of user data and store them in datacenters. An alternative we propose is not to collect the data in first place, and instead occasionally use the computational power of users' devices to solve the very same optimization problems, while alleviating privacy concerns at the same time. In such setting, minimization of communication rounds is the primary goal, and we demonstrate that solving the optimization problems in such circumstances is conceptually tractable.
\end{abstract}

\chapter*{Acknowledgements}

I would like to express my sincere gratitude to my supervisor, Peter Richt\'{a}rik, for his guidance through every facet of the research world. I have not only learned how to develop novel research ideas, but also how to clearly write and communicate those ideas, prepare technical presentations and engage audience, interact with other academics and build research collaborations. All of this was crucial for all I have accomplished and lays a solid foundation for my next steps.

I would also like to thank my other supervisors and mentors Jacek Gondzio and Chris Williams for various discussions about research and differences between related fields. I am also grateful to my examination committee, M. Pawan Kumar, Kostas Zygalakis and Andreas Grothey. For stimulating discussions and fun we had together during last four years, I thank other past and current members of our research group: Dominik Csiba, Olivier Fercoq, Robert Gower, Filip Hanzely, Nicolas Loizou, Zheng Qu, Ademir Ribeiro and Rachael Tappenden.

I am indebted to School of Mathematics of the University of Edinburgh for the wonderful, inclusive and productive work environment, to Principal's Career Development Scholarship for initially funding my PhD and to The Centre for Numerical Analysis and Intelligent Software for funding my visit at the Simons Institute for the Theory of Computing. I am extremely grateful to Gill Law, who was always very helpful in overcoming every formal or administrative problems I encountered.

During my study I had the opportunity to work with many brilliant researchers:
\begin{itemize}[noitemsep]
\item I would like to thank Martin Jaggi and Thomas Hofmann for hosting my visit at ETH Zurich and encouragement in my starting career. 
\item I would like to thank Martin Tak\'{a}\v{c} for the visit at Lehigh University, and his students Chenxin Ma and Jie Liu. I would also like to thank Katya Scheinberg for her insightful advice regarding my PhD study.
\item I would also like to thank Ngai-Man Cheung, Selin Damla Ahipasaoglu and Yiren Zhou who taught me a lot during my visit at Singapore University of Technology and Design.
\item I would like to thank Owain Evans and others at Future of Humanity Institute at Oxford University for hosting my extremely inspiring visit which helped me to understand the implications of our work in a much broader context.
\item I would like to thank the Simons Institute for the Theory of Computing at University of California in Berkeley, for the opportunity I had at the start of my PhD study.
\item Finally, I would like to thank my other collaborators for their ideas, help, jokes and support: Mohamed Osama Ahmed, Dave Bacon, Dmitry Grishchenko, Michal Hagara, Filip Hanzely, Reza Harikandeh, Michael I. Jordan, Nicolas Loizou, H. Brendan McMahan, Barnab\'{a}s P\'{o}czos, Zheng Qu, Daniel Ramage, Sashank J. Reddi, Scott Sallinen, Mark Schmidt, Virginia Smith, Alex Smola, Ananda Theertha Suresh, Alim Virani and Felix X. Yu.
\end{itemize}

For generous support through Google Doctoral Fellowship I am very thankful to Google, which enabled me to pursue my goals without distractions. I appreciate the advice, patience, discussions and fun I had with many amazing people during my summer internships at Google, including Galen Andrew, Dave Bacon, Keith Bonawitz, Hubert Eichner, Jeffrey Falgout, Emily Fortuna, Gaurav Gite, Seth Hampson, Jeremy Kahn, Peter Kairouz, Eider Moore, Martin Pelikan, John Platt, Daniel Ramage, Marco Tulio Ribeiro, Negar Rostamzadeh, Subarna Tripathi, Felix X. Yu and many others. In particular, I would like to thank for the immerse trust and support I received from Brendan McMahan and Blaise Ag\"{u}era y Arcas.

For willingness to help and provide recommendation, reference, or connection at various stages of my study, I would like to thank Blaise Ag\"{u}era y Arcas, Petros Drineas, Martin Jaggi, Michael Mahoney, Mark Schmidt, Nathan Srebro and Lin Xiao. In addition, I am extremely thankful to Isabelle Guyon. I would perhaps not decide to go for PhD without her encouragement and support after we met during my undergraduate study.

I would like to thank the Slovak educational non-profit organizations which helped to shape who I am today, both on academic and personal level --- Trojsten, Sezam, P-Mat and Nexteria --- and all the people involved in their activities.

Finally, I would like to thank my parents and family, for all the obstacles they quietly removed so I could realize my dreams.

\tableofcontents

\chapter{Introduction}

In this thesis, we focus on minimization of a finite sum of functions, with particular motivation by machine learning applications:
\begin{equation}
\label{eq:intro:problem}
\min_{w \in \R^d} \frac1n \sum_{i=1}^n f_i(w).
\end{equation}
This problem is referred to as Empirical Risk Minimization (ERM) in the machine learning community, and represents optimization problems underpinning a large variety of models --- ranging from simple linear regression to deep learning.

This introductory chapter briefly outlines the theoretical framework that gives rise to the ERM problem in Section~\ref{sec:intro:erm}, and clarifies what part of the general objective in machine learning we address in this thesis. We follow by a summary of the thesis, highlighting the central contributions without going into the details.


\section{Empirical Risk Minimization}
\label{sec:intro:erm}

In a prototypical setting of supervised learning, one can access input-output pairs $x, y \in \mathcal{X} \times \mathcal{Y}$, which follow an \emph{unknown} probability distribution $\mathcal{P}(x, y)$. Typically, inputs and outputs are not known at the same time, and a general goal is to understand the conditional distribution of output given input, $\mathcal{P}(y | x)$. For instance, a bank needs to predict whether a transaction is fraudulent, without knowing the true answer immediately, or a recommendation engine predicts which products are users likely to be interested in next. For a more detailed introduction to the following concept, see for instance \cite{shalev2014understanding}.

The typical learning setting, relies on a definition of a loss function $\ell: \mathcal{Y} \times \mathcal{Y} \to \R$ and a predictor function $\phi: \mathcal{X} \to \mathcal{Y}$. Loss function $\ell(\hat y, y)$ measures the discrepancy between predicted output $\hat y$ and true output $y$, and $\phi(x)$ maps an input to the predicted output $\hat y$. With these tools, we are ready to define the \emph{expected risk} of a predictor function $\phi$ as
$$ \mathbf{E}(\phi) \eqdef \int \ell(\phi(x), y) d\mathcal{P}(x, y). $$
The ideal goal is to find $\phi^*(x)$ that minimizes the expected risk, defined pointwise as 
$$ \phi^*(x) \eqdef \argmin_{\hat y \in \mathcal{Y}} \int \ell(\hat y, y) d\mathcal{P}(y | x). $$

Clearly, since we do not assume to know anything about the source distribution $\mathcal{P}(x, y)$, finding $\phi^*$ is an infeasible objective.

Instead, we have access to samples from the distribution. Given a training dataset $\{ (x_i, y_i) \}_{i=1}^n$, which is assumed to be drawn iid from $\mathcal{P}(x, y)$, we can define the \emph{empirical risk} $\mathbf{E}_n (\phi)$ as a proxy to the expected risk; also known as monte-carlo integration:
\begin{equation}
\label{eq:intro:erm}
\mathbf{E}_n (\phi) \eqdef \frac{1}{n} \sum_{i=1}^n \ell(\phi(x_i), y_i).
\end{equation}
If we further restrict the predictor $\phi$ to belong to a specific class of functions, e.g., linear functions, minimizing the empirical risk becomes a tractable objective, motivating the optimization problem \eqref{eq:intro:problem}.

\subsection{Approximation-Estimation-Optimization tradeoff}
\label{sec:intro:tradeoff}

A first learning principle is to restrict the candidate prediction functions to a specific class $\mathcal{F}$. This, together with the choice of loss function $\ell$ effectively corresponds to the choice of a specific machine learning technique. As an example, these can be linear functions of $x$, parametrized by a vector $w$: $\phi_w(x) = w^T x$. A more complex example is a class implicitly defined by the architecture of a neural network.

In practice, an optimization algorithm is applied to obtain an approximate solution $\hat \phi_n$ to the ERM problem. In order to assess the quality of the predictor $\hat \phi_n$, compared to the ideal but intractable $\phi^*$, the standard in learning theory is to define the empirical risk minimizer as
$$ \phi_n \eqdef \argmin_{\phi \in \mathcal{F}} \mathbf{E}_n(\phi), $$
and the best predictor in terms of expected risk as 
$$ \phi_\mathcal{F}^* \eqdef \argmin_{\phi \in \mathcal{F}} \mathbf{E}(\phi). $$

Taking expectation with respect to generation of the sampled data, and possible randomization in an optimization algorithm for solving \eqref{eq:intro:erm}, goal of a machine learner is to minimize the \emph{excess error} $\E{\mathbf{E}(\hat \phi_n) - \mathbf{E}(\phi^*)}$ by choosing $\ell$, $\mathcal{F}$, and optimization algorithm appropriately, subject to constraints such as computational resources available. The excess error can be decomposed as follows:
\begin{equation}
\label{eq:intro:tradeoff}
\E{\mathbf{E}(\hat \phi_n) - \mathbf{E}(\phi^*)} = \E{\mathbf{E}(\phi_\mathcal{F}^*) - \mathbf{E}(\phi^*)} + \E{\mathbf{E}(\phi_n) - \mathbf{E}(\phi_\mathcal{F}^*)} +\E{\mathbf{E}(\hat \phi_n) - \mathbf{E}(\phi_n)}.
\end{equation}

The three terms above are referred to as \emph{approximation error}, \emph{estimation error} and \emph{optimization error}, respectively \cite{BottouBousquet}. Approximation error captures how much one loses by restricting the class of candidate predictor functions to $\mathcal{F}$. Estimation error captures the loss incurred by minimizing the empirical risk instead of the expected risk we would ideally optimize for. Optimization error is a result of finding an approximate optimum of the empirical risk, using an optimization algorithm.

These terms are subject to various tradeoffs that have been studied for decades. For instance, expanding the functional class $\mathcal{F}$ will naturally decrease the approximation error, but can increase the estimation error due to overfitting the training dataset. Increasing the size of the dataset available (increasing $n$) makes the empirical risk a better approximation of the expected risk, thus decreasing estimation error. However, it will likely make it computationally more expensive to attain the same optimization error.

Detailed overview of the interplay of these terms is beyond the scope of this work. We focus only on the optimization error, and what computational resources are necessary to obtain particular levels of the error. This can mean using optimization algorithms on a single compute node, with or without parallel processing units, or in a distributed environment. We describe this in the rest of this chapter. A comprehensive literature overview is deferred to Section~\ref{sec:relatedWork}, in which we explain why none of the existing methods are suitable for Federated Optimization, a novel conceptual setting for the ERM problem.

\subsection{Notation}

With focus on the optimization objective only, we can reformulate \eqref{eq:intro:erm} into notation used throughout the thesis. We are interested in minimizing a function $P(w)$, which in full generality takes the form
\begin{equation}
\label{eq:intro:primal}
P(w) = \frac{1}{n} \sum_{i=1}^n f_i(w) + R(w).
\end{equation}

The functions $f_i$ are assumed to be convex, and hide the dependence on the training data $x_i, y_i$, which is mostly irrelevant for the subsequent analysis. The (optional) function $R(w)$ is referred to as a regularizer, and is in practice used primarily to prevent overfitting or enforce structural properties of the solution. Most common choice are L2 ($R(w) = \lambda/2 \| w \|_2^2$) or L1 ($R(w) = \lambda \| w \|_1$) regularizers for some choice of $\lambda > 0$.

By $\nabla f_i(w)$ we denote the gradient of $f_i$ at point $w$. We denote $\ip{\cdot}{\cdot}$ the standard Euclidean inner product of two vectors, and unless specified otherwise, $\| \cdot \| = \sqrt{\ip{\cdot}{\cdot}}$ refers to the standard Euclidean norm. We denote the proximal operator of function $\psi : \R^d \to \R$ as $\prox_{\psi}(z) = \argmin_{s \in \R^d} \left\{ \frac 12  \| s - z \|^2 + \psi(s) \right\}.$ The convex (Fenchel) conjugate of a function $\phi : \R^d \rightarrow \R$ is defined as the function $\phi^* : \R^d \rightarrow \R \cup \{\infty\}$, with $\phi^*(u) = \sup_{s \in \R^d} \{ s^T u - \phi(s) \} $.

\section{Baseline Algorithms}

Two of the most basic algorithms that can be used to solve the ERM problem \eqref{eq:intro:primal} are Gradient Descent and Stochastic Gradient Descent, which we introduce now. For simplicity, we assume that $R(w) = 0$ for all $w \in \R^d$.

A trivial benchmark for solving \eqref{eq:intro:primal} is \emph{Gradient Descent} (GD) in the case when functions $f_i$ are smooth (or Subgradient Descent for non-smooth functions) \cite{nesterov2004convex}. The GD algorithm performs the iteration 
$$ w \gets w - \frac{h}{n} \sum_{i=1}^n \nabla f_i(w) = w - h \nabla P(w), $$ 
where $h>0$ is a  stepsize parameter.

Common practice in machine learning is to collect vast amounts of data $\{ x_i, y_i \}_{i=1}^n$, which in the context of our objective translates to very large $n$ --- the number of functions $f_i$. This makes GD impractical, as it needs to process the whole dataset in order to evaluate a single gradient and update the model. This makes GD rather impractical for most state-of-the-art applications. An alternative is to use a randomized algorithm, the computational complexity of which is independent of $n$, in a single iteration.

This basic, albeit in practice extremely popular, alternative to GD is \emph{Stochastic Gradient Descent} (SGD), dating back to the seminal work of Robbins and Monro \cite{RM1951}. In the context of~\eqref{eq:intro:primal}, SGD samples a random function $i \in \{1, 2, \dots, n\}$ in iteration $t$, and performs the update $$ w \gets w - h_t \nabla f_{i}(w), $$ where $h_t>0$ is a stepsize parameter. 

Intuitively speaking, this method works because if $i$ is sampled uniformly at random from indices $1$ to $n$, the update direction is an unbiased estimate of the gradient: \hbox{$\E{ \nabla f_i(w) } = \nabla P(w)$.} However, noise introduced by sampling slows down the convergence, and a diminishing sequence of stepsizes $h_t$ is necessary for the method to converge.

If we consider the case of strongly convex $P$, the core differences between GD and SGD can be summarized as follows. Let $\kappa$ denote the condition number defined as the ratio of smoothness and strong convexity parameters of $P$. GD enjoys fast convergence rate, while SGD converges slowly. That is, in order to obtain $\epsilon$-accuracy, GD needs $\mathcal{O}(\kappa \log(1/\epsilon))$ iterations, while SGD needs in general $\mathcal{O}(\kappa^2 / \epsilon)$ iterations. On the other hand, GD requires computation of $n$ gradients of $f_i$, which can be computationally expensive when data is abundant, while SGD needs to evaluate only a single gradient, and thus does not depend on $n$.

In most practical applications in machine learning, a high accuracy is not necessary, as the ERM problem is only a proxy to the original problem of interest, and error will eventually be dominated by approximation and estimation errors described in Section~\ref{sec:intro:tradeoff}. Indeed, SGD can sometimes yield a decent solution in just a single pass through data --- equivalent to a single GD step.

\section{Part I: Stochastic Methods with Variance Reduction}

In Chapters \ref{ch:s2gd} and \ref{ch:s2cd}, we propose and analyze semi-stochastic methods for minimizing the ERM objective \eqref{eq:intro:primal}. These methods interpolate between the baselines (GD and SGD) in the sense that they enjoy benefits of both methods. In particular, we show that using a trick to reduce the variance of stochastic gradients, we are able to maintain the linear convergence of GD, while using stochastic gradients. In order to do so, we still need to evaluate the full gradient $\nabla P$, but only a few times during the entire runtime of the method.

\subsection{Semi-Stochastic Gradient Descent}

The Semi-Stochastic Gradient Descent (S2GD) method, proposed in Chapter~\ref{ch:s2gd} (see Algorithm~\ref{alg:s2gd:s2gd}), runs in two nested loops. In the outer loop, it only computes and stores the full gradient of the objective, $\nabla P(w^t)$, the expensive operation one tries to avoid in general. In the inner loop, with some choice of stepsize $h$, the update step is iteratively computed as 
\begin{equation}
\label{eq:intro:s2gdupdate}
w \gets w - h [\nabla f_i(w) - \nabla f_i(w^t) + \nabla P(w^t)]
\end{equation}
for a randomly sampled $i \in \{1, \dots, n \}$. The core idea is that the gradients of $\nabla f_i$ are used to estimate the change of the full gradient $\nabla P$ between the points $w^t$ and $w$, as opposed to estimating the full gradient directly. It is easy to verify that if $i$ is sampled uniformly at random, the update direction is an unbiased estimate of the gradient $\nabla P(w)$.

We assume that $P$ is $\mu$-strongly convex, and the functions $f_i$ are $L$-smooth. Let $\kappa = L / \mu$ denote the condition number. In a core result, we are able to show that for the update direction form \eqref{eq:intro:s2gdupdate}, we have that
\begin{equation*}
\E{\| \nabla f_i(w) - \nabla f_i(w^t) + \nabla P(w^t) \|^2} \leq 4L [P(w) - P(w^*)] + 4(L- \mu)[P(w^t) - P(w^*)]
\end{equation*}
This shows that as both $w$ and $w^t$ progress towards the optimum $w^*$, the second moment --- and thus also variance --- of the estimate of the gradient diminishes. Together with unbiasedness, we use this to build a recursion which yields (see Theorem~\ref{thm:s2gd:MAIN}) that for iterates $w^t$ in the outer loop of the S2GD algorithm, we have
\begin{equation*}
\E{P(w^t) - P(w^*)} \leq c^t (P(w^0) - P(w^*)),
\end{equation*}
where $c$ is a convergence factor depending on the algorithm parameters and properties of the optimization problem.

Each iteration of S2GD requires evaluation of $\nabla P$ --- or $n$ stochastic gradients $\nabla f_i$, followed by a random number of stochastic updates. In Theorem~\ref{thm:s2gd:main2} we show that we can obtain an $\epsilon$-approximate solution after evaluating $\mathcal{O}((n + \kappa)\log(1/\epsilon))$ stochastic gradients. This is achieved by running the algorithm for $\log(1/\epsilon)$ iterations of the outer loop, with $\mathcal{O}(\kappa)$ stochastic updates \eqref{eq:intro:s2gdupdate} in the inner loop. Contrast this with the rate of GD, which per iteration requires the evaluation of $n$ stochastic gradients, and thus needs a total of $\mathcal{O}(n \kappa \log(1/\epsilon))$ gradient evaluations to attain the same accuracy. Given that $\kappa$ is commonly of the same order as $n$, which is typically very large. This amounts to an improvement by several orders of magnitude!

\subsection{Semi-Stochastic Coordinate Descent}

In Chapter~\ref{ch:s2cd} we present Semi-Stochastic Coordinate Descent (S2CD) method as Algorithm~\ref{alg:s2cd:S2CD}, which builds upon the S2GD algorithm by accessing oracle that returns partial stochastic derivatives $\nabla_j f_i$. In general, one can think of S2GD and similar stochastic methods as sampling rows of a data matrix. S2CD is sampling both rows and columns of the data matrix, in order to get computationally even cheaper stochastic iterations. Contrasted with S2GD, the outer loop stays the same, but stochastic steps in the inner loop update only a single coordinate of the variable $w$, and the update \eqref{eq:intro:s2gdupdate} changes to
$$ w \gets w - h p_j^{-1} \left( \frac{1}{n \weightinG_{ij}} \left( \nabla_{j} f_{i}(w) - \nabla_{j} f_{i}(w^t) \right) + \nabla_j P(w^t) \right) e_{\csample}, $$
where $p_j, q_{ij}$ are parameters of the algorithm determined by the problem structure, and $e_j$ is the $j^{th}$ unit vector in $\R^d$. As before, the update direction is an unbiased estimate of the gradient $\nabla P(w)$. However, the actual update has only one non-zero element.

We prove that the convergence of S2GD algorithm depends on a different notion of condition number (see Corollary~\ref{cor:s2cd:result}), which is always larger or equal to the one driving convergence of S2GD. However, the advantage is the usage of a weaker oracle, which only accesses partial derivatives. Whether S2CD is practically better than S2GD depends on the structure of a given problem, and whether it is possible to implement the oracle efficiently.

\section{Part II: Parallel and Distributed Methods}

In Part II, we do not focus on serial algorithms, but explore possibilities of using parallel and distributed computing architectures.

By parallel computation we mean utilization of multiple computing \nodes with a shared memory architecture, such as a multi-core processor. The main characteristic is that access to all data is equally fast for every computing \node. When we say we solve the ERM problem \eqref{eq:intro:primal} in a distributed setting, we mean that the amount of data describing the problem is too big to fit into a random access memory (RAM) or cannot even be stored on a single computing node. In both cases, the main difference to traditional, or serial, algorithms is that reading any data from a RAM can be several orders of magnitude faster than it is to send it to another \node in a network. This single fact presents a considerable challenge to iterative optimization algorithms that are inherently sequential, particularly to stochastic methods with fast iterations such as those described in Part I.

\subsection{Mini-batch Semi-Stochastic Gradient Descent in the Proximal Setting}

In Chapter~\ref{ch:ms2gd} we present parallel version of the S2GD algorithm, which we call mS2GD (see Algorithm~\ref{alg:ms2gd:mS2GD}), which improves upon S2GD algorithm in two major aspects. First, we allow and analyze the effect of mini-batching --- sampling multiple $f_i$ at the same time to obtain a more accurate stochastic gradient. This admits simple use of parallel computing architectures, as the computation of multiple stochastic gradients can be trivially parallelized. Second, the mS2GD algorithm is applicable to problem \eqref{eq:intro:primal} with general $R(w)$ that admits an efficient proximal operator. This includes non-smooth regularizers such as $R(w) = \| w \|_1$. We demonstrate the algorithm is useful also in the area of signal processing and imaging.

In Section~\ref{sec:ms2gd:speedup} we show that mini-batching alone can decrease the total amount of work necessary for convergence even if we were only to run it as a serial algorithm. More precisely, we show that up to a certain threshold on the mini-batch size (in typical circumstances about 30), the algorithm enjoys superlinear speedup in terms of the number of stochastic iterations needed. Additionally, in Section~\ref{sec:ms2gd:implementation_sparse}, we discuss an efficient implementation of the algorithm for problems with sparse data, which is significantly different and much more efficient than the intuitive straightforward implementation.

\subsection{Distributed Optimization with Arbitrary Local Solvers}

In the following, we review a paradigm for comparing efficiency of algorithms for distributed optimization, and describe what conceptual problem of these algorithms we address in Chapter~\ref{ch:cocoa}. 

Let us suppose we have many algorithms $\A$ readily available to solve problem~\eqref{eq:intro:primal}. The question is: ``How do we decide which algorithm is the best for our purpose?''

First, consider the basic setting on a single machine. Let us define $\mathcal{I}_\A(\epsilon)$ as the number of iterations algorithm $\A$ needs to converge to some fixed $\epsilon$ accuracy. Let $\mathcal{T}_\A$ be the time needed for a single iteration. Then, in practice, the best algorithm is one that minimizes the following quantity:\footnote{Considering only algorithms that can be run on a given machine.}
\begin{equation}
\label{eq:intro:paradigmBasic}
\text{TIME} = \mathcal{I}_\A(\epsilon) \times \mathcal{T}_\A.
\end{equation}

The number of iterations $\mathcal{I}_\A(\epsilon)$ is usually given by theoretical guarantees or observed from experience. The $\mathcal{T}_\A$ can be empirically observed, or one can have an idea of how the time needed per iteration varies between different algorithms in question. The main point of this simplified setting is to highlight a key issue with extending algorithms to the distributed setting.

The natural extension to distributed setting is the formula~\eqref{eq:intro:paradigm}. Let $c$ be the time needed for communication during a single iteration of the algorithm $\A$. For the sake of clarity, we suppose we consider only algorithms that need to communicate a single vector in $\R^d$ per round of communication. Note that essentially all first-order algorithms fall into this category, so it is not a restrictive assumption. This effectively sets $c$ to be a constant, given any particular distributed architecture one has at disposal.

\begin{equation}
\label{eq:intro:paradigm}
\text{TIME} = \mathcal{I}_\A(\epsilon) \times (c + \mathcal{T}_\A).
\end{equation}

The communication cost $c$ does not only consist of actual exchange of the data, but also several other protocols such as setting up and closing a connection between \nodes. Consequently, even if we need to communicate a very small amount of information, $c$ always remains above a nontrivial threshold.

Most, if not all, of the current state-of-the-art algorithms in setting~\eqref{eq:intro:primal} are stochastic and rely on doing very large number (big $\mathcal{I}_\A(\epsilon)$) of very fast (small $\mathcal{T}_\A$) iterations. Even a relatively small $c$ can cause the practical performance of their naively distributed variants drop down dramatically, because we still have $c \gg \mathcal{T}_\A$.

This has been indeed observed in practice, and motivated development of new methods, designed with this fact in mind from scratch, which we review in detail later in Section~\ref{sec:relatedWork:distributedAlgorithms}. Although this is a good development in academia --- motivation to explore a novel problem, it is not necessarily good news for the industry.

Many companies have spent significant resources to build excellent algorithms to tackle their problems of form~\eqref{eq:intro:primal}, fine tuned to the specific patterns arising in their data and side applications required. When the data companies collect grows too large to be processed on a single machine, it is understandable that they would be reluctant to throw away their fine tuned algorithms and start building new ones from scratch. 

We address this issue in Chapter~\ref{ch:cocoa} and propose the \cocoap framework, which works roughly as follows. The framework formulates a general way to form a specific local subproblem on each \node, based on the data available locally, and a single shared vector that needs to be distributed to all \nodes. Within an iteration of the framework, each \node uses \emph{any} optimization algorithm $\A$, to reach a relative $\Theta$ accuracy on the local subproblem. Updates from all \nodes are then aggregated to form an update to the global model.

The efficiency paradigm changes as follows:

\begin{equation}
\label{eq:intro:paradigmNew}
\text{TIME} = \mathcal{I}(\epsilon, \Theta) \times (c + \mathcal{T}_\A(\Theta)).
\end{equation}

Time per iteration $\mathcal{T}_\A(\Theta)$ denotes the time algorithm $\A$ needs to reach the relative $\Theta$ accuracy on the local subproblem. The number of iterations $\mathcal{I}(\epsilon, \Theta)$ is independent of the choice of the algorithm $\A$ used as a local solver. We provide a theoretical result, which specifies how many iterations of the \cocoap framework are needed to achieve overall $\epsilon$ accuracy, if we solve the local subproblems to relative $\Theta$ accuracy. Here, $\Theta = 0$ would mean we require the local subproblem to be solved to optimality, and $\Theta = 1$ that we do not need any progress whatsoever. The general upper bound on the number of iterations of the \cocoap framework is $\mathcal{I}(\epsilon, \Theta) = \frac{\mathcal{O}(\log(1/\epsilon))}{1 - \Theta}$ for strongly convex objectives (see Theorem~\ref{thm:convergenceSmoothCase}). From the inverse dependence on $1 - \Theta$ we can see that there is a fundamental limit to the number of communication rounds needed. Hence, intuitively speaking, it will probably not be efficient to spend excessive resources to attain very high local accuracy (small $\Theta$).

This efficiency paradigm is more powerful for a number of reasons.
\begin{enumerate}
\item It allows practitioners to continue using their fine-tuned solvers for solving subproblems within the \cocoap framework, that can run only on single machine, instead of having to implement completely new algorithms from scratch.

\item The actual performance in terms of the number of rounds of communication is independent from the choice of the optimization algorithm, making it much easier to optimize the overall performance.

\item Since the constant $c$ is architecture dependent, running optimal algorithm on one network does not have to be optimal on another. In the setting~\eqref{eq:intro:paradigm}, this could mean that when moving from one cluster to another, a completely different algorithm might be necessary for strong performance, which is a major change. In the setting~\eqref{eq:intro:paradigmNew}, this can be improved by simply changing $\Theta$, which will be implicitly determined by the number of iterations algorithm $\A$ runs for.
\end{enumerate}

Extensive experimental evaluation in Section~\ref{sec:cocoa:experiments} demonstrates the versatility of the proposed framework, which has already been implemented and adopted in the popular Apache Spark engine.

\section{Part III: Federated Optimization}

Mobile phones and tablets are now the primary computing devices for many people. In many cases, these devices are rarely separated from their owners \cite{CNNSmartphoneUsage}, and the combination of rich user interactions and powerful sensors means they have access to an unprecedented amount of data, much of it private in nature. Machine learning models learned on such data hold the promise of greatly improving usability by powering more intelligent applications. However, the sensitive nature of the data means there are risks and responsibilities related to storing it in a centralized location.

\subsection{Distributed Machine Learning for On-device Intelligence}

In Chapter~\ref{ch:feopt} we move beyond distributed optimization and advocate an alternative --- {\em federated learning} --- that leaves the training data distributed on the mobile devices, and learns a shared model by aggregating locally computed updates via a central coordinating server. This is a direct application of the principle of focused collection or data minimization proposed by the 2012 White House report on the privacy of consumer data \cite{whitehouse13privacy}. Since these updates are specific to improving the current model, they can be purely ephemeral --- there is no reason to store them on the server once they have been applied. Further, they will never contain more information than the raw training data (by the data processing inequality), and will generally contain much less. A principal advantage of this approach is the decoupling of model training from the need for direct access to the raw training data. Clearly, some trust of the server coordinating the training is still required, and depending on the details of the model and algorithm, the updates may still contain private information. However, for applications where the training objective can be specified on the basis of data available on each client, federated learning can significantly reduce privacy and security risks by limiting the attack surface to only the device, rather than the device and the cloud.

The main purpose of the chapter is to bring to the attention of the machine learning and optimization communities a new and increasingly practically relevant setting for distributed optimization, where none of the typical assumptions are satisfied, and communication efficiency is of utmost importance. In particular, algorithms for \fedopt must handle training data with the following characteristics:
\begin{itemize}
\item \textbf{Massively Distributed}: Data points are stored across a large number of \nodes $K$. In particular, the number of \nodes can be much bigger than the average number of training examples stored on a given \node ($n/K$).
\item \textbf{Non-\iid}: Data on each \node may be drawn from a different distribution; that is, the data points available locally are far from being a representative sample of the overall distribution.
\item \textbf{Unbalanced}: Different \nodes may vary by orders of magnitude in the number of training examples they hold.
\end{itemize}

In the work presented in Chapter~\ref{ch:feopt}, we are particularly concerned with \textbf{sparse} data, where some features occur on  a small subset of nodes or data points only. Although this is not a necessary characteristic of the setting of \fedopt, we will show that the sparsity structure can be used to develop an effective algorithm for \fedopt. Note that data arising in the largest machine learning problems being solved currently --- ad click-through rate predictions --- are extremely sparse.

We are particularly interested in the setting where training data lives on users' mobile devices (phones and tablets), and the data may be privacy sensitive. The data $\{x_i, y_i\}$ is generated through device usage, e.g., via interaction with apps. Examples include predicting the next word a user will type (language modeling for smarter keyboard apps), predicting which photos a user is most likely to share, or predicting which notifications are most important. 

To train such models using traditional distributed algorithms, one would collect the training examples in a centralized location (data center), where it could be shuffled and distributed evenly over proprietary compute nodes. We propose and study an alternative model: the training examples are not sent to a centralized location, potentially saving  significant network bandwidth and providing additional privacy protection. In exchange, users allow some use of their devices' computing power, which shall be used to train the model.

In the communication model we use, in each round we send an update
$\delta \in \R^d$ to a centralized server, where $d$ is the dimension of the model
being computed/improved. The update $\delta$ could be a gradient vector, for
example.  While it is certainly possible that in some applications the
$\delta$ may encode some private information of the user, it is likely
much less sensitive (and orders of magnitude smaller) than the
original data itself. For example, consider the case where the raw
training data is a large collection of video files on a mobile
device.  The size of the update $\delta$ will be \emph{independent} of
the size of this local training data corpus. 
We show that a global model can be trained using a small number of
communication rounds, and so this also reduces the network
bandwidth needed for training by orders of magnitude compared to
copying the data to the datacenter.

Communication constraints arise naturally in the massively distributed setting, as network connectivity may be limited (e.g., we may wish to deffer all communication until the mobile device is charging and connected to a wi-fi network).  Thus, in realistic scenarios we may be limited to only a single round of communication per day. This implies that, within reasonable bounds, we have access to essentially unlimited local computational power. Consequently, the practical objective is solely to minimize the number of  communication rounds.

The main purpose of the work is initiate research into, and design a first practical implementation of \fedopt. Our results suggest that with suitable optimization algorithms, very little is lost by not having an \iid sample of the data available, and that even in the presence of a large number of \nodes, we can still achieve convergence in relatively few rounds of communication. Recently, Google announced that they applied this concept in one of their applications used by over $500$ million users \cite{FederatedBlogpost}.

\subsection{Distributed Mean Estimation with Communication Constraints}

In Chapter~\ref{ch:mean} we theoretically address the problem of computing the average of vectors stored on different computing devices, while placing a constraint on the amount of bits communicated. This problem could become a bottleneck in practical application of federated optimization, when a server aggregates the updates $\delta$ from individual users due to in general asymmetric speed of internet connections \cite{speedtest}, or cryptographic protocols used to protect individual update \cite{bonawitz2016practical} that further increase the size of the data needed to be communicated back to server. 

We decompose the problem into a choice of encoding and communicating protocol, of which we propose several types. In the setting when we are allowed to communicate a single bit per element of vectors to be aggregated, we prove the best known bounds on the mean square error of the resulting average.

We apply some of these ideas in the context of federated optimization in Section~\ref{sec:meanFL}, in which we focus on training deep feed-forward models. We propose two major types of techniques to reduce the size of each update --- structured and sketching updates. With structured updates, we enforce the local update to be optimized for to be of a specific structure, such as low rank or sparse, which lets us succinctly represent the update using fewer parameters. By sketching updates, we mean the reduction of size of the update by sketching techniques, such as subsampling and quantization used jointly with random structured rotations. In the main contribution, we show we are able to train a deep convolutional model for the CIFAR-10 data, while in total communicating less bits than necessary to represent the original size of the data.

\section{Summary}

The content of this thesis is based on the following publications and preprints:

\begin{itemize}
\item Chapter~\ref{ch:s2gd}:
\emph{Jakub Kone\v{c}n\'{y} and Peter Richt\'{a}rik: ``Semi-stochastic gradient descent methods.'' arXiv preprint 1312.1666 (2013).} \cite{S2GD}
\item Chapter~\ref{ch:s2cd}: 
\emph{Jakub Kone\v{c}n\'{y}, Zheng Qu and Peter Richt\'{a}rik: ``Semi-stochastic coordinate descent.'' Optimization Methods and Software, 1--13 (2017).} \cite{S2CD}
\item Chapter~\ref{ch:ms2gd}: 
\emph{Jakub Kone\v{c}n\'{y}, Jie Liu, Peter Richt\'{a}rik and Martin Tak\'{a}\v{c}: ``Mini-batch semi-stochastic gradient descent in the proximal setting.'' IEEE Journal of Selected Topics in Signal Processing 10(2), 242--255 (2016).} \cite{konecny2015mini}
\item Chapter~\ref{ch:cocoa}: 
\emph{Chenxin Ma, Jakub Kone\v{c}n\'{y}, Martin Jaggi, Virginia Smith, Michael I Jordan, Peter Richt{\'a}rik and Martin Tak{\'a}{\v{c}}: ``Distributed optimization with arbitrary local solvers.'' Optimization Methods and Software, 1--36 (2017).} \cite{ma2015distributed}
\item Chapter~\ref{ch:feopt}: 
\emph{Jakub Kone\v{c}n\'{y}, Brendan McMahan, Daniel Ramage and Peter Richt\'{a}rik: ``Federated optimization: distributed machine learning for on-device intelligence.'' arXiv preprint 1610.02527 (2016).} \cite{konecny2016federated}
\cite{Federated_learning2016}
\item Chapter~\ref{ch:mean}: 
\emph{Jakub Kone\v{c}n\'{y} and Peter Richt\'{a}rik: ``Randomized Distributed Mean Estimation: Accuracy vs Communication.'' arXiv preprint 1611.07555 (2016).} \cite{konecny2016randomized}
\item Section~\ref{sec:meanFL}: 
\emph{Jakub Kone\v{c}n\'{y}, Brendan McMahan, Felix Yu, Peter Richt\'{a}rik, Ananda Theertha Suresh and Dave Bacon: ``Federated learning: Strategies for improving communication efficiency.'' arXiv preprint 1610.05492 (2016).} 
\end{itemize}

During the course of my study, I also co-authored the following works which were not used in the formation of this thesis:

\begin{itemize}
\item \emph{Reza Harikandeh, Mohamed Osama Ahmed, Alim Virani, Mark Schmidt, Jakub Kone\v{c}n\'{y} and Scott Sallinen: ``Stop wasting my gradients: Practical SVRG.'' Advances in Neural Information Processing Systems 28, 2251--2259 (2015).} \cite{practicalSVRG}
\item \emph{Sashank J Reddi, Jakub Kone\v{c}n\'{y}, Peter Richt\'{a}rik, Barnab\'{a}s P\'{o}cz\'{o}s and Alex Smola ``AIDE: Fast and communication efficient distributed optimization.'' arXiv preprint \linebreak 1608.06879 (2016).} \cite{reddi2016aide}
\item \emph{Filip Hanzely, Jakub Kone\v{c}n\'{y}, Nicolas Loizou, Peter Richt\'{a}rik, Dmitry Grishchenko:. Privacy Preserving Randomized Gossip Algorithms. arXiv preprint arXiv:1706.07636. (2017)} \cite{hanzely2017privacy}
\item \emph{Jakub Kone\v{c}n\'{y} and Peter Richt\'{a}rik. ``Simple complexity analysis of simplified direct search.'' arXiv preprint 1410.0390 (2014).} \cite{konevcny2014simple}
\end{itemize}

In \cite{practicalSVRG}, we propose several practical improvements to the S2GD algorithm from Chapter~\ref{ch:s2gd}. In particular, we show that it is not necessary to compute a full gradient in the outer loop; instead, an inexact estimate is sufficient for the same convergence. Additionally, we prove that the algorithm is not only a superior \emph{optimization} algorithm, but is also a better \emph{learning} algorithm, in the sense of the approximation-estimation-optimization tradeoff outlined in Section~\ref{sec:intro:tradeoff}.

In \cite{reddi2016aide}, we propose a framework for distributed optimization in a similar spirit to the one presented in Chapter~\ref{ch:cocoa}, but one that works only with the primal problem. Accelerated Inexact DANE is the first distributed method for \eqref{eq:intro:primal} that nearly matches communication complexity lower bounds while being implementable using first-order oracle only. This work also makes a link to a distributed algorithm that we propose but do not analyze as Algorithm~\ref{alg:DS2GDnaive}, and indirectly provides its theoretical convergence guarantee.

In \cite{hanzely2017privacy}, we introduce and analyze techniques for preserving privacy of initial values in randomized algorithms for average consensus problem.

Finally, in \cite{konevcny2014simple} we simplify and unify complexity proof techniques for direct search --- a classical algorithm for derivative-free optimization.

\part{Variance Reduced Stochastic Methods}

\chapter{Semi-Stochastic Gradient Descent}
\label{ch:s2gd}

\section{Introduction}

Many problems in data science (e.g., machine learning, optimization and statistics) can be cast as loss minimization problems of the form
\begin{equation}
\label{eq:s2gd:main}
\min_{w \in \R^d} P(w),
\end{equation}
where
\begin{equation}
\label{eq:s2gd:main2} 
P(w) \eqdef \frac{1}{n} \sum_{i=1}^n f_i(w).
\end{equation}

Here $d$ typically denotes the number of features / coordinates, $n$ the number of data points, and $f_i(w)$ is the loss incurred on data point $i$. That is, we are seeking to find a predictor $w \in \R^d$ minimizing the average loss $P(w)$. In big data applications, $n$ is typically very large; in particular, $n \gg d$.  

Note that this formulation includes more typical formulation of $L2$-regularized objectives --- $P(w) = \frac{1}{n} \sum_{i=1}^n \tilde{f}_i(w) + \frac{\lambda}{2} \| w \|^2.$ We hide the regularizer into the function $f_i(w)$ for the sake of simplicity of resulting analysis.

\subsection{Motivation}

Let us now briefly review two basic approaches to solving problem \eqref{eq:s2gd:main}.
\begin{enumerate}
\item \emph{Gradient Descent.} Given $w^k \in \R^d$, the gradient descent (GD) method sets $$ w^{k+1} = w^k - h \nabla P(w^k), $$ where $h$ is a stepsize parameter and $\nabla P(w^k)$ is the gradient of $P$ at $w^k$. We will refer to $\nabla P(x)$ by the name \emph{full gradient}. In order to compute $\nabla P(w^k)$, we need to compute the gradients of $n$ functions. Since $n$ is big, it is prohibitive to do this at every iteration.

\item \emph{Stochastic Gradient Descent (SGD).} Unlike gradient descent,  stochastic gradient descent \cite{nemirovski2009robust, tongSGD}  instead picks a random $i$ (uniformly) and updates $$ w^{k+1} = w^k - h \nabla f_i(w^k). $$ Note that this strategy drastically reduces the amount of work that needs to be done in each iteration (by the factor of $n$). Since $$ \E{\nabla f_i(w^k)} = \nabla P(w^k), $$ we have an unbiased estimator of the full gradient. Hence, the gradients of the component functions $f_1, \dots, f_n$ will be referred to as \emph{stochastic gradients}.  A practical issue with SGD is that consecutive stochastic gradients may vary a lot or even point in opposite directions. This slows down the performance of SGD. On balance, however, SGD is preferable to GD in applications where low accuracy solutions are sufficient. In such cases usually only a small number of passes through the data (i.e., work equivalent to a small number of full gradient evaluations) are needed to find an acceptable $w$. For this reason, SGD is extremely popular in fields such as machine learning.

\end{enumerate}

In order to improve upon GD, one needs to reduce the cost of computing a gradient. In order to improve upon SGD, one has to reduce the variance of the stochastic gradients. In this chapter we propose and analyze a \emph{Semi-Stochastic Gradient Descent} (S2GD) method. Our  method combines GD and SGD steps and reaps the benefits of both algorithms: it inherits the stability and speed of GD and at the same time retains the work-efficiency of SGD.

\subsection{Brief literature review}

Several recent papers, e.g., \cite{RichtarikTakacIteration}, \cite{SAG,SAGjournal2013}, \cite{SDCA} and \cite{SVRG}  proposed methods which achieve similar variance-reduction effect, directly or indirectly. These methods enjoy linear convergence rates when applied to minimizing  smooth strongly convex loss functions.

The method in \cite{RichtarikTakacIteration} is known as Random Coordinate Descent for Composite functions (RCDC), and can be either applied directly to \eqref{eq:s2gd:main}, or to a dual version of \eqref{eq:s2gd:main}. Unless specific conditions on the problem structure are met, application to the primal directly are is not as computationally efficient as its dual version. Application of a coordinate descent method to the dual formulation of \eqref{eq:s2gd:main} is generally referred to as Stochastic Dual Coordinate Ascent (SDCA) \cite{SDCA-2008}. The algorithm in \cite{SDCA} exhibits this duality, and the method in \cite{takac-minibatch} extends the primal-dual framework to the parallel / mini-batch setting. Parallel and distributed stochastic coordinate descent methods were studied in \cite{RT:PCDM, FR:SPCDM2013, fercoq2014fast}.

Stochastic Average Gradient (SAG) by \cite{SAG}, is one of the first SGD-type methods, other than coordinate descent methods, which were shown to exhibit  linear convergence. The method of \cite{SVRG}, called Stochastic Variance Reduced Gradient (SVRG),  arises as a special case in our setting for a suboptimal choice of a single parameter of our method. The Epoch Mixed Gradient Descent (EMGD) method, \cite{zhanglijun}, is similar in spirit to SVRG, but achieves a quadratic dependence on the condition number instead of a linear dependence, as is the case with SDCA, SAG, SVRG and with our method.

Earlier works of \cite{friedlander2012hybrid}, \cite{deng2009variable} and \cite{bastin2006convergence} attempt to interpolate between GD and SGD and decrease variance by varying the sample size. These methods however do not realize the kind of improvements as the recent methods above. For partially related classical work on semi-stochastic approximation methods we refer\footnote{We thank Zaid Harchaoui who pointed us to these papers a few days before we posted our work to arXiv.} the reader to the papers of \cite{MF79, MF86}, which focus on general stochastic optimization.

\subsection{Outline}

We start in Section~\ref{sec:s2gd:S2GD} by describing two algorithms: S2GD, which we analyze, and S2GD+, which we do not analyze, but which exhibits superior performance in practice. We then move to summarizing some of the main contributions of this chapter in Section~\ref{sec:s2gd:summary}. Section~\ref{sec:s2gd:strong} is devoted to establishing expectation and high probability complexity results for S2GD in the case of a strongly convex loss. The results are generic in that the parameters of the method are set arbitrarily. Hence, in Section~\ref{sec:s2gd:OPT} we study the problem of choosing the parameters optimally, with the goal of minimizing the total workload (\# of processed examples) sufficient to produce a result of specified accuracy.
In Section~\ref{sec:s2gd:convex} we establish high probability complexity bounds for S2GD applied to a non-strongly convex loss function. Discussion of efficient implementation for sparse data is in Section~\ref{sec:s2gd:sparses2gd}. Finally, in Section~\ref{sec:s2gd:NUMERICS} we perform very encouraging numerical experiments on real and artificial problem instances. A brief conclusion can be found in Section~\ref{sec:s2gd:CONCLUDE}.

\section{Semi-Stochastic Gradient Descent} 
\label{sec:s2gd:S2GD}

In this section we describe two novel algorithms: S2GD and S2GD+. We analyze the former only. The latter, however, has superior convergence properties in our experiments. 

We assume throughout the chapter that the functions $f_i$ are convex and $L$-smooth.

\begin{assumption}
\label{ass:s2gd:lip}
The functions $f_1, \dots, f_n$ have Lipschitz continuous gradients with constant $L > 0$ (in other words, they are $L$-smooth). That is, for all $x, z \in \R^d$ and all $i=1,2,\dots,n$,
$$ f_i(z) \leq f_i(x) + \ip{ \nabla f_i(x)}{z - x} + \frac{L}{2} \| z - x \|^2. $$
(This implies that the gradient of $P$ is Lipschitz with constant $L$, and hence $P$ satisfies the same inequality.)
\end{assumption}

In one part of this chapter (Section~\ref{sec:s2gd:strong}) we also make the following additional assumption:

\begin{assumption}
\label{ass:s2gd:strong}
The average loss $P$ is $\mu$-strongly convex, $\mu>0$. That is, for all $x,z \in \R^d$,
\begin{equation}
P(z) \geq P(x) + \ip{\nabla P(x)}{z - x} + \frac{\mu}{2} \| z - x \|^2.
\label{eq:s2gd:SVRGstrcvx}
\end{equation}
(Note that, necessarily, $\mu \leq L $.)
\end{assumption}

\subsection{S2GD}

Algorithm~\ref{alg:s2gd:s2gd} (S2GD) depends on three parameters: stepsize $h$, constant $m$ limiting the number of stochastic gradients computed in a single epoch, and a $\nu \in [0,\mu]$, where $\mu$ is the strong convexity constant of $P$. In practice, $\nu$ would be a known lower bound on $\mu$. Note that the algorithm works also without any knowledge of the strong convexity parameter --- the case of $\nu = 0$.

\begin{algorithm}
\begin{algorithmic}
\State \textbf{parameters:} $m$ = max \# of stochastic steps per epoch, $h$ = stepsize, $\nu$ = lower bound on $\mu$
\For {$k = 0, 1, 2, \dots$}
	\State $g^k \gets \frac{1}{n} \sum_{i=1}^n \nabla f_i(w^{k})$
	\State $y^{k, 0} \gets w^k$
	\State Let $t^k \gets t$ with probability $(1 - \nu h)^{m-t} / \beta $ for $t = 1, 2, \dots, m$
	\For {$t = 0$ to $t^k-1$}
		\State Pick $i \in \{ 1, 2, \dots, n \}$, uniformly at random
		\State $ y^{k, t+1} \gets y^{k,t} - h \left( g^{k} + \nabla f_i(y^{k,t}) - \nabla f_i(w^k) \right) $
	\EndFor
	\State $w^{k+1} \gets y^{k, t^{k}}$
\EndFor
\end{algorithmic}

\caption{Semi-Stochastic Gradient Descent (S2GD)}
\label{alg:s2gd:s2gd}
\end{algorithm}

The method has an outer loop, indexed by epoch counter $k$, and an inner loop, indexed by $t$. In each epoch $k$, the method first computes $g^k$---the \emph{full} gradient of $P$ at $w^k$. Subsequently, the method produces a random number $t^k \in [1,m]$ of steps, following a geometric law, where
\begin{equation}
\label{eq:s2gd:beta}
\beta \eqdef \sum_{t = 1}^m (1 - \nu h)^{m-t},
\end{equation}
with only \emph{two stochastic gradients} computed in each step.\footnote{It is possible to get away with computing only a \emph{single} stochastic gradient per inner iteration, namely $\nabla f_i(y^{k,t})$, at the cost of having to store in memory $\nabla f_i(w^k)$ for $i=1,2,\dots,n$. This, however, can be impractical for big $n$.} For each $t = 0, \dots, t^k-1$, the stochastic gradient $\nabla f_i(w^k)$ is subtracted from $g^k$, and $\nabla f_i(y^{k,t})$ is added to $g^k$, which ensures that, one has 
$$ \E{g^k + \nabla f_i(y^{k,t}) - \nabla f_i(w^k)} = \nabla P(y^{k,t}), $$
where the expectation is with respect to the random variable $i$.

Hence, the algorithm is an instance of stochastic gradient descent -- albeit executed in a nonstandard way (compared to the traditional implementation described in the introduction).

Note that for all $k$, the expected number of iterations of the inner loop, $\E{t^k}$, is equal to
\begin{equation}
\label{eq:s2gd:syhs7s5hs}
\xi = \xi(m,h) \eqdef  \sum_{t=1}^{m} t \frac{(1-\nu h)^{m-t}}{\beta}.
\end{equation}
Also note that $\xi \in [\tfrac{m+1}{2},m)$, with the lower bound attained for $\nu=0$, and the upper bound for $\nu h \to 1$. 

\subsection{S2GD+}

We also implement Algorithm~\ref{alg:s2gd:S2GD+}, which we call S2GD+. In our experiments, the performance of this method is superior to all methods we tested, including S2GD. However, we do not analyze the complexity of this method and leave this as an open problem.

\begin{algorithm}
\begin{algorithmic}
\State \textbf{parameters:} $\alpha \geq 1$ (e.g., $\alpha=1$)
\State 1. Run SGD for a single pass over the data (i.e., $n$ iterations); output $w$
\State 2. Starting from $w_0=w$, run a version of S2GD in which $t^k = \alpha n$ for all $k$
\end{algorithmic}
\caption{S2GD+}
\label{alg:s2gd:S2GD+}
\end{algorithm}

In brief, S2GD+ starts by running SGD for 1 epoch (1 pass over the data) and then switches to a variant of S2GD in which the number of the inner iterations, $t^k$, is not random, but fixed to be $n$ or a  small multiple of $n$. 

The motivation for this method is the following. It is common knowledge that SGD is able to progress much more in one pass over the data than GD (where this would correspond to a single gradient step). However, the very first step of S2GD is the computation of the full gradient of $P$. Hence, by starting with a single pass over data using SGD and \emph{then} switching to S2GD, we obtain a superior method in practice.\footnote{Using a single pass of SGD as an initialization strategy was already considered in \cite{SAG}. However, the authors claim that their implementation of vanilla SAG did not benefit from it. S2GD does benefit from such an initialization due to it starting, in theory, with a (heavy) full gradient computation.}

\section{Summary of Results}
\label{sec:s2gd:summary}

In this section we summarize some of the main results and contributions of this work.

\begin{enumerate}
\item \textbf{Complexity for strongly convex $P$.} If $P$ is strongly convex, S2GD needs \begin{equation}
\label{eq:s2gd:sjss5s4s}
{\mathcal W} = O((n + \kappa)\log(1/\varepsilon))
\end{equation} 
work (measured as the total number of evaluations of the stochastic gradient, accounting for the full gradient evaluations as well) to output an $\varepsilon$-approximate solution (in expectation or in high probability), where $\kappa = L / \mu$ is the condition number. This is achieved by running S2GD with stepsize $h = \Theta(1/L)$, $k = \Theta(\log (1/\varepsilon))$ epochs (this is also equal to the number of full gradient evaluations) and $m = \Theta(\kappa)$ (this is also roughly equal to the number of stochastic gradient evaluations in a single epoch). The complexity results are stated in detail in Sections~\ref{sec:s2gd:strong} and \ref{sec:s2gd:OPT} (see Theorems~\ref{thm:s2gd:expalpha}, \ref{thm:s2gd:hpresult} and \ref{thm:s2gd:main2}; see also \eqref{eq:s2gd:m:nuismu0_2} and \eqref{eq:s2gd:0sjsys8jns}).

\item \textbf{Comparison with existing results.} This complexity result \eqref{eq:s2gd:sjss5s4s} matches the best-known results obtained for strongly convex losses in recent work such as \cite{SAG}, \cite{SVRG} and \cite{zhanglijun}. Our treatment is most closely related to \cite{SVRG}, and contains their method (SVRG) as a special case. However, our complexity results have better constants, which has  a discernable effect in practice. In Table~\ref{tbl:s2gd:comparison} we summarize our results in the strongly convex case with other existing results for different algorithms.

\begin{table}[h!]
\begin{center}
\begin{tabular}{|c|c|}
\hline
Algorithm & Complexity/Work \\
\hline \hline
Nesterov's algorithm & $O\left(\sqrt{\kappa}n\log(1/\varepsilon)\right)$ \\ EMGD & $O\left((n + \kappa^2)\log(1/\varepsilon)\right)$ \\ 
SAG & $O\left(\max\{n, \kappa\}\log(1/\varepsilon)\right)$ \\ 
SDCA & $O\left((n + \kappa)\log(1/\varepsilon)\right)$ \\ 
SVRG & $O\left((n + \kappa)\log(1/\varepsilon)\right)$ \\ 
\hline
\textbf{S2GD} & $O\left((n + \kappa)\log(1/\varepsilon)\right)$ \\ \hline
\end{tabular}
\end{center}
\caption{Comparison of performance of selected methods suitable for solving \eqref{eq:s2gd:main}. The complexity/work is measured in the number of stochastic gradient evaluations needed to find an $\varepsilon$-solution.}
\label{tbl:s2gd:comparison}
\end{table}
We should note that the rate of convergence of Nesterov's algorithm \cite{nesterov2004convex} is a deterministic result. EMGD and S2GD results hold with high probability (see Theorem~\ref{thm:s2gd:hpresult} for precise statement). Complexity results for stochastic coordinate descent methods are also typically analyzed in the high probability regime \cite{RichtarikTakacIteration}. The remaining results hold in expectation. Notion of $\kappa$ is slightly different for SDCA, which requires explicit knowledge of the strong convexity parameter $\mu$ to run the algorithm. In contrast, other methods do not algorithmically depend on this, and thus their convergence rate can adapt to any additional strong convexity locally.

\item \textbf{Complexity for convex $f$.}
If $P$ is \emph{not} strongly convex, then we propose that S2GD be applied to a perturbed version of the problem, with strong convexity  constant $\mu=O(L/\varepsilon)$. An $\varepsilon$-accurate solution of the original problem is recovered with arbitrarily high probability (see Theorem~\ref{thm:s2gd:hpresult2} in Section~\ref{sec:s2gd:convex}). The total work in this case is \[{\mathcal W}=O\left( \left(n+ L/\varepsilon)\right)\log\left(1/\varepsilon\right)\right),\]
that is, $\tilde{O}(1/\epsilon)$, which is better than the standard rate of SGD.

\item \textbf{Optimal parameters.} We derive formulas for optimal parameters of the method which (approximately) minimize the total workload, measured in the number of stochastic gradients computed (counting a single full gradient evaluation as $n$ evaluations of the stochastic gradient). In particular, we show that the method should be run for $O(\log(1/\varepsilon))$ epochs, with stepsize $h=\Theta(1/L)$ and $m=\Theta(\kappa)$. No such results were derived for SVRG in \cite{SVRG}.

\item \textbf{One epoch.} Consider the case when S2GD is run for 1 epoch only, effectively limiting the number of full gradient evaluations to 1, while choosing a target accuracy $\epsilon$. We show that S2GD with $\nu=\mu$ needs \[O(n+(\kappa/\varepsilon)\log(1/\varepsilon))\] work only (see Table~\ref{tbl:s2gd:ssus8778}). This compares favorably with the optimal complexity in the $\nu = 0$ case (which reduces to SVRG), where the work needed is \[O(n+\kappa/\varepsilon^2).\]

For two epochs one could just say that we need $\sqrt{\varepsilon}$ decrease in each epoch, thus having complexity of $O(n+(\kappa/\sqrt{\varepsilon})\log(1/\sqrt{\varepsilon}))$. This is already better than general rate of SGD $(O(1 / \varepsilon)).$

\begin{table}[!h]
\begin{center}
\begin{tabular}{|c|l|c|}
\hline
\textbf{Parameters} & \textbf{Method} & \textbf{Complexity}\\
\hline
\phantom{-} & & \\
\begin{tabular}{c}
$\nu=\mu$, $k=\Theta(\log(\tfrac{1}{\varepsilon}))$\\ \& $m=\Theta(\kappa)$
\end{tabular}
  & Optimal S2GD & $O((n+\kappa)\log(\tfrac{1}{\varepsilon}))$\\ 
\phantom{-} & & \\
\hline
$m=1$ & GD & ---\\
$\nu=0$ & SVRG \cite{SVRG} & $O((n+\kappa)\log(\tfrac{1}{\varepsilon}))$ \\
$\nu=0$, $k=1$, $m=\Theta(\tfrac{\kappa}{\varepsilon^2})$ & Optimal SVRG with 1 epoch & $O(n+\tfrac{\kappa}{\varepsilon^2})$\\
$\nu=\mu$, $k=1$, $m = \Theta(\tfrac{\kappa}{\varepsilon} \log(\tfrac{1}{\varepsilon}))$ & Optimal S2GD with 1 epoch & $O(n+\tfrac{\kappa}{\varepsilon}\log(\tfrac{1}{\varepsilon}))$\\
\hline
\end{tabular}
\end{center}
\caption{Summary of complexity results and special cases. Condition number: $\kappa = L/\mu$ if $f$ is $\mu$-strongly convex and $\kappa=2L/\varepsilon$ if $f$ is \emph{not} strongly convex and $\epsilon \leq L$. }
\label{tbl:s2gd:ssus8778}
\end{table}

\item \textbf{Special cases.} 
GD and SVRG arise as special cases of S2GD, for $m=1$ and $\nu=0$, respectively.\footnote{While S2GD reduces to GD for $m=1$, our \emph{analysis} does not say anything meaningful in the $m=1$ case - it is too coarse to cover this case. This is also the reason behind the empty space in the ``Complexity'' box column for GD in Table~\ref{tbl:s2gd:ssus8778}.} 

\item \textbf{Low memory requirements.} Note that SDCA and SAG, unlike SVRG and S2GD, need to store  all gradients $\nabla f_i$ (or dual variables) throughout the iterative process. While this may not be a problem for a modest sized optimization task, this requirement makes such methods less suitable for problems with very large $n$.

\item \textbf{S2GD+.} We propose a ``boosted'' version of S2GD, called S2GD+, which we do not analyze. In our experiments, however, it performs vastly superior to all other methods we tested, including GD, SGD, SAG and S2GD. S2GD alone is better than both GD and SGD if a highly accurate solution is required. The performance of S2GD and SAG is roughly comparable, even though in our experiments S2GD turned to have an edge.

\end{enumerate}

\section{Complexity Analysis: Strongly Convex Loss}
\label{sec:s2gd:strong}

For the purpose of the analysis, let 
\begin{equation}
\mathcal{F}^{k,t} \eqdef \sigma( w^1, w^2, \dots, w^k; y^{k,1}, y^{k,2}, \dots, y^{k,t} )
\label{eq:s2gd:sigmaalgebra}
\end{equation}
be the $\sigma$-algebra generated by the relevant history of S2GD. We first isolate an auxiliary result.

\begin{lemma}
Consider the S2GD algorithm. For any fixed epoch number $k$, the following identity holds:
\begin{equation}
\E{P(w^{k+1})} = \frac{1}{\beta} \sum_{t=1}^m (1 - \nu h)^{m-t}\E{P(y^{k,t-1})}.
\label{eq:s2gd:uberLemma}
\end{equation}
\end{lemma}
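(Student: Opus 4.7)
The statement is essentially a bookkeeping identity that unfolds the randomness in the stopping index $t^k$. Since $w^{k+1}$ is precisely the inner iterate selected by $t^k$, and the distribution of $t^k$ is prescribed explicitly in the algorithm, the law of total expectation converts $\E{P(w^{k+1})}$ into the claimed weighted average of values of $P$ at the inner iterates.

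The plan is as follows. First I would observe that $t^k$ is drawn independently of the random indices $i$ used inside the inner loop. Therefore one may (conceptually) run the inner recursion for the full $m$ steps, producing well-defined iterates $y^{k,0}, y^{k,1}, \ldots, y^{k,m}$, and then treat $t^k$ purely as a selector that sets $w^{k+1}$ equal to one of these iterates. Conditioning on $t^k$ and applying the tower property,
\begin{equation*}
\E{P(w^{k+1})} \;=\; \sum_{t=1}^m \Pr(t^k = t)\,\E{P(w^{k+1}) \mid t^k = t}.
\end{equation*}
By the independence of $t^k$ from the sampled indices, the conditional expectation collapses to an unconditional expectation of $P$ evaluated at the appropriate inner iterate (indexed as $y^{k, t-1}$ in the lemma's convention). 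Substituting the explicit probability weights $\Pr(t^k = t) = (1 - \nu h)^{m-t}/\beta$ from the algorithm and pulling the factor $1/\beta$ outside the sum yields exactly \eqref{eq:s2gd:uberLemma}.

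There is no real analytical obstacle here: the lemma is a direct consequence of the law of total expectation combined with the geometric-type law of the stopping index $t^k$, which was built into the algorithm precisely so that this weighted average has clean exponential weights. The only point that requires a moment of care is matching the indexing between the inner-loop update rule $y^{k,t+1} \gets y^{k,t} - h(\cdots)$ in the algorithm and the iterate indices appearing on the right-hand side of \eqref{eq:s2gd:uberLemma}; once this is reconciled the identity follows from a single line of algebra and no properties of $P$ (such as convexity or smoothness) are used.
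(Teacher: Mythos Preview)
Your proposal is correct and essentially matches the paper's proof: both invoke the tower property together with the independence of $t^k$ from the inner-loop randomness, and then substitute the explicit law $\Pr(t^k=t)=(1-\nu h)^{m-t}/\beta$. The only cosmetic difference is that the paper conditions on the $\sigma$-algebra $\mathcal{F}^{k,m}$ generated by the inner iterates (and then averages over $t^k$), whereas you condition on $t^k$ first; the result and the substance of the argument are identical.
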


\begin{proof} 
By the tower law of conditional expectations and the definition of $w^{k+1}$ in the algorithm, we obtain
\begin{eqnarray*}
\E{P(w^{k+1})} \;\;=\;\; \E{ \E{ P(w^{k+1})\; |\; \mathcal{F}^{k, m}} } &=& \E{ \sum_{t=1}^m \frac{(1 - \nu h)^{m-t}}{\beta} P(y^{k,t-1}) }\\
&=&\frac{1}{\beta}\sum_{t=1}^m (1 - \nu h)^{m-t} \E{P(y^{k,t-1})}.
\end{eqnarray*}
\end{proof}

We now state and prove the main result of this section. 

\begin{theorem} 
\label{thm:s2gd:MAIN} 
Let Assumptions \ref{ass:s2gd:lip} and \ref{ass:s2gd:strong} be satisfied. Consider the S2GD algorithm applied to solving problem \eqref{eq:s2gd:main}. Choose $0\leq \nu \leq \mu$, $0< h  < \frac{1}{2L}$, and let $m$ be sufficiently large so that 
\begin{equation}
\label{eq:s2gd:hshshs7}
c \eqdef \frac{(1 -\nu h)^m}{\beta \mu h (1 - 2Lh)} + \frac{2(L - \mu)h}{1 - 2Lh} < 1.
\end{equation}
Then we have the following convergence in expectation:
\begin{equation}
\label{eq:s2gd:s8shs7}
\E{P(w^k) - P(w^*)} \leq c^{k} (P(w^0) - P(w^*)). 
\end{equation}
\label{thm:s2gd:expalpha}
\end{theorem}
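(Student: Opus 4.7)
The plan is to prove a one-epoch contraction $\mathbb{E}[P(w^{k+1}) - P(w^*) \mid w^k] \leq c \cdot (P(w^k) - P(w^*))$ and then iterate with the tower property. Fix an epoch $k$, note $y^{k,0} = w^k$, and abbreviate $\Phi_t := \|y^{k,t} - w^*\|^2$, $\Delta_t := P(y^{k,t}) - P(w^*)$, and $v^{k,t} := g^k + \nabla f_i(y^{k,t}) - \nabla f_i(w^k)$. Expanding the inner-loop update,
\begin{equation*}
\Phi_{t+1} = \Phi_t - 2h\langle v^{k,t}, y^{k,t} - w^*\rangle + h^2 \|v^{k,t}\|^2.
\end{equation*}
Taking expectation conditional on $\mathcal{F}^{k,t}$, unbiasedness $\mathbb{E}[v^{k,t}\mid\mathcal{F}^{k,t}] = \nabla P(y^{k,t})$ together with $\nu$-strong convexity in the form $\langle \nabla P(y), y - w^*\rangle \geq [P(y) - P(w^*)] + \tfrac{\nu}{2}\|y - w^*\|^2$ (valid since $\mu \geq \nu$) handles the inner-product term.

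The crucial ingredient is a variance bound on $v^{k,t}$. I would decompose $v^{k,t} = [\nabla f_i(y^{k,t}) - \nabla f_i(w^*)] - \{[\nabla f_i(w^k) - \nabla f_i(w^*)] - \nabla P(w^k)\}$, apply $\|a - b\|^2 \leq 2\|a\|^2 + 2\|b\|^2$, average the co-coercivity inequality $\|\nabla f_i(x) - \nabla f_i(w^*)\|^2 \leq 2L[f_i(x) - f_i(w^*) - \langle \nabla f_i(w^*), x - w^*\rangle]$ over $i$ (the linear term vanishes since $\nabla P(w^*) = 0$), and combine $\mathbb{E}\|X - \mathbb{E}X\|^2 = \mathbb{E}\|X\|^2 - \|\mathbb{E}X\|^2$ with the strong-convexity bound $\|\nabla P(w^k)\|^2 \geq 2\mu\Delta_0$ to sharpen the outer-iterate contribution. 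The result is $\mathbb{E}[\|v^{k,t}\|^2 \mid \mathcal{F}^{k,t}] \leq 4L\Delta_t + 4(L-\mu)\Delta_0$, and substituting into the one-step expansion produces the key recursion
\begin{equation*}
\mathbb{E}[\Phi_{t+1}\mid\mathcal{F}^{k,t}] \leq (1-\nu h)\Phi_t - 2h(1-2Lh)\Delta_t + 4(L-\mu)h^2\Delta_0.
\end{equation*}

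To turn this into the contraction I would take full expectation, multiply by $(1-\nu h)^{m-1-t}$, and sum $t = 0, \ldots, m-1$. The left-hand side telescopes to $(1-\nu h)^m \mathbb{E}[\Phi_0] - \mathbb{E}[\Phi_m]$; the $\Delta_t$ sum becomes $2h(1-2Lh)\beta \cdot \mathbb{E}[P(w^{k+1}) - P(w^*) \mid w^k]$ by the preceding lemma; and the final sum contributes $4(L-\mu)h^2\beta\Delta_0$ since $\sum_{t=0}^{m-1}(1-\nu h)^{m-1-t} = \beta$. Discarding $-\mathbb{E}[\Phi_m] \leq 0$ and bounding $\Phi_0 = \|w^k - w^*\|^2 \leq \tfrac{2}{\mu}\Delta_0$ by strong convexity yields exactly the desired contraction with the stated $c$; iterating in $k$ via the tower property gives \eqref{eq:s2gd:s8shs7}. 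The main subtlety is identifying the correct one-step recursion: retaining the $\tfrac{\nu}{2}\|y - w^*\|^2$ term from strong convexity (rather than dropping it, as a pure-convexity argument would do) is precisely what produces the factor $(1-\nu h)$ that meshes with the geometric weighting and delivers the sharp $(1-\nu h)^m$ in the numerator of $c$; once this observation is in hand and the variance bound is proved, the remaining algebra is routine.
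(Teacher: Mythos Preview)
Your proposal is correct and follows essentially the same route as the paper's proof: the same variance bound $\mathbb{E}\|v^{k,t}\|^2 \leq 4L\Delta_t + 4(L-\mu)\Delta_0$ (obtained via the same decomposition, co-coercivity, and the $\|\nabla P(w^k)\|^2 \geq 2\mu\Delta_0$ sharpening), the same one-step recursion exploiting $\nu$-strong convexity, and the same geometric weighting-and-telescoping argument combined with the lemma identifying the weighted $\Delta_t$ sum with $\beta\,\mathbb{E}[P(w^{k+1})-P(w^*)]$. The only differences are cosmetic (an index shift in the summation range and slightly different phrasing of the telescoping step).
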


Before we proceed to proving the theorem, note that in the special case with $\nu = 0$, we recover the result of \cite{SVRG} (with a minor improvement in the second term of $c$ where $L$ is replaced by $L-\mu$), namely 
\begin{equation}
c = \frac{1}{\mu h (1 - 2Lh) m} + \frac{2(L - \mu)h}{1 - 2Lh}. 
\label{eq:s2gd:nuiszero}
\end{equation} If we set $\nu = \mu$, then $c$ can be written in the form (see \eqref{eq:s2gd:beta})
\begin{equation}
c = \frac{(1 - \mu h)^m}{(1 - (1 - \mu h)^m) (1 - 2Lh)} + \frac{2(L - \mu)h}{1 - 2Lh}.
\label{eq:s2gd:nuismu}
\end{equation}

Clearly, the latter $c$ is a major improvement on the former one. We shall elaborate on this further later.

\begin{proof}
It is well-known \cite[Theorem 2.1.5]{nesterov2004convex} that since the functions $f_i$ are $L$-smooth, they necessarily satisfy the following inequality:
$$ \| \nabla f_i(w) - \nabla f_i(w^*) \|^2 \leq 2L \left[ f_i(w) - f_i(w^*) - \ip{\nabla f_i (w^*)}{w - w^*} \right]. $$
By summing these inequalities for  $i = 1, \dots, n$, and using $\nabla P(w^*) = 0,$ we get
\begin{equation}
\frac{1}{n} \sum_{i=1}^n \| \nabla f_i(w) - \nabla f_i(w^*) \|^2 \leq 2L \left[ P(x) - P(w^*) - \ip{\nabla P(w^*)}{w - w^*} \right] = 2L (P(w) - P(w^*)).
\label{eq:s2gd:SVRGbound}
\end{equation}

Let $G^{k, t} \eqdef g^{k} + \nabla f_i(y^{k,t-1}) - \nabla f_i(w^{k}) $ be the direction of update at ${k}^{th}$ iteration in the outer loop and $t^{th}$ iteration in the inner loop. Taking expectation with respect to $i$, conditioned on the $\sigma$-algebra $\mathcal{F}^{k, t-1}$ \eqref{eq:s2gd:sigmaalgebra}, we obtain\footnote{For simplicity, we suppress the $\E{\cdot \;|\; \mathcal{F}^{k,t-1}}$ notation here.}
\begin{eqnarray}
\notag
\E{ \|G^{k,t} \|^2 } &=& \E{ \| \nabla f_i(y^{k,t-1}) - \nabla f_i(w^*) - \nabla f_i(w^{k}) + \nabla f_i(w^*) + g^{k} \|^2 } \\ \notag
&\leq& 2\E{ \| \nabla f_i(y^{k,t-1}) - \nabla f_i(w^*) \|^2 } + 2\E{ \| \left[ \nabla f_i(w^{k}) - \nabla f_i(w^*) \right] - \nabla P(w^{k}) \|^2 } \\ \notag
&=& 2\E{ \| \nabla f_i(y^{k,t-1}) - \nabla f_i(w^*) \|^2 } + 2 \E{ \| \nabla f_i(w^{k}) - \nabla f_i(w^*) \|^2 } \\ \notag
&& \qquad - 4\E{ \ip{\nabla P(w^{k})}{\nabla f_i(w^{k}) - \nabla f_i(w^*)}} + 2\| \nabla P(w^{k}) \|^2 \\ \notag
&\overset{\eqref{eq:s2gd:SVRGbound}} {\leq}& 4L \left[ P(y^{k,t-1}) - P(w^*) + P(w^{k}) - P(w^*) \right] \\ \notag
&& \qquad - 2 \| \nabla P(w^{k}) \|^2 - 4 \ip{\nabla P(w^{k})}{\nabla P(w^*)} \\
& \overset{\eqref{eq:s2gd:SVRGstrcvx}} {\leq} & 4L \left[ P(y^{k,t-1}) - P(w^*) \right] + 4(L-\mu) \left[ P(w^{k}) - P(w^*) \right].
\label{eq:s2gd:expvstvariance}
\end{eqnarray}
Above we have used the bound $\| x'+x'' \|^2 \leq 2\|x'\|^2 + 2\|x''\|^2$ and the fact that
\begin{equation}
\E{ G^{k, t} \;|\; \mathcal{F}^{k,t-1}} = \nabla P(y^{k, t-1}).
\label{eq:s2gd:expvst}
\end{equation}

We now study the expected distance to the optimal solution (a standard approach in the analysis of gradient methods):
\begin{eqnarray}
\E{ \| y^{k,t} - w^* \|^2 \;|\; \mathcal{F}^{k, t-1}} &=& \| y^{k,t-1} - w^* \|^2 - 2h \ip{ \E{ G^{k,t} \;|\; \mathcal{F}^{k, t-1}}}{ y^{k, t-1} - w^* } \notag \\
&& \qquad + h^2 \E{ \|G^{k,t}\|^2 \;|\; \mathcal{F}^{k, t-1} } \notag \\
& \overset{\eqref{eq:s2gd:expvstvariance}+ \eqref{eq:s2gd:expvst}} {\leq} & \|y^{k,t-1} - w^* \|^2 - 2h \ip{ \nabla P(y^{k,t-1})}{y^{k, t-1} - w^*} \notag \\
&& \qquad + 4Lh^2 \left[ P(y^{k,t-1}) - P(w^*) \right] \notag \\
&& \qquad + 4(L - \mu)h^2\left[ P(w^{k}) - P(w^*) \right] \notag\\
& \overset{\eqref{eq:s2gd:SVRGstrcvx}}{\leq} & \| y^{k,t-1} - w^* \|^2 - 2h \left[ P(y^{k,t-1}) - P(w^*) \right] \notag \\
&& \qquad - \nu h \| y^{k,t-1} - w^* \|^2 + 4Lh^2 \left[ P(y^{k,t-1}) - P(w^*) \right] \notag\\
&& \qquad + 4(L - \mu)h^2\left[ P(w^{k}) - P(w^*) \right] \notag\\
&=& (1 - \nu h) \| y^{k,t-1} - w^* \|^2 \notag \\ 
&& \qquad - 2h(1 - 2Lh)[P(y^{k,t-1}) - P(w^*)] \notag\\ 
&& \qquad + 4(L - \mu)h^2[P(w^{k}) - P(w^*)].
\label{eq:s2gd:distbound}
\end{eqnarray}

By rearranging the terms in \eqref{eq:s2gd:distbound} and taking expectation over the $\sigma$-algebra $\mathcal{F}^{k,t-1}$, we get the following inequality:

\begin{align}
\E{ \| y^{k,t} - w^* \|^2} &+ 2h(1 - 2Lh) \E{ P(y^{k,t-1}) - P(w^*) } \notag\\ 
&\leq (1 - \nu h) \E{ \| y^{k,t-1} - w^* \|^2 } + 4(L - \mu)h^2\E{ P(w^{k}) - P(w^*)}.
\label{eq:s2gd:shs7shs}
\end{align}

Finally, we can analyze what happens after one iteration of the outer loop of S2GD, i.e., between two computations of the full gradient. By summing up inequalities \eqref{eq:s2gd:shs7shs} for $t = 1, \dots, m$, with inequality $t$ multiplied by $ (1 - \nu h)^{m-t}$, we get the left-hand side
\begin{eqnarray}
LHS &=& \E{ \| y^{k,m} - w^* \|^2 } + 2h(1 - 2Lh) \sum_{t = 1}^m (1 - \nu h)^{m-t} \E{ P(y^{k,t-1}) - P(w^*)} \notag \\
&\overset{\eqref{eq:s2gd:uberLemma}}{=} & \E{ \| y^{k,m} - w^*\|^2 } + 2\beta h(1 - 2Lh) \E{ P(w^{k+1}) - P(w^*)}, \notag
\end{eqnarray}
and the right-hand side
\begin{eqnarray}
RHS &=& (1 - \nu h)^m \E{ \| w^{k} - w^* \|^2 } + 4\beta(L - \mu)h^2 \E{ P(w^{k}) - P(w^*)} \notag\\
&\overset{\eqref{eq:s2gd:SVRGstrcvx}}{\leq} &\frac{2(1 - \nu h)^m}{\mu} \E{ P(w^{k}) - P(w^*) } + 4\beta(L - \mu)h^2 \E{ P(w^{k}) - P(w^*) } \notag \\
&= & 2\left(\frac{(1 - \nu h)^m }{\mu} + 2\beta(L - \mu)h^2\right) \E{ P(w^{k}) - P(w^*)}.\notag
\end{eqnarray}
Since $LHS \leq RHS$, we finally conclude with 
\begin{eqnarray*}
\E{ P(w^{k+1}) - P(w^*) } &\leq & c \E{ P(w^{k}) - P(w^*)} - \frac{\E{ \| y^{k,m} - w^* \|^2}}{2\beta h (1 - 2Lh)} \;\;\leq \;\; c \E{ P(w^{k}) - P(w^*)}.
\end{eqnarray*}
\end{proof}

Since we have established linear convergence of expected values, a high probability result can be obtained in a straightforward way using Markov inequality. 

\begin{theorem}
Consider the setting of Theorem~\ref{thm:s2gd:MAIN}. Then, for any $ 0 < \rho < 1 $, $ 0 < \varepsilon < 1 $ and 
\begin{equation}
k \geq \frac{\log \left( \frac{1}{\varepsilon \rho}\right)}{\log \left(\frac{1}{c}\right)},
\label{eq:s2gd:hprobs}
\end{equation} 
we have 
\begin{equation}
\label{eq:s2gd:sjnd8djd}
\Prob{ \frac{P(w^{k}) - P(w^*)}{P(w^0)-P(w^*)} \leq \varepsilon } \geq 1 - \rho. 
\end{equation}
\label{thm:s2gd:hpresult}
\end{theorem}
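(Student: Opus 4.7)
The plan is to deduce the high probability statement directly from the in-expectation bound of Theorem~\ref{thm:s2gd:MAIN} via Markov's inequality. Since $P(w^k) - P(w^*) \geq 0$ almost surely (by optimality of $w^*$), it is a nonnegative random variable to which Markov's inequality applies cleanly.

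First, I would normalize by dividing through by $P(w^0) - P(w^*)$, which is a deterministic positive quantity (we may assume $w^0$ is not already optimal, otherwise the claim is trivial). Applying Markov's inequality to $X_k \eqdef (P(w^k) - P(w^*))/(P(w^0) - P(w^*))$ at level $\varepsilon$, and then invoking the bound $\E{X_k} \leq c^k$ from \eqref{eq:s2gd:s8shs7}, one gets
\begin{equation*}
\Prob{X_k > \varepsilon} \;\leq\; \frac{\E{X_k}}{\varepsilon} \;\leq\; \frac{c^k}{\varepsilon}.
\end{equation*}

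Next, I would choose $k$ so that this upper bound is at most $\rho$. Requiring $c^k/\varepsilon \leq \rho$ is equivalent to $c^k \leq \varepsilon \rho$, and, since $0 < c < 1$ by hypothesis \eqref{eq:s2gd:hshshs7} so $\log(1/c) > 0$, to $k \log(1/c) \geq \log(1/(\varepsilon \rho))$, which is precisely the assumption \eqref{eq:s2gd:hprobs}. Combining these two steps yields $\Prob{X_k \leq \varepsilon} \geq 1 - \rho$, which is \eqref{eq:s2gd:sjnd8djd}.

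There is really no main obstacle here: the only subtlety is ensuring that the event in \eqref{eq:s2gd:sjnd8djd} is the complement of a strict inequality (since Markov gives a bound on $\Prob{X_k > \varepsilon}$, not $\Prob{X_k \geq \varepsilon}$), but this discrepancy is harmless because $\{X_k > \varepsilon\} \supseteq \{X_k > \varepsilon\}$ gives $\Prob{X_k \leq \varepsilon} = 1 - \Prob{X_k > \varepsilon} \geq 1 - c^k/\varepsilon \geq 1 - \rho$. The entire proof is essentially two lines.
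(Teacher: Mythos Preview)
Your proposal is correct and matches the paper's proof essentially line for line: apply Markov's inequality to the nonnegative quantity $P(w^k)-P(w^*)$, invoke the bound \eqref{eq:s2gd:s8shs7} to get $\Prob{X_k > \varepsilon} \leq c^k/\varepsilon$, and then use \eqref{eq:s2gd:hprobs} to conclude $c^k/\varepsilon \leq \rho$.
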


\begin{proof}
This follows directly from  Markov inequality and Theorem~\ref{thm:s2gd:expalpha}:
$$ \Prob{ P(w^{k}) - P(w^*) > \varepsilon \left( P(w^0)-P(w^*) \right) } \overset{\eqref{eq:s2gd:s8shs7}}{\leq} \frac{\E{ P(w^{k}) - P(w^*)}}{\varepsilon(P(w^0)-P(w^*))} \leq \frac{c^k}{\varepsilon} \overset{\eqref{eq:s2gd:hprobs}}{\leq} \rho $$
\end{proof}

This result will be also useful when treating the non-strongly convex case.

\section{Optimal Choice of Parameters}
\label{sec:s2gd:OPT}

The goal of this section is to provide insight into the choice of parameters of  S2GD; that is, the number of epochs (equivalently, full gradient evaluations) $k$, the maximal number of steps in each epoch $m$, and the stepsize $h$. The remaining parameters ($L, \mu, n$) are inherent in the problem and we will hence treat them in this section as given.

In particular, ideally we wish to find parameters $k$, $m$ and $h$  solving the following optimization problem:
\begin{equation}
\label{eq:s2gd:pracReq}
\min_{k,m,h}  \quad \tilde{{\mathcal W}}(k,m,h) \eqdef k(n+2\xi(m,h)),
\end{equation}
subject to 
\begin{equation}
\label{eq:s2gd:sjs8s}
\E{ P(w^{k})-P(w^*) } \leq \varepsilon(P(w^0)-P(w^*)).
\end{equation}
Note that $\tilde{{\mathcal W}}(k,m,h)$ is the \emph{expected work}, measured by the number  number of stochastic gradient evaluations, performed by  S2GD when running for $k$ epochs. Indeed, the evaluation of $g^{k}$  is equivalent to $n$ stochastic gradient evaluations, and each epoch further computes on average $2\xi(m,h)$ stochastic gradients (see \eqref{eq:s2gd:syhs7s5hs}). Since $\tfrac{m+1}{2}\leq \xi(m,h) < m$, we can simplify and solve the problem with $\xi$ set to the conservative upper estimate $\xi=m$.

In view of \eqref{eq:s2gd:s8shs7}, accuracy constraint \eqref{eq:s2gd:sjs8s} is satisfied if $c$ (which depends on $h$ and $m$) and $k$ satisfy
\begin{equation}
c^{k} \leq \varepsilon.
\label{eq:s2gd:pracReq2}
\end{equation} 
We therefore instead consider the parameter fine-tuning problem
\begin{equation}
\label{eq:s2gd:shd6dhd7}
\min_{k,m,h} {\mathcal W}(k,m,h) \eqdef k(n+2m) \qquad \text{subject to} \qquad c \leq \varepsilon^{1/{k}}.
\end{equation}

In the following we (approximately) solve this problem in two steps. First, we fix $k$ and find (nearly) optimal $h=h(k)$ and $m=m(k)$. The problem  reduces to minimizing $m$ subject to $c \leq \varepsilon^{1/{k}}$ by fine-tuning $h$. While in the $\nu=0$ case it is possible to obtain closed form solution, this is not possible for $\nu>0$. 

However, it is still possible to obtain a good formula for $h(k)$ leading to expression for good $m(k)$ which depends on $\varepsilon$ in the correct way. We then plug the formula for $m(k)$ obtained this way  back into \eqref{eq:s2gd:shd6dhd7}, and study the quantity ${\mathcal W}(k,m(k),h(k)) = k(n+2m(k))$ as a function of $k$, over which we optimize optimize at the end.

\begin{theorem}[Choice of parameters]
\label{thm:s2gd:main2}
Fix the number of epochs $k \geq 1$, error tolerance $0 < \varepsilon < 1$, and  let $\Delta = \varepsilon^{1/k}$. If we run S2GD  with  the stepsize
\begin{equation}
\label{eq:s2gd:shhsdd998}  
h = h(k) \eqdef \frac{1}{\frac{4}{\Delta}(L-\mu) + 2L}
\end{equation}
and 
\begin{equation}
m \geq m(k) \eqdef
\begin{cases}
\left(\frac{4(\kappa-1)}{\Delta}+2 \kappa\right) \log\left(\frac{2}{\Delta} + \frac{2\kappa - 1}{\kappa-1}\right), & \quad \text{if} \quad \nu=\mu,\\
\frac{8(\kappa-1)}{\Delta^2} + \frac{8\kappa}{\Delta} + \frac{2\kappa^2}{\kappa-1},& \quad \text{if} \quad \nu=0,
\end{cases}
\label{eq:s2gd:m:nuismu0}
\end{equation}
then $\E{ P(w^{k}) - P(w^*) } \leq \varepsilon (P(w^0)-P(w^*)).$

In particular, if we choose $k^* = \lceil \log (1/\varepsilon)\rceil$, then 
$\frac{1}{\Delta} \leq \exp(1)$, and
hence $m(k^*) = O(\kappa)$, leading to the workload 
\begin{equation}
{\mathcal W}({k}^*,m({k}^*),h({k}^*)) = \left\lceil \log\left(\frac{1}{\varepsilon}\right)\right\rceil (n+ O(\kappa)) = O\left((n+\kappa) \log\left(\frac{1}{\varepsilon}\right)\right).
\label{eq:s2gd:0sjsys8jns}
\end{equation}

\end{theorem}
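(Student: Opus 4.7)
My plan is to start from the convergence factor $c$ in Theorem~\ref{thm:s2gd:MAIN} and force it to satisfy $c \leq \Delta \eqdef \varepsilon^{1/k}$, which by \eqref{eq:s2gd:s8shs7} and \eqref{eq:s2gd:pracReq2} will deliver the required accuracy after $k$ outer iterations. Write $c = A + B$ where
\[
A \eqdef \frac{(1-\nu h)^m}{\beta \mu h (1-2Lh)}, \qquad B \eqdef \frac{2(L-\mu)h}{1-2Lh},
\]
and split the budget $\Delta$ evenly between $A$ and $B$. The stepsize is chosen to balance $B$: solve $B = \Delta/2$, i.e.\ $4(L-\mu)h = \Delta(1-2Lh)$, which yields exactly the formula $h(k) = 1/(4(L-\mu)/\Delta + 2L)$ claimed in \eqref{eq:s2gd:shhsdd998}. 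With this $h$ one can compute $1-2Lh = 4(L-\mu)/(4(L-\mu)+2L\Delta)$, in particular verifying $h < 1/(2L)$ as required by Theorem~\ref{thm:s2gd:MAIN}, and $\mu h = \mu\Delta/(4(L-\mu)+2L\Delta)$.

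It then remains to choose $m$ so that $A \leq \Delta/2$, and the analysis splits cleanly by the two cases for $\nu$. For $\nu = 0$ one has $\beta = m$, so $A = 1/(\mu h (1-2Lh)m)$, and the inequality $A \leq \Delta/2$ reduces to the closed-form $m \geq 2/(\mu h (1-2Lh)\Delta)$; substituting the expressions for $h$ and $1-2Lh$ above and simplifying in terms of $\kappa = L/\mu$ yields the bound in \eqref{eq:s2gd:m:nuismu0}. For $\nu = \mu$, use the closed form \eqref{eq:s2gd:nuismu}, set $\alpha \eqdef (1-\mu h)^m$, so $A = \alpha/((1-\alpha)(1-2Lh))$; the inequality $A \leq \Delta/2$ becomes $\alpha/(1-\alpha) \leq \Delta(1-2Lh)/2$. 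Solving for $\alpha$ and then using $1-\mu h \leq e^{-\mu h}$ to bound $\log(1/\alpha) \geq m\mu h$ gives a lower bound on $m$ of the form $m \geq (1/(\mu h))\log(1 + 2/(\Delta(1-2Lh)))$; substituting the values of $\mu h$ and $1-2Lh$ and simplifying in terms of $\kappa$ recovers the logarithmic expression claimed in \eqref{eq:s2gd:m:nuismu0}. The main arithmetic obstacle is this $\nu=\mu$ case: the algebra of substituting $h(k)$ into $\mu h$ and $1-2Lh$ and then bundling the result into the compact form $(4(\kappa-1)/\Delta + 2\kappa)\log(2/\Delta + (2\kappa-1)/(\kappa-1))$ needs careful bookkeeping, but contains no deep step.

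Finally, for the workload claim, pick $k^* = \lceil \log(1/\varepsilon)\rceil$. Then $\log(1/\Delta) = \log(1/\varepsilon)/k^* \leq 1$, so $1/\Delta \leq e = O(1)$. Plugging this into the formula for $m(k^*)$ shows that the factor $4(\kappa-1)/\Delta + 2\kappa$ (resp.\ $8(\kappa-1)/\Delta^2 + 8\kappa/\Delta + 2\kappa^2/(\kappa-1)$) is $O(\kappa)$, and the logarithmic factor in the $\nu=\mu$ case is also $O(1)$; hence $m(k^*) = O(\kappa)$. The total work is then $k^*(n + 2m(k^*)) = O((n+\kappa)\log(1/\varepsilon))$, as stated in \eqref{eq:s2gd:0sjsys8jns}.
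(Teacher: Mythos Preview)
Your proposal is correct and follows essentially the same route as the paper's proof: split $c$ into its two summands, fix $h$ by setting the second summand equal to $\Delta/2$ (giving \eqref{eq:s2gd:shhsdd998}), then choose $m$ to make the first summand at most $\Delta/2$, handling $\nu=0$ by direct algebra and $\nu=\mu$ via the inequality $\log\bigl(1/(1-\mu h)\bigr)\ge \mu h$ (equivalently, $1-\mu h\le e^{-\mu h}$). The paper's write-up is slightly terser---it simply names $H=\mu h=(4(\kappa-1)/\Delta+2\kappa)^{-1}$ and invokes $\log(1/(1-H))\ge H$---but the argument and the arithmetic are identical to yours.
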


\begin{proof}
We only need to show that $c \leq \Delta$, where
$c$ is given by \eqref{eq:s2gd:nuismu} for $\nu=\mu$ and by  \eqref{eq:s2gd:nuiszero} for $\nu=0$. We denote the two summands in expressions for $c$ as $c_1$ and $c_2$. We choose the $h$ and $m$ so that both $c_1$ and $c_2$ are smaller than $\Delta / 2$, resulting in $c_1 + c_2 = c \leq \Delta$. 

The stepsize $h$ is chosen so that
$$ c_2 \eqdef  \frac{2(L - \mu)h}{1 - 2Lh} = \frac{\Delta}{2},$$
and hence it only remains to verify that $c_1 = c-c_2 \leq \frac{\Delta}{2}$. In the $\nu=0$ case, $m(k)$ is chosen so that $c-c_2=\frac{\Delta}{2}$. In the $\nu=\mu$ case, $c-c_2=\frac{\Delta}{2}$  holds for
$m = \log\left(\frac{2}{\Delta}+ \frac{2\kappa-1}{\kappa-1}\right)/\log\left(\frac{1}{1-H}\right)$, where $H = \left(\frac{4(\kappa-1)}{\Delta} + 2\kappa \right)^{-1}$. We only need to observe that $c$ decreases as $m$ increases, and apply the inequality $\log\left(\frac{1}{1-H}\right) \geq H$.

\end{proof}

We now comment on the above result:

\begin{enumerate}

\item \textbf{Workload.} Notice that for the choice of parameters $k^*$, $h=h(k^*)$, $m=m(k^*)$ and any $\nu \in [0,\mu]$, the method needs $\log(1/\varepsilon)$ computations of the full gradient (note this is independent of $\kappa$), and $O(\kappa \log(1/\varepsilon))$ computations of the stochastic gradient. This result, and special cases thereof, are summarized in Table~\ref{tbl:s2gd:ssus8778}.

\item \textbf{Simpler formulas for $m$.} If $\kappa\geq 2$, we can instead of \eqref{eq:s2gd:m:nuismu0} use the following (slightly worse but) simpler expressions for $m(k)$, obtained from \eqref{eq:s2gd:m:nuismu0} by using the bounds $1\leq\kappa-1$, $\kappa-1\leq \kappa$ and $\Delta<1$ in appropriate places (e.g., $\tfrac{8\kappa}{\Delta} <\tfrac{8\kappa}{\Delta^2}$, $\tfrac{\kappa}{\kappa-1}\leq 2 < \tfrac{2}{\Delta^2}$):
\begin{equation}
m \geq \tilde{m}(k) \eqdef
\begin{cases}
\frac{6\kappa}{\Delta} \log\left(\frac{5}{\Delta}\right), & \quad \text{if} \quad \nu=\mu,\\
\frac{20\kappa}{\Delta^2},& \quad \text{if} \quad \nu=0.
\end{cases}
\label{eq:s2gd:m:nuismu0_2}
\end{equation}

\item \textbf{Optimal stepsize in the $\nu=0$ case.} 
Theorem~\ref{thm:s2gd:main2} does not claim to have solved problem \eqref{eq:s2gd:shd6dhd7}; the problem in general does not have a closed form solution. However, in the $\nu=0$ case a closed-form formula can easily be obtained:
\begin{equation}
\label{eq:s2gd:sjs8djd}
h(k) =  \frac{1}{\frac{4}{\Delta}(L-\mu)+ 4L}, \qquad \qquad m \geq m(k) \eqdef  \frac{8(\kappa-1)}{\Delta^2} + \frac{8\kappa}{\Delta}.
\end{equation}
Indeed, for fixed $k$,  \eqref{eq:s2gd:shd6dhd7} is equivalent to finding $h$ that minimizes $m$ subject to the constraint $c\leq \Delta$. In view of \eqref{eq:s2gd:nuiszero}, this is equivalent to searching for $h>0$ maximizing the quadratic $h \to h(\Delta-2(\Delta L + L -\mu)h)$, which leads to \eqref{eq:s2gd:sjs8djd}.

Note that both the stepsize $h(k)$ and the resulting $m(k)$ are slightly larger in Theorem~\ref{thm:s2gd:main2} than in \eqref{eq:s2gd:sjs8djd}. This is because in the theorem the stepsize was for simplicity chosen to satisfy $c_2=\frac{\Delta}{2}$, and hence is (slightly) suboptimal. Nevertheless, the dependence of $m(k)$ on $\Delta$ is of the correct (optimal) order  in both cases. That is, $m(k) = O\left(\tfrac{\kappa}{\Delta}\log(\tfrac{1}{\Delta})\right)$ for $\nu=\mu$ and $m(k)=O\left(\tfrac{\kappa}{\Delta^2}\right)$ for $\nu=0$.

\item \textbf{Stepsize choice.} In cases when one does not have a good estimate of the strong convexity constant $\mu$ to determine the stepsize via \eqref{eq:s2gd:shhsdd998}, one may choose suboptimal stepsize that does not depend on $\mu$ and derive similar results to those above. For instance, one may choose $h=\frac{\Delta}{6L}$.

\end{enumerate}

In Table~\ref{tbl:s2gd:suboptbounds} we provide comparison of work needed for small values of $k$, and different values of $\kappa$ and $\varepsilon.$ Note, for instance, that for any problem with $n=10^9$ and $\kappa=10^3$, S2GD outputs a highly accurate solution ($\varepsilon=10^{-6}$) in the amount of work equivalent to $2.12$ evaluations of the full gradient of $f$! 

\begin{table}[!h]
\begin{center}
\begin{tabular}{C{15pt}|c|c|}
\cline{2-3}
 & \multicolumn{2}{c|}{$\varepsilon = 10^{-3}, \kappa = 10^3$} \\ \hline
\multicolumn{1}{|c|}{$k$} & $\mathcal{W}_\mu(k)$ & $\mathcal{W}_0(k)$ \\ \hline \hline
\multicolumn{1}{|c|}{$1$} & $\textbf{1.06n}$ & $17.0n$ \\ 
\multicolumn{1}{|c|}{$2$} & $2.00n$ & $\textbf{2.03n}$ \\ 
\multicolumn{1}{|c|}{$3$} & $3.00n$ & $3.00n$ \\ 
\multicolumn{1}{|c|}{$4$} & $4.00n$ & $4.00n$ \\ 
\multicolumn{1}{|c|}{$5$} & $5.00n$ & $5.00n$ \\ 
\hline \end{tabular}
\quad
\begin{tabular}{C{15pt}|c|c|}
\cline{2-3}
 & \multicolumn{2}{c|}{$\varepsilon = 10^{-6}, \kappa = 10^3$} \\ \hline
\multicolumn{1}{|c|}{$k$} & $\mathcal{W}_\mu(k)$ & $\mathcal{W}_0(k)$ \\ \hline \hline
\multicolumn{1}{|c|}{$1$} & $116n$ & $10^7n$ \\ 
\multicolumn{1}{|c|}{$2$} & $\textbf{2.12n}$ & $34.0n$ \\ 
\multicolumn{1}{|c|}{$3$} & $3.01n$ & $\textbf{3.48n}$ \\ 
\multicolumn{1}{|c|}{$4$} & $4.00n$ & $4.06n$ \\ 
\multicolumn{1}{|c|}{$5$} & $5.00n$ & $5.02n$ \\ 
\hline \end{tabular}
\quad
\begin{tabular}{C{15pt}|c|c|}
\cline{2-3}
 & \multicolumn{2}{c|}{$\varepsilon = 10^{-9}, \kappa = 10^3$} \\ \hline
\multicolumn{1}{|c|}{$k$} & $\mathcal{W}_\mu(k)$ & $\mathcal{W}_0(k)$ \\ \hline \hline
\multicolumn{1}{|c|}{$2$} & $7.58n$ & $10^4n$ \\ 
\multicolumn{1}{|c|}{$3$} & $\textbf{3.18n}$ & $51.0n$ \\ 
\multicolumn{1}{|c|}{$4$} & $4.03n$ & $6.03n$ \\ 
\multicolumn{1}{|c|}{$5$} & $5.01n$ & $\textbf{5.32n}$ \\ 
\multicolumn{1}{|c|}{$6$} & $6.00n$ & $6.09n$ \\ 
\hline \end{tabular}
\quad
\newline
\newline

\begin{tabular}{C{15pt}|c|c|}
\cline{2-3}
 & \multicolumn{2}{c|}{$\varepsilon = 10^{-3}, \kappa = 10^6$} \\ \hline
\multicolumn{1}{|c|}{$k$} & $\mathcal{W}_\mu(k)$ & $\mathcal{W}_0(k)$ \\ \hline \hline
\multicolumn{1}{|c|}{$2$} & $4.14n$ & $35.0n$ \\ 
\multicolumn{1}{|c|}{$3$} & $\textbf{3.77n}$ & $8.29n$ \\ 
\multicolumn{1}{|c|}{$4$} & $4.50n$ & $\textbf{6.39n}$ \\ 
\multicolumn{1}{|c|}{$5$} & $5.41n$ & $6.60n$ \\ 
\multicolumn{1}{|c|}{$6$} & $6.37n$ & $7.28n$ \\ 
\hline \end{tabular}
\quad
\begin{tabular}{C{15pt}|c|c|}
\cline{2-3}
 & \multicolumn{2}{c|}{$\varepsilon = 10^{-6}, \kappa = 10^6$} \\ \hline
\multicolumn{1}{|c|}{$k$} & $\mathcal{W}_\mu(k)$ & $\mathcal{W}_0(k)$ \\ \hline \hline
\multicolumn{1}{|c|}{$4$} & $8.29n$ & $70.0n$ \\ 
\multicolumn{1}{|c|}{$5$} & $\textbf{7.30n}$ & $26.3n$ \\ 
\multicolumn{1}{|c|}{$6$} & $7.55n$ & $16.5n$ \\ 
\multicolumn{1}{|c|}{$8$} & $9.01n$ & $\textbf{12.7n}$ \\ 
\multicolumn{1}{|c|}{$10$} & $10.8n$ & $13.2n$ \\ 
\hline \end{tabular}
\quad
\begin{tabular}{C{15pt}|c|c|}
\cline{2-3}
 & \multicolumn{2}{c|}{$\varepsilon = 10^{-9}, \kappa = 10^6$} \\ \hline
\multicolumn{1}{|c|}{$k$} & $\mathcal{W}_\mu(k)$ & $\mathcal{W}_0(k)$ \\ \hline \hline
\multicolumn{1}{|c|}{$5$} & $17.3n$ & $328n$ \\ 
\multicolumn{1}{|c|}{$8$} & $\textbf{10.9n}$ & $32.5n$ \\ 
\multicolumn{1}{|c|}{$10$} & $11.9n$ & $21.4n$ \\ 
\multicolumn{1}{|c|}{$13$} & $14.3n$ & $\textbf{19.1n}$ \\ 
\multicolumn{1}{|c|}{$20$} & $21.0n$ & $23.5n$ \\ 
\hline \end{tabular}
\quad
\newline
\newline

\begin{tabular}{C{15pt}|c|c|}
\cline{2-3}
 & \multicolumn{2}{c|}{$\varepsilon = 10^{-3}, \kappa = 10^9$} \\ \hline
\multicolumn{1}{|c|}{$k$} & $\mathcal{W}_\mu(k)$ & $\mathcal{W}_0(k)$ \\ \hline \hline
\multicolumn{1}{|c|}{$6$} & $378n$ & $1293n$ \\ 
\multicolumn{1}{|c|}{$8$} & $\textbf{358n}$ & $1063n$ \\ 
\multicolumn{1}{|c|}{$11$} & $376n$ & $\textbf{1002n}$ \\ 
\multicolumn{1}{|c|}{$15$} & $426n$ & $1058n$ \\ 
\multicolumn{1}{|c|}{$20$} & $501n$ & $1190n$ \\ 
\hline \end{tabular}
\quad
\begin{tabular}{C{15pt}|c|c|}
\cline{2-3}
 & \multicolumn{2}{c|}{$\varepsilon = 10^{-6}, \kappa = 10^9$} \\ \hline
\multicolumn{1}{|c|}{$k$} & $\mathcal{W}_\mu(k)$ & $\mathcal{W}_0(k)$ \\ \hline \hline
\multicolumn{1}{|c|}{$13$} & $737n$ & $2409n$ \\ 
\multicolumn{1}{|c|}{$16$} & $\textbf{717n}$ & $2126n$ \\ 
\multicolumn{1}{|c|}{$19$} & $727n$ & $2025n$ \\ 
\multicolumn{1}{|c|}{$22$} & $752n$ & $\textbf{2005n}$ \\ 
\multicolumn{1}{|c|}{$30$} & $852n$ & $2116n$ \\ 
\hline \end{tabular}
\quad
\begin{tabular}{C{15pt}|c|c|}
\cline{2-3}
 & \multicolumn{2}{c|}{$\varepsilon = 10^{-9}, \kappa = 10^9$} \\ \hline
\multicolumn{1}{|c|}{$k$} & $\mathcal{W}_\mu(k)$ & $\mathcal{W}_0(k)$ \\ \hline \hline
\multicolumn{1}{|c|}{$15$} & $1251n$ & $4834n$ \\ 
\multicolumn{1}{|c|}{$24$} & $\textbf{1076n}$ & $3189n$ \\ 
\multicolumn{1}{|c|}{$30$} & $1102n$ & $3018n$ \\ 
\multicolumn{1}{|c|}{$32$} & $1119n$ & $\textbf{3008n}$ \\ 
\multicolumn{1}{|c|}{$40$} & $1210n$ & $3078n$ \\ 
\hline \end{tabular}
\quad
\newline
\end{center}
\caption{Comparison of work sufficient to solve a problem with $n = 10^9$, and various values of $\kappa$ and $\varepsilon$. The work was computed using formula \eqref{eq:s2gd:shd6dhd7}, with $m(k)$ as in \eqref{eq:s2gd:m:nuismu0_2}. The notation ${\mathcal W}_\nu(k)$ indicates what parameter $\nu$ was used.}
\label{tbl:s2gd:suboptbounds}
\end{table}

\section{Complexity Analysis: Convex Loss} 
\label{sec:s2gd:convex}

If $P$ is convex but not strongly convex, we define $\hat{f}_i(w)\eqdef f_i(w) + \tfrac{\mu}{2} \| w - w^0 \|^2$, for small enough $\mu>0$ (we shall see below how the choice of $\mu$ affects the results), and consider the perturbed problem
\begin{equation}
\label{eq:s2gd:shs7hss}
\min_{w \in \R^d} \hat{P}(w),
\end{equation} 
where 
\begin{equation}
\label{eq:s2gd:barf}
\hat{P}(w) \eqdef \frac{1}{n} \sum_{i=1}^n\hat{f}_i(w) = P(w) + \frac{\mu}{2} \| w - w^0 \|^2.
\end{equation}
Note that $\hat{P}$ is $\mu$-strongly convex and $(L+\mu)$-smooth. In particular, the theory developed in the previous section applies. We propose that S2GD be instead applied to the perturbed problem, and show that an approximate solution of \eqref{eq:s2gd:shs7hss} is also an approximate solution of \eqref{eq:s2gd:main} (we will assume that this problem has a minimizer).

Let $\hat{w}^*$ be the (necessarily unique) solution of the perturbed problem \eqref{eq:s2gd:shs7hss}. The following result describes an important connection between the original problem and the perturbed problem.
 
\begin{lemma}
\label{eq:s2gd:lemma87878}
If $\hat{w} \in \R^d$ satisfies $\hat{P}(\hat{w})\leq \hat{P}(\hat{w}^*) + \delta$, where $\delta>0$, then $$ P(\hat{w}) \leq P(w^*) + \frac{\mu}{2} \| w^0 - w^* \|^2 + \delta. $$
\end{lemma}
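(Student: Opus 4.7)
The plan is a short chain of inequalities built from the definition of $\hat{P}$ and the fact that $\hat{w}^*$ minimizes $\hat{P}$. I would not introduce any new machinery; the result is essentially a perturbation bookkeeping argument.

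First, I would observe that since $\tfrac{\mu}{2}\|\hat{w} - w^0\|^2 \ge 0$, the definition \eqref{eq:s2gd:barf} immediately gives $P(\hat w) \le \hat P(\hat w)$. Next, I would apply the hypothesis $\hat P(\hat w) \le \hat P(\hat w^*) + \delta$. Then, since $\hat w^*$ is the minimizer of $\hat P$ over $\mathbb{R}^d$, in particular $\hat P(\hat w^*) \le \hat P(w^*)$, and by the definition of $\hat P$, $\hat P(w^*) = P(w^*) + \tfrac{\mu}{2}\|w^* - w^0\|^2$. Chaining these four steps yields exactly the claim
\[
P(\hat w) \;\le\; P(w^*) + \frac{\mu}{2}\|w^0 - w^*\|^2 + \delta .
\]

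There is no real obstacle here: the lemma is a purely algebraic consequence of how $\hat P$ was constructed from $P$ by adding a nonnegative quadratic term centered at $w^0$. The only subtlety worth noting in the write-up is that the inequality $\hat P(\hat w^*) \le \hat P(w^*)$ uses existence of $w^*$ (a minimizer of $P$), which is assumed in the surrounding discussion, and that we discard the term $\tfrac{\mu}{2}\|\hat w - w^0\|^2$ on the left at the first step — this is the only place ``slack'' is introduced, and it is precisely what allows the $\tfrac{\mu}{2}\|w^0 - w^*\|^2$ on the right-hand side.
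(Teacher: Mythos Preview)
Your proof is correct and is precisely the standard perturbation argument one expects here. The paper itself does not spell out a proof but simply points to Lemma~9 in \cite{RichtarikTakacIteration}, whose proof is exactly the chain you wrote: drop the nonnegative quadratic to pass from $P(\hat w)$ to $\hat P(\hat w)$, use the hypothesis, use optimality of $\hat w^*$ for $\hat P$ to replace $\hat P(\hat w^*)$ by $\hat P(w^*)$, and expand $\hat P(w^*)$. So your approach matches what the paper defers to.
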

\begin{proof}
The statement is almost identical to Lemma~9 in \cite{RichtarikTakacIteration}; its proof follows the same steps with only minor adjustments.
\end{proof}

We are now ready to establish a complexity result for non-strongly convex losses.

\begin{theorem} 
Let Assumption~\ref{ass:s2gd:lip} be satisfied. Choose $\mu>0$, $0\leq \nu \leq \mu$, stepsize $0< h  < \tfrac{1}{2(L+\mu)}$, and let $m$ be sufficiently large so that 
\begin{equation}
\hat{c} \eqdef \frac{(1 -\nu h)^m}{\beta \mu h (1 - 2(L+\mu)h)} + \frac{2 Lh}{1 - 2(L+\mu)h} < 1.
\end{equation}
Pick $w^0 \in \R^d$ and let $\hat{w}^0 = w^0, \hat{w}^1,\dots,\hat{w}^{k}$ be the sequence of iterates produced by S2GD as applied to problem \eqref{eq:s2gd:shs7hss}.
Then, for any $ 0 < \rho < 1 $, $ 0 < \varepsilon < 1 $ and  
\begin{equation}
\label{eq:s2gd:hprobs2}
k \geq \frac{\log \left(1/(\varepsilon \rho)\right)}{\log (1/\hat{c})},
\end{equation} 
we have 
\begin{equation}
\label{eq:s2gd:prob2}
\Prob{ P(\hat{w}^{k}) - P(w^*) \leq \varepsilon( P(w^0)-P(w^*) ) + \frac{\mu}{2} \| w^0 - w^* \|^2 } \geq 1 - \rho. 
\end{equation}
In particular, if we choose $\mu=\epsilon< L$ and parameters $k^*$, $h(k^*)$, $m(k^*)$ as in Theorem~\ref{thm:s2gd:main2}, the amount of work  performed by S2GD to guarantee \eqref{eq:s2gd:prob2} is
\[{\mathcal W}(k^*,h(k^*),m(k^*)) = O\left( \left( n+\frac{L}{\varepsilon} \right)\log \left( \frac{1}{\varepsilon} \right) \right),\]
which consists of $O(\tfrac{1}{\varepsilon})$ full gradient evaluations and $O(\tfrac{L}{\epsilon}\log(\tfrac{1}{\varepsilon}))$ stochastic gradient evaluations.
\label{thm:s2gd:hpresult2}
\end{theorem}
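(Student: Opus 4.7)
The plan is to reduce the non-strongly convex case to the strongly convex case by running S2GD on the perturbed objective $\hat{P}(w) = P(w) + \frac{\mu}{2}\|w-w^0\|^2$, and then transferring the guarantee back to $P$ via Lemma~\ref{eq:s2gd:lemma87878}.

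First, I would verify that the strongly convex machinery directly applies to $\hat{P}$. Each $\hat f_i$ is convex and $(L+\mu)$-smooth, and $\hat{P}$ is $\mu$-strongly convex, so Theorem~\ref{thm:s2gd:MAIN} applies with $L$ replaced by $L+\mu$ and strong convexity constant $\mu$. The contraction factor it produces is exactly the $\hat{c}$ appearing in the statement, which by assumption is $<1$. The high-probability version, Theorem~\ref{thm:s2gd:hpresult}, then yields, for $k$ satisfying \eqref{eq:s2gd:hprobs2}, the bound
\begin{equation*}
\Prob{\hat{P}(\hat{w}^{k}) - \hat{P}(\hat{w}^*) \leq \varepsilon \bigl(\hat{P}(w^0) - \hat{P}(\hat{w}^*)\bigr)} \geq 1-\rho.
\end{equation*}

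Next, I would translate this bound into a statement about $P$. Two simple observations suffice. By construction, $\hat{P}(w^0) = P(w^0)$; and because $\hat{P} \geq P$ pointwise, $\hat{P}(\hat{w}^*) \geq P(\hat{w}^*) \geq P(w^*)$. Together these give $\hat{P}(w^0) - \hat{P}(\hat{w}^*) \leq P(w^0) - P(w^*)$. Hence on the high-probability event above, $\hat{P}(\hat{w}^k) - \hat{P}(\hat{w}^*) \leq \delta$ with $\delta := \varepsilon(P(w^0) - P(w^*))$. Invoking Lemma~\ref{eq:s2gd:lemma87878} with this $\delta$ delivers \eqref{eq:s2gd:prob2}.

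For the work bound, I would instantiate the parameter choice of Theorem~\ref{thm:s2gd:main2} on $\hat{P}$, which is $\mu$-strongly convex and $(L+\mu)$-smooth. Setting $\mu = \varepsilon$ (and using $\varepsilon \leq L$) gives condition number $\hat{\kappa} = (L+\mu)/\mu = O(L/\varepsilon)$. Taking $k^* = \lceil \log(1/\varepsilon) \rceil$, so that $\Delta = \varepsilon^{1/k^*} \geq 1/e$, the theorem guarantees $m(k^*) = O(\hat\kappa) = O(L/\varepsilon)$ inner iterations per epoch and total workload
\begin{equation*}
\mathcal{W}(k^*, h(k^*), m(k^*)) = O\!\left(\bigl(n + L/\varepsilon\bigr)\log(1/\varepsilon)\right),
\end{equation*}
as claimed. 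The main thing to watch, and the only real obstacle, is the bookkeeping when combining the three ingredients: one must check that the substitutions $L \mapsto L+\mu$ and $\mu = \varepsilon$ are consistent across the definition of $\hat{c}$, the high-probability iteration count, and the optimized choice of $(h,m)$, and that the factor $\log(1/\rho)$ coming from the Markov step is absorbed as a constant so as not to inflate the order in $\varepsilon$.
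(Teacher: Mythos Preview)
Your proposal is correct and follows essentially the same route as the paper's proof: apply Theorem~\ref{thm:s2gd:hpresult} to the perturbed problem $\hat{P}$ (with $L\mapsto L+\mu$, yielding the constant $\hat c$), use $\hat{P}(w^0)=P(w^0)$ and $\hat{P}(\hat w^*)\geq P(w^*)$ to bound $\hat P(\hat w^0)-\hat P(\hat w^*)\leq P(w^0)-P(w^*)$, and then invoke Lemma~\ref{eq:s2gd:lemma87878} to transfer the guarantee back to $P$; the work bound follows from Theorem~\ref{thm:s2gd:main2} with $\kappa=(L+\varepsilon)/\varepsilon\leq 2L/\varepsilon$. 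Your explicit flag about absorbing the $\log(1/\rho)$ term as a constant is a point the paper leaves implicit, so your bookkeeping is, if anything, slightly more careful.
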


\begin{proof}
We first note that
\begin{equation}
\label{eq:s2gd:uddjd8}
\hat{P}(\hat{w}^0)-\hat{P}(\hat{w}^*) \overset{\eqref{eq:s2gd:barf}}{=} P(\hat{w}^0)-\hat{P}(\hat{w}^*) \leq P(\hat{w}^0) - P(\hat{w}^*) \leq P(w^0)-P(w^*),
\end{equation}
where the first inequality follows from $f\leq \hat{f}$, and the second one from optimality of $x_*$. Hence, by first applying
Lemma~\ref{eq:s2gd:lemma87878} with $\hat{w}=\hat{w}^{k}$ and $\delta= \varepsilon(P(w^0)-P(w^*))$, and then Theorem~\ref{thm:s2gd:hpresult}, with $c \leftarrow \hat{c}$, $P\leftarrow \hat{P}$, $w^0\leftarrow \hat{w}^0$, $w^* \leftarrow \hat{w}^* $, we obtain 
\begin{eqnarray*}
\Prob{ P(\hat{w}^{k}) - P(w^*) 
\leq \delta + \frac{\mu}{2}\| w^0 - w^* \|^2 } &\overset{(\text{Lemma~}\ref{eq:s2gd:lemma87878})}{\geq}&
\Prob{ \hat{P}(\hat{w}^{k}) - \hat{P}(\hat{w}^*) \leq \delta } \\
& \overset{\eqref{eq:s2gd:uddjd8}}{\geq} & \Prob { \frac{\hat{P}(\hat{w}^{k})-\hat{P}(\hat{w}^*)}{\hat{P}(\hat{w}^0)-\hat{P}(\hat{w}^*)} \leq \varepsilon } \;\; \overset{\eqref{eq:s2gd:sjnd8djd}}{\geq} \;\; 1-\rho.
\end{eqnarray*}
The second statement follows directly from the second part of Theorem~\ref{thm:s2gd:main2} and the fact that the condition number of the perturbed problem is $\kappa = \tfrac{L+\epsilon}{\epsilon} \leq \tfrac{2L}{\epsilon}$. 
\end{proof}

\section{Implementation for sparse data}
\label{sec:s2gd:sparses2gd}

In our sparse implementation of Algorithm~\ref{alg:s2gd:s2gd}, described in this section and formally stated as Algorithm~\ref{alg:s2gd:S2GDsparse}, we make the following structural assumption:

\begin{assumption}
\label{ass:s2gd:sparse}
The loss functions arise as the composition of a univariate smooth loss function $\ell_i$, and an inner product with a data point/example $a_i\in \R^d$: \[f_i(w) = \ell_i(a_i^T w), \qquad i=1,2,\dots,n.\]  In this case, $ \nabla f_i(w) = \nabla \ell_i(a_i^T w) a_i$.
\end{assumption}

This is the structure in many cases of interest, including linear or logistic regression.  

A natural question one might want to ask is whether S2GD can be implemented efficiently for sparse data. 

Let us first take a brief detour and look at SGD, which performs iterations of the type:
\begin{equation}
\label{eq:s2gd:sgdxx}
w^{k+1} \leftarrow w^k - h \nabla \ell_i(a_i^T w) a_i.
\end{equation}
Let $\omega_i$ be the number of nonzero features in example $a_i$, i.e., $\omega_i \eqdef \|a_i\|_0 \leq d$. Assuming that the computation of the derivative of the univariate function $\ell_i$ takes $O(1)$ amount of work, 
the computation of $\nabla f_i(w)$ will take $O(\omega_i)$ work. Hence, the update step \eqref{eq:s2gd:sgdxx} will cost $O(\omega_i)$, too, which means the method can naturally speed up its iterations on sparse data.

The situation is not as simple with S2GD, which for loss functions of the type described in Assumption~\ref{ass:s2gd:sparse} performs inner iterations as follows:
\begin{equation}
\label{eq:s2gd:s2gd-update}
y^{k,t+1} \gets y^{k,t} - h \left( g^{k} + \nabla \ell_i(a_i^T y^{k,t} )a_i - \nabla \ell_i( a_i^T w^{k}) a_i \right).
\end{equation}
Indeed, note that $g^k = \nabla P(w^k)$ is in general be fully dense even for sparse data $\{a_i\}$. As a consequence, the update in \eqref{eq:s2gd:s2gd-update} might be as costly as $d$ operations, irrespective of the sparsity level $\omega_i$ of the active  example $a_i$. However, we can use the following ``lazy/delayed'' update trick. We split the update to the $y$ vector into two parts: immediate, and delayed. Assume index $i=i_t$ was chosen at  inner iteration $t$. We immediately perform the update 
$$ \tilde{y}^{k,t+1} \leftarrow y^{k,t} - h \left( \nabla \ell_{i_t}(a_{i_t}^T y^{k,t}) - \nabla \ell_{i_t}(a_{i_t}^T w^{k}) \right)a_{i_t}, $$ which costs $O(a_{i_t})$. Note that we have not computed the $y^{k,t+1}$. However, we ``know'' that $$ y^{k,t+1} = \tilde{y}^{k,t+1} - h g^k, $$ without having to actually compute the difference. At the next iteration, we are supposed to perform update \eqref{eq:s2gd:s2gd-update} for $i=i_{t+1}$:
\begin{equation}
\label{eq:s2gd:s2gd-updateXX}
y^{k,t+2} \gets y^{k,t+1} - h g^{k} - h \left(\nabla \ell_{i_{t+1}}(a_{i_{t+1}}^T y^{k,t+1})- \nabla \ell_{i_{t+1}} (a_{i_{t+1}}^T w^{k}) \right)a_{i_{t+1}}.
\end{equation}

\begin{algorithm}[!h]
\begin{algorithmic}
\State \textbf{parameters:} $m$ = max \# of stochastic steps per epoch, $h$ = stepsize, $\nu$ = lower bound on $\mu$
\For {$k = 0, 1, 2, \dots$}
	\State $g^{k} \gets \frac{1}{n} \sum_{i=1}^n \nabla f_i(w^{k})$
	\State $y^{k,0} \gets w^{k}$
	\State $\chi_{(s)} \gets 0$ for $s = 1, 2, \dots, d$	
	\Comment Store when a coordinate was updated last time
	\State Let $t^{k} \gets t$ with probability $(1 - \nu h)^{m-t} / \beta $ for $t = 1, 2, \dots, m$
	\For {$t = 0$ to $t^{k}-1$}
		\State Pick $i \in \{ 1, 2, \dots, n \}$, uniformly at random
		\For {$s \in \text{nnz}(a_i)$}
			\State $y^{k,t}_{(s)} \gets y^{k,t}_{(s)} - (t - \chi_{(s)}) h g^k_{(s)} $
			\Comment Update what will be needed
			\State $\chi_{(s)} = t$ 
		\EndFor
		\State $ y^{k,t+1} \gets y^{k,t} - h \left( \nabla \ell_i(a_i^Ty^{k,t}) - \nabla \ell_i(a_i^Tw^{k}) \right)a_i $
		\Comment A sparse update
	\EndFor
	\For {$ s = 1$ to $d$} \Comment Finish all the ``lazy'' updates 
		\State $y^{k, t^{k}}_{(s)} \gets y^{k, t^{k}}_{(s)} - (t^k - \chi_{(s)}) h   g^k_{(s)} $
	\EndFor
	\State $w^{k+1} \gets y^{k, t^{k}}$
\EndFor
\end{algorithmic}
\caption{Semi-Stochastic Gradient Descent (S2GD) for sparse data; ``lazy'' updates}
\label{alg:s2gd:S2GDsparse}
\end{algorithm}

However, notice that we can't compute 
\begin{equation}
\label{eq:s2gd:iusi89s}
\nabla \ell_{i_{t+1}}(a_{i_{t+1} }^T y^{k,t+1})
\end{equation}
as we never computed $y^{k,t+1}$. However, here lies the trick: as $a_{i_{t+1}}$ is sparse, we only need to know those coordinates $s$ of $y^{k,t+1}$ for which $(a_{i_{t+1}})_{(s)}$ is nonzero. So, just before we compute the (sparse part of) of the update \eqref{eq:s2gd:s2gd-updateXX}, we perform the update 
$$ y^{k,t+1}_{(s)} \leftarrow \tilde{y}^{k,t+1}_{(s)} - h g^k_{(s)} $$
for coordinates $s$ for which $(a_{i_{t+1}})_{(s)}$ is nonzero. This way we know that the inner product appearing in \eqref{eq:s2gd:iusi89s} is computed correctly (despite the fact that $y^{k,t+1}$ potentially is not!). In turn, this means that we can compute the sparse part of the update in \eqref{eq:s2gd:s2gd-updateXX}. 

We now continue as before, again only computing $\tilde{y}^{k,t+3}$. However, this time we have to be more careful as it is no longer true that
$$ y^{k,t+2} = \tilde{y}^{k,t+2} - h g^k. $$
We need to remember, for each coordinate $s$, the last iteration counter $t$ for which $(a_{i_t})_{(s)} \neq 0$. This way we will know how many times did we ``forget'' to apply the dense update $-h g^k_{(s)}$. We do it in a just-in-time fashion, just before it is needed.

Algorithm~\ref{alg:s2gd:S2GDsparse} (sparse S2GD) performs these lazy updates as described above. It produces exactly the same result as Algorithm~\ref{alg:s2gd:s2gd} (S2GD), but is much more efficient for sparse data as  iteration picking example $i$ only costs $O(\omega_i)$. This is done with a memory overhead of only $O(d)$ (as represented by vector $\chi \in \R^d$).

\section{Numerical Experiments}
\label{sec:s2gd:NUMERICS}

In this section we conduct computational experiments to illustrate some aspects of the performance of our algorithm. In Section~\ref{sec:s2gd:theoryvspractice} we consider the least squares problem with synthetic data to compare the practical performance and the theoretical bound on convergence in expectations. We demonstrate that for both SVRG and S2GD, the practical rate is substantially better than the theoretical one. In Section~\ref{sec:s2gd:othermethods} we compare the S2GD algorithm on several real datasets with other algorithms suitable for this task. We also provide efficient implementation of the algorithm, as described in Section~\ref{sec:s2gd:sparses2gd}, for the case of logistic regression in the MLOSS repository\footnote{\url{http://mloss.org/software/view/556/}}.

\subsection{Comparison with theory} 
\label{sec:s2gd:theoryvspractice}

Figure~\ref{fig:s2gd:ThVsPracLin} presents a comparison of the theoretical rate and practical performance on a larger problem with artificial data, with a condition number we can control (and choose it to be poor). In particular, we consider the L2-regularized least squares  with $$ f_i(w) = \frac{1}{2}(a_i^T w - b_i)^2 + \frac{\lambda}{2} \| w \|^2, $$ for some $a_i \in \R^d$, $b_i \in \R$ and $\lambda>0$ is the regularization parameter.

\begin{figure}[!h]
\begin{center}
\includegraphics[width = 5.3in]{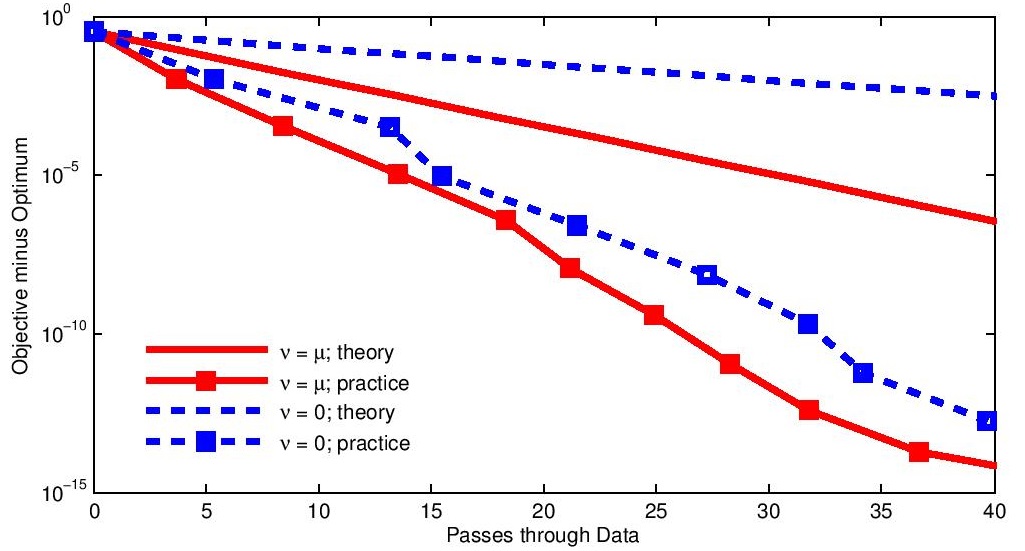}
\end{center}
\caption{Least squares with $n = 10^5$, $\kappa = 10^4$. Comparison of theoretical result and practical performance for cases $\nu = \mu$ (full red line) and $\nu = 0$ (dashed blue line).}
\label{fig:s2gd:ThVsPracLin}
\end{figure}

We consider an instance with $n = 100,000$, $d = 1,000$ and $\kappa = 10,000.$ We run the algorithm with both parameters $\nu = \lambda$ (our best estimate of $\mu$) and $\nu = 0$. Recall that the latter choice leads to the SVRG method of \cite{SVRG}. We chose parameters $m$ and $h$ as a (numerical) solution of the work-minimization problem \eqref{eq:s2gd:pracReq}, obtaining $m = 261,063$ and $h = 1/11.4L$ for $\nu = \lambda$ and $m = 426,660$ and $h = 1/12.7L$ for $\nu = 0$. The practical performance is obtained after a single run of the S2GD algorithm. 

The figure demonstrates linear convergence of S2GD in practice, with the convergence rate being significantly better than the already strong theoretical result. Recall that the bound is on the expected function values. We can observe a rather strong convergence to machine precision in work equivalent to evaluating the full gradient only $40$ times. Needless to say, neither SGD nor GD have such speed. Our method is also an improvement over \cite{SVRG}, both in theory and practice.

\subsection{Comparison with other methods}
\label{sec:s2gd:othermethods}

The S2GD algorithm can be applied to several classes of problems. We perform experiments on an important and in many applications used L2-regularized logistic regression for binary classification on several datasets. The functions $f_i$ in this case are: $$ f_i(w) = \log\left(1 + \exp\left(b_i a_i^T w \right) \right) + \frac{\lambda}{2} \| w \|^2, $$ where $b_i \in \{-1, +1\}$ is the label of $i^{th}$ training example $a_i$. In our experiments we set the regularization parameter $\lambda = \Theta(1/n)$ so that the condition number $\kappa = \Theta(n)$, which is about the most ill-conditioned problem used in practice. We added a (regularized) bias term to all datasets.

All the datasets we used, listed in Table~\ref{tbl:s2gd:datasets}, are freely available\footnote{Available at \href{http://www.csie.ntu.edu.tw/~cjlin/libsvmtools/datasets/}{http://www.csie.ntu.edu.tw/$\sim$cjlin/libsvmtools/datasets/}.} benchmark binary classification datasets.

\begin{table}[!h]
\centering
\begin{tabular}{| r | r | r | r | r | r |} 
\hline
Dataset & Training examples ($n$) & Variables ($d$) & $L$ & $\lambda$ & $\kappa$ \\
\hline
\textit{ijcnn} & 49 990 & 23 & 1.23 & 1/$n$ & 61 696 \\
\textit{rcv1} & 20 242 & 47 237 & 0.50 & 1/$n$ & 10 122 \\
\textit{real-sim} & 72 309 & 20 959 & 0.50 & 1/$n$ & 36 155 \\
\textit{url} & 2 396 130 & 3 231 962 & 128.70 & 100/$n$ & 3 084 052 \\
\hline 
\end{tabular}
\caption{Datasets used in the experiments.}
\label{tbl:s2gd:datasets}
\end{table}

In the experiment, we compared the following algorithms:
\begin{itemize}
\item \textbf{SGD:} Stochastic Gradient Descent. After various experiments, we decided to use a variant with constant step-size that gave the best practical performance in hindsight.
\item \textbf{L-BFGS:} A publicly-available limited-memory quasi-Newton method that is suitable for broader classes of problems. We used a popular implementation by Mark Schmidt.\footnote{\href{http://www.di.ens.fr/~mschmidt/Software/minFunc.html}{http://www.di.ens.fr/$\sim$mschmidt/Software/minFunc.html}}
\item \textbf{SAG:} Stochastic Average Gradient, \cite{SAGjournal2013}. This is the most important method to compare to, as it also achieves linear convergence using only stochastic gradient evaluations. Although the methods has been analyzed for stepsize $h = 1/16L$, we experimented with various stepsizes and chose the one that gave the best performance for each problem individually.
\item \textbf{SDCA:} Stochastic Dual Coordinate Ascent, where we used approximate solution to the one-dimensional dual step, as in Section 6.2 of \cite{SDCA}.
\item \textbf{S2GDcon:} The S2GD algorithm with conservative stepsize choice, i.e., following the theory. We set $m = \Theta(\kappa)$ and $h = 1/10L$, which is approximately the value you would get from Equation~\eqref{eq:s2gd:shhsdd998}.
\item \textbf{S2GD:} The S2GD algorithm, with stepsize that gave the best performance in hindsight. The best value of $m$ was between $n$ and $2n$ in all cases, but optimal $h$ varied from $1/2L$ to $1/10L$.
\end{itemize}

Note that SAG needs to store $n$ gradients in memory in order to run. In case of relatively simple functions, one can store only $n$ scalars, as the gradient of $f_i$ is always a multiple of $a_i$. If we are comparing with SAG, we are implicitly assuming that our memory limitations allow us to do so. Although not included in Algorithm~\ref{alg:s2gd:s2gd}, we could also store these gradients we used to compute the full gradient, which would mean we would only have  to compute a single stochastic  gradient per inner iteration (instead of two).

We plot the results of these methods, as applied to various different, in the Figure~\ref{fig:s2gd:plotTogether} for first  15-30 passes through the data (i.e., amount of work work equivalent to 15-30 full gradient evaluations).

\begin{figure}[!h]
\begin{center}
\includegraphics[width = \linewidth]{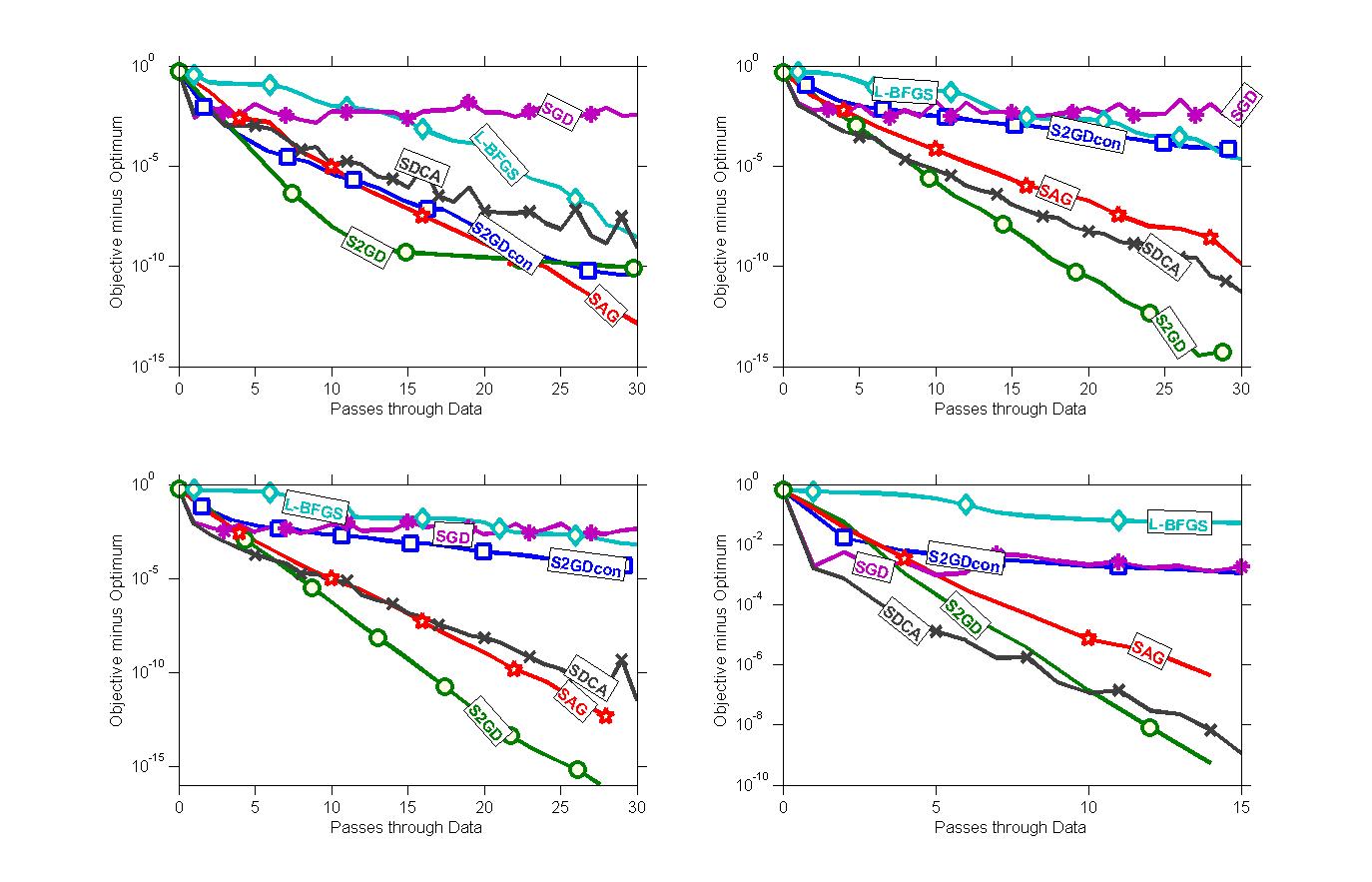}
\end{center}
\caption{Practical performance for logistic regression and  the following datasets: \textit{ijcnn, rcv} (first row), \textit{realsim, url} (second row)}
\label{fig:s2gd:plotTogether}
\end{figure}

There are several remarks  we would like to make. First, our experiments confirm the insight from \cite{SAGjournal2013} that for this types of problems, reduced-variance methods consistently exhibit substantially better performance than the popular L-BFGS algorithm.

The performance gap between S2GDcon and S2GD differs from dataset to dataset. A possible explanation for this can be found in an extension of SVRG to proximal setting \cite{proxSVRG}, released after the first version of this work was put onto arXiv (i.e., after December 2013) . Instead Assumption~\ref{ass:s2gd:lip}, where all loss functions are assumed to be associated with  the same constant $L$, the authors of \cite{proxSVRG} instead assume that each loss function $f_i$ has its own constant  $L_i$. Subsequently, they sample proportionally to these quantities as opposed to the  uniform sampling. In our case, $L = \max_i L_i$. This weighted sampling has an impact on the convergence: one gets dependence on the average of the quantities $L_i$ and not in their maximum.

The number of passes through data seems a reasonable way to compare performance, but some algorithms could need more time to do the same amount of passes through data than others. In this sense, S2GD should be in fact faster than SAG due to the following property. While SAG updates the test point after each evaluation of a stochastic gradient, S2GD does not always make the update --- during the evaluation of the full gradient. This claim is supported by computational evidence: SAG needed about $20-40\%$ more time than S2GD to do the same amount of passes through data.

Finally, in Table~\ref{tbl:s2gd:experimentsTime} we provide the time it took the algorithm  to produce these plots on a desktop computer with Intel Core i7 3610QM processor, with 2 $\times$ 4GB DDR3 1600 MHz memory. The numbers for the \textit{url} dataset is are not representative, as the algorithm needed extra memory, which slightly exceeded the memory limit of our computer.

\begin{table}
\centering
\begin{tabular}{c|r|r|r|r|}
\cline{2-5}
 & \multicolumn{4}{c|}{Time in seconds} \\
\hline
\multicolumn{1}{|c|}{Algorithm} & \textit{ijcnn} & \textit{rcv1} & \textit{real-sim} & \textit{url} \\
\hline
\multicolumn{1}{|c|}{S2GDcon} & 0.25 & 0.43 & 1.01 & 125.53 \\
\multicolumn{1}{|c|}{S2GD}    & 0.29 & 0.49 & 1.02 & 54.04 \\
\multicolumn{1}{|c|}{SAG}     & 0.41 & 0.73 & 1.87 & 71.74 \\
\multicolumn{1}{|c|}{L-BFGS}  & 0.15 & 0.67 & 0.76 & 309.14 \\
\multicolumn{1}{|c|}{SGD}     & 0.39 & 0.57 & 1.54 & 62.73 \\
\multicolumn{1}{|c|}{SDCA}    & 0.33 & 0.38 & 1.10 & 126.32 \\
\hline
\end{tabular}
\caption{Time required to produce plots in Figure~\ref{fig:s2gd:plotTogether}.}
\label{tbl:s2gd:experimentsTime}
\end{table}

\subsection{Boosted variants of S2GD and SAG}
\label{sec:plusVariants}

In this section we study the practical performance of boosted methods, namely S2GD+ (Algorithm~\ref{alg:s2gd:S2GD+}) and variant of SAG suggested by its authors \cite[Section 4.2]{SAGjournal2013}.

\begin{figure}[!h]
\begin{center}
\includegraphics[angle = 270, width = \linewidth]{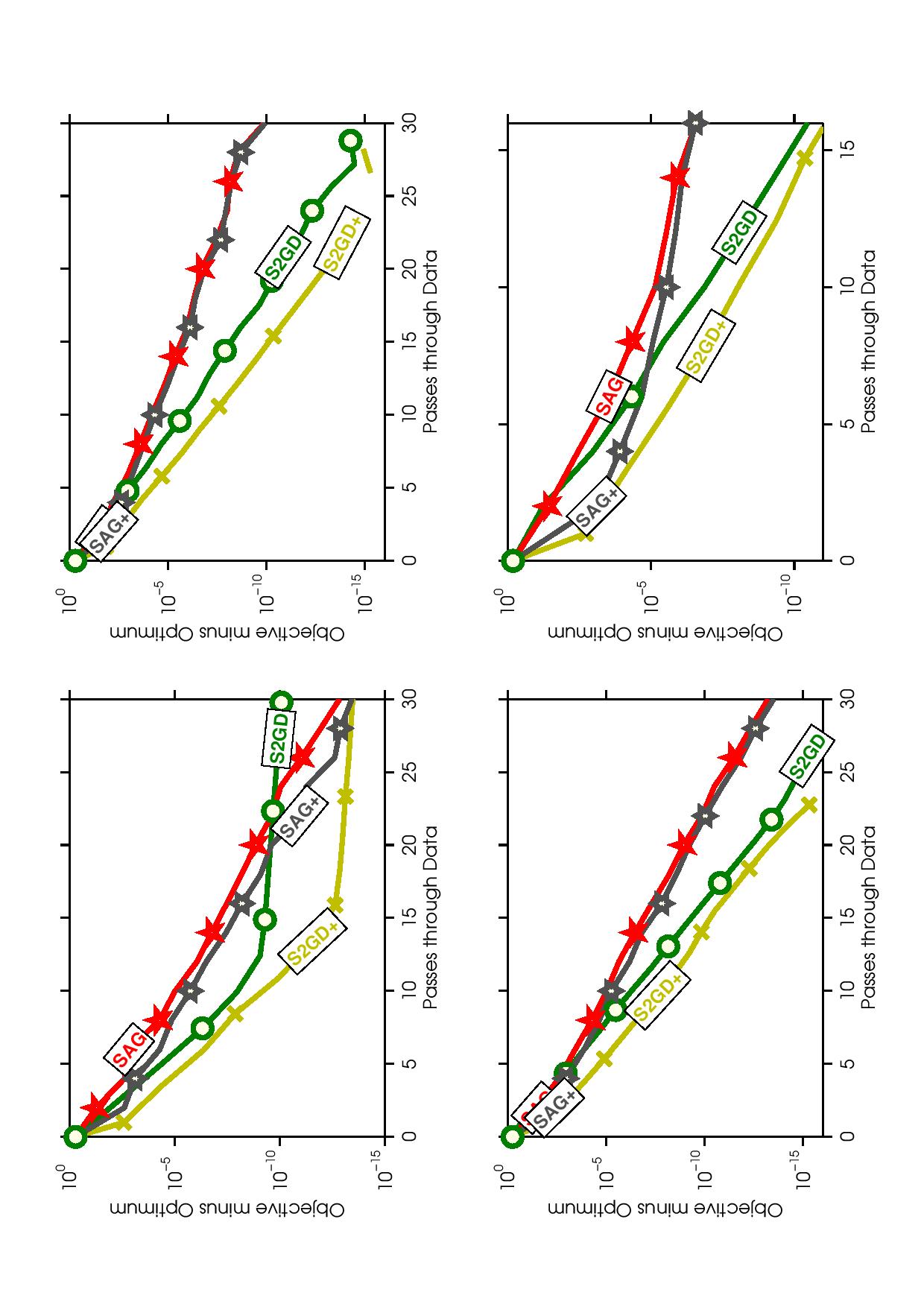}
\end{center}
\caption{Practical performance of boosted methods on datasets \textit{ijcnn, rcv} (first row), \textit{realsim, url} (second row)}
\label{fig:s2gd:plotTogetherPlus}
\end{figure}

SAG+ is a simple modification of SAG, where one does not divide the sum of the stochastic gradients by $n$, but by the number of training examples seen during the run of the algorithm, which has the effect of producing larger steps at the beginning. The authors claim that this method performed better in practice than a hybrid SG/SAG algorithm. 

We have observed that, in practice, starting SAG from a point close to the optimum, leads to an initial  ``away jump``. Eventually, the method exhibits   linear convergence. In contrast, S2GD converges linearly from the start, regardless of the starting position.

Figure~\ref{fig:s2gd:plotTogetherPlus} shows that S2GD+ consistently improves over S2GD, while SAG+ does not improve always: sometimes it performs essentially the same as SAG. Although S2GD+ is overall a superior algorithm, one should note that this comes at the cost of having to choose stepsize parameter for SGD initialization. If one chooses these parameters poorly, then S2GD+ could perform worse than S2GD. The other three algorithms can work well without any parameter tuning.

\section{Conclusion} 
\label{sec:s2gd:CONCLUDE}
We have developed a new semi-stochastic gradient descent method (S2GD) and analyzed its complexity for smooth convex and strongly convex loss functions. Our methods  need $O((\kappa/n)\log(1/\varepsilon))$ work only, measured in  units equivalent to the evaluation of the full gradient of the loss function, where $\kappa=L/\mu$  if the loss is $L$-smooth and $\mu$-strongly convex, and $\kappa\leq 2L/\varepsilon$ if the loss is merely $L$-smooth.

Our results in the strongly convex case match or improve on a few very recent results, while at the same time generalizing and simplifying the analysis. Additionally, we proposed  S2GD+ ---a method which equips S2GD with an SGD pre-processing step---which in our experiments exhibits superior performance to all methods we tested. We left the analysis of this method as an open problem.

\chapter{Semi-Stochastic Coordinate Descent}
\label{ch:s2cd}

\section{Introduction}

In this chapter we study the problem of unconstrained minimization of a strongly convex function represented as the average of a large number of smooth convex functions:
\begin{equation} 
\label{eq:s2cd:maineq} 
\min_{\ovar \in \R^d} P(\ovar) \equiv \frac{1}{n} \sum_{i=1}^n f_i(\ovar).
\end{equation}

Many computational problems in various disciplines are of this form. In machine learning,  $f_i(\ovar)$ represents the loss/risk of classifier $\ovar \in \R^d$ on data sample $i$, $P$ represents the empirical risk (=average loss), and the goal is to find a predictor minimizing $P$. An L2-regularizer of the form $\tfrac{\mu}{2} \|\ovar\|^2$, for $\mu>0$, could be added to the loss, making it strongly convex and hence easier to minimize.

\paragraph*{Assumptions.}  We assume that the functions  $f_i : \R^d \rightarrow \R$ are  differentiable and convex function, with Lipschitz continuous partial derivatives. Formally, we assume that for each  $i \in [n]\eqdef \{1,2,\dots,n\}$ and  $j \in [d]\eqdef \{1,2,\dots,d\}$ there exists $L_{ij}\geq 0$ such that for all $x\in \R^d$ and $h\in \R$, 
\begin{equation}
\label{eq:s2cd:sjs7shd}
f_\rsample(x + he_\csample) \leq f_\rsample(x) + \left\langle \nabla f_\rsample(x), h e_\csample \right\rangle + \frac{L_{\rsample \csample}}{2} h ^2,
\end{equation}
where $e_\csample$ is the $j^{th}$ standard basis vector in $\R^d$, $\nabla f_\rsample(x)\in\R^d$ is the gradient of $f_\rsample$ at point $x$ and $\ip{\cdot}{\cdot}$ is the standard inner product.  This assumption was recently used in the analysis of the   accelerated coordinate descent method APPROX \cite{APPROX}. We further assume that $P$ is $\mu$-strongly convex. That is, we assume that there exists $\mu>0$ such that for all $x,y\in \R^d$,
\begin{equation}
\label{s2cd:SVRGstrcvx}
P(y) \geq P(x) + \langle \nabla P(x), y - x \rangle + \frac{\mu}{2} \| y - x \|^2.
\end{equation}

\paragraph*{Context.}  Batch methods such as gradient descent (GD) enjoy a fast  (linear) convergence rate: to achieve $\epsilon$-accuracy, GD  needs $\mathcal{O}(\kappa\log(1 / \epsilon))$ iterations, where $\kappa$ is a condition number. The drawback of GD is  that in each iteration one needs to compute the gradient of $P$, which requires a pass through the entire dataset. This is prohibitive to do many times if $n$ is very large.

Stochastic gradient descent (SGD)  in each iteration computes the gradient of  a single randomly chosen function $f_i$ only---this constitutes  an unbiased (but noisy) estimate of the gradient of $P$---and makes a step in that direction \cite{RM1951,nemirovski2009robust, tongSGD,pegasos}. The rate of convergence of SGD is slower, $\mathcal{O}(1 / \epsilon)$,  but the cost of each iteration is independent of $n$. Variants with nonuniform selection probabilities were considered in \cite{IProx-SDCA}, a mini-batch variant (for SVMs with hinge loss) was analyzed  in \cite{takac-minibatch}.

Recently, there has been progress in designing algorithms that achieve the fast $O(\log(1/\epsilon))$ rate without the need to scan the entire dataset in each iteration. The first class of methods to have achieved this are stochastic/randomized coordinate descent methods.

When applied to \eqref{eq:s2cd:maineq}, coordinate descent methods (CD) \cite{nesterovRCDM, RichtarikTakacIteration}  can, like SGD,  be seen as an attempt to keep the benefits of GD (fast linear convergence) while reducing the complexity of each iteration. A CD method only computes a single partial derivative  $\nabla_j P(\ovar)$ at each iteration  and updates a single coordinate of vector $w$ only. When  chosen uniformly at random, partial derivative is also an unbiased estimate of the gradient. However, unlike the SGD estimate, its variance decrease to zero as one approaches the optimum.  While CD methods are able to obtain linear convergence, they typically need $O((d/\mu)\log(1/\epsilon))$ iterations when applied to \eqref{eq:s2cd:maineq} directly\footnote{The complexity can be improved to $O(\tfrac{d\alpha}{\tau \mu}\log(1/\epsilon))$ in the case when $\tau$ coordinates are updated in each iteration, where $\alpha \in [1,\tau]$ is a problem-dependent constant \cite{RT:PCDM}. This has been further studied  for nonsmooth problems via smoothing \cite{FR:SPCDM2013}, for arbitrary nonuniform distributions governing the selection of coordinates \cite{nsync,QUARTZ} and in the distributed setting \cite{richtarik2013distributed, fercoq2014fast,QUARTZ}. Also, efficient accelerated variants with $O(1/\sqrt{\epsilon})$ rate were developed \cite{APPROX, fercoq2014fast}, capable of solving problems with 50 billion variables.}. CD method typically significantly outperform GD, especially  on sparse problems with a very large number of variables/coordinates \cite{nesterovRCDM, RichtarikTakacIteration}. 

An alternative to applying CD to \eqref{eq:s2cd:maineq} is to apply it to the dual problem. This is possible under certain additional structural assumptions on the functions $f_i$. This is the strategy employed by stochastic dual coordinate ascent (SDCA) \cite{SDCA,QUARTZ}, whose rate is \[O((n+\kappa)\log(1/\epsilon)).\] The condition number $\kappa$ here is the same as the condition number appearing in the rate of GD.  Despite this, this is a vast improvement on the computational complexity
achieved by GD which has an iteration cost $n$ times larger than SDCA. Also, the linear convergence rate  is superior to the sublinear rate $O(1/\epsilon)$ achieved by SGD, and the method indeed  typically performs much better in practice. Accelerated \cite{ShalevShwartz:2014dy} and mini-batch \cite{takac-minibatch} variants of SDCA have also been  proposed. We refer the reader to QUARTZ \cite{QUARTZ} for a general analysis involving the update of a random subset of dual coordinates, following an arbitrary distribution.

Recently, there has been progress in designing primal methods which match the fast rate of SDCA.  Stochastic average gradient (SAG)  \cite{SAGjournal2013}, and more recently SAGA \cite{saga}, move in a direction composed of old stochastic gradients. The semi-stochastic gradient descent (S2GD) \cite{S2GD, konecny2014mS2GD} and stochastic variance reduced gradient (SVRG) \cite{SVRG, proxSVRG} methods employ a different strategy: one first computes the gradient of $P$, followed by $O(\kappa)$ steps where only stochastic gradients are computed. These are used to estimate the change of the gradient, and it is this direction which combines the old gradient and the new stochastic gradient information which is used in the update.

\paragraph*{Main result.} In this work we develop a new  method---semi-stochastic coordinate descent (S2CD)---for solving \eqref{eq:s2cd:maineq}, enjoying a fast rate similar to methods such as SDCA, SAG, S2GD, SVRG, SAGA, mS2GD and QUARTZ. S2CD can be seen as a hybrid between S2GD and CD. In particular, the complexity of our method  is the sum of two terms: \[O(n\log (1/\epsilon))\] evaluations $\nabla f_i$ (that is, $\log(1/\epsilon)$ evaluations of the gradient of $P$) and \[O(\hat{\kappa} \log(1/\epsilon))\] evaluations of $\langle e_j, \nabla f_i\rangle$ for randomly chosen functions $f_i$ and randomly chosen coordinates $j$, where $\hat{\kappa}$ is a new condition number which is defined in ~\eqref{eq:s2cd:defhatkappa} and  larger than $\kappa$.   We summarize in Table~\ref{s2cd:tab1} the runtime complexity of the various algorithms. Note that $\hat{\kappa}$ enters the complexity only in the term involving the evaluation cost of a partial derivative $\nabla_j f_i$, which can be substantially smaller than the evaluation cost of  $\nabla f_i$. Hence, our complexity result can be both  better or worse than previous results, depending on whether the increase of the condition number can or can not be compensated by the lower cost of the stochastic steps based on the evaluation of partial derivatives.

\begin{table}[!t]
\centering
\renewcommand{\arraystretch}{1.5}
\begin{tabular}{|c|c|c|}
\hline
Method & Runtime & paper \\
\hline 
\begin{tabular}{c}CD 
\end{tabular} & $\displaystyle 
 \mathcal{O}(n \kappa \mathcal{C}_{grad}\log(1/\epsilon))$ &  e.g. \cite{nesterov2004convex}\\ \hline 
SGD &   $\displaystyle 
 \mathcal{O}( \mathcal{C}_{grad}/\epsilon)$ &  \cite{tongSGD,pegasos} \\ \hline 
CD & $\mathcal{O}(n\kappa\mathcal{C}_{pd}  \log(1/\epsilon)) $  &  \cite{nesterovRCDM, RichtarikTakacIteration} \\   \hline 
 SDCA & $\displaystyle 
 \mathcal{O}((n +\kappa) \mathcal{C}_{grad}\log(1/\epsilon))$&  \cite{SDCA,IProx-SDCA,QUARTZ}\\ \hline 
  SVRG/S2GD &  $\mathcal{O}\left( (n  \mathcal{C}_{grad} +  \kappa \mathcal{C}_{grad}) \log \left( {1}/{\epsilon}\right)\right)$&   \cite{SVRG, proxSVRG,S2GD}\\ \hline 
S2CD & $ \mathcal{O} \left( (n  \mathcal{C}_{grad} +  \hat{\kappa} \mathcal{C}_{pd}) \log \left({1}/{\epsilon}\right) \right) $& this work \cite{S2CD} \\
\hline
\end{tabular}
\caption{Runtime complexity of various algorithms. We use $\mathcal{C}_{grad}$ to denote the
the evaluation cost of the gradient of one single function  $\nabla f_i$  and use $\mathcal{C}_{pd}$ to denote the evaluation cost of a partial derivative $\nabla_j f_i$.}
\label{s2cd:tab1}
\end{table}

\paragraph*{Outline.} This chapter is organized as follows. In Section~\ref{sec:s2cd:algo} we describe the S2CD algorithm and in Section~\ref{sec:s2cd:complexity} we state a key lemma and our main complexity result. The proof of the lemma is provided in Section~\ref{subsec:s2cd:key_lemma} and the proof of the main result in Section~\ref{sec:s2cd:proofs}.

\section{S2CD Algorithm}
\label{sec:s2cd:algo}

In this section we describe the Semi-Stochastic Coordinate Descent method (Algorithm~\ref{alg:s2cd:S2CD}). 

\begin{algorithm}[h]
\begin{algorithmic}
\State \textbf{parameters:}  $m$ (max \# of stochastic steps per epoch);  $h>0$ (stepsize parameter);  $\ovar^0\in \R^d$
\For {$\oidx = 0, 1, 2, \dots$}
	\State  Compute and store $ \nabla P(\ovar^{\oidx}) = \tfrac{1}{n}\sum_i \nabla f_i(\ovar^{\oidx})$
	\State Initialize the inner loop: $\ivar^{\oidx, 0} \gets \ovar^{\oidx}$
	\State Let $\iidx^{\oidx} = T \in \{1,2,\dots,m\}$ with probability $\left(1 - \mu h\right)^{m-T} / \beta$ 
	\For {$\iidx = 0$ to $\iidx^{\oidx}-1$}
                \State Pick coordinate $\csample \in \{ 1, 2, \dots, d \}$ with probability $p_\csample$
		\State Pick function index $\rsample$ from the set $\{i\;:\;L_{ij}>0\}$ with probability  $q_{\rsample\csample}$
		\State $ \ivar^{\oidx, \iidx+1} \gets \ivar^{\oidx,\iidx} - h p_{\csample}^ {-1}  \big( \nabla_{\csample} P(\ovar^{\oidx}) + \frac{1}{n \weightinG_{\rsample \csample}} \left( \nabla_{\csample} f_{\rsample}(\ivar^{\oidx, \iidx}) - \nabla_{\csample} f_{\rsample}(\ovar^{\oidx}) \right) \big) e_{\csample}$
	\EndFor
	\State Reset the starting point: $\ovar^{\oidx+1} \gets \ivar^{\oidx, \iidx^{\oidx}}$
\EndFor
\end{algorithmic}

\caption{Semi-Stochastic Coordinate Descent (S2CD)}
\label{alg:s2cd:S2CD}
\end{algorithm}

The discussion on the choice of $m$ and $h$ in Algorithm~\ref{alg:s2cd:S2CD} is deferred to Section~\ref{sec:s2cd:complexity}. As we will see, the parameters $m$ and $h$ depends on the target accuracy and the number of iterations. We next provide a more detailed description of the algorithm.

The method has an outer loop (an ``epoch''), indexed by  counter $\oidx$, and an inner loop, indexed by $\iidx$. At the beginning of epoch $k$, we compute and store the gradient of  $f$ at $\ovar^\oidx$.  Subsequently, S2CD enters the inner loop in which 
a sequence of vectors $\ivar^{\oidx,\iidx}$ for $t=0,1\dots,t^k$ 
is computed in a stochastic way, starting from $\ivar^{\oidx,0}=\ovar^{\oidx}$. The number $t^{\oidx}$ of stochastic steps in the inner loop is  random, following a geometric law: 
\[\Prob{(t^k=T)}= \frac{(1-
\mu h )^{m-T}}{\beta}, 
\qquad T\in\{1,\dots,m\},\]
where
\begin{align}
\label{def:s2cd:beta}
\beta \eqdef \sum_{t = 1}^m (1 - \mu h )^{m-t}.\end{align}
In each step of the inner loop, we seek to compute $\ivar^{\oidx,\iidx+1}$, given $\ivar^{\oidx,\iidx}$. In order to do so, we sample coordinate $j$ with probability $p_j$ and subsequently\footnote{In S2CD, as presented, coordinates $\csample$ is selected first, and then function $\rsample$ is selected, according to a distribution conditioned on the choice of $\csample$. However,  one could equivalently sample $(\rsample, \csample)$ with joint probability $p_{\rsample \csample}$. We opted for the sequential sampling for clarity of presentation purposes.} sample $i$ with probability $q_{ij}$, where the probabilities are given by
\begin{equation}
\label{eq:s2cd:sjs7tbjd}
\omega_\rsample \eqdef | \{ j : L_{\rsample \csample} \neq 0 \} |,\quad \weightofnorm_\csample \eqdef \sum_{\rsample = 1}^ n \omega_\rsample L_{\rsample \csample}, 
\quad
p_{\csample} \eqdef \weightofnorm_\csample / \sum_{\csample = 1}^d \weightofnorm_\csample, \quad  q_{\rsample \csample} \eqdef \frac{\omega_\rsample L_{\rsample \csample}}{\weightofnorm_j}, \quad  p_{\rsample \csample} \eqdef p_\csample q_{\rsample \csample}.
\end{equation}
Note that $L_{\rsample \csample}=0$ means that function $f_\rsample$ does not depend on the $\csample^{th}$ coordinate of $x$. Hence, $\omega_\rsample$ is the number of coordinates  function $f_\rsample$ depends on -- a measure of sparsity of the data\footnote{The quantity $\omega\eqdef \max_i \omega_i$ (degree of partial separability of $P$) was used in the analysis of a large class of randomized parallel  coordinate descent methods in \cite{RT:PCDM}. The more informative quantities $\{\omega_i\}$ appear in the analysis of parallel/distributed/mini-batch coordinate descent methods \cite{richtarik2013distributed, APPROX, fercoq2014fast}.}. It can be shown that $P$ has a $1$-Lipschitz gradient with respect to the weighted Euclidean norm with weights $\{v_j\}$ (\cite[Theorem 1]{APPROX}). Hence, we sample coordinate $j$ proportionally to this weight $v_j$. Note that $p_{\rsample \csample}$ is the joint probability of choosing the pair $(\rsample, \csample)$.

Having sampled coordinate $j$ and function index $i$, we compute two partial derivatives: $\nabla_\csample f_\rsample(\ovar^\oidx)$ and $\nabla_\csample f_\rsample(\ivar^{\oidx, \iidx})$ (we compressed the notation here by writing $\nabla_j f_i(\ovar)$ instead of $\ip{ \nabla f_i(\ovar)}{e_j}$), and combine these with the pre-computed value $\nabla_j P(\ovar^{\oidx})$ to form an update of the form 
\begin{equation}
\label{eq:s2cd:87gsb8s9} 
\ivar^{\oidx,\iidx+1} \leftarrow \ivar^{\oidx,\iidx} - h p_j^{-1} G_{\rsample\csample}^{\oidx\iidx} e_j = \ivar^{\oidx,\iidx} - h \grad_{\rsample\csample}^{\oidx\iidx},
\end{equation}
where 
\begin{equation}
\label{eq:s2cd:g_kl}
\grad_{\rsample\csample}^{\oidx\iidx} \eqdef p_j^{-1} G_{\rsample\csample}^{\oidx\iidx} e_j
\end{equation} 
and
\begin{equation}
\label{eq:s2cd:0j9j0s9s}
G_{\rsample\csample}^{\oidx\iidx} \eqdef \nabla_{\csample} P(\ovar^{\oidx}) + \frac{1}{n \weightinG_{\rsample \csample}} \left( \nabla_{\csample} f_{\rsample}(\ivar^{\oidx, \iidx}) - \nabla_{\csample} f_{\rsample}(\ovar^{\oidx}) \right).
\end{equation}  
Note that only a single coordinate of $\ivar^{\oidx,\iidx}$ is updated at each iteration. 

In the entire text (with the exception of the statement of Theorem~\ref{thm:s2cd:S2CD} and a part of Section~\ref{eq:s2cd:s98h9s8h}, where $\E{\cdot}$ denotes the total expectation) we will assume that all expectations are conditional on the entire history of the random variables generated up to the point when $y^{k,t}$ was computed. With this convention, it is possible to think that there are only two random variables: $j$ and $i$.  By $\E{\cdot}$ we then mean the expectation with respect to both of these random variables, and by $\Ei{\cdot}$ we mean expectation with respect to $i$ (that is, conditional on $j$). With this convention,  we can write  
\begin{eqnarray}
\label{s2cd:EGij}
\Ei{G_{\rsample\csample}^{\oidx\iidx}}
& = & \sum_{i=1}^n q_{\rsample\csample} G_{\rsample\csample}^{\oidx\iidx} \notag\\
& \overset{\eqref{eq:s2cd:0j9j0s9s}}{=} & \nabla_{\csample} P(\ovar^{\oidx}) + \frac{1}{n} \sum_{i=1}^n \left( \nabla_{\csample} f_{\rsample}(\ivar^{\oidx, \iidx}) - \nabla_{\csample} f_{\rsample}(\ovar^{\oidx}) \right) \;\;\overset{\eqref{eq:s2cd:maineq} }{=}\;\; \nabla_j P(\ivar^{\oidx,\iidx}),
\label{eq:s2cd:s98j6dd}
\end{eqnarray}
which means that conditioned on $j$, $G_{\rsample\csample}^{\oidx\iidx}$ is an unbiased estimate of the $\csample^{th}$ partial derivative of $P$ at $\ivar^{\oidx,\iidx}$. An equally easy calculation reveals that  the random vector
$g_{\rsample\csample}^{\oidx\iidx}$ is an unbiased estimate of the  gradient of $P$ at $\ivar^{\oidx,\iidx}$:
\begin{eqnarray*}
\E{ \grad_{\rsample\csample}^{\oidx\iidx} } &\overset{\eqref{eq:s2cd:g_kl}}{=}& \E{ p_\csample^{-1} G_{\rsample\csample}^{\oidx\iidx} e_\csample } = \E{ \Ei{ p_\csample^{-1} G_{\rsample\csample}^{\oidx\iidx} e_\csample } } \\
&=& \E{ p_{\csample}^{-1} e_\csample \Ei{ G_{\rsample\csample}^{\oidx\iidx} } }
\;\; \overset{\eqref{eq:s2cd:s98j6dd}}{=} \;\;   \E{ p_{\csample}^{-1} e_\csample \nabla_\csample P(\ivar^{\oidx,\iidx}) } \;\;  = \;\; \nabla P(\ivar^{\oidx,\iidx}).
\end{eqnarray*}

Hence, the update step performed by S2CD is a stochastic gradient step of fixed stepsize $h$.  

Before we describe our main complexity result in the next section, let us briefly comment on a few special cases of S2CD:

\begin{itemize}
\item If  $n = 1$ (this can be always achieved simply by grouping all functions in the average into a single function), S2CD  reduces to a stochastic CD algorithm with importance sampling\footnote{A   parallel CD method  in which every subset of coordinates can be assigned a different probability of being chosen/updated was analyzed in \cite{nsync}.}, as studied in \cite{nesterovRCDM,RichtarikTakacIteration, QUARTZ}, but written with many redundant computations. Indeed, the method in this case  does not require the $\ovar^\oidx$ iterates, nor does it need to compute the gradient of $P$, and instead takes on the form: $$ \ivar^{0,\iidx+1} \gets \ivar^{0,\iidx} - h p_\csample^{-1} \nabla_\csample P(\ivar^{0,\iidx})e_\csample, $$ where $p_\csample=L_{1\csample}/\sum_s {L_{1s}}$.  

\item It is possible to extend the S2CD algorithm and results to the case when coordinates are replaced by (nonoverlapping) blocks of coordinates, as in \cite{RichtarikTakacIteration} --- we did not do it here for the sake of keeping the notation simple. In such a setting, we would obtain semi-stochastic {\em block} coordinate descent. In the special case with {\em all variables forming a single block}, the algorithm reduces to the S2GD method described in \cite{S2GD}, but with nonuniform probabilities for the choice of $i$ --- proportional to the Lipschitz constants of the gradient of the functions $f_i$ (this is also studied in \cite{proxSVRG}). As in \cite{proxSVRG}, the complexity result then depends on the average of the Lipschitz constants.
\end{itemize}

Note that the algorithm, as presented, assumes knowledge of the strong convexity parameter $\mu$. We have done this for simplicity of exposition: the method works also if $\mu$ is not explicitly known --- in that case, we can simply replace $\mu$ by $0$ and the method will still depend on the true strong convexity parameter. The change to the complexity results will be only minor in constants and all our conclusions hold. Likewise, it is possible to give an $O(1/\epsilon)$ complexity result in the non-strongly convex case of $P$, using standard regularization arguments (e.g., see \cite{S2GD}).

\section{Complexity Result}
\label{sec:s2cd:complexity}

In this section, we state and describe our complexity result; the proof is provided in Section~\ref{sec:s2cd:proofs}.

An important step in our analysis is  proving a  good upper bound on the variance of the (unbiased) estimator $\grad_{\rsample\csample}^{\oidx\iidx} = p_\csample^{-1}G_{\rsample\csample}^{\oidx\iidx}e_\csample$ of $\nabla P(\ivar^{\oidx,\iidx})$, one that we can ``believe'' would  diminish to zero as the algorithm progresses.  This is important for several reasons. First, as the method approaches the optimum, we wish $\grad_{\rsample\csample}^{\oidx\iidx}$ to be progressively closer to the true gradient, which in turn will be close to zero. Indeed, if this was the case, then   S2CD behaves like gradient descent with fixed stepsize $h$ close to optimum. In particular, this would indicate that using fixed stepsizes makes sense. 

In light of the above discussion, the following lemma plays a key role in our analysis:

\begin{lemma}
\label{lem:s2cd:main} 
The iterates of the S2CD algorithm satisfy 
\begin{equation}
\label{eq:s2cd:iuhs98s} 
\E{ \left\| \grad_{\rsample\csample}^{\oidx\iidx} \right\|^2 } \leq 4 \hL \left( P(\ivar^{\oidx, \iidx}) - P(\ovar^*)\right) + 4 \hL \left( P(\ovar^{\oidx}) - P(\ovar^*) \right),
\end{equation}
where
\begin{equation}
\label{eq:s2cd:us886vs5} 
\hL := \frac{1}{n}\sum_{\csample = 1}^d \weightofnorm_\csample \overset{\eqref{eq:s2cd:sjs7tbjd}}{=} \frac{1}{n} \sum_{\csample = 1}^d \sum_{\rsample = 1}^n \omega_\rsample L_{\rsample\csample}.
\end{equation} 
\end{lemma}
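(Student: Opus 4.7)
Since $g_{ij}^{kt} = p_j^{-1} G_{ij}^{kt} e_j$, we have $\|g_{ij}^{kt}\|^2 = p_j^{-2} (G_{ij}^{kt})^2$, and the tower rule gives $\E{\|g_{ij}^{kt}\|^2} = \sum_j p_j^{-1} \Ei{(G_{ij}^{kt})^2}$. The plan is to (i) bound $\Ei{(G_{ij}^{kt})^2}$ in a way that produces a clean sum of squared partial-derivative differences evaluated at $y^{k,t}$ and $w^k$; (ii) reinject the probability weights $p_j$ and $q_{ij}$ and observe that their particular form in \eqref{eq:s2cd:sjs7tbjd} makes $p_j^{-1} v_j / n^2$ collapse to $\hat L/n$; and (iii) translate the resulting weighted sum of $(\nabla_j f_i(w) - \nabla_j f_i(w^*))^2$ into a bound involving $P(w) - P(w^*)$ using the coordinate-wise smoothness Assumption~\eqref{eq:s2cd:sjs7shd}. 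The S2GD proof, Lemma~3 from the previous chapter, is the template, but adapted to the coordinate-sampling setting.

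\textbf{Variance decomposition (step i).} Using $\nabla P(w^*)=0$, I split
$$G_{ij}^{kt} = \underbrace{\tfrac{1}{nq_{ij}}\bigl(\nabla_j f_i(y^{k,t}) - \nabla_j f_i(w^*)\bigr)}_{A_{ij}} + \underbrace{\Bigl[\nabla_j P(w^k) - \tfrac{1}{nq_{ij}}\bigl(\nabla_j f_i(w^k) - \nabla_j f_i(w^*)\bigr)\Bigr]}_{B_{ij}}.$$
A direct computation (as in \eqref{eq:s2cd:s98j6dd}) gives $\Ei{B_{ij}} = 0$, so $\Ei{B_{ij}^2} \leq \Ei{\frac{1}{(nq_{ij})^2}(\nabla_j f_i(w^k) - \nabla_j f_i(w^*))^2}$ (variance is dominated by the second moment of the centered quantity). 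Combined with $(a+b)^2 \leq 2a^2 + 2b^2$, this yields
$$\Ei{(G_{ij}^{kt})^2} \leq \tfrac{2}{n^2}\sum_i \tfrac{1}{q_{ij}}\bigl(\nabla_j f_i(y^{k,t}) - \nabla_j f_i(w^*)\bigr)^2 + \tfrac{2}{n^2}\sum_i \tfrac{1}{q_{ij}}\bigl(\nabla_j f_i(w^k) - \nabla_j f_i(w^*)\bigr)^2.$$
Then substituting $p_j = v_j/\sum_j v_j = v_j/(n\hat L)$ and $1/q_{ij} = v_j/(\omega_i L_{ij})$ collapses $p_j^{-1}\cdot\tfrac{2}{n^2 q_{ij}}$ to $\tfrac{2\hat L}{n\omega_i L_{ij}}$, giving a double sum with the weights $1/(\omega_i L_{ij})$.

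\textbf{Per-function coordinate-Lipschitz bound (step iii, main obstacle).} For each $i$, define $\phi_i(w) \eqdef f_i(w) - f_i(w^*) - \langle \nabla f_i(w^*), w - w^*\rangle \geq 0$. Since $\phi_i$ shares second-order behavior with $f_i$, it inherits \eqref{eq:s2cd:sjs7shd}; applied with the step $h = -\nabla_j \phi_i(w)/L_{ij}$ (for each $j$ with $L_{ij} > 0$) and using $\phi_i \geq 0$, this gives the workhorse inequality
$$\bigl(\nabla_j f_i(w) - \nabla_j f_i(w^*)\bigr)^2 \leq 2 L_{ij}\, \phi_i(w).$$
Dividing by $\omega_i L_{ij}$ and summing over the exactly $\omega_i$ indices $j$ with $L_{ij}>0$ produces $\sum_j \frac{1}{\omega_i L_{ij}}(\nabla_j f_i(w) - \nabla_j f_i(w^*))^2 \leq 2\phi_i(w)$; the factor $1/\omega_i$ is precisely what is needed for the ``active coordinates'' count to cancel. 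Summing over $i$ and using $\sum_i \nabla f_i(w^*) = 0$ so that $\sum_i \phi_i(w) = n(P(w)-P(w^*))$ gives
$$\sum_{i,j} \tfrac{1}{\omega_i L_{ij}}\bigl(\nabla_j f_i(w) - \nabla_j f_i(w^*)\bigr)^2 \leq 2n\bigl(P(w) - P(w^*)\bigr).$$

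\textbf{Conclusion.} Applying this bound twice (at $y^{k,t}$ and at $w^k$) to the expression obtained in step (ii), and absorbing the factor $\tfrac{2\hat L}{n}\cdot 2n = 4\hat L$, yields exactly \eqref{eq:s2cd:iuhs98s}. I expect the main delicate point to be step (iii): identifying the correct per-function inequality whose weighting by $1/(\omega_i L_{ij})$ sums cleanly across coordinates, because the factor $1/\omega_i$ (reflecting the sparsity of $f_i$) is what couples the choice of sampling probabilities $p_j, q_{ij}$ in \eqref{eq:s2cd:sjs7tbjd} to the clean bound $2n(P(w)-P(w^*))$.
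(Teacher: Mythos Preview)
Your proposal is correct and follows essentially the same approach as the paper: the same decomposition of $G_{ij}^{kt}$ via $\pm \tfrac{1}{nq_{ij}}\nabla_j f_i(w^*)$, the same $(a+b)^2\le 2a^2+2b^2$ step, the same coordinate co-coercivity inequality $(\nabla_j f_i(w)-\nabla_j f_i(w^*))^2\le 2L_{ij}\phi_i(w)$ (which the paper isolates as a separate lemma), and the same collapse of the probability weights $p_j^{-1}/(n^2 q_{ij})$ to $\hat L/(n\omega_i L_{ij})$. The only difference is that the paper keeps the term $-2(\nabla_j P(w^k)-\nabla_j P(w^*))^2$ and uses strong convexity on it to first establish the sharper bound $4\hat L(P(y^{k,t})-P(w^*)) + 4(\hat L - \tfrac{\mu}{\max_s p_s})(P(w^k)-P(w^*))$ before dropping the negative piece, whereas you discard it immediately via the variance $\le$ second-moment inequality; for the lemma as stated, your shortcut is perfectly sufficient.
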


The proof of this lemma can be found in Section~\ref{subsec:s2cd:key_lemma}.

Note that as $\ivar^{\oidx,\iidx} \to \ovar^*$ and $\ovar^{\oidx} \to \ovar^*$, the bound \eqref{eq:s2cd:iuhs98s}  decreases to zero. This  is the main feature of modern fast stochastic gradient methods: the squared norm of the stochastic gradient estimate progressively diminishes to zero, as the method progresses, in expectation. Therefore it is possible to use constant step-size in this type of algorithms.
Note that the standard SGD method does not have this property: indeed, there is no reason for $\Ei{ \|\nabla f_i(\ovar) \|^2 }$ to be small even if $\ovar=\ovar^*$.
 
We are now ready to state the main result of this chapter.

\begin{theorem}[Complexity of S2CD]
\label{thm:s2cd:S2CD}
If $0< h  < 1/ (2\hL)$, then for all $\oidx \geq 0$ we have:
\footnote{It is possible to modify the argument slightly and replace the term $\hat{L}$ appearing in the {\em numerator} by $\hat{L}-\frac{\mu}{\max_{s} p_s}$. However, as this does not bring any significant improvements, we decided to present the result in this simplified form.} 
\begin{equation} 
\label{eq:s2cd:main} 
\E{ P(\ovar^{\oidx+1}) - P(\ovar^*)} \leq \left(\frac{(1 - \mu h )^m}{(1 - (1 - \mu h )^m) (1 - 2\hL h ) } + \frac{2\hL h }{1 - 2\hL h }\right) \E{ P(\ovar^{\oidx}) - P(\ovar^*) }.
\end{equation}
\end{theorem}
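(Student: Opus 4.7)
The strategy will be to mimic very closely the proof of Theorem~\ref{thm:s2gd:expalpha}, but using Lemma~\ref{lem:s2cd:main} as the variance bound (with $\hat L$ in place of $L$) together with the unbiasedness identity $\E{g_{ij}^{k,t}}=\nabla P(y^{k,t})$ established in Section~\ref{sec:s2cd:algo}. The only structural difference from S2GD is that the second coefficient in the variance bound of Lemma~\ref{lem:s2cd:main} is $4\hat L$ rather than $4(L-\mu)$; this is exactly what produces the numerator $2\hat L h$ (as opposed to $2(L-\mu)h$) in the second fraction of the claimed contraction factor.

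The first step will be to analyze one inner iteration by studying the expected squared distance to the optimizer. Conditioning on the history up to $y^{k,t-1}$ and using $\ivar^{k,t}=\ivar^{k,t-1}-h g_{ij}^{k,t}$, I would expand
\begin{equation*}
\E{\|\ivar^{k,t}-\ovar^*\|^2}=\|\ivar^{k,t-1}-\ovar^*\|^2-2h\ip{\nabla P(\ivar^{k,t-1})}{\ivar^{k,t-1}-\ovar^*}+h^2\E{\|g_{ij}^{k,t}\|^2},
\end{equation*}
using unbiasedness for the middle term. I would then bound the last term via Lemma~\ref{lem:s2cd:main} and bound the inner product below via strong convexity \eqref{s2cd:SVRGstrcvx} in the form $\ip{\nabla P(\ivar^{k,t-1})}{\ivar^{k,t-1}-\ovar^*}\ge P(\ivar^{k,t-1})-P(\ovar^*)+\tfrac{\mu}{2}\|\ivar^{k,t-1}-\ovar^*\|^2$. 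Collecting terms yields the per-inner-step recursion
\begin{align*}
\E{\|\ivar^{k,t}-\ovar^*\|^2}&+2h(1-2\hat L h)\E{P(\ivar^{k,t-1})-P(\ovar^*)}\\
&\le(1-\mu h)\E{\|\ivar^{k,t-1}-\ovar^*\|^2}+4\hat L h^2\E{P(\ovar^k)-P(\ovar^*)},
\end{align*}
which is the analogue of inequality~\eqref{eq:s2gd:shs7shs} with the correct constants.

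Next, I would telescope across one outer epoch. Multiplying the $t$-th inequality by $(1-\mu h)^{m-t}$ and summing for $t=1,\dots,m$, the $\|\ivar^{k,t}-\ovar^*\|^2$ terms telescope on the left, leaving $\E{\|\ivar^{k,m}-\ovar^*\|^2}$, while the right-hand side collapses to $(1-\mu h)^m\E{\|\ovar^k-\ovar^*\|^2}+4\beta\hat L h^2\E{P(\ovar^k)-P(\ovar^*)}$, where $\beta$ is defined in \eqref{def:s2cd:beta}. The crucial identity I would use to rewrite the second term on the left is the epoch-aggregation relation (exact analogue of~\eqref{eq:s2gd:uberLemma}):
\begin{equation*}
\E{P(\ovar^{k+1})}=\frac{1}{\beta}\sum_{t=1}^m (1-\mu h)^{m-t}\E{P(\ivar^{k,t-1})},
\end{equation*}
which follows from the definition of $t^k$. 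To handle $(1-\mu h)^m\|\ovar^k-\ovar^*\|^2$ on the right I would invoke strong convexity once more in the form $\tfrac{\mu}{2}\|\ovar^k-\ovar^*\|^2\le P(\ovar^k)-P(\ovar^*)$.

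Putting everything together gives
\begin{equation*}
\E{\|\ivar^{k,m}-\ovar^*\|^2}+2\beta h(1-2\hat L h)\E{P(\ovar^{k+1})-P(\ovar^*)}\le\Bigl(\tfrac{2(1-\mu h)^m}{\mu}+4\beta\hat L h^2\Bigr)\E{P(\ovar^k)-P(\ovar^*)}.
\end{equation*}
Dropping the nonnegative first term on the left, dividing through by $2\beta h(1-2\hat L h)$ (which is positive by assumption $h<1/(2\hat L)$), and simplifying $\tfrac{1}{\beta\mu h}=\tfrac{1}{1-(1-\mu h)^m}$ (from~\eqref{def:s2cd:beta}) yields precisely the contraction factor in~\eqref{eq:s2cd:main}. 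The main obstacles will be bookkeeping---correctly tracking the $\hat L$ versus $L$ substitutions and carefully justifying that $g_{ij}^{k,t}$ is unbiased for $\nabla P(\ivar^{k,t-1})$ despite the two-stage (coordinate then function) sampling---rather than any new analytical idea.
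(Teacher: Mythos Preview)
Your proposal is correct and follows essentially the same approach as the paper. The paper packages your per-inner-step inequality as a separate ``Recursion'' lemma (Lemma~\ref{s2cd:l-gah0}), proved exactly as you describe, and then telescopes by multiplying the $t$-th inequality by $(1-\mu h)^{m-t}$, invokes the epoch-aggregation identity \eqref{s2cd:a-derr}, and finishes with strong convexity and the simplification $\beta\mu h = 1-(1-\mu h)^m$, just as you outline.
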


By analyzing the above result (one can follow the steps in \cite[Theorem 6]{S2GD}), we get the following useful corollary:
\begin{corollary}
\label{cor:s2cd:result}
Fix the number of epochs $\oidx \geq 1$, error tolerance $\epsilon \in(0,1)$ and let $\Delta \eqdef \epsilon^{1/\oidx}$ and 
\begin{align}
\label{eq:s2cd:defhatkappa}
\hat{\kappa}\eqdef \hL/\mu \overset{\eqref{eq:s2cd:us886vs5}}{=} \frac{1}{\mu n}  \sum_{\csample = 1}^d \sum_{\rsample = 1}^n \omega_\rsample L_{\rsample\csample}.
\end{align}
If we run Algorithm~\ref{alg:s2cd:S2CD} with stepsize $h$ and $m$ set as
\begin{equation}
\label{eq:s2cd:sjs8s} 
h = \frac{\Delta}{(4   + 2\Delta)\hL}, \qquad m \geq \left( \frac{4}{\Delta}  + 2\right) \log \left( \frac{2}{\Delta} + 2 \right) \hat{\kappa},
\end{equation}
then $ \E{ P(\ovar^{\oidx}) - P(\ovar^{*}) } \leq \epsilon( P(\ovar^{0}) - P(\ovar^{*}))$. In particular, for $\oidx = \lceil \log(1/\epsilon)\rceil$ we have $\tfrac{1}{\Delta} \leq \exp(1)$, and we can pick
\begin{equation}
\label{eq:s2cd:jd98ydO}
k = \lceil \log(1/\epsilon)\rceil, \qquad h = \frac{\Delta}{(4+2\Delta) \hL} \approx\frac{1}{(4 \exp(1) +2) \hL} \approx \frac{1}{12.87 \hL},\qquad   m\geq 26\hat{\kappa}.
\end{equation}
\end{corollary}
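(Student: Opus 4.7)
The plan is to iterate the per-epoch contraction from Theorem~\ref{thm:s2cd:S2CD} over $\oidx$ outer epochs and then calibrate the stepsize $h$ and inner-loop length $m$ so that the resulting per-epoch factor is at most $\Delta = \epsilon^{1/\oidx}$. Taking total expectation in \eqref{eq:s2cd:main} and unrolling gives
\[
\E{P(\ovar^{\oidx})-P(\ovar^*)} \leq c^{\oidx}\bigl(P(\ovar^{0})-P(\ovar^*)\bigr),
\]
where $c = c_1 + c_2$ with
\[
c_1 \eqdef \frac{(1-\mu h)^m}{(1-(1-\mu h)^m)(1-2\hL h)}, \qquad c_2 \eqdef \frac{2\hL h}{1-2\hL h}.
\]
It therefore suffices to force $c \leq \Delta$, and my plan is to do so by splitting the budget evenly, i.e., enforcing $c_1 \leq \Delta/2$ and $c_2 \leq \Delta/2$ separately.

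Solving $c_2 = \Delta/2$ explicitly gives $4\hL h = \Delta(1-2\hL h)$, i.e., $h = \Delta/((4+2\Delta)\hL)$, which is exactly the prescribed stepsize; a convenient byproduct is $1-2\hL h = 2/(2+\Delta)$. With this $h$ fixed, writing $u \eqdef (1-\mu h)^m$ and cross-multiplying, the condition $c_1 \leq \Delta/2$ rearranges to $u \leq \Delta/(2+2\Delta)$. Using the elementary inequality $\log(1/(1-x)) \geq x$ on $[0,1)$, a sufficient condition is $m\mu h \geq \log(2/\Delta+2)$, which after substituting $\mu h = \Delta/((4+2\Delta)\hat\kappa)$ and rearranging yields precisely the lower bound $m \geq (4/\Delta+2)\log(2/\Delta+2)\,\hat\kappa$ claimed in \eqref{eq:s2cd:sjs8s}.

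The special case $\oidx = \lceil\log(1/\epsilon)\rceil$ then follows by a numerical check: since $\log(1/\Delta) = \log(1/\epsilon)/\oidx \leq 1$, we have $1/\Delta \leq \exp(1)$, and plugging the worst case $\Delta = 1/\exp(1)$ into the formulas for $h$ and $m$ produces the constants $1/((4\exp(1)+2)\hL) \approx 1/(12.87\,\hL)$ and $m \geq 26\,\hat\kappa$ displayed in \eqref{eq:s2cd:jd98ydO}. There is no genuine obstacle; the entire argument is parameter calibration on top of Theorem~\ref{thm:s2cd:S2CD}. The only design choice worth flagging is the balanced split $c_1, c_2 \leq \Delta/2$, which is what makes both $h$ and the lower bound on $m$ come out in clean closed form --- asymmetric splits would yield qualitatively similar but messier constants.
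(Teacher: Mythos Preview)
Your proof is correct and follows essentially the same approach the paper points to: it defers to the proof of Theorem~\ref{thm:s2gd:main2}, where the contraction factor is split as $c = c_1 + c_2$, the stepsize $h$ is fixed by solving $c_2 = \Delta/2$, and the lower bound on $m$ is obtained from $c_1 \leq \Delta/2$ via the inequality $\log(1/(1-x)) \geq x$. Your calibration of $h$, $m$, and the numerical constants in the $\oidx = \lceil \log(1/\epsilon)\rceil$ case all check out.
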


\begin{remark}
Note that in order to define $h$ and $m$ as in~\eqref{eq:s2cd:sjs8s}, we need to fix the target accuracy $\epsilon$ and the number of iterations $k$ beforehand.
\end{remark}

If we run S2CD with the parameters set as in \eqref{eq:s2cd:jd98ydO}, then in each epoch  the gradient of $f$ is evaluated once (this is equivalent to $n$ evaluations of $\nabla f_i$), and the partial derivative of some function $f_i$ is evaluated  $2m\approx 52\hat{\kappa}=O(\hat{\kappa})$ times. If we let $C_{grad}$ be the average cost of evaluating the gradient $\nabla f_i$ and $C_{pd}$ be the average cost of evaluating the partial derivative $\nabla_j f_i$, then the total work of S2CD can be written as
\begin{equation}
\label{eq:s2cd:complexityc}
 (n {\cal C}_{grad} + m {\cal C}_{pd}) k  \overset{\eqref{eq:s2cd:jd98ydO}}{=}  \mathcal{O} \left( (n  \mathcal{C}_{grad} +  \hat{\kappa} \mathcal{C}_{pd}) \log \left(\frac{1}{\epsilon}\right) \right),
\end{equation}

The complexity results of methods such as S2GD/SVRG \cite{S2GD, SVRG, proxSVRG} and SAG/SAGA \cite{SAGjournal2013, saga}---in a similar but not identical setup to ours (these papers assume $f_i$ to be $L_i$-smooth)---can be written in a similar form: 
\begin{equation}
\label{eq:s2cd:complexity} 
\mathcal{O}\left( (n  \mathcal{C}_{grad} +  \kappa \mathcal{C}_{grad}) \log \left( \frac{1}{\epsilon}\right)\right),
\end{equation}
where $\kappa = L_{avg}/\mu$ with $L_{avg}\eqdef \tfrac{1}{n}\sum_i L_i$ \cite{proxSVRG} (or slightly weaker where $\kappa=L_{max}/\mu$ with $L_{max}\eqdef \max_i L_i$ \cite{SAGjournal2013, SVRG, S2GD, saga}). The difference between our result \eqref{eq:s2cd:complexityc} and existing results \eqref{eq:s2cd:complexity} is in the term  $\hat{\kappa}{\cal C}_{pd}$ -- previous results have $\kappa {\cal C}_{grad}$ in that place. This difference constitutes a trade-off:  while $\hat{\kappa}\geq \kappa$ (we comment on this below), we clearly have ${\cal C}_{pd}\leq {\cal C}_{grad}$. The comparison of the quantities $\kappa {\cal C}_{grad}$ and $\hat{\kappa}{\cal C}_{pd}$ is in general not straightforward and problem dependent.

Let us now compare the condition numbers $\hat{\kappa}$ and $\kappa=L_{avg}/\mu$. It can be shown that (see \cite{RichtarikTakacIteration}) \[L_i \leq \sum_{j=1}^d L_{ij}\] and, moreover, this inequality can be tight. Since $\omega_i\geq 1$ for all $i$, we have
\[\hat{\kappa} = \frac{\hL}{\mu} \overset{\eqref{eq:s2cd:us886vs5}}{=} \frac{1}{\mu n}  \sum_{j = 1}^d \sum_{i = 1}^n \omega_i L_{ij} 
\geq \frac{1}{\mu n} \sum_{i=1}^n   \sum_{j=1}^d L_{ij}  \geq \frac{1}{\mu n} \sum_{i=1}^n L_i  = \frac{L_{avg}}{\mu} = \kappa.\]

Let us denote
$$
\underline \omega=\min_i \omega_i,\enspace \bar \omega=\max_i \omega_i.
$$
That is, $\underline \omega$ and $\bar \omega$ are respectively the smallest and largest number of coordinates that a subfunction depends on. In the case when
\begin{align}
\label{a:s2cd:Lisum}
L_i = \sum_{j=1}^d L_{ij},
\end{align}
it is easy to see that
$$ \underline \omega \kappa \leq \hat{\kappa} \leq \bar \omega \kappa. $$
In addition, when~\eqref{a:s2cd:Lisum} holds,  $\hat{\kappa}$   is smaller than $\kappa_{max}\eqdef L_{max}/\mu$ if 
$$ \bar \omega \sum_{i=1}^n L_i \leq {n}{} \max_i L_i. $$

\section{Proof of Lemma~\ref{lem:s2cd:main} } 
\label{subsec:s2cd:key_lemma}

We will prove the following stronger inequality:
\begin{align}
\label{s2cd:a-EGij2bounds2gd}
\E{ \left\| \grad_{\rsample\csample}^{\oidx\iidx} \right\|^2 } 
 \leq 4 \hL \left( P(\ivar^{\oidx, \iidx}) - P(\ovar^*)\right) + 4 \left( \hL - \frac{\mu}{\max_s p_s} \right) \left( P(\ovar^{\oidx}) - P(\ovar^*) \right).
\end{align}
Lemma~\ref{lem:s2cd:main} follows by dropping the negative term.

\paragraph*{STEP 1.} We first break down the left hand side of \eqref{s2cd:a-EGij2bounds2gd} into $d$ terms each of which we will bound separately. By first taking expectation conditioned on $\csample$ and then taking the full expectation, we can write:
\begin{eqnarray}
\E{ \left\| \grad_{\rsample\csample}^{\oidx\iidx} \right\|^2 } &\overset{\eqref{eq:s2cd:g_kl}}{=} & \E{ \Ei{ \|p_\csample^{-1} G_{\rsample\csample}^{\oidx\iidx} e_\csample \|^2 } } \notag \\
&  = & \E{ p_\csample^{-2}  \Ei{ \left(G_{\rsample\csample}^{\oidx\iidx} \right)^2 } } \;\; =\;\; \sum_{s=1}^d p_s^{-1} \Ei{ \left( G_{\rsample s}^{\oidx \iidx} \right)^2 }.
\label{eq:s2cd:98gsjs9t87}
\end{eqnarray}

\paragraph*{STEP 2.} We now further break each of these $d$ terms into three pieces. That is, for each $j=1,\dots,d$ we have:

\begin{align*}
\Ei{ \left( G_{\rsample\csample}^{\oidx\iidx} \right)^2 } & \overset{\eqref{eq:s2cd:0j9j0s9s}}{=} \Ei{ \left( \nabla_{\csample} P(\ovar^{\oidx}) + \frac{ \nabla_{\csample} f_{\rsample}(\ivar^{\oidx, \iidx}) - \nabla_{\csample} f_{\rsample}(\ovar^{\oidx})}{n \weightinG_{\rsample\csample}}  + \frac{ \nabla_{\csample} f_{\rsample}(\ovar^*) - \nabla_{\csample} f_{\rsample}(\ovar^*) }{n \weightinG_{\rsample\csample}} \right)^2 } \notag \\
&\hspace{-20pt}= \Ei{ \left( \frac{\nabla_{\csample} f_{\rsample}(\ivar^{\oidx, \iidx}) - \nabla_{\csample} f_{\rsample}(\ovar^*)}{n \weightinG_{\rsample\csample}}  + \nabla_{\csample} P(\ovar^{\oidx}) - \frac{\nabla_{\csample} f_{\rsample}(\ovar^{\oidx}) - \nabla_{\csample} f_{\rsample}(\ovar^*)}{n \weightinG_{\rsample\csample}} \right)^2 } \notag \\
&\hspace{-20pt}\leq 2 \Ei{ \left( \frac{ \nabla_{\csample} f_{\rsample}(\ivar^{\oidx, \iidx}) - \nabla_{\csample} f_{\rsample}(\ovar^*)}{n \weightinG_{\rsample\csample}} \right)^2 }  + 2 \Ei{ \left( \nabla_{\csample} P(\ovar^{\oidx}) - \frac{\nabla_{\csample} f_{\rsample}(\ovar^{\oidx}) - \nabla_{\csample} f_{\rsample}(\ovar^*) }{n \weightinG_{\rsample\csample}} \right)^2 } \notag \\
&\hspace{-20pt}= 2 \Ei{ \left( \frac{\nabla_{\csample} f_{\rsample}(\ivar^{\oidx, \iidx}) - \nabla_{\csample} f_{\rsample}(\ovar^*)}{n \weightinG_{\rsample\csample}} \right)^2 } \\
&\hspace{-20pt}\qquad + 2\Ei{ \left( \frac{\nabla_{\csample} f_{\rsample}(\ovar^{\oidx}) - \nabla_{\csample} f_{\rsample}(w^*)}{n \weightinG_{\rsample\csample}}  - \left( \nabla_{\csample} P(\ovar^{\oidx}) - \nabla_{\csample} P(\ovar^*) \right) \right)^2 } \notag \\
&\hspace{-20pt}= 2 \Ei{ \left( \frac{\nabla_{\csample} f_{\rsample}(\ivar^{\oidx, \iidx}) - \nabla_{\csample} f_{\rsample}(\ovar^*) }{n \weightinG_{\rsample\csample}} \right)^2 } + 2  \Ei{ \left(\frac{\nabla_{\csample} f_{\rsample}(\ovar^{\oidx}) - \nabla_{\csample} f_{\rsample}(\ovar^*) }{n \weightinG_{\rsample\csample}} \right)^2 } \\
&\hspace{-20pt}\qquad - 2  (\nabla_j P(\ovar^{\oidx}) - \nabla_{j} P(\ovar^*))^2, \numberthis \label{s2cd:eq-dzfdfd}
\end{align*}
where the last equality follows from the fact that 
$$ \Ei{ \frac{\nabla_{\csample} f_{\rsample}(\ovar^{\oidx}) - \nabla_{\csample} f_{\rsample}(\ovar^*) }{n \weightinG_{\rsample\csample}} } = \sum_{i=1}^n q_{\rsample\csample}\frac{\nabla_{\csample} f_{\rsample}(\ovar^{\oidx}) - \nabla_{\csample} f_{\rsample}(\ovar^*) }{n \weightinG_{\rsample\csample}} = \nabla_j P(\ovar^{\oidx}) - \nabla_{j} P(\ovar^*). $$

\paragraph*{STEP 3.} In this step we bound the first two terms in the right hand side of inequality \eqref{s2cd:eq-dzfdfd}. It will now be useful to introduce the following notation:
\begin{equation}
\label{eq:s2cd:Qj}
Q_\csample \eqdef \{ i : L_{\rsample \csample} \neq 0 \}, \qquad j=1,\dots,d,
\end{equation}
and 
\[
1_{\rsample \csample}  \eqdef \begin{cases}  1 & \mathrm{~if~} L_{ij}\neq 0 \\
 0 & \mathrm{~otherwise}
 \end{cases} , \qquad \rsample =1,\dots, n, \quad j=1,\dots,d.
\]

Let us fist examine the first term in the right-hand side of~\eqref{s2cd:eq-dzfdfd}. Using the coordinate co-coercivity lemma (Lemma~\ref{lemma:s2cd:coerc}) with $y=\ovar^*$, we obtain the inequality
\begin{align}
\label{s2cd:a-dfzeff}
\left( \nabla_{\csample} f_{\rsample}(\ovar) - \nabla_{\csample} f_{\rsample}(\ovar^*) \right)^2 \leq 2L_{ij} \left( f_{\rsample}(\ovar) 
- f_{\rsample}(\ovar^*) - \ip{ \nabla f_\rsample(\ovar_*)}{\ovar - \ovar^*} \right),
\end{align}
using which we get the bound:
\begin{eqnarray}
& & 2 \sum_{s = 1}^d p_s^{-1} \Ei{ \left( \frac{1}{n \weightinG_{\rsample, s}} \left( \nabla_{s} f_{\rsample}(\ivar^{\oidx, \iidx}) - \nabla_{s} f_{\rsample}(\ovar^*) \right)\right)^2 } \notag \\
&= & 2 \sum_{s = 1}^d p_s^{-1} \sum_{\rsample \in Q_s} \frac{1}{n^2 \weightinG_{\rsample, s}} ( \nabla_{s} f_{\rsample}(\ivar^{\oidx, \iidx}) - \nabla_{s} f_{\rsample}(\ovar^*) )^2 \notag \\
& \overset{\eqref{s2cd:a-dfzeff}}{\leq} & 4 \sum_{s = 1}^d p_s^{-1}\sum_{\rsample \in Q_s} \frac{L_{\rsample s}}{n^2 \weightinG_{\rsample, s}} \left( f_{\rsample}(\ivar^{\oidx, \iidx}) 
- f_{\rsample}(\ovar^*) - \ip{ \nabla f_i(\ovar^*)}{\ivar^{\oidx, \iidx} - \ovar^*} \right) \notag \\
& \overset{\eqref{eq:s2cd:Qj}}{=} & 4 \sum_{\rsample = 1}^n \sum_{s = 1}^ d p_s^{-1} 1_{\rsample s} \frac{ \weightofnorm_s}{n^2 \omega_\rsample}\left( f_{\rsample}(\ivar^{\oidx, \iidx}) - f_{\rsample}(\ovar^*) - \ip{ \nabla f_i(\ovar^*)}{\ivar^{\oidx, \iidx} - \ovar^*} \right).
\label{eq:s2cd:iud7dke}
\end{eqnarray}

Note that by~\eqref{eq:s2cd:sjs7tbjd} and \eqref{eq:s2cd:us886vs5}, we have that for all $s =1,2,\dots,d$,
$$ p_s^ {-1}\weightofnorm_{s}=n\hat L.$$
Continuing from \eqref{eq:s2cd:iud7dke}, we can therefore further write
\begin{eqnarray}
& & 2 \sum_{s = 1}^d p_s^{-1} \Ei{ \left( \frac{1}{n \weightinG_{\rsample\csample}} \left( \nabla_{\csample} f_{\rsample}(\ivar^{\oidx, \iidx}) - \nabla_{\csample} f_{\rsample}(\ovar^*) \right) \right)^2 } \notag \\
&\leq & 4 \sum_{\rsample = 1}^n \sum_{s = 1}^ d 1_{\rsample s} \frac{\hL}{n \omega_\rsample} \left( f_{\rsample}(\ivar^{\oidx, \iidx}) - f_{\rsample}(\ovar^*) - \ip{ \nabla f_i(\ovar^*)}{\ivar^{\oidx, \iidx} - \ovar^*} \right) \notag \\
&= & \frac{4 \hL}{n}\sum_{i = 1}^n \left( f_{\rsample}(\ivar^{\oidx, \iidx}) - f_{\rsample}(\ovar^*) - \ip{ \nabla f_i(\ovar^*)}{\ivar^{\oidx, \iidx} - \ovar^*} \right) \notag \\
&= & 4 \hL ( P(\ivar^{\oidx, \iidx}) - P(\ovar^*) ).
\label{s2cd:eq-dfdfs1}
\end{eqnarray}

The same reasoning applies to the second term on the right-hand side of the inequality~\eqref{s2cd:eq-dzfdfd} and we have:
\begin{align}
\label{s2cd:a-derq2}
& 2 \sum_{s = 1}^d p_s^{-1} \Ei{ \left( \frac{1}{n \weightinG_{\rsample\csample}} \left( \nabla_{\csample} f_{\rsample}(\ovar^{\oidx}) - \nabla_{\csample} f_{\rsample}(\ovar^*) \right) \right)^2 } \leq 4 \hL (P(\ovar^{\oidx}) - P(\ovar^*)).
\end{align}

\paragraph*{STEP 4.}

Next we bound the third term on the right-hand side of the inequality~\eqref{s2cd:eq-dzfdfd}. First note that since $P$ is $\mu$-strongly convex (see \eqref{s2cd:SVRGstrcvx}), for all $\ovar \in \R^d$ we have:
\begin{align}
\label{s2cd:a-xxsna}
\ip{\nabla P(\ovar)}{\ovar - \ovar^*} \geq  P(\ovar) - P(\ovar^*) + \frac{\mu}{2} \| \ovar - \ovar^* \|^2.
\end{align}
We can now write:
\begin{eqnarray}
2 \sum_{s = 1}^d p_s^{-1} (\nabla_{s} P(\ovar^{\oidx})-\nabla_{s} P(\ovar^*))^2  
&\geq & \frac{2}{\max_s p_s} \sum_{j = 1}^d (\nabla_{\csample} P(\ovar^{\oidx}) - \nabla_{\csample} P(\ovar^*))^2\notag  \\
&\overset{\eqref{s2cd:a-xxsna}}{ \geq} & \frac{4 \mu }{\max_s p_s}(P(\ovar^{\oidx}) - P(\ovar^*)).\label{s2cd:eq-eeff}
\end{eqnarray}

\paragraph*{STEP 5.} We conclude by combining~\eqref{eq:s2cd:98gsjs9t87}, \eqref{s2cd:eq-dzfdfd}, \eqref{s2cd:eq-dfdfs1}, \eqref{s2cd:a-derq2} and~\eqref{s2cd:eq-eeff}.

\section{Proof of the Main Result}
\label{sec:s2cd:proofs}

In this section we provide the proof of our main result.  In order to present the proof in an organize fashion,  we first establish two technical lemmas.

\subsection{Coordinate co-coercivity}

It is a well known and widely used fact (see, e.g. \cite{nesterov2004convex}) that for a continuously differentiable function  $\phi:\R^d\to \R$ and constant $L_\phi>0$, the following two conditions are equivalent:
\[\phi(x) \leq \phi(y) + \ip{ \nabla \phi(y)}{x-y} + \frac{L_\phi}{2}\|x-y\|^2, \quad \forall x,y\in \R^d\]
and
\[\|\nabla \phi(x) - \nabla \phi(y)\|^2 \leq 2L_{\phi} (\phi(x)-\phi(y) - \ip{\nabla \phi(y)}{x-y}), \qquad \forall x,y\in \R^d.\]
The second condition is often referred to by the name co-coercivity.  Note that our assumption \eqref{eq:s2cd:sjs7shd} on $f_i$ is similar to the first inequality. In our first lemma we establish a coordinate-based co-coercivity result which applies to functions $f_i$ satisfying  \eqref{eq:s2cd:sjs7shd}. 

\begin{lemma}[Coordinate co-coercivity]
\label{lemma:s2cd:coerc}
For all $\ovar, \ivar \in \R^d$ and $\rsample=1,\dots, n$, $\csample=1,\dots,d$, we have:
\begin{align}
\label{s2cd:a-dfzeff1}
\left(\nabla_{\csample} f_{\rsample}(\ovar) - \nabla_{\csample} f_{\rsample}(\ivar) \right)^2 \leq 2L_{ij} \left( f_{\rsample}(\ovar) 
- f_{\rsample}(\ivar) - \ip{ \nabla f_i(\ivar)}{\ovar - \ivar } \right).
\end{align}
\end{lemma}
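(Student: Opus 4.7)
The plan is to reduce the statement to the standard one-dimensional smoothness-plus-convexity argument via the familiar auxiliary-function trick. Define
\[
g(x) \eqdef f_i(x) - \ip{\nabla f_i(y)}{x}.
\]
Since $f_i$ is convex and differentiable and we subtract only a linear term, $g$ is convex, and $\nabla g(y) = \nabla f_i(y) - \nabla f_i(y) = 0$, so $y$ is a global minimizer of $g$. Crucially, $g$ inherits the coordinate-smoothness bound \eqref{eq:s2cd:sjs7shd} with the same constant $L_{ij}$, because the linear correction contributes nothing to the quadratic remainder in that inequality: for every $x\in\R^d$ and $h\in\R$,
\[
g(x + h e_j) \leq g(x) + \ip{\nabla g(x)}{h e_j} + \tfrac{L_{ij}}{2} h^2.
\]

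Next I would apply this coordinate-smoothness inequality for $g$ at the point $x = w$ with the specific choice $h = -\nabla_j g(w)/L_{ij}$, which is the value of $h$ that minimizes the right-hand quadratic. This yields the one-step descent bound
\[
g\!\left(w - \tfrac{1}{L_{ij}} \nabla_j g(w)\, e_j\right) \;\leq\; g(w) - \tfrac{1}{2L_{ij}} (\nabla_j g(w))^2.
\]
Since $y$ is a minimizer of $g$, the left-hand side is bounded below by $g(y)$, giving
\[
g(y) \;\leq\; g(w) - \tfrac{1}{2L_{ij}} (\nabla_j g(w))^2,
\]
i.e.\ $(\nabla_j g(w))^2 \leq 2 L_{ij}(g(w) - g(y))$.

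Finally I would unpack the definition of $g$: $\nabla_j g(w) = \nabla_j f_i(w) - \nabla_j f_i(y)$ and
\[
g(w) - g(y) = f_i(w) - f_i(y) - \ip{\nabla f_i(y)}{w - y},
\]
which upon substitution gives exactly \eqref{s2cd:a-dfzeff1}.

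There is no real obstacle: this is a direct adaptation of the textbook proof that smoothness plus convexity implies co-coercivity, with the only change being that we use the coordinate-smoothness bound \eqref{eq:s2cd:sjs7shd} along the single direction $e_j$ instead of the full-gradient smoothness bound. The mildly delicate point worth mentioning explicitly is that convexity of $f_i$ (which is part of our standing assumption) is what guarantees that the critical point $y$ of $g$ is actually a global minimizer, enabling the inequality $g(y)\le g(w - \frac{1}{L_{ij}}\nabla_j g(w)e_j)$.
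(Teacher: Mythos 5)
Your proposal is correct and follows essentially the same route as the paper's proof: the paper also introduces the auxiliary function $g_\rsample(\ovar) = f_{\rsample}(\ovar) - f_{\rsample}(\ivar) - \ip{\nabla f_\rsample(\ivar)}{\ovar - \ivar}$ (differing from yours only by an additive constant), notes that convexity makes $\ivar$ its global minimizer, and minimizes the coordinate-smoothness bound over $h$ to obtain $(\nabla_\csample g_\rsample(\ovar))^2 \le 2L_{ij}\,(g_\rsample(\ovar) - g_\rsample(\ivar))$. No gaps.
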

\begin{proof}
Fix any $i,j$ and $\ivar \in \R^d$.
Consider the function $g_i:\R^ d\rightarrow \R$ defined by:
\begin{equation}
\label{eq:s2cd:9sh98hs}
g_\rsample(\ovar) \eqdef f_{\rsample}(\ovar) 
- f_{\rsample}(\ivar) - \ip{ \nabla f_i(\ivar)}{\ovar - \ivar}.
\end{equation}
Then since $f_i$ is convex, we know that $g_i(\ovar)\geq 0$ for all $\ovar$, with $g_\rsample(\ivar)=0$. Hence, $\ivar$ minimizes $g_\rsample$. We also know that for any $\ovar \in \R^d$:
\begin{equation}
\label{eq:s2cd:09s98hs}
\nabla_j g_\rsample(\ovar)=\nabla_j f_\rsample(\ovar)-\nabla_j f_\rsample(\ivar).
\end{equation}
Since $f_i$ satisfies \eqref{eq:s2cd:sjs7shd}, so does $g_i$, and hence for all $\ovar \in \R^d$ and $h \in \R$, we have 
\[g_i(\ovar + he_j) \leq g_i(\ovar) + \ip{\nabla g_i(\ovar)}{he_j} + \frac{L_{ij}}{2}h^2.\]
Minimizing both sides in $h$, we obtain 
$$
g_i(\ivar) \leq \min_{h} g_i(\ovar + he_j) \leq g_i(\ovar)- \frac{1}{2L_{ij}} (\nabla_j g_i(\ovar))^2,
$$ which together with \eqref{eq:s2cd:9sh98hs} yield the result.
\end{proof}

\subsection{Recursion}

We now proceed to the final lemma, establishing a key recursion which ultimately yields the proof of the main theorem, which we present in Section~\ref{eq:s2cd:s98h9s8h}.

\begin{lemma}[Recursion]
\label{s2cd:l-gah0} The iterates of S2CD satisfy the following recursion:
\begin{equation}
\begin{split}
\frac{1}{2} \E{ \| \ivar^{\oidx, \iidx + 1} - \ovar^* \|^2 } &+ h(1 - 2h \hat L) \left( P(\ivar^{\oidx, \iidx}) - P(\ovar^*) \right) \\
&\leq (1-h\mu) \frac{1}{2} \| \ivar^{\oidx, \iidx} - \ovar^* \|^2 + 2h^2 \hat L \left( P(\ovar^{\oidx}) - P(\ovar^*) \right).
\end{split}
\end{equation}

\end{lemma}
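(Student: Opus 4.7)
The proof follows the classical variance-reduced stochastic gradient argument, with the standard ``expand the squared distance to the optimum'' trick, adapted to this coordinate-flavored setting. The two main ingredients are already in place: the unbiasedness identity $\E{g_{ij}^{kt}} = \nabla P(\ivar^{\oidx,\iidx})$ derived right after \eqref{eq:s2cd:s98j6dd}, and the second-moment bound from Lemma~\ref{lem:s2cd:main}.

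My plan is as follows. First, I would start from the update rule $\ivar^{\oidx,\iidx+1} = \ivar^{\oidx,\iidx} - h\, g_{ij}^{kt}$ and expand the squared Euclidean distance to $\ovar^*$ in the usual way:
\begin{equation*}
\|\ivar^{\oidx,\iidx+1} - \ovar^*\|^2 = \|\ivar^{\oidx,\iidx} - \ovar^*\|^2 - 2h\,\ip{g_{ij}^{kt}}{\ivar^{\oidx,\iidx} - \ovar^*} + h^2 \|g_{ij}^{kt}\|^2.
\end{equation*}
Taking expectation (conditional on the history up to $\ivar^{\oidx,\iidx}$) and using unbiasedness collapses the cross term into $\ip{\nabla P(\ivar^{\oidx,\iidx})}{\ivar^{\oidx,\iidx} - \ovar^*}$.

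Second, I would handle the two ``non-trivial'' terms separately. The cross term is bounded below via $\mu$-strong convexity of $P$ (inequality \eqref{s2cd:SVRGstrcvx} applied with $x = \ivar^{\oidx,\iidx}$, $y = \ovar^*$), yielding
\begin{equation*}
\ip{\nabla P(\ivar^{\oidx,\iidx})}{\ivar^{\oidx,\iidx} - \ovar^*} \;\geq\; P(\ivar^{\oidx,\iidx}) - P(\ovar^*) + \tfrac{\mu}{2}\|\ivar^{\oidx,\iidx} - \ovar^*\|^2.
\end{equation*}
The quadratic term $h^2 \E{\|g_{ij}^{kt}\|^2}$ is bounded above by applying Lemma~\ref{lem:s2cd:main} directly, which gives $4h^2 \hat L(P(\ivar^{\oidx,\iidx}) - P(\ovar^*)) + 4h^2 \hat L(P(\ovar^{\oidx}) - P(\ovar^*))$.

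Finally, I would plug both bounds back into the expanded identity, collect the $\|\ivar^{\oidx,\iidx} - \ovar^*\|^2$ terms into $(1-h\mu)\|\ivar^{\oidx,\iidx} - \ovar^*\|^2$ and the $P(\ivar^{\oidx,\iidx})-P(\ovar^*)$ terms into $-2h(1-2h\hat L)(P(\ivar^{\oidx,\iidx})-P(\ovar^*))$, then divide through by $2$ to match the stated form. There is essentially no obstacle here; the only ``non-routine'' ingredient is the variance bound of Lemma~\ref{lem:s2cd:main}, and all real work sits there rather than in this lemma. I expect the entire proof to fit in a few lines.
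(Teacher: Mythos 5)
Your proposal is correct and follows exactly the same route as the paper's proof: expand the squared distance under the update rule, use unbiasedness of $\grad_{\rsample\csample}^{\oidx\iidx}$ to reduce the cross term, lower-bound it via $\mu$-strong convexity \eqref{s2cd:a-xxsna}, upper-bound the second moment via Lemma~\ref{lem:s2cd:main}, and collect terms. The arithmetic you sketch (the factor $(1-h\mu)$ and the coefficients $-h(1-2h\hL)$ and $2h^2\hL$ after dividing by two) checks out.
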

\begin{proof}
\begin{align*}
\frac{1}{2} &\E{ \| \ivar^{\oidx, \iidx+1} - \ovar^* \|^2 } \overset{\eqref{eq:s2cd:87gsb8s9}}{=} \frac{1}{2} \E{ \left\| \ivar^{\oidx, \iidx} - h p_{\csample}^{-1} G_{\rsample \csample}^{\oidx \iidx} e_{\csample} - \ovar^* \right\|^2 } \\
&\qquad\qquad= \frac{1}{2} \| \ivar^{\oidx, \iidx} - \ovar^* \|^2 - \E{ \ip{h p_{\csample}^{-1} G_{\rsample \csample}^{\oidx \iidx} e_{\csample}}{\ivar^{\oidx, \iidx} - \ovar^*} } + \frac{1}{2} \E{ \left\| h p_{\csample}^{-1} G_{\rsample \csample}^{\oidx \iidx} e_{\csample} \right\|^2 } \\
&\qquad\qquad\overset{\eqref{s2cd:EGij}}{=} \frac{1}{2} \| \ivar^{\oidx, \iidx} - \ovar^* \|^2 - h \ip{\nabla P(\ivar^{\oidx, \iidx})}{\ivar^{\oidx, \iidx} - \ovar^*} + \frac{h^2}{2} \E{ \left\| \grad_{\rsample \csample}^{\oidx \iidx} \right\|^2 } \\
&\qquad\qquad\overset{\eqref{s2cd:a-xxsna}}{\leq} \frac{1}{2} \|\ivar^{\oidx, \iidx} - \ovar^* \|^2 - h \left( P(\ivar^{\oidx, \iidx}) - P(\ovar^*) + \frac{\mu}{2} \left\| \ivar^{\oidx, \iidx} - \ovar^* \right\|^2 \right) + \frac{h^2}{2} \E{ \left\| g_{\rsample \csample}^{\oidx \iidx} \right\|^2 } \\
&\qquad\qquad\overset{\eqref{eq:s2cd:iuhs98s}}{\leq} \frac{1}{2} \| \ivar^{\oidx, \iidx} - \ovar^*\|^2 - h \left( P(\ivar^{\oidx, \iidx}) - P(\ovar^*) + \frac{\mu}{2} \| \ivar^{\oidx, \iidx} - \ovar^* \|^2 \right) \\
&\qquad\qquad \qquad\qquad + 2 h^2 \hL \left( P(\ivar^{\oidx, \iidx}) - P(\ovar^*) \right) + 2 h^2  \hL \left( P(\ovar^{\oidx}) - P(\ovar^*) \right)
\\
&\qquad\qquad= (1 - \mu h) \frac{1}{2} \| \ivar^{\oidx, \iidx} - \ovar^* \|^2 - h (1 - 2h \hL)\left( P(\ivar^{\oidx, \iidx}) - P(\ovar^*) \right) \\
&\qquad\qquad \qquad\qquad + 2 h^2 \hL \left( P(\ovar^{\oidx}) - P(\ovar^*) \right).
\end{align*}
\end{proof}

\subsection{Proof of Theorem~\ref{thm:s2cd:S2CD}}
\label{eq:s2cd:s98h9s8h}

For simplicity, let us denote:
$$
\eta^{\oidx, \iidx} \eqdef \frac{1}{2} \E{ \| \ivar^{\oidx, \iidx} - \ovar^* \|^2 }, \qquad  \xi^{\oidx, \iidx} \eqdef \E{ P(\ivar^{\oidx, \iidx}) - P(\ovar^*) },
$$
where the expectation now is with respect to the entire history. Notice that
$$
\ivar^{\oidx+1,0} = \ivar^{\oidx,\iidx^\oidx},
$$
where $t^k = T\in\{1,\dots,m\}$ with probability $(1-\mu h)^{m-T}/\beta$ with $\beta$ defined in~\eqref{def:s2cd:beta}. Conditioning on $t^k$ we obtain that
\begin{align}
\label{s2cd:a-derr}
\xi^{\oidx+1,0} = \frac{1}{\beta}\sum_{t = 0}^{m-1}(1 - \mu h)^t \xi^{\oidx, m-1-t}.
\end{align} 
See also~\cite[Lemma 3]{S2GD} for a proof.
By Lemma~\ref{s2cd:l-gah0} we have the following $m$ inequalities:
\begin{align*}
\eta^{\oidx, m} + h(1 - 2 h \hL) \xi^{\oidx, m-1} &\leq (1 - \mu h) \eta^{\oidx, m-1} + 2 h^2 \hL \xi^{\oidx, 0}, \\
(1 - \mu h) \eta^{\oidx, m-1} + h(1 - 2 h \hL)(1 - \mu h) \xi^{\oidx, m-2} &\leq (1 - \mu h)^2 \eta^{\oidx, m-2} + 2 h^2  \hL (1 - \mu h)\xi^{\oidx, 0}, \\
&\,\,\,\vdots \\
(1 - \mu h)^t \eta^{\oidx, m-t} + h(1 - 2 h \hL)(1 - \mu h)^t \xi^{\oidx, m-t-1} &\leq (1 - \mu h)^{t+1} \eta^{\oidx, m-t-1} + 2 h^2 \hL (1 - \mu h)^t \xi^{\oidx, 0}, \\
&\,\,\,\vdots \\
(1 - \mu h)^{m-1} \eta^{\oidx, 1} + \gamma(1 - 2 h \hL)(1 - \mu h)^{m-1} \xi^{\oidx, 0} &\leq (1 - \mu h)^m \eta^{\oidx, 0} + 2 h^2  \hL(1 - \mu h)^{m-1} \xi^{\oidx, 0}.
\end{align*}
By summing up the above $m$ inequalities, we get:
$$
\eta^{\oidx, m} + \gamma(1 - 2 h \hL) \sum_{t = 0}^{m-1}(1 - \mu h)^t \xi^{\oidx, m-1-t} \leq (1 - \mu h)^m \eta^{\oidx, 0} + 2 h^2 \hL \beta \xi^{\oidx, 0},
$$
It follows from the strong convexity assumption~\eqref{s2cd:SVRGstrcvx} that
$P(\ovar^\oidx) - P(\ovar^*) \geq \frac{\mu}{2} \| \ovar^k - \ovar^* \|^2, $
that is,
$ \xi^{\oidx,0} \geq \mu \eta^{\oidx,0}.$
Therefore, together with~\eqref{s2cd:a-derr} we get:
$$ h(1 - 2 h \hL) \xi^{\oidx+1, 0} \leq \left( \frac{(1 - \mu h)^m}{\beta \mu} + 2 h^2 \hL \right) \xi^{\oidx, 0} $$
Hence if $0 < 2 h \hL < 1$, then we obtain:
$$
\xi^{\oidx+1, 0} \leq \left( \frac{(1 - \mu h)^m}{(1 - (1 - \mu h)^m)(1 - 2 h \hL)} + \frac{2 h \hL }{1 - 2 h \hL} \right) \xi^{\oidx, 0},
$$
which finishes the proof.

\part{Parallel and Distributed Methods with Variance Reduction}

\chapter{Mini-batch Semi-Stochastic Gradient Descent in the Proximal Setting}
\label{ch:ms2gd}

\section{Introduction}
\label{sec:ms2gd:Intro}

In this work we are concerned with the problem of minimizing the sum of two convex functions,
\begin{equation}
\label{eq:ms2gd:Px1}
\min_{w \in \R^d} \{P(w) := f(w) + R(w)\},
\end{equation}
where the first component, $f$, is smooth,  and the second component, $R$, is  possibly nonsmooth (and extended real-valued, which allows for the modeling of constraints).

In the last decade, an intensive amount of research was conducted into algorithms for  solving problems of the form  \eqref{eq:ms2gd:Px1}, largely motivated by the realization that the underlying problem has a considerable modeling power. One of the most popular and practical methods  for \eqref{eq:ms2gd:Px1} is the accelerated proximal gradient method of Nesterov \cite{nesterov2007acc}, with its most successful variant being FISTA \cite{fista}.

In many applications in optimization, signal processing and machine learning, $f$ has an additional structure. In particular, it is often the case that $f$ is the {\em average of a number of  convex functions:}
\begin{equation}
\label{eq:ms2gd:Px2}
f(w) = \frac{1}{n} \sum_{i=1}^n f_i(w).
\end{equation}

Indeed, even one of the most basic optimization problems---least squares regression---lends itself to a natural representation of the form \eqref{eq:ms2gd:Px2}. 

\subsection{Stochastic methods.}

For problems of the form \eqref{eq:ms2gd:Px1}+\eqref{eq:ms2gd:Px2}, and especially when $n$ is large and when a solution of low to medium accuracy is sufficient, deterministic methods do not perform as well as classical {\em stochastic} methods. The prototype method in this category is stochastic gradient descent (SGD), dating back to the 1951 seminal work of Robbins and Monro~\cite{RM1951}. SGD  selects an index $i\in \{1,2,\dots,n\}$ uniformly at random, and then updates the variable $w$ using $\nabla f_i(w)$ --- a stochastic estimate of $\nabla f(w)$. Note that the computation of $\nabla f_i$ is $n$ times cheaper than the computation of the full gradient $\nabla f$. For problems where $n$ is very large, the per-iteration savings can be extremely large, spanning several orders of magnitude. 

These savings do not come for free, however (modern methods, such as the one we propose, overcome this -- more on that below). Indeed, the stochastic estimate of the gradient embodied by $\nabla f_i$ has a non-vanishing variance. To see this, notice that even when started from an optimal solution $w^*$, there is no reason for $\nabla f_i(w^*)$ to be zero, which means that SGD drives  away from the optimal point. Traditionally, there have been two ways of dealing with this issue. The first one consists in choosing a decreasing sequence of stepsizes. However, this means that a much larger number of iterations is needed. A second approach is to use  a subset (``minibatch'') of indices $i$, as opposed to a single index, in order to form a better stochastic estimate of the gradient. However, this results in a method which performs more work per iteration. In summary, while traditional approaches manage to decrease the variance in the stochastic estimate, this comes at a cost.

\subsection{Modern stochastic methods}

Very recently, starting with the SAG \cite{SAG}, SDCA \cite{SDCA}, SVRG \cite{SVRG} and S2GD \cite{S2GD}  algorithms from year 2013, it has transpired that neither decreasing stepsizes nor mini-batching are necessary to resolve the non-vanishing variance issue inherent in the vanilla SGD methods. Instead, these modern stochastic\footnote{These methods are randomized algorithms. However, the term ``stochastic''  (somewhat incorrectly) appears in their names for historical reasons, and quite possibly due to their aspiration to improve upon {\em stochastic} gradient descent (SGD).} method are able to dramatically improve upon SGD in various different ways, but without having to resort to the usual variance-reduction techniques (such as decreasing stepsizes or mini-batching) which carry with them considerable costs drastically reducing their power. Instead, these modern methods were able to improve upon SGD  without any unwelcome side effects. This development led to a revolution in the area  of first order methods for solving problem \eqref{eq:ms2gd:Px1}+\eqref{eq:ms2gd:Px2}. Both the theoretical complexity and practical efficiency of these modern methods vastly outperform prior gradient-type methods.

In order to achieve $\epsilon$-accuracy, that is,
\begin{equation}
\label{eq:ms2gd:epsilonaccuracy}
\E{ P(w^k) - P(w^*) } \leq \epsilon [ P(w^0) - P(w^*) ],
\end{equation}
modern stochastic methods such as SAG, SDCA, SVRG and S2GD require only 
\begin{equation}
\label{eq:ms2gd:modern_complexity} 
{\cal O}((n+\kappa)\log(1/\epsilon))
\end{equation}
units of work, where $\kappa$ is a condition number associated with $f$, and one unit of work corresponds to the computation of the gradient of $f_i$ for a random index $i$, followed by a call to a prox-mapping involving $R$.
More specifically, $\kappa=L/\mu$, where $L$ is a uniform bound on the Lipschitz constants of the gradients of functions $f_i$ and $\mu$ is the strong convexity constant of $P$. These quantities will be defined precisely in Section~\ref{sec:ms2gd:analysis}.

The complexity bound \eqref{eq:ms2gd:modern_complexity} should be contrasted with that of proximal gradient descent (e.g., ISTA), which requires $O(n\kappa \log(1/\epsilon))$ units of work, or FISTA, which requires \linebreak$O(n\sqrt{\kappa} \log(1/\epsilon))$ units of work\footnote{However, it should be remarked that the condition number $\kappa$ in these latter methods is slightly different from that appearing in the bound \eqref{eq:ms2gd:modern_complexity}.}.  Note that while all these methods enjoy linear convergence rate, the modern stochastic methods can be many orders of magnitude faster than classical deterministic methods. Indeed, one can have \[n+\kappa \ll n\sqrt{\kappa} \leq n\kappa.\] Based on this, we see that these modern methods always beat (proximal) gradient descent ($n+\kappa \ll n\kappa$), and also  outperform FISTA as long as $\kappa \leq {\cal O}(n^2)$.  In machine learning, for instance, one usually has $\kappa \approx n$, in which case the improvement is by a factor of $\sqrt{n}$ when compared to FISTA,  and by a factor of $n$ over ISTA. For applications where $n$ is massive, these improvements are indeed dramatic.

For more information about modern dual and primal   methods we refer the reader to the literature on  randomized coordinate descent methods \cite{nesterovRCDM, RichtarikTakacIteration, RT:PCDM, necoara2014random, APPROX,SDCA, marecek2014distributed,necoara2013distributed, richtarik2013distributed, fercoq2014fast,ALPHA, combettes2015} and stochastic gradient methods \cite{SAG, tongSGD, Ma:2015ti, Jaggi:cocoa, takac-minibatch, nitanda2014stochastic, ALPHA, rosasco2014}, respectively.

\subsection{Linear systems and sketching.}

In the case when $R\equiv 0$, all stationary points (i.e., points satisfying $\nabla f(w)=0$) are optimal for \eqref{eq:ms2gd:Px1}+\eqref{eq:ms2gd:Px2}. In the special case when the functions $f_i$ are convex quadratics of the form $f_i(w)= \tfrac{1}{2}(a_i^T w - b_i)$, the equation $\nabla f(w)=0$ reduces to the linear system $A^T Aw = A^T b$, where $A=[a_1, \dots, a_n]$. Recently, there has been considerable interest in designing and analyzing  randomized methods for solving linear systems; also known under the name of {\em sketching} methods. Much of this work was done independently from the developments in (non-quadratic) optimization, despite the above connection between optimization and linear systems. A randomized version of the classical Kaczmarz method was studied in a seminal paper by Strohmer and Vershynin \cite{Strohmer2009}. Subsequently, the method was extended and improved upon in several ways \cite{Needell2010, Zouzias2012, 
Ma2015, Oswald2015}. The randomized Kaczmarz method is equivalent to SGD with a specific stepsize choice \cite{NeedellWard2015,GowerRichtarik2015-linear}. The first randomized coordinate descent method, for linear systems, was analyzed  by Lewis and Leventhal~\cite{Leventhal2008}, and subsequently generalized in various ways by numerous authors (we refer the reader to \cite{ALPHA} and the references therein).
Gower and Richt\'{a}rik \cite{GowerRichtarik2015-linear} have recently studied randomized iterative methods for linear systems in a general {\em sketch and project} framework, which in special cases includes randomized Kaczmarz, randomized coordinate descent, Gaussian descent, randomized Newton, their block variants, variants with importance sampling, and also an infinite array  of new specific  methods. For approaches of a combinatorial flavor, specific to diagonally dominant systems, we refer to the influential work of Spielman and Teng  \cite{Spielman2006}.

\section{Contributions}

In this chapter we equip modern stochastic methods---methods which already enjoy the fast rate 
\eqref{eq:ms2gd:modern_complexity}---with the ability to process data in {\em mini-batches}. None of the {\em primal}\footnote{By a primal method we refer to an algorithm which operates directly to solve \eqref{eq:ms2gd:Px1}+\eqref{eq:ms2gd:Px2} without explicitly operating on the dual problem. {\em Dual} methods have very recently   been analyzed in the mini-batch setting. For a review of such methods we refer the reader to  the paper describing the QUARTZ method \cite{QUARTZ} and the references therein.} modern methods have been analyzed in the mini-batch setting. This work fills this gap in the literature. 

While we have argued above that the modern methods, S2GD included, do not have the ``non-vanishing variance'' issue that SGD does, and hence do not {\em need} mini-batching for that purpose,  mini-batching is still useful.  In particular,  we develop and analyze the complexity of mS2GD (Algorithm~\ref{alg:ms2gd:mS2GD}) --- a mini-batch proximal variant of {\em semi-stochastic gradient descent} (S2GD) \cite{S2GD}. While the S2GD method was analyzed in the $R=0$ case only, we develop and analyze our method in the proximal\footnote{Note that the Prox-SVRG method   \cite{proxSVRG} can also handle the composite problem \eqref{eq:ms2gd:Px1}.} setting  \eqref{eq:ms2gd:Px1}. We show that mS2GD enjoys several benefits when compared to previous modern methods. First, it trivially admits a parallel implementation, and hence enjoys a speedup in clocktime in an HPC environment. This is critical for applications with massive datasets and is the main motivation and advantage of our method. Second, our results show that in order to attain a  specified accuracy $\epsilon$, mS2GD can get by with fewer gradient evaluations than S2GD. This is formalized in Theorem~\ref{thm:ms2gd:optimalM}, which predicts more than linear speedup up to a certain threshold mini-batch size after which the complexity deteriorates. Third, compared to \cite{proxSVRG}, our method does not need to average the iterates produced in each inner loop;  we instead simply continue from the last one. This is the approach employed in S2GD \cite{S2GD}.


\section{The Algorithm}
\label{sec:ms2gd:algorithms}

In this section we first briefly motivate the mathematical setup of deterministic and stochastic proximal gradient methods in Section \ref{sec:ms2gd:algorithms:A}, followed by the introduction of semi-stochastic gradient descent in Section~\ref{sec:ms2gd:algorithms:B}. We will the be ready to describe the mS2GD method in Section~\ref{sec:ms2gd:algorithms:C}.

\subsection{Deterministic and stochastic proximal gradient methods}
\label{sec:ms2gd:algorithms:A}

The classical {\em deterministic proximal gradient} approach \cite{fista,combettes2011,parikh2014} to solving \eqref{eq:ms2gd:Px1} is to form a sequence $\{y^{t}\}$ via 
\begin{align*}
y^{t+1} = \arg \min_{w \in \R^d} U^t(w),
\end{align*} 
where $U^t(w) \eqdef f(y^t) + \nabla f (y^{t})^T (w-y^{t}) + \tfrac{1}{2 h} \| w - y^{t} \|^2 + R(w)$.
Note that in view of Assumption~\ref{ass:ms2gd:ass1}, which we shall use in our analysis in Section~\ref{sec:ms2gd:analysis}, $U^{t}$ is an upper bound on $P$  whenever $ h>0$ is a stepsize parameter satisfying $1/ h \geq L$. This procedure can be compactly written using the {\em proximal operator} as follows:
\begin{equation*}
y^{t+1} = \prox_{ h R} (y^{t} -  h \nabla f (y^{t})),
\end{equation*} 
where
\begin{equation*}
\prox_{hR}(z) \eqdef \argmin_{w \in \R^d} \left\{ \frac 12  \| w - z \|^2 + h R(w)\right\}.
\end{equation*}

In a large-scale setting it is more efficient to instead consider  the \emph{stochastic proximal  gradient} approach, in which the proximal operator is applied to a stochastic gradient step:
\begin{equation}
\label{eqn:ms2gd:prox-svrg update}
y^{t+1} = \prox_{ h R}(y^{t} -  h G^{t}),
\end{equation}
where $G^{t}$ is a stochastic estimate of the gradient $\nabla f(y^t)$. 

\subsection{Semi-stochastic methods}
\label{sec:ms2gd:algorithms:B}

Of particular relevance to our work are the SVRG \cite{SVRG}, S2GD \cite{S2GD} and Prox-SVRG  \cite{proxSVRG} methods where the stochastic estimate of $\nabla f(y^t)$ is of the form
\begin{equation}
\label{eq:ms2gd:sjs8js}
G^{t} = \nabla f(w) + \frac{1}{nq_{i_t}}(\nabla f_{i_t}(y^{t}) - \nabla f_{i_t}(w)),
\end{equation}
where $w$ is an ``old'' reference point for which the gradient $\nabla f(w)$ was already computed in the past, and $i_t \in  [n] \eqdef \{1,2,\dots,n\}$ is a random index equal to $i$ with probability $q_i>0$.  Notice that $G^{t}$ is an unbiased estimate of the gradient of $f$ at $y^t$: 
\[\mathbb{E}_{i_t} \left[ G^{t} \right] \overset{\eqref{eq:ms2gd:sjs8js}}{=} \nabla f(w) + \sum_{i=1}^n q_i \frac{1}{n q_i} (\nabla f_i(y^t) - \nabla f_i(w)) \overset{\eqref{eq:ms2gd:Px2}}{=} \nabla f(y^t).\]

Methods such as  S2GD, SVRG, and  Prox-SVRG update the points $y_t$ in an inner loop, and the reference point $w$ in an outer loop (``epoch'') indexed by $k$. With this new outer iteration counter we will have $w^k$ instead of $w$, $y^{k,t}$ instead of $y^t$ and $G^{k,t}$ instead of $G_t$. This is the notation we will use in the description of our algorithm in Section~\ref{sec:ms2gd:algorithms:C}. The outer loop ensures that the squared norm of $G^{k,t}$ approaches zero as $k,t\to \infty$ (it is easy to see that this is equivalent to saying that the stochastic estimate $G^{k,t}$ has a diminishing variance), which ultimately leads to extremely fast convergence.

\subsection{Mini-batch S2GD}
\label{sec:ms2gd:algorithms:C}

We are now ready to describe the mS2GD method\footnote{
A more detailed algorithm and the associated analysis (in which we benefit from the knowledge of lower-bound on the strong convexity parameters of the functions $f$ and $R$) can be found in the arXiv preprint~\cite{konecny2015mini}.
The more general algorithm mainly differs in $t^k$ being chosen according to a geometric  probability law which depends on the estimates of the convexity constants.
} (Algorithm~\ref{alg:ms2gd:mS2GD}).

\begin{algorithm}[H]
\caption{mS2GD}
\label{alg:ms2gd:mS2GD}
\begin{algorithmic}[1]
\State \textbf{Input:} $m$ (max \# of stochastic steps per epoch); $ h>0$ (stepsize); $w^0 \in \R^d$ (starting point);  mini-batch size $b \in  [n]$ 
\For {$k=0,1, 2, \dots$}
    \State Compute and store $g^{k} \leftarrow \nabla f(w^{k}) = \tfrac{1}{n}\sum_{i} \nabla f_i(w^{k})$ 
    \State Initialize the inner loop: $y^{k,0} \gets w^{k}$
    \State Choose $t^k \in \{1,2,\dots,m\}$ uniformly at random
    \For {$t=0$ to $t^k-1$}
	    \State Choose mini-batch $A^{kt} \subseteq  [n]$ of size $b$; uniformly at random 
        \State 
\label{ms2gd:step:8}        
        Compute a stochastic estimate of $\nabla f(y^{k,t})$: \newline
        {\color{white}ij} \quad \quad $G^{k,t} \gets g^k + \frac{1}{b}\sum_{i\in A^{kt}}(\nabla f_{i}(y^{k,t}) - \nabla f_{i}(w^{k})) $
        \State
\label{ms2gd:step:9}        
         $y^{k,t+1} \gets \prox_{ h R}(y^{k,t} -  h G^{k,t})$
     \EndFor
    \State Set $w^{k+1} \leftarrow y^{k,t^k}$
\EndFor
\noindent
\end{algorithmic}
\end{algorithm}

The algorithm includes an outer loop, indexed by epoch counter $k$, and an inner loop, indexed by $t$. Each epoch is started by computing $g^k$, which is the  (full) gradient of $f$ at $w^k$. It then immediately  proceeds to the inner loop. The inner loop is run for $t^k$ iterations, where $t^k$ is chosen uniformly at random from  $ \{1,\dots,m\}$. Subsequently, we run $t^k$ iterations in the inner loop (corresponding to Steps 6--10). Each new iterate is given by the proximal update~\eqref{eqn:ms2gd:prox-svrg update}, however with the stochastic estimate of the gradient $G^{k,t}$ in~\eqref{eq:ms2gd:sjs8js}, which is formed by using a {\em mini-batch} of examples $A^{kt} \subseteq [n]$ of size $|A^{kt}|=b$. Each inner iteration requires $2b$ units of work\footnote{It is possible to finish each iteration with only $b$ evaluations for  component gradients, namely $\{\nabla f_i(y^{k,t})\}_{i\in A^{kt}}$, at the cost of having to store $\{\nabla f_i(w^k)\}_{i\in [n]}$, which is exactly the way that SAG~\cite{SAG} works. This speeds up the algorithm; nevertheless, it is impractical for big $n$.}.

\section{Analysis}
\label{sec:ms2gd:analysis}
 
In this section, we lay down the assumptions, state our main complexity result, and  comment on how to optimally choose the parameters of the method.

\subsection{Assumptions}

Our analysis is performed under the following two assumptions.

\begin{assumption}
\label{ass:ms2gd:ass1}
Function  $R:\R^d\to \R\cup \{+\infty\}$ (regularizer/proximal term) is  convex and closed. The functions  $f_i:\R^d\to \R$  have Lipschitz continuous gradients with constant $L > 0$. That is, 
$
\| \nabla f_i(w)-\nabla f _i(y) \|\leq L \|w-y\|,
$
for all $x,y\in \R^d$, where $\|\cdot\|$ is the $\ell_2$-norm.
\end{assumption}

Hence, the gradient of $f$ is also Lipschitz continuous with the same constant $L$.
\begin{assumption}
\label{ass:ms2gd:ass2}
$P$ is strongly convex with parameter $\mu>0$. That is for all $x,y \in \dom(R)$ and any $ \xi \in \partial{P(x)}$, 
\begin{equation}
\label{eq:ms2gd:strongconv}
P(y) \geq P(x) + \xi^T(y-x) + \frac\mu2 \|y-x\|^2,
\end{equation}
 where $\partial P(x)$ is the subdifferential of $P$ at $x$.
\end{assumption}

Lastly, by $\mu_f \geq 0$ and $\mu_R\geq 0$ we denote the strong convexity constants of $f$ and $R$, respectively. We allow both of these quantities to be equal to $0$, which simply means that the functions are convex (which we already assumed above). Hence, this is not an additional assumption.

\subsection{Main result}

We are now ready to formulate our complexity result.
 
\begin{theorem}
\label{thm:ms2gd:s2convergence}
Let Assumptions~\ref{ass:ms2gd:ass1} and~\ref{ass:ms2gd:ass2} be satisfied, let $w^* \eqdef \argmin_w P(w)$ and choose $b\in \{1,2\dots,n\}$. Assume  that
$0< h \leq 1/L$, $4h L \alpha(b)<1$ and that $m,h$ are further chosen so that
\begin{equation}
\label{eq:ms2gd:s2rho}
c \eqdef 
\frac{    1
  }
  {
  m 
 h \mu
(
1
-
  4 h  L \alpha(b)  
)
  }
+
\frac{      
  4 h  L \alpha(b) 
\left(      
 m
+ 
  1
\right)  
  }
  {
  m 
(
1
-
  4 h   L \alpha(b)  
)
  } < 1,
\end{equation}
where $\alpha(b) \eqdef \frac{n-b}{b (n-1)} $. Then mS2GD has linear convergence in expectation with rate $c$:
\begin{equation*}
\E{ P(w^k) - P(w^*) } \leq c^k [P(w^0)-P(w^*)].
\end{equation*}
\end{theorem}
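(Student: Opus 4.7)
The proof will follow the variance-reduction template of S2GD and Prox-SVRG, adapted to the mini-batch setting. I plan to proceed in five steps.

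\textbf{Step 1 (variance bound with a mini-batch factor).} First I would establish that $G^{k,t}$ is unbiased, $\mathbb{E}[G^{k,t}\mid y^{k,t}] = \nabla f(y^{k,t})$, which follows from uniform sampling of $A^{kt}$. The core estimate is a bound on $\mathbb{E}\|G^{k,t}\|^{2}$. Writing
\[
G^{k,t} - \nabla f(y^{k,t}) \;=\; \tfrac{1}{b}\sum_{i \in A^{kt}} \bigl(\zeta_i - \bar\zeta\bigr), \qquad \zeta_i \eqdef \nabla f_i(y^{k,t}) - \nabla f_i(w^k),\ \ \bar\zeta \eqdef \tfrac{1}{n}\sum_i \zeta_i,
\]
and applying the standard variance formula for uniform sampling without replacement yields
\[
\mathbb{E}\|G^{k,t} - \nabla f(y^{k,t})\|^{2} \;\leq\; \alpha(b)\cdot \tfrac{1}{n}\sum_{i=1}^{n}\|\zeta_i - \bar\zeta\|^{2} \;\leq\; \alpha(b)\cdot \tfrac{1}{n}\sum_{i=1}^{n}\|\zeta_i\|^{2}.
\]
Then, invoking the co-coercivity inequality $\|\nabla f_i(x) - \nabla f_i(w^*)\|^{2} \leq 2L[f_i(x) - f_i(w^*) - \langle \nabla f_i(w^*), x-w^*\rangle]$ (valid under Assumption~\ref{ass:ms2gd:ass1}) separately at $x=y^{k,t}$ and $x=w^k$, summing over $i$, and cancelling the cross terms via $\nabla P(w^*)=0$ (or the optimality condition in the proximal case), I would obtain the mini-batch analog of the S2GD key estimate:
\[
\mathbb{E}\bigl\|G^{k,t}\bigr\|^{2} \;\leq\; 4L\,\alpha(b)\bigl[\,P(y^{k,t}) - P(w^*) + P(w^k) - P(w^*)\,\bigr] + \|\nabla f(y^{k,t})\|^{2}\text{-type residual}.
\]
This is the step where the factor $\alpha(b) = (n-b)/(b(n-1))$ enters, and it is the main obstacle: getting it written in terms of \emph{function-value} suboptimalities (which enable contraction), rather than gradient norms, requires the careful shift by $\bar\zeta$ and a clean use of co-coercivity.

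\textbf{Step 2 (one-step proximal inequality).} Since $y^{k,t+1} = \prox_{hR}(y^{k,t} - hG^{k,t})$, the standard three-point property of the proximal operator together with the $L$-smoothness of $f$ gives, for any $z \in \dom(R)$,
\[
P(y^{k,t+1}) + \tfrac{1}{2h}\|y^{k,t+1}-z\|^{2} \;\leq\; f(y^{k,t}) + \langle G^{k,t}, z - y^{k,t}\rangle + R(z) + \tfrac{1}{2h}\|y^{k,t}-z\|^{2} + \tfrac{h}{2}\|G^{k,t} - \nabla f(y^{k,t})\|^{2}\cdot\text{terms}.
\]
Taking $z=w^*$, using convexity of $f$ to pass from $\langle \nabla f(y^{k,t}), w^*-y^{k,t}\rangle$ to $f(w^*)-f(y^{k,t})$, and taking conditional expectation to kill the stochastic cross term (by unbiasedness), the variance bound of Step~1 plugs in directly.

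\textbf{Step 3 (telescoping over the inner loop).} I would sum the resulting inequality for $t=0,1,\dots,m-1$. The $\tfrac{1}{2h}\|y^{k,t}-w^*\|^{2}$ distances telescope, leaving
\[
\tfrac{1}{2h}\mathbb{E}\|y^{k,m}-w^*\|^{2} + (1 - 4hL\alpha(b))\sum_{t=1}^{m}\mathbb{E}\bigl[P(y^{k,t}) - P(w^*)\bigr] \;\leq\; \tfrac{1}{2h}\|w^{k}-w^*\|^{2} + 4hL\alpha(b)\,m\,[P(w^{k})-P(w^*)].
\]

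\textbf{Step 4 (random inner-iteration selection).} Since $t^k$ is uniform on $\{1,\dots,m\}$ and independent of the inner loop, $w^{k+1}=y^{k,t^{k}}$ yields
\[
\mathbb{E}\bigl[P(w^{k+1}) - P(w^*)\bigr] \;=\; \tfrac{1}{m}\sum_{t=1}^{m} \mathbb{E}\bigl[P(y^{k,t}) - P(w^*)\bigr],
\]
which is exactly the quantity appearing (scaled by $m(1-4hL\alpha(b))$) on the left of the telescoped bound. Dropping the nonnegative $\|y^{k,m}-w^*\|^{2}$ term and dividing through gives a bound of the form $\mathbb{E}[P(w^{k+1})-P(w^*)] \leq A\|w^k-w^*\|^{2} + B[P(w^k)-P(w^*)]$.

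\textbf{Step 5 (strong convexity and identifying $c$).} Finally, Assumption~\ref{ass:ms2gd:ass2} gives $\tfrac{\mu}{2}\|w^k-w^*\|^{2}\leq P(w^k)-P(w^*)$, so $\|w^k-w^*\|^{2}$ can be replaced by $(2/\mu)[P(w^k)-P(w^*)]$. Collecting coefficients produces
\[
\mathbb{E}\bigl[P(w^{k+1})-P(w^*)\bigr] \;\leq\; \left(\frac{1}{m h\mu(1-4hL\alpha(b))} + \frac{4hL\alpha(b)(m+1)}{m(1-4hL\alpha(b))}\right)\bigl[P(w^k)-P(w^*)\bigr],
\]
which is precisely $c$ as defined in \eqref{eq:ms2gd:s2rho}. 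The assumption $4hL\alpha(b)<1$ is exactly what is needed so that the denominator is positive, and the assumption $c<1$ yields linear convergence by iterating the recursion $k$ times.

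The bulk of the technical difficulty is concentrated in Step~1: the sampling-without-replacement variance identity, combined with the coercivity step that converts gradient-difference norms into function-value differences at both $y^{k,t}$ and the reference point $w^k$. Once that estimate is in hand with the correct $\alpha(b)$ dependence, Steps~2--5 are essentially bookkeeping in the style of Prox-SVRG, modified to account for the uniform (rather than geometric) distribution of $t^k$.
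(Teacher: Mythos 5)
Your overall architecture is exactly the paper's: the sampling-without-replacement variance identity (Lemma~\ref{lem:ms2gd:randvar}) combined with co-coercivity to get $\E{\|G^{k,t}-\nabla f(y^{k,t})\|^2}\le 4L\alpha(b)[P(y^{k,t})-P(w^*)+P(w^k)-P(w^*)]$ (Theorem~\ref{thm:ms2gd:boundvariance}), a one-step proximal inequality, telescoping over the inner loop, the identity $\E{P(w^{k+1})}=\tfrac1m\sum_{t=1}^m\E{P(y^{k,t})}$ from the uniform choice of $t^k$, and strong convexity to convert $\|w^k-w^*\|^2$ into $\tfrac{2}{\mu}[P(w^k)-P(w^*)]$. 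Steps 1 and 3--5 are sound (modulo a cosmetic slip: your last display in Step 1 bounds the second moment rather than the variance, but only the variance bound is used downstream).

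The gap is in Step 2, which you dismiss as routine but which is the one genuinely delicate point of the proximal setting. In the standard one-step inequality (Lemma 3.7 of \cite{proxSVRG}, which the paper invokes), the stochastic cross term is $-\ip{\Delta^{k,t}}{y^{k,t+1}-w^*}$, involving the \emph{next} iterate $y^{k,t+1}=\prox_{hR}(y^{k,t}-hG^{k,t})$, which is correlated with the noise $\Delta^{k,t}$; its conditional expectation is \emph{not} zero, so unbiasedness alone cannot kill it. The paper's fix is to introduce the deterministic auxiliary point $\bar y^{k,t+1}=\prox_{hR}(y^{k,t}-h\nabla f(y^{k,t}))$, split the inner product, annihilate $\E{\ip{\Delta^{k,t}}{\bar y^{k,t+1}-w^*}}=0$, and bound the remainder by $h\|\Delta^{k,t}\|^2$ via Cauchy--Schwarz and non-expansiveness of the prox (Lemma~\ref{lem:ms2gd:nonexpansiveness}); this is what produces the coefficient $\theta=8Lh^2\alpha(b)$, hence the $4hL\alpha(b)$ in \eqref{eq:ms2gd:s2rho}, under only $h\le 1/L$. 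Your alternative --- rewriting the inequality so the cross term is $\ip{G^{k,t}}{w^*-y^{k,t}}$ at the \emph{current} iterate --- is achievable, but then the residual $-\ip{\Delta^{k,t}}{y^{k,t+1}-y^{k,t}}$ must be absorbed into the leftover quadratic $-(\tfrac{1}{2h}-\tfrac{L}{2})\|y^{k,t+1}-y^{k,t}\|^2$ by Young's inequality, which yields a variance coefficient $\tfrac{h}{2(1-hL)}$ instead of $h$. That matches the stated constant $c$ only when $h\le 1/(2L)$ and degenerates as $h\to 1/L$, so as outlined your argument does not establish the theorem with the constant \eqref{eq:ms2gd:s2rho} over the full stated range $0<h\le 1/L$. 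You need either the auxiliary-point/non-expansiveness device or an equivalent mechanism to handle this cross term.
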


Notice that for any fixed $b$, by properly adjusting the parameters $h$ and $m$ we can force $c$ to be arbitrarily small. Indeed, the second term can be made arbitrarily small by choosing $h$ small enough. Fixing the resulting $h$, the first term can then be made arbitrarily small by choosing $m$ large enough. This may look surprising, since this means that only a single outer loop ($k=1$) is needed in order  to obtain a solution of any prescribed accuracy. While this is indeed the case, such a choice of the parameters of the method ($m$, $h$, $k$) would not be optimal -- the resulting workload would to be too high as the complexity of the method would  depend sublinearly on $\epsilon$. In order to obtain a logarithmic dependence on $1/\epsilon$, i.e., in order to obtain linear convergence, one needs to perform $k=O(\log(1/\epsilon))$ outer loops, and set the parameters $h$ and $m$ to appropriate values (generally, $h=\Theta(1/L)$ and $m=\Theta(\kappa)$).

\subsection{Special cases: $b=1$ and $b=n$}

In the special case with  $b=1$ (no mini-batching), we get $\alpha(b)=1$, and the rate given by \eqref{eq:ms2gd:s2rho} exactly recovers the rate achieved by Prox-SVRG \cite{proxSVRG} (in the case when the Lipschitz constants of $\nabla f_i$ are all equal). The rate is also identical to the rate of S2GD \cite{S2GD} (in the case of $R=0$, since S2GD was only analyzed in that case). If we set the number of outer iterations to $k =\lceil \log(1/\epsilon)\rceil$, choose the stepsize as $h =  \tfrac{1}{(2+4e)L}$, where $e=\exp(1)$, and choose $m=43 \kappa$, then the total workload of mS2GD  for achieving \eqref{eq:ms2gd:epsilonaccuracy} is $(n+43 \kappa)\log(1/\epsilon)$ units of work. Note that this recovers the fast rate \eqref{eq:ms2gd:modern_complexity}.  

In the batch setting, that is when $b=n$, we have $\alpha(b)=0$ and hence $c = 1/(mh \mu)$. By choosing $k=\lceil \log (1/\epsilon)\rceil$, $h=1/L$, and $m=2\kappa$, we obtain the rate ${\cal O}\left(n\kappa \log(1/\epsilon)\right)$. This is the standard rate of (proximal) gradient descent. 

Hence, by modifying the mini-batch size $b$ in mS2GD, we interpolate between the fast rate  of S2GD and the slow rate of GD.

\subsection{Mini-batch speedup} 
\label{sec:ms2gd:speedup}

In this section we will derive formulas for good choices  of the parameter $m,h$ and $k$ of our method as a function of $b$. Hence, throughout this section we shall consider $b$ fixed. 

Fixing $0<c<1$, it is easy to see that in order for $w^k$ to be an $\epsilon$-accurate solution (i.e., in order for \eqref{eq:ms2gd:epsilonaccuracy} to hold), it suffices to choose $k\geq (1-c)^{-1}\log(\epsilon^{-1})$. Notice that the total workload  mS2GD will do in order to arrive at $w^k$ is \[ k(n+2m) \approx (1-c)^{-1}\log(\epsilon^{-1}) (n+2m)\] units of work. If we now consider $c$ fixed (we may try to optimize for it later), then clearly the total workload is proportional to $m$. The free parameters of the method are the stepsize $h$ and the inner loop size $m$. Hence, in order to set the parameters so as to minimize the workload (i.e., optimize the complexity bound), we would like to (approximately) solve the optimization problem
\[\min m \quad \text{subject to} \quad 0< h \leq \frac{1}{L}, \; h < \frac{1}{4L\alpha(b)}, \; c \; \text{is fixed}.\]

Let $( h^*_b, m^*_b)$ denote the optimal pair (we highlight the dependence on $b$ as it will be useful). Note that if $m^*_b \leq m^*_1 / b$ for some $b>1$, then mini-batching can help us reach the $\epsilon$-solution with smaller overall workload. The following theorem presents the formulas for $ h^*_b$ and $m^*_b$.

\begin{theorem}
\label{thm:ms2gd:optimalM}
Fix $b$ and $0<c<1$ and let
\begin{equation*}
\tilde  h_b\ \eqdef\ \sqrt{ \left( \frac{1+c}{c\mu} \right)^2 + \frac1{4\mu\alpha(b)L}} - \frac{1+c}{c\mu}.
\end{equation*}
If $\tilde  h_b \leq \frac1L$, then $ h^*_b = \tilde  h_b$ and
\begin{align*}
m^*_b 
=
\frac{2\kappa}{c} \left\{ \left( 1+\frac1c \right)4\alpha(b) + \sqrt{
  \frac{4\alpha(b)}{ \kappa} +   \left( 1+\frac1c \right)^2[4\alpha(b)]^2}
 \right\},
\numberthis
\label{eq:ms2gd:vfewdfwaefvawvgfeefewafa}
\end{align*}
where $\kappa \eqdef \frac{L}{\mu}$ is the condition number. If $\tilde{h}_b > \tfrac{1}{L}$, then $ h^*_b = \frac1L$ and
\begin{equation}
\label{eq:ms2gd:fasfawefwafewa}
m^*_b
  = \frac{\kappa + 4 \alpha(b) }
  { c -4 \alpha(b)   (1+c)}.
\end{equation}
\end{theorem}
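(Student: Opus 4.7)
The plan is to treat $c$ as fixed and the constraint \eqref{eq:ms2gd:s2rho} (with equality, since the rate is strictly decreasing in $m$ so the tightest constraint minimizes $m$) as an implicit definition of $m$ as a function of $h$. A direct algebraic rearrangement yields
$$m(h) = \frac{\tfrac{1}{h\mu} + 4L\alpha(b)h}{c - (c+1)\cdot 4L\alpha(b)h},$$
valid for $h$ in the feasible range $\bigl(0,\min\{1/L,\;c/((c+1)\cdot 4L\alpha(b))\}\bigr)$. This reduces the entire statement to a one-dimensional calculus problem, and the two cases in the theorem correspond to whether the unconstrained minimizer of $m(h)$ lies in the interval $(0,1/L]$ or not.

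Setting $\gamma \eqdef 4L\alpha(b)$ and applying the quotient rule to $m(h) = N(h)/D(h)$ with $N(h)=\tfrac{1}{h\mu}+\gamma h$ and $D(h)=c-(c+1)\gamma h$, the first-order condition $N'D=ND'$ simplifies (after multiplying by $h^2\mu$) to the quadratic
$$c\mu\gamma h^2 + 2(c+1)\gamma h - c = 0,$$
whose unique positive root is exactly $\tilde h_b$ as stated. A second-derivative/sign check, together with the observation that $m(h)\to\infty$ at both endpoints of the feasible interval, confirms this critical point is a minimum.

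When $\tilde h_b \leq 1/L$, I take $h^*_b = \tilde h_b$ and derive the closed form of $m^*_b$ as follows. Using the first-order identity $c\mu\gamma\tilde h_b^2 = c - 2(c+1)\gamma\tilde h_b$, I can rewrite the numerator of $m(\tilde h_b)$ as $1+\mu\gamma\tilde h_b^2 = \tfrac{2}{c}[c-(c+1)\gamma\tilde h_b]$ and the denominator as $c-(c+1)\gamma\tilde h_b = \gamma\tilde h_b[(c+1)+c\mu\tilde h_b]$. After multiplying numerator and denominator by $\tilde h_b$, the common factor $[(c+1)+c\mu\tilde h_b]$ cancels and I obtain the clean identity $m^*_b = 2/(c\mu\tilde h_b)$. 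Rationalizing $1/\tilde h_b = (\sqrt{u^2+v}+u)/v$ with $u = (c+1)/(c\mu)$ and $v=1/(\mu\gamma)$, and substituting $\gamma = 4L\alpha(b)$ and $\kappa = L/\mu$, produces exactly \eqref{eq:ms2gd:vfewdfwaefvawvgfeefewafa}. When instead $\tilde h_b > 1/L$, the unconstrained minimum lies beyond the feasible region; since $m(h)$ is decreasing on $(0,\tilde h_b)$, the constrained optimum is attained at the boundary $h^*_b = 1/L$, and substituting $h=1/L$ directly into $m(h)$ gives \eqref{eq:ms2gd:fasfawefwafewa}.

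The main obstacle is the telescoping identity $m^*_b = 2/(c\mu\tilde h_b)$: without exploiting the quadratic relation satisfied by $\tilde h_b$, a brute-force substitution produces an expression with nested radicals that does not visibly simplify to the form stated in the theorem. The rest of the argument is straightforward calculus together with careful bookkeeping of the feasible interval.
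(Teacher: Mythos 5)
Your proposal is correct and follows essentially the same route as the paper's proof: solve the rate constraint \eqref{eq:ms2gd:s2rho} for $m$ as a function of $h$, minimize this one-dimensional function over the feasible interval via the first-order condition (whose quadratic yields exactly $\tilde h_b$), and split into the two cases according to whether $\tilde h_b \leq 1/L$. Your identity $m^*_b = 2/(c\mu\tilde h_b)$ is a cleaner way to extract the closed form \eqref{eq:ms2gd:vfewdfwaefvawvgfeefewafa} than the direct substitution the paper leaves implicit, and your "blows up at both endpoints, unique critical point" argument subsumes the paper's Claims \#1 and \#2 about feasibility of $\tilde h_b$ and positivity of the denominator in \eqref{eq:ms2gd:fasfawefwafewa}.
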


Note that if $b=1$, we recover the optimal choice of parameters without mini-batching. Equation~\eqref{eq:ms2gd:vfewdfwaefvawvgfeefewafa} suggests that as long as the condition $\tilde  h_b \leq \frac1L$ holds, $m^*_b$ is decreasing at a rate  faster than $1/b$. Hence, we can  find the solution with less overall work when using a minibatch of size $b$ than when using a minibatch of size $1$.

\subsection{Convergence rate}

In this section we study the total workload of mS2GD in the regime of small mini-batch sizes.

\begin{corollary}
\label{thm:ms2gd:minibatch}
Fix $\epsilon\in (0,1)$, choose the number of outer iterations equal to \[ k = \left\lceil \log(1 / \epsilon) \right\rceil,\]  and fix the target decrease in Theorem~\ref{thm:ms2gd:optimalM} to satisfy $c = \epsilon^{1 / k}$. Further, pick a mini-batch size satisfying $1 \leq b \leq 29$,  let the stepsize $h$ be as  in~\eqref{eqn:ms2gd:stepsizehb} and let $m$ be as in \eqref{eqn:ms2gd:maxiter}. Then in order for mS2GD to find $w^k$ satisfying \eqref{eq:ms2gd:epsilonaccuracy}, mS2GD needs at most
\begin{equation}
\label{eqn:ms2gd:complexity}
(n + 2b m_b) \lceil \log(1 / \epsilon) \rceil
\end{equation}
units of work, where $bm_b = {\cal O}(\kappa)$, which leads to the overall complexity of
$$ \mathcal{O} \left( (n + \kappa) \log(1 / \epsilon) \right)$$
units of work.
\end{corollary}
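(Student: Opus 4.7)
The plan is to instantiate Theorem~\ref{thm:ms2gd:optimalM} at the specific choice $c = \epsilon^{1/k}$ with $k = \lceil \log(1/\epsilon) \rceil$, and then show that the resulting quantity $b\,m_b^*$ stays $O(\kappa)$ uniformly over $1 \leq b \leq 29$. The correctness of the convergence rate itself (once \eqref{eq:ms2gd:s2rho} holds with this $c$) follows immediately from Theorem~\ref{thm:ms2gd:s2convergence}: after $k$ epochs we have $\E{P(w^k)-P(w^*)} \leq c^k[P(w^0)-P(w^*)] = \epsilon[P(w^0)-P(w^*)]$, so only the workload count requires real work.

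The first observation, which is the engine of the whole argument, is that $k = \lceil \log(1/\epsilon)\rceil \geq \log(1/\epsilon)$ implies
\[
c \;=\; \epsilon^{1/k} \;\geq\; \epsilon^{1/\log(1/\epsilon)} \;=\; e^{-1},
\]
so both $c$ and $1/c$ are bounded by absolute constants independent of $\epsilon, \kappa, n, b$. Thus Theorem~\ref{thm:ms2gd:optimalM} gives us $h_b^\ast$ and $m_b^\ast$ depending on $b$ and $\kappa$ only, with all ``$1/c$''-factors harmless.

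The key step is to bound $b\,m_b^\ast = O(\kappa)$. Using $\alpha(b) = (n-b)/(b(n-1)) \leq 1/b$, we have $b\,\alpha(b) \leq 1$ and $b^2\alpha(b)^2 \leq 1$, while $b^2\alpha(b) \leq b \leq 29$. In the regime where the first case of Theorem~\ref{thm:ms2gd:optimalM} applies, multiplying \eqref{eq:ms2gd:vfewdfwaefvawvgfeefewafa} by $b$ and taking these estimates inside the square root yields
\[
b\,m_b^\ast \;\leq\; \tfrac{2\kappa}{c}\!\left\{4(1+\tfrac{1}{c})\,b\alpha(b) + \sqrt{\tfrac{4\,b^2\alpha(b)}{\kappa} + 16\,b^2\alpha(b)^2(1+\tfrac{1}{c})^2}\right\} \;\leq\; C_1\kappa + C_2\sqrt{\kappa},
\]
for absolute constants $C_1, C_2$ (depending only on the bound $c \geq e^{-1}$ and on $b \leq 29$). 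In the alternative regime where $\tilde h_b > 1/L$, formula \eqref{eq:ms2gd:fasfawefwafewa} gives $m_b^\ast = (\kappa + 4\alpha(b))/(c - 4\alpha(b)(1+c))$; here one must check that the denominator stays bounded away from $0$ for $b \leq 29$ (given $c \geq e^{-1}$), which again uses $\alpha(b) \leq 1/b$, and then $b\,m_b^\ast = O(\kappa)$ follows directly. Either way, $b\,m_b^\ast = O(\kappa)$.

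The workload count is then straightforward: each epoch costs $n$ gradient-unit evaluations for the full gradient $g^k$ plus at most $m_b^\ast$ inner iterations, each of which performs $2b$ stochastic-gradient evaluations (Steps~\ref{ms2gd:step:8}--\ref{ms2gd:step:9} of Algorithm~\ref{alg:ms2gd:mS2GD}), giving at most $n + 2b\,m_b^\ast$ per epoch. Multiplying by $k = \lceil \log(1/\epsilon)\rceil$ gives \eqref{eqn:ms2gd:complexity}, and substituting $b\,m_b^\ast = O(\kappa)$ yields the overall $O((n+\kappa)\log(1/\epsilon))$ bound. I expect the main obstacle to be neither the rate nor the $bm_b^\ast = O(\kappa)$ estimate in isolation, but rather identifying the particular threshold $b \leq 29$: presumably this is the largest integer for which both the denominator in \eqref{eq:ms2gd:fasfawefwafewa} remains positive at $c = e^{-1}$ and the ``superlinear speedup'' inequality $m_b^\ast \leq m_1^\ast / b$ from Section~\ref{sec:ms2gd:speedup} continues to hold, so verifying $29$ precisely amounts to a short numerical check on the closed-form expression \eqref{eq:ms2gd:vfewdfwaefvawvgfeefewafa}.
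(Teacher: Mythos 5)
There is a genuine gap in your treatment of the second regime of Theorem~\ref{thm:ms2gd:optimalM}, and it is precisely where the constant $29$ lives. You propose to handle the case $\tilde h_b > \tfrac1L$ by "checking that the denominator $c - 4\alpha(b)(1+c)$ in \eqref{eq:ms2gd:fasfawefwafewa} stays bounded away from $0$ for $b\le 29$, using $\alpha(b)\le 1/b$." This check fails: for large $n$ one has $\alpha(b)\approx 1/b$, so e.g.\ for $b=8$ and $c=e^{-1}$ the denominator is approximately $0.368 - 4\cdot\tfrac18\cdot 1.368 < 0$. The formula \eqref{eq:ms2gd:fasfawefwafewa} simply does not give a meaningful $m_b^*$ for such $b$ — and it does not need to, because the actual point of the hypothesis $b\le 29$ is that the second regime never occurs there. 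The paper's proof makes this explicit: substituting $\tilde h_b$ shows $\tilde h_b < \tfrac1L \iff b < b_0$ with $b_0$ given by \eqref{eqn:ms2gd:b0}, and for $c=1/e$ and $n,\kappa\gg e$ one computes $b_0 \approx 8(e+1)\approx 29.75$. Hence every $b\le 29$ lands in the first regime and only \eqref{eq:ms2gd:vfewdfwaefvawvgfeefewafa} is ever invoked. Your closing speculation about where $29$ comes from is therefore also off the mark: it is not the largest $b$ for which the denominator of \eqref{eq:ms2gd:fasfawefwafewa} is positive (if anything, that denominator is positive only for $b$ \emph{above} the threshold, as Claim~\#2 in the proof of Theorem~\ref{thm:ms2gd:optimalM} shows), but the largest integer below the regime boundary $b_0$.

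The rest of your argument is sound and matches the paper's: $k=\lceil\log(1/\epsilon)\rceil$ gives $c=\epsilon^{1/k}\ge e^{-1}$ (the paper instead plugs $c=1/e$ directly into the formulas \eqref{eqn:ms2gd:maxiter}--\eqref{eqn:ms2gd:stepsizehb} and verifies the rate equals $1/e$ exactly, a cosmetic difference), your first-regime estimate $b\,m_b^* \le C_1\kappa + C_2\sqrt{\kappa}$ via $b\alpha(b)\le 1$ and $b^2\alpha(b)\le b\le 29$ is correct, and the per-epoch cost accounting $n+2b\,m_b$ is right. To close the gap you only need to replace the second-regime "check" with the computation of $b_0$ from \eqref{eqn:ms2gd:b0} showing $b\le 29 < b_0$.
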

\begin{proof}
Available in Section \ref{thm:ms2gd:proof3}.
\end{proof}

This result shows that as long as the mini-batch size is small enough, the total work performed by mS2GD is the same as in the  $b=1$ case. If the $b$ updates can be performed in parallel, then this leads to linear speedup.

\subsection{Comparison with Acc-Prox-SVRG}

The Acc-Prox-SVRG~\cite{nitanda2014stochastic} method of Nitanda, which  was not available online before the first version of this work appeared on arXiv, incorporates both a mini-batch scheme and Nesterov's acceleration~\cite{nesterov2004convex, nesterov2007acc}. 
The author claims that when $b< \lceil b_0 \rceil$, with the threshold $b_0$ defined as $\frac{8\sqrt{\kappa}n}{\sqrt{2} p(n-1) +8\sqrt{\kappa}}$, the overall complexity of the method is 
\[\mathcal{O} \left( 
\left( 
n + \frac{n-b}{n-1}\kappa 
\right) 
\log(1/\epsilon) 
\right);\]
and otherwise it is
\[\mathcal{O} \left( 
\left( 
n + b\sqrt{\kappa} 
\right) 
\log(1/\epsilon)  
\right).\]
 This suggests that acceleration will only be realized when the mini-batch size is large, while for small $b$, Acc-Prox-SVRG achieves the same overall complexity, $\mathcal{O} \left((n+\kappa)\log(1/\epsilon) \right)$, as mS2GD.
 
We will now take a closer look at  the theoretical results given by Acc-Prox-SVRG and mS2GD, for each $\epsilon\in(0,1)$. In particular, we shall numerically minimize the total work of mS2GD, i.e.,
\[\left(
 n+2b\lceil m_b\rceil
 \right)
 \left\lceil
\log(1/\epsilon) / \log(1/c)
  \right\rceil,\]
  over $c\in(0,1)$ and $h$ (compare this with \eqref{eqn:ms2gd:complexity}); and compare these results with similar fine-tuned quantities for Acc-Prox-SVRG.\footnote{$m_b$ is the best choice of $m$ for Acc-Prox-SVRG and mS2GD, respectively. Meanwhile, $h$ is within the safe upper bounds for both methods.}

Fig.~\ref{figure:ms2gd:acc-prox-svrg} illustrates these theoretical complexity bounds  for both ill-conditioned and well-conditioned data. With small-enough mini-batch size $b$, mS2GD is better than Acc-Prox-SVRG. However, for a large mini-batch size $b$, the situation reverses because of the acceleration inherent in Acc-Prox-SVRG.\footnote{We have experimented with different values for $n, b$ and $\kappa$, and this result always holds.} Plots with $b=64$ illustrate the cases where we cannot observe any differences between the methods.

\begin{figure*}[htbp]
\centering
  \epsfig{file=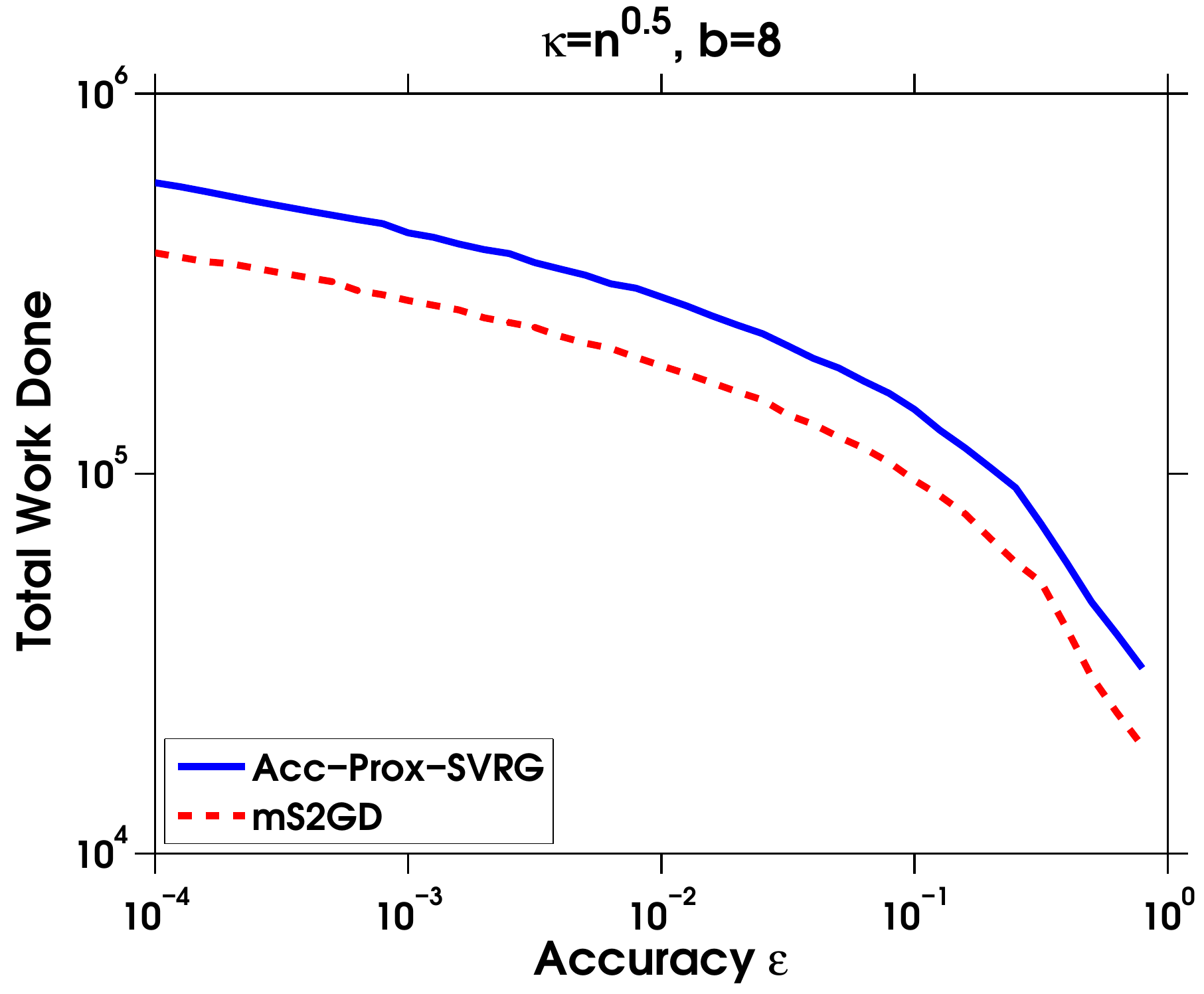,width=4.5cm,height=3.8cm}
  \epsfig{file=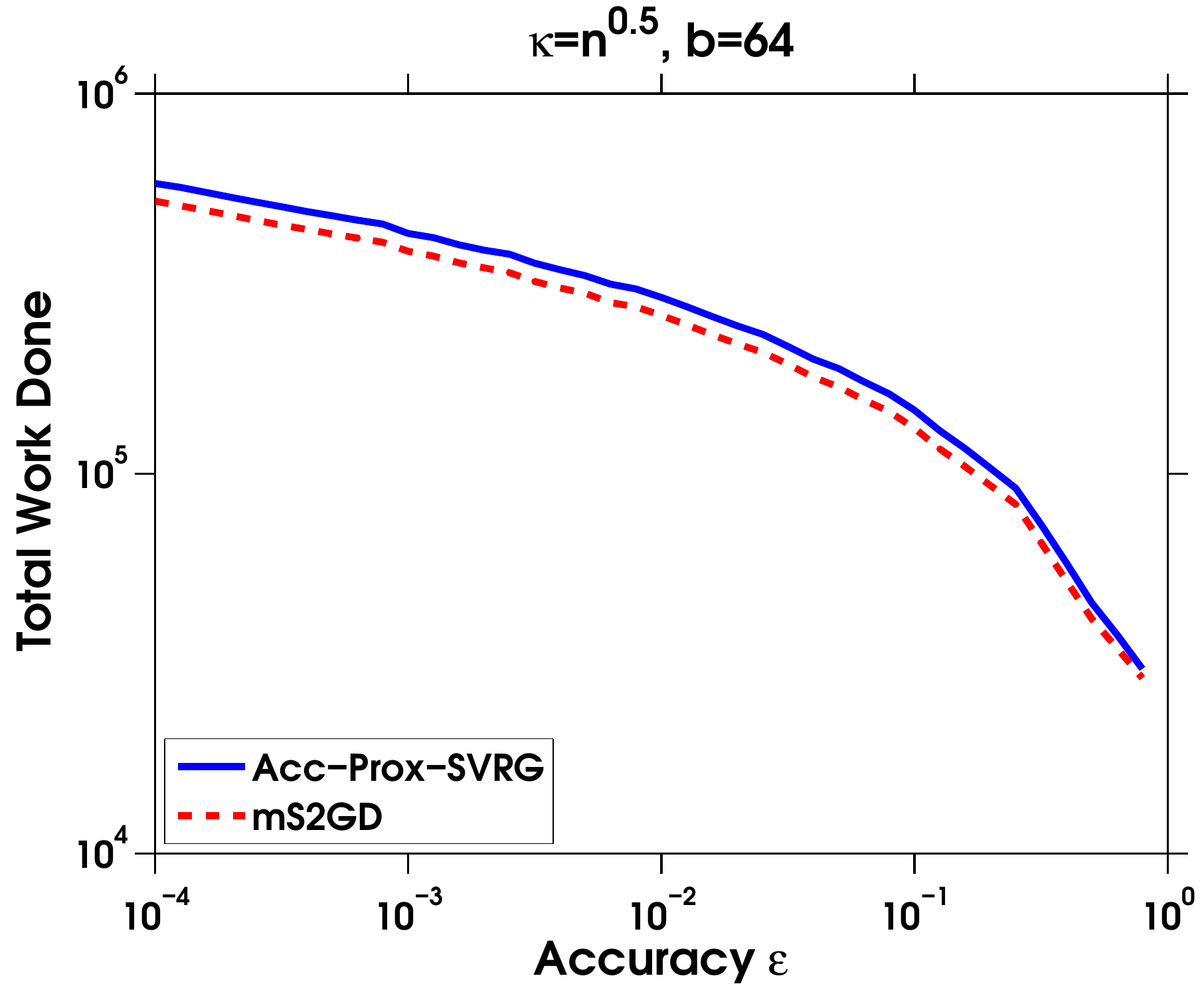,width=4.5cm,height=3.8cm}
  \epsfig{file=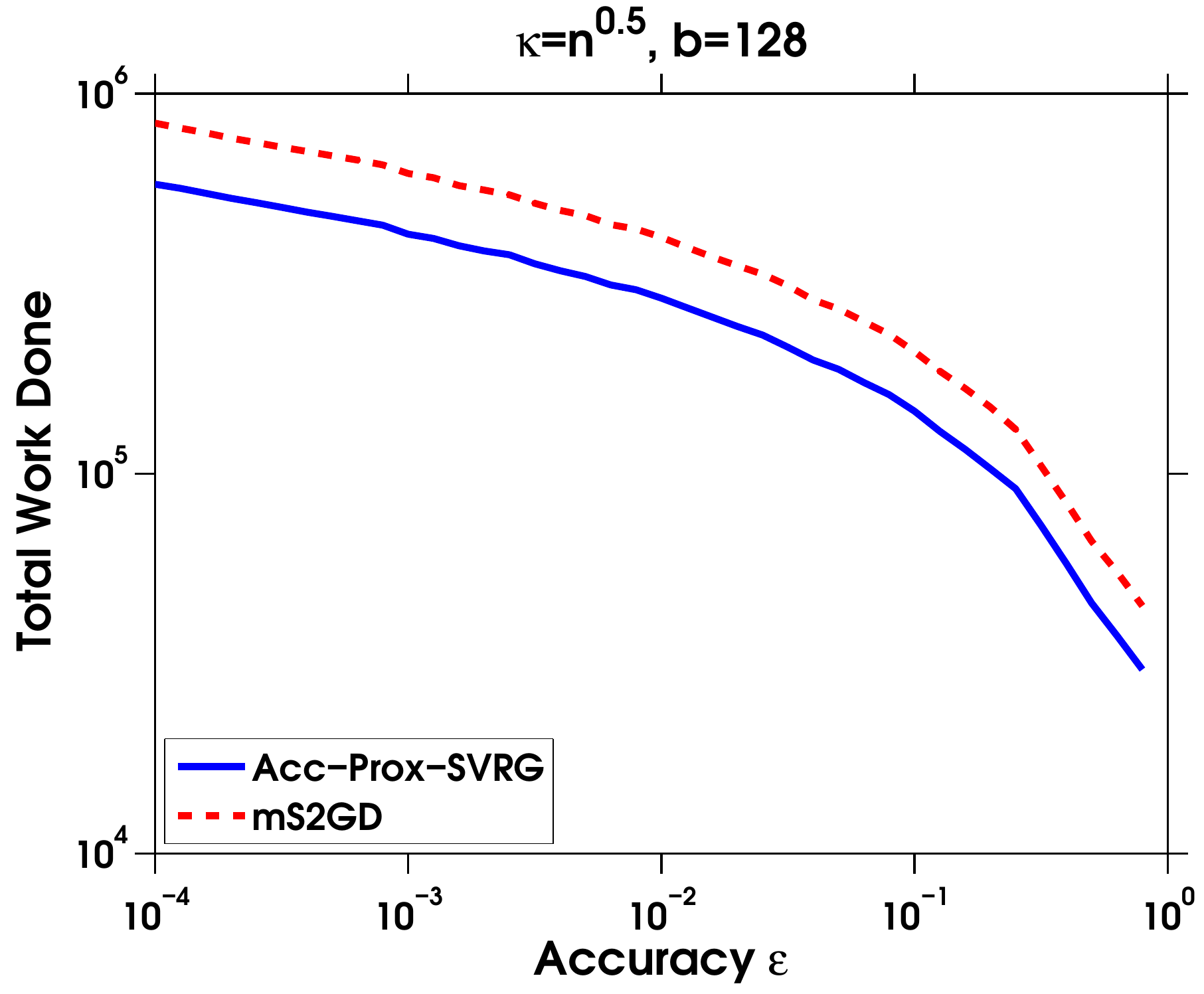,width=4.5cm,height=3.8cm}
  \epsfig{file=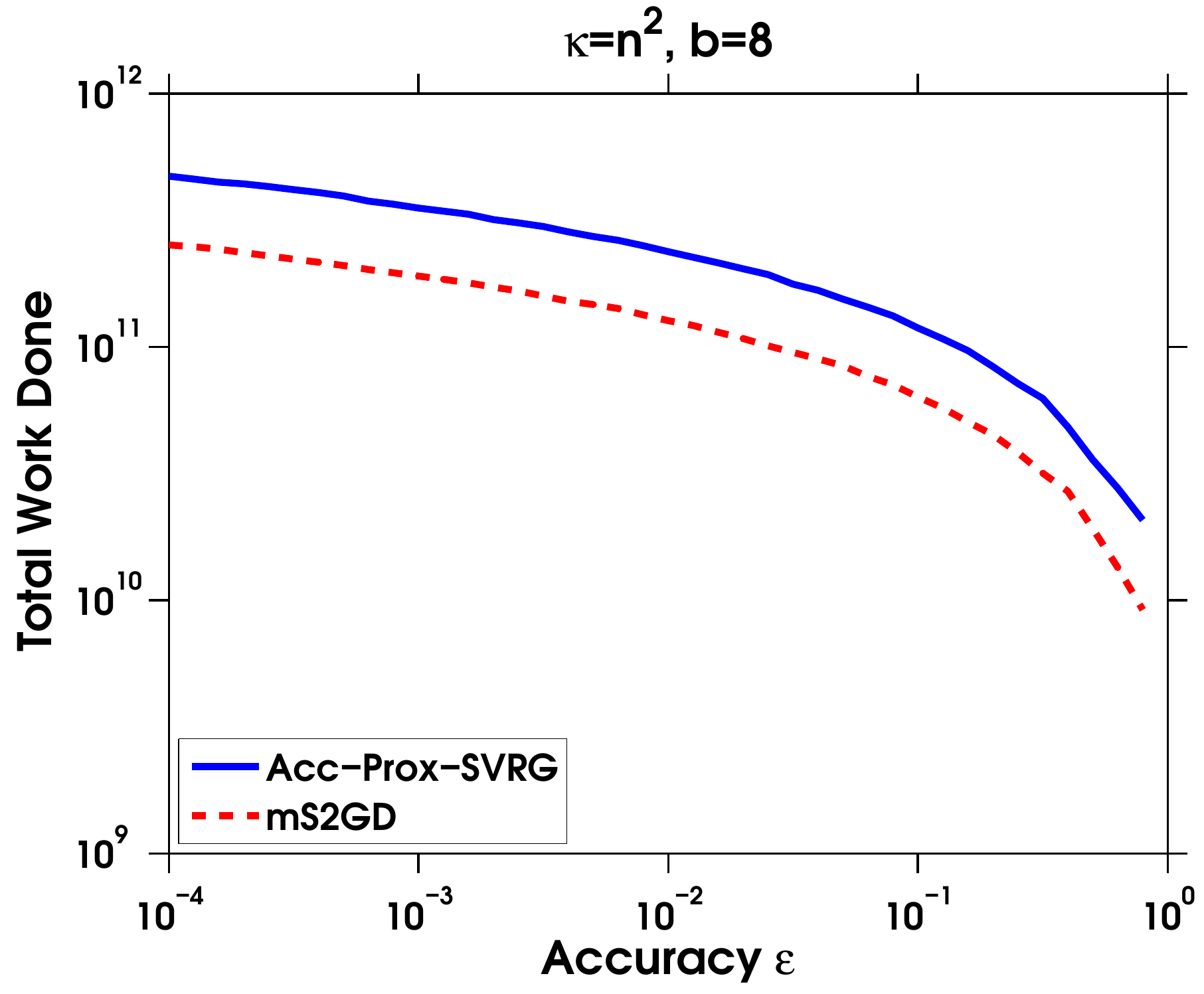,width=4.5cm,height=3.8cm}
  \epsfig{file=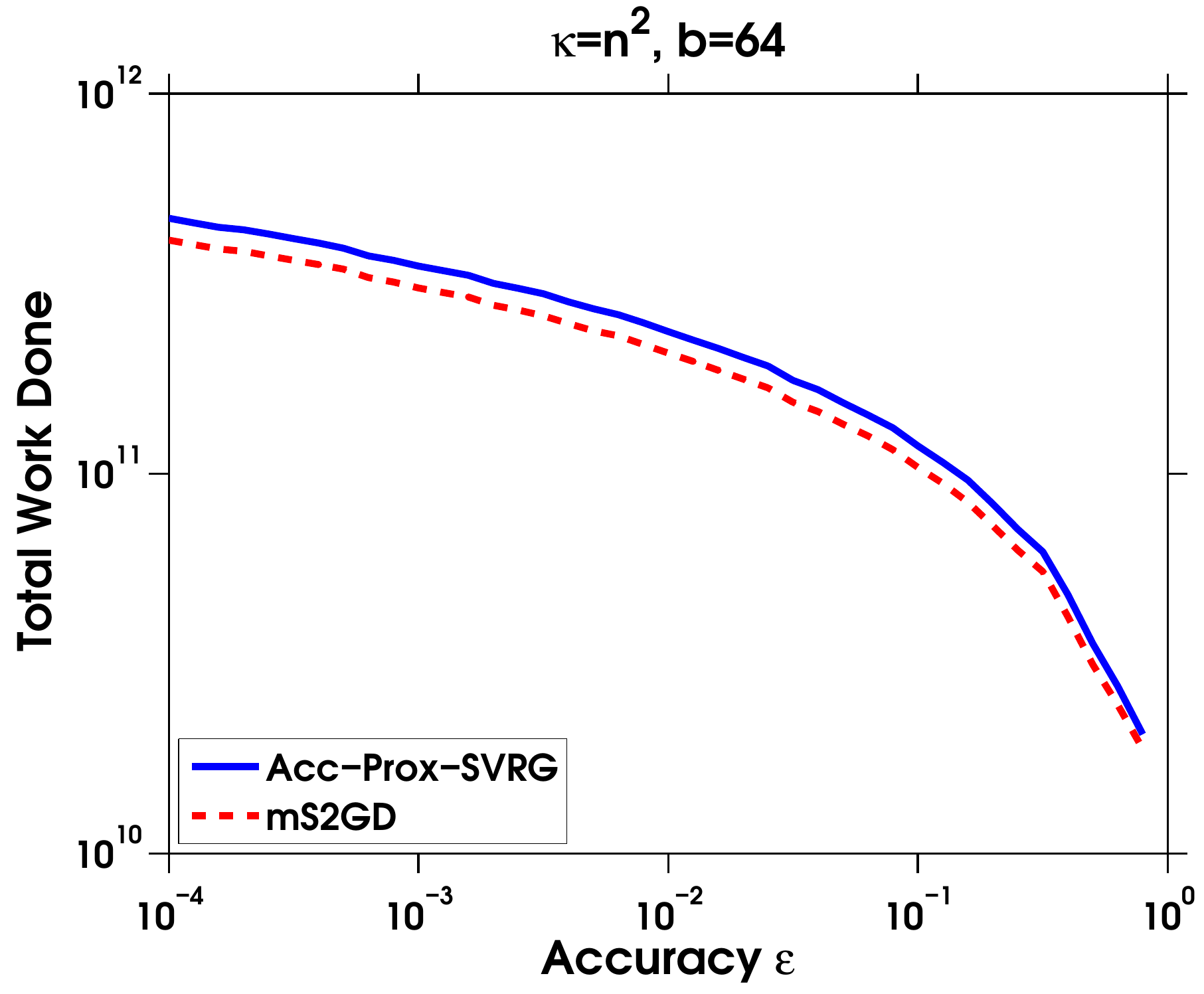,width=4.5cm,height=3.8cm}
  \epsfig{file=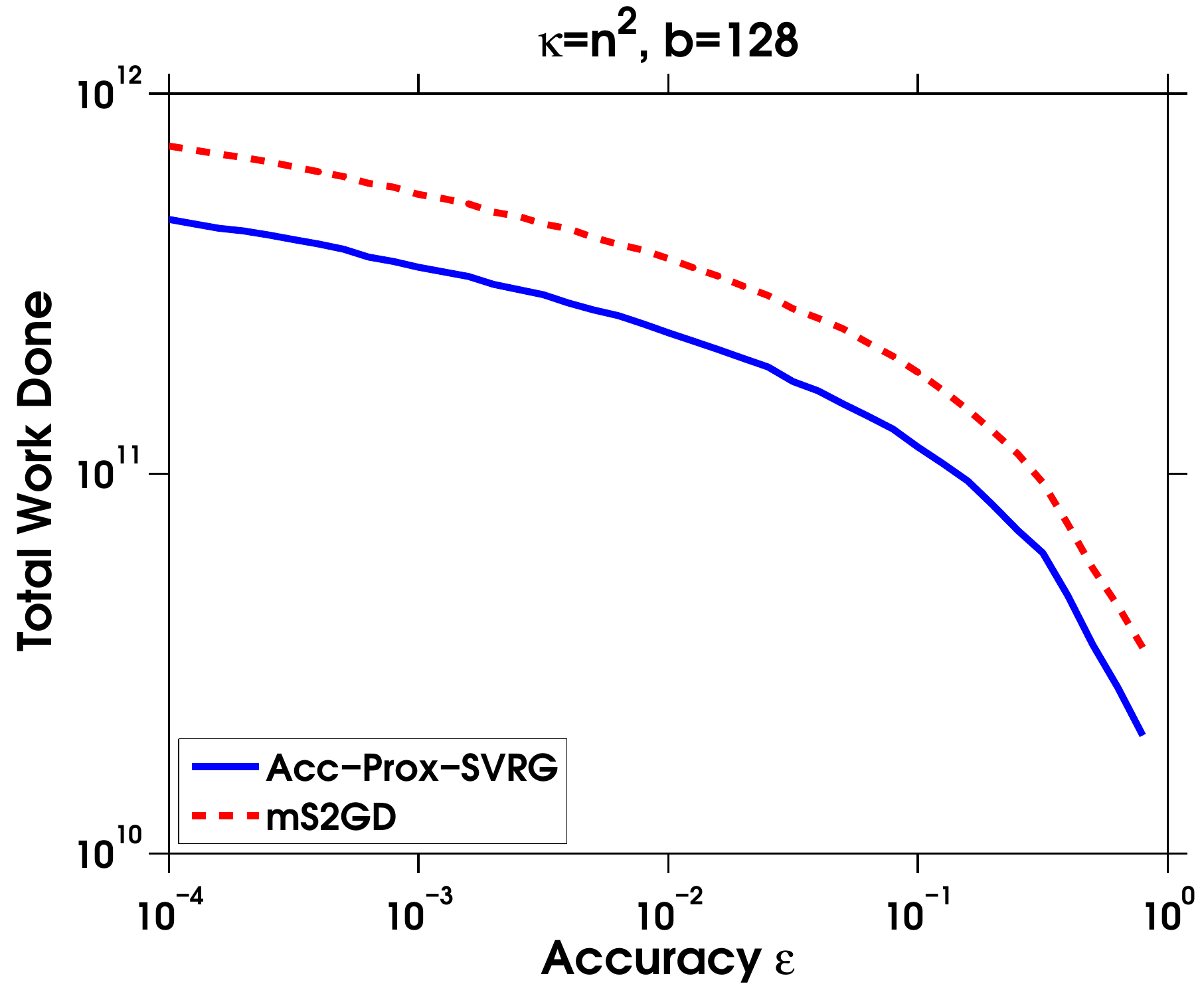,width=4.5cm,height=3.8cm}
\caption{\footnotesize Complexity of Acc-Prox-SVRG and mS2GD in terms of total work done for  $n=10,000$, and small ($\kappa=\sqrt{n}$; top row) and large ($\kappa = n^2$; bottom row) condition number.}
 \label{figure:ms2gd:acc-prox-svrg}
 \end{figure*}
 
Note however that accelerated methods are very prone to error accumulation. Moreover, it is not clear that an efficient implementation of Acc-Prox-SVRG is possible for sparse data.  As shall show  in the next section, mS2GD allows for such an implementation.


\section{Efficient implementation for sparse data}
\label{sec:ms2gd:implementation_sparse}

Let us make the following assumption about the structure of functions $f_i$ in \eqref{eq:ms2gd:Px2}.
\begin{assumption}
The functions $f_i$ arise as the composition of a univariate smooth function $\ell_i$ and an inner product with a datapoint/example $a_i \in \R^d$:
$f_i(w) = \ell(a_i^Tw)$  for $i=1,\dots,n$.
\label{asm:ms2gd:structure}
\end{assumption}
Many functions of common practical interest satisfy this assumption including linear and logistic regression. 
Very often, especially for large scale datasets, the data are extremely sparse, i.e.
the vectors $\{a_i\}$ contains many zeros.
Let us denote the number of non-zero coordinates of $a_i$ by $\omega_i = \|a_i\|_0 \leq d $
and the set of indexes corresponding to  non-zero coordinates by
$\support(a_i)=\{j : (a_i)_{j} \neq 0\} $,
where $(a_i)_{j}$ denotes the $j^{th}$ coordinate of vector $a_i$.
\begin{assumption}
\label{asm:ms2gd:structure2}
The regularization function $R(w)$ is separable in coordinates of $w$.
\end{assumption}
This includes the most commonly used  regularization functions as 
$\frac{\lambda}{2} \|w\|^2$
or $\lambda \|w\|_1$.

Let us take a brief detour and look at the classical SGD algorithm
with $R(w) = 0$. The update would be of the form
\begin{equation}
w^{k+1} \gets w^{k} - h \nabla \ell_i(a_i^T w^k) a_i = w^{k} -  h \nabla f_i(w^k). 
\label{eq:ms2gd:adsfawlfcawefcava}
\end{equation}
If evaluation of the univariate function $\nabla \ell_i$ takes $O(1)$ amount of work, the computation of $\nabla f_i$ will account for $O(\omega_i)$ work.
Then the update 
\eqref{eq:ms2gd:adsfawlfcawefcava}
would cost $O(\omega_i)$ too, which implies that the classical SGD method can naturally benefit from sparsity of data.

Now, let us get back to the Algorithm~\ref{alg:ms2gd:mS2GD}.
Even under the sparsity assumption and structural Assumption \ref{asm:ms2gd:structure}
the Algorithm~\ref{alg:ms2gd:mS2GD} suggests that each inner iteration will cost $O(\omega + d ) \sim O(d)$
because $g^k$ is in general fully dense and hence in 
Step \ref{ms2gd:step:9} of Algorithm~\ref{alg:ms2gd:mS2GD}
we have to update all $d$ coordinates.

However, in this Section, we will 
introduce and describe the implementation trick 
which is based on ``lazy/delayed'' updates.
The main idea of this trick is not to perform Step  \ref{ms2gd:step:9}
of Algorithm~\ref{alg:ms2gd:mS2GD}
for all coordinates, but only for 
coordinates $j \in \cup_{i \in A^{kt}} \support(a_i)$.
The efficient algorithm is described in Algorithm~\ref{alg:ms2gd:mS2GDsparse}. 
\begin{algorithm}[H]
\caption{"Lazy" updates for mS2GD (these replace steps 6--10 in Algorithm~\ref{alg:ms2gd:mS2GD})}
\label{alg:ms2gd:mS2GDsparse}
\begin{algorithmic}[1]
    \State $\chi_j \gets 0$ for $j=1,2,\dots,d$
    \For {$t=0$ to $t^k-1$}
	\State Choose mini-batch $A^{kt}\subseteq  [n]$ of size $b$; uniformly at random
        \For {$i\in A^{kt}$}
            \For {$j\in \support(a_i)$}
            \label{ms2gd:asdfasfsafdsafa}
                \State $y^{k,t}_{j} \gets\prox^{t-\chi_j} [ y^{k,\chi_{j}}_{j}, g^k_j, R, h ]$ 
                \State $\chi_j \gets t$
            \EndFor
            \label{ms2gd:asdfasfsafdsafa2}
        \EndFor
        \State $y^{k,t+1} \gets y^{k,t} -\frac{ h}{b}\sum_{i \in A^{kt}} \left( \nabla \ell_{i}(a_i^T y^{k,t}) - \nabla \ell_{i}(a_i^Tw^{k}) \right) a_i$ 
     \EndFor
        \For {$j = 1$ to $d$}
        \label{ms2gd:asdfsafdasfa}
            \State $y^{k,t^k}_j \gets \prox^{t^k-\chi_j}
            [y^{k,\chi_j}_j, g^k_j, R, h]$
        \EndFor 
\label{ms2gd:asdfsafdasfa2}        
\end{algorithmic}
\end{algorithm}

To explain the main idea behind the lazy/delayed updates,
consider that it happened that
during the fist $\tau$ iterations of the inner loop, 
the value of the fist coordinate in all datapoints which we have used was 0.
Then given the values of $y^{k,0}_1$
and $g^k_1$ we can compute the  true value
of  $y^{k,t}_1$ easily.
We just need to apply the $\prox$ operator $\tau$ times, i.e. 
$
y^{k,\tau}_1
 = \prox^\tau_1 [ y^{k,0} , g^k ,R,h],
$
where 
the function 
$\prox^\tau_1$ is described in Algorithm \ref{ms2gd:asdfsafdsaf}.
\begin{algorithm}
\caption{$\prox^{\tau}_j[y,g,R,h]$}
\label{ms2gd:asdfsafdsaf}
\begin{algorithmic}
\State $\tilde y^0 = y$
\For{
    $s=1,2,\dots, \tau$}
\State $\tilde y^s
  \gets 
  \prox_{hR}( \tilde y^{s-1}  
   - h 
   g  )$
  \EndFor
\State {\bf return} $\tilde y^\tau_j$
\end{algorithmic}
\end{algorithm}

The vector $\chi$ in Algorithm \ref{alg:ms2gd:mS2GDsparse}
is enabling us to keep track of the iteration when corresponding coordinate of $y$ was updated for the last time.
E.g. if in iteration $t$ we will be updating the $1^{st}$ coordinate for the first time, $\chi_1=0$
and after we compute and update the true value of $y_1$, its value will be set to 
$\chi_1=t$.
Lines \ref{ms2gd:asdfasfsafdsafa}-\ref{ms2gd:asdfasfsafdsafa2}
in 
 Algorithm \ref{alg:ms2gd:mS2GDsparse}
 make sure 
 that the coordinates of $y^{k,t}$ which will be read and used afterwards
 are up-to-date. 
At the end of the inner loop, we
 will updates all coordinates of $y$ to the most recent value (lines \ref{ms2gd:asdfsafdasfa}-\ref{ms2gd:asdfsafdasfa2}). Therefore, those lines make sure that the $y^{k,t^k}$ of Algorithms \ref{alg:ms2gd:mS2GD} and \ref{alg:ms2gd:mS2GDsparse} will be the same.

However, one could claim that we are not saving any work, as when needed, we still have to compute the proximal operator many times. Although this can be true for a general function $R$, for particular cases
$R(w) = \frac{\lambda}{2} \|w\|^2$
and $R(w) = \lambda \|w\|_1^2$, we provide following Lemmas which give a closed form expressions for the $\prox^\tau_j$ operator.

\begin{lemma}[Proximal Lazy Updates with $\ell_2$-Regularizer]
\label{lemma:ms2gd:L2regularizer}
If $R(w) = \frac{\lambda}{2} \| w \|^2$
with $\lambda > 0$ then
\begin{align*}
\prox_j^\tau[ y, g, R, h]
= \beta^{\tau} y_j - \frac{ h\beta}{1-\beta}
                \left(1 - \beta^{\tau} \right)  g_j ,
\end{align*}
where $\beta \eqdef 1/(1+\lambda  h)$.
\end{lemma}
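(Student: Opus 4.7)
The plan is to exploit the separability of $R$ together with the simple closed form of the proximal operator for an $\ell_2$ penalty, reducing the claim to solving a scalar linear recurrence.

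First, since $R(w) = \frac{\lambda}{2}\|w\|^2$ is separable across coordinates, applying $\prox_{hR}$ to a vector acts componentwise, so tracking the evolution of the single coordinate $j$ under the iteration $\tilde y^s = \prox_{hR}(\tilde y^{s-1} - hg)$ is equivalent to iterating the corresponding scalar map. A direct computation of the proximal operator gives
$$\prox_{hR}(z) = \argmin_w\left\{\tfrac{1}{2}\|w-z\|^2 + \tfrac{h\lambda}{2}\|w\|^2\right\} = \frac{z}{1+h\lambda} = \beta z,$$
where $\beta = 1/(1+h\lambda)$, by setting the gradient of the strongly convex objective to zero.

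Next, writing $u^s \eqdef \tilde y^s_j$ and using the closed form above componentwise, the recurrence becomes the scalar affine map
$$u^s = \beta\,(u^{s-1} - h g_j), \qquad u^0 = y_j.$$
Unrolling this recurrence yields
$$u^\tau = \beta^\tau y_j - h g_j \sum_{s=1}^{\tau} \beta^s.$$
Evaluating the geometric sum $\sum_{s=1}^\tau \beta^s = \beta(1-\beta^\tau)/(1-\beta)$ gives the asserted formula
$$\prox_j^\tau[y,g,R,h] = \beta^\tau y_j - \frac{h\beta}{1-\beta}(1-\beta^\tau)\,g_j.$$

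There is essentially no obstacle here: the only thing to be careful about is confirming that the coordinate-wise decoupling is rigorous (which follows immediately from the separability of both $R$ and the gradient shift $-hg$) and that the geometric series is summed correctly. Since $\beta \in (0,1)$ for $\lambda, h > 0$, the denominator $1-\beta$ is nonzero, so no degenerate cases arise.
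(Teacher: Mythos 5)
Your proof is correct and follows essentially the same route as the paper's: reduce to the scalar affine recurrence $u^s = \beta(u^{s-1} - hg_j)$ via the closed form $\prox_{hR}(z)=\beta z$, unroll it, and sum the geometric series. The only cosmetic difference is that you make the coordinate-wise decoupling explicit up front, whereas the paper works with the vector recurrence and extracts coordinate $j$ at the end.
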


\begin{lemma}[Proximal Lazy Updates with $\ell_1$-Regularizer]
\label{lemma:ms2gd:L1regularizer}
Assume that $R(w) = \lambda \|w\|_1$
with $\lambda > 0$.
Let us define 
$M$ and $m$ as follows,
\begin{equation*}
M=[\lambda+ g_j]h,  \qquad m = -[\lambda - g_j]h,
\end{equation*}
and let $[\cdot]_+ \eqdef \max\{\cdot, 0\}$.
Then the value of
$\prox_j^\tau[ y, g, R, h]$
can be expressed based on one of the 3 situations described below:
\begin{enumerate}
\item If $g_j \geq \lambda$, then by letting $p \eqdef \left\lfloor \frac{y_j}{M}\right\rfloor$, the operator can be defined as
\begin{align*}
\prox_j^\tau[ y, g, R, h] =
\begin{cases}
y_j - \tau M, &\text{if }p\geq\tau,\\
\min\{y_j - [p]_+ M, m\} -(\tau - [p]_+)m, &\text{if }p<\tau.
\end{cases}
\end{align*}

\item If $-\lambda < g_j < \lambda$, then the operator can be defined as
\begin{align*}
\prox_j^\tau[ y, g, R, h] &
=\begin{cases}
\max\{ y_j - \tau M, 0\}, & \text{if } y_j \geq 0,\\
\min\{ y_j - \tau m, 0\}, & \text{if } y_j < 0.
\end{cases}
\end{align*}

\item If $g_j \leq -\lambda$, then by letting $q \eqdef \left\lfloor\frac{y_j}{m}\right\rfloor$, the operator can be defined as
\begin{align*}
&\prox_j^\tau[ y, g, R, h]  
\\&=\begin{cases}
y_j - \tau m, &\text{if }q\geq\tau,\\
\max\{y_j - [q]_+ m, M\} -(\tau-[q]_+)M, &\text{if }q<\tau.
\end{cases}
\end{align*}
\end{enumerate}

\end{lemma}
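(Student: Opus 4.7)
}
My plan is to reduce the vectorial iteration $\tilde y^{s}=\prox_{hR}(\tilde y^{s-1}-hg)$ to a scalar recursion and then unroll it by a dynamical-systems style case analysis. Since $R(w)=\lambda\|w\|_1$ is separable by Assumption~\ref{asm:ms2gd:structure2}, $\prox_{hR}$ acts coordinate-wise as the soft-thresholding operator $S_{h\lambda}(z)=\mathrm{sign}(z)\max\{|z|-h\lambda,0\}$. Fixing coordinate $j$ and dropping subscripts, the definition of $\prox^{\tau}_{j}$ in Algorithm~\ref{ms2gd:asdfsafdsaf} becomes the one-dimensional recursion $\tilde y^{s}=S_{h\lambda}(\tilde y^{s-1}-hg)$ with $\tilde y^{0}=y$, which is all I need to analyze.

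Next I would rewrite one step of this recursion using the thresholds $m=(g-\lambda)h$ and $M=(g+\lambda)h$ from the statement. A direct computation gives the trichotomy
\begin{equation*}
\tilde y^{s}=\begin{cases}\tilde y^{s-1}-M,& \tilde y^{s-1}>M,\\ 0,& m\le \tilde y^{s-1}\le M,\\ \tilde y^{s-1}-m,& \tilde y^{s-1}<m,\end{cases}
\end{equation*}
so the orbit of the iteration is entirely controlled by how $\tilde y^{s-1}$ sits relative to $m$ and $M$, and these in turn are pinned down by the sign regime of $g$ treated in the three bullets of the lemma.

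I would then handle each of the three regimes in turn. In the middle regime $-\lambda<g<\lambda$ one has $m<0<M$, so if $y\ge 0$ the iterates stay nonnegative and decrement by $M$ until they land in $[0,M)$, at which point the $0$-branch absorbs them, and all subsequent applications of $S_{h\lambda}(-hg)$ yield $0$; a straightforward induction on $\tau$ shows $\tilde y^{\tau}=\max\{y-\tau M,0\}$. The case $y<0$ is symmetric and gives $\min\{y-\tau m,0\}$. In the regime $g\ge \lambda$ both $m,M\ge 0$, with $M>m$; starting from $y\ge 0$ the iterates decrement by $M$ as long as they exceed $M$, which happens for exactly $p=\lfloor y/M\rfloor$ steps (so if $p\ge \tau$ we simply get $y-\tau M$). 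At step $p+1$ the value $\tilde y^{p}=y-pM\in[0,M)$ falls into either $[m,M]$ (absorbed to $0$) or $[0,m)$ (decrement by $m$); in both subcases every subsequent step decrements by $m$, because after absorption $S_{h\lambda}(0-hg)=-m$ and the orbit $-km$ stays strictly below $m$. Combining the two subcases gives the ``$\min\{y-[p]_+M,m\}-(\tau-[p]_+)m$'' expression, with the $[p]_+$ covering $y<0$ where the $M$-phase is skipped entirely. The case $g\le -\lambda$ follows from the symmetry $(g,y)\mapsto(-g,-y)$, which interchanges $M$ with $-m$ and exchanges the roles of $p$ and $q$.

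The main obstacle I foresee is bookkeeping the boundary cases (e.g.\ $y=pM$ exactly, or $\tilde y^{p}=m$ exactly, where the ``$=0$'' branch competes with the linear branch). These can be absorbed either way since they agree in the limit, but the cleanest route is to prove each formula by induction on $\tau$, splitting on the sign of $\tilde y^{s-1}-hg$ against $\pm h\lambda$; once the orbit structure above is established the induction step is a one-line verification of which branch of $S_{h\lambda}$ is active. This is also why it is natural to merge the two sub-subcases in the $g\ge\lambda$ regime through the $\min\{\cdot,m\}$: both produce the same linear recurrence after one transition step.
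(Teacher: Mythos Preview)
Your proposal is correct and follows essentially the same approach as the paper: derive the one-step trichotomy for the scalar soft-thresholding recursion in terms of $m$ and $M$, split into the three sign regimes of $g_j$ relative to $\pm\lambda$, and unroll the orbit in each regime, merging subcases via the $\min$/$\max$. The paper writes out the subcases of the $g_j\ge\lambda$ and $g_j\le-\lambda$ regimes explicitly before collapsing them, whereas you invoke the symmetry $(g,y)\mapsto(-g,-y)$ for the third case and anticipate the $\min\{\cdot,m\}$ merge earlier, but these are purely presentational differences.
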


The proofs of Lemmas~\ref{lemma:ms2gd:L2regularizer} and~\ref{lemma:ms2gd:L1regularizer} are available in Section~\ref{section:ms2gd:proxl}.

{\em Remark:} Upon completion of this work, we learned that similar ideas of lazy updates were proposed in \cite{langford2009} and \cite{carpenter2008} for online learning and multinomial logistic regression, respectively. However, our method can be seen as a more general result applied to a stochastic gradient method and its variants under Assumptions~\ref{asm:ms2gd:structure} and \ref{asm:ms2gd:structure2}.


\section{Experiments}

In this section we perform numerical experiments to illustrate the properties and performance of our algorithm. In Section~\ref{sec:ms2gd:mS2GD_properties} we study the total workload and parallelization speedup of mS2GD as a function of the mini-batch size $b$. In  Section~\ref{sec:ms2gd:mS2GD_algorithms} we compare  mS2GD with several other algorithms.
Finally, in Section~\ref{sec:ms2gd:deblur}  we briefly illustrate that our method can be efficiently applied to a deblurring problem.

In Sections~\ref{sec:ms2gd:mS2GD_properties} and \ref{sec:ms2gd:mS2GD_algorithms} we conduct experiments with $R(w)=\tfrac{\lambda}{2}\|w\|^2$ and $f$ of the form \eqref{eq:ms2gd:Px2}, where $f_i$ is the logistic loss function:
\begin{equation}
\label{eq:ms2gd:logisticL2}
f_i(w) = \log [1+\exp(-b_i a_i^Tw)].
\end{equation}
These functions are often used in machine learning, with  $(a_i, b_i) \in \R^d\times \{+1,-1\}$, $i=1,\dots,n$, being a training dataset of example-label pairs. The resulting optimization problem  \eqref{eq:ms2gd:Px1}+\eqref{eq:ms2gd:Px2} takes the form
\begin{equation}
\label{eq:ms2gd:L2problem}
P(w) = \frac1n\sum_{i=1}^n f_i(w) + \frac{\lambda}{2}\| w \|^2,
\end{equation}
and is used in machine learning for binary classification. In these sections we have performed experiments on four publicly available binary classification datasets, namely \emph{rcv1,  news20, covtype}~\footnote{\emph{rcv1, covtype} and \emph{news20} are available at \url{http://www.csie.ntu.edu.tw/~cjlin/libsvmtools/datasets/}.} and \emph{astro-ph}~\footnote{Available at \url{http://users.cecs.anu.edu.au/~xzhang/data/}.}.

In the logistic regression problem, the Lipschitz constant of function $\nabla f_i$ is equal to $L_i = \|a_i\|^2/4$. Our analysis assumes (Assumption~\ref{ass:ms2gd:ass1}) the same constant $L$ for all functions. Hence, we have $L = \max_{i\in [n]} L_i$. We set the regularization parameter $\lambda = \frac1n$ in our experiments, resulting in the problem having the condition number $\kappa = \frac{L}{\mu} = \mathcal{O}(n)$. In Table~\ref{table:ms2gd:datasets} we  summarize  the four datasets, including the sizes $n$, dimensions $d$, their sparsity levels as a proportion of nonzero elements, and the Lipschitz constants $L$.

\begin{table}[H]
\small
\centering
\begin{tabular}{|c|c|c|c|c|c|}
\hline
Dataset  & $n$ & $d$ & Sparsity & $L$  \\
\hline \hline 
\emph{rcv1} & 20,242 & 47,236 & 0.1568\% & 0.2500\\
\hline 
\emph{news20} & 19,996  & 1,355,191  & 0.0336\% & 0.2500\\
\hline 
\emph{covtype} & 581,012 &  54  & 22.1212\% & 1.9040\\
\hline 
\emph{astro-ph} & 62,369  & 99,757  & 0.0767\% & 0.2500\\
\hline
\end{tabular}
\captionsetup{justification=centering,margin=0.5cm}
\caption{Summary of datasets used for experiments.}
\label{table:ms2gd:datasets}
\end{table}

\subsection{Speedup of mS2GD}
\label{sec:ms2gd:mS2GD_properties}

Mini-batches allow mS2GD to be accelerated on a  computer with a parallel processor. In Section~\ref{sec:ms2gd:speedup}, we have shown in  that up to some threshold mini-batch size, the total workload of mS2GD remains unchanged. Figure~\ref{fig:ms2gd:MBSpeedup} compares the best performance of mS2GD used with various  mini-batch sizes on datasets \emph{rcv1} and \emph{astro-ph}. An effective  pass (through the data) corresponds to $n$ units of work. Hence, the evaluation of a gradient of $f$ counts as one effective pass. In both cases, by increasing the mini-batch size to $b=2, 4, 8$, the performance of mS2GD is the same or better than that of S2GD ($b=1$) without any parallelism. 

\begin{figure*}[!htbp]
   \centering
    \epsfig{file=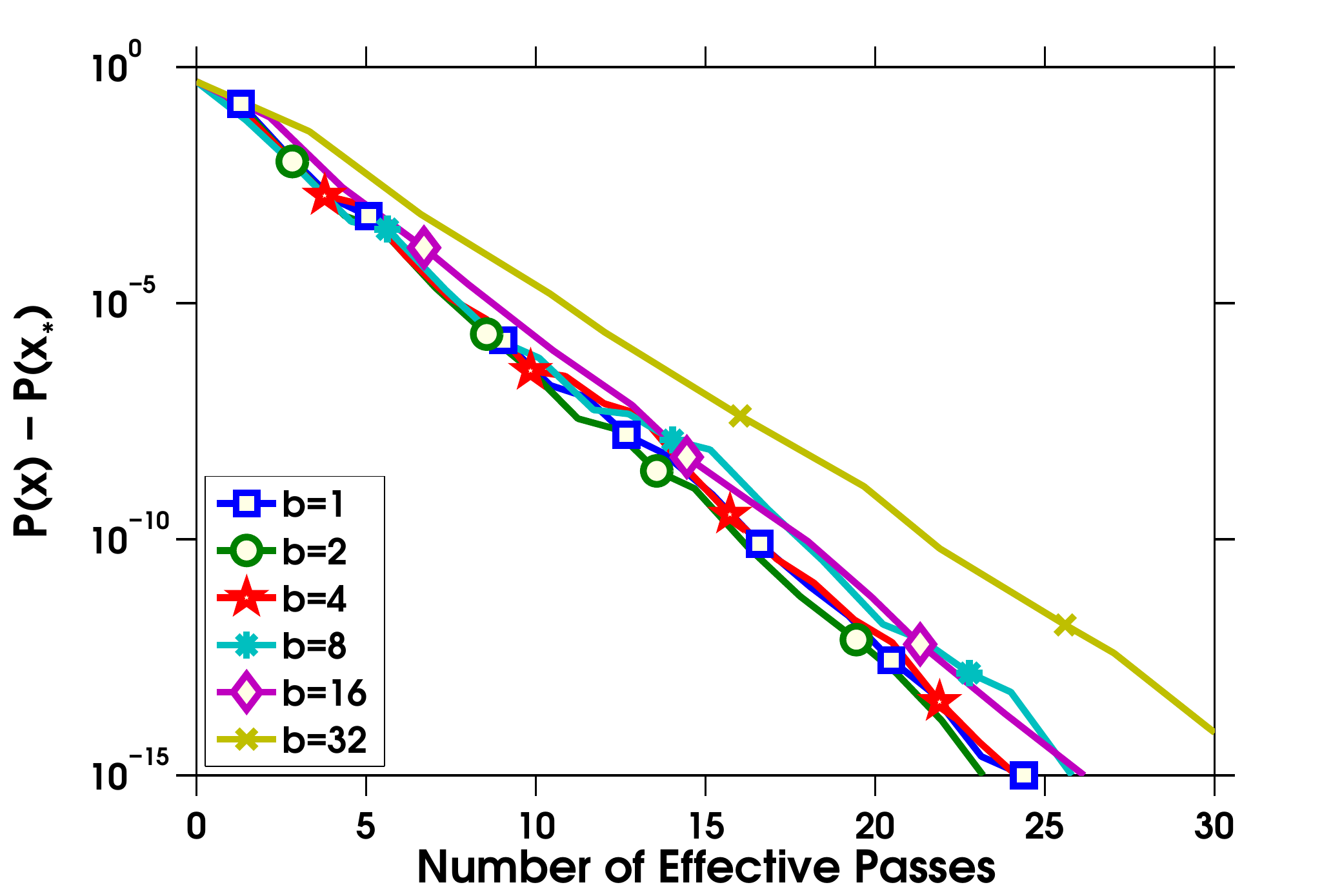,width=0.40\textwidth}
    \epsfig{file=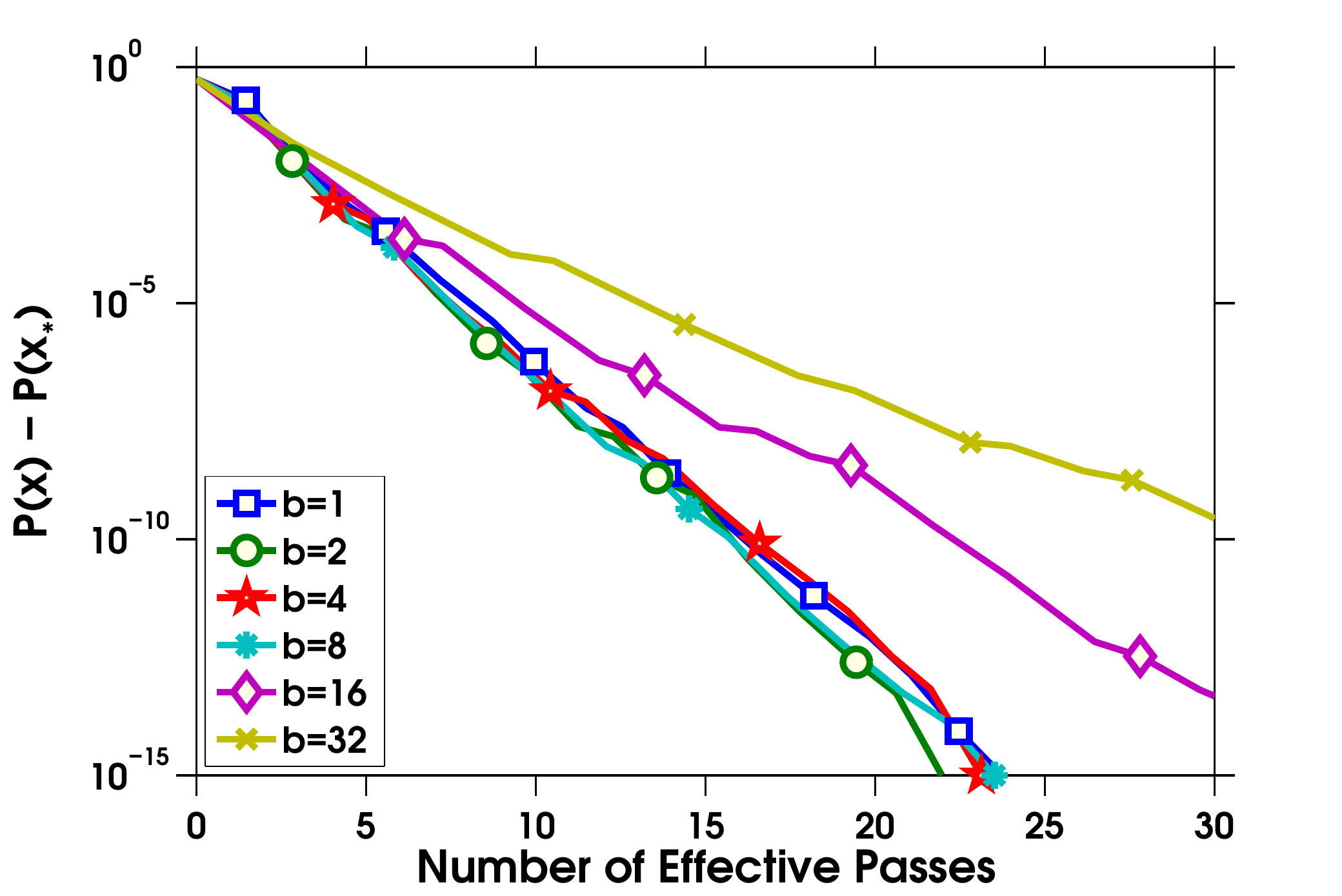,width=0.40\textwidth}
    \caption{\footnotesize Comparison of mS2GD with different mini-batch sizes on \emph{rcv1} (left) and \emph{astro-ph} (right).}
  \label{fig:ms2gd:MBSpeedup} 
\end{figure*}

\begin{figure*}[!htbp]
    \centering
    \epsfig{file=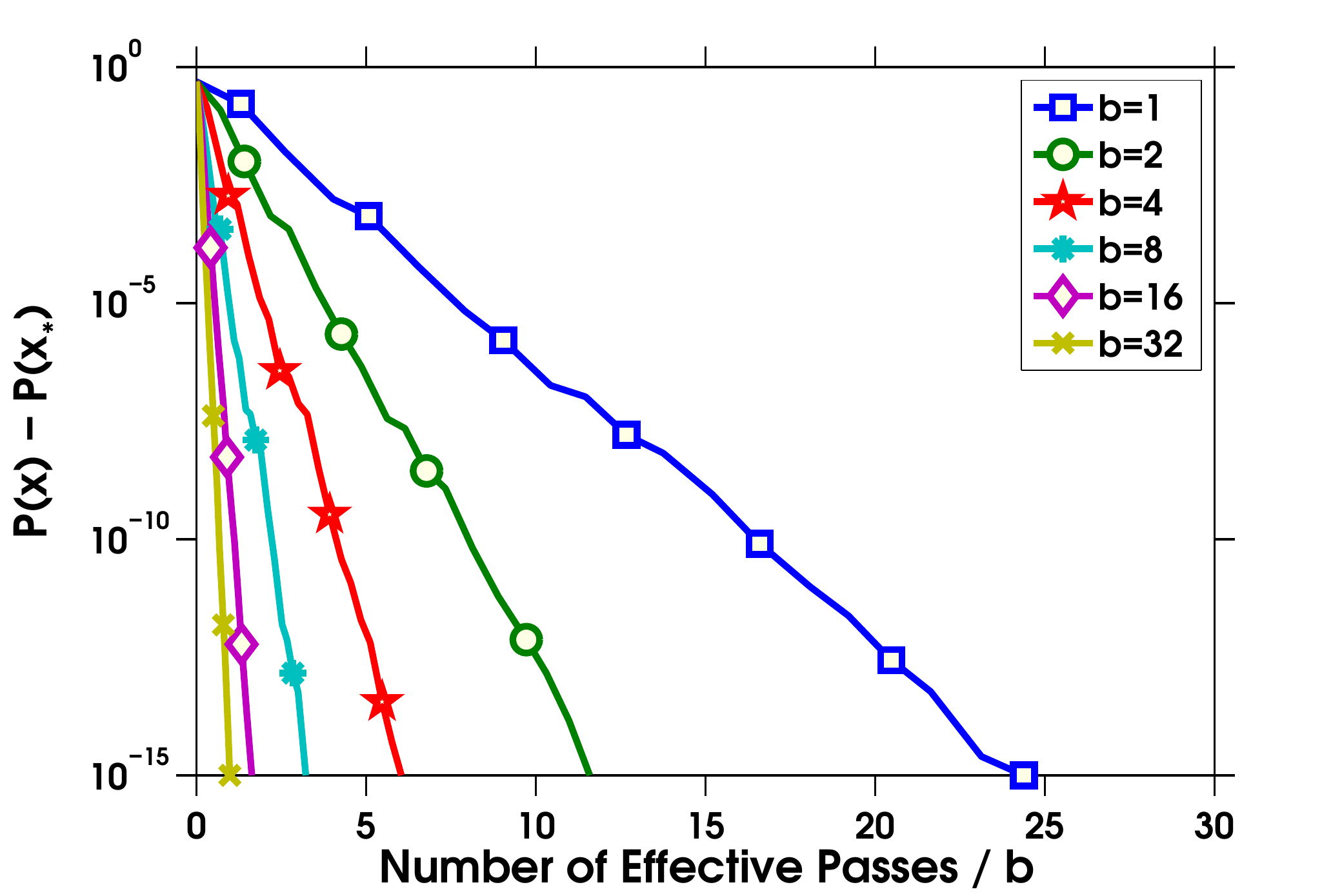,width=0.40\textwidth}
    \epsfig{file=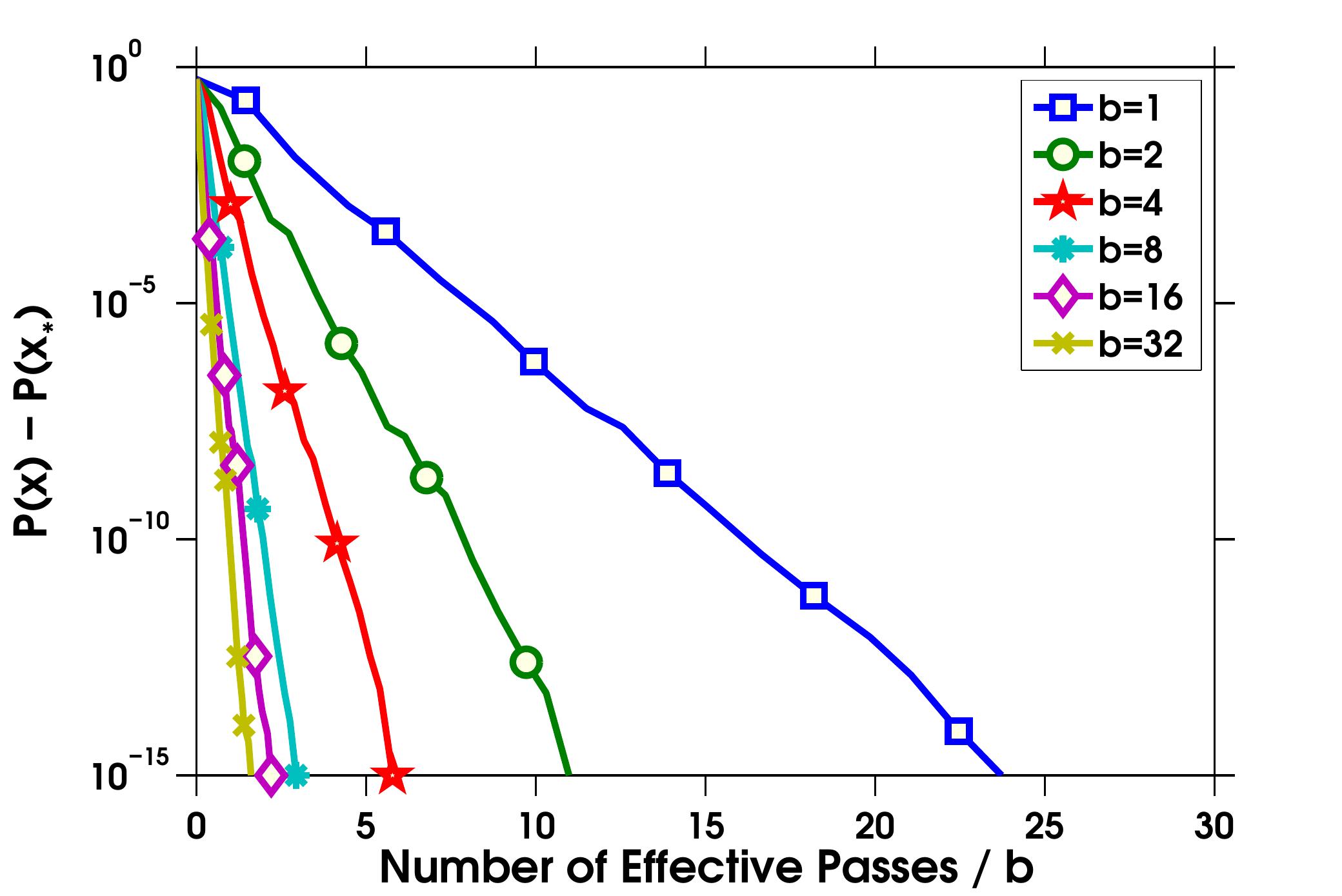,width=0.40\textwidth}
  \caption{\footnotesize Parallelism speedup   for \emph{rcv1} (left) and \emph{astro-ph} (right) in theory (unachievable in practice).}
  \label{fig:ms2gd:MBSpeedup with parallelism} 
\end{figure*}

Although for larger mini-batch sizes mS2GD would be obviously worse, the results are still promising with parallelism. In Figure~\ref{fig:ms2gd:MBSpeedup with parallelism},we show the ideal speedup---one that  would be achievable if we could always evaluate the $b$ gradients  in parallel in exactly the same amount of time as it would take to evaluate a single gradient.\footnote{In practice, it is impossible to ensure that the times of evaluating different component gradients are the same.}.

\subsection{mS2GD vs other algorithms}
\label{sec:ms2gd:mS2GD_algorithms}

In this part, we implemented the following algorithms to conduct a numerical comparison:\\
1) \textbf{SGDcon}: Proximal stochastic gradient descent method with a constant step-size which gave the best performance in hindsight.\\
2) \textbf{SGD+}: Proximal stochastic gradient descent with variable step-size $h=h_0/(k+1)$, where $k$ is the number of effective passes, and $h_0$ is some initial constant step-size. \\
3) \textbf{FISTA}: Fast iterative shrinkage-thresholding algorithm proposed in~\cite{fista}. \\
4) \textbf{SAG}: Proximal version of the stochastic average gradient algorithm~\cite{SAG}. Instead of using $h=1/16L$, which is analyzed in the reference, we used a constant step size.\\
5) \textbf{S2GD}: Semi-stochastic gradient descent method proposed in~\cite{S2GD}. We applied proximal setting to the algorithm and used a constant stepsize.\\
6) \textbf{mS2GD}:  mS2GD with mini-batch size $b=8$. Although a safe step-size is given in our theoretical analyses in Theorem~\ref{thm:ms2gd:s2convergence}, we ignored the bound, and used a constant step size.

In all cases, unless otherwise stated, we have used the best constant stepsizes in hindsight.

\begin{figure*}[!htbp]
\centering
 \epsfig{file=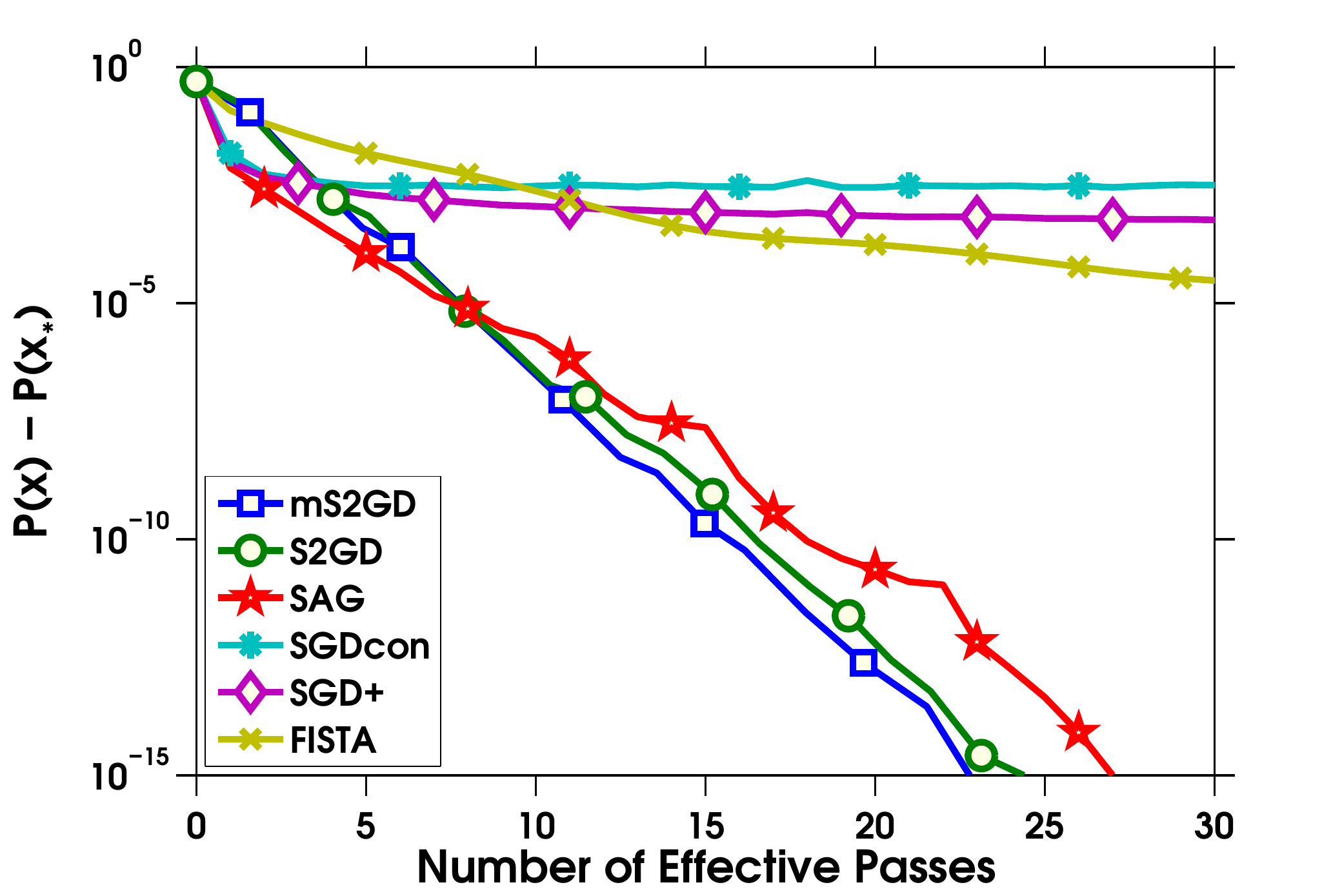,width=0.40\textwidth}
 \epsfig{file=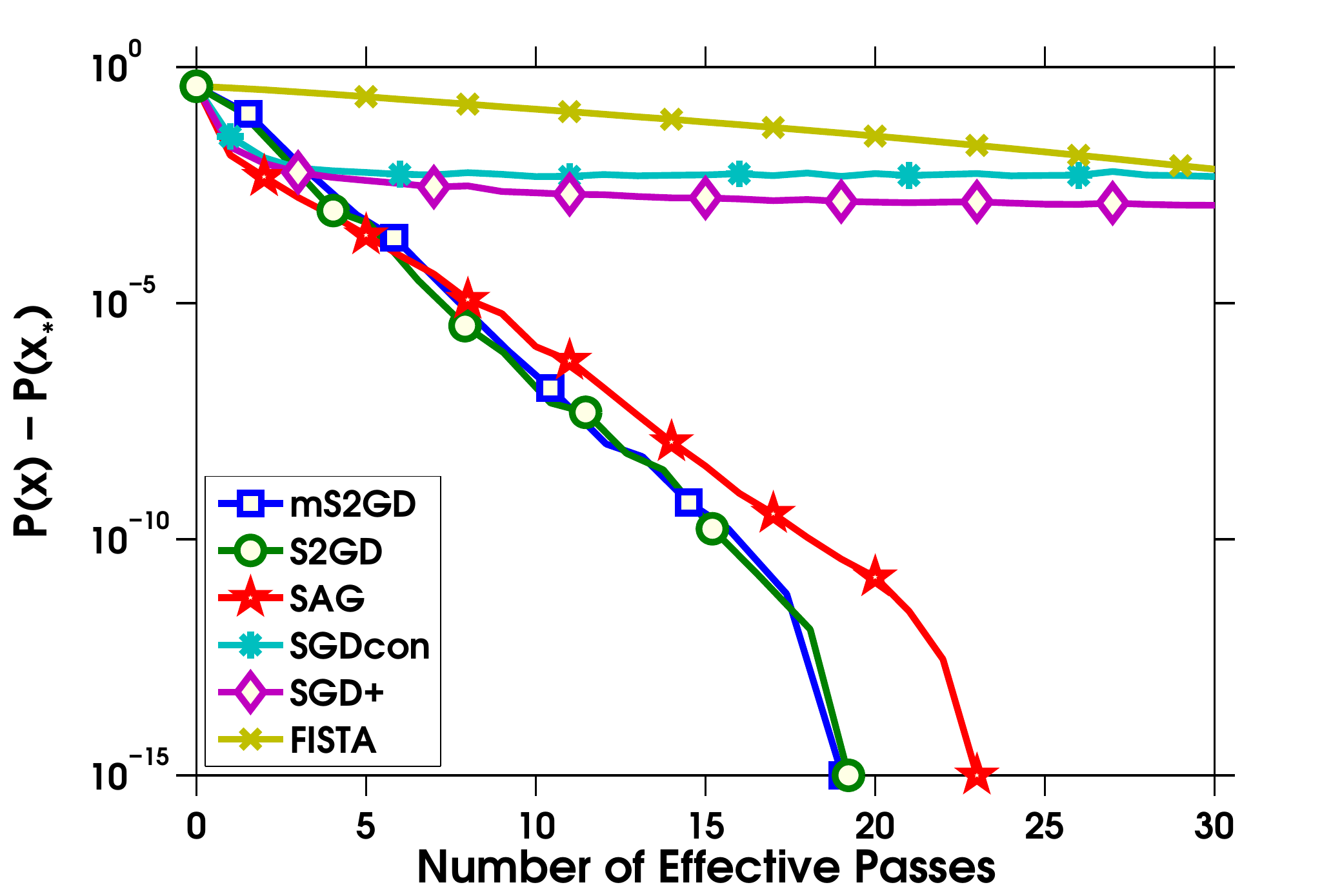,width=0.40\textwidth}
 \\
 \epsfig{file=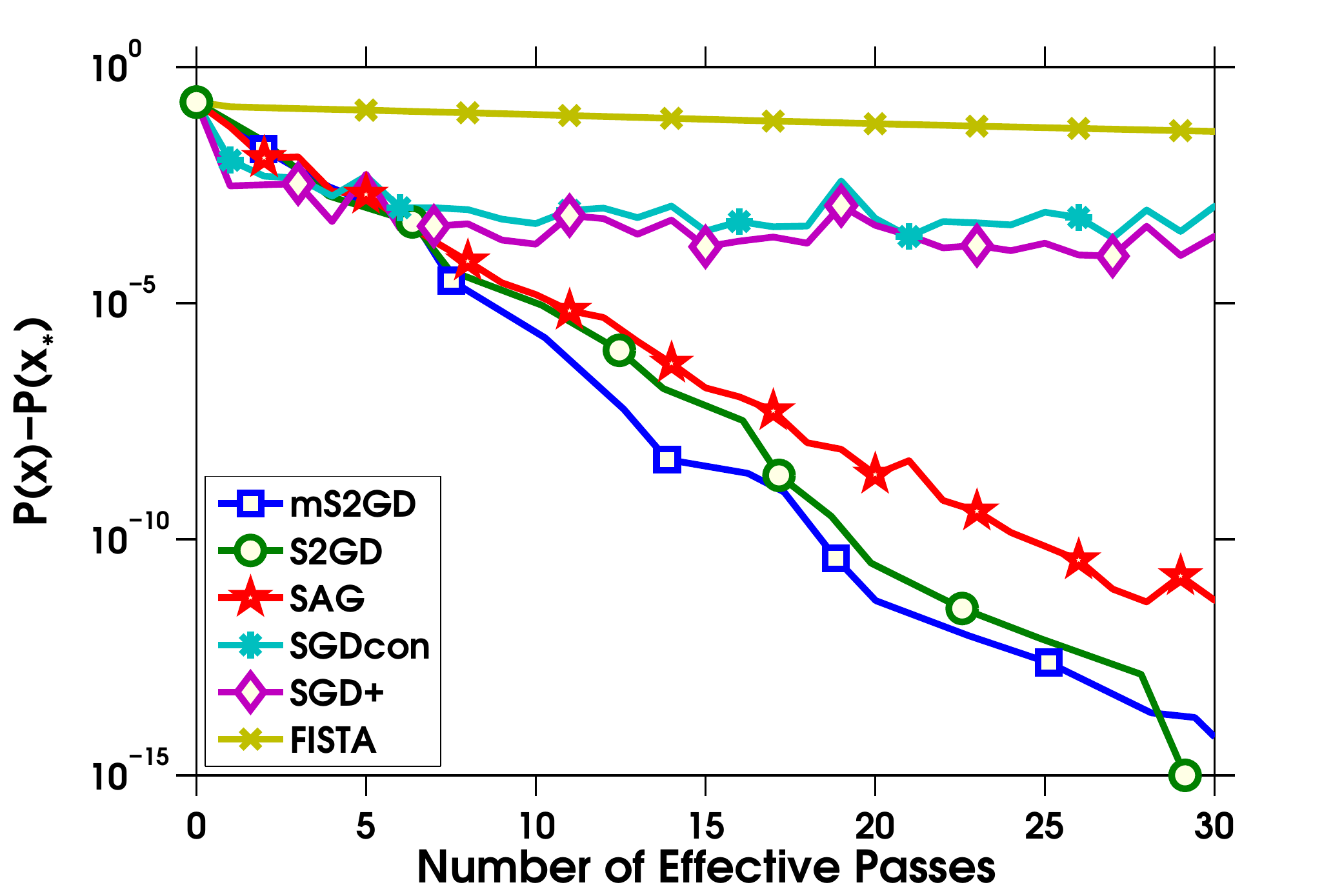,width=0.40\textwidth}
  \epsfig{file=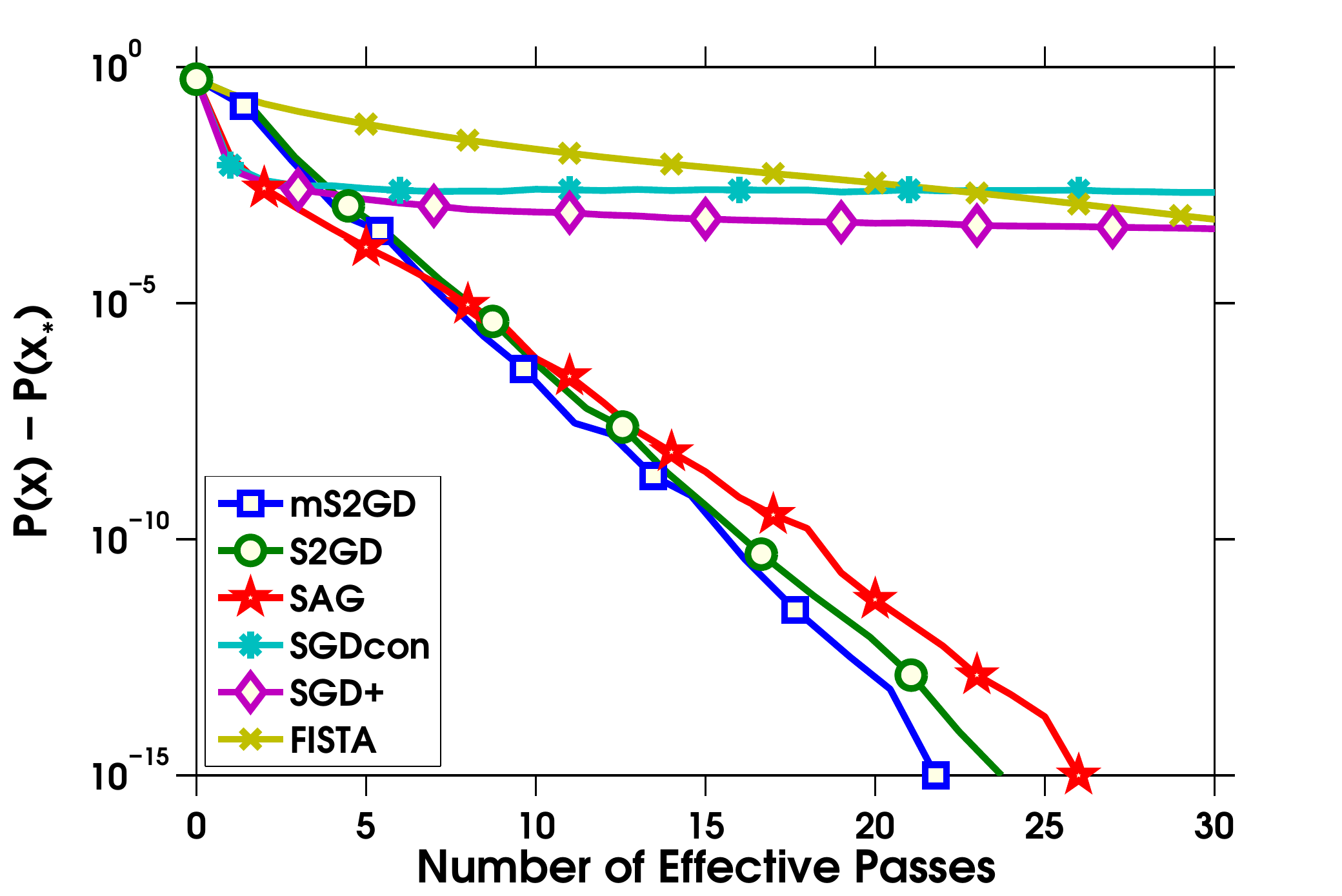,width=0.40\textwidth}
 \caption{\footnotesize Comparison of several algorithms on four datasets: \emph{rcv1} (top left), \emph{news20} (top right), \emph{covtype} (bottom left) and \emph{astro-ph} (bottom right). We have used  mS2GD with $b=8$.}
\label{fig:ms2gd:comparison_algorithms}
\end{figure*}

Figure~\ref{fig:ms2gd:comparison_algorithms} demonstrates the superiority of mS2GD over other  algorithms in the test pool on the four datasets described above. For mS2GD, the best choices of parameters with $b=8$ are given in Table~\ref{table:ms2gd:parameters}.
\begin{table}[H]
\small
\centering
\begin{tabular}{|c|c|c|c|c|c|}
\hline
Parameter & \emph{rcv1} & \emph{news20} & \emph{covtype} & \emph{astro-ph} \\
\hline \hline 
$ m$ & 0.11$n$ & 0.10$n$ & 0.26$n$ & 0.08$n$\\
\hline
  $h$ & $5.5/L$  & $6/L$  & $4.5/L$ & $6.5/L$\\
\hline
\end{tabular}
\captionsetup{justification=centering,margin=0.5cm}
\caption{Best choices of parameters in mS2GD.}
\label{table:ms2gd:parameters}
\end{table}

\subsection{Image deblurring}
\label{sec:ms2gd:deblur}

In this section we utilize the Regularization Toolbox 
\cite{hansen2007regularization}.\footnote{Regularization Toolbox 
available for Matlab can be obtained from \url{http://www.imm.dtu.dk/~pcha/Regutools/} .}  We use the \emph{blur} function 
available therein to obtain the original image and generate a blurred image (we choose following values of parameters for blur function: $N=256$, band=9, sigma=10). The purpose of the blur function is to generate a test problem with an atmospheric turbulence blur. In addition, an additive Gaussian white noise with stand deviation of $10^{-3}$ is added to the blurred image. This forms our testing image as a vector $b$. The image dimension of the test image is $256\times 256$, which means that $n=d=65,536$.  We would not expect our method to work particularly well on this problem since mS2GD works best when $d\ll n$. However, as we shall see, the method's performance is on a par with the performance of the best methods in our test pool.

Our  goal is to reconstruct (deblur) the original image $x$ by solving a LASSO problem: $\min_w \| Aw - b\|_2^2+\lambda \| w \|_1$. We have chosen  $\lambda =10^{-4}$. In our implementation, we normalized the objective function by $n$, and hence our objective value  being optimized is in fact $\min_w \frac1n\| Aw - b\|_2^2+\lambda \|w\|_1$, where $\lambda = \frac{10^{-4}}{n}$,
similarly as was done in \cite{fista}. 

\begin{figure} [!htbp]
\centering
\epsfig{file=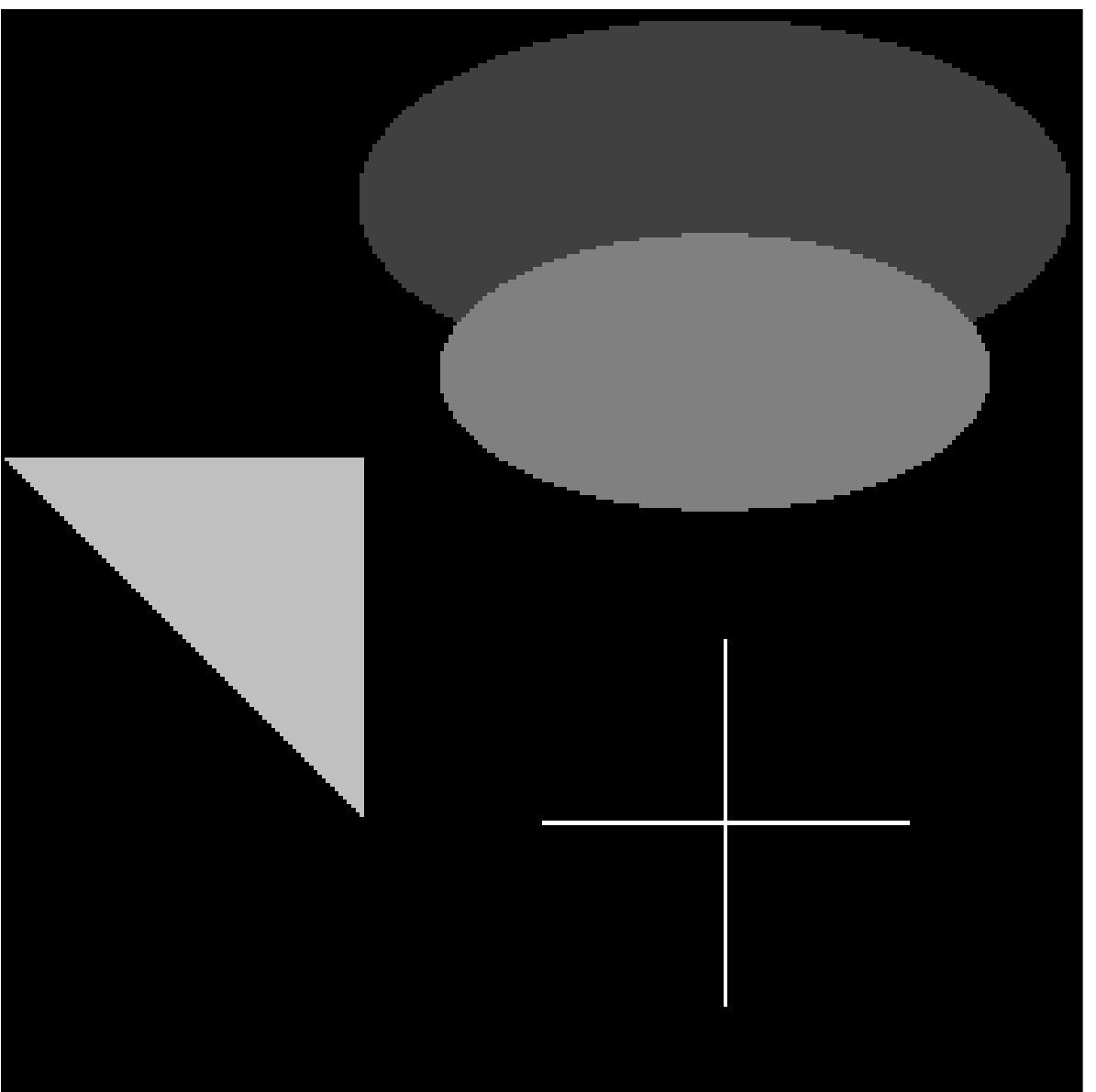,width=0.16\textwidth}
\epsfig{file=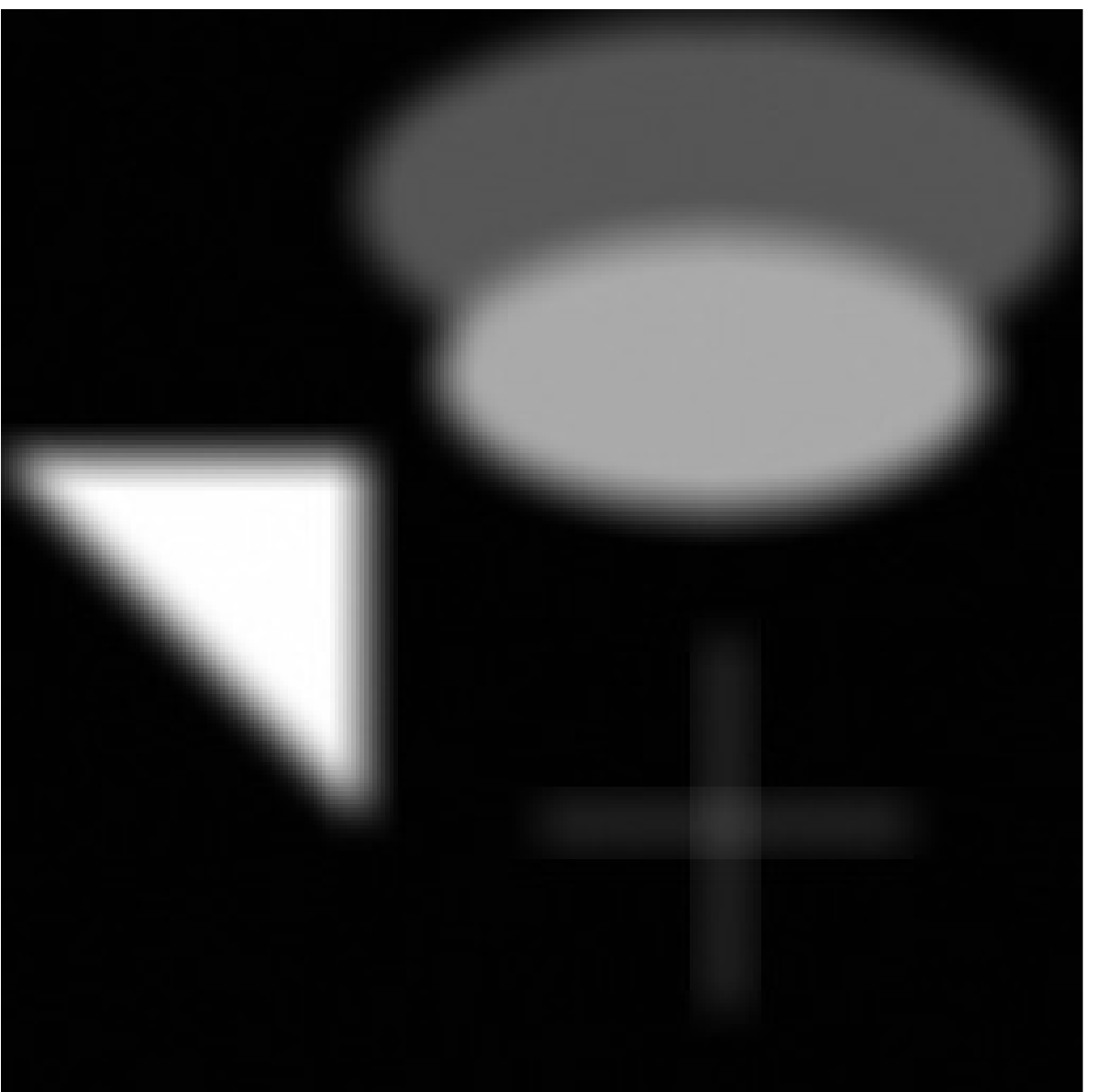,width=0.16\textwidth}
\caption{\footnotesize Original (left) and blurred \& noisy (right) test image.}
\label{fig:ms2gd:afcewvcawfecvwa}
\end{figure}

\begin{figure} [!htbp]
\centering
\epsfig{file=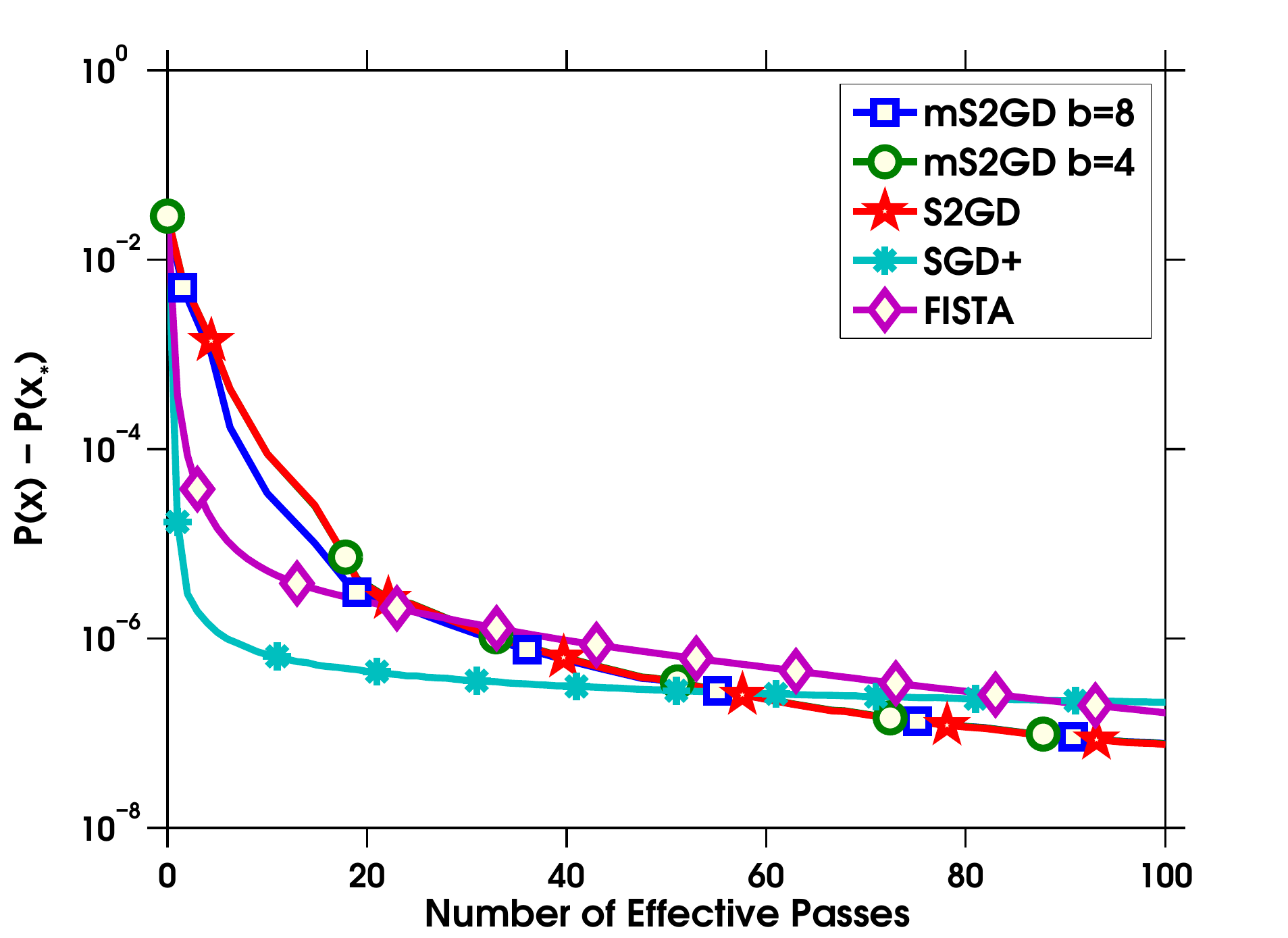,width=0.40\textwidth}
\caption{\footnotesize Comparison of several algorithms for the deblurring problem.}
\label{fig:ms2gd:asfvfrawfrvwa}
\end{figure}

Figure~\eqref{fig:ms2gd:afcewvcawfecvwa} shows the original test image (left) and a blurred image with added Gaussian noise (right).
Figure~\ref{fig:ms2gd:asfvfrawfrvwa} compares the mS2GD algorithm with SGD+, S2GD and FISTA. We run all algorithms for 100 epochs and plot the error. The plot suggests that SGD+ decreases the objective function very rapidly at beginning, but slows down after 10-20 epochs. 
 
Finally, Fig.~\ref{fig:ms2gd:adsfaewfawefa}
shows the reconstructed image after
$T=20, 60, 100$ epochs.

\begin{figure*}[!htbp]
\centering
    \begin{tabular}{r l}
    $T = 20$ \ &
        \begin{subfigure}{0.16\textwidth}
        \centering
        \caption*{FISTA}
        \epsfig{file=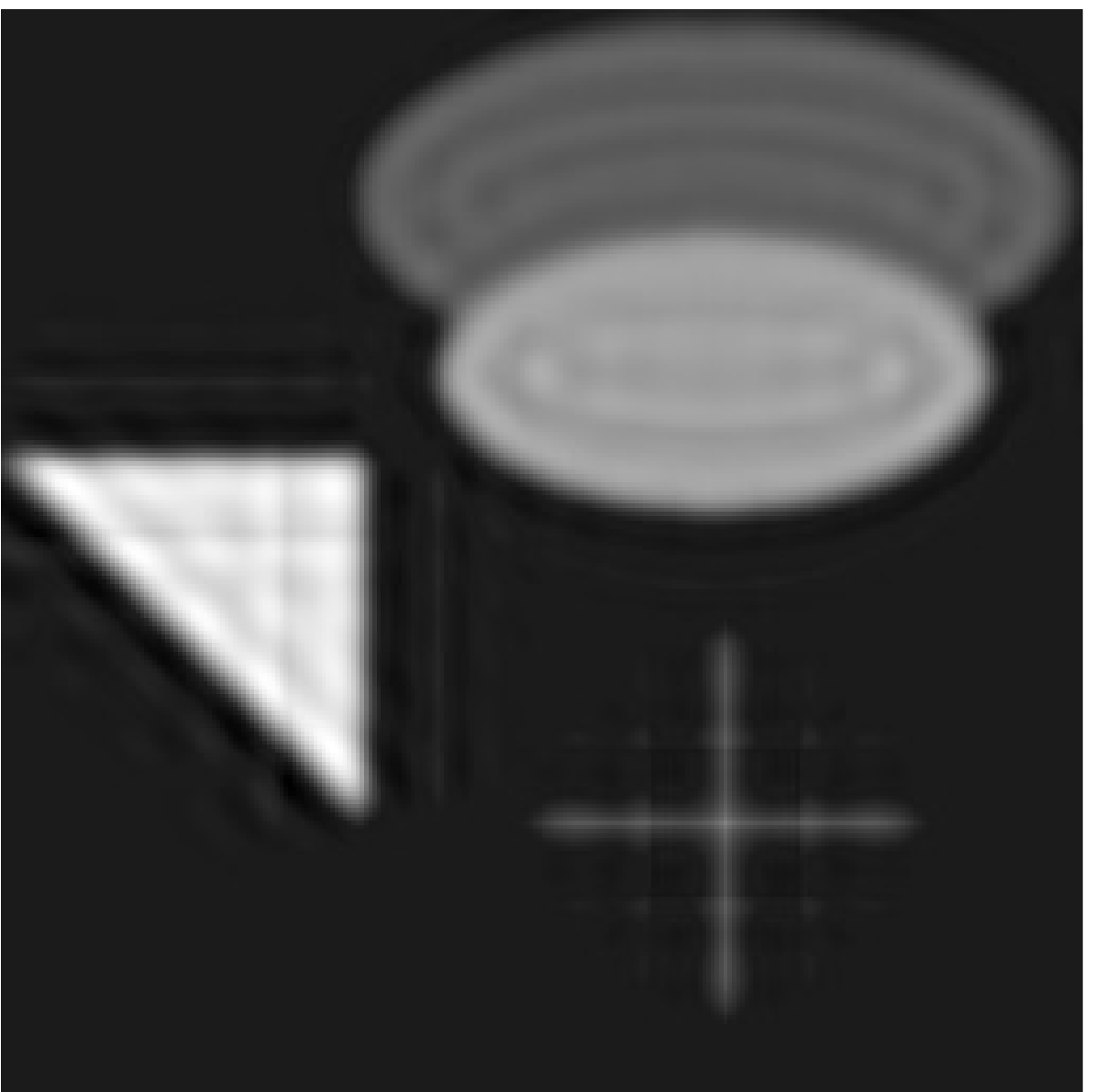,width=1\textwidth}
        \end{subfigure}
        \begin{subfigure}{0.16\textwidth}
        \centering
        \caption*{SGD+}
        \epsfig{file=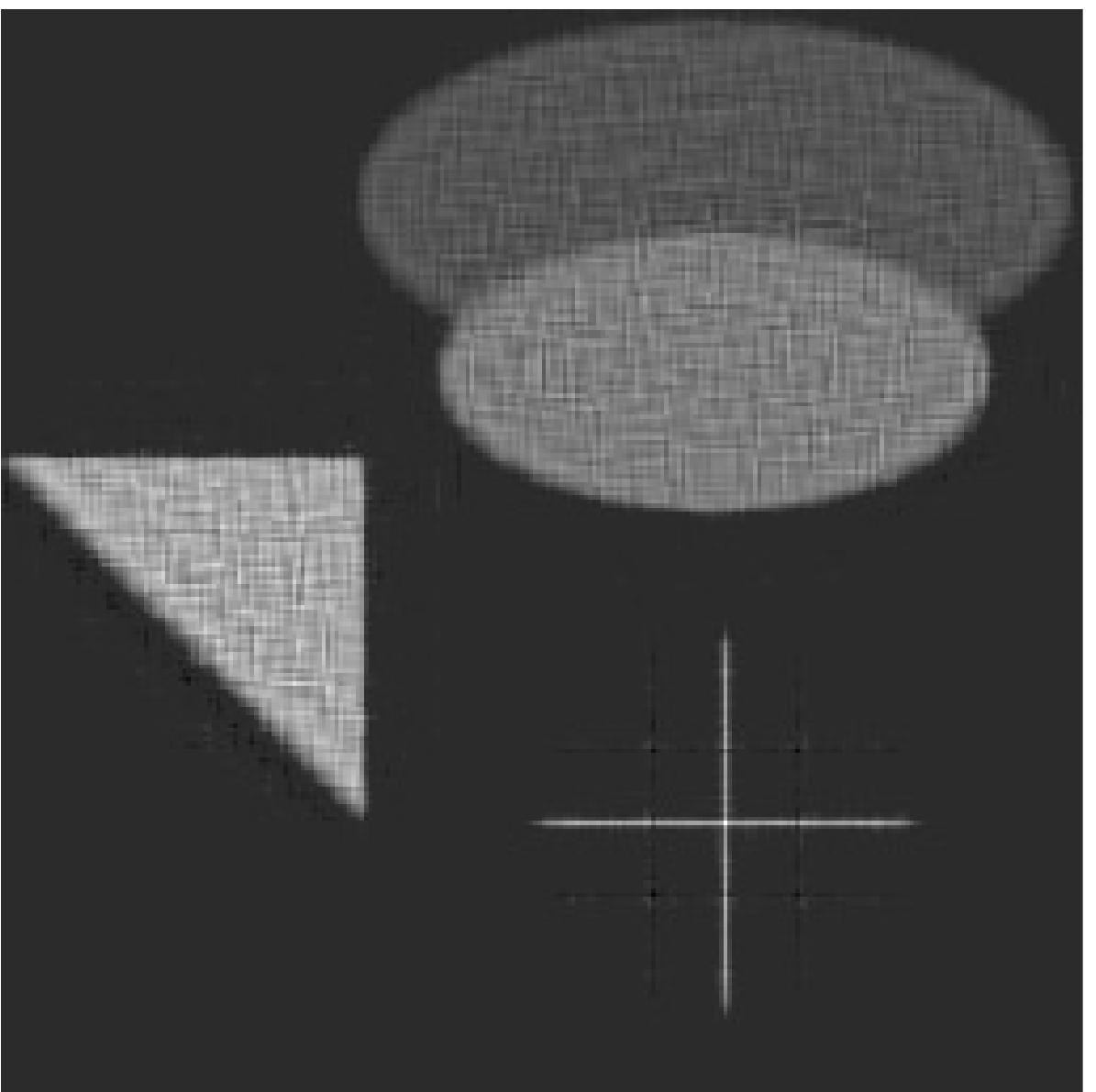,width=1\textwidth}
        \end{subfigure}
        \begin{subfigure}{0.16\textwidth}
        \centering
        \caption*{S2GD}
        \epsfig{file=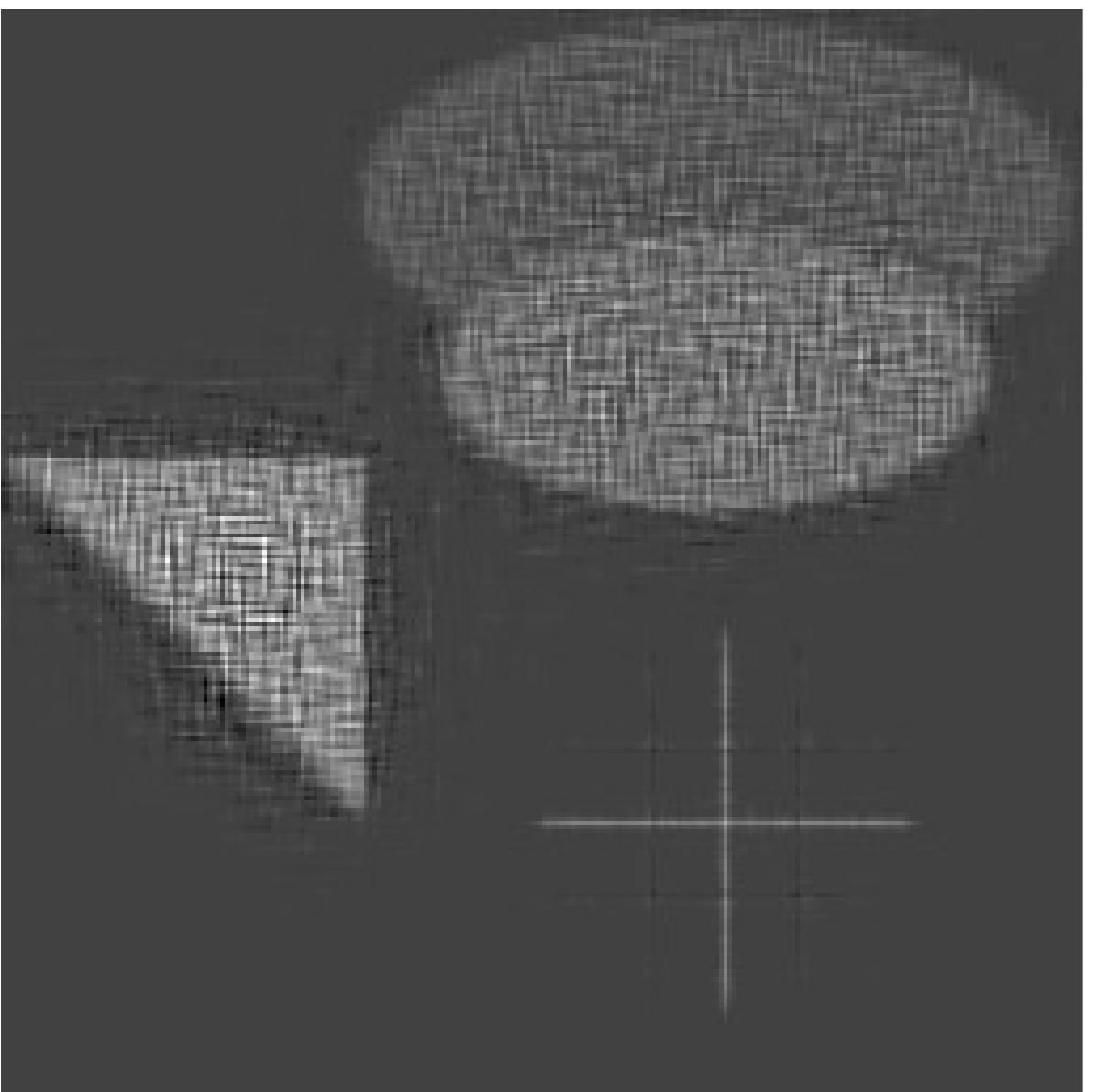,width=1\textwidth}
        \end{subfigure}
        \begin{subfigure}{0.16\textwidth}
        \centering
        \caption*{mS2GD $(b=4)$}
        \epsfig{file=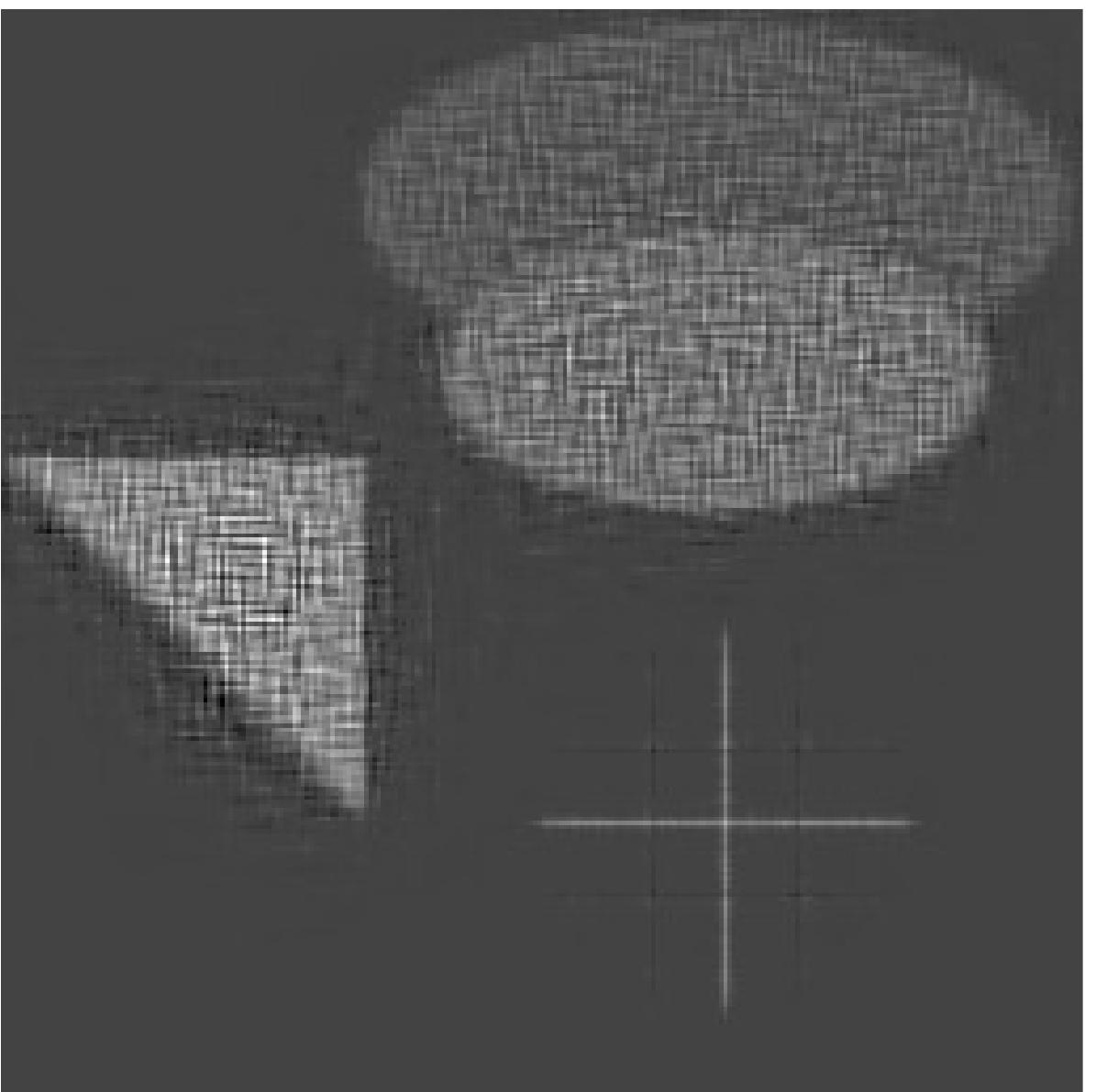,width=1\textwidth}
        \end{subfigure}
        \begin{subfigure}{0.16\textwidth}
        \centering
        \caption*{mS2GD $(b=8)$}
        \epsfig{file=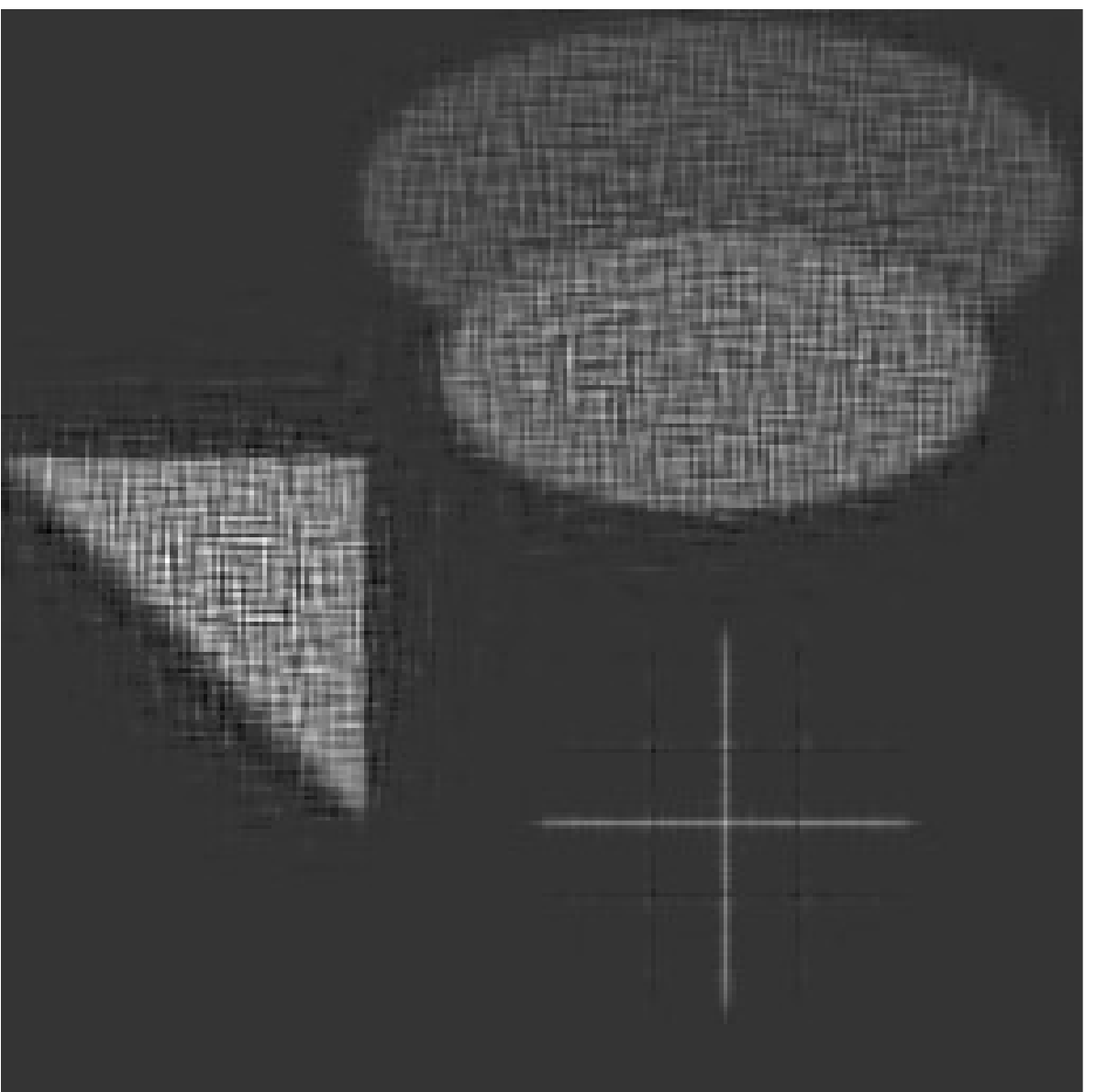,width=1\textwidth}
        \end{subfigure}
  \\$T = 60$ \ &
        \begin{subfigure}{0.16\textwidth}
        \centering
        \epsfig{file=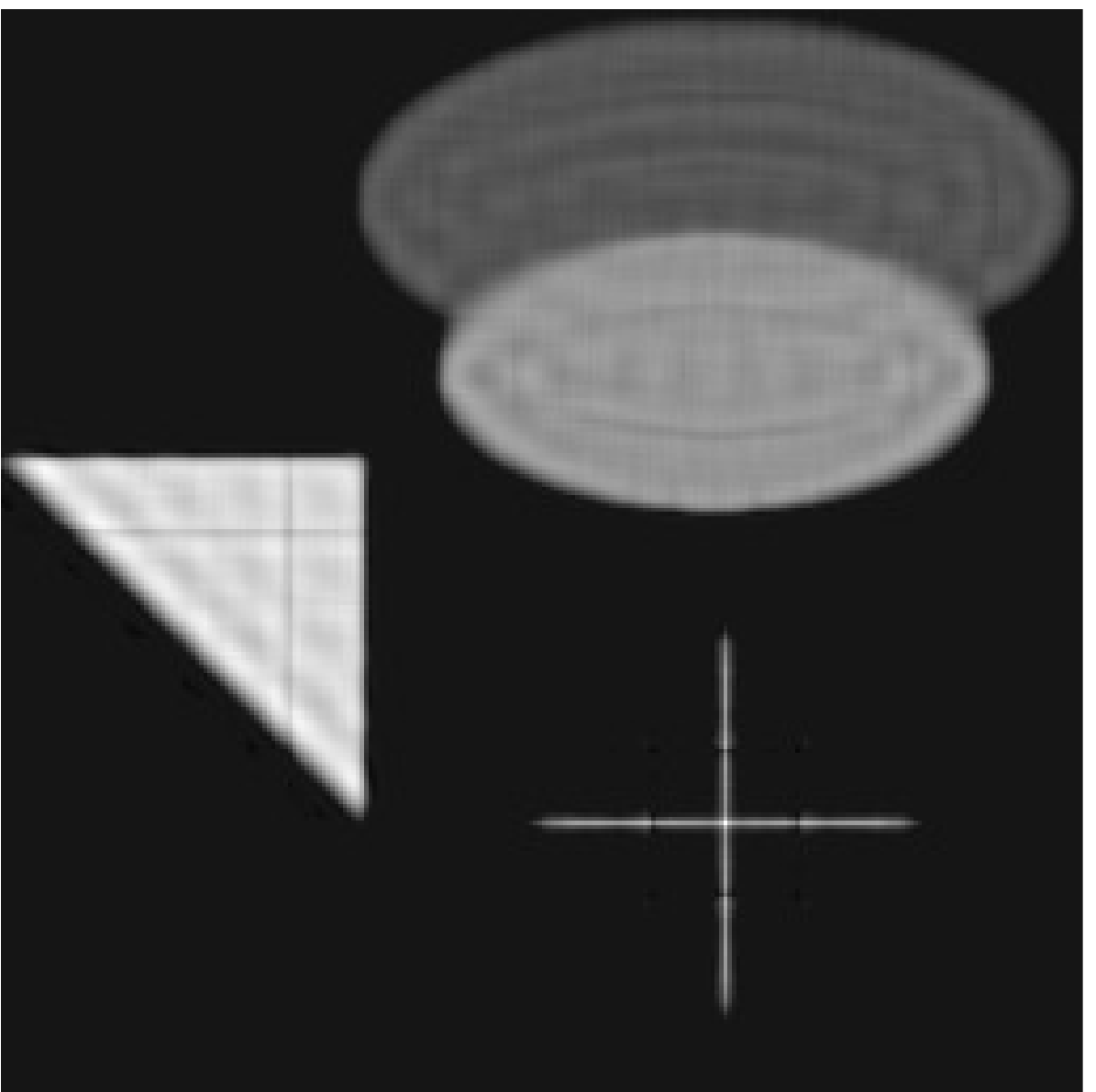,width=1\textwidth}
        \end{subfigure}
        \begin{subfigure}{0.16\textwidth}
        \centering
        \epsfig{file=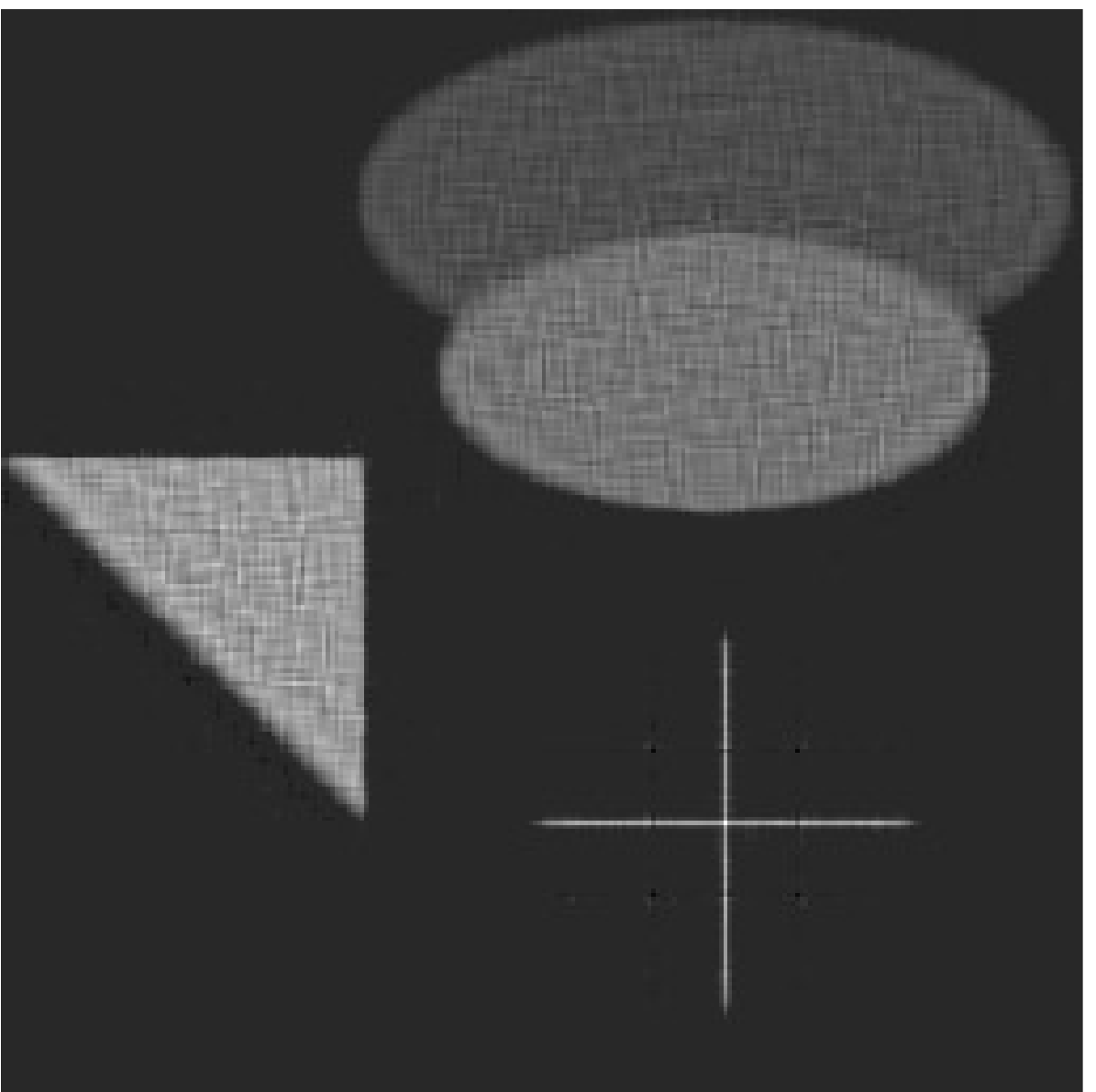,width=1\textwidth}
        \end{subfigure}
        \begin{subfigure}{0.16\textwidth}
        \centering
        \epsfig{file=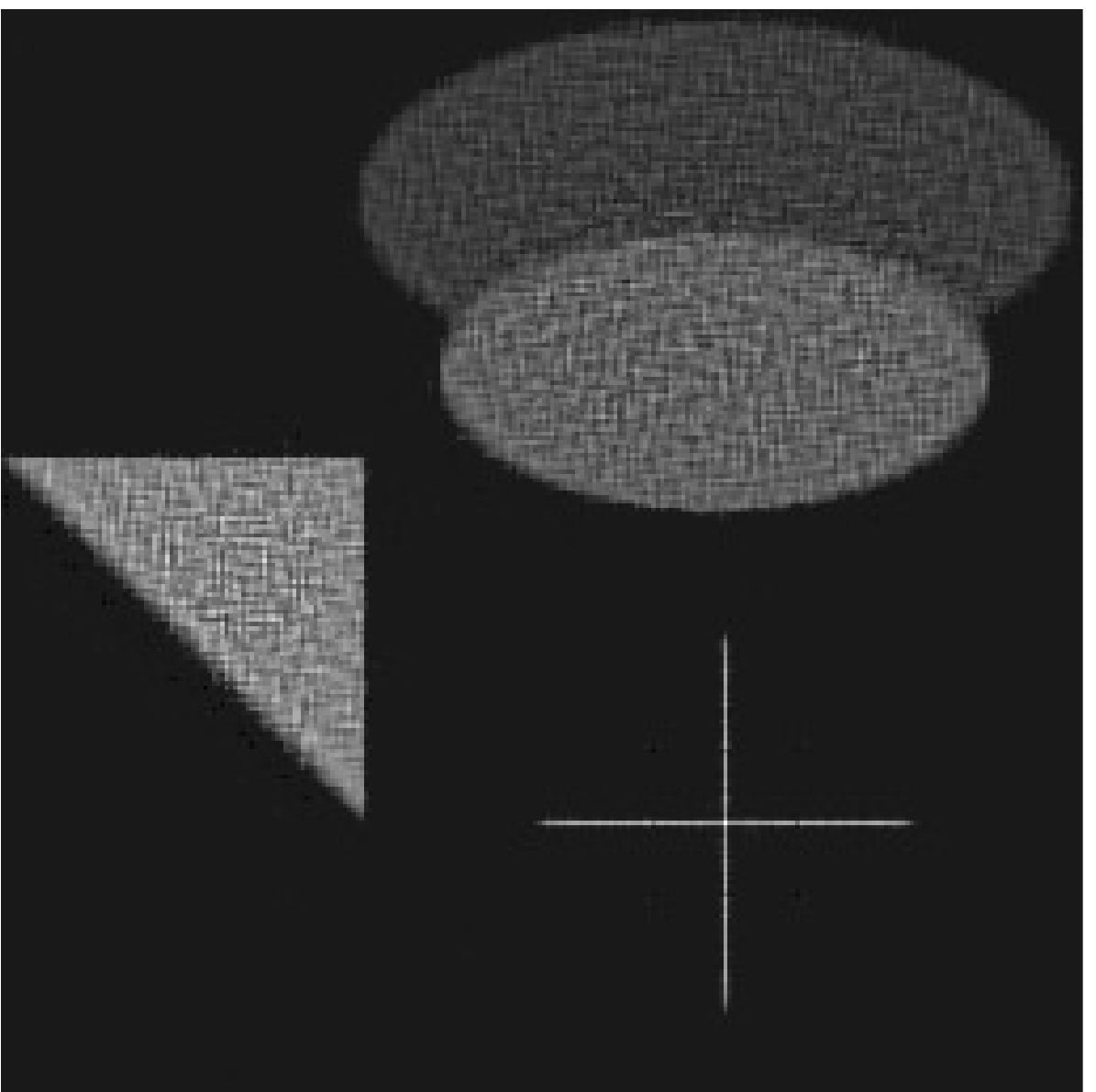,width=1\textwidth}
        \end{subfigure}
        \begin{subfigure}{0.16\textwidth}
        \centering
        \epsfig{file=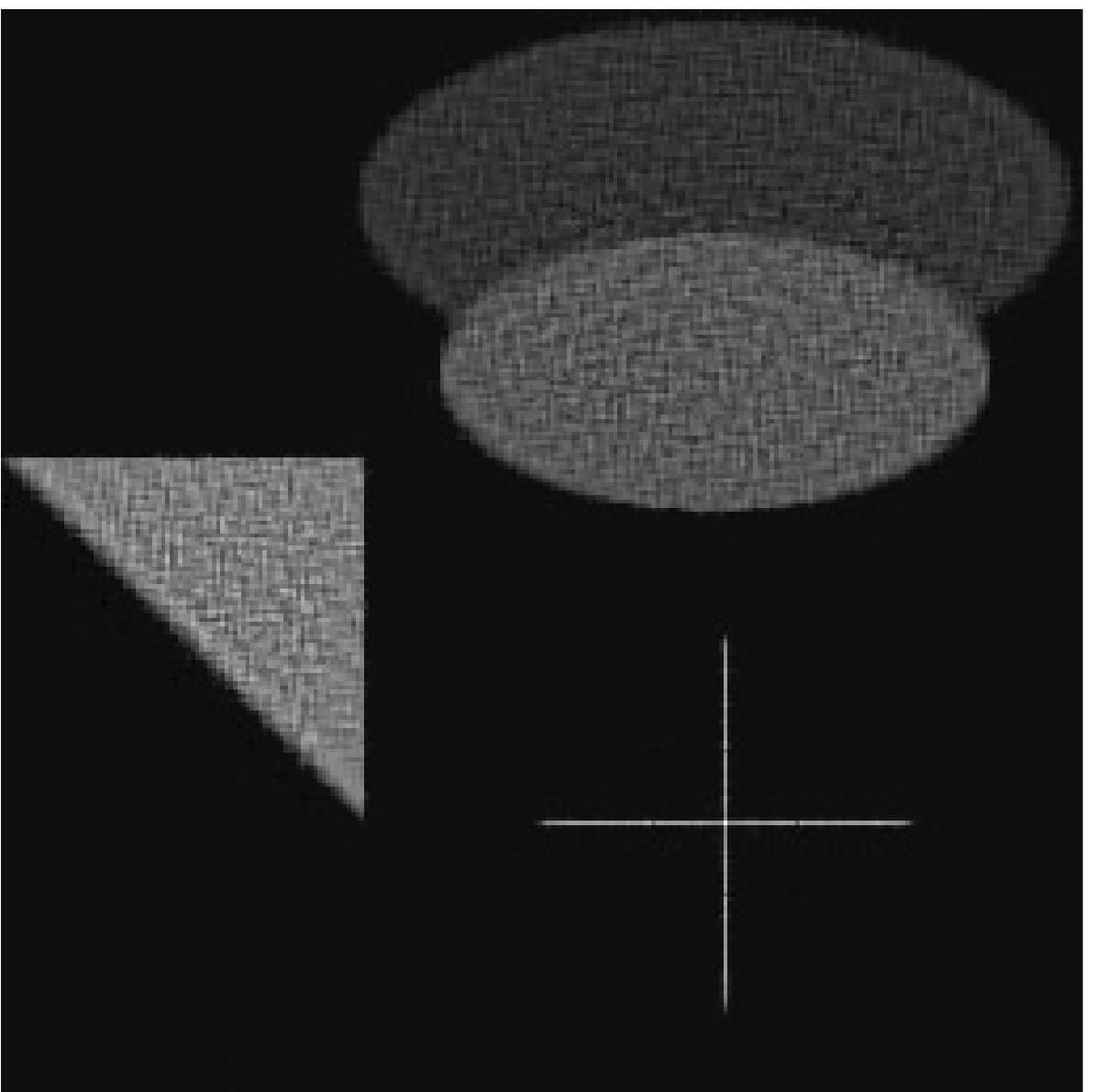,width=1\textwidth}
        \end{subfigure}
        \begin{subfigure}{0.16\textwidth}
        \centering
        \epsfig{file=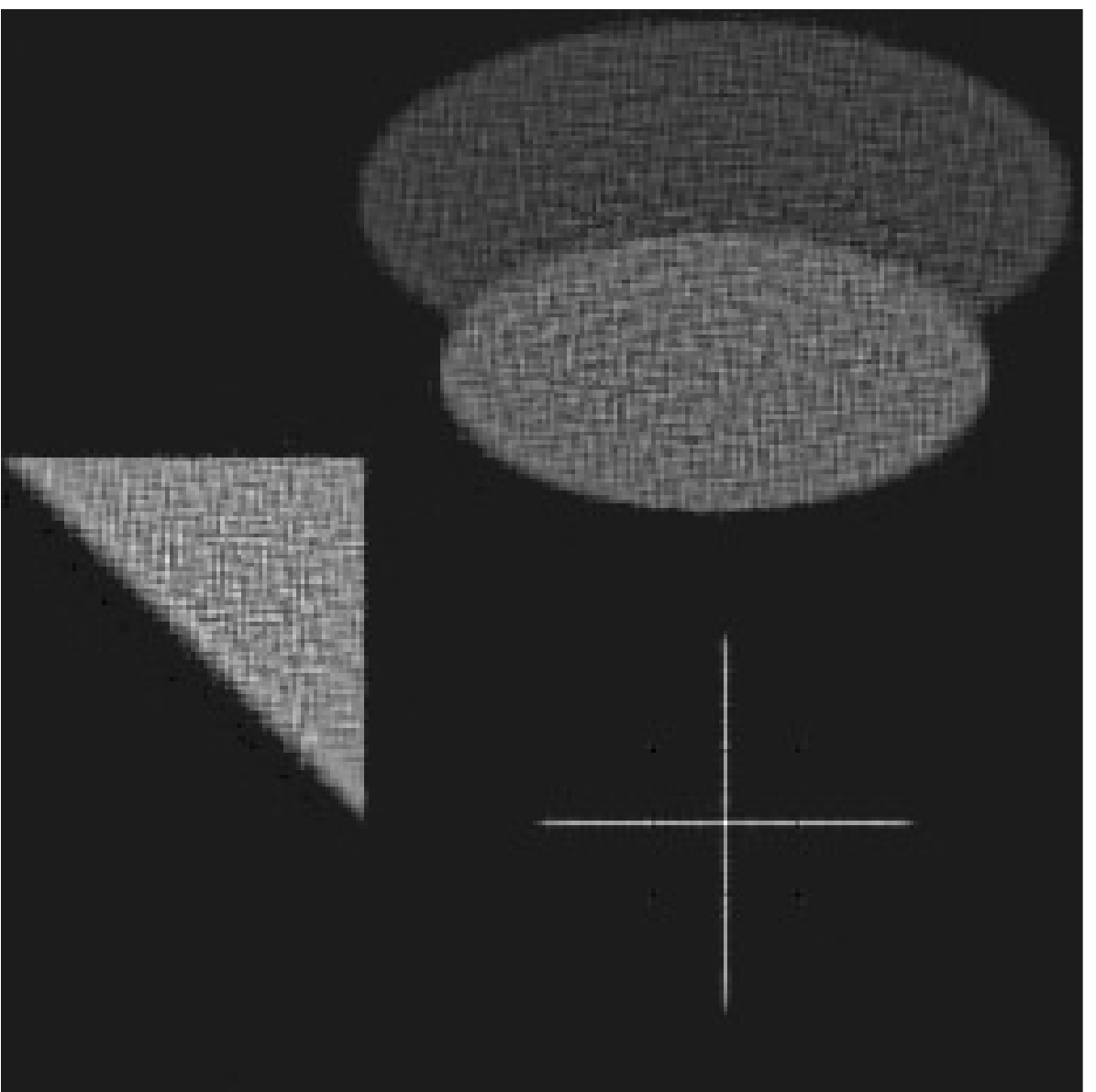,width=1\textwidth}
        \end{subfigure}
   \\$T = 100$ &
        \begin{subfigure}{0.16\textwidth}
        \centering
        \epsfig{file=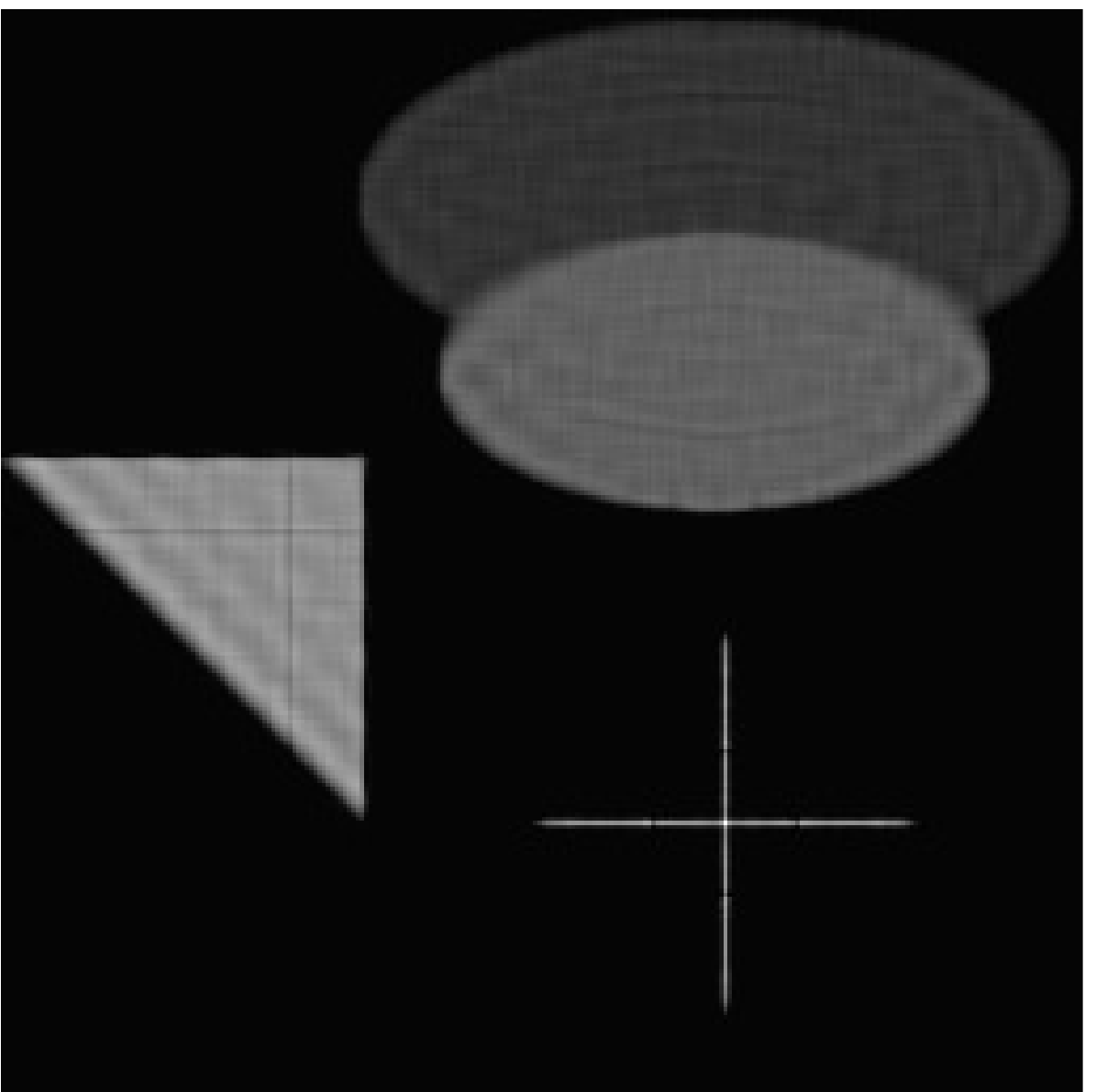,width=1\textwidth}
        \end{subfigure}
        \begin{subfigure}{0.16\textwidth}
        \centering
        \epsfig{file=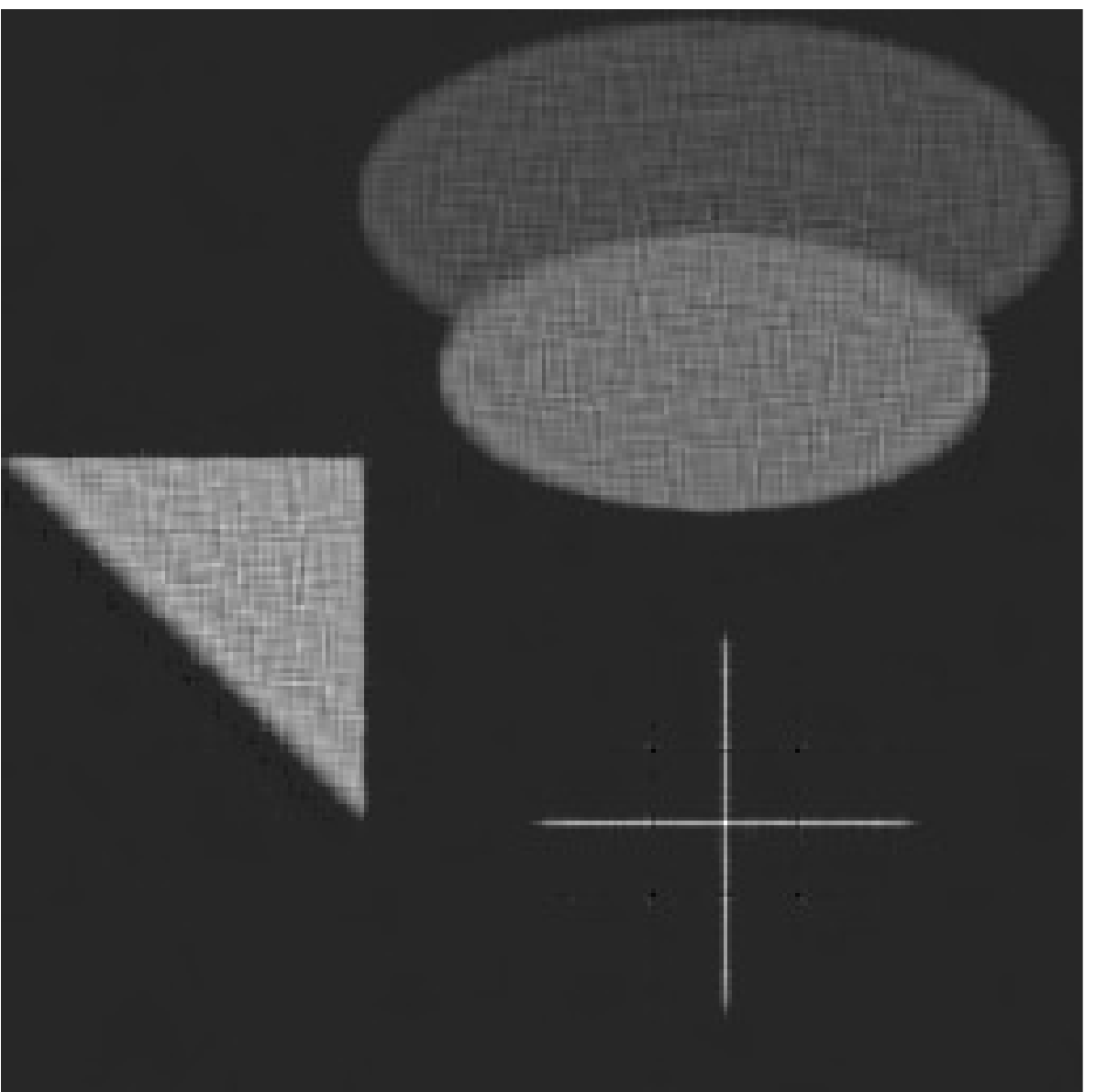,width=1\textwidth}
        \end{subfigure}
        \begin{subfigure}{0.16\textwidth}
        \centering
        \epsfig{file=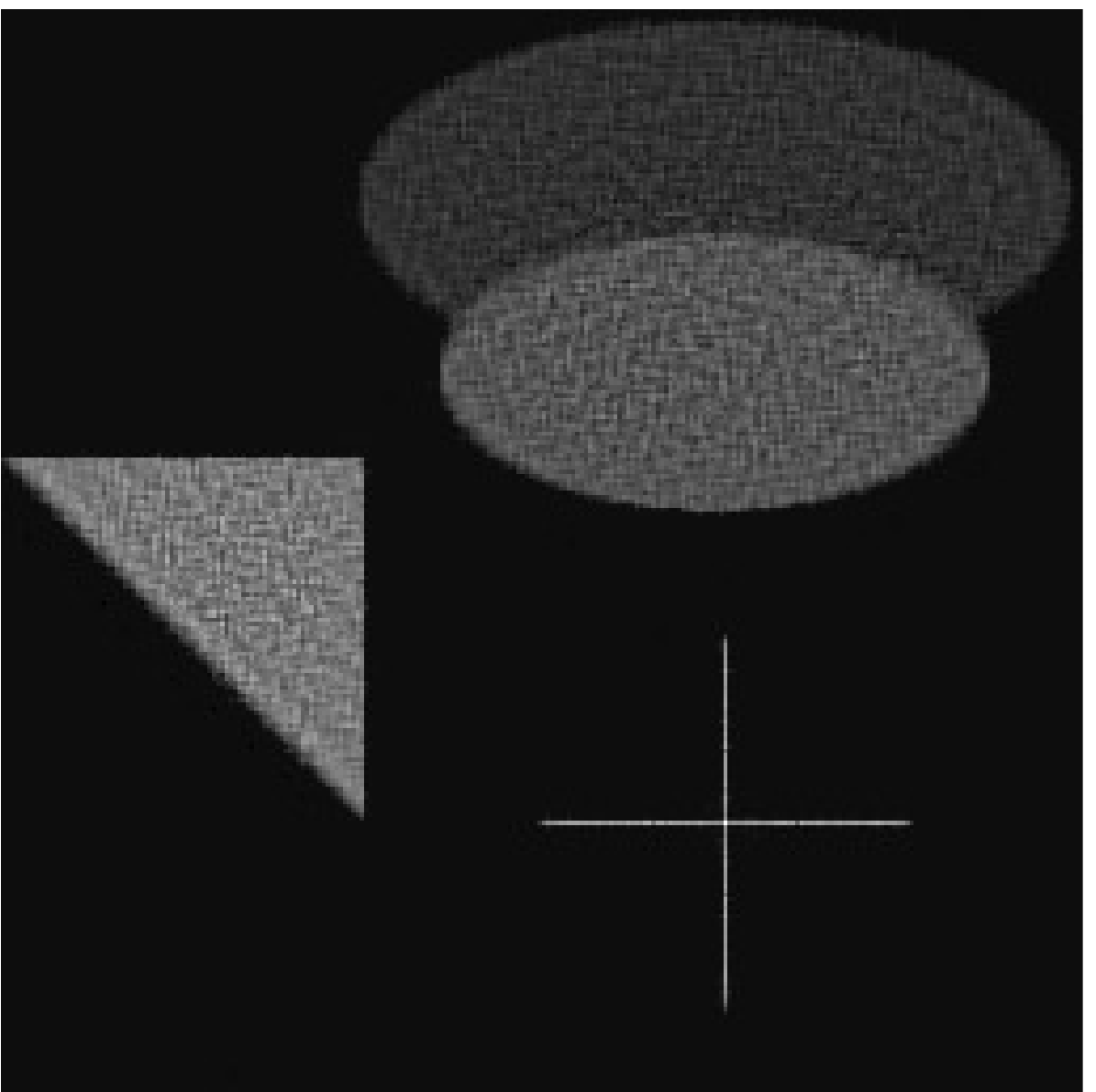,width=1\textwidth}
        \end{subfigure}
        \begin{subfigure}{0.16\textwidth}
        \centering
        \epsfig{file=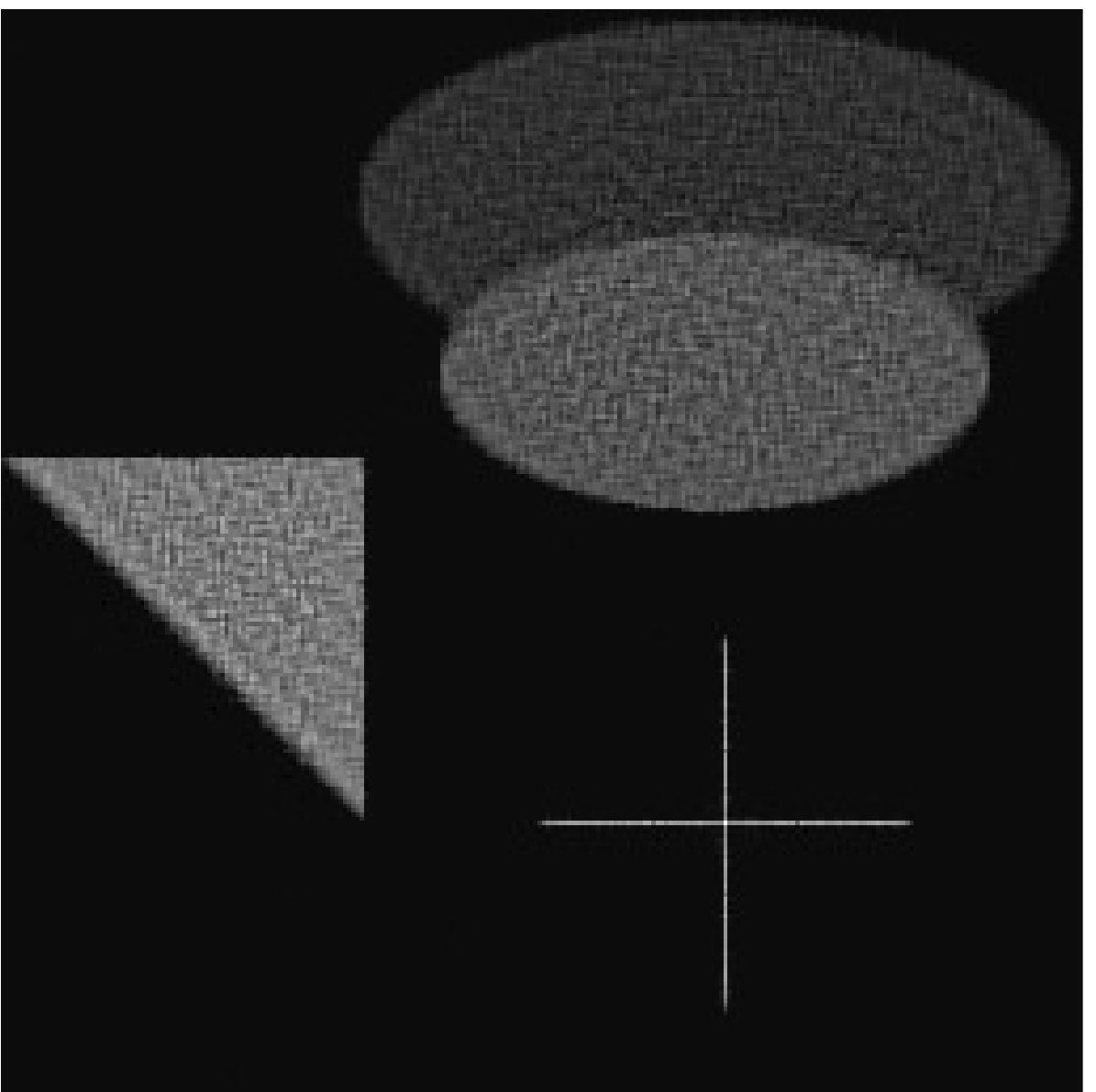,width=1\textwidth}
        \end{subfigure}
        \begin{subfigure}{0.16\textwidth}
        \centering
        \epsfig{file=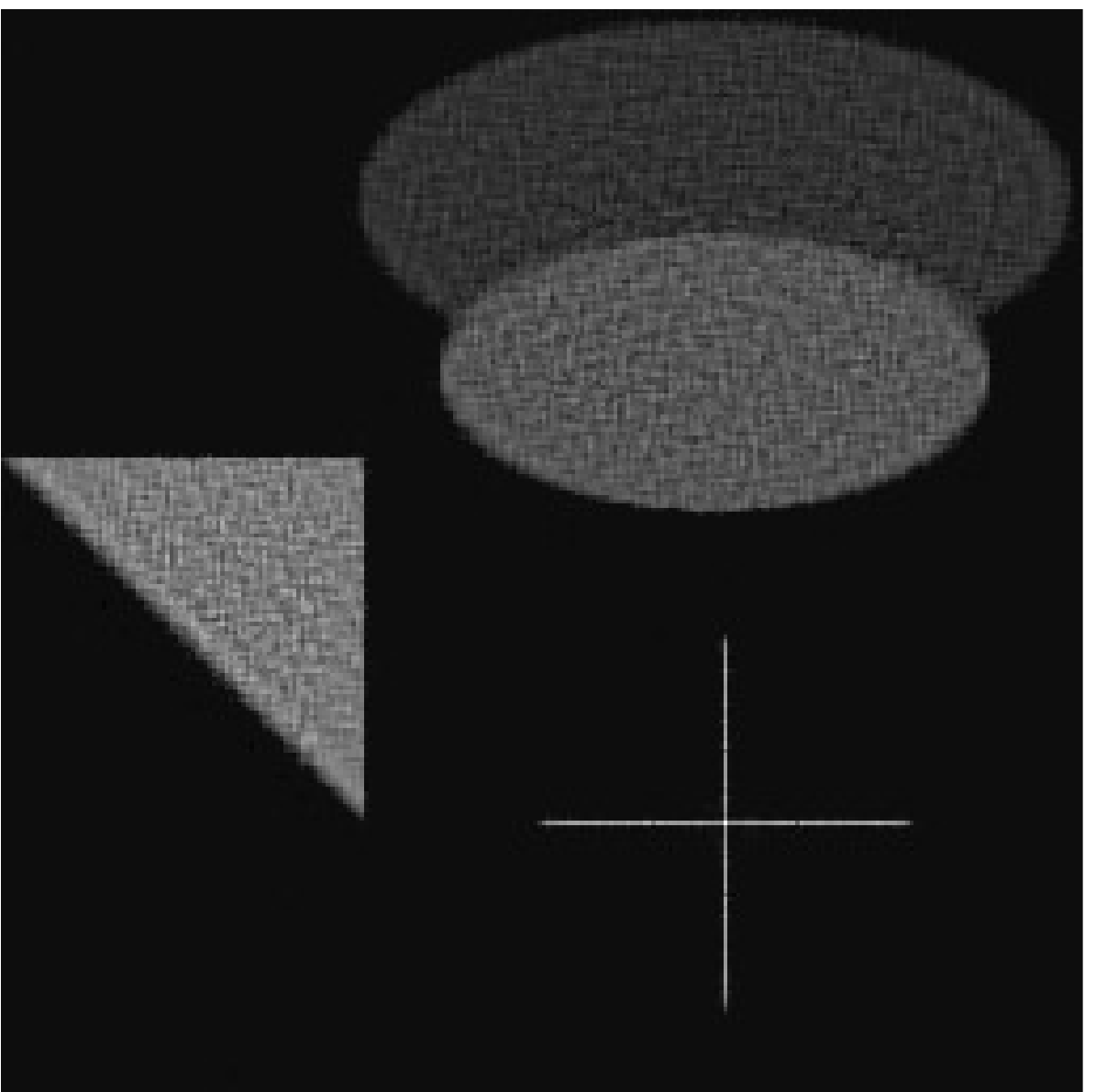,width=1\textwidth}
        \end{subfigure}
  \end{tabular}
  \caption{\footnotesize Reconstruction of the  test image from Figure~\ref{fig:ms2gd:afcewvcawfecvwa} via FISTA, SGD+, S2GD and mS2GD after $T=20, 60, 100$ epochs (one epoch corresponds to work equivalent to the computation of one gradient.)}
\label{fig:ms2gd:adsfaewfawefa}
\end{figure*}

\section{Technical Results}

We first state technical results used in this chapter, followed by proofs deferred from the main body.

\begin{lemma}[Lemma 3.6 in \cite{proxSVRG}]
\label{lem:ms2gd:nonexpansiveness}
Let $R$ be a closed convex function on $\R^d$ and $x,y \in \dom(R)$, then
$
\| \prox_R(x) - \prox_R(y)\| \leq \|x-y\|.
$
\end{lemma}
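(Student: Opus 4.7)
The plan is to give the standard, short proof that $\prox_R$ is firmly nonexpansive, and in particular nonexpansive. Let $u = \prox_R(x)$ and $v = \prox_R(y)$. The first step is to invoke the first-order optimality condition for the defining minimization problems, which (by convexity of $R$) reads
\begin{equation*}
x - u \in \partial R(u), \qquad y - v \in \partial R(v).
\end{equation*}

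Second, I would exploit the monotonicity of the subdifferential of the convex function $R$: for any subgradients $\xi \in \partial R(u)$ and $\eta \in \partial R(v)$ one has $\langle \xi - \eta, u - v\rangle \geq 0$. Applied to the particular subgradients from step one this yields
\begin{equation*}
\langle (x - u) - (y - v),\, u - v\rangle \geq 0,
\end{equation*}
which after rearrangement becomes the firm nonexpansiveness estimate $\langle x - y, u - v\rangle \geq \|u - v\|^2$.

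Third, I would close the argument with Cauchy--Schwarz: $\|u - v\|^2 \leq \langle x - y, u - v\rangle \leq \|x - y\|\cdot \|u - v\|$. Dividing by $\|u - v\|$ (the case $u = v$ being trivial) gives $\|\prox_R(x) - \prox_R(y)\| = \|u - v\| \leq \|x - y\|$, as required.

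There is no real obstacle here: the whole lemma is a three-line consequence of the optimality condition for the prox and monotonicity of $\partial R$. The only mild subtlety is justifying that $\partial R(u)$ and $\partial R(v)$ are nonempty at the prox points, which follows because $u,v \in \dom(R)$ are unconstrained minimizers of strongly convex functions of the form $\tfrac12\|\cdot - z\|^2 + R(\cdot)$, so the usual subdifferential sum rule gives $0 \in (u - x) + \partial R(u)$, i.e. $x - u \in \partial R(u)$, and similarly for $v$.
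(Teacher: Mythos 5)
Your proof is correct: it is the standard argument establishing firm nonexpansiveness of $\prox_R$ via the optimality condition $x - u \in \partial R(u)$, monotonicity of the subdifferential, and Cauchy--Schwarz. The paper does not prove this lemma at all --- it simply cites it as Lemma~3.6 of \cite{proxSVRG} and notes it is classical (Moreau, Rockafellar) --- so your write-up supplies exactly the canonical proof that the paper omits.
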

Note that non-expansiveness of the proximal operator is a standard result in optimization literature \cite{moreau1962,rockafellar1970}.

\begin{lemma}
\label{lem:ms2gd:randvar}
Let $\{\xi_i\}_{i=1}^n$ be vectors in $\R^d$
and $\bar{\xi} \eqdef \frac1n \sum_{i=1}^n \xi_i \in \R^d$.
Let $\hat S$ be a random subset of $ [n]$ of size $\tau$, chosen uniformly at random from all subsets of this cardinality. Taking expectation with respect to $\hat{S}$, we have
\begin{equation}
\label{eq:ms2gd:varianceBound}
\E{ \left\|\frac1\tau \sum_{i\in \hat S} \xi_i - \bar{\xi}  \right\|^2 }
\leq
\frac1{n\tau}
\frac{  n-\tau}{ (n-1)}
\sum_{i=1}^n \left\| \xi_i\right\|^2.
\end{equation}
\end{lemma}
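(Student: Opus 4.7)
The plan is to reduce the bound to a direct second-moment calculation over indicator random variables for sampling without replacement. First, I would rewrite the random quantity of interest in a form that separates the vectors $\xi_i$ from the randomness:
\begin{equation*}
\frac{1}{\tau}\sum_{i\in\hat S}\xi_i - \bar\xi \;=\; \frac{1}{\tau}\sum_{i=1}^n\left(\mathbf{1}_{\{i\in\hat S\}} - \tfrac{\tau}{n}\right)\xi_i,
\end{equation*}
using $\bar\xi = \tfrac{1}{n}\sum_i \xi_i$. Squaring the norm and expanding the double sum leads to
\begin{equation*}
\E{\left\|\tfrac{1}{\tau}\sum_{i\in\hat S}\xi_i - \bar\xi\right\|^2} = \frac{1}{\tau^2}\sum_{i,j=1}^n c_{ij}\,\langle \xi_i,\xi_j\rangle,
\end{equation*}
where $c_{ij} \eqdef \E{(\mathbf{1}_{\{i\in\hat S\}} - \tau/n)(\mathbf{1}_{\{j\in\hat S\}} - \tau/n)}$.

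The next step is to compute these centered second moments using the standard uniform-without-replacement marginals $\Prob{i\in\hat S} = \tau/n$ and $\Prob{i,j\in\hat S} = \tfrac{\tau(\tau-1)}{n(n-1)}$ for $i\neq j$. A short calculation yields
\begin{equation*}
c_{ii} = \frac{\tau(n-\tau)}{n^2},\qquad c_{ij} = -\frac{\tau(n-\tau)}{n^2(n-1)}\ \ (i\neq j).
\end{equation*}
Substituting and separating diagonal from off-diagonal terms gives
\begin{equation*}
\E{\left\|\tfrac{1}{\tau}\sum_{i\in\hat S}\xi_i - \bar\xi\right\|^2} = \frac{n-\tau}{\tau n^2}\sum_{i=1}^n \|\xi_i\|^2 \;-\; \frac{n-\tau}{\tau n^2(n-1)}\sum_{i\neq j}\langle \xi_i,\xi_j\rangle.
\end{equation*}

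Now I would use the identity $\sum_{i\neq j}\langle \xi_i,\xi_j\rangle = \|\sum_i \xi_i\|^2 - \sum_i \|\xi_i\|^2 = n^2\|\bar\xi\|^2 - \sum_i \|\xi_i\|^2$ to combine the two terms. After simplification, the coefficient of $\sum_i \|\xi_i\|^2$ becomes $\tfrac{n-\tau}{\tau n^2}\bigl(1 + \tfrac{1}{n-1}\bigr) = \tfrac{n-\tau}{\tau n (n-1)}$, and there remains a non-positive contribution $-\tfrac{n-\tau}{\tau(n-1)}\|\bar\xi\|^2$. Dropping this non-positive term yields exactly the claimed inequality \eqref{eq:ms2gd:varianceBound}.

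The only slightly delicate step is the evaluation of the covariance $c_{ij}$ for $i\neq j$ and checking that the algebraic simplification collapses cleanly; everything else is bookkeeping. No other tricks are needed, since the inequality is obtained simply by discarding the $\|\bar\xi\|^2$ term (which, incidentally, explains why the bound is not tight in general but is tight when $\bar\xi = 0$).
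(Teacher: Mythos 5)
Your proposal is correct and follows essentially the same route as the paper: both compute the second moment exactly via the without-replacement inclusion probabilities $\tau/n$ and $\tau(\tau-1)/(n(n-1))$, arrive at the same exact expression, and obtain the inequality by discarding the same non-positive term proportional to $\|\bar\xi\|^2$. The only cosmetic difference is that you center the indicator variables and work with the covariances $c_{ij}$ explicitly, while the paper expands $\E{\|Z\|^2}-\|\E{Z}\|^2$ directly.
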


Following from the proof of Corollary 3.5  in \cite{proxSVRG}, by applying Lemma \ref{lem:ms2gd:randvar} with $\xi_i := \nabla f _{i}(y^{k, t}) - \nabla f _{i}(w^k)$, we have the bound for variance as follows.

\begin{theorem}[Bounding Variance]
\label{thm:ms2gd:boundvariance}
Let $\alpha(b)\eqdef \tfrac{n-b}{b (n-1)}$. Considering the definition of $G^{k, t}$ in Algorithm~\ref{alg:ms2gd:mS2GD}, conditioned on $y^{k,t}$, we have $\E{G^{k, t}}=\nabla f (y^{k, t})$ and the variance satisfies,
\begin{align}
&\E{ \|G^{k, t} - \nabla f(y^{k, t})\|^2 }
\nonumber
\\ &\leq   
4L \alpha(b)    [P(y^{k, t})-P(w^*) + P(w^k)-P(w^*)].
\label{eq:ms2gd:fa<wvgfawvgfwagva}
\end{align}
\end{theorem}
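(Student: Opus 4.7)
My plan is to split the statement into two parts: unbiasedness of $G^{k,t}$, and the variance bound. The unbiasedness is immediate since $A^{kt}$ is a uniformly chosen subset of size $b$, so for any fixed collection $\{v_i\}_{i=1}^n$ one has $\E{\frac{1}{b}\sum_{i\in A^{kt}} v_i}=\frac{1}{n}\sum_{i=1}^n v_i$. Applying this with $v_i=\nabla f_i(y^{k,t})-\nabla f_i(w^k)$ and adding $g^k=\nabla f(w^k)$ yields $\E{G^{k,t}}=\nabla f(y^{k,t})$.

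The core of the theorem is the variance bound. The natural move is to set $\xi_i := \nabla f_i(y^{k,t}) - \nabla f_i(w^k)$, observe that $\bar\xi=\nabla f(y^{k,t})-\nabla f(w^k)=\nabla f(y^{k,t})-g^k$, and rewrite
\[
G^{k,t}-\nabla f(y^{k,t}) \;=\; \frac{1}{b}\sum_{i\in A^{kt}} \xi_i - \bar\xi.
\]
Lemma~\ref{lem:ms2gd:randvar} then gives $\E{\|G^{k,t}-\nabla f(y^{k,t})\|^2}\leq \frac{\alpha(b)}{n}\sum_{i=1}^n \|\nabla f_i(y^{k,t})-\nabla f_i(w^k)\|^2$, so the problem reduces to bounding this average of squared gradient differences in terms of $P(y^{k,t})-P(w^*)$ and $P(w^k)-P(w^*)$.

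For that final step I would apply $\|a-b\|^2\le 2\|a-\nabla f_i(w^*)\|^2+2\|b-\nabla f_i(w^*)\|^2$ at each $i$ to split the sum into two symmetric pieces, and then on each piece invoke the standard ``co-coercivity-style'' inequality for $L$-smooth convex $f_i$: $\|\nabla f_i(x)-\nabla f_i(w^*)\|^2 \leq 2L\left[f_i(x)-f_i(w^*)-\langle \nabla f_i(w^*),x-w^*\rangle\right]$. Averaging over $i$ produces $2L[f(x)-f(w^*)-\langle \nabla f(w^*),x-w^*\rangle]$.

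The one subtlety—which I view as the main obstacle—is converting this into a bound in terms of $P$ rather than $f$, because Theorem~\ref{thm:ms2gd:boundvariance} is stated for the composite problem. The trick is to use the optimality condition $-\nabla f(w^*)\in \partial R(w^*)$, which by convexity of $R$ gives $R(x)-R(w^*)+\langle \nabla f(w^*),x-w^*\rangle\geq 0$, so that $f(x)-f(w^*)-\langle \nabla f(w^*),x-w^*\rangle \leq P(x)-P(w^*)$. Applying this with $x=y^{k,t}$ and $x=w^k$ yields $\frac{1}{n}\sum_i\|\nabla f_i(x)-\nabla f_i(w^*)\|^2\leq 2L[P(x)-P(w^*)]$, and combining the two pieces with the factor $\alpha(b)$ from Lemma~\ref{lem:ms2gd:randvar} gives exactly the claimed bound $4L\alpha(b)[P(y^{k,t})-P(w^*)+P(w^k)-P(w^*)]$.
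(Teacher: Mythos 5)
Your proposal is correct and follows essentially the same route as the paper: the paper proves this theorem by applying Lemma~\ref{lem:ms2gd:randvar} with $\xi_i = \nabla f_i(y^{k,t}) - \nabla f_i(w^k)$ and then deferring to the proof of Corollary~3.5 of the Prox-SVRG paper, which is exactly the splitting $\|a-b\|^2 \le 2\|a-c\|^2 + 2\|c-b\|^2$ around $\nabla f_i(w^*)$, the co-coercivity bound for $L$-smooth convex $f_i$, and the optimality condition $-\nabla f(w^*) \in \partial R(w^*)$ to pass from $f$ to $P$ that you describe. Your write-up simply makes explicit the steps the paper outsources to the citation, including the one genuinely delicate point (handling the composite term $R$), and the constants work out exactly to $4L\alpha(b)$ as claimed.
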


\subsection{Proofs} 

\subsubsection{Proof of Lemma \ref{lem:ms2gd:randvar}}
As in the statement of the lemma, by $\E{\cdot}$ we denote expectation with respect to the random set $\hat S$. First, note that
\begin{align*}
 \eta \eqdef & \E{ \left\|\frac1\tau \sum_{i\in \hat S} \xi_i - \bar{\xi} \right\|^2 }
= \E{
  \frac1{\tau^2} \left\|\sum_{i\in \hat S} \xi_i \right\|^2 }
  - \|\bar{\xi} \|^2 \\
&= \frac1{\tau^2}
\E{ \sum_{i\in \hat S}\sum_{j\in \hat S} \xi_i ^T \xi_j }
  - \|\bar{\xi} \|^2.
\end{align*}
If we let $C\eqdef \left\| \bar{\xi} \right\|^2=  \frac1{n^2} \left( \sum_{i,j} \xi_i^T\xi_j \right)$, we can thus write
\begin{align*}
\eta &=
\tfrac1{\tau^2}
\left( 
  \frac{\tau (\tau-1)}{n(n-1)}  \sum_{i \neq j} \xi_i ^T \xi_j 
  + \frac{\tau}{n} \sum_{i=1}^n \xi_i^T\xi_i
  \right)
  - C
\\
=& 
\tfrac1{\tau^2}
\left( 
  \frac{\tau (\tau-1)}{n(n-1)}  \sum_{i, j} \xi_i ^T \xi_j 
  + \left(\frac{\tau}{n}-\frac{\tau (\tau-1)}{n(n-1)}\right) \sum_{i=1}^n \xi_i^T\xi_i
  \right)  - C
\\
=&
\frac1{n\tau}
\left[
-
\left(
-  \frac{(\tau-1)}{(n-1)} 
+\frac{\tau}{n}
\right)
\sum_{i, j} \xi_i ^T \xi_j 
  +  
  \frac{  n-\tau }{n-1}    \sum_{i=1}^n \xi_i^T\xi_i
\right]
\\
=&
\frac1{n\tau}
\frac{  n-\tau}{ (n-1)}
\left[
      \sum_{i=1}^n \xi_i^T\xi_i
      -
\frac1n
\sum_{i, j} \xi_i ^T \xi_j 
\right]
\leq
\frac1{n\tau}
\frac{  n-\tau}{ (n-1)}
\sum_{i=1}^n \left\| \xi_i\right\|^2,
\end{align*}
where in the last step we have used the bound
$
\frac1n
\sum_{i, j} \xi_i ^T \xi_j
= n
\left\|
\sum_{i=1}^n \frac1n\xi_i
\right\|^2
\geq 0.$

\subsubsection*{Proof of Theorem \ref{thm:ms2gd:s2convergence}}

The proof is following the core steps in \cite{proxSVRG}.
For convenience, let us define the stochastic gradient mapping
\begin{equation*}
d^{k, t} = \frac1 h(y^{k, t}- y^{k, t+1}) = \frac1 h(y^{k, t} - \prox_{ h R}(y^{k, t}- h G^{k, t})),
\end{equation*}
then the iterate update can be written as $y^{k, t+1} = y^{k, t} -  h d^{k, t}.$ Let us estimate the change of $\|y^{k, t+1} - w^* \|$. It holds that
\begin{align*}
\| y^{k, t+1} - w^* \|^2 &= \| y^{k, t} -  h d^{k, t} - w^* \|^2
\\
&= \| y^{k, t} - w^* \|^2 - 2 h \ip{d^{k, t}}{y^{k, t-1} - w^*} +  h^2 \| d^{k,t} \|^2.
\numberthis
\label{eq:ms2gd:safwavgfwafa}
\end{align*}

Applying Lemma 3.7 in \cite{proxSVRG} (this is why we need to assume that $h\leq 1/L$) with $x = y^{k, t}$, $v=G^{k, t}$, $x^+ = y^{k, t+1}$, $g=d^{k, t}$, $y=x^*$ and $\Delta = \Delta^{k, t} = G^{k, t} - \nabla f (y^{k, t})$, we get
\begin{align*}
- \ip{d^{k, t}}{y^{k, t}-w^*} + \frac h2\| d^{k, t} \|^2 
&\leq  P(w^*) - P(y^{k, t+1}) - \ip{\Delta^{k, t}}{y^{k, t+1} - w^*}
\\
&\qquad-\frac{\mu_F}2 \| y^{k, t} - w^* \|^2 - \frac{\mu_R}2 \| y^{k, t+1} - w^* \|^2,
\numberthis
\label{eq:ms2gd:vaawfwavfwafdewafca}
\end{align*}
and therefore,
\begin{align*}
\| y^{k, t+1} - w^* \|^2 &\overset{\eqref{eq:ms2gd:safwavgfwafa},\eqref{eq:ms2gd:vaawfwavfwafdewafca}}{\leq}  
2 h
\left( 
P(w^*) - P(y^{k, t+1}) \right.
\left. - \ip{\Delta^{k, t}}{y^{k, t+1} - w^*}
\right) + \| y^{k, t} - w^* \|^2
\\
&\hspace{18pt}= \| y^{k, t} - w^* \|^2 
- 2 h \ip{\Delta^{k, t}}{y^{k, t+1}-w^*} 
- 2 h[P(y^{k, t+1}) - P(w^*)].
\numberthis
\label{eq:ms2gd:safawvfdwavfda}
\end{align*}

In order to bound $-\ip{\Delta^{k, t}}{y^{k, t+1} - w^*}$, let us define the proximal full gradient update as\footnote{Note that this quantity is never computed during the algorithm. We can use it in the analysis nevertheless.}
$
\bar{y}^{k, t+1} = \prox_{ h R}(y^{k, t} -  h\nabla f (y^{k, t})).
$
We get
\begin{align*}
- \ip{\Delta^{k, t}}{y^{k, t+1} - w^*} 
&=
- \ip{\Delta^{k, t}}{y^{k, t+1} - \bar{y}^{k, t+1}}
- \ip{\Delta^{k, t}}{\bar{y}^{k, t+1} - w^*}
\nonumber
\\
&= 
- \ip{\Delta^{k, t}}{\bar{y}^{k, t+1}-w^*}
\\
&\quad- \ip{ \Delta^{k, t}}
{\prox_{ h R}(y^{k, t} -  h G^{k, t})
-\prox_{ h R}(y^{k, t-1} -  h\nabla f (y^{k, t-1}))}
\nonumber
\end{align*}
Using Cauchy-Schwarz and Lemma~\ref{lem:ms2gd:nonexpansiveness}, we  conclude that
\begin{align*}
-\ip{\Delta^{k, t}}{y^{k, t+1} - w^*}
&\leq 
 \|\Delta^{k, t}\| \| (y^{k, t} -  h G^{k, t}) - (y^{k, t}- h\nabla f (y^{k, t}))\|
\\
&\qquad - \ip{\Delta^{k, t}}{\bar{y}^{k, t+1} - w^*}
\nonumber
\\
&=  h \| \Delta^{k, t}\|^2 - \ip{\Delta^{k, t}}{\bar{y}^{k, t+1}-w^*}.
\numberthis
\label{eq:ms2gd:asfaevfwavfa}
\end{align*}

Further, we obtain 
\begin{align*}
\| y^{k, t+1} - w^* \|^2 
&\overset{\eqref{eq:ms2gd:asfaevfwavfa},\eqref{eq:ms2gd:safawvfdwavfda}}{\leq}
  \left\|
y^{k, t} - w^*
\right\|^2 
\\
&\qquad\qquad + 2 h 
\left(    h \| \Delta^{k, t}\|^2 - \ip{\Delta^{k, t}}{\bar{y}^{k, t+1} - w^*} 
- [P(y^{k, t+1}) - P(w^*)]\right). 
\end{align*}

By taking expectation, conditioned on $y^{k, t}$\footnote{For simplicity, we omit the $\E{ \cdot \,|\, y^{k, t}}$ notation in further analysis} we obtain

\begin{align*}
\E{ \|y^{k, t+1} - w^* \|^2 }
  &\leq
  \left\|
y^{k, t} - w^*
\right\|^2 
+ 2  h 
\left(  h \E{ \|\Delta^{k, t}\|^2 } - \E{ P(y^{k, t+1}) - P(w^*) } \right),
\numberthis
\label{eq:ms2gd:asfavgfwavgfwavgfa}
\end{align*}
where we have used that
$\E{ \Delta^{k, t}} = \E{ G^{k, t} } - \nabla f (y^{k, t})=0$,
thus $\E{ -\ip{\Delta^{k, t}}{\bar{y}^{k, t+1} - w^*} } = 0$\footnote{$\bar{y}^{k, t+1}$ is constant, conditioned on $y^{k, t}$}.
Now, if we substitute \eqref{eq:ms2gd:fa<wvgfawvgfwagva}
into \eqref{eq:ms2gd:asfavgfwavgfwavgfa} and decrease index $t$ by 1, we obtain
\begin{align*}
\E{ \|y^{k, t} - w^*\|^2 }
&\leq
 \left\|
y^{k, t-1} - w^* \right\|^2 
- 2h\E{ P(y^{k, t}) - P(w^*) }
\\
&\qquad+ \theta [P(y^{k, t-1}) - P(w^*)
+ P(w^k) - P(w^*)], 
\numberthis
\label{eq:ms2gd:vgafrvgwaevgfa}
\end{align*}
where 
$\theta\eqdef 8 L h^2 \alpha(b)$
and $\alpha(b)=\frac{n-b}{b (n-1)}$.
Note that 
\eqref{eq:ms2gd:vgafrvgwaevgfa}
is equivalent to
\begin{align*}
\E{ \|y^{k, t} - w^* \|^2 }
+  2 h 
(  \E{ P(y^{k, t}) - P(w^*) })
&\leq
  \left\|
y^{k, t-1} - w^*
\right\|^2 
\\
&\qquad+ \theta 
     \left(P(y^{k, t-1}) - P(w^*) 
     + P(w^k) - P(w^*)\right). 
\numberthis
\label{eq:ms2gd:aswdevfwavfaawca}
\end{align*}

Now, by the definition of 
$w^k$ in Algorithm~\ref{alg:ms2gd:mS2GD} we have that
\begin{align*}
\E{ P(w^{k+1}) }
&=
\frac1{m}
\sum_{t=1}^m 
   \E{ P(y^{k, t}) }.
\numberthis
\label{eqn:ms2gd:s2exp}
\end{align*}
By summing \eqref{eq:ms2gd:aswdevfwavfaawca}
for $1\leq t \leq m$, we  get on the left hand side
\begin{align*}
LHS =& \sum_{t=1}^m 
   \E{ \|y^{k, t} - w^*\|^2 }
+      2 h 
 \E{ P(y^{k, t}) - P(w^*) }
\numberthis
\label{eqn:ms2gd:s2LHS}
\end{align*}
and for the right hand side we have:
\begin{align*}
RHS &= \sum_{t=1}^m 
\left\{  \E{ \| y^{k, t-1} - w^*\|^2 } + 
\theta \E{ P(y^{k, t-1}) - P(w^*) + P(w^k) - P(w^*) } \right\}
\\
&\leq \sum_{t=0}^{m-1} 
 \E{ \|y^{k, t}-w^*\|^2 }
+ \theta
   \sum_{t=0}^{m} \E{ P(y^{k, t}) - P(w^*) }
+ \theta
\E{ P(w^k) - P(w^*) }m.
\numberthis
\label{eqn:ms2gd:s2RHS}
\end{align*}
Combining \eqref{eqn:ms2gd:s2LHS} and \eqref{eqn:ms2gd:s2RHS}
and using the fact that $LHS\leq RHS$, we have
\begin{align*}
\E{ \| y^{k, m} - w^*\|^2 } + 
2 h
\sum_{t=1}^m  \E{ P(y^{k, t}) - P(w^*) }
&\leq
 \E{ \| y^{k, 0} - w^*\|^2 } +
  \theta
 \E{ P(w^k) - P(w^*) }m
\\ 
&\qquad+  \theta
 \sum_{t=0}^{m} 
   \E{ P(y^{k, t}) - P(w^*) }.
\end{align*}  
Now, using \eqref{eqn:ms2gd:s2exp}, we obtain
\begin{align*}
\E{ \|y^{k, m} - w^* \|^2 }
+ 
2 h m
  \E{ P(w^{k+1}) - P(w^*) }
&\leq
 \E{ \| y^{k, 0} - w^*\|^2 }
 +
  \theta m \E{ P(w^k) - P(w^*) }
\\
&\qquad+ 
\theta m \E{ P(w^{k+1}) - P(w^*) }
\\
&\qquad+
 \theta
   \E{ P(y^{k, 0}) - P(w^*) }.
\numberthis
\label{eq:ms2gd:vfrwavfwafaewfcwa}
\end{align*}  
  
Strong convexity~\eqref{eq:ms2gd:strongconv} and optimality of $w^*$ imply that $0\in \partial P(w^*)$, and hence for all $w \in \R^d$ we have
\begin{equation*}
\| w - w^* \|^2\leq\frac2\mu [P(w) - P(w^*)]
\numberthis
\label{eqn:ms2gd:strongconvex}.
\end{equation*}
Since $\E{ \| y^{k, m} - w^*\|^2 } \geq 0$ and $y^{k, 0} = w^k$, by combining \eqref{eqn:ms2gd:strongconvex}
and \eqref{eq:ms2gd:vfrwavfwafaewfcwa} we get
\begin{align*}
m(2h -\theta)
 \E{ P(w^{k+1}) - P(w^*) }
\leq
( P(w^k)-P(w^*))
\left( \frac2{\mu} 
  +  \theta
 \left(m+1 \right) \right).
\end{align*} 
Notice that in view of our assumption on $h$ and definition of $\theta$, we have $
2h>\theta$, and hence
\begin{equation*}
\E{ P(w^{k+1})-P(w^*) } \leq c[P(w^k) - P(w^*)],
\end{equation*}
where 
$
c = 
\frac{    2
  }
  {
 m \mu (2h-\theta)
  }
+
\frac{  \theta(m+1)   } { m(2h-\theta)}.
$
Applying the above linear convergence relation recursively with chained expectations, we finally obtain
$
\E{ P(w^k) - P(w^*) } \leq c^k[P(w^0) - P(w^*)].
$

\subsubsection*{Proof of Theorem \ref{thm:ms2gd:optimalM}}

Clearly, if we choose some value of $ h$ then the value of $m$ will be determined from \eqref{eq:ms2gd:s2rho} (i.e. we need to choose $m$ such that we will get desired rate).
Therefore, $m$ as a function of $ h$ obtained from \eqref{eq:ms2gd:s2rho} is
\begin{equation}
\label{eq:ms2gd:fsarewavgfwavgfw}
m( h) = \frac{1 + 4 \alpha(b)  h^2 L \mu}
            { h \mu (c - 4 \alpha(b)  h L
   ( c + 1 )  )}.
\end{equation}
Now, we can observe that the nominator is always positive
and the denominator is positive only if
$
c   > 4 \alpha(b)  h L
   ( c + 1)
$, which implies
   $\tfrac{1}{4 \alpha(b) L
   } 
   \cdot  \tfrac{c}{ c + 1} >   h 
$ (note that $\tfrac{c}{ c + 1}\in [0,\frac12]$).
Observe that this condition is stronger than the one in the assumption of Theorem \ref{thm:ms2gd:s2convergence}.
It is easy to verify that
$$\lim_{ h \searrow 0} m( h) = +\infty,
\qquad
\lim_{ h \nearrow \frac{1}{4 \alpha(b) L
   } 
   \cdot   \frac{c}{ c + 1}   } m( h) = +\infty.
$$
Also note that $m( h)$ is differentiable (and continuous) at any 
$ h \in \left( 0,\frac{1}{4 \alpha(b) L
   } 
   \cdot   \frac{c}{ c + 1} \right) =: I_ h$.
The derivative of $m$ is given by
$
m'( h) = 
\tfrac{-c + 4 \alpha(b)  h L (2 + (2 +  h \mu) c)}
     { h^2 \mu (c - 4 \alpha(b)  h L (1 + c))^2}.
$ Observe that $m'( h)$ is defined and continuous for any $ h \in I_ h$.
Therefore there have to be some stationary points (and in case that there is just on $I_ h$) it will be the global minimum on $I_ h$.
The first order condition gives
\begin{align*}
\tilde  h_b
&=
 \frac{-2 \alpha(b) L (1 + c) + \sqrt{
 \alpha(b) L (\mu c^2 + 4 \alpha(b) L (1 + c)^2)}}
 {2 \alpha(b) L \mu c}
 \\
&=
 \sqrt{ 
 \frac{ 
   1  }
 {4 \alpha(b) L  \mu   } 
+
 \frac{ 
    (1 + c)^2 }
 {  \mu^2 c^2} 
 }
 -
  \frac{    1 + c   }
 {  \mu c}.
\numberthis
\label{eq:ms2gd:avfawefwafewafwafe} 
\end{align*}
If this $\tilde h_b \in I_ h$
and also $\tilde h_b \leq \frac1{L}$
then this is the optimal choice and plugging 
\eqref{eq:ms2gd:avfawefwafewafwafe}
into \eqref{eq:ms2gd:fsarewavgfwavgfw}
gives us \eqref{eq:ms2gd:vfewdfwaefvawvgfeefewafa}.

\paragraph{Claim \#1} It always holds that  $\tilde h_b \in I_ h$.
We just need to verify that
$$
 \sqrt{ 
 \frac{ 
   1  }
 {4 \alpha(b) L  \mu   } 
+
 \frac{ 
    (1 + c)^2 }
 {  \mu^2 c^2} 
 }
 -
  \frac{    1 + c   }
 {  \mu c}
 < \frac{1}{4 \alpha(b) L
   } 
   \cdot   \frac{c}{ c + 1}, 
$$
which is equivalent to 
$
\mu c^2 + 4 \alpha(b) L (1 + c)^2
 > 
 2 (1 + c) \sqrt{\alpha(b) L (\mu c^2 + 4 \alpha(b) L (1 + c)^2)}. $
Because both sides are positive, we can square them to obtain the equivalent condition
$$
\mu c^2 (\mu c^2 + 4 \alpha(b) L (1 + c)^2) > 0.
$$
\paragraph{Claim \#2} If $\tilde  h_b > \frac1{L}$ then 
$ h_b^* = \frac1L$.
The only detail which needs to be verified is that the denominator
of \eqref{eq:ms2gd:fasfawefwafewa} is positive
(or equivalently we want to show that
$ c > 4 \alpha(b)   (1+c)$.
To see that, we need to realize that in that case
we have $\tfrac1L \leq \tilde  h_b \leq \frac{1}{4 \alpha(b) L
   } 
   \cdot   \frac{c}{ c + 1}$,
which implies that
$4 \alpha(b)   (1+c) < c.
$

\subsubsection*{Proof of Corollary \ref{thm:ms2gd:minibatch}}
\label{thm:ms2gd:proof3}

By substituting definition of $\tilde  h_b$ in Theorem~\ref{thm:ms2gd:optimalM}, we get
\begin{equation}
\label{eqn:ms2gd:b0}
\tilde  h_b < \frac1L
\quad \Longleftrightarrow \quad 
b < b_0 \eqdef \frac{8c n\kappa + 8n\kappa + 4c n}
{c n\kappa + (7c+8)\kappa + 4c},
\end{equation}
where $\kappa = L/\mu$. Hence, it follows that if $b < \lceil b_0 \rceil$, then $h_b = \tilde  h_b$ and $m_b$ is defined in~\eqref{eq:ms2gd:vfewdfwaefvawvgfeefewafa}; otherwise, $h_b=\frac1L$ and $m_b$ is defined in~\eqref{eq:ms2gd:fasfawefwafewa}. Let $e$ be the base of the natural logarithm. By selecting $b_0 =\frac{8 n\kappa + 8 e n\kappa + 4n}
{n\kappa + (7+8e)\kappa + 4}$, choosing mini-batch size $b<\lceil b_0 \rceil$, and running the inner loop of mS2GD for
\begin{equation}
\label{eqn:ms2gd:maxiter}
m_b = \left\lceil 8 e\alpha(b)\kappa \left(e + 1 + \sqrt{
 \frac1{4 \alpha(b) \kappa} +   (1 + e)^2}\right) \right\rceil
 \end{equation}
 iterations with constant stepsize 
\begin{equation}
\label{eqn:ms2gd:stepsizehb}
h_b = \sqrt{ \left( \frac{1+e}{\mu} \right)^2 + \frac1{4\mu\alpha(b)L}} - \frac{1+e}{\mu},
\end{equation}
we can achieve a convergence rate 
\begin{align*}
c &\overset{\eqref{eq:ms2gd:s2rho}}{=}
\frac{    1
  }
  {
  m_b 
 h_b \mu
(
1
-
  4 h_b  L \alpha(b)  
)
  }
+
\frac{      
  4 h_b  L \alpha(b) 
\left(      
 m_b
+ 
  1
\right)  
  }
  {
  m_b 
(
1
-
  4 h_b   L \alpha(b)  
)
  } \overset{\eqref{eqn:ms2gd:maxiter}, \eqref{eqn:ms2gd:stepsizehb}}{=}
\frac{1}{e}. 
\numberthis
\label{eqn:ms2gd:convergencerate}
\end{align*}
Since
$k = \left\lceil \log(1 / \epsilon) \right\rceil \geq \log(1 / \epsilon)$ if and only if $ 
e^k  \geq 1 / \epsilon$ if and only if
$ 
e^{-k}  \leq  \epsilon,
$
 we can conclude that $c^k \overset{\eqref{eqn:ms2gd:convergencerate}}{=} (e^{-1})^k = e^{-k} \leq \epsilon$. Therefore, running mS2GD for $k$ outer iterations achieves $\epsilon$-accuracy solution defined in \eqref{eq:ms2gd:epsilonaccuracy}. Moreover, since in general $\kappa \gg e, n\gg e$, it can be concluded that
$$ b_0 \overset{\eqref{eqn:ms2gd:b0}}{=}\frac{8(1+e) n\kappa + 4n}
{n\kappa + (7+8e)\kappa + 4} \approx 8 \left( e+1 \right) \approx 29.75,$$
then with the definition $\alpha(b) = \frac{(n-b)}{b(n-1)}$, we derive
 \begin{align*}
 & b m_b 
 \overset{\eqref{eqn:ms2gd:maxiter}}{=}
 \left\lceil 8 e\kappa\frac{(n-b)}{(n-1)} \left(e + 1 + \sqrt{
 \frac1{4 \alpha(b) \kappa} +   (1 + e)^2}\right) \right\rceil
\\&
\quad\overset{1\leq b < 30}{\leq} 
 \left\lceil 8 e\kappa \left((e + 1) + \sqrt{
 \frac{b}{4 \kappa} +   (1 + e)^2}\right) \right\rceil = \mathcal{O}(\kappa),
 \end{align*}
so from \eqref{eqn:ms2gd:complexity}, the total complexity can be translated to
$ \mathcal{O} \left( (n + \kappa) \log(1 / \epsilon) \right).$
This result shows that we can reach efficient speedup by mini-batching as long as the mini-batch size is smaller than some threshold $b_0 \approx 29.75$, which finishes the proof for Corollary \ref{thm:ms2gd:minibatch}.

\subsection{Proximal lazy updates for $\ell_1$ and $\ell_2$-regularizers}\label{section:ms2gd:proxl}

\subsubsection*{Proof of Lemma~\ref{lemma:ms2gd:L2regularizer}}

For any $s \in \{1,2,\dots,\tau\}$
we have
$
\tilde y^s
=\prox_{hR}(\tilde y^{s-1} - hg)
= \beta 
  (\tilde y^{s-1} - hg),
$
where  $\beta \eqdef 1/(1+\lambda  h)$.
Therefore,
\[
 \tilde y^{\tau} 
= \beta^\tau \tilde y^{0} 
- h\left(\sum_{j=1}^\tau \beta^j\right) g
=
\beta^{\tau} y  - \frac{ h\beta}{1-\beta}
                \left[1 - \beta^{\tau} \right] g.
\]

\subsubsection*{Proof of Lemma~\ref{lemma:ms2gd:L1regularizer}}

\begin{proof}
For any $s \in \{1,2,\dots,\tau\}$ and $j\in \{1,2,\dots,d\}$, 
\begin{align*}
\tilde y^{s}_j &= \arg\min_{x\in\R} \frac12(x - \tilde y^{s-1}_j + hg_j)^2+ \lambda h |x| 
\\
&= 
\begin{cases}
\tilde y^{s-1}_j - ( \lambda + g_j) h, &\text{ if } \tilde y^{s-1}_j  > (\lambda+g_j) h,\\
\tilde y^{s-1}_j + ( \lambda - g_j) h, &\text{ if } \tilde y^{s-1}_j  <-(\lambda-g_j) h,\\
0, &\text{ otherwise, }
\end{cases}
\\
&= 
\begin{cases}
\tilde y^{s-1}_j - M, &\text{ if } \tilde y^{s-1}_j  > M, \\
\tilde y^{s-1}_j - m, &\text{ if } \tilde y^{s-1}_j  < m ,\\
0, &\text{ otherwise.}
\end{cases}
\end{align*}
where $M \eqdef (\lambda + g_j) h$, $m \eqdef -(\lambda - g_j) h$ and $M-m = 2\lambda h > 0$.  Now, we will distinguish several cases based on $g_j$:

\begin{enumerate}
\item[(1)] When $g_j \geq \lambda$, then $M>m = - (\lambda - g_j)h \geq 0$, thus by letting $p = \left\lfloor \frac{y_j}{M}\right\rfloor $, we have that: if $y_j < m$, then
$\tilde y_j^\tau = y_j - \tau m;$ if $m  \leq y_j < M$, then
$ \tilde y^{\tau}_j =  -(\tau-1)m;$ and if $y_j \geq M$, then
\begin{align*}
\tilde y^{\tau}_j &= 
\begin{cases}
y_j - \tau M, &\text{ if }\tau \leq p,\\
y_j - pM -(\tau - p)m, &\text{ if }\tau > p \text{ \& }y_j - pM < m,\\
-(\tau - p-1)m, &\text{ if }\tau > p \text{ \& } y_j - pM \geq m,
\end{cases}
\\&= 
\begin{cases}
y_j - \tau M, &\text{ if }\tau \leq p,\\
\min\{y_j - pM, m\} -(\tau - p)m, &\text{ if }\tau > p.
\end{cases}
\end{align*}
\item[(2)] When $-\lambda<g_j<\lambda$, then $M = (\lambda + g_j)h> 0,   m = - (\lambda - g_j)h  < 0$, thus we have that 
\begin{align*}
&\tilde y^{\tau}_j =
\begin{cases}
\max\{  y_j - \tau M, 0\},&\text{ if } y_j\geq 0,\\
\min\{  y_j - \tau m, 0\},&\text{ if } y_j< 0.
\end{cases}
\end{align*}
\item[(3)] When $g_j \leq - \lambda$, then $m < M = (\lambda + g_j)h \leq 0$, thus by letting $q = \left\lfloor\frac{y_j}{m}\right\rfloor$, we have that: if $y_j \leq m$, then
 \begin{align*}
 \tilde y^\tau_j &= 
 \begin{cases}
 y_j - \tau m, &\text{ if }\tau \leq q,\\
 y_j - p m -(\tau-q)M, &\text{ if }\tau > q \text{ \& } y_j - qm > M,\\
 -(\tau-q - 1)M, &\text{ if }\tau > q \text{ \& }y_j - qm \leq M,
 \end{cases}
 \\&= 
  \begin{cases}
 y_j - \tau m, &\text{ if }\tau \leq q,\\
 \max\{y_j - q m, M\} -(\tau-q)M, &\text{ if }\tau > q;
 \end{cases}
 \end{align*}
 if $m < y_j \leq M$, then
$\tilde y^\tau_j = -(\tau-1)M;$ if $y_j >  M$, then
$\tilde y^\tau_j = y_j - \tau M.$
\end{enumerate}

Now, we will perform a few simplifications:
Case (1). When $y_j < M$, we can conclude that
$
\tilde y^\tau_j= \min\{y_j, m\} - \tau m.
$
Moreover, since the following equivalences hold if $g_j \geq \lambda$: $y_j \geq M \ \Leftrightarrow \ \tfrac{y_j}{M} \geq 1
\ \Leftrightarrow \ p\geq 1$, and
$y_j < M\ \Leftrightarrow \ \tfrac{y_j}{M} < 1
\ \Leftrightarrow \ p\leq 0$,
the situation  simplifies to
\begin{align*}
\tilde y^\tau_j&\quad =
\begin{cases}
y_j - \tau M, &\text{if }p\geq\tau,\\
\min\{y_j - pM, m\} -(\tau - p)m, &\text{if }1 \leq p<\tau,\\
\min\{y_j, m\} - \tau m, &\text{if }p\leq 0,
\end{cases}
\\&\quad =
\begin{cases}
y_j - \tau M, &\text{if }p\geq\tau,\\
\min\{y_j - [p]_+ M, m\} -(\tau - [p]_+)m, &\text{if }p<\tau,
\end{cases}
\end{align*}
where $[\cdot]_+ \eqdef \max\{\cdot, 0\}$. For Case (3), when $y_j > m$, we can conclude that
$
\tilde y^\tau_j = \max\{y_j, M\} - \tau M,
$
and in addition, the following equivalences hold when $g_j \leq -\lambda$:
\begin{align*}
y_j \leq m \ \Leftrightarrow \  \frac{y_j}{m} \geq 1
\  \Leftrightarrow \ q\geq 1,
\\
y_j > m\  \Leftrightarrow \  \frac{y_j}{m} < 1
\ \Leftrightarrow \ q\leq 0,
\end{align*}
which  summarizes the situation as follows:
\begin{align*}
\tilde y^\tau_j &\quad=
\begin{cases}
y_j - \tau m, &\text{if }q\geq\tau,\\
 \max\{y_j - q m, M\} -(\tau-q)M, &\text{if }1 \leq q<\tau,\\
 \max\{y_j, M\} - \tau M, &\text{if }q\leq 0,
\end{cases}
\\&\quad=
\begin{cases}
y_j - \tau m, &\text{if }q\geq\tau,\\
 \max\{y_j - [q]_+ m, M\} -(\tau-[q]_+)M, &\text{if }q<\tau.
\end{cases}\qedhere
\end{align*}

\end{proof}

\section{Conclusion}

We have proposed mS2GD---a mini-batch semi-stochastic gradient method---for minimizing a strongly convex composite function. Such optimization problems arise frequently in inverse problems in signal processing and statistics. Similarly to SAG, SVRG, SDCA and S2GD, our algorithm also outperforms existing deterministic method such as ISTA and FISTA. Moreover, we have shown that the method is by design amenable to a simple parallel implementation. Comparisons to state-of-the-art algorithms suggest that mS2GD, with a small-enough mini-batch size, is competitive in theory and faster in practice than other competing methods even without parallelism. The method can be efficiently implemented for sparse data sets.

\chapter{Distributed Optimization with Arbitrary Local Solvers}
\label{ch:cocoa}

\section{Motivation}
\label{sec:cocoa:motivation}
Regression and classification techniques, represented in the general class of regularized loss minimization problems \cite{vapnik1998statistical}, are among the most central tools in modern big data analysis, machine learning, and signal processing.
For these tasks, much effort from both industry and academia has gone into the development of highly tuned and customized solvers.
However, with the massive growth of available datasets, major roadblocks still persist in the distributed setting, where data no longer fits in the memory of a single computer, and computation must be split across multiple machines in a network \cite{bottou2010large,marecek2014distributed,Ma:2015tt,
Balcan:2012tc,richtarik2013distributed,
Duchi:2013te,takac2015distributed,
fercoq2014fast,chen2014large,ma2015linear,
shamir2014distributed,DANE,ALPHA, 
zhang2015communication, konecny2015federated}.

On typical real-world systems, communicating data between machines is several orders of magnitude slower than reading data from main memory, e.g., when leveraging commodity hardware.
Therefore when trying to translate existing highly tuned single machine solvers to the distributed setting, great care must be taken to avoid this significant communication bottleneck \cite{Jaggi:cocoa,Yang:2013vl}.

While several distributed solvers for the problems of interest have been recently developed, they are often unable to fully leverage the competitive performance of their tuned and customized single machine counterparts, which have already received much more research attention. 
More importantly, it is unfortunate that distributed solvers cannot automatically benefit from improvements made to the single machine solvers, and therefore are forced to lag behind the most recent developments.

In this chapter we make a step towards resolving these issues by proposing a general communication-efficient distributed framework that can employ arbitrary single machine local solvers and thus directly leverage their benefits and problem-specific improvements.
Our framework works in rounds, where in each round the local solvers on each machine find a (possibly weak) solution to a specified subproblem of the same structure as the original master problem. On completion of each round, the partial updates between the machines are efficiently combined by leveraging the primal-dual structure of the global problem \cite{Yang:2013vl,Jaggi:cocoa,Ma:2015ti}. The framework therefore completely decouples the local solvers from the distributed communication. Through this decoupling, it is possible to balance communication and computation in the distributed setting, by controlling the desired accuracy and thus computational effort spent to determine {the solution to each local subproblem}. Our framework holds with this abstraction even if the user wishes to use a different local solver on each machine.

\subsection{Contributions}

\paragraph*{\it Reusability of Existing Local Solvers.}
The proposed framework allows for distributed optimization with the use of arbitrary local solvers on each machine. This abstraction makes the resulting framework highly flexible, and means that it can easily leverage the benefits of well-studied, problem-specific single machine solvers. In addition to increased flexibility and ease-of-use, this can result in large performance gains, as single machine solvers for the problems of interest have typically been thoroughly tuned for optimal performance. Moreover, any performance improvements that are made to these local solvers can be automatically translated by the framework into the distributed setting.

\paragraph*{\it Adaptivity to Communication Cost.}
On real-world compute systems, the cost of communication versus computation typical varies by many orders of magnitude, from high-performance computing environments to very slow disk-based distributed workflow systems such as MapReduce/Hadoop. 
For optimization algorithms, it is thus essential to accommodate varying amounts of work performed locally per round, while still providing convergence guarantees.  Our framework provides exactly such control. 

\paragraph*{\it Strong Theoretical Guarantees.}
In this chapter we extend and improve upon the \cocoa~\cite{Jaggi:cocoa} method. Our theoretical convergence rates apply to both smooth and non-smooth losses, and for both \cocoa as well as \cocoap, the more general framework presented here.
Our new rates exhibit favorable \emph{strong scaling} properties for the class of problems considered, as the number of machines~$K$ increases and the data size is kept fixed. More precisely, while the convergence rate of \cocoa degrades as $K$ is increased, the stronger theoretical convergence rate here is---in the worst case {complexity}---\emph{independent} of $K$. 
As only one vector is communicated per round and worker, 
this favorable scaling might be surprising. Indeed, for existing methods, splitting data among more machines generally
increases communication requirements \cite{shamir2014distributed,Arjevani:2015vka}, which
can severely affect overall runtime.

\paragraph*{\it Primal-Dual Convergence.} We additionally strengthen the rates by showing stronger 
primal-dual convergence for both algorithmic frameworks, which are almost 
tight to their dual-only (or primal-only) counterparts.
Primal-dual rates for \cocoa had not previously been analyzed in the general convex case.
Our primal-dual rates allow efficient and practical certificates for the 
optimization quality, e.g., for stopping criteria. 

\paragraph*{\it Experimental Results.}
Finally, we provide an extensive experimental comparison that highlights the impact of using various arbitrary solvers locally on each machine, with experiments on several real-world, distributed datasets. We compare the performance of \cocoa and \cocoap across these datasets and choices of solvers, in particular illustrating the performance on a {}280 GB dataset. Our code is available in an open source C\texttt{++} library, at: \url{https://github.com/optml/CoCoA}.

\subsection{Outline} The rest of the chapter is organized as follows. Section~\ref{sec:cocoa:problem} provides context and states the problem of interest, including necessary assumptions and their consequences. In Section~\ref{sec:cocoa:algorithm} we formulate the algorithm in detail and explain how to implement it efficiently in practice. The main theoretical results are presented in Section~\ref{sec:cocoa:result},
followed by a discussion of relevant related work in Section~\ref{sec:cocoa:relatedWork}.
 Practical experiments demonstrating the strength of the proposed framework are given in Section~\ref{sec:cocoa:experiments}.
Finally, we prove the main results in Section~\ref{sec:cocoa:analysis}.

\section{Background and Problem Formulation}
\label{sec:cocoa:problem}

To provide context for our framework, we first state traditional complexity measures and convergence rates for single machine algorithms,  and then demonstrate that these must be adapted to more accurately represent the performance of an algorithm in the distributed setting.

When running an iterative optimization algorithm $\mathcal{A}$ on a single machine, its performance is typically measured by the total runtime:
\begin{equation}
\label{eq:runtime:default}
\tag{T-A}
\text{TIME}(\mathcal{A}) = \mathcal{I}_\mathcal{A} (\epsilon) \times \mathcal{T}_\mathcal{A} \, .
\end{equation}
Here, $\mathcal{T}_\mathcal{A}$ stands for the time it takes to perform a single iteration of algorithm $\mathcal{A}$, and $\mathcal{I}_\mathcal{A} (\epsilon)$ is the number of iterations  $\mathcal{A}$ needs to attain an $\epsilon$-accurate objective.\footnote{While for many algorithms the cost of a single iteration will vary throughout the iterative process, this simple model will suffice for our purpose to highlight the key issues associated with extending algorithms to a distributed framework.} 
 
{On a single machine, most of the state-of-the-art first-order optimization methods can achieve quick convergence in practice in terms of \eqref{eq:runtime:default} by performing a large amount of relatively fast iterations.} In the distributed setting, however, time to communicate between two machines can be several orders of magnitude slower than even a single iteration of such an algorithm. As a result, the overall time needed to perform {this} single iteration can increase significantly. 

Distributed timing can therefore be more accurately illustrated using the following practical distributed efficiency model (see also \cite{marecek2014distributed}), where
\begin{equation}
\label{eq:runtime:distributed}
\tag{T-B}
\text{TIME}(\mathcal{A}) = \mathcal{I}_\mathcal{A} (\epsilon) \times \left( c + \mathcal{T}_\mathcal{A} \right).
\end{equation}
The extra term $c$ is the time required to perform one round of communication.\footnote{%
For simplicity, we assume here a fixed network architecture, and compare only to classes of algorithms that communicate a single vector in each iteration, rendering $c$ to be a constant, assuming we have a fixed number of machines. Most first-order algorithms would fall into this class.} As a result, an algorithm that performs well in the setting of \eqref{eq:runtime:default} does not necessarily perform well in the distributed setting \eqref{eq:runtime:distributed}, especially when implemented in a straightforward or na\"{i}ve way. 
In particular, if $c \gg \mathcal{T}_\mathcal{A}$, we could intuitively expect less potential for improvement {from fast computation}, as most of the time in the method will be spent on communication, not on actual computational effort to solve the problem. In this setting, novel optimization procedures are needed that carefully consider the amount of communication and the distribution of data across multiple machines.

One approach to this challenge is to design novel optimization algorithms from scratch, designed to be efficient in the distributed setting. This approach has one obvious practical drawback: 
There have been numerous highly efficient solvers developed and fine-tuned to  particular problems of interest, as long as the problem fits onto a single machine. These solvers are ideal if run on a single machine, but with the growth of data and necessity of data distribution, they must be re-designed to work in modern data regimes.

Recent work \cite{Jaggi:cocoa,Ma:2015ti,Yang:2013vl,Yang:2013ui,Yu:2012fp,Smith:2015ua} has attempted to address this issue by designing algorithms that reduce the communication bottleneck by allowing infrequent communication, while utilizing already existing algorithms as local sub-procedures. 
{The presented work here builds on the promising approach of \cite{Yang:2013vl,Jaggi:cocoa} in this direction.
See Section \ref{sec:cocoa:relatedWork} for a detailed discussion of the related literature.}

The core idea in this line of work is that one can formulate a local subproblem for each individual machine, and run an arbitrary local solver dependent only on local data for a number of iterations---obtaining a partial local update. After each worker returns its partial update, a global update is formed by their aggregation.  

The big advantage of this is that companies and practitioners do not have to implement new algorithms that would be suitable for the distributed setting. We provide a way for them to utilize their existing algorithms that work on a single machine, and provide a novel communication protocol on top of this. 

In the original work on \cocoa \cite{Jaggi:cocoa}, authors provide convergence analysis only for the case when {the overall update is formed as an average of the partial updates}, and note that in practice it is possible to improve performance  {by making a longer step in the same direction}. The main contribution of this work is a more general convergence analysis of various settings, which enables us to do better than averaging. In one case, we can even sum the partial updates  {to obtain the overall update}, which yields the best result, both in theory and practice. We will see that this can result in significant performance gains, see also \cite{Ma:2015ti,Smith:2015ua}.

In the analysis, we will allow local solvers of arbitrarily weak accuracy, each working on its own subproblem which is defined in a completely data-local way for each machine. The relative accuracy obtained by each local solver will be denoted by $\Theta\in[0,1]$, where $\Theta=0$ describes an exact solution of the subproblem, and~$\Theta=1$ means that the local subproblem objective has not improved at all, for this run of the local solver. 
This paradigm results in a substantial change in how we analyze efficiency in the distributed setting. The formula practitioners are interested in minimizing thus changes to become:
\begin{equation}
\label{eq:runtime:cocoa}
\tag{T-C}
\text{TIME}(\mathcal{A}, \Theta) = \mathcal{I} (\epsilon, \Theta) \times \left( c + \mathcal{T}_\mathcal{A}(\Theta) \right).
\end{equation}
Here, the function $\mathcal{T}_\mathcal{A}(\Theta)$ represents the time the local algorithm $\mathcal{A}$ needs to obtain an accuracy of~$\Theta$ on the local subproblem.  
Note that the number of outer iterations~$\mathcal{I} (\epsilon, \Theta)$ is independent of choice of the inner algorithm $\mathcal{A}$, which will also be reflected by our convergence analysis presented in Section~\ref{sec:cocoa:result}. Our convergence rates will hold for any local solver $\mathcal{A}$ achieving local {accuracy of} $\Theta$.
For strongly convex problems, the general form will be 
$ \mathcal{I} (\epsilon, \Theta) = \frac{\mathcal{O}(\log(1/\epsilon))}{1 - \Theta} \, . $
The inverse dependence on $1 - \Theta$ suggests that there is a limit to how much we can gain by solving local subproblems to high accuracy, i.e., for $\Theta$ close to~$0$. There will always be on the order of $\log(1 / \epsilon)$ outer iterations needed. Hence, excessive local accuracy should not be necessary. On the other hand, if $\Theta \rightarrow 1$, meaning that the cost and quality of the local solver diminishes, then the number of outer iterations $\mathcal{I} (\epsilon, \Theta)$ will increase dramatically, which is to be expected.

To illustrate the strength of the paradigm \eqref{eq:runtime:cocoa} compared to \eqref{eq:runtime:distributed}, suppose that we run just a single iteration of gradient descent as the local solver $\mathcal{A}$. Within our framework, choosing this local solver would lead to a method which is equivalent to naively distributed gradient descent\footnote{Note that this is not obvious at this point. They are identical, subject to choice of local subproblems as specified in Section~\ref{sec:cocoa:subproblem}.}. Indeed, running gradient descent for a single iteration would {attain} a particular value of $\Theta$. Note that we typically do not set {this explicitly: $\Theta$} is implicitly chosen by the number of iterations or stopping criterion specified by the user for the local solver. There is no reason to think that the value attained by single iteration of gradient descent would be optimal. For instance, it may be the case that running gradient descent for, say,  $200$ iterations, instead of 
just one, would give substantially better {results} in practice, due to better communication efficiency. {Considerations of this form} are discussed {in detail} in Section~\ref{sec:cocoa:experiments}.

In general, one would intuitively expect that the optimal choice would be to have~$\Theta$ such that $\mathcal{T}_\mathcal{A}(\Theta) = \mathcal{O}(1) \times c$. In practice, however, the best strategy for any given local solver {is to estimate the optimal choice by trying several values for the number of local iterations}. {We discuss the importance of $\Theta$}, both theoretically and empirically, in Sections~\ref{sec:cocoa:result} and \ref{sec:cocoa:experiments}.
\vspace{-1mm}

\subsection{Problem Formulation}
\label{sec:cocoa:problemformulation}
Let the training data $\{\xv_i \in \R^d, y_i \in \R \}_{i=1}^n$ be the set of input-output pairs, where $y_i$ can be real valued {or} from a discrete set in the case of classification problems.
We will assume without loss of generality  that $\forall i: \|\xv_i\|\leq 1$. Many common tasks in machine learning and signal processing can be cast as the following optimization problem:
\begin{equation}
\label{eq:primal}
\min_{\wv\in \R^d} \left\{ P(\wv) \eqdef \frac1n \sum_{i=1}^n \ell_i( \xv_i^T \wv) + \lambda g(\wv) \right\},
\end{equation}
where $\ell_i$ is some convex loss function and $g$ is a regularizer. Note that $y_i$ is typically hidden in the formulation of functions $\ell_i$. Table \ref{tbl:differentLossFunctions} lists several common loss functions together with their convex conjugates $\ell^*_i$ \cite{ShalevShwartz:2014dy}.

\begin{table} 
{\begin{tabular}{llll}
\toprule
\multicolumn{1}{c}{Loss function} 
  & \multicolumn{1}{c}{ $\ell_i(a)$}  & \multicolumn{1}{c}{$\ell^*_i(b)$}
  & \multicolumn{1}{c}{Property  of $\ell$}
   \\ \midrule
Quadratic & $\frac12 (a-y_i)^2$ &  $\frac12 b^2 + y_ib $
& Smooth 
 \\
Hinge  & $\max\{0, y_i-a\}$ &  $y_i b, \quad b\in[-1,0] $ 
 & Continuous \\
Squared hinge & $(\max\{0, y_i-a\})^2$ & $\frac{b^2}{4}, \quad b\in[-\infty,0] $ 
 & Smooth \\
Logistic & $\log(1+\exp{(-y_ia)})$ & $-\frac{b}{y_i}\log \left(-\frac{b}{y_i}\right) + \left( 1 +\frac{b}{y_i} \right) \log \left( 1 + \frac{b}{y_i} \right) $ 
& Smooth \\
\bottomrule
\end{tabular}
}
\caption{Examples of commonly used loss functions.}
\label{tbl:differentLossFunctions}
\end{table}

The dual optimization problem for formulation \eqref{eq:primal}{---}as a special case of Fenchel duality{---}can be written as follows \cite{yuan2012recent,ShalevShwartz:2014dy}:
\begin{equation}
\label{eq:dual}
\max_{\alphav \in \R^n}
 \left\{
 D(\alphav )\eqdef  
 \frac1n \left(\sum_{i=1}^n -\ell_i^*(- \alpha_i)\right)
 - \lambda g^* \left( \frac1{\lambda n} X \alphav \right) \right\},
\end{equation}
where $X = [\xv_1, \xv_2, \dots, \xv_n] \in \R^{d \times n}$, and $\ell_i^*$ and $g^*$ are the convex conjugate functions of $\ell_i$ and $g$, respectively. The convex (Fenchel) conjugate of a function $\phi : \R^k \rightarrow \R$ is defined as the function $\phi^* : \R^k \rightarrow \R$, with $\phi^*(\uv) := \sup_{\sv\in\R^k} \{ \sv^T \uv - \phi(\sv) \} $.

For simplicity throughout the chapter, let us denote 
\begin{equation}
\label{eq:fRdefinition}
f(\alphav) \eqdef  \lambda g^*\left( \frac{1}{\lambda n} X \alphav \right) 
\qquad \text{and} \qquad R(\alphav) \eqdef \frac1n \sum_{i=1}^n \ell_i^*(- \alpha_i)\, ,
\end{equation}
such that
$
D(\alphav) \overset{\eqref{eq:dual}+\eqref{eq:fRdefinition}}{=} - f(\alphav) - R(\alphav).
$

It is well known \cite{QUARTZ,takac-minibatch,ShalevShwartz:2014dy,Dunner:2016vga} that the first-order optimality conditions give rise to {a natural mapping that relates pairs of primal and dual variables. {This} mapping employs the linear map given by the data $X$, and maps any dual variable $\alphav \in\R^n$ to a primal candidate vector $\wv \in\R^d$ as follows:}
$$ \wv(\alphav) :=  \nabla g^*( \vv(\alphav) ) 
= \nabla g^*\left( \frac1{\lambda n} X \alphav \right),$$
where we denote
$
\vv(\alphav) := \tfrac1{\lambda n} X \alphav \, .
$

For this mapping, under the assumptions that we make in Section~\ref{sec:cocoa:techassump} below, it holds that if $\alphav^\star$ is an optimal solution of \eqref{eq:dual}, then $\wv(\alphav^\star)$ is an optimal solution of~\eqref{eq:primal}. In particular, {\em strong duality} holds between the primal and dual problems. If we define the duality gap function as
\begin{align}
\label{eq:gap}
\gap(\alphav)
 := P(\wv(\alphav))-\bD(\alphav),
\end{align}
then $\gap(\alphav^\star)=0$, 
which ensures that by solving the dual problem \eqref{eq:dual} we also solve the original primal problem of interest~\eqref{eq:primal}. As we will later see, there are many benefits to leveraging this primal-dual relationship, including the ability to use the duality gap as a certificate of solution quality, and, in the distributed setting, {the ability to effectively distribute computation.}

\vspace{-1em}
\paragraph*{\it Notation.}

We assume that to solve problem \eqref{eq:dual}{,} we have a network of $K$ machines at our disposal.
The data $\{\xv_i,y_i\}_{i=1}^n$ is residing on the $K$ machines in a distributed {fashion}, with every machine {holding a subset of the} whole dataset.
{We distribute the dual variables in the same manner}, with each {dual variable $\alpha_i$} corresponding to an individual data point~$\xv_i$. The given data distribution is described using a partition $\mathcal{P}_1, \dots, \mathcal{P}_K$ that corresponds to the indices of {the} data and dual variables residing on machine~$k$. Formally, $\mathcal{P}_k \subseteq \{1, 2, \dots, n\}$ for each $k${;}  $\mathcal{P}_k \cap \mathcal{P}_l = \emptyset$ whenever $k \neq l${;} and $\bigcup_{k = 1}^K \mathcal{P}_k = \{1, 2, \dots, n\}$.

{Finally, we introduce the following notation dependent on this partitioning.} For any $\hv \in \R^n$, let $\hk$ be the vector in $\R^n$ defined {such} that
$ (\hk)_i = h_i$ if $i \in \mathcal{P}_k$ and $0$ otherwise.
Note that, in particular, 
$
\hv = \sum_{k = 1}^ K   \hk.
$
Analogously, we write~$\Xk$ for the matrix consisting only of the columns $i \in \mathcal{P}_k$, padded with zeros in all other columns.

\subsection{Technical Assumptions}
\label{sec:cocoa:techassump}
Here we first state the properties and assumptions used throughout the chapter.
We assume that for all $i\in \{1,\dots,n\}$, the function 
$\ell_i$ {in~\eqref{eq:primal}} is convex, i.e.,
$\forall \lambda \in [0, 1]$ and $\forall x,y \in \R$ we have 
$\ell_i(\lambda x + (1 - \lambda) y) \leq \lambda \ell_i(x) + (1 - \lambda) \ell_i(y) \, .$
 
We also assume {that the} function 
$g$ {is} $1$-strongly convex, i.e., for all $\wv, \uv \in \R^d$ it holds that
$
g(\wv + \uv) \geq g(\wv) + \< \nabla g(\wv), \uv > + \tfrac{1}{2} \| \uv \|^2,
$
where $\nabla g(\wv)$ is any subgradient\footnote{A subgradient of a convex function $\phi$ in a point $\xv' \in \R^d$ is defined as any $\xi \in \R^d$ satisfying for all $\xv \in \R^d$, $\phi(\xv) \geq \phi(\xv') + \< \xi, \xv - \xv' >$.} of the function $g$.
Here, $\| \cdot \|$ denotes the standard Euclidean norm.

Note that we use subgradients in the definition of strong convexity. This is due to the fact that while we will need the function $g$ to be strongly convex in our analysis, we do not require smoothness. An example used in practice is $g(\wv) = \| \wv \|^2 + \lambda' \| \wv \|_1$ for some $\lambda' \in \R$. Also note that in the problem formulation \eqref{eq:primal} we have a regularization parameter~$\lambda$, which controls the strong convexity parameter of the entire second term. Hence, fixing the strong convexity parameter of $g$ to $1$ is not restrictive in this regard. For instance, this setting has been used previously in~\cite{ShalevShwartz:2014dy,QUARTZ,csiba2015stochastic}.

The following assumptions state properties of the functions $\ell_i$, which we use only in certain results in the chapter. We always explicitly state when we require each assumption.

\begin{assumption}[$(1/\gamma)$-Smoothness]
\label{ass:1gammasmooth}
Functions $\ell_i: \R \rightarrow \R$ are $1/\gamma$-smooth, if $\forall i \in \{ 1, \dots, n \}$ and $\forall x, h \in \R$ it holds that
\begin{equation}
\label{def:Lsmoothness}
\ell_i(x + h) \leq \ell_i(x) + h \nabla \ell_i(x) + \frac{1}{2 \gamma} h^2,
\end{equation}
where $\nabla \ell_i(x)$ denotes the gradient of the function $\ell_i$.
\end{assumption}
\begin{assumption}[$L$-Lipschitz Continuity]
\label{ass:LLip}
Functions $\ell_i: \R \rightarrow \R$ are $L$-Lipschitz continuous, if $\forall i \in \{ 1, \dots, n \}$ and $\forall x, h \in \R$ it holds that
\begin{equation}
\label{def:LLip}
|\ell_i(x + h) - \ell_i(x)| \leq L |h|.
\end{equation}
\end{assumption}

\begin{remark}
As a consequence of having $1/\gamma$-smoothness of $\ell_i$ and $1$-strong convexity of $g$, we have that the functions $\ell_i^*(\cdot)$ are $\gamma$-strongly convex and $g^*(\cdot)$ is $1$-smooth~\cite{rockafellar1970}. These are the properties we will ultimately use as we will be solving the dual problem~\eqref{eq:dual}. Note that $1$-smoothness of $g^* : \R^d \rightarrow \R$ means that for all~$\xv, \hv \in \R^d$,
\begin{equation}
\label{def:Lsmoothness:gstar}
g^*(\xv + \hv) \leq g^*(\xv) + \< \nabla g^*(\xv), \hv > + \frac{1}{2} \| \hv \|^2.
\end{equation}
\end{remark}
{The following} lemma, {which is} a consequence of 1-smoothness of $g^*$ and the definition of $f$,  will be crucial in deriving a meaningful local subproblem for the proposed distributed framework.
\begin{lemma}
\label{lem:quartz}
Let $f$ be defined in \eqref{eq:fRdefinition}. Then for all $\alphav, \hv \in \R^n$ we have
\begin{equation}
\label{eq:quartz}
f(\alphav + \hv) \leq f(\alphav) + \< \nabla f(\alphav), \hv> + \frac{1}{2 \lambda n^2} \hv^T X^T X \hv.
\end{equation}
\end{lemma}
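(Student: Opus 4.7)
The statement is essentially a chain-rule application of the $1$-smoothness of $g^*$ noted in \eqref{def:Lsmoothness:gstar}, which follows from the $1$-strong convexity of $g$ via Fenchel conjugacy. The plan is to write $f$ as a composition of $g^*$ with the affine map $\alphav \mapsto \vv(\alphav) = \tfrac{1}{\lambda n} X\alphav$, and then pull back the smoothness inequality for $g^*$ along this map.

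\textbf{Step 1: Reduce to smoothness of $g^*$.} Set $\vv(\alphav) = \tfrac{1}{\lambda n} X\alphav$, so that $f(\alphav) = \lambda \, g^*(\vv(\alphav))$ by \eqref{eq:fRdefinition}. Then $\vv(\alphav + \hv) = \vv(\alphav) + \tfrac{1}{\lambda n} X\hv$. Applying the $1$-smoothness inequality \eqref{def:Lsmoothness:gstar} to $g^*$ at the point $\xv = \vv(\alphav)$ in the direction $\hv' = \tfrac{1}{\lambda n} X\hv$, we obtain
\[
g^*\!\left(\vv(\alphav) + \tfrac{1}{\lambda n}X\hv\right) \leq g^*(\vv(\alphav)) + \left\langle \nabla g^*(\vv(\alphav)), \tfrac{1}{\lambda n} X\hv \right\rangle + \tfrac{1}{2}\left\|\tfrac{1}{\lambda n} X\hv\right\|^2.
\]

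\textbf{Step 2: Multiply by $\lambda$ and simplify the quadratic term.} Multiplying through by $\lambda > 0$ and observing that $\tfrac{\lambda}{2}\|\tfrac{1}{\lambda n} X\hv\|^2 = \tfrac{1}{2\lambda n^2}\,\hv^T X^T X\hv$, we get
\[
f(\alphav + \hv) \leq f(\alphav) + \tfrac{1}{n}\left\langle X^T \nabla g^*(\vv(\alphav)),\, \hv \right\rangle + \tfrac{1}{2\lambda n^2}\,\hv^T X^T X \hv,
\]
where I have also moved the inner product factor $\lambda \cdot \tfrac{1}{\lambda n}$ into $\tfrac{1}{n}$ and used symmetry of the inner product to pull $X^T$ next to $\hv$.

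\textbf{Step 3: Identify the linear term as $\langle \nabla f(\alphav), \hv\rangle$.} By the chain rule applied to $f(\alphav) = \lambda g^*(\tfrac{1}{\lambda n} X\alphav)$,
\[
\nabla f(\alphav) \;=\; \lambda \cdot \tfrac{1}{\lambda n} X^T \nabla g^*(\vv(\alphav)) \;=\; \tfrac{1}{n} X^T \nabla g^*(\vv(\alphav)).
\]
Substituting this into the bound from Step~2 yields exactly \eqref{eq:quartz}.

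The proof is essentially a one-line computation once the right substitution is in place; the only mild subtlety is the bookkeeping of the scalar factors $\lambda$ and $\tfrac{1}{\lambda n}$, which combine cleanly to produce the factor $\tfrac{1}{2\lambda n^2}$ in the quadratic term. No obstacle is anticipated; in particular, no additional assumption beyond the $1$-strong convexity of $g$ (already standing) is needed.
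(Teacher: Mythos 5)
Your proof is correct and follows essentially the same route as the paper: apply the $1$-smoothness of $g^*$ (inherited from the $1$-strong convexity of $g$) along the affine map $\alphav \mapsto \tfrac{1}{\lambda n}X\alphav$, multiply by $\lambda$, and identify the linear term with $\langle \nabla f(\alphav), \hv\rangle$ via the chain rule. The scalar bookkeeping is handled correctly, so there is nothing to add.
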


\begin{remark} Note that although the above inequality appears as a consequence of the problem structure~\eqref{eq:dual} and of {the} strong convexity of $g$, there are other ways to satisfy it. Hence, our dual analysis holds for all optimization problems of the form $\max_{\alphav} D(\alphav)$, where $D(\alphav) = -f(\alphav) - R(\alphav)$, and where $f$ satisfies inequality \eqref{eq:quartz}. However, for the  duality gap  analysis we naturally {do require 
that} the dual problem arises from the primal problem{,} with $g$ being strongly convex.
\end{remark}

\section{The Framework}
\label{sec:cocoa:algorithm}

In this section we start by giving a general view of the proposed framework, explaining the most important concepts needed to make the framework efficient. In Section~\ref{sec:cocoa:subproblem} we discuss the formulation of the local subproblems, and in Section~\ref{sec:cocoa:implementationnotes} {we provide} specific details and best practices for implementation.

The data distribution plays a crucial role in Algorithm~\ref{alg:cocoa}, where in each outer iteration indexed by $t$, machine $k$ runs an arbitrary local solver on a problem described only by the data that particular machine owns and other fixed constants or linear functions. 

The crucial property is that the optimization algorithm on machine $k$ changes only coordinates of the dual optimization variable $\alphav^t$ corresponding to the partition~$\mathcal{P}_k$ to obtain an approximate solution to the local subproblem. We will formally specify this in Assumption~\ref{ass:localImprovement}. After each such step, updates from all machines are aggregated to form a new iterate $\alphav^{t+1}$. The aggregation parameter $\aggpar$ will typically be between $\aggpar = 1/K$, corresponding to averaging, and $\aggpar = 1$, {adding}. 

\begin{algorithm} 
\caption{Improved \textsc{CoCoA}\texttt{+} Framework} 
\label{alg:cocoa}
\begin{algorithmic}[1]
\State {\bf Input:} starting point $\alphav^0 \in \R^{n}$, aggregation parameter $\aggpar \in (0,1]$, data partition $\{\mathcal{P}_k\}_{k=1}^K$
\For {$t = 0, 1, 2, \dots $}
  \For {$k \in \{1,2,\dots,K\}$ {\bf in parallel over machines}}
     \State Let $\hk^t$ be an approximate solution of the local problem \eqref{eq:subproblem:sigma1}, i.e.\vspace{-2mm}
     \[
     \max_{ \hk \in \R^n } \Gk(\hk; \alphav^t)
     \vspace{-2mm}
     \]
  \EndFor
  \State Set $\alphav^{t+1} := \alphav^t + \aggpar \sum_{k=1}^K \hk^t$
\EndFor  
\end{algorithmic}
\end{algorithm}

Here we list the core conceptual properties of Algorithm~\ref{alg:cocoa}, which are important qualities that allow it to run efficiently.  
\begin{description}

\item [{Locality.}] The local subproblem $\Gk$ \eqref{eq:subproblem:sigma1} is defined purely based on the data points residing on machine $k$, as well as a single shared vector in $\R^d$ (representing the state of the $\alphav^t$ variables of the other machines). Each local solver can then run independently and in parallel, i.e., there is no need for communication while solving the local subproblems.
\item [{Local changes.}] The optimization algorithm used to solve the local subproblem $\Gk$ outputs a vector $\hk^t$ with nonzero elements only in coordinates corresponding to variables $\alphak$ stored locally (i.e., $i\in\mathcal{P}_k$). 
\item [{Efficient maintenance.}] Given the description of the local problem $\Gk(\,\cdot\,; \alphav^t)$ at time $t$, the new local problem $\Gk(\,\cdot\,; \alphav^{t+1})$ at time $t+1$ can be formed on each machine, requiring only communication of a single vector in $\R^d$ from each machine $k$ to the master node, and vice versa, back to each machine $k$.
 
\end{description}

Let us now comment on these properties in more detail. Locality is important for making the method versatile, and is the way we escape the restricted setting described by \eqref{eq:runtime:distributed} that allows us much greater flexibility in designing the overall optimization scheme. 
Local changes result from the fact that {we distribute coordinates of the dual variables $\alphav$ in the same manner as the data}, and thus only make updates to the coordinates stored locally.
As we will see, efficient maintenance of the subproblems can be obtained. For this, a  communication-efficient encoding of the current shared state $\alphav$ is necessary. To this goal, we will in Section~\ref{sec:cocoa:implementationnotes} show that communication of a single $d$-dimensional vector is enough to formulate the subproblems \eqref{eq:subproblem:sigma1} in each round, by carefully exploiting  their partly separable structure.

Note that Algorithm~\ref{alg:cocoa} is the ``analysis friendly'' formulation of our algorithm framework, and it is not yet fully illustrative for implementation purposes. In Section~\ref{sec:cocoa:implementationnotes} we will precisely formulate the actual communication scheme, and illustrate how the above properties can be achieved.

Before that, {we formulate} the precise subproblem $\Gk$ in the following section.

\subsection{The Local Subproblems}
\label{sec:cocoa:subproblem}

We can define a data-local subproblem of the original dual optimization problem~\eqref{eq:dual}, which can be solved on machine $k$ and only requires accessing data which is already available locally, i.e., datapoints with $i\in\mathcal{P}_k$. More formally, each machine $k$ is assigned the following local subproblem, depending only on the previous shared primal vector~$\wv\in\R^d$, and the change in the local dual variables~$\alpha_i$ with $i\in\mathcal{P}_k$:
\begin{equation} 
\max_{\hk\in\R^{n}} \Gks(  \hk; \alphav).
\end{equation}
We are now ready to define the local objective $\Gks(\,\cdot\,; \alphav)$ as follows:
\begin{align}
\label{eq:subproblem:sigma1}
\tag{LO}
\Gks(\hk; \alphav) :=  -\frac{1}K f(\alphav) - \< \nabla f(\alphav), \hk > - \frac{\lambda \sigma'}{2}  \left\| \frac{1}{\lambda n}\Xk \hk \right\|^2 
 - R_k\!\left( \alphak + \hk \right),
\end{align}
where $ R_k(\alphak) \eqdef  \frac1n \sum_{i \in \mathcal{P}_k} \ell_i^*(-\alphav_i) $.
The role of the parameter $\sigma'\geq 1$ is to measure the ``difficulty'' of the data partition, in a sense which we will discuss in detail in {Section \ref{sec:cocoa:choiceofsigma}}.

The interpretation of {the subproblems defined above} is that they will form a quadratic approximation of the smooth part of the true objective $D$, which becomes separable over the machines. The approximation keeps the non-smooth $R$ part intact.
The variable $\hk$ expresses the update proposed by machine $k$. In this spirit, note also that the approximation coincides with $D$ at the reference point $\alphav$, i.e. $\sum_{k=1}^K \Gks( {\bf 0}; \alphav) = D(\alphav)$.
We will discuss the interpretation and properties of these subproblems in more detail below in Section \ref{sec:cocoa:choiceofsigma}.

\subsection{Practical Communication-Efficient Implementation}
\label{sec:cocoa:implementationnotes}

We {now} discuss how Algorithm~\ref{alg:cocoa} can efficiently be implemented in a distributed environment. Most importantly, {we} clarify how the ``local'' subproblems {can be} formulated and solved {while} using only local information from the corresponding {machines}, and {we} make precise what information needs to be communicated in each round.

Recall that the local subproblem objective $\Gks(\,\cdot\,; \alphav)$ was defined in \eqref{eq:subproblem:sigma1}.
We will now equivalently rewrite this optimization problem, {illustrating how it can be} expressed {using only} \emph{local} information. To do so, we use our simplifying notation~$\vv = \vv(\alphav) := \tfrac1{\lambda n} X \alphav$ for {a} given~$\alphav$. As we see in the reformulation, it is precisely this vector $\vv\in\R^d$ which contains all the necessary shared information between the machines. Given the vector~$\vv$, the subproblem \eqref{eq:subproblem:sigma1} {can be equivalently written} as
\begin{align}
\label{eq:subproblemPr}
\tag{LO'}
\Gks(\hk; \vv, \alphak) &:=  
 -\frac{\lambda}K g^*(\vv) 
 - \left\langle \frac{1}{n} \Xk^T \nabla g^*(\vv), \hk \right\rangle - \frac{\lambda}{2} \sigma' \left\| \frac{1}{\lambda n}\Xk \hk \right\|^2 
 \\&\qquad  - R_k\!\left( \alphak + \hk \right). \notag
\end{align} 
Here for the reformulation of the gradient term, we have simply used the chain rule on the objective $f$ (recall the definition $f(\alphav) \eqdef  \lambda g^*( \vv )$), giving
$$
\vsub{\nabla f(\alphav)}{k} = \frac{1}{n} \Xk^T \nabla g^* ( \vv ) .
$$

\paragraph*{\it Practical Distributed Framework.}
In summary, we have seen that each machine can formulate the local subproblem given purely local information (the local data $\Xk$ as well as the local dual variables $\alphak$). No information about the {data or variables $\alphav$ stored on the} other machines is necessary.

The only requirement for the method to work is that between the rounds, the changes in the $\alphak$ variables on each machine and the resulting global change in~$\vv$ are kept consistent, in the sense that $\vv^t = \vv(\alphav^t) := \tfrac1{\lambda n} X \alphav^t$ must always hold. Note that for the evaluation of $\nabla g^*(\vv)$, the vector $\vv$ is all that is needed. In practice, $g$ as well as its conjugate $g^*$ are simple {vector-valued} regularization functions, the most prominent example being $g(\vv) = g^*(\vv) = \frac12 \| \vv \|^2$.

In the following more detailed formulation of the \cocoap framework shown in Algorithm~\ref{alg:cocoaPractical} ({an} equivalent reformulation of Algorithm~\ref{alg:cocoa}), the crucial communication pattern of the framework finally becomes more clear: Per round, \emph{only a single vector} (the update on $\vv\in\R^d$) needs to be sent over the communication network. 
The reduce-all operation in line 10 means that each machine sends their vector $\Delta \vv_k^t\in\R^d$ to the network, which performs the addition operation of the $K$ vectors to the old~$\vv^t$. The resulting vector $\vv^{t+1}$ is then communicated back to all machines, so that all  have the same copy of $\vv^{t+1}$ before the beginning of the next round.

The framework as shown below in Algorithm~\ref{alg:cocoaPractical} clearly maintains the consistency of $\alphav^t$ and $\vv^t = \vv^t(\alphav^t)$ after each round, no matter which local solver is used to approximately solve \eqref{eq:subproblemPr}. A diagram illustrating the communication and computation involved in the first two full iterations of Algorithm~\ref{alg:cocoaPractical} is given in Figure~\ref{fig:diagram}.

\begin{algorithm} 
\caption{Improved \textsc{CoCoA}\texttt{+} Framework, Practical Implementation}
\label{alg:cocoaPractical}
\begin{algorithmic}[1]
\State {\bf Input:} starting point $\alphav^0 \in \R^n$, aggregation parameter $\aggpar \in (0,1]$, data partition $\{\mathcal{P}_k\}_{k=1}^K$
\State $\vv^0 := \frac1{\lambda n} X \alphav^0  \in \R^d$
\For {$t = 0, 1, 2, \dots $}
  \For {$k \in \{1,2,\dots,K\}$ {\bf in parallel over machines}}
	\State Precompute $ \Xk^T \nabla g^*(\vv^t)$
     \State Let $\hk^t$ be an approximate solution of the local problem \eqref{eq:subproblemPr}, i.e.\vspace{-2mm}
\[
\max_{ \hk \in \R^n } \Gks(\hk; \vv^t, \alphak^t)  
\vspace{-6mm}
\]  
	\Comment{{\it computation}}
	\State Update local variables $\alphak^{t+1} := \alphak^t + \aggpar \hk^t$
  	
	\State Let $\Delta \vv_k^t := \frac{1}{\lambda n} \Xk \hk^t$
  \EndFor 
  \State \textbf{reduce all} to compute
	$
	\vv^{t+1} := \vv^t + \aggpar \sum_{k=1}^K \Delta \vv_k^t
	$
 \Comment{{\it communication}}
\EndFor  
\end{algorithmic}
\end{algorithm}

\begin{figure}[t]
\centering
\includegraphics[width=\linewidth,trim={10 60 20 100}]{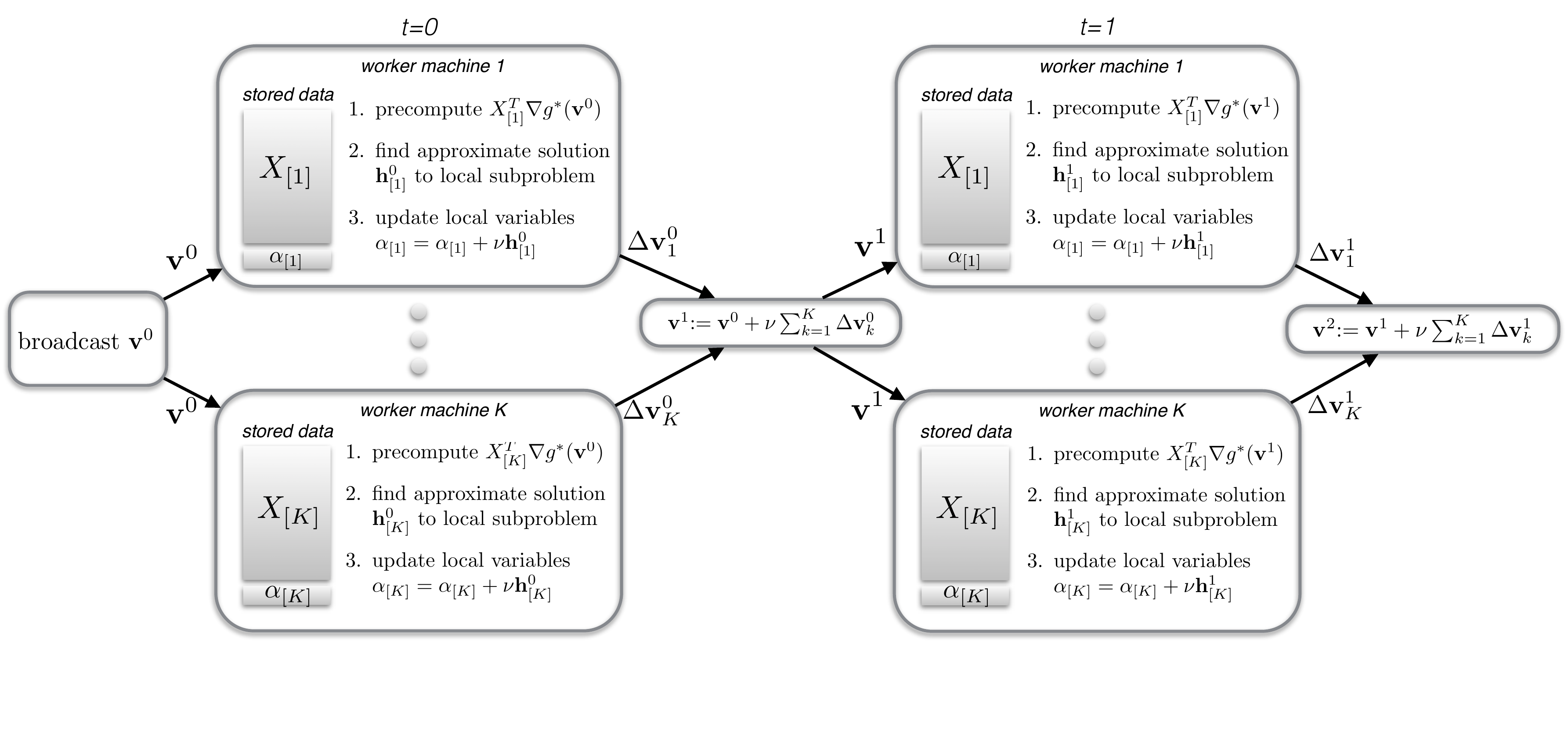}
\caption{The first two iterations of the improved framework (practical implementation).}
\label{fig:diagram}
\end{figure}

\subsection{Compatibility of the Subproblems for Aggregating Updates}
\label{sec:cocoa:choiceofsigma}
In this subsection, we shed more light on the local subproblems on each machine, as defined in \eqref{eq:subproblem:sigma1} above, and their interpretation.
More formally, {we show} how the aggregation {parameter} $\nu$ (controlling the level of adding versus averaging the resulting updates from each machine) and $\sigma'$ (the subproblem parameter) interplay together, {so that in each round they} achieve a valid approximation to the global objective function~$D$.

The role of the subproblem parameter~$\sigma'$ is to measure the difficulty of the given data partition. For the convergence results discussed below to hold, $\sigma'$ must be chosen not smaller than
\begin{equation}
\label{eq:sigmaPrimeSafeDefinition}
\sigma'
\geq
\sigma'_{min}
 \eqdef
 \nu \cdot
 \max_{\hv\in \R^n}
 \big\{
 \hv^T X^T X \hv \ \big|\ \hv^T \nBG \hv \le 1\big\} \, . \vspace{-1mm}
\end{equation}

Here, $\nBG$ is the block diagonal submatrix of the data covariance matrix $X^T X$, corresponding to the partition $\{\mathcal{P}_k\}_{k=1}^K$, i.e.,
\begin{equation}
\label{eq:nBGDefinition}
\nBG_{ij} \eqdef
\begin{cases}
	\xv_i^T \xv_j = (X^TX)_{ij}, & \mbox{if}\ \exists k \ \mbox{such that} \ i,j \in \mathcal{P}_k, \\ 
	0,& \mbox{otherwise}. 
\end{cases}
\end{equation}

In this notation, it is easy to see that the crucial quantity defining $\sigma'_{min}$ above is written as $\hv^T \nBG \hv = \sum_{k=1}^K \|X_{[k]} \hk\|^2$.

The following lemma shows that if the aggregation and subproblem parameters
$\nu$ and~$\sigma'$ satisfy \eqref{eq:sigmaPrimeSafeDefinition}, then the sum of the subproblems $\sum_k \Gks$ will closely approximate the global objective function $D$. More precisely, this sum is a block-separable lower bound on $D$.

\begin{lemma}
\label{lem:step:sigma1}
\label{lowerboundOnD}
\label{lem:RelationOfDTOSubproblems}
Let $\sigma' \geq 1$ and
$\nu \in [0, 1]$ 
satisfy \eqref{eq:sigmaPrimeSafeDefinition} (that is $\sigma' \geq \sigma'_{min}$).
Then
$\forall \alphav, \hv  \in \R^n$, it holds that
\begin{equation}
\label{eq:asfdjalkfjlsaflasdfa}
D\left(\alphav+\nu \sum_{k=1}^K \hk \right)
\geq (1-\nu) D(\alphav) +
\nu \sum_{k=1}^K \Gks(\hk; \alphav),
\end{equation}
\end{lemma}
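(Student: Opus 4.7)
Proof plan for Lemma~\ref{lem:step:sigma1}.

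The plan is to lower bound $D(\alpha + \nu \sum_k h_{[k]})$ by handling the smooth part $f$ and the non-smooth part $R$ separately, and then check that the resulting expression matches $(1-\nu)D(\alpha) + \nu \sum_k \mathcal{G}^{\sigma'}_k(h_{[k]};\alpha)$ term by term. Throughout, set $h := \sum_{k=1}^K h_{[k]}$ so that $Xh = \sum_k X_{[k]} h_{[k]}$, and recall $D = -f - R$.

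First, for the non-smooth part: because $R(\alpha) = \sum_k R_k(\alpha_{[k]})$ is separable and each $\ell_i^*$ is convex, writing $-\alpha_i - \nu h_i = (1-\nu)(-\alpha_i) + \nu(-\alpha_i - h_i)$ and applying convexity in each coordinate yields
\[
R(\alpha + \nu h) \;\le\; (1-\nu) R(\alpha) \;+\; \nu \sum_{k=1}^K R_k\bigl(\alpha_{[k]} + h_{[k]}\bigr).
\]

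Second, for the smooth part: apply Lemma~\ref{lem:quartz} at the point $\alpha$ with increment $\nu h$, to get
\[
f(\alpha + \nu h) \;\le\; f(\alpha) + \nu \langle \nabla f(\alpha), h\rangle + \frac{\nu^2}{2\lambda n^2}\, h^T X^T X h.
\]
The quadratic term $\nu^2 h^T X^T X h$ is the coupling obstacle, since it is not block-separable across $\{\mathcal{P}_k\}$. This is exactly where \eqref{eq:sigmaPrimeSafeDefinition} is used: by homogeneity, $\sigma' \ge \sigma'_{\min}$ is equivalent to the pointwise quadratic inequality
\[
\nu \, h^T X^T X h \;\le\; \sigma'\, h^T \nBG h \;=\; \sigma' \sum_{k=1}^K \bigl\|X_{[k]} h_{[k]}\bigr\|^2,
\]
where the last equality uses the definition \eqref{eq:nBGDefinition} of $\nBG$, so $h^T \nBG h = \sum_k h_{[k]}^T X^T X h_{[k]} = \sum_k \|X_{[k]} h_{[k]}\|^2$. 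Multiplying by $\nu/(2\lambda n^2)$ gives a block-separable upper bound on $f(\alpha + \nu h)$, namely
\[
f(\alpha + \nu h) \;\le\; f(\alpha) + \nu \sum_{k=1}^K \langle \nabla f(\alpha), h_{[k]}\rangle + \frac{\nu \sigma'}{2\lambda n^2} \sum_{k=1}^K \bigl\|X_{[k]} h_{[k]}\bigr\|^2.
\]

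Finally, combine the two bounds, flip signs, and regroup. Using $-f(\alpha) = -(1-\nu)f(\alpha) - \nu \sum_k \tfrac{1}{K} f(\alpha)$ (so that a copy of $-\tfrac{1}{K} f(\alpha)$ appears inside each $\mathcal{G}^{\sigma'}_k$), and matching the $(1-\nu) R(\alpha)$ term with $(1-\nu)D(\alpha)$ and the $\nu R_k(\alpha_{[k]}+h_{[k]})$ pieces with the corresponding terms of $\mathcal{G}^{\sigma'}_k$, the right-hand side is precisely $(1-\nu) D(\alpha) + \nu \sum_{k=1}^K \mathcal{G}^{\sigma'}_k(h_{[k]}; \alpha)$, as defined in \eqref{eq:subproblem:sigma1}. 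The main (and essentially only) non-trivial step is the use of \eqref{eq:sigmaPrimeSafeDefinition} to convert the global quadratic $h^T X^T X h$ into its block-diagonal counterpart $h^T \nBG h$; the rest is separable convexity of $R$ and bookkeeping.
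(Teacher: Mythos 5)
Your proposal is correct and follows essentially the same route as the paper's proof: convexity of the $\ell_i^*$ terms (Jensen) for the separable part, Lemma~\ref{lem:quartz} for the smooth part, and the condition $\sigma' \geq \sigma'_{min}$ from \eqref{eq:sigmaPrimeSafeDefinition} together with \eqref{eq:nBGDefinition} to replace the coupled quadratic $\nu\, \hv^T X^T X \hv$ by the block-separable $\sigma' \sum_k \|\Xk \hk\|^2$. The only difference is organizational — you bound $f$ and $R$ separately and then recombine, while the paper runs a single chain of inequalities — so the two arguments are the same in substance.
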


The following lemma gives a simple choice for the subproblem parameter $\sigma'$, which is trivial to calculate for all values of the aggregation parameter $\aggpar\in\R$, and safe in the sense of the desired condition \eqref{eq:sigmaPrimeSafeDefinition} above. 
Later we will show experimentally (Section \ref{sec:cocoa:experiments}) that the choice of this safe upper bound for $\sigma'$ only has a minimal effect on the overall performance of the algorithm. 

\begin{lemma}\label{lem:sigmaPrimeNotBad}
For any 
aggregation parameter $\aggpar\in[0,1]$, the choice of the subproblem parameter $\sigma' := \aggpar K$ is valid for \eqref{eq:sigmaPrimeSafeDefinition}, i.e.,
$
\aggpar K
\geq
\sigma'_{min}. 
$
\end{lemma}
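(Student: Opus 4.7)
The plan is to unwind the definition of $\sigma'_{\min}$ in \eqref{eq:sigmaPrimeSafeDefinition} and reduce the claim to a simple convexity (Cauchy--Schwarz) inequality applied to the block decomposition $\hv = \sum_{k=1}^K \hk$.

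First, I would note that by definition of $\nBG$ in \eqref{eq:nBGDefinition}, for any $\hv\in\R^n$ we have the key identity
\[
\hv^T \nBG \hv \;=\; \sum_{k=1}^K \hk^T (X^TX) \hk \;=\; \sum_{k=1}^K \|\Xk \hk\|^2,
\]
whereas the unrestricted quadratic form is
\[
\hv^T X^T X \hv \;=\; \|X\hv\|^2 \;=\; \Big\| \sum_{k=1}^K \Xk \hk \Big\|^2,
\]
using that $X\hv = \sum_{k} \Xk\hk$ by the construction of $\hk$ and $\Xk$.

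Next, the main (and only nontrivial) step is to bound the ratio $\|X\hv\|^2 / (\hv^T \nBG \hv)$ uniformly by $K$. For this, I would apply the standard inequality $\|u_1 + \dots + u_K\|^2 \le K \sum_{k=1}^K \|u_k\|^2$ (which is Cauchy--Schwarz, or equivalently the convexity of $\|\cdot\|^2$) with $u_k = \Xk\hk$, giving
\[
\Big\| \sum_{k=1}^K \Xk \hk \Big\|^2 \;\le\; K \sum_{k=1}^K \|\Xk \hk\|^2.
\]
Hence for every $\hv\in\R^n$ satisfying $\hv^T \nBG \hv \le 1$ we obtain $\hv^T X^T X \hv \le K$, so that
\[
\max_{\hv\in\R^n}\bigl\{\, \hv^T X^T X \hv \;\big|\; \hv^T \nBG \hv \le 1 \,\bigr\} \;\le\; K.
\]

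Finally, multiplying by $\aggpar$ and invoking the definition \eqref{eq:sigmaPrimeSafeDefinition} of $\sigma'_{\min}$ yields $\sigma'_{\min} \le \aggpar K$, which is precisely the claim. The main ``obstacle'' is really just recognizing that the relevant inequality is the $K$-term Cauchy--Schwarz bound---no deep estimates or structural assumptions on the partition are needed, which is exactly why this choice of $\sigma' = \aggpar K$ is universally safe (if generally conservative) across all partitions $\{\mathcal{P}_k\}$ and all data matrices $X$.
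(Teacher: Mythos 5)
Your proposal is correct and is essentially the same argument as the paper's: the paper expands $\hv^T X^TX\hv$ into diagonal and cross terms and bounds each cross term via $2\hk^T X^TX \hv_{[l]} \le \hk^T X^TX\hk + \hv_{[l]}^T X^TX\hv_{[l]}$ (using positive semidefiniteness of $X^TX$), which is precisely the standard derivation of the inequality $\|\sum_k \Xk\hk\|^2 \le K\sum_k\|\Xk\hk\|^2$ that you invoke directly. No gaps; your version is just a more compact packaging of the same Cauchy--Schwarz step.
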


\section{Main Results}
\label{sec:cocoa:result}

In this section we state the main theoretical results of this chapter. Before doing so, we elaborate on one of the most important aspects of the algorithmic framework: the \emph{quality of approximate local solutions}. 

\subsection{Quality of Local Solutions}

The notion of approximation quality provided by the local solvers is measured according to the following:
\begin{assumption}[Quality of local solution]
\label{ass:localImprovement}
Let $\Theta \in [0, 1)$ and $\alphav \in \R^n$ be fixed, and let $\hk^\star$ be the optimal solution of a local subproblem $\Gk(\,\cdot\,; \alphav)$.
We assume the local optimization procedure run on every node $k \in [K]$ in each iteration $t$ produces a (possibly random) output $\hk$ satisfying
\begin{equation}
\label{eq:localQualityOfImprovement}
\E{ \Gk(\hk^\star; \alphav) - \Gk(\hk; \alphav) } \leq \Theta \left[ \Gk(\hk^\star; \alphav) - \Gk({\bf 0}; \alphav) \right].
\end{equation}
\end{assumption}

The assumption specifies the (relative) accuracy $\Theta$ obtained on solving the local subproblem $\Gk$. Considering the two extreme examples, setting $\Theta = 0$ would require to find the exact maximum, while $\Theta = 1$ states that no improvement was achieved at all by the local solver. Intuitively, we would prefer $\Theta$ to be small, but spending many computational resources to drive $\Theta$ to $0$ can be excessive in practice, since~$\Gk$ is actually not the problem we are interested in solving \eqref{eq:dual}, but is the problem to be solved per communication round.
The best choice in practice will therefore be to choose $\Theta$ such that the local solver runs for a time comparable to the time it takes for a single communication round. This freedom of choice of $\Theta \in [0,1]$ is a crucial property of our proposed framework, allowing it to adapt to the full range of communication speeds on real world systems, ranging from supercomputers on one extreme to very slow communication rounds like MapReduce systems on the other extreme.

In Section \ref{sec:cocoa:experiments} we study {the} impact of different values of this parameter {on} the overall performance {of} solving \eqref{eq:dual}.

\subsection{Complexity Bounds}

Now we are ready to state the main results.
Theorem \ref{thm:mainResult} covers the case
when $\forall i${,} the loss function 
$\ell_i$ is $1/\gamma$ smooth{,}
and Theorem \ref{thm:mainResult:gcc}
covers the case
when 
$\ell_i$ is   $L$-Lipschitz continuous. For simplicity in the rates, we define the following two quantities:
$$
\forall k: \sigma_k \eqdef
 \max_{\vsubset{\alphav}{k} \in \R^n}
 \frac{\|\Xk \vsubset{\alphav}{k}\|^2}{
 \|\vsubset{\alphav}{k}\|^2}
 \qquad\mbox{and}\qquad
 \sigma \eqdef \sum _{k=1}^K 
\sigma_k  |\mathcal{P}_k|.
$$

\begin{theorem}[Smooth loss functions]
\label{thm:convergenceSmoothCase}
\label{thm:mainResult}
Assume the loss functions functions 
$\ell_i$ are $(1/\gamma)$-smooth $\forall i\in[n]$.
We define $\sigma_{\max} = 
\max_{k\in[K]} \sigma_k$. Then after $T$ iterations of Algorithm \ref{alg:cocoaPractical}, with  
$$
 T
    \geq 
\frac{1}
   {\aggpar
(1-\Theta)}
\frac
{\lambda\gamma n+
\sigma_{\max} \sigma'}
{ \lambda\gamma n }
    \log \frac1{\epsilon_\bD} , \vspace{-1mm}
$$
it holds that\vspace{-3mm}
$$\E{\bD(\alphav^\star)
  - \bD(\vc{\alphav}{T})}
   \leq \epsilon_\bD.$$
Furthermore, after $T$ iterations with\vspace{-1mm}
\begin{equation}
\label{afdsafdafsafdsafda}
 T 
    \geq 
\frac{1}
   {\aggpar
(1-\Theta)}
\frac
{\lambda\gamma n+
\sigma_{\max} \sigma'}
{ \lambda\gamma n }
    \log 
\left(
\frac{1}
   {\aggpar
(1-\Theta)}
\frac
{\lambda\gamma n+
\sigma_{\max} \sigma'}
{ \lambda\gamma n }
    \frac1{\epsilon_\gap}
    \right)  ,
\end{equation}
we have the expected duality gap
$$
\E{P( \wv(\vc{\alphav}{T})) - \bD(\vc{\alphav}{T})
} \leq \epsilon_\gap.
$$
\end{theorem}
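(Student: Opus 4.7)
The plan is to establish a one-step linear-contraction recursion for the expected dual suboptimality and then iterate it; the duality gap bound will follow from a single additional step after the suboptimality has been driven sufficiently small.

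\textbf{Step 1 (one-step progress via Lemma~\ref{lowerboundOnD}).} I will start from the update rule $\alpha^{t+1} = \alpha^t + \nu \sum_k h_{[k]}^t$ and invoke Lemma~\ref{lowerboundOnD} (valid because of the assumption $\sigma' \geq \sigma'_{min}$ and the choice $\sigma' = \nu K$ from Lemma~\ref{lem:sigmaPrimeNotBad}), together with $\sum_k \Gks(\mathbf{0};\alpha^t) = D(\alpha^t)$, to get
\[
D(\alpha^{t+1}) - D(\alpha^t) \;\geq\; \nu \sum_{k=1}^K \bigl[\Gks(h_{[k]}^t;\alpha^t) - \Gks(\mathbf{0};\alpha^t)\bigr].
\]
Taking expectations and applying Assumption~\ref{ass:localImprovement} term by term yields
\[
\E{D(\alpha^{t+1}) - D(\alpha^t)} \;\geq\; \nu(1-\Theta)\sum_{k=1}^K \E{\Gks(h_{[k]}^\star;\alpha^t) - \Gks(\mathbf{0};\alpha^t)}.
\]

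\textbf{Step 2 (lower bounding the optimal subproblem value).} To bound the right-hand side from below I will plug a specific suboptimal candidate $u_{[k]} := s(\alpha^\star - \alpha^t)_{[k]}$ for $s \in [0,1]$ into $\Gks(\,\cdot\,;\alpha^t)$, using optimality of $h_{[k]}^\star$. The smoothness assumption on $\ell_i$ implies $\gamma$-strong convexity of each $\ell_i^\star$, hence $R_k$ is $\gamma/n$-strongly convex. Applying this with the convex combination argument and expanding the quadratic penalty using the definition of $\sigma_k \leq \sigma_{\max}$ gives
\begin{align*}
\sum_k\bigl[\Gks(u_{[k]};\alpha^t)-\Gks(\mathbf{0};\alpha^t)\bigr]
 &\geq -s\langle\nabla f(\alpha^t),\alpha^\star-\alpha^t\rangle + s\bigl(R(\alpha^t)-R(\alpha^\star)\bigr) \\
 &\quad + \Bigl[\tfrac{s(1-s)\gamma}{2n} - \tfrac{s^2 \sigma'\sigma_{\max}}{2\lambda n^2}\Bigr]\|\alpha^\star-\alpha^t\|^2.
\end{align*}
Convexity of $f$ bounds the first two terms below by $s[D(\alpha^\star) - D(\alpha^t)]$. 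Choosing
\[
s \;=\; \frac{\lambda\gamma n}{\lambda\gamma n + \sigma'\sigma_{\max}} \in [0,1]
\]
makes the bracketed coefficient of $\|\alpha^\star-\alpha^t\|^2$ non-negative, which I then drop.

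\textbf{Step 3 (recursion and iteration count).} Combining Steps 1 and 2 gives the contraction
\[
\E{D(\alpha^\star)-D(\alpha^{t+1})} \;\leq\; \bigl(1 - \nu(1-\Theta)s\bigr)\E{D(\alpha^\star)-D(\alpha^t)}.
\]
Iterating $T$ times and using $1-x \leq e^{-x}$, the first claim of the theorem follows by the standard computation
\[
T \;\geq\; \frac{1}{\nu(1-\Theta)s}\log\frac{1}{\epsilon_D} \;=\; \frac{1}{\nu(1-\Theta)}\cdot\frac{\lambda\gamma n + \sigma_{\max}\sigma'}{\lambda\gamma n}\log\frac{1}{\epsilon_D}.
\]

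\textbf{Step 4 (duality gap).} To upgrade to a duality gap bound, I will revisit the one-step progress inequality but rearrange it so that the dropped term $s[D(\alpha^\star) - D(\alpha^t)]$ is replaced by the duality gap $\gap(\alpha^t) = P(w(\alpha^t)) - D(\alpha^t)$. The standard route is to exploit the primal-dual relation $w(\alpha^t) = \nabla g^\star(v(\alpha^t))$, which lets me rewrite $-\langle\nabla f(\alpha^t),\alpha^\star-\alpha^t\rangle + R(\alpha^t) - R(\alpha^\star)$ in terms of $\gap(\alpha^t)$ (using Fenchel--Young together with subgradients of $\ell_i$ at $x_i^T w(\alpha^t)$). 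This yields
\[
\E{D(\alpha^{t+1}) - D(\alpha^t)} \;\geq\; \nu(1-\Theta)s\cdot\E{\gap(\alpha^t)}.
\]
Then, after running the algorithm for $T_1$ iterations to reach $\E{D(\alpha^\star)-D(\alpha^{T_1})} \leq \nu(1-\Theta)s\,\epsilon_{\gap}$ (which requires exactly the extra logarithmic factor appearing in \eqref{afdsafdafsafdsafda}), performing one more iteration and averaging over iterates $T_1,\dots,T$ produces $\E{\gap(\alpha^T)}\leq \epsilon_\gap$.

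\textbf{Main obstacle.} The delicate part is Step 2: one must simultaneously (i) balance the strongly-convex bonus from $R_k$ against the quadratic penalty $\tfrac{\lambda\sigma'}{2}\|\tfrac{1}{\lambda n}X_{[k]}h_{[k]}\|^2$ so that the $\|\alpha^\star-\alpha^t\|^2$ term has a non-negative coefficient, and (ii) verify that the choice $\sigma' = \nu K$ suffices for Lemma~\ref{lowerboundOnD}. The Step 4 conversion from suboptimality to duality gap also requires carefully identifying the primal-dual Fenchel terms; getting this right without extra constants is what necessitates the $\log(\cdot/\epsilon_\gap)$ factor (rather than just $\log(1/\epsilon_\gap)$) in \eqref{afdsafdafsafdsafda}.
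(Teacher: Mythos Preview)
Your Steps 1--3 are correct and give the dual-suboptimality contraction, but they differ from the paper's route in one important respect: you use the candidate $u_{[k]} = s(\alpha^\star - \alpha^t)_{[k]}$, whereas the paper (via its key Lemma~\ref{lem:basicLemma}) uses $u_{[k]} = s(u^t - \alpha^t)_{[k]}$ with $-u_i^t \in \partial \ell_i(w(\alpha^t)^T x_i)$. With this Fenchel-tight candidate, the conjugate maximal property $\ell_i^*(-u_i^t) = -u_i^t w(\alpha^t)^T x_i - \ell_i(w(\alpha^t)^T x_i)$ turns the linear-plus-$R$ terms directly into $s\cdot\gap(\alpha^t)$, not merely $s\cdot[D(\alpha^\star)-D(\alpha^t)]$. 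The same choice $s = \lambda\gamma n /(\lambda\gamma n + \sigma_{\max}\sigma')$ still kills the $\|u^t-\alpha^t\|^2$ term (this is the paper's bound $R^t \le 0$), so you get
\[
\E{D(\alpha^{t+1})-D(\alpha^t)} \;\ge\; \nu(1-\Theta)\,s\,\gap(\alpha^t) \;\ge\; \nu(1-\Theta)\,s\,[D(\alpha^\star)-D(\alpha^t)]
\]
in one shot. The suboptimality contraction and the gap bound both follow immediately; the latter via $\E{\gap(\alpha^T)} \le \tfrac{1}{\nu(1-\Theta)s}\,\E{D(\alpha^\star)-D(\alpha^T)}$, which is exactly where the extra factor inside the logarithm in \eqref{afdsafdafsafdsafda} comes from.

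Your Step~4, as written, has a genuine gap. You cannot ``rewrite $-\langle\nabla f(\alpha^t),\alpha^\star-\alpha^t\rangle + R(\alpha^t)-R(\alpha^\star)$ in terms of $\gap(\alpha^t)$'' because those quantities involve $\alpha^\star$, and there is no Fenchel equality tying $\ell_i^*(-\alpha_i^\star)$ to $\ell_i(w(\alpha^t)^T x_i)$. You do hint at the fix (``subgradients of $\ell_i$ at $x_i^T w(\alpha^t)$''), but that means redoing Step~2 with the candidate $u^t$ rather than $\alpha^\star$---not manipulating the $\alpha^\star$-based inequality you already have. Also, no averaging over iterates is needed here; the last-iterate gap bound follows directly from the displayed chain above (averaging is used only in the Lipschitz/non-smooth Theorem~\ref{thm:convergenceNonsmooth}).
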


\begin{theorem}[Lipschitz continuous loss functions]
\label{thm:convergenceNonsmooth}
 \label{thm:mainResult:gcc}
Consider Algorithm \ref{alg:cocoaPractical} with Assumption \ref{ass:localImprovement}. 
Let $\ell_i(\cdot)$ be $L$-Lipschitz continuous,
and $\epsilon_\gap$ $>0$ be the desired duality gap (and hence an upper-bound on primal sub-optimality).
Then after $T$ iterations, where
\begin{align}\label{eq:dualityRequirements}
T
&\geq
T_0 + 
\max\left\{\left\lceil \frac1{\aggpar (1-\Theta)}\right\rceil,\frac
{4L^2  \sigma   \sigma'}
{\lambda n^2 \epsilon_\gap
\aggpar (1-\Theta)}\right\},  
\\
T_0
&\geq t_0+
 \max\left\{0,
\frac{2}{ \aggpar (1-\Theta) }
\left(
\frac
{8L^2  \sigma   \sigma'}
{\lambda n^2 \epsilon_\gap}
-1
\right)
\right\},\notag
\\
t_0 &\geq 
  \max\left\{0, \left\lceil \frac1{\aggpar (1-\Theta)}
\log\left(
\frac{
 2\lambda n^2 (\bD(\alphav^\star )-\bD(\vc{\alphav}{0}))
  }{4 L^2 \sigma \sigma'}
  \right)
 \right\rceil \right\},\notag
\end{align}
we have that the expected duality gap satisfies
\[
\E{ P( \wv(\overline\alphav)) - \bD(\overline \alphav) } \leq \epsilon_\gap,
\]
at the averaged iterate
\begin{equation}\label{eq:averageOfAlphaDefinition}
\overline \alphav: = \frac1{T-T_0}\sum_{t=T_0+1}^{T-1} \vc{\alphav}{t}. 
\end{equation}

\end{theorem}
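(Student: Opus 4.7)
The plan is to adapt the two-phase analysis used for SDCA with Lipschitz losses (as in Shalev-Shwartz and Zhang) to our distributed setting, with the key novelty being how the approximation parameter $\Theta$, the aggregation parameter $\nu$, and the subproblem parameter $\sigma'$ combine. The starting point is Lemma~\ref{lem:step:sigma1}, which, combined with Assumption~\ref{ass:localImprovement}, yields a per-round recursion. Specifically, for any candidate direction $u_{[k]} = s(\alpha^\star_{[k]} - \alpha^t_{[k]})$ with $s\in[0,1]$, the lemma gives
\[
D(\alpha^t + \nu\textstyle\sum_k h^t_{[k]}) \geq (1-\nu)D(\alpha^t) + \nu\sum_k G^{\sigma'}_k(h^t_{[k]};\alpha^t).
\]
Using Assumption~\ref{ass:localImprovement} to relate $\sum_k G^{\sigma'}_k(h^t_{[k]};\alpha^t)$ to $\sum_k G^{\sigma'}_k(u_{[k]};\alpha^t)$, and expanding the quadratic form in the subproblem at the specific choice $u_{[k]}$, I obtain in expectation an inequality of the form
\[
\E{D^\star-D(\alpha^{t+1})} \leq \bigl(1-\nu(1-\Theta)s\bigr)\E{D^\star-D(\alpha^t)} + \nu(1-\Theta)\tfrac{s^2\sigma'\sigma}{2\lambda n^2}\cdot(2L)^2,
\]
where the $(2L)^2$ factor comes from bounding $\|\alpha^\star_{[k]}-\alpha^t_{[k]}\|$ using the fact that $L$-Lipschitz continuity of $\ell_i$ forces $\mathrm{dom}(\ell_i^\star)\subseteq[-L,L]$, and the $\sigma$-bound comes from $\sum_k\|X_{[k]}u_{[k]}\|^2\leq \sigma s^2(2L)^2$.

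The next step is to exploit this recursion in two phases. In the first phase, while the dual suboptimality $\epsilon_D^t \eqdef \E{D^\star - D(\alpha^t)}$ is still large relative to the noise term, I choose $s=1$, producing geometric decrease at rate $1-\nu(1-\Theta)$; this yields the $t_0$ bound via $\epsilon_D^{t_0}\leq 4L^2\sigma\sigma'/(\lambda n^2)$. In the intermediate phase, for $t\in[t_0,T_0]$, I pick $s=\min\{1,\lambda n^2\epsilon_D^t/(2L^2\sigma\sigma')\}$, which is standard from SDCA-type analysis; this forces $\epsilon_D^{T_0}\leq 2L^2\sigma\sigma'/(\lambda n^2 \cdot (T_0-t_0)/\text{const})$, giving the explicit formula for $T_0$ in the theorem statement once the telescoping is carried out carefully.

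For the second phase $t\in[T_0,T-1]$, I apply the same recursion but now translate the dual suboptimality into a duality gap bound. The crucial identity is that from the same derivation, one gets an inequality of the form
\[
\nu(1-\Theta)s\cdot \E{\mathrm{Gap}(\alpha^t)} \leq \E{D(\alpha^{t+1})-D(\alpha^t)} + \nu(1-\Theta)\tfrac{s^2\sigma'\sigma}{2\lambda n^2}(2L)^2 + \nu(1-\Theta)s\,\epsilon_D^t,
\]
obtained by noting that $s(P(w(\alpha^t))-D(\alpha^t))$ is a lower bound on $s\sum_k G^{\sigma'}_k(u_{[k]};\alpha^t)/\nu$ modulo the quadratic correction. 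Summing this from $t=T_0$ to $T-1$, using convexity of the gap via Jensen on $\bar\alpha$, telescoping the $D$ differences, and applying the bound $\epsilon_D^t\leq \epsilon_D^{T_0}$ on the remaining sub-optimality terms produces the desired $\E{\mathrm{Gap}(\bar\alpha)}\leq\epsilon_{\mathrm{gap}}$ once $T-T_0$ is chosen as in the statement.

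The main obstacle will be bookkeeping: carefully tracking the constants through the phases so that the $T_0$ and $t_0$ formulas match the stated bounds, and in particular choosing $s$ correctly in the averaging phase so that the quadratic error term and the leftover $\epsilon_D^t$ term both fit into a budget of $\epsilon_{\mathrm{gap}}/2$. A secondary technical point is justifying the inequality that converts a per-step subproblem improvement into a duality-gap statement; this uses the definition \eqref{eq:gap} of the gap together with the fact that $\nabla g^*(v(\alpha^t))=w(\alpha^t)$ by strong convexity of $g$, which lets us identify the linear term $\langle\nabla f(\alpha^t),u\rangle$ inside $G^{\sigma'}_k$ with primal quantities.
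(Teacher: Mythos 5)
Your overall architecture matches the paper's: a per-round improvement inequality derived from Lemma~\ref{lem:step:sigma1} plus Assumption~\ref{ass:localImprovement}, a bound of the quadratic error term by $4L^2\sigma$ via the fact that Lipschitz continuity confines the dual variables to $[-L,L]$, a geometric phase with $s=1$ giving $t_0$, a $1/t$-decay phase with an adaptively shrinking $s$ giving $T_0$, and a final averaging phase with Jensen and telescoping. However, there is a genuine gap at the single most important step: the conversion to a duality-gap bound. You instantiate the comparison direction as $\vsub{u}{k}=s(\vsub{\alphav^\star}{k}-\vsub{\alphav^t}{k})$. With that choice, convexity of $R$ and of $f$ give
$\sum_k \Gks(\vsub{u}{k};\alphav^t)\geq (1-s)\bD(\alphav^t)+s\,\bD(\alphav^\star)-\tfrac{\sigma's^2}{2\lambda n^2}\sum_k\|X\vsub{(\alphav^\star-\alphav^t)}{k}\|^2$,
so the per-round improvement is lower-bounded by $s\big(\bD(\alphav^\star)-\bD(\alphav^t)\big)$ minus the quadratic term, i.e.\ by the \emph{dual suboptimality}. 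Your ``crucial identity'' asserts that the same quantity is lower-bounded by $s\,\gap(\alphav^t)$ (up to the quadratic correction and an extra $s\,\epsilon_\bD^t$), but since $\gap(\alphav^t)=P(\wv(\alphav^t))-\bD(\alphav^t)\geq \bD(\alphav^\star)-\bD(\alphav^t)$, that inequality points the wrong way and does not follow from your setup; the difference $\gap(\alphav^t)-\epsilon_\bD^t$ is exactly the primal suboptimality of $\wv(\alphav^t)$, which your recursion never controls, so adding $s\,\epsilon_\bD^t$ to the right-hand side is circular rather than a fix.

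The paper's Lemma~\ref{lem:basicLemma} resolves this by choosing a different comparison point: $\vc{\uv}{t}$ with $-\vc{u_i}{t}\in\partial\ell_i(\wv(\alphav^t)^T\xv_i)$, i.e.\ the Fenchel-conjugate pair of the \emph{current primal iterate} rather than the dual optimum. For that choice the Fenchel--Young inequality holds with equality, $\ell_i^*(-u_i)=-u_i\,\wv(\alphav^t)^T\xv_i-\ell_i(\wv(\alphav^t)^T\xv_i)$, and after substituting the explicit expression \eqref{eq:asdfjiwjfeojawfa} for the gap, the per-round improvement is lower-bounded by $\aggpar(1-\Theta)\big(s\,\gap(\alphav^t)-\tfrac{\sigma's^2}{2\lambda n^2}\vc{R}{t}\big)$ with the gap appearing exactly. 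Lipschitz continuity then enters only through $|\vc{u_i}{t}|\leq L$ and $|\alpha_i^t|\leq L$, giving $\vc{R}{t}\leq 4L^2\sigma$ (Lemma~\ref{lemma:BoundOnR}). Your hint about identifying $\<\nabla f(\alphav^t),\hv>$ with primal quantities via $\nabla g^*(\vv(\alphav^t))=\wv(\alphav^t)$ is indeed one ingredient of that computation, but without switching the comparison direction to $\vc{\uv}{t}$ the duality gap cannot be made to appear, and the second half of the theorem (the statement about $\E{P(\wv(\overline\alphav))-\bD(\overline\alphav)}$) does not follow. The dual-suboptimality phases of your argument ($t_0$ and $T_0$) are essentially sound once this lemma is in place, since $\gap(\alphav^t)\geq\bD(\alphav^\star)-\bD(\alphav^t)$ lets you reuse the same inequality there.
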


The most important observation regarding the above result is that we do not impose any assumption on the choice of the local solver, apart from {the} sufficient decrease condition on the local objective in Assumption~\ref{ass:localImprovement}.

Let us now comment on the leading terms of the complexity results.
The inverse dependence on $1 - \Theta$ suggests that it is worth pushing the rate of local accuracy $\Theta$ down to zero. However, when thinking about overall complexity, we have to bear in mind that achieving high accuracy on the local subproblems might be too expensive. The optimal choice would depend on the time we estimate a round of communication would take. In general, if communication is slow, it would be worth spending more time on solving local subproblems, but not so much if communication is relatively fast. We discussed this tradeoff in Section~\ref{sec:cocoa:problem}.

We achieve a significant speedup by replacing the slow averaging aggregation (as in~\cite{Jaggi:cocoa}) by more aggressive adding instead, that is $\aggpar = 1$ instead of  $\aggpar = 1/K$.
Note that the safe subproblem parameter for the averaging case ($\aggpar = 1/K$) is $\sigma' := 1$, while for adding ($\aggpar = 1$) it is given by $\sigma' := K$,
both proven in Lemma~\ref{lem:sigmaPrimeNotBad}.
The {speedup that results from more aggressive adding is reflected in the convergence rate} as shown above, when plugging in the actual parameter values $\aggpar$ and $\sigma'$ for the two cases, as we will illustrate more clearly in the next subsection.

\subsection{Discussion and Interpretations of Convergence Results}

As the above theorems suggest, it is not possible to meaningfully change the aggregation parameter $\aggpar$ in isolation. It comes naturally coupled with a particular subproblem.

In this section, we explain a simple way to be able to {set the aggregation parameter as} $\aggpar = 1$, that is to aggressively add up the updates from each machine. The motivation for this comes from a common practical setting. When solving the SVM dual (Hinge loss: $\ell_i(a) = \max\{0, y_i-a\}$), the optimization problem comes with ``box constraints'', i.e., for all $i \in \{ 1, \dots, n \}$, we have $\alpha_i \in [0, 1]$ (see Table \ref{tbl:differentLossFunctions}). The particular values of $\alpha_i$ being~$0$ or~$1$ have a particular interpretation in the context of original problem~\eqref{eq:primal}. If we used $\aggpar < 1$, we would never be able reach the upper boundary of any variable $\alpha_i$, when starting the algorithm with all-zeros $\alpha$.
This example illustrates some of the downsides of averaging {vs.} adding updates, coming from the fact that the step-size from using averaging (by being $1/K$ times shorter) can result in $1/K$ times slower convergence.

For the case of aggressive adding, the convergence {from Theorem \ref{thm:mainResult} becomes}:
\begin{corollary}[Smooth loss functions - adding]
\label{thm:mainResult:adding}
Let the assumptions of Theorem \ref{thm:mainResult} be satisfied.
If we run Algorithm~\ref{alg:cocoa} with $\aggpar = 1, \sigma'=K$
for  
\begin{equation}
\label{asfdafdafa}
 T 
   \overset{\eqref{afdsafdafsafdsafda}}{=} 
\frac{1}
   {
1-\Theta}
\frac
{\lambda\gamma n+
\sigma_{\max} K}
{ \lambda\gamma n }
    \log 
\left(
\frac{1}
   {1-\Theta}
\frac
{\lambda\gamma n+
\sigma_{\max} K}
{ \lambda\gamma n }
    \frac1{\epsilon_\gap}
    \right)
\end{equation}
iterations, we have 
$\E{
P( \wv(\vc{\alphav}{T})) - \bD(\vc{\alphav}{T})
} \leq \epsilon_\gap.$
\end{corollary}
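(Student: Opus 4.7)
The plan is to derive this corollary as a direct specialization of Theorem \ref{thm:mainResult} (the smooth-loss convergence bound for Algorithm \ref{alg:cocoaPractical}) by plugging in the particular parameter choices $\nu = 1$ and $\sigma' = K$. Since Theorem \ref{thm:mainResult} already provides a duality-gap iteration complexity \eqref{afdsafdafsafdsafda} for arbitrary admissible $(\nu, \sigma')$, the task reduces to (i) certifying that the pair $\nu=1,\sigma'=K$ is admissible in the sense of condition \eqref{eq:sigmaPrimeSafeDefinition}, and (ii) performing a cosmetic substitution into the bound.

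For step (i), I would invoke Lemma \ref{lem:sigmaPrimeNotBad}, which asserts that for any aggregation parameter $\nu \in [0,1]$ the choice $\sigma' := \nu K$ satisfies $\sigma' \geq \sigma'_{\min}$. Specializing to $\nu = 1$ yields the admissibility of $\sigma' = K$, so that Lemma \ref{lem:RelationOfDTOSubproblems} applies and hence all the hypotheses of Theorem \ref{thm:mainResult} are in force. For step (ii), I would substitute $\nu = 1$ and $\sigma' = K$ into
\[
\frac{1}{\nu(1-\Theta)} \cdot \frac{\lambda \gamma n + \sigma_{\max}\sigma'}{\lambda\gamma n} \log\!\left(\frac{1}{\nu(1-\Theta)} \cdot \frac{\lambda \gamma n + \sigma_{\max}\sigma'}{\lambda\gamma n} \cdot \frac{1}{\epsilon_{\gap}}\right),
\]
obtaining exactly the expression in \eqref{asfdafdafa}. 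The duality gap conclusion $\E{P(w(\alpha^T)) - D(\alpha^T)} \le \epsilon_{\gap}$ then transfers verbatim from the theorem's statement.

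There is effectively no technical obstacle here: the corollary is a pure parameter instantiation, so the only thing to verify carefully is the admissibility of $\sigma' = K$ when $\nu = 1$, which is immediate from Lemma \ref{lem:sigmaPrimeNotBad}. No new estimates, no new recursions, and no further manipulation of the local subproblem objectives $\mathcal{G}^{\sigma'}_k$ are required. Thus the proof can be written in just a couple of sentences, citing Lemma \ref{lem:sigmaPrimeNotBad} for admissibility and Theorem \ref{thm:mainResult} for the resulting complexity bound.
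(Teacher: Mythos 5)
Your proposal is correct and matches the paper's treatment: the corollary is obtained exactly by instantiating Theorem~\ref{thm:mainResult} at $\aggpar=1$, $\sigma'=K$, with Lemma~\ref{lem:sigmaPrimeNotBad} guaranteeing that this choice satisfies the admissibility condition \eqref{eq:sigmaPrimeSafeDefinition}. No further argument is needed, and your explicit admissibility check is the only (implicit) step the paper relies on beyond direct substitution.
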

On the other hand, if we would just average results (as proposed in \cite{Jaggi:cocoa}), we would obtain following corollary:
\begin{corollary}[Smooth loss functions - averaging] 
Let the assumptions of Theorem~\ref{thm:mainResult} be satisfied.
If we run Algorithm~\ref{alg:cocoa} with $\aggpar = 1/K, \sigma'=1$
for  
 \begin{equation}
 \label{asfdafdafa2}
 T 
    \overset{\eqref{afdsafdafsafdsafda}}{\geq }
\frac{1}
   {
1-\Theta}
\frac
{K\lambda\gamma n+
\sigma_{\max} K}
{ \lambda\gamma n }
    \log 
\left(
\frac{1}
   {1-\Theta}
\frac
{K \lambda\gamma n+
\sigma_{\max} K}
{ \lambda\gamma n }
    \frac1{\epsilon_\gap}
    \right)  
\end{equation} 
 iterations, we have 
$\E{
P( \wv(\vc{\alphav}{T})) - \bD(\vc{\alphav}{T})
} \leq \epsilon_\gap.$
\end{corollary}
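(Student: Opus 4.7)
The statement is a direct corollary of Theorem~\ref{thm:mainResult} (second part, the duality-gap bound in~\eqref{afdsafdafsafdsafda}) together with Lemma~\ref{lem:sigmaPrimeNotBad}; the only work is a parameter substitution and algebraic rearrangement. So the plan is deliberately short.

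First, I would check that the pair $(\aggpar,\sigma')=(1/K,1)$ is an admissible choice for Algorithm~\ref{alg:cocoa}, i.e.\ that it satisfies the compatibility condition~\eqref{eq:sigmaPrimeSafeDefinition} needed for Lemma~\ref{lem:RelationOfDTOSubproblems}, under which Theorem~\ref{thm:mainResult} was proven. By Lemma~\ref{lem:sigmaPrimeNotBad}, the prescription $\sigma' := \aggpar K$ is always safe; plugging in $\aggpar = 1/K$ gives exactly $\sigma'=1$, so this step is immediate.

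Next, I would invoke the duality-gap iteration bound~\eqref{afdsafdafsafdsafda} of Theorem~\ref{thm:mainResult} verbatim. For Assumption~\ref{ass:localImprovement} with quality $\Theta$, that result guarantees $\E{P(\wv(\alphav^T))-\bD(\alphav^T)}\le\epsilon_\gap$ whenever
\[
T \;\geq\; \frac{1}{\aggpar(1-\Theta)}\cdot\frac{\lambda\gamma n+\sigma_{\max}\sigma'}{\lambda\gamma n}\,\log\!\left(\frac{1}{\aggpar(1-\Theta)}\cdot\frac{\lambda\gamma n+\sigma_{\max}\sigma'}{\lambda\gamma n}\cdot\frac{1}{\epsilon_\gap}\right).
\]
Substituting $\aggpar = 1/K$ and $\sigma'=1$ yields $\frac{1}{\aggpar(1-\Theta)}=\frac{K}{1-\Theta}$ and $\lambda\gamma n+\sigma_{\max}\sigma'=\lambda\gamma n+\sigma_{\max}$. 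Folding the factor $K$ into the numerator rewrites the product as
\[
\frac{K}{1-\Theta}\cdot\frac{\lambda\gamma n+\sigma_{\max}}{\lambda\gamma n} \;=\; \frac{1}{1-\Theta}\cdot\frac{K\lambda\gamma n+\sigma_{\max}K}{\lambda\gamma n},
\]
which matches exactly the expression appearing inside both the leading factor and the logarithm of~\eqref{asfdafdafa2}. This concludes the claim.

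Since this is pure substitution, there is no genuine obstacle; the only minor pitfall is bookkeeping, making sure the $K$ is distributed consistently in both occurrences (the leading factor and the argument of the $\log$) so that the bound matches the formula stated in the corollary, and confirming that the validity of Lemma~\ref{lem:RelationOfDTOSubproblems} is the sole prerequisite linking the choice $\sigma'=1$ to the averaging step~$\aggpar=1/K$. A useful sanity check is the intuitive comparison with Corollary~\ref{thm:mainResult:adding}: relative to adding ($\aggpar=1,\sigma'=K$), averaging slows convergence by a factor of~$K$ because the $\frac{1}{\aggpar}$ factor contributes $K$ rather than $1$, which is exactly the well-known penalty of averaging over adding noted immediately after Theorem~\ref{thm:convergenceNonsmooth}.
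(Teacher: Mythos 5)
Your proposal is correct and matches the paper's (implicit) argument exactly: the paper derives this corollary purely by substituting $\aggpar=1/K$ and $\sigma'=1$ into the duality-gap bound \eqref{afdsafdafsafdsafda} of Theorem~\ref{thm:mainResult}, with Lemma~\ref{lem:sigmaPrimeNotBad} guaranteeing that $\sigma'=\aggpar K=1$ is an admissible subproblem parameter. Your algebraic rearrangement folding the factor $K$ into the numerator is the same bookkeeping the stated formula \eqref{asfdafdafa2} reflects.
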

Comparing the leading terms in 
Equations \eqref{asfdafdafa} and \eqref{asfdafdafa2}{,}
we see that 
the leading term for the $\nu=1$ choice is
$\mathcal{O}(\lambda\gamma n+
\sigma_{\max} K)$,
which is always better than for the $\nu=1/K$ case, when the leading term is 
{$\mathcal{O}(K\lambda\gamma n+
\sigma_{\max} K)$}.
This strongly suggests that adding in Framework \ref{alg:cocoaPractical} is preferable, especially when 
$\lambda \gamma n \gg \sigma_{\max} $.
 
An {analogous improvement (by a factor on the order of $K$)} follows for the case of the sub-linear convergence rate for general Lipschitz loss functions, as shown in Theorem~\ref{thm:mainResult:gcc}.

Note that the differences in the convergence rate are bigger for relatively big values of the regularizer $\lambda$. When the regularizer is $\mathcal{O}(1 / n)$, the difference is negligible. This behavior is also present in practice, as we will {illustrate} in Section~\ref{sec:cocoa:experiments}.

\section{Discussion and Related Work}
\label{sec:cocoa:relatedWork}

In this section, we review a number of methods designed to solve optimization problems of the form of interest here, which are typically referred to as regularized empirical risk minimization (ERM) {problems} in the machine learning literature.
{This problem class~ \eqref{eq:primal}, which is formally described in Section~\ref{sec:cocoa:problemformulation},} underlies many prominent methods {in} supervised machine learning.

\paragraph*{\it Single-Machine Solvers.}
Stochastic Gradient Descent (SGD) is the simplest stochastic method one can use to {solve \eqref{eq:primal}}, and dates back to the work of Robbins and Monro \cite{RM1951}. We refer the reader to \cite{moulines2011non, NeedellWard2015, nemirovski2009robust, bottou2012stochastic} for {a} recent theoretical and practical assessment of SGD. Generally speaking, the method is extremely easy to implement, and converges to modest accuracies very quickly, which is often satisfactory in applications in machine learning. On the other hand, {the method can sometimes be rather cumbersome because {}it can be difficult to tune {its} hyperparameters, and it can be} impractical if higher solution accuracy is needed.

The current state of the art for empirical loss minimization with strongly convex regularizers is randomized coordinate ascent on the dual objective---Stochastic Dual Coordinate Ascent (SDCA) \cite{SDCA}. In contrast to primal SGD methods, the SDCA algorithm family is often preferred as it is free of learning-rate parameters, and has faster (geometric) convergence guarantees.  This algorithm and its variants are increasingly used in practice \cite{Wright:2015bn,ShalevShwartz:2014dy}. On the other hand, primal-only methods apply to a larger problem class, not only of form \eqref{eq:primal} that enables formation of dual problem \eqref{eq:dual} as considered here.

Another class of algorithms gaining attention in recent very few years are `variance reduced' modifications of the original SGD algorithm. 
They are applied directly to the primal problem \eqref{eq:primal}, but unlike SGD, have {the} property that {the} variance of estimates of the gradients tend to zero as they approach {the} optimal solution. 
Algorithms such as SAG \cite{SAGjournal2013}, SAGA \cite{saga} and others \cite{shalev2015sdcaWODual, Defazio:2014vx} come at the cost of extra memory requirements---they have to store a gradient for each training example. 
This can be addressed efficiently in the case of generalized linear models, but prohibits its use in more complicated models such as in deep learning. 
On the other hand, Stochastic Variance Reduced Gradient (SVRG) and its variants \cite{SVRG, S2GD, proxSVRG, konecny2015mini, nitanda2014stochastic} are often interpreted as `memory-free' methods with variance reduction. 
However, these methods need to compute the full gradient occasionally to drive the variance reduction, which requires a full pass through the data and is an operation one generally tries to avoid. 
This and several other practical issues have been recently addressed in~\cite{practicalSVRG}.
Finally, another class of extensions to SGD are stochastic quasi-Newton methods~\cite{bordes2009sgd, byrd2014stochastic}. Despite their clear potential, a lack of theoretical understanding and complicated implementation issues compared to those above may still limit their adoption in the wider community. A stochastic dual Newton ascent (SDNA) method was proposed and analyzed in~\cite{SDNA}. However, the method needs to modified substantially before it can be implemented in a  distributed environment.

\paragraph*{\it SGD-based Algorithms.}
For the empirical loss minimization problems of interest, stochastic subgradient descent (SGD) based methods are well-established.
Several distributed variants of SGD have been proposed, many of which build on the idea of a parameter server \cite{Niu:2011wo, richtarik2013distributed,  Duchi:2013te}.
Despite their simplicity and accessibility in terms of implementation, the downside of this approach is that the amount of required communication is equal to the amount of data read locally, {since one data point is accessed per machine per round} (e.g., mini-batch SGD with a batch size of 1 per worker). These variants are in practice not competitive with the more communication-efficient methods considered in this work, which allow more local updates per communication round.

\paragraph*{\it One-Shot Communication Schemes.}
At the other extreme, there are distributed methods using only a single round of communication, such as \cite{Zhang:2013wq, zinkevich2010parallelized, Mann:2009tr,McWilliams:2014tl,Heinze:2016tu}.
These methods require additional assumptions on the partitioning of the data, which are usually not satisfied in practice if the data are distributed ``as is'', i.e., if we do not have the opportunity to distribute the data in a specific way beforehand. 
Furthermore, some cannot guarantee convergence rates beyond what could be achieved if we ignored data residing on all but a single computer, as shown in \cite{DANE}.
Additional relevant lower bounds on the minimum number of communication rounds necessary for a given approximation quality are presented in \cite{Balcan:2012tc,Arjevani:2015vka}.

\paragraph*{\it Mini-Batch Methods.} Mini-batch methods (which instead of just one data-example use updates from several examples per iteration) are more flexible and lie within these two communication vs. computation extremes. However,
mini-batch versions of both SGD and coordinate descent (CD) \cite{RT:PCDM, richtarik2013distributed,ShalevShwartz:2014dy, marecek2014distributed, Yang:2013vl, Tappenden:2015vh, nsync, ALPHA, ESO, QUARTZ, csiba2015primal, csiba2016importanceminibatch} suffer from their convergence rate degrading towards the rate of batch gradient descent as the size of the mini-batch is increased. 
This follows because mini-batch updates are made based on the outdated previous parameter vector $\wv$, in contrast to methods that allow immediate local updates like \cocoa.

Another disadvantage of mini-batch methods is that the aggregation parameter is harder to tune, as it can lie anywhere in the order of mini-batch size. The optimal choice is often either unknown, or difficult to compute.
In the \cocoa setting, the parameter lies in the typically much smaller range given by $K$. In this work the aggregation parameter is further simplified and can be simply set to $1$, i.e., adding updates, which is achieved by formulating a more conservative local problem as described in Section~\ref{sec:cocoa:subproblem}.

\paragraph*{\it Distributed Batch Solvers.}
With traditional batch gradient solvers not being competitive for the problem class  \eqref{eq:primal}, improved batch methods have also received much research attention recently, in the single machine case as well as in the distributed setting.
In distributed environments, {popular methods include} the alternating direction method of multipliers (ADMM) \cite{Boyd:2010bw}  
as well as quasi-Newton methods such as L-BFGS, which can be attractive because of their relatively low communication requirements. Namely, communication is in the order of a constant number of vectors (the batch gradient information) per full pass through the data.

ADMM also comes with an additional penalty parameter balancing between the equality constraint on the primal variable vector $\wv$ and the original optimization objective~\cite{Boyd:2010bw}, which is typically hard to tune in many applications. Nevertheless, the method has been used for distributed SVM training in, e.g., \cite{Forero:2010vv}. The known convergence rates for ADMM are weaker than the more problem-tailored methods mentioned we study here, and the choice of the penalty parameter is often unclear in practice. 

Standard ADMM and quasi-Newton methods  do not allow a gradual trade-off between communication and computation available here. An exception is the approach of Zhang, Lee and Shin \cite{Zhang:2012usa}, which is similar to our approach in spirit, albeit based on ADMM, in that they allow for the subproblems to be solved inexactly. However, this work focuses on L2-regularized problems and a few selected loss functions, and offers no complexity results.

Interestingly, our proposed \cocoap { framework}---despite {being} aimed at  cheap stochastic local solvers---does have similarities to block-wise variants of batch proximal methods{. In particular, the} purpose of our subproblems as defined in \eqref{eq:subproblem:sigma1} is to form a data-dependent block-separable quadratic approximation to the smooth part of the original (dual) objective~\eqref{eq:dual}, while leaving the non-smooth part $R$ intact (recall that $R(\alphav)$ was defined to collect the~$\ell^*_i$ functions, and is separable over the coordinate blocks).
Now if hypothetically each of our regularized quadratic subproblems \eqref{eq:subproblem:sigma1} were to be minimized exactly, the resulting steps could be interpreted as block-wise proximal Newton-type steps on each coordinate block~$k$ of the dual \eqref{eq:dual}, where the Newton-subproblem is modified to also contain the proximal part $R$. 
This connection only holds for the special case of adding ($\aggpar=1$), and would correspond to a carefully adapted step-size in the block-wise Newton case.

One of the main crucial differences of our proposed \cocoap framework compared to all known batch proximal methods (no matter if block-wise or not) is that the latter do require high accuracy subproblem solutions, and do not allow arbitrary solvers of weak accuracy~$\Theta$ such as we do here, see also the next paragraph.
Distributed Newton methods have been analyzed theoretically only when the subproblems are solved to high precision, see e.g. \cite{DANE}. 
This makes the local solvers very expensive and the convergence rates less general than in our framework (which allows weak local solvers). 
Furthermore, the analysis of \cite{DANE} requires additional strong assumptions on the data partitioning, such that the local Hessian approximations are consistent between the machines.

\paragraph*{\it Distributed Methods Allowing Local Optimization.}
Developing distributed optimization methods that allow for arbitrary weak local optimizers requires carefully devising data-local subproblems to be solved after each communication round.

By making use of the primal-dual structure in the line of work of \cite{Yu:2012fp,Pechyony:2011wi,Yang:2013vl,Yang:2013ui,Lee:2015vra}, the \cocoa and \cocoap frameworks proposed here are the first to allow the use of any local solver---of weak local approximation quality---in each round.
Furthermore, the approach here also allows more control over the aggregation of updates between machines. 
The practical variant of the DisDCA {algorithm} of \cite{Yang:2013vl}, called DisDCA-p, also allows additive updates but is restricted to coordinate decent  {(CD)} being the local solver, and was  {initially} proposed without convergence guarantees. 
 {The work of \cite{Yang:2013ui} has provided the first theoretical convergence analysis for an ideal case, when the distributed data parts are all orthogonal to each other{, which is} an unrealistic setting in practice.}
DisDCA-p can be recovered as a special case of the \cocoap framework when using  {CD} as a local solver, if $|\mathcal{P}_k| = n/K${,} and when using  {the conservative bound} $\sigma':=K${;} see  
also \cite{Lee:2015vra,Ma:2015ti}.
The convergence theory presented here therefore also covers that method, {and extends it to arbitrary local solvers.}

{Since the first version of this work, Accelerated Inexact Dane (AIDE) \cite{reddi2016aide}---a method based on related set of ideas but applied to the primal problem---was developed. Like \cocoap, AIDE promotes an efficient balance between communication and computation costs in the sense of \eqref{eq:runtime:cocoa}.}

\paragraph*{\it Inexact Block Coordinate Descent.} 
Our framework is related, but not identical, to running an {\em inexact} version of  block coordinate ascent, applied to all block{s} in parallel, and to the dual problem. {From this perspective,} the level of inexactness is controlled by the parameter $\Theta$ through the use of a (possibly randomized) iterative ``local'' solver applied to the {local subproblems}. For previous work on {\em randomized} block coordinate descent we refer to {the reader to~\cite{TRG-inexact} and \cite{DQA}}.

\section{Numerical Experiments}
\label{sec:cocoa:experiments}

In this section we explore numerous aspects of our distributed framework and demonstrate  its competitive performance in practice.
Section~\ref{sec:cocoa:LocalSolverExps} first explores the impact of the local solver on overall performance, by comparing examples of various local solvers that can be used in the framework (the improved \cocoap framework as shown in Algorithms \ref{alg:cocoa} and \ref{alg:cocoaPractical}) as well as testing the effect of approximate solution quality. The results indicate that the choice of local solver can have a significant impact on overall performance. 
In Sections~\ref{sec:cocoa:adingVsAveraging} and \ref{sec:cocoa:subproblemParamExps} we further explore framework parameters, looking at the impact of the aggregation parameter $\nu$ and the subproblem parameter $\sigma'$, respectively. Finally, Section~\ref{sec:cocoa:hugeDatasetExp} demonstrates {the} competitive practical performance of the overall framework on a large 280GB distributed dataset.

We conduct experiments on three datasets of moderate and large size, namely {\emph{rcv1\_test}}, \emph{epsilon} and \emph{splice-site.t}\footnote{The datasets are available at \url{http://www.csie.ntu.edu.tw/~cjlin/libsvmtools/datasets/}.}. The details of these datasets are listed in Table~\ref{tab:datasets}.
\begin{table}[h]
\centering
{
      \begin{tabular}{crrr}
      \toprule
    Dataset & \multicolumn{1}{c}{$n$} &
    \multicolumn{1}{c}{$d$} & 
    \multicolumn{1}{c}{size (GB)} \\
    \midrule 
	rcv1\_test & 677,399 &
	  47,236 & 1.2 \\
		epsilon & 400,000 &
	  2,000 & 3.1 \\	
	  splice-site.t & 4,627,840 &
	  11,725,480 & 273.4
	\\  \bottomrule  
\end{tabular}
}    
\caption{Datasets used for numerical experiments.}   
\label{tab:datasets}
\end{table}

For solving subproblems, we compare numerous local solver methods, as listed in Table~\ref{tbl:othersolvers}. {We use the} Euclidean norm as {the} regularizer $g(x) = \|x\|^2$ for all the experiments.  All the algorithms are implemented in C\texttt{++} with MPI, and experiments are run on a cluster of 4 Amazon EC2 m3.xlarge instances. Our {open-source} code is available online at: \url{https://github.com/optml/CoCoA}.
\begin{table}[H]
\centering
{
\begin{tabular}{ll}
\toprule
CD  & Coordinate Descent \cite{RichtarikTakacIteration}\\

APPROX & Accelerated, Parallel and Proximal Coordinate Descent \cite{APPROX} \\

GD & Gradient Descent with Backtracking Line Search \cite{NocedalWrightBook}\\

CG & Conjugate Gradient Method \cite{CG}\\

L-BFGS &Quasi-Newton with Limited-Memory BFGS Updating \cite{byrd1995limited}\\

BB &  Barzilai-Borwein Gradient Method   \cite{barzilai1988two}\\

FISTA & Fast Iterative Shrinkage-Thresholding Algorithm \cite{fista}\\
\bottomrule
\end{tabular}
}
\caption{Local solvers used in numerical experiments.}
\label{tbl:othersolvers}
\end{table}

\subsection{Exploration of Local Solvers within the Framework}
\label{sec:cocoa:LocalSolverExps}
In this section we compare the performance of our framework for various local solvers and various choices of inner iterations performed by a given local solver, resulting in different local accuracy measures $\Theta$. 
For simplicity, we choose the subproblem parameter $\sigma' := \nu K$ (see Lemma \ref{lem:sigmaPrimeNotBad}) as a simple obtainable and theoretically {safe value.}

\subsubsection{Comparison of Different Local Solvers}

Here we compare the performance of the {seven} local solvers listed in Table~\ref{tbl:othersolvers}. {We} show results for {the} quadratic loss function $\ell_i(a) = \frac12 (a-y_i)^2$
with three different values of the regularization parameter, $\lambda$=$10^{-3}$, $10^{-4}$, and $10^{-5}$, {and} $g(\cdot)$ being the default Euclidean squared norm regularizer: {$g(\cdot)=\tfrac{1}{2}\|\cdot\|^2$.} The dataset is {rcv1\_test} and we ran the {\cocoap} framework for a maximum of $T:=100$ communication rounds. We set $\aggpar=1$ (adding) and choose $H$ which gave the best performance in CPU time (see Table \ref{tbl:optH}) for each solver.

\begin{table}[H]
\centering
{
\begin{tabular}{c | c c c c c c c}
\toprule
Local Solver &{CD}  & APPROX & GD& CG& L-BFGS& BB& FISTA\\
$H $ &40,000 & 40,000 & 20&  5 & 10 & 15 &20 \\
\bottomrule
\end{tabular}
}
\caption{Optimal $H$ for different local solvers {and the} rcv1\_test dataset. }
\label{tbl:optH}
\end{table}

From Figure~\ref{fig:dffsolvers}, we find that {if a high-enough accuracy solution is needed,} the  coordinate descent (CD) local solver always outperforms the other solvers. {However, when a low accuracy solution is sufficient, as is often the case in machine learning applications, and if the regularization parameter is not too small, then L-BFGS performs best. The local subproblems  arising with the rcv1\_test dataset are reasonably well conditioned. If more ill-conditioning was present, however, we would expect the APPROX local solver to do better than CD. This is because  this method is an {\em accelerated} variant of CD. In summary, randomized methods, such as CD and APPROX, and quasi-Newton methods (L-BFGS), perform best on this dataset.} 

{Based on the above observations, it seems reasonable to expect that a method combining the power of both of these successful approaches---randomization and second-order information---would perform even better. One might therefore want to look at local solvers based on ideas appearing in  \cite{SDNA} or \cite{SBFGS}.  }

{Note that it is not the goal of this work to decide on what the best local solver is. Our goals are quite the opposite, we provide a framework which allows the incorporation of {\em any} local solver. This choice might depend on which solvers are readily available to the practitioner/company. It will also depend on the conditioning of the local subproblems, their size, and other similar considerations. Future research will undoubtedly lead to the development of new and better local solvers which can be incorporated within \cocoap.}

Finally, note that some of the solvers cannot guarantee strict decrease of the duality gap, and sometimes this fluctuation can be very dramatic.

\begin{figure}[H]
\centering
\includegraphics[scale=.19]{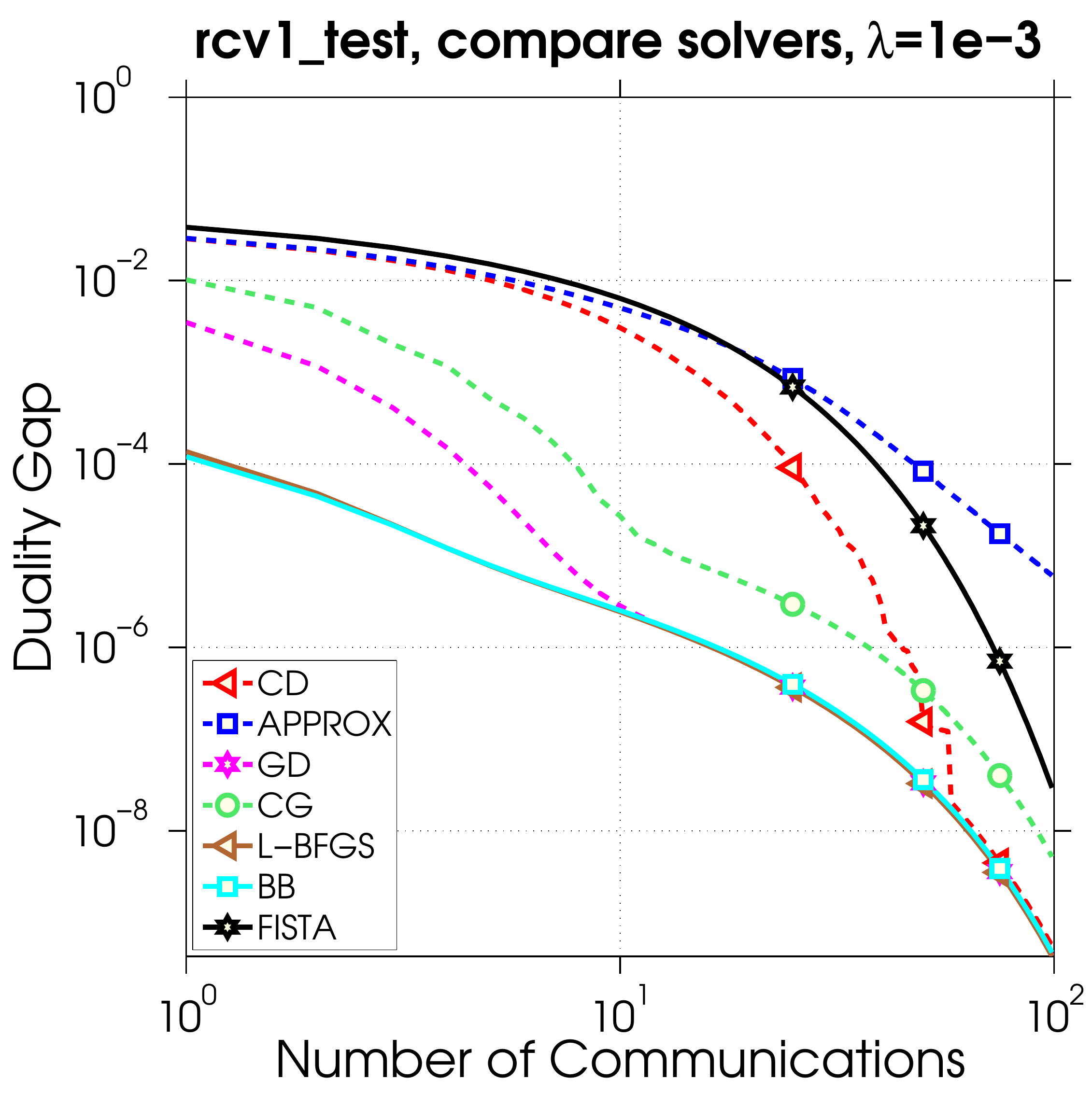}
\includegraphics[scale=.19]{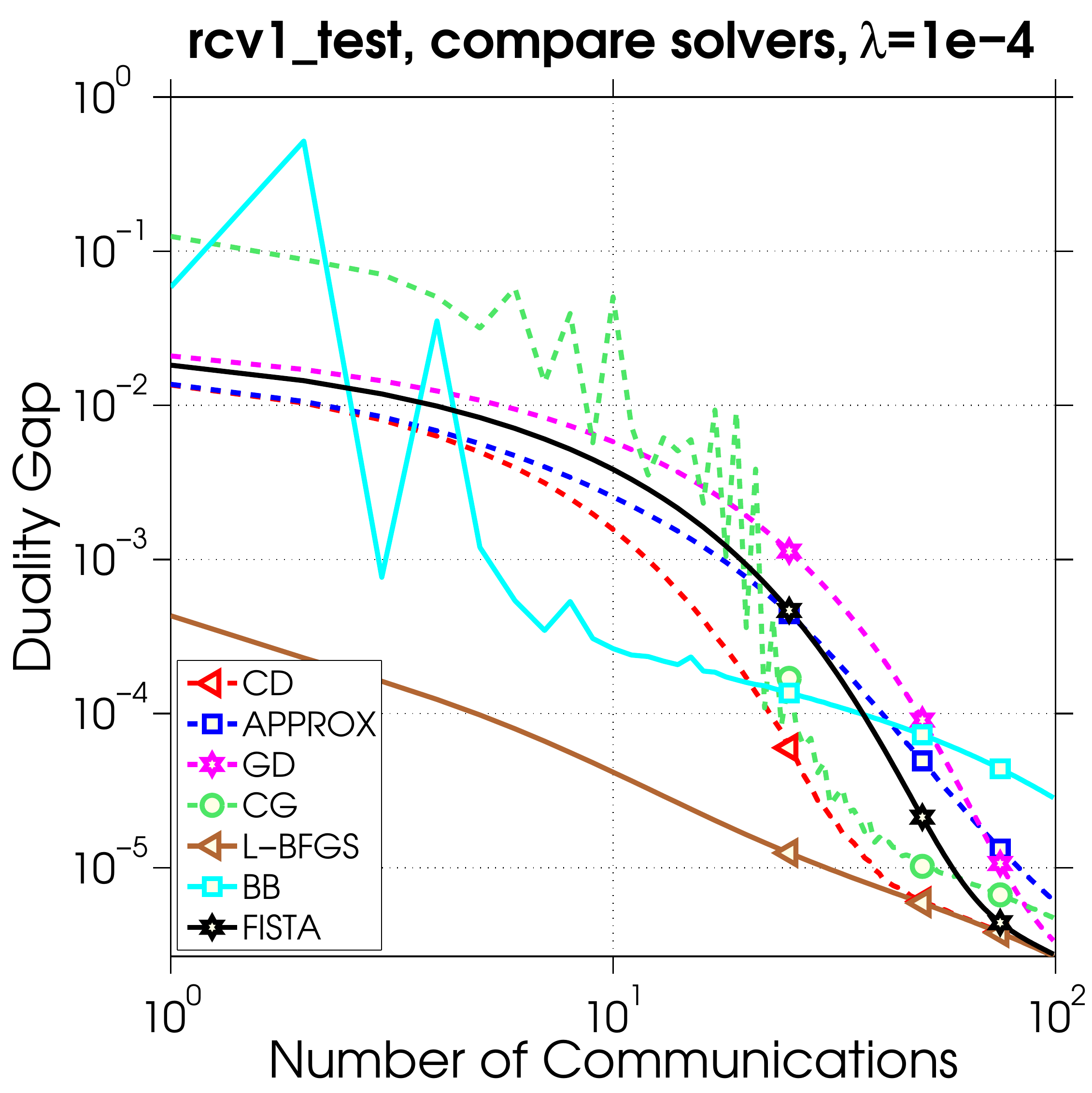}
\includegraphics[scale=.19]{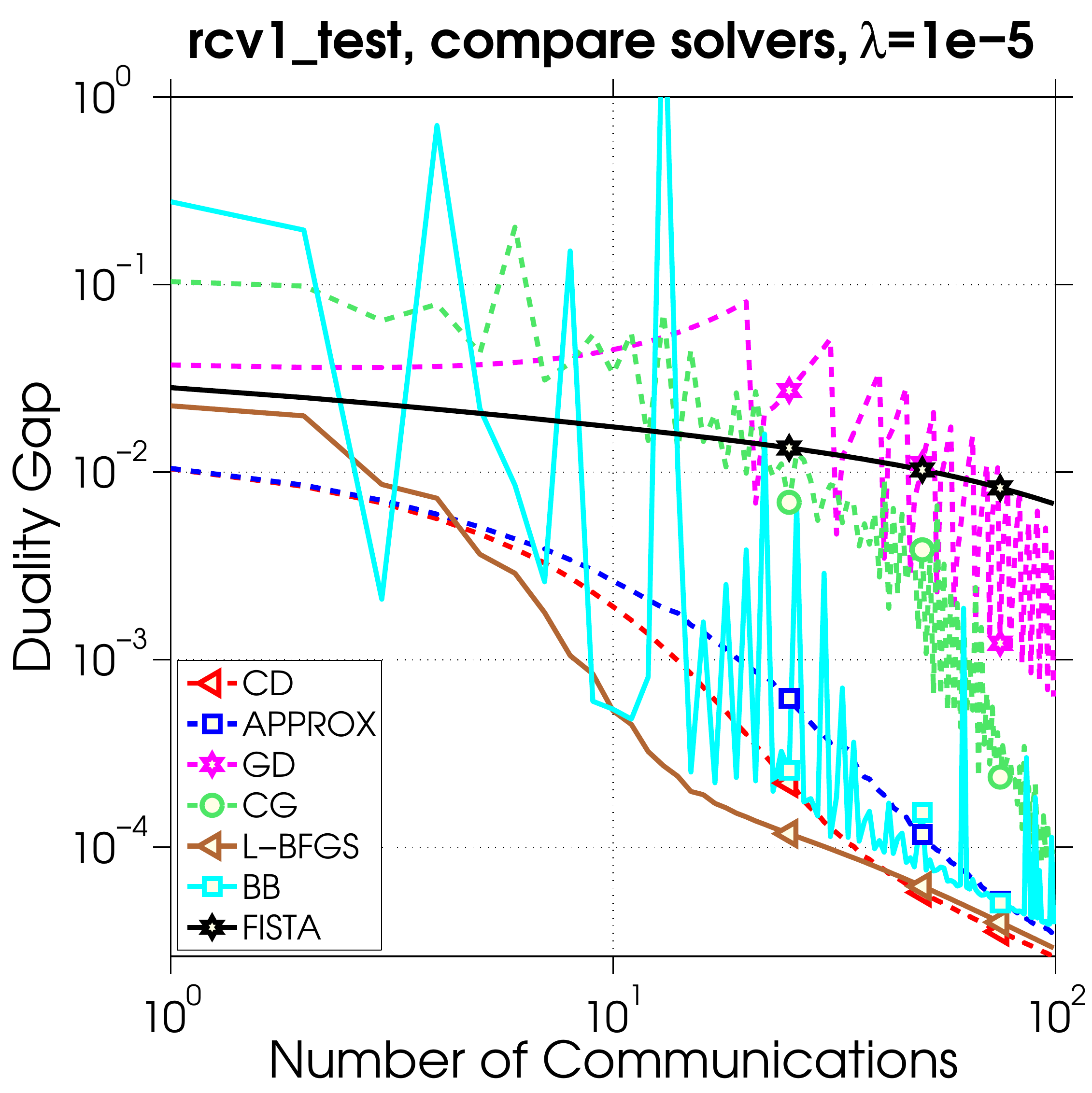}

\includegraphics[scale=.19]{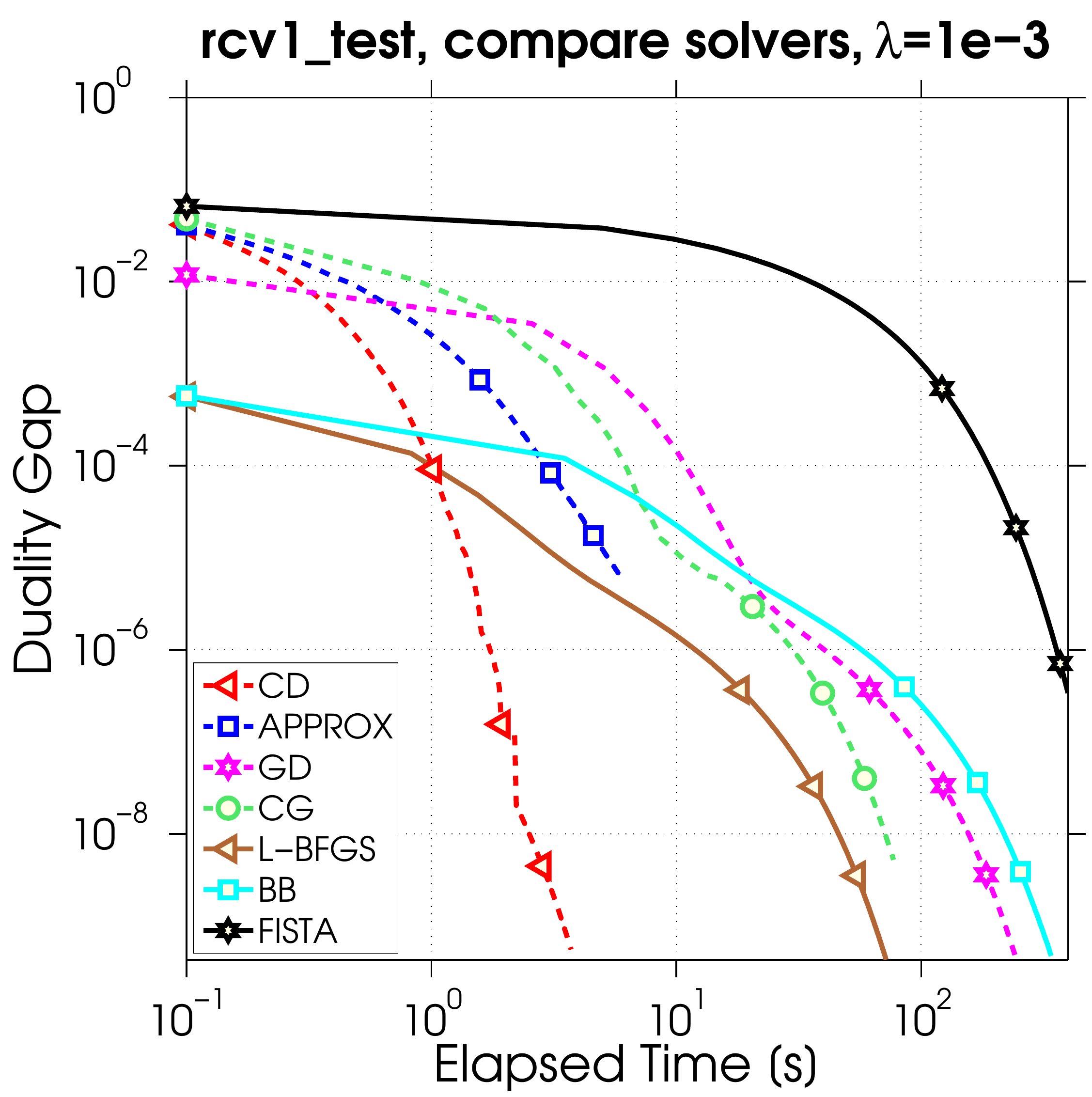}
\includegraphics[scale=.19]{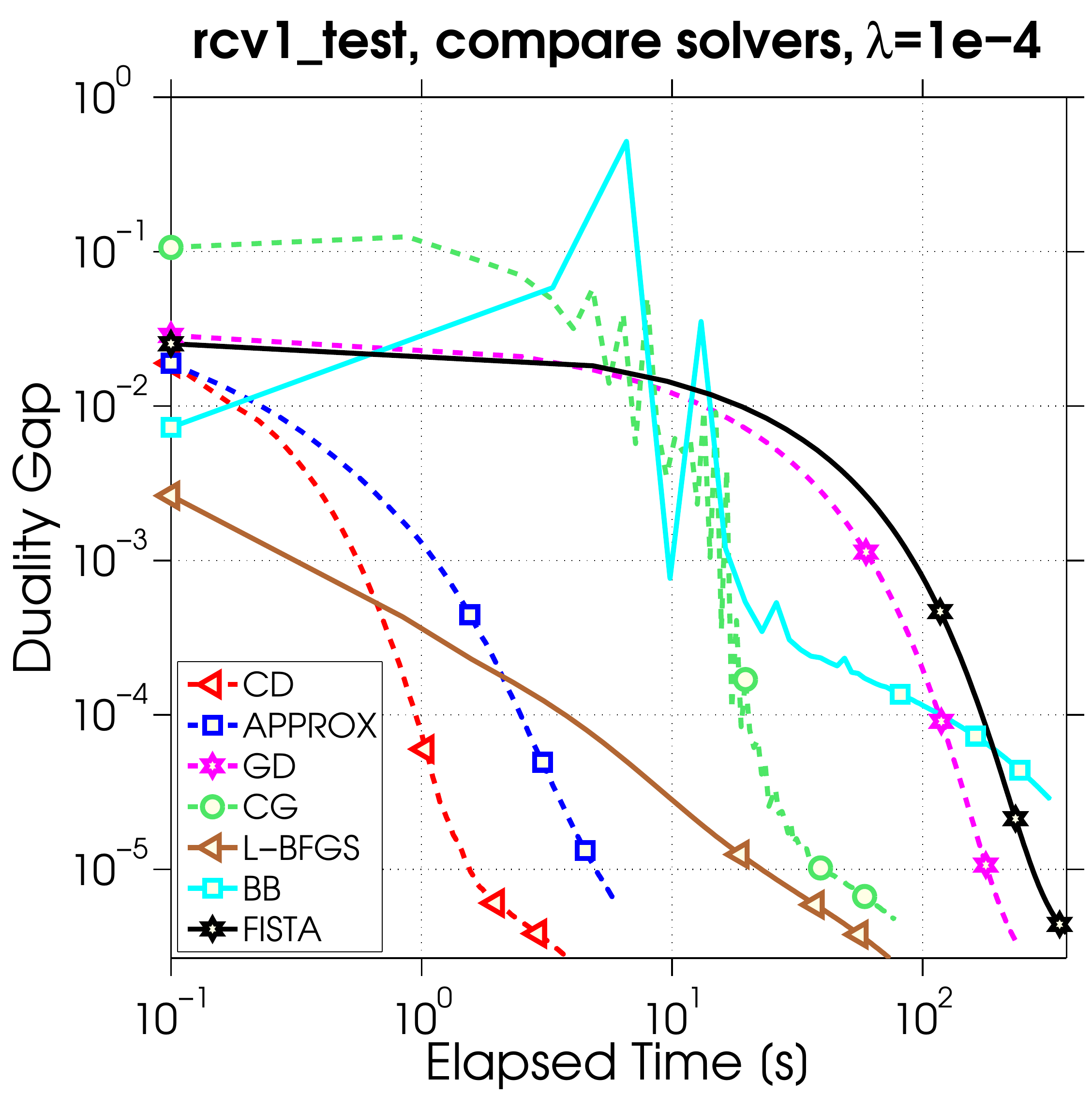}
\includegraphics[scale=.19]{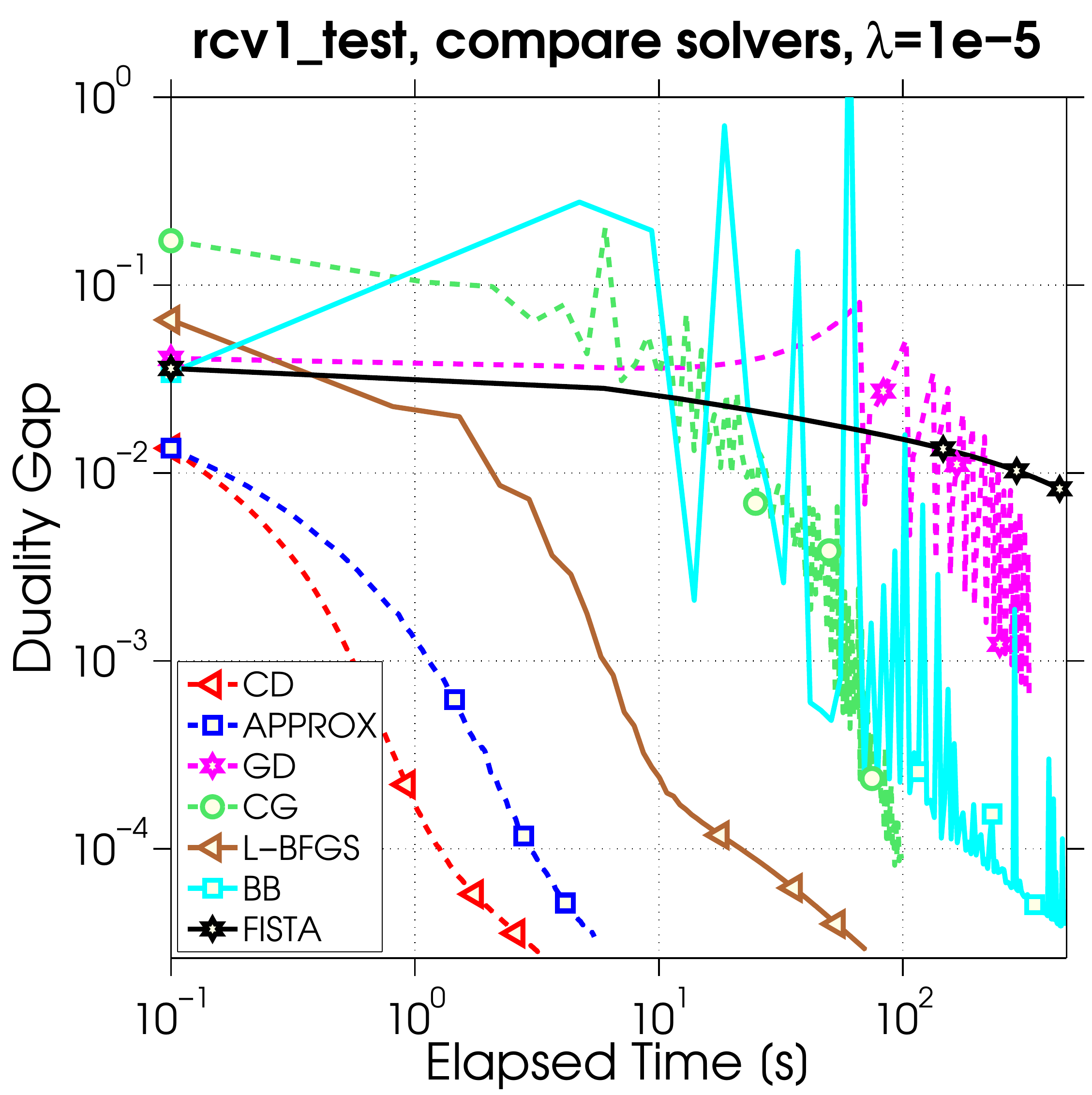}
\caption{Performance of 7 local solvers on rcv1\_test dataset for three values of the regularization parameter.} 
\label{fig:dffsolvers}
\end{figure}
 
\subsubsection{Effect of the Quality of Local Solver Solutions on Overall Performance}

Here we discuss how the quality of subproblem solutions affects the overall performance of Algorithm~\ref{alg:cocoaPractical}. In order to do so, we denote $H$ as the number of iterations the local solver is run for, within each communication round of the framework.
We choose various values for $H$ {for the two local solvers that had the best performance in general},  {CD}~\cite{RichtarikTakacIteration,SDCA} and L-BFGS~\cite{byrd1995limited}. For  {CD}, $H$ represents the number of local iterations performed on the subproblem. For L-BFGS, $H$ not only means the number of iterations, but also stands for the size of past information used to approximate the Hessian (i.e., the size of limited memory).  

Looking at Figures~\ref{fig:dffsollbfgs} and \ref{fig:dffsollbfgsxxx}, we see that for both of these local {solvers} and all values of~$\lambda$, increasing $H$ will lead to less iterations of Algorithm~\ref{alg:cocoaPractical}. Of course, increasing $H$ comes at the cost of the time spent on local solvers increasing. Hence, a larger value of $H$ is not always the optimal choice with respect to total elapsed time. For example, for the {rcv1\_test} dataset, when choosing  {CD} to solve the subproblems, choosing $H$ to be $40,000$ uses less time and provides faster convergence. When using L-BFGS, $H = 10$ seems to be the best choice.

\begin{figure}[H]
\centering
\includegraphics[scale=.19]{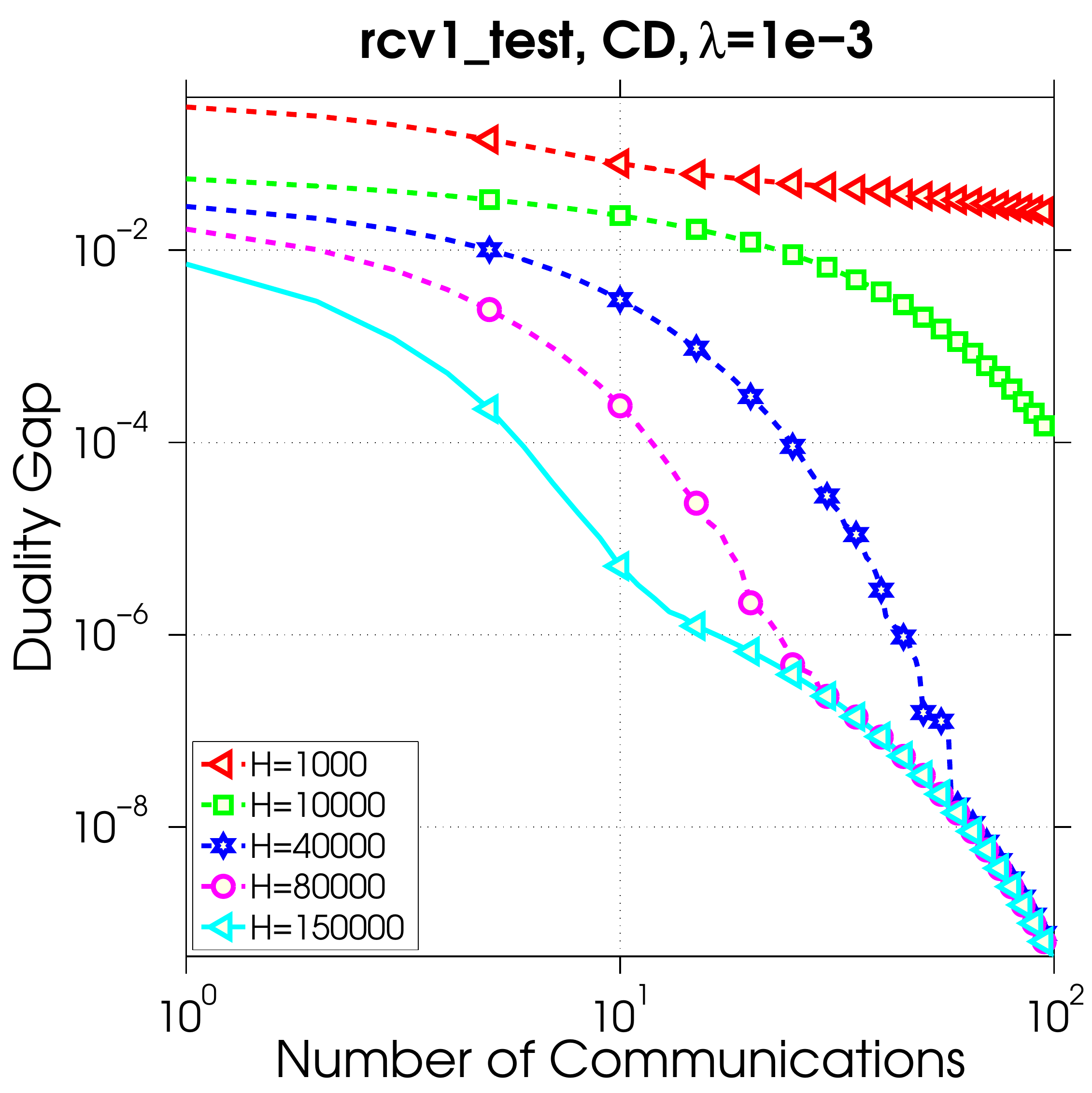}
\includegraphics[scale=.19]{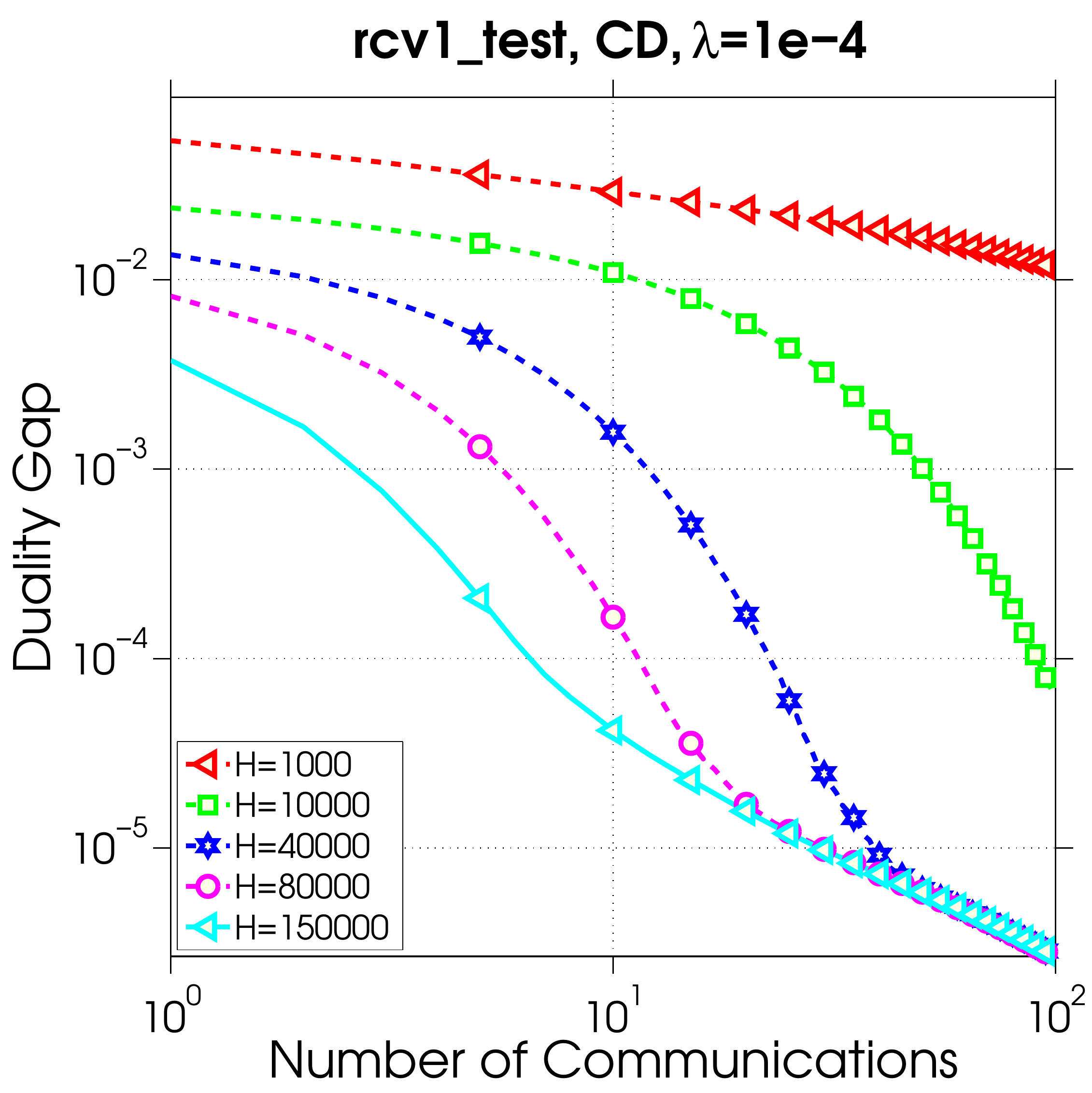}
\includegraphics[scale=.19]{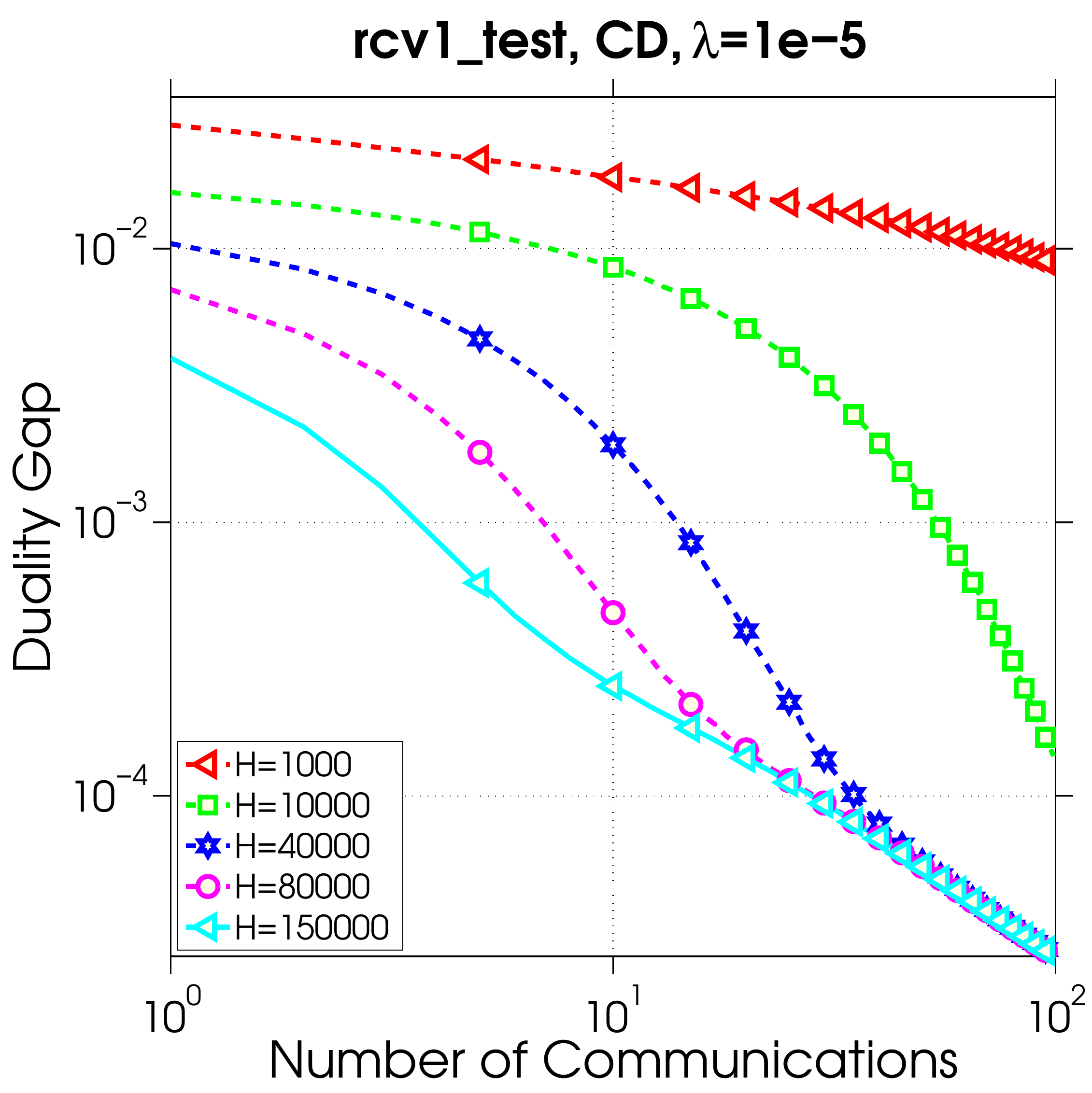}

\includegraphics[scale=.19]{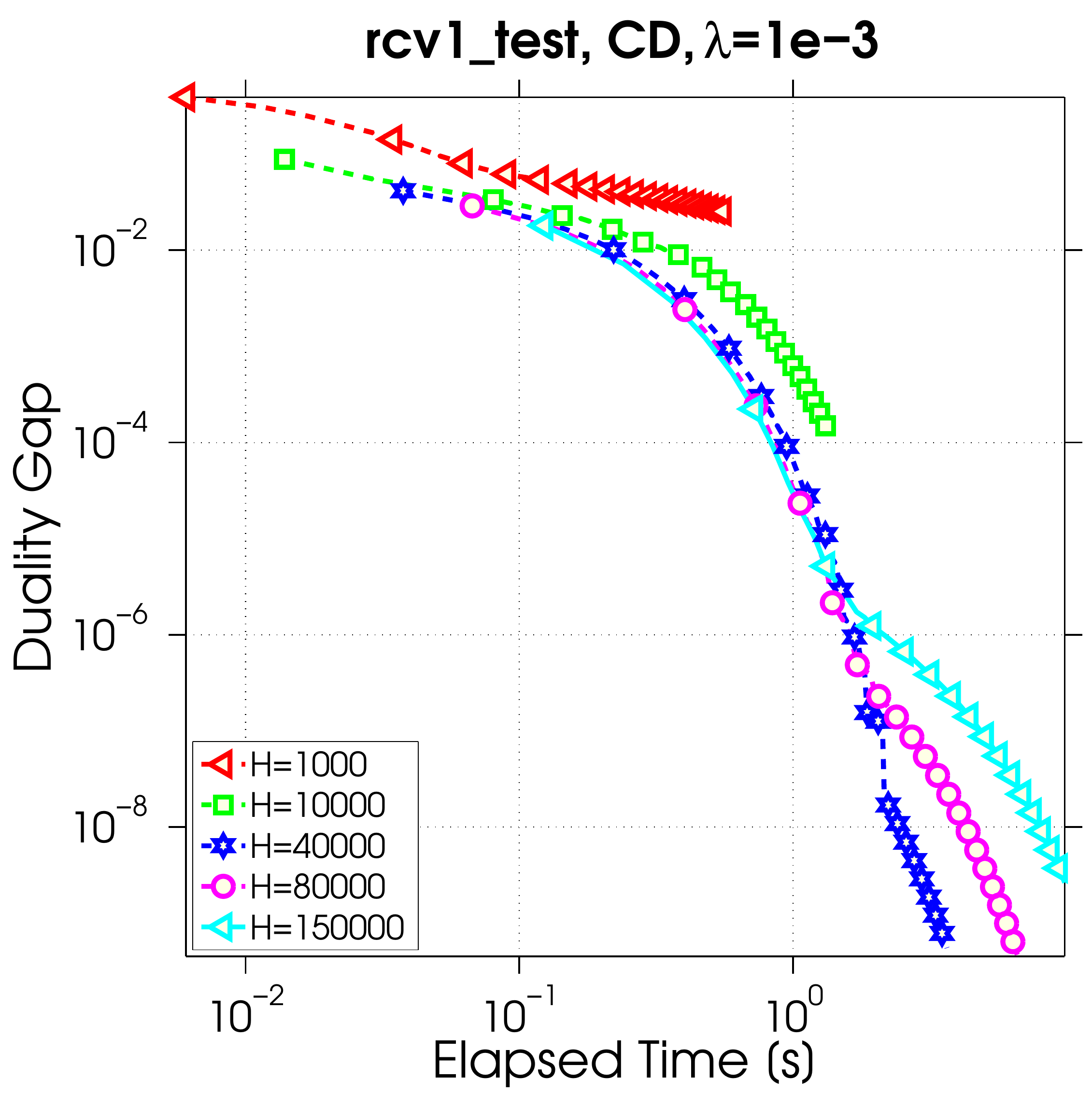}
\includegraphics[scale=.19]{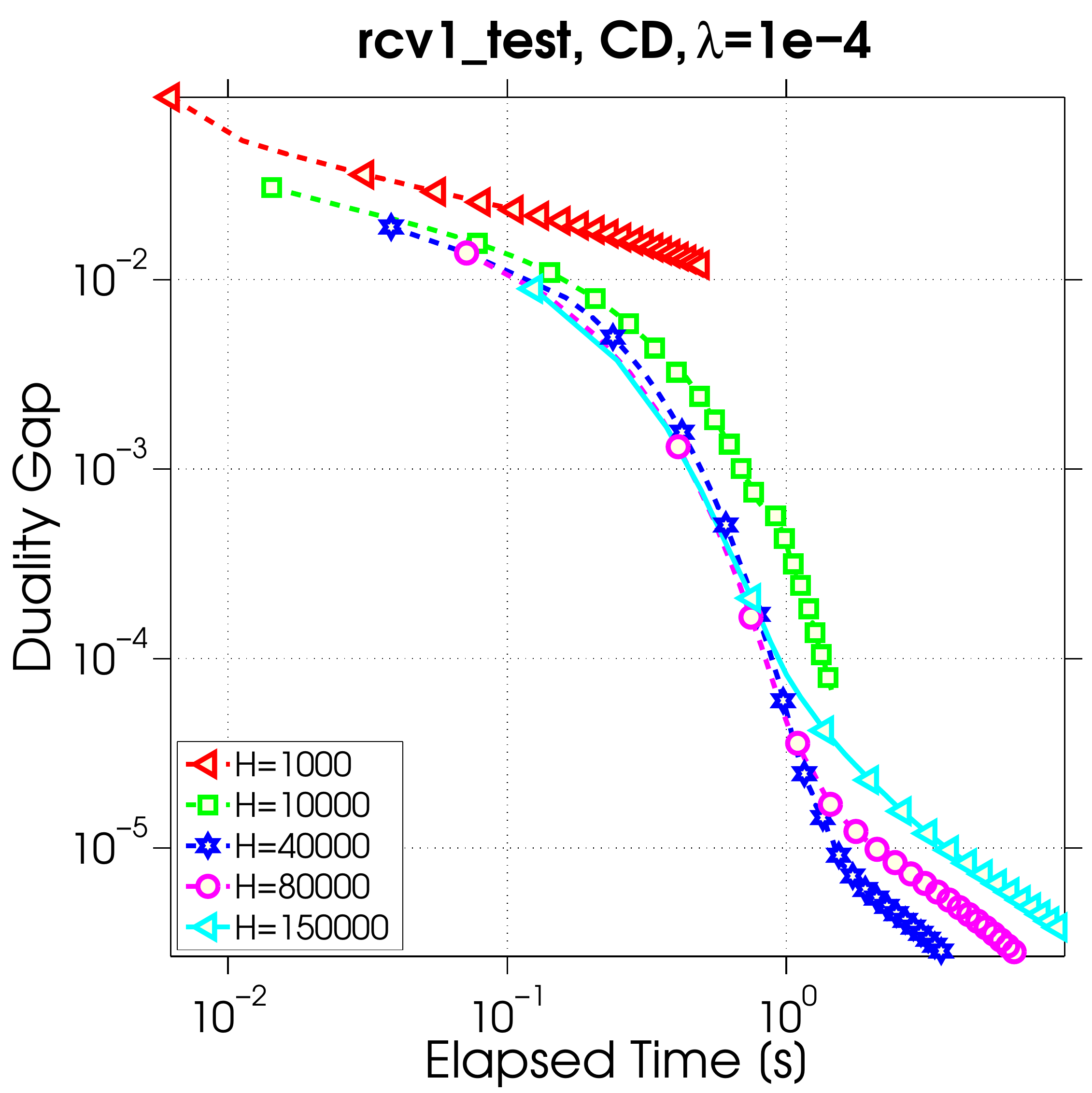}
\includegraphics[scale=.19]{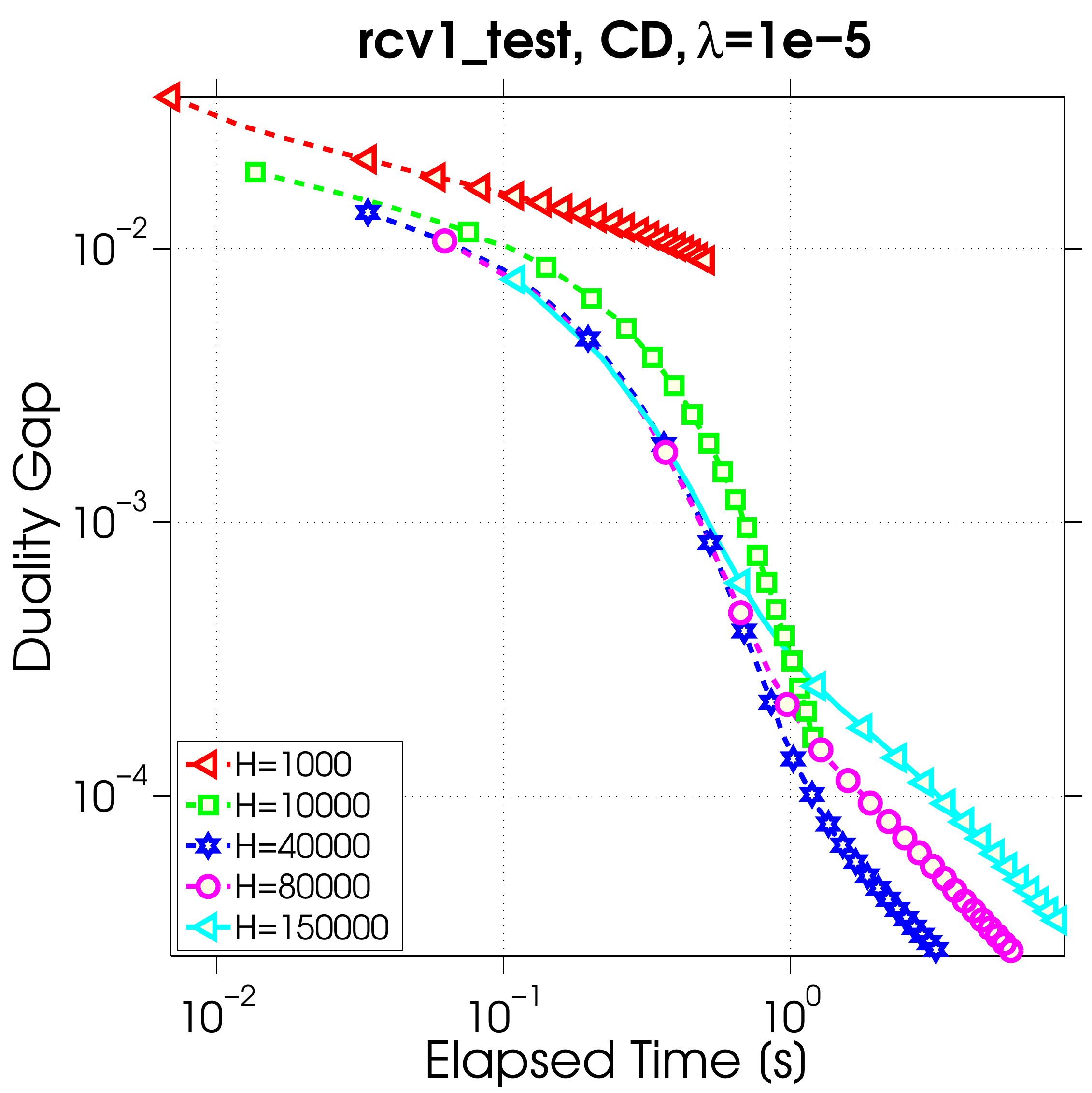}
\caption{Varying the number of iterations of {CD}  as a local solver.} 
\label{fig:dffsollbfgs}
\end{figure}

\begin{figure}[H]
\centering
\includegraphics[scale=.19]{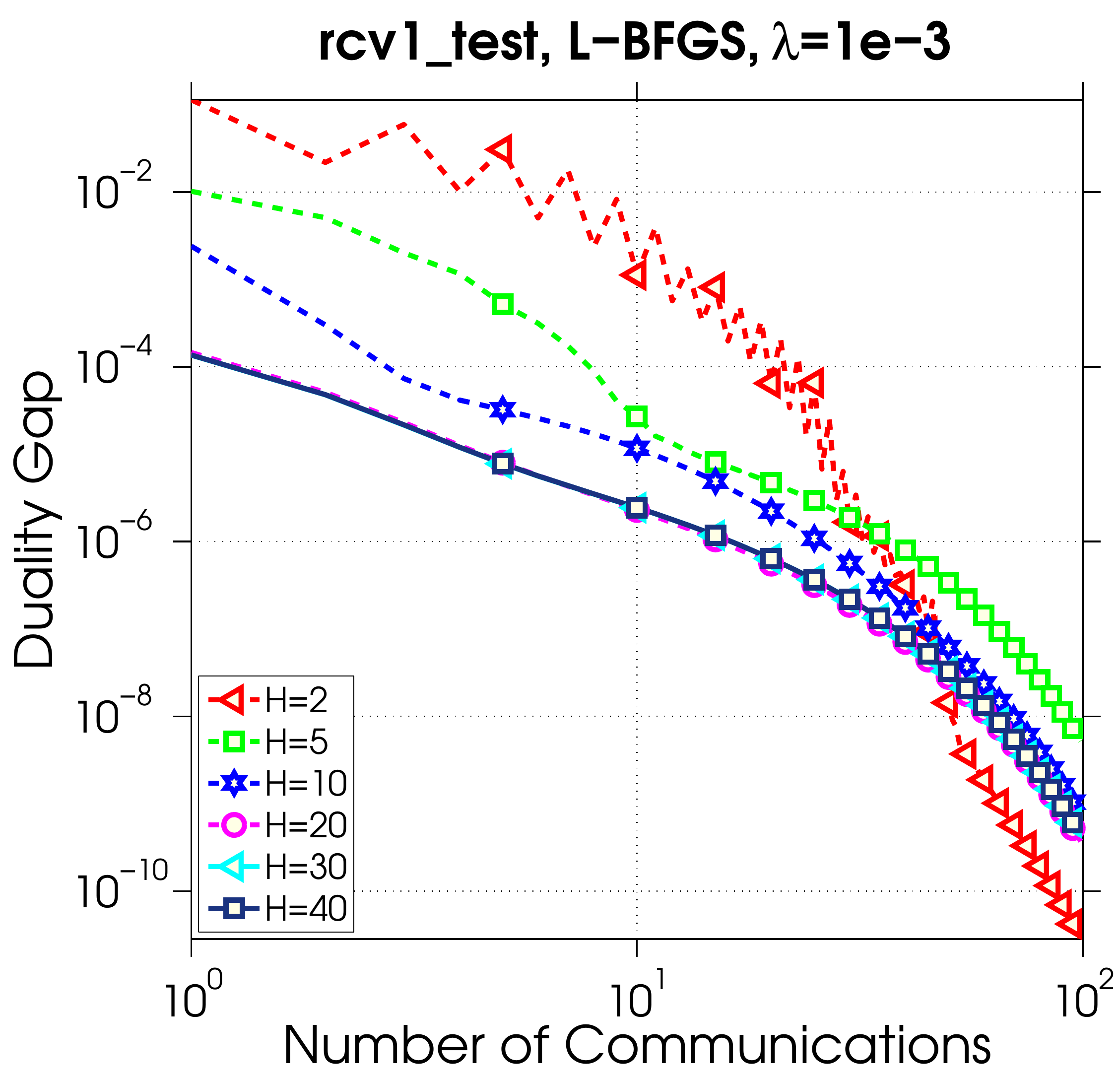}
\includegraphics[scale=.19]{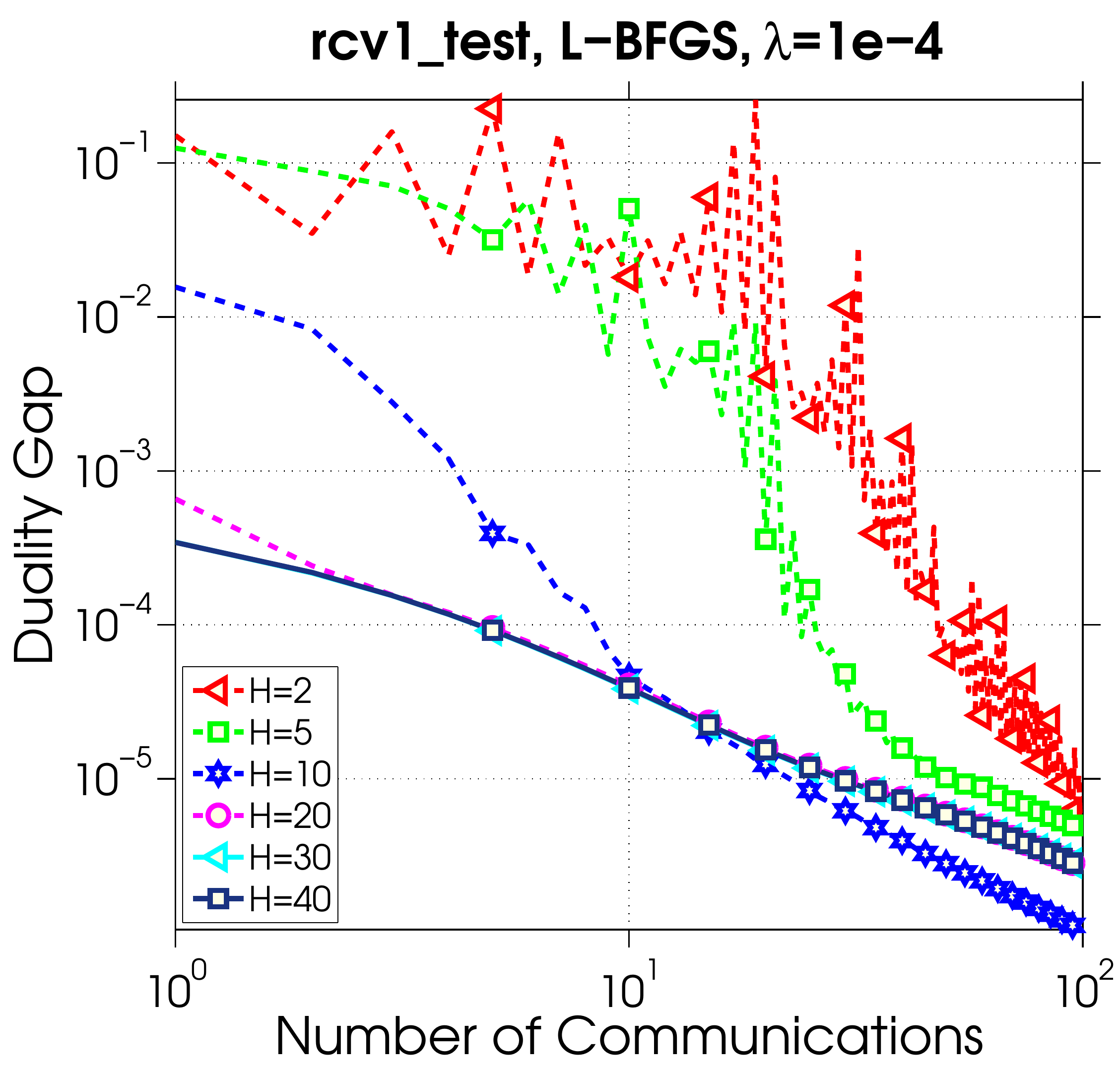}
\includegraphics[scale=.19]{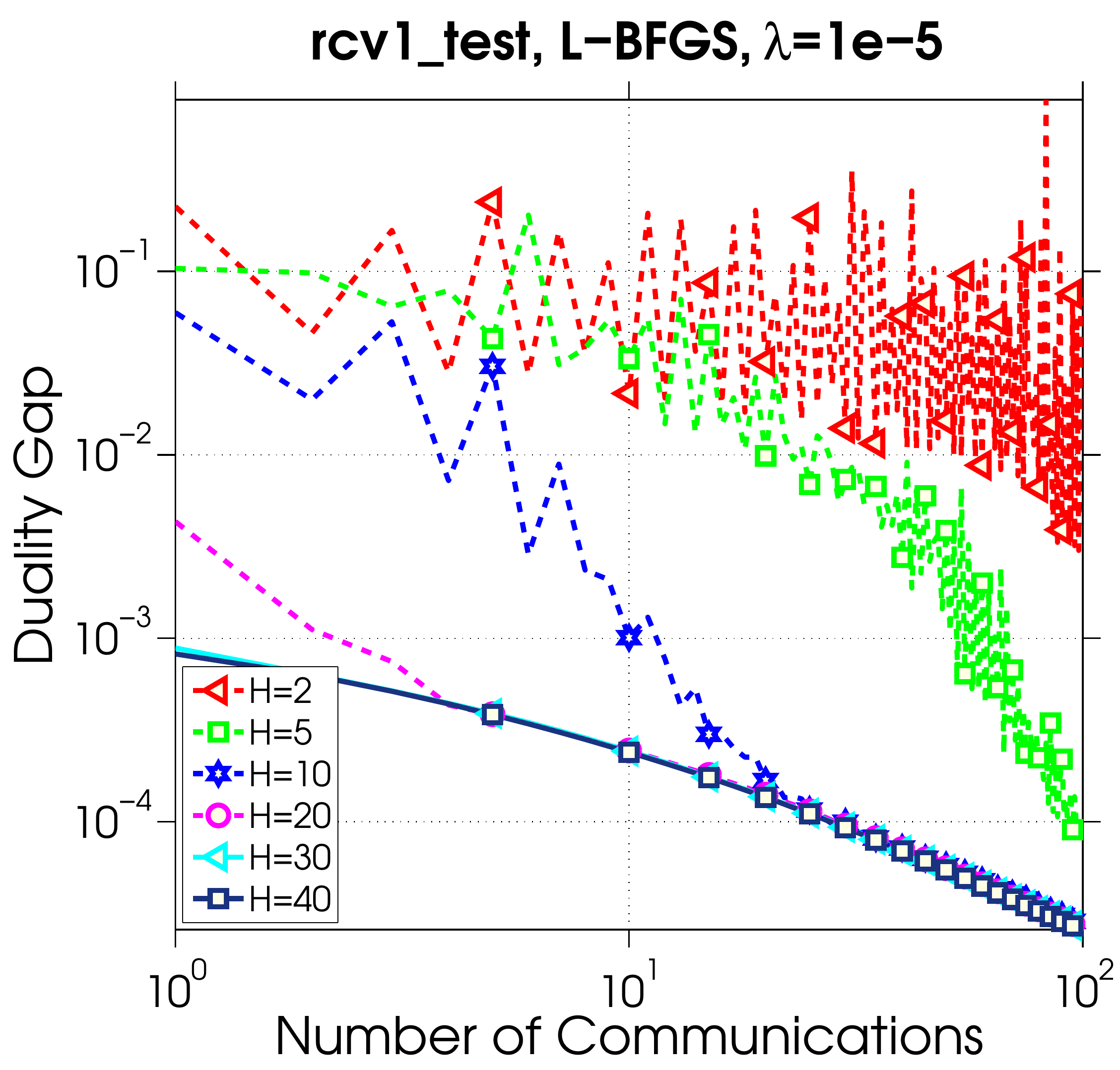}

\includegraphics[scale=.19]{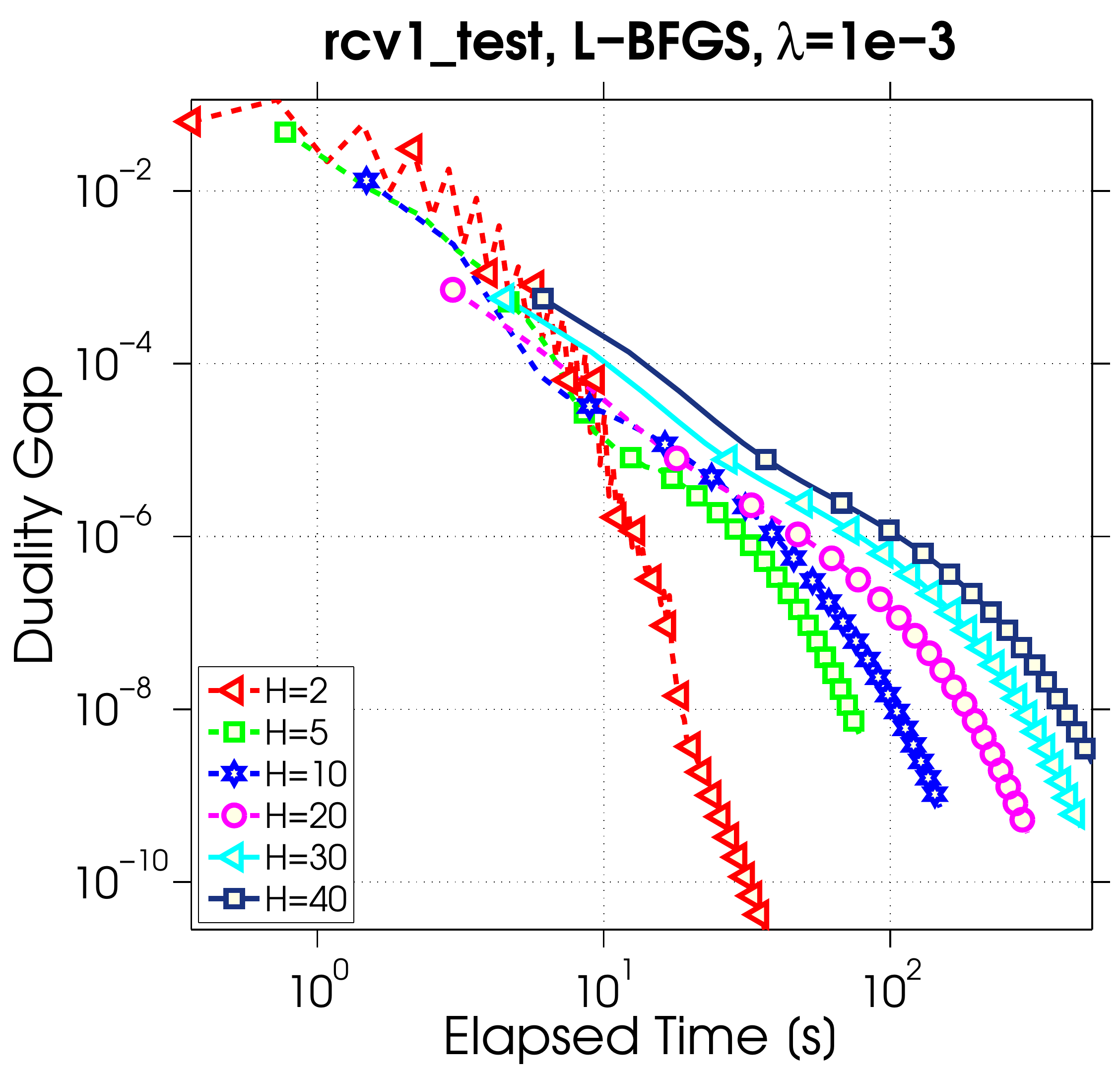}
\includegraphics[scale=.19]{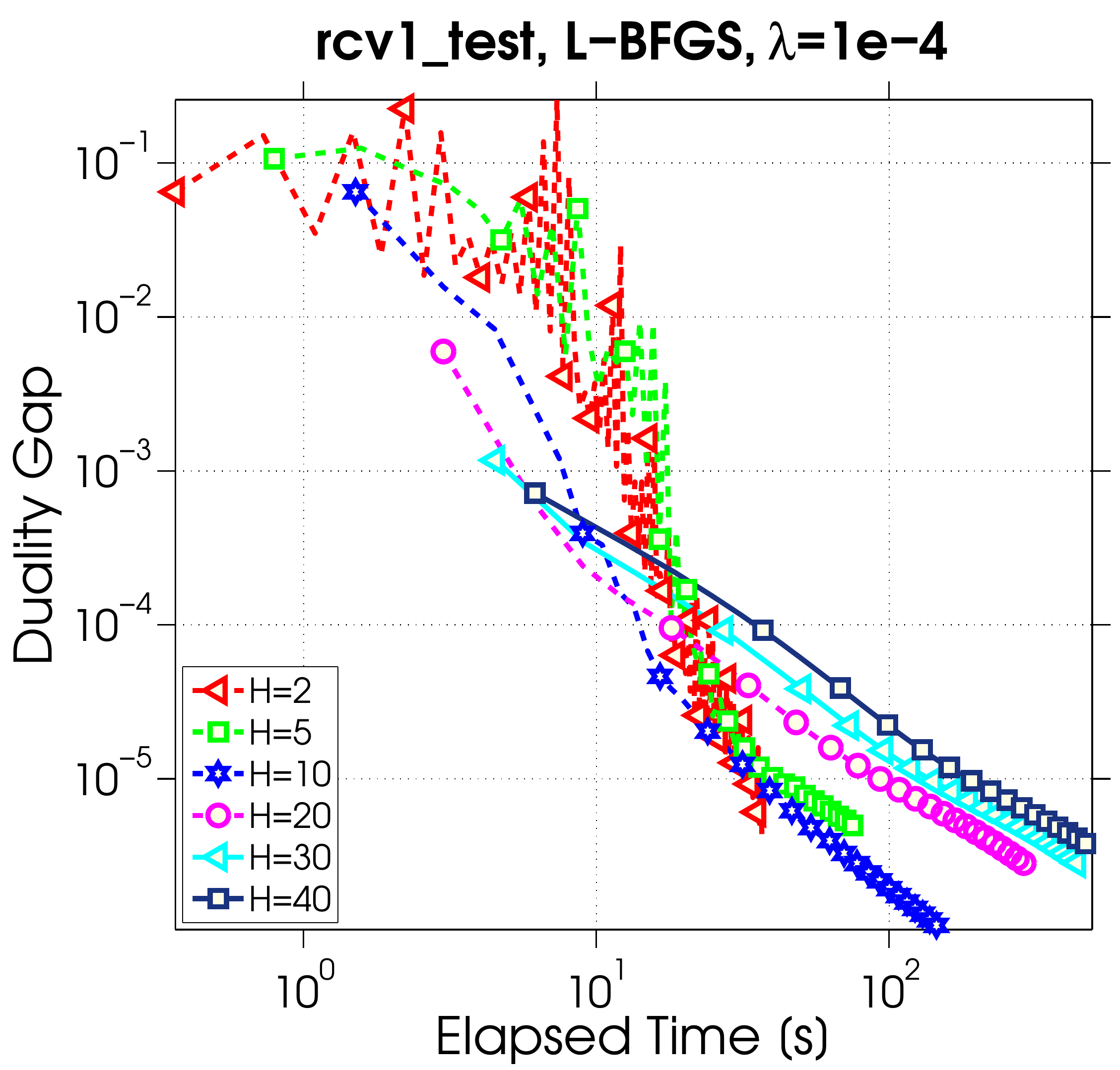}
\includegraphics[scale=.19]{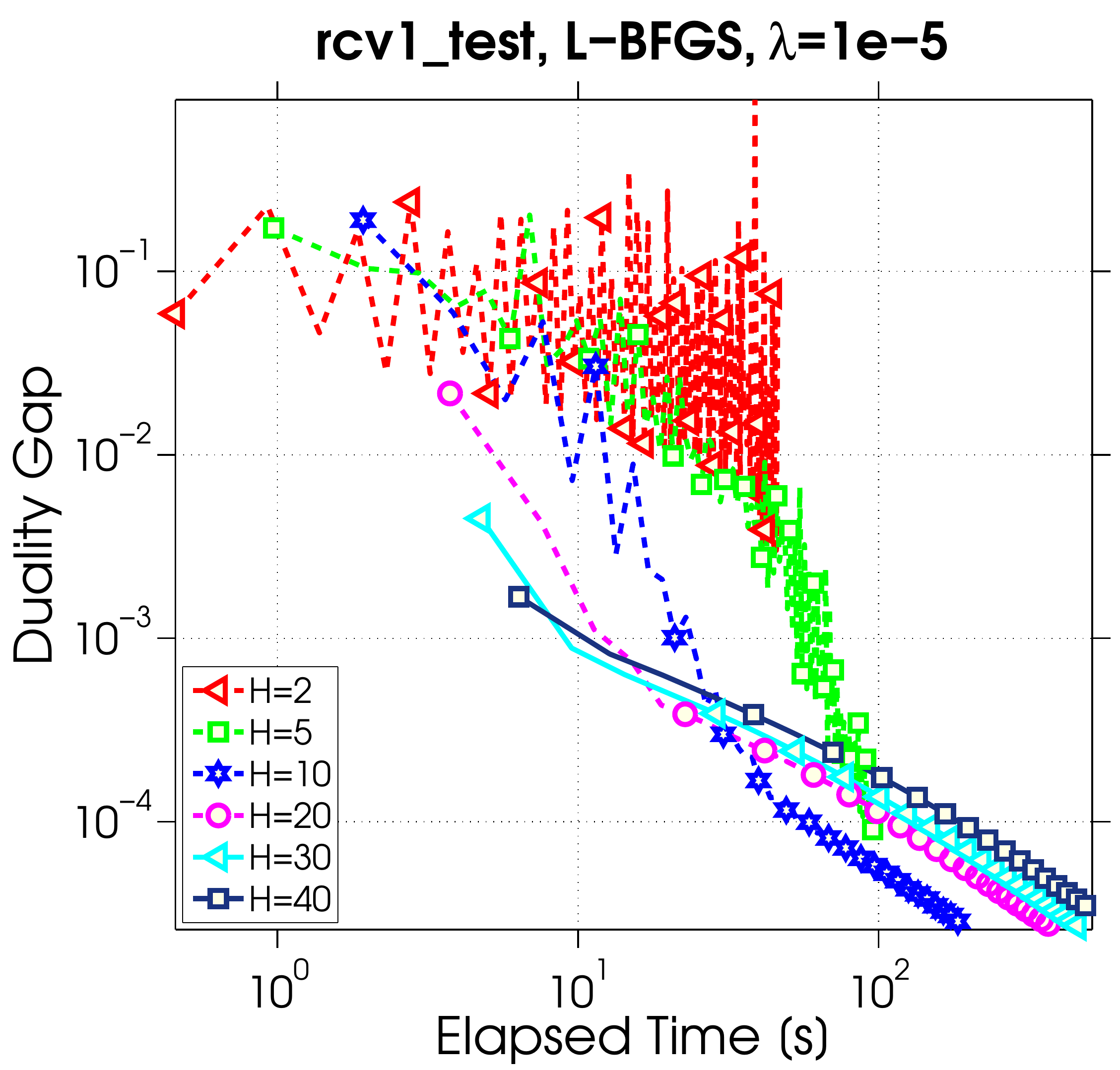}
\caption{Varying the number of iterations of L-BFGS as a local solver.} 
\label{fig:dffsollbfgsxxx}
\end{figure}

\subsection{Averaging vs.\ Adding the Local Updates}  
\label{sec:cocoa:adingVsAveraging}
In this section, we compare the performance of our algorithm using two different schemes for aggregating partial updates: adding vs. averaging. This corresponds to comparing two extremes for the parameter $\nu$, either $\nu:=\frac1K$ (averaging partial solutions) or $\nu:=1$ (adding partial solutions). As discussed in Section~\ref{sec:cocoa:result}, adding the local updates ($\aggpar=1$) will lead to less iterations {than averaging}, due to choosing different $\sigma'$ in the subproblems. We verify this experimentally by considering several of the local solvers listed in Table~ \ref{tbl:othersolvers}.

We show results for {the rcv1\_test} dataset, and we apply {the} quadratic loss function with three different choices for the regularization parameter, $\lambda$=$1e-03$, $1e-04$, and $1e-05$. The experiments in Figures~\ref{fig:soler2}--\ref{fig:soler7} indicate that the {``adding'' strategy} will  always lead to faster convergence than averaging, even though the difference is minimal when we apply a large number of iterations in the local solver. All the blue solid plots (adding) outperform the red dashed plots (averaging), which indicates the advantage of choosing $\aggpar= 1$. Another note here is that for smaller $\lambda$, we will have to spend more iterations to get the same accuracy{, because the original objective function \eqref{eq:primal} is less strongly convex.}

\begin{figure}[H]
\centering
\includegraphics[scale=.19]{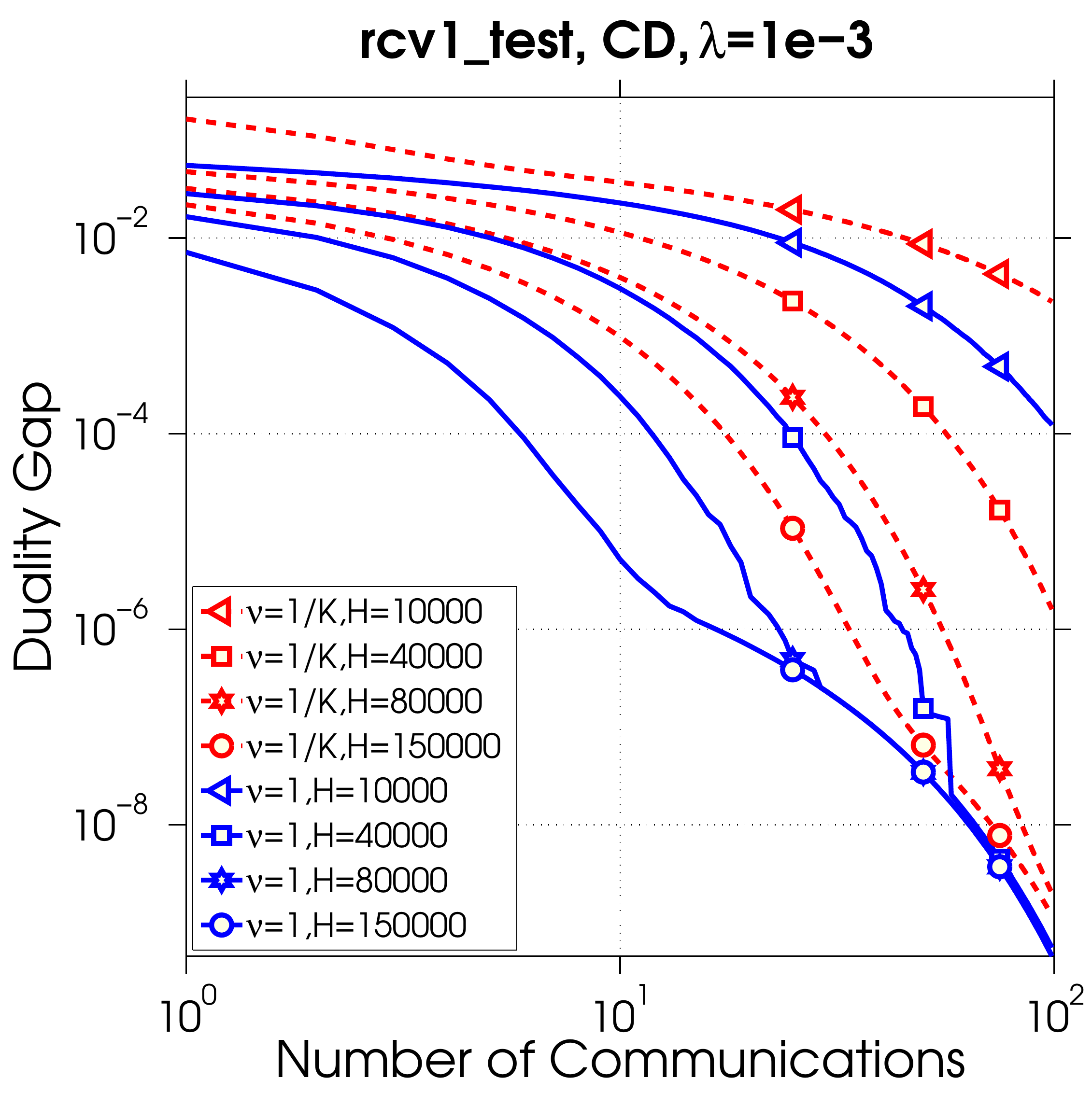}
\includegraphics[scale=.19]{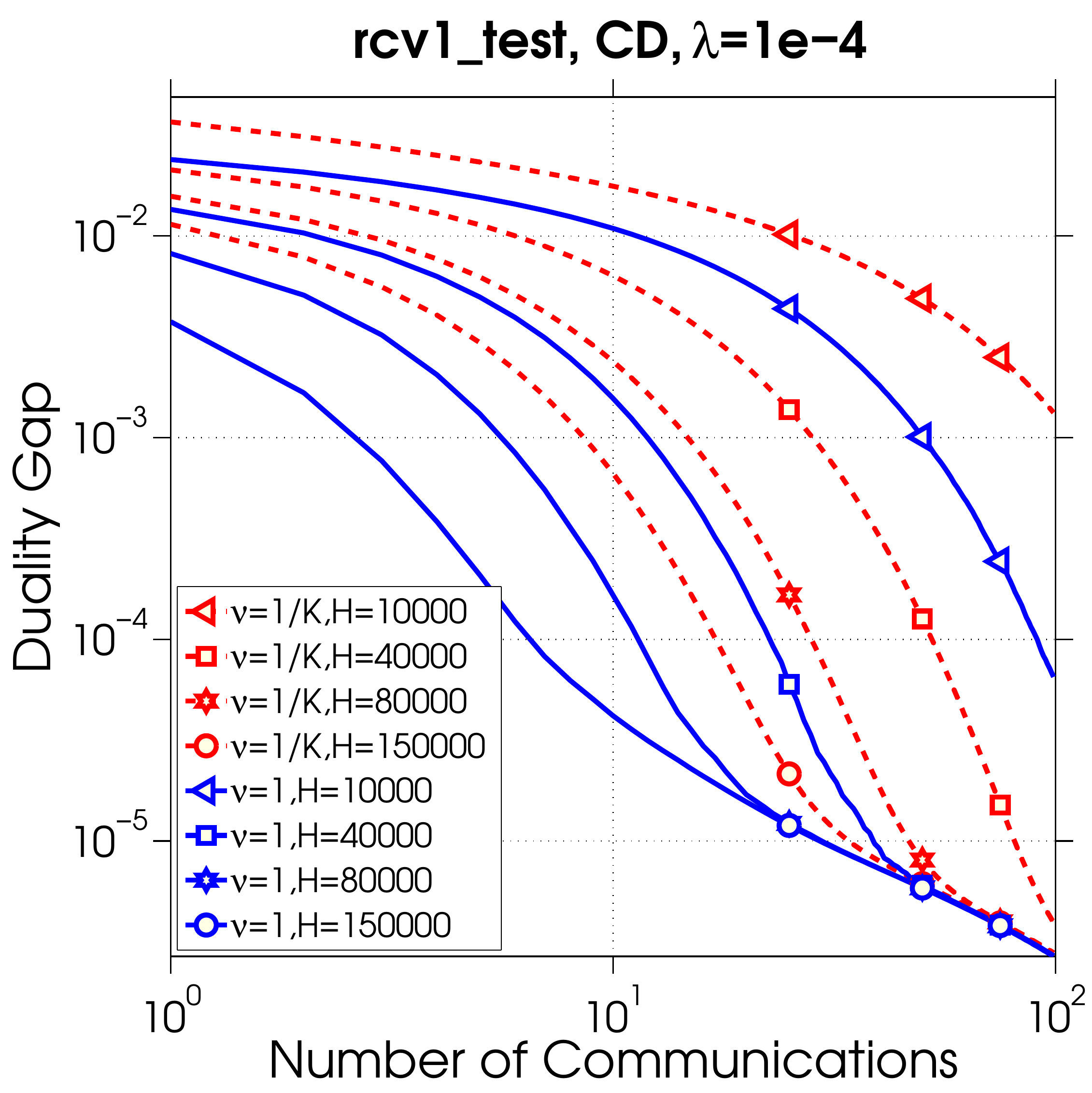}
\includegraphics[scale=.19]{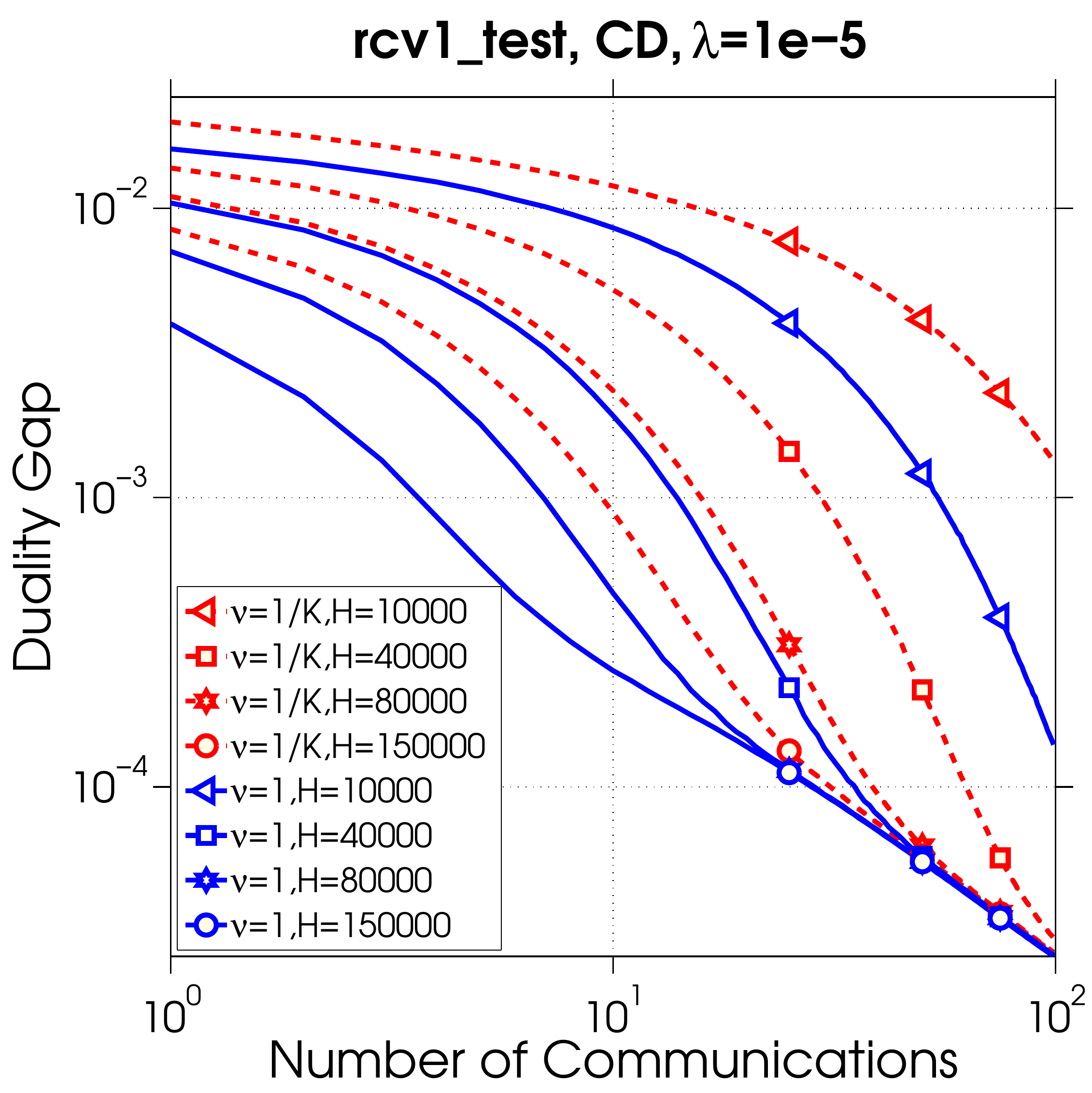}

\includegraphics[scale=.19]{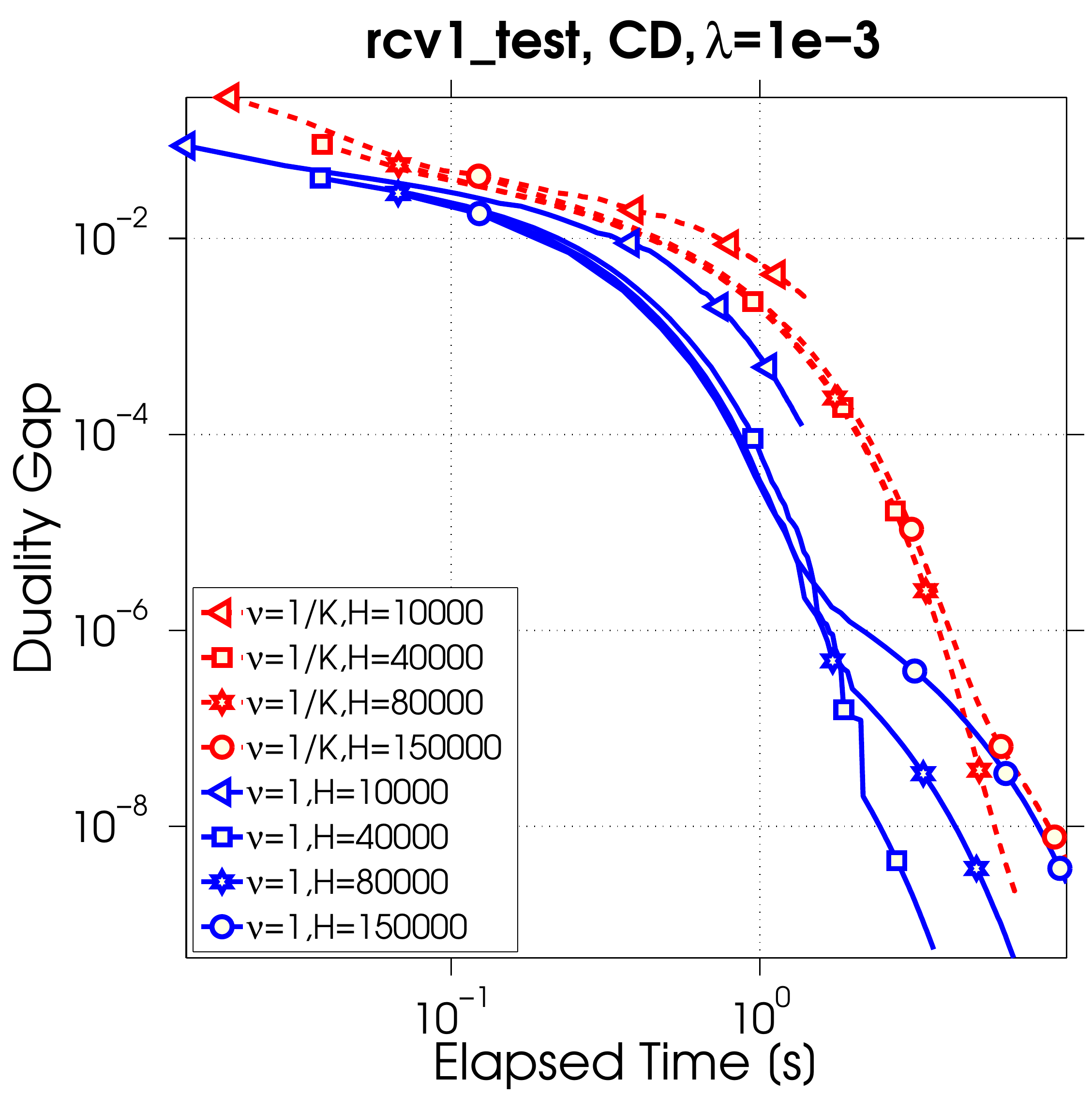}
\includegraphics[scale=.19]{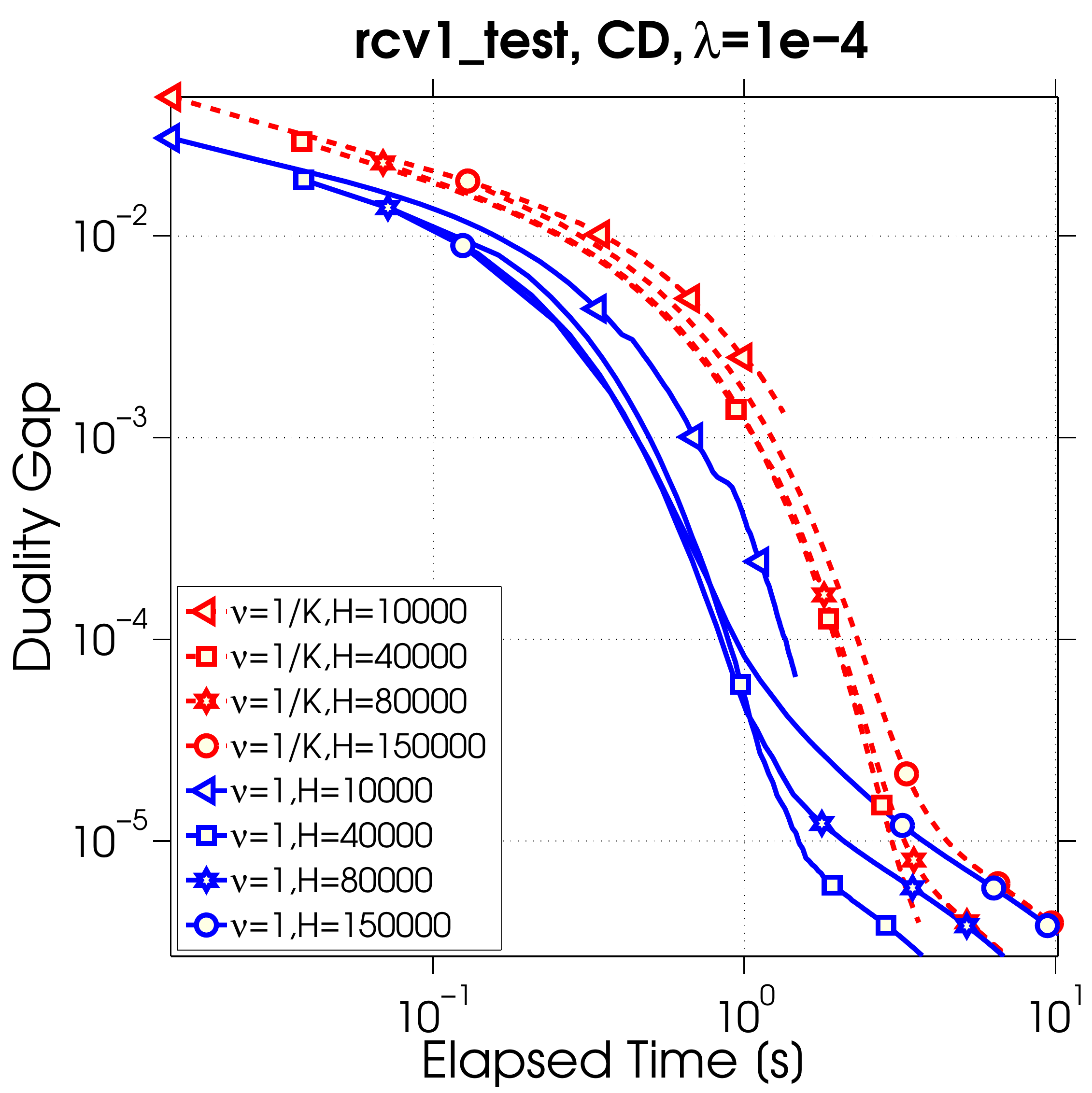}
\includegraphics[scale=.19]{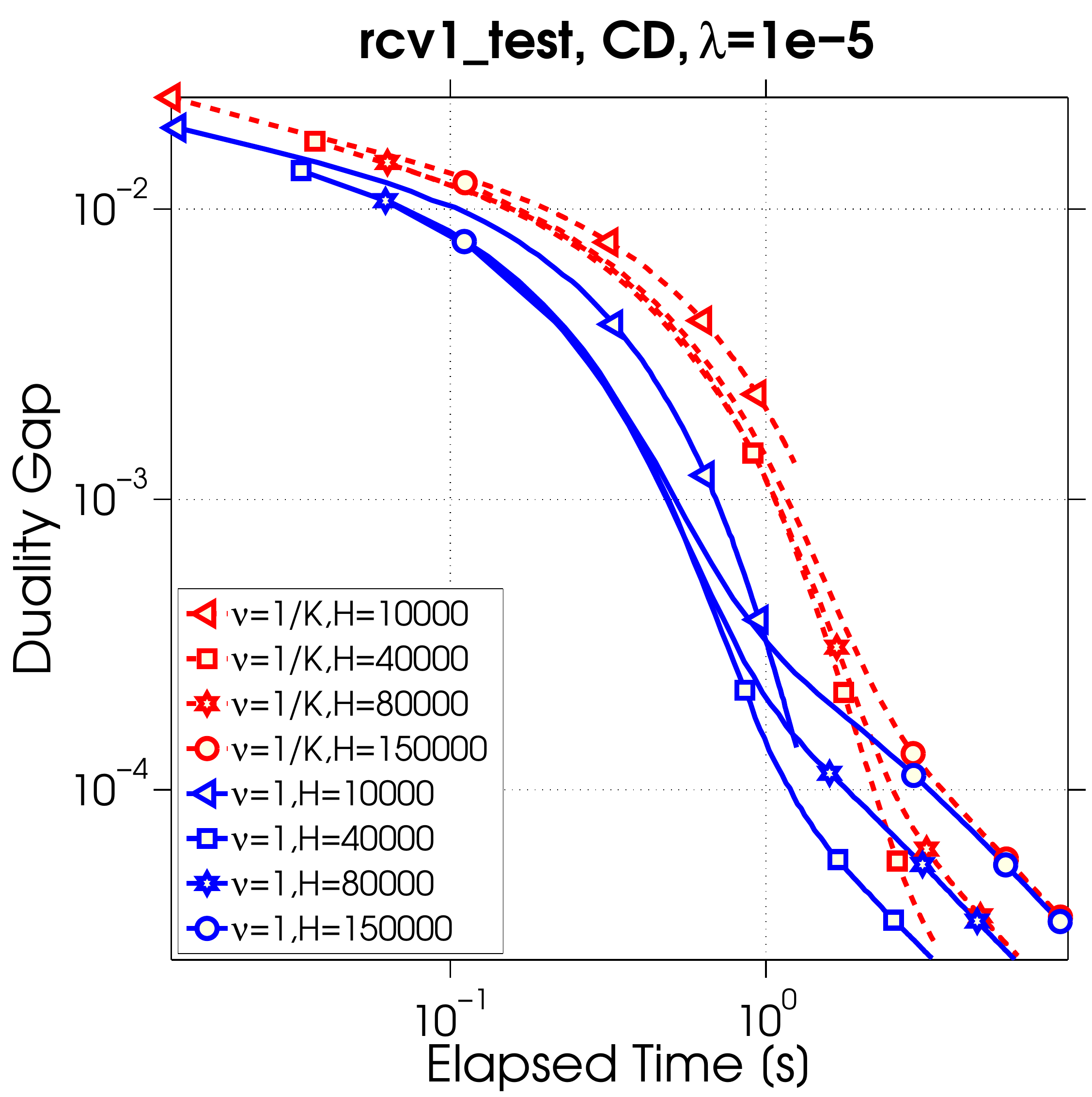}
\caption{Adding (blue solid line) vs Averaging (red dashed line) for {CD} as the local solver. } 
\label{fig:solver1}
\end{figure}

\begin{figure}[H]
\centering
\includegraphics[scale=.19]{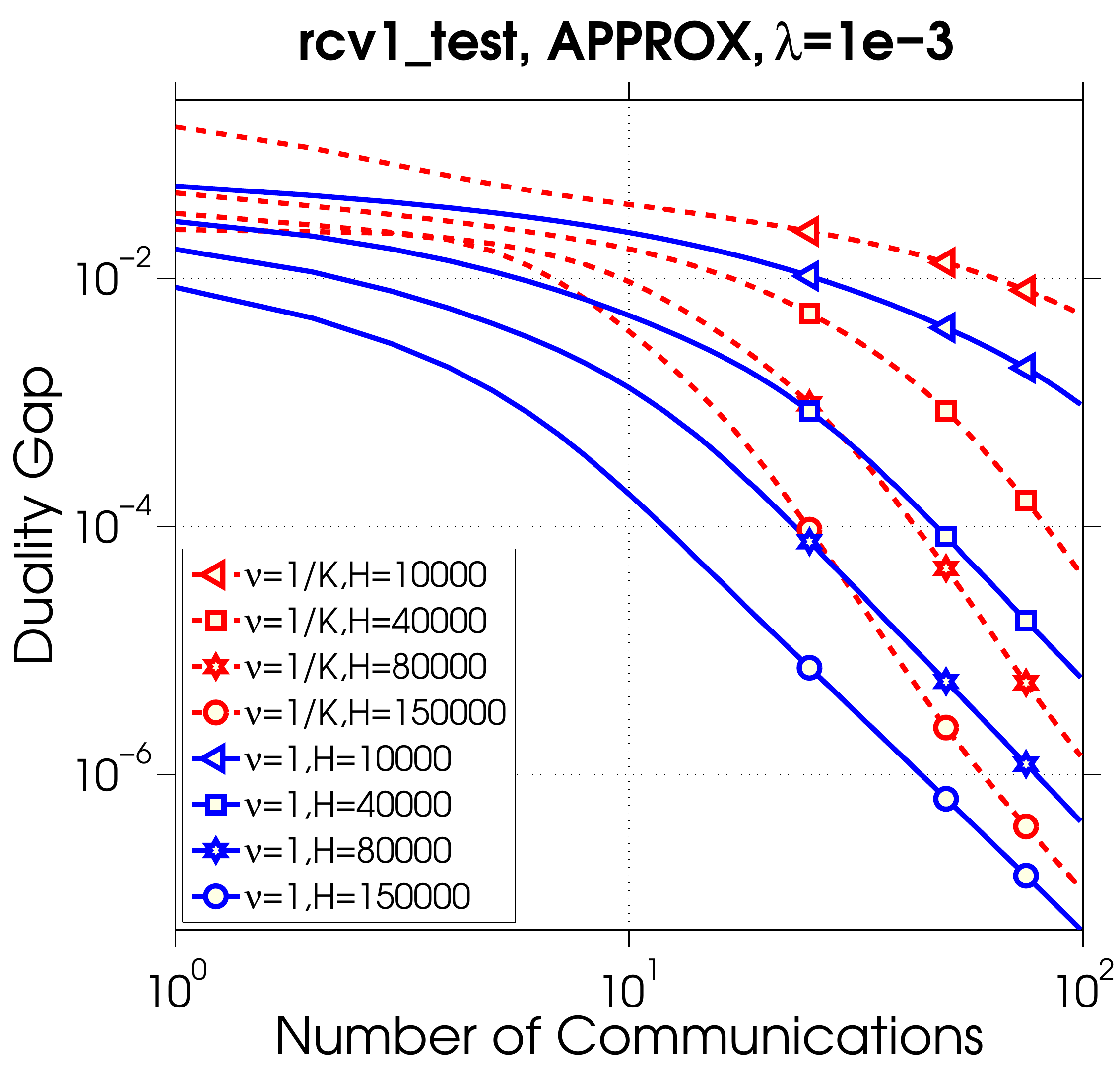}
\includegraphics[scale=.19]{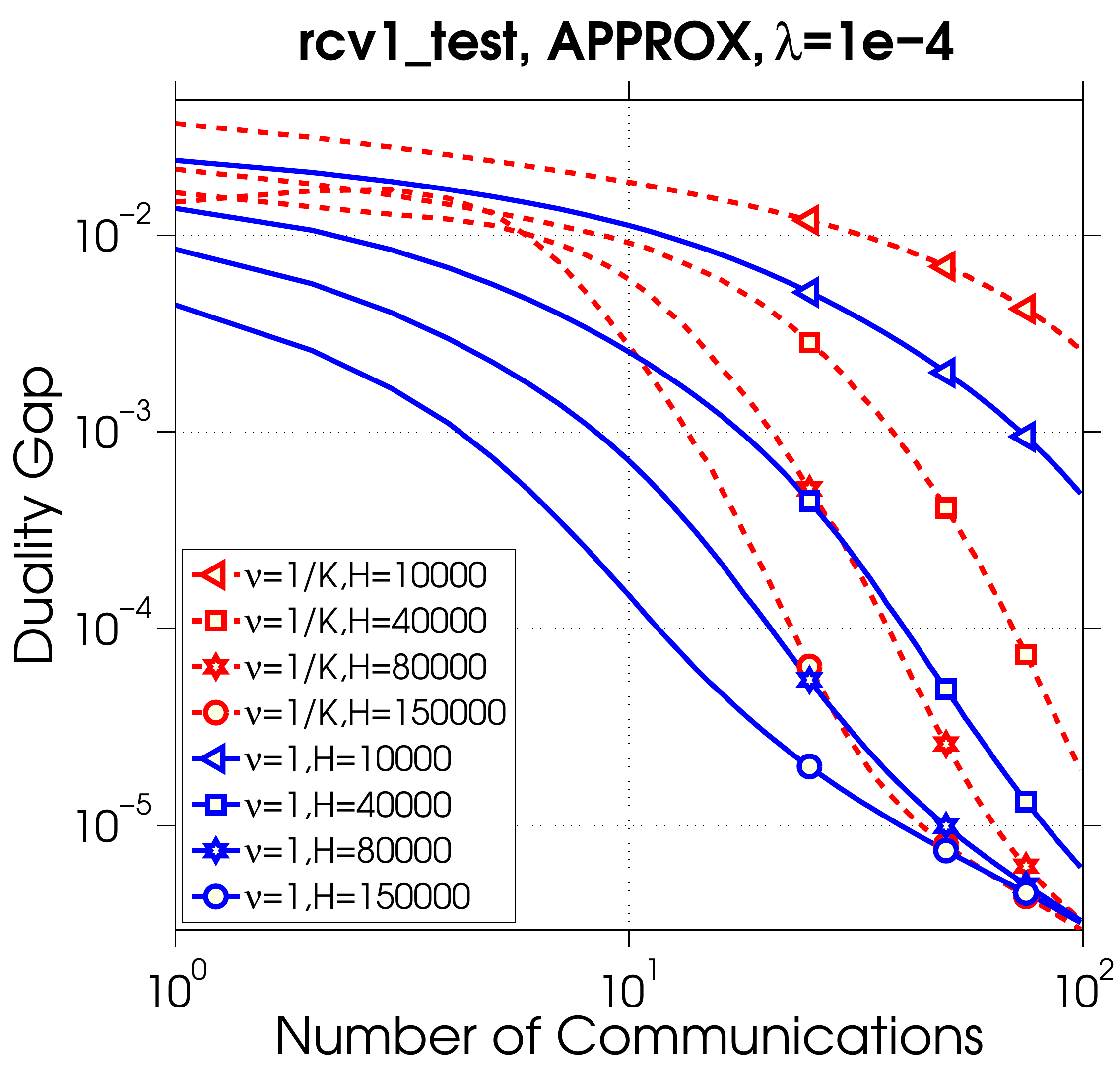}
\includegraphics[scale=.19]{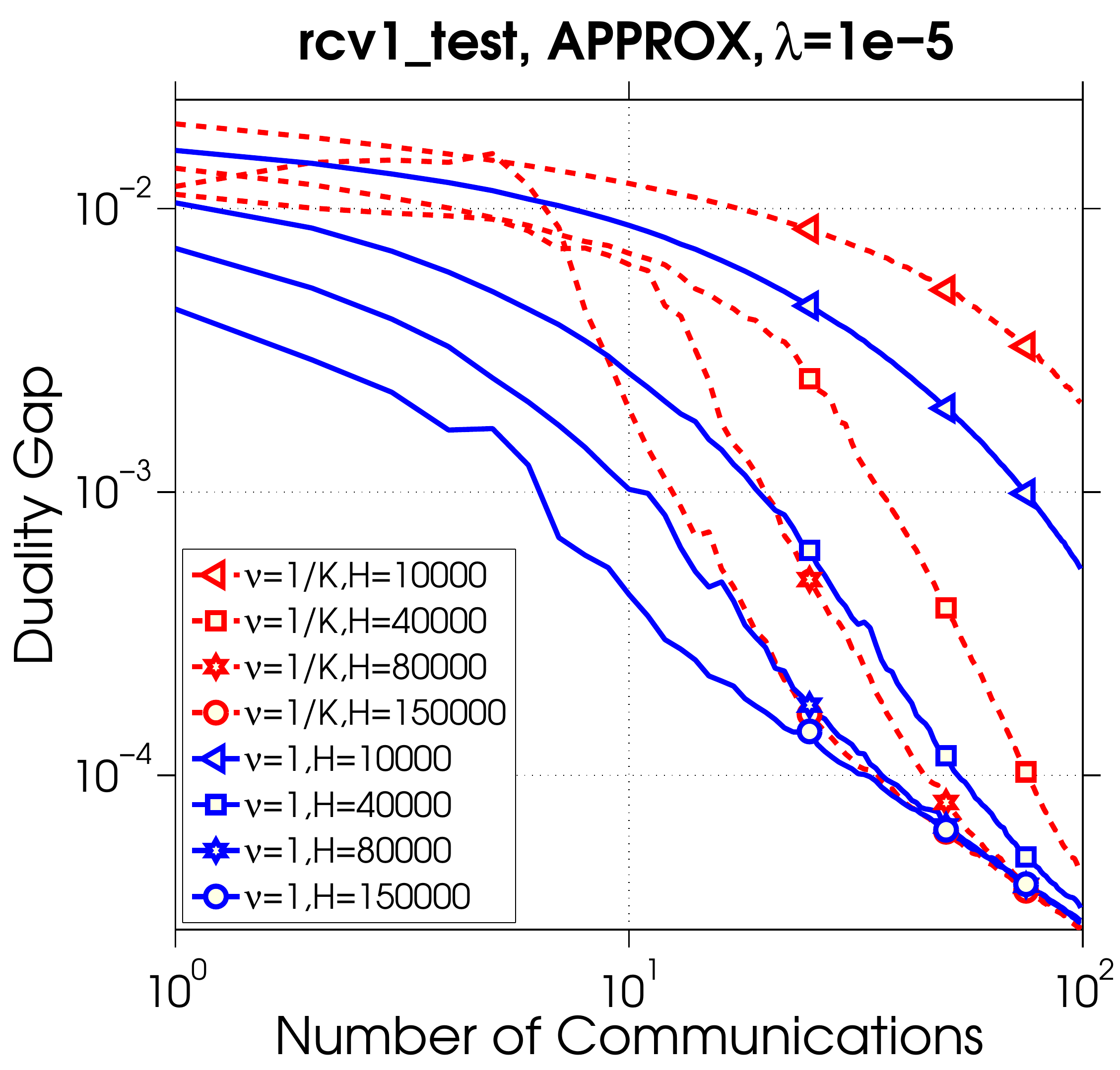}

\includegraphics[scale=.19]{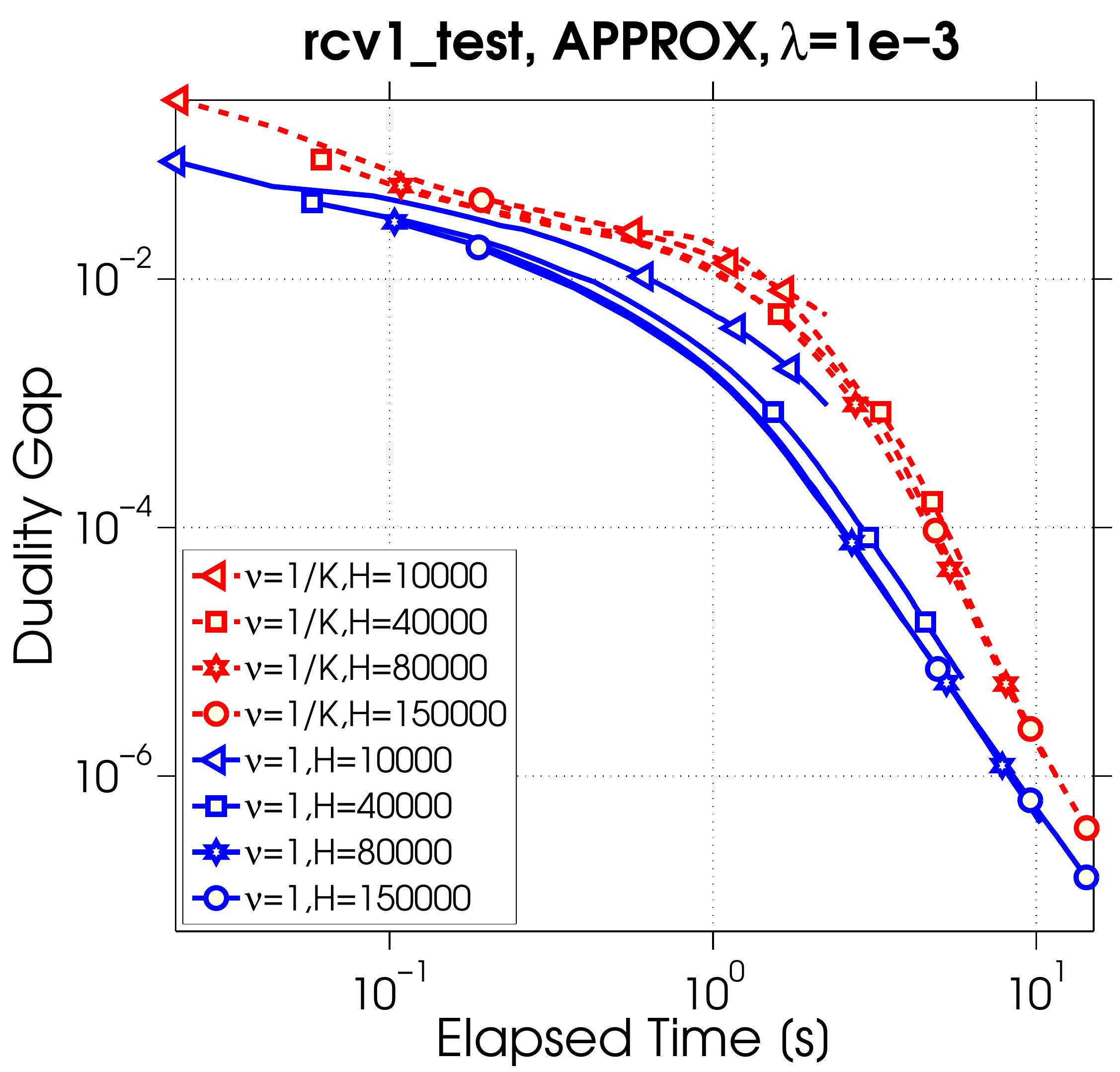}
\includegraphics[scale=.19]{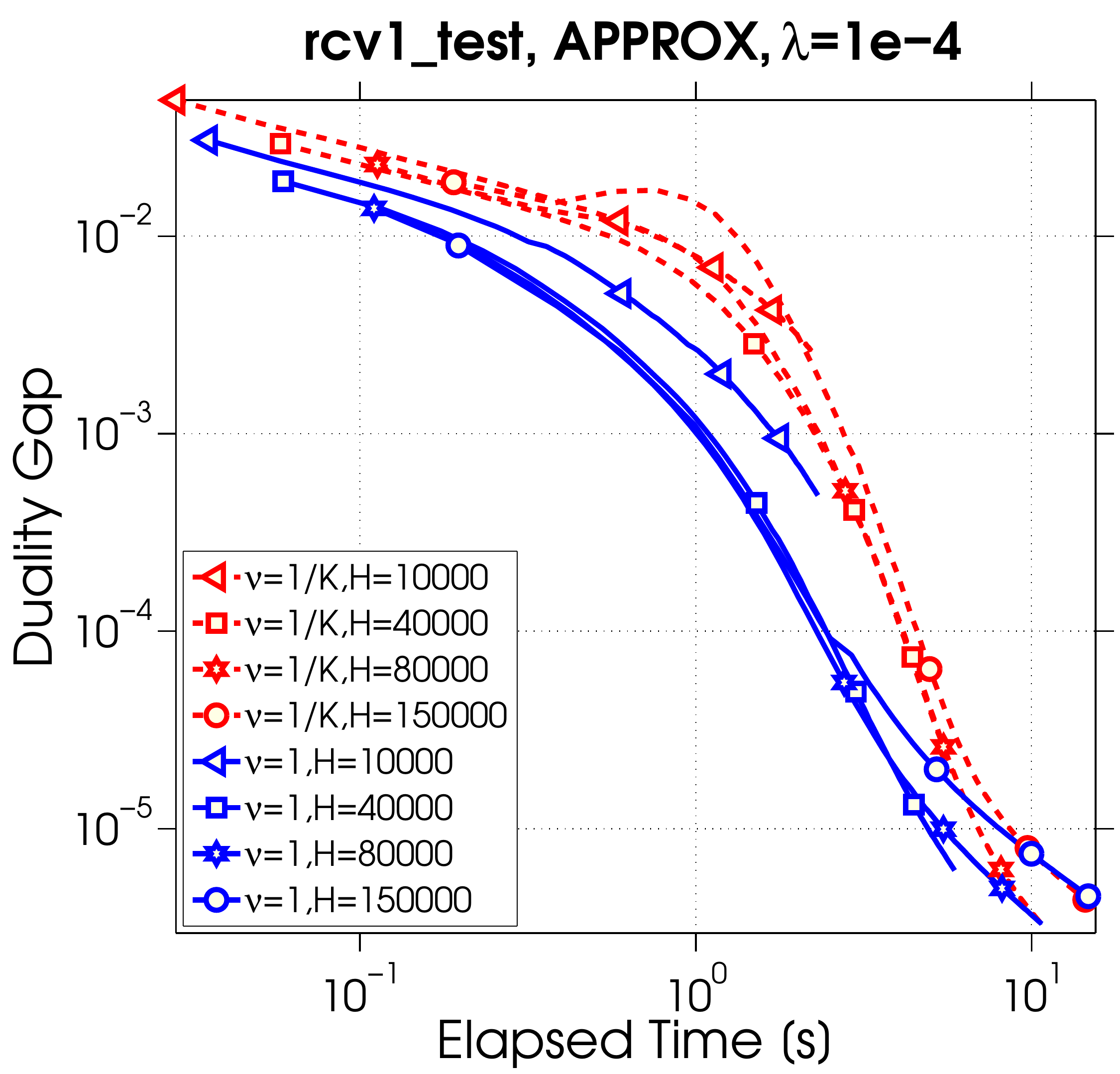}
\includegraphics[scale=.19]{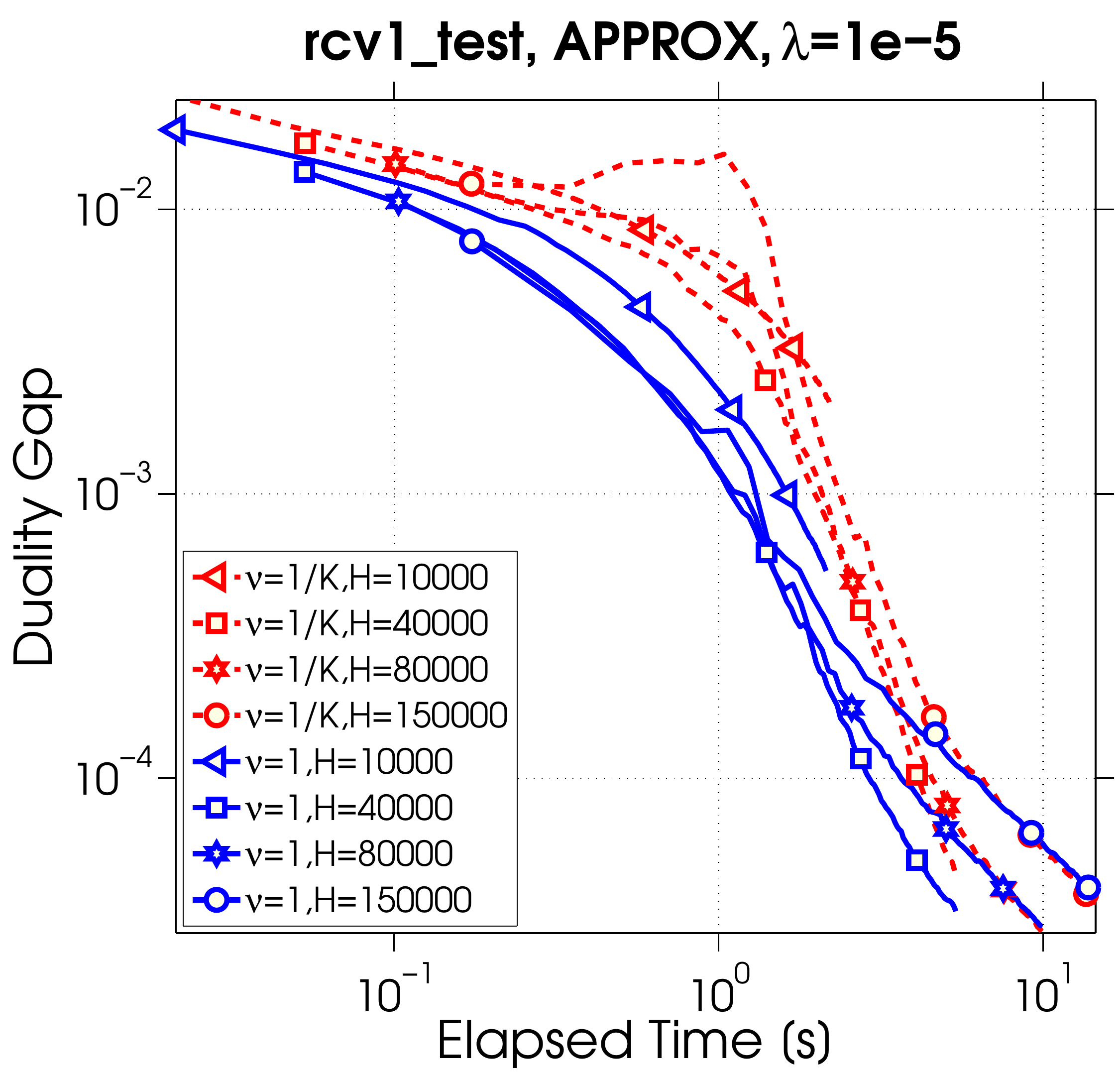}
\caption{Adding (blue solid line) vs Averaging (red dashed line) for APPROX as the local solver.} 
\label{fig:soler2}
\end{figure}

\begin{figure}[H]
\centering
\includegraphics[scale=.19]{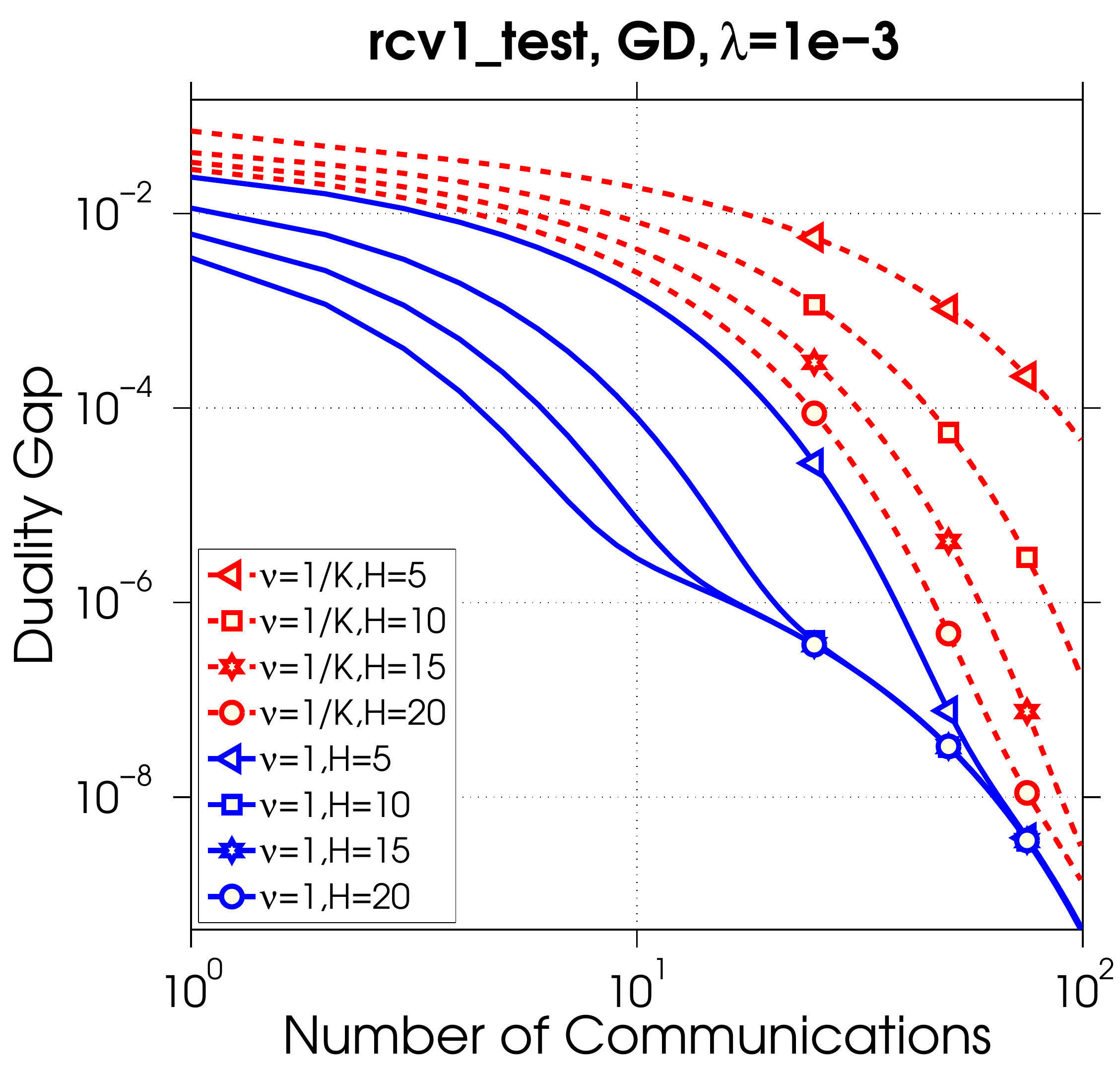}
\includegraphics[scale=.19]{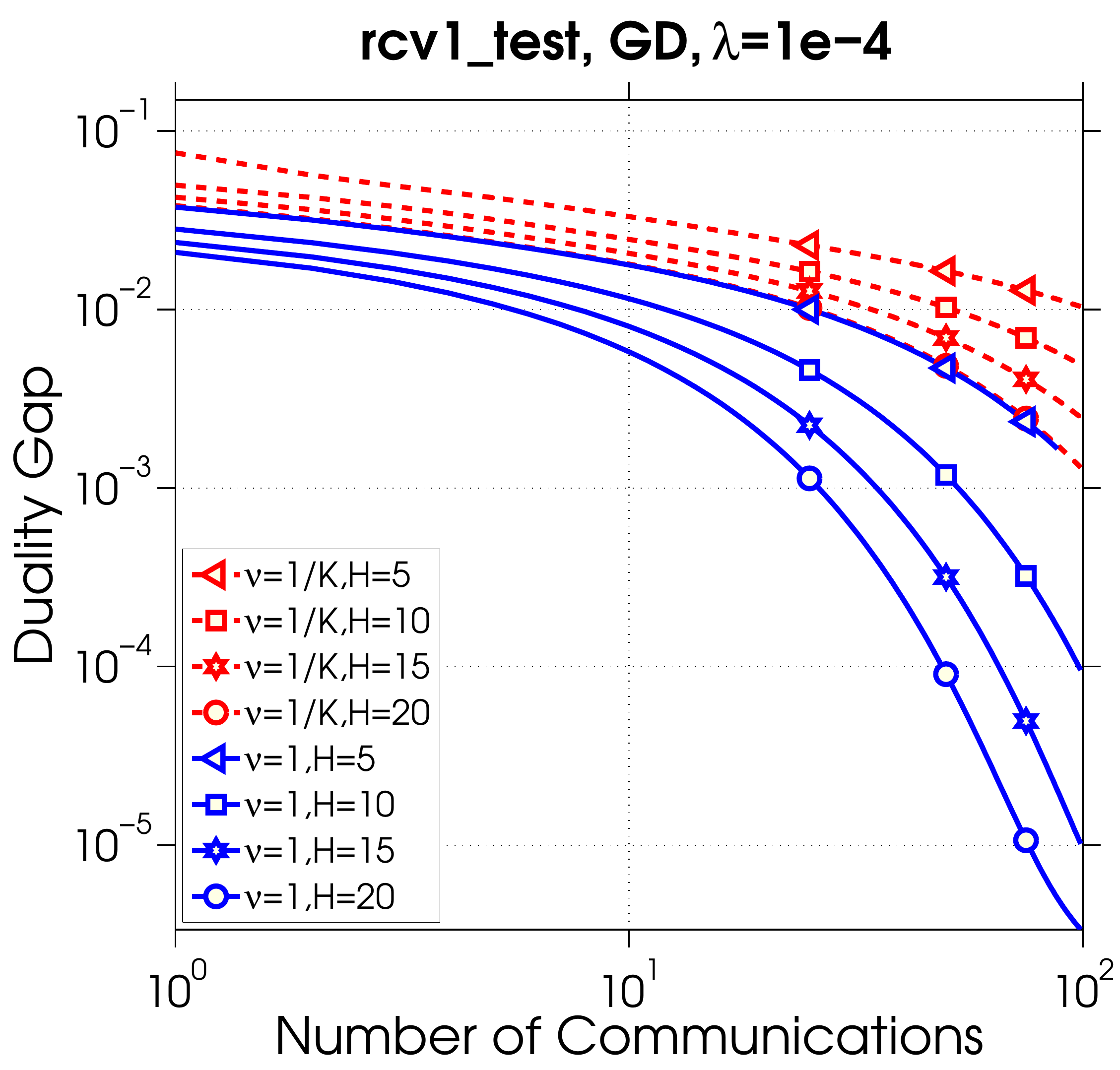}
\includegraphics[scale=.19]{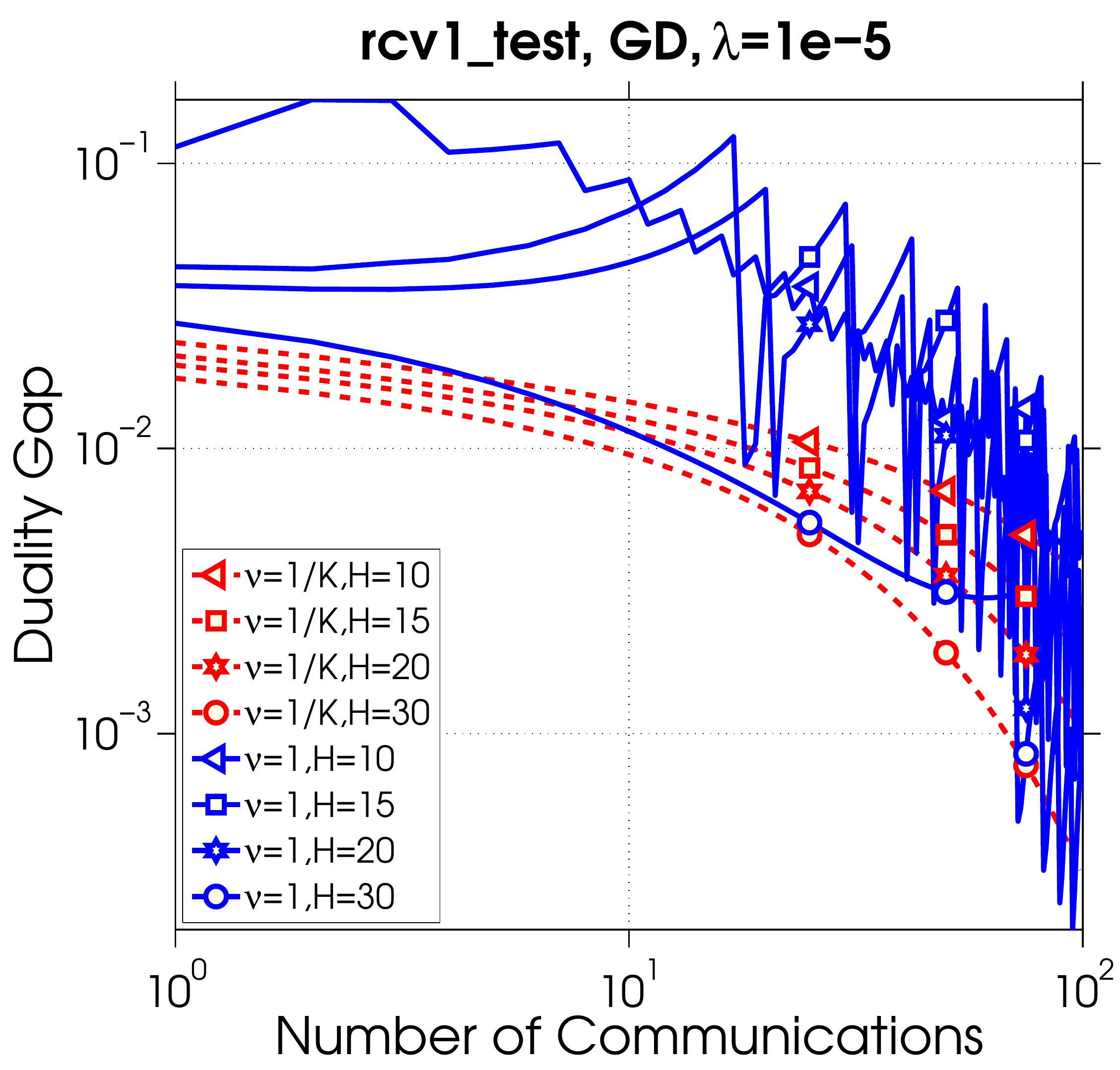}

\includegraphics[scale=.19]{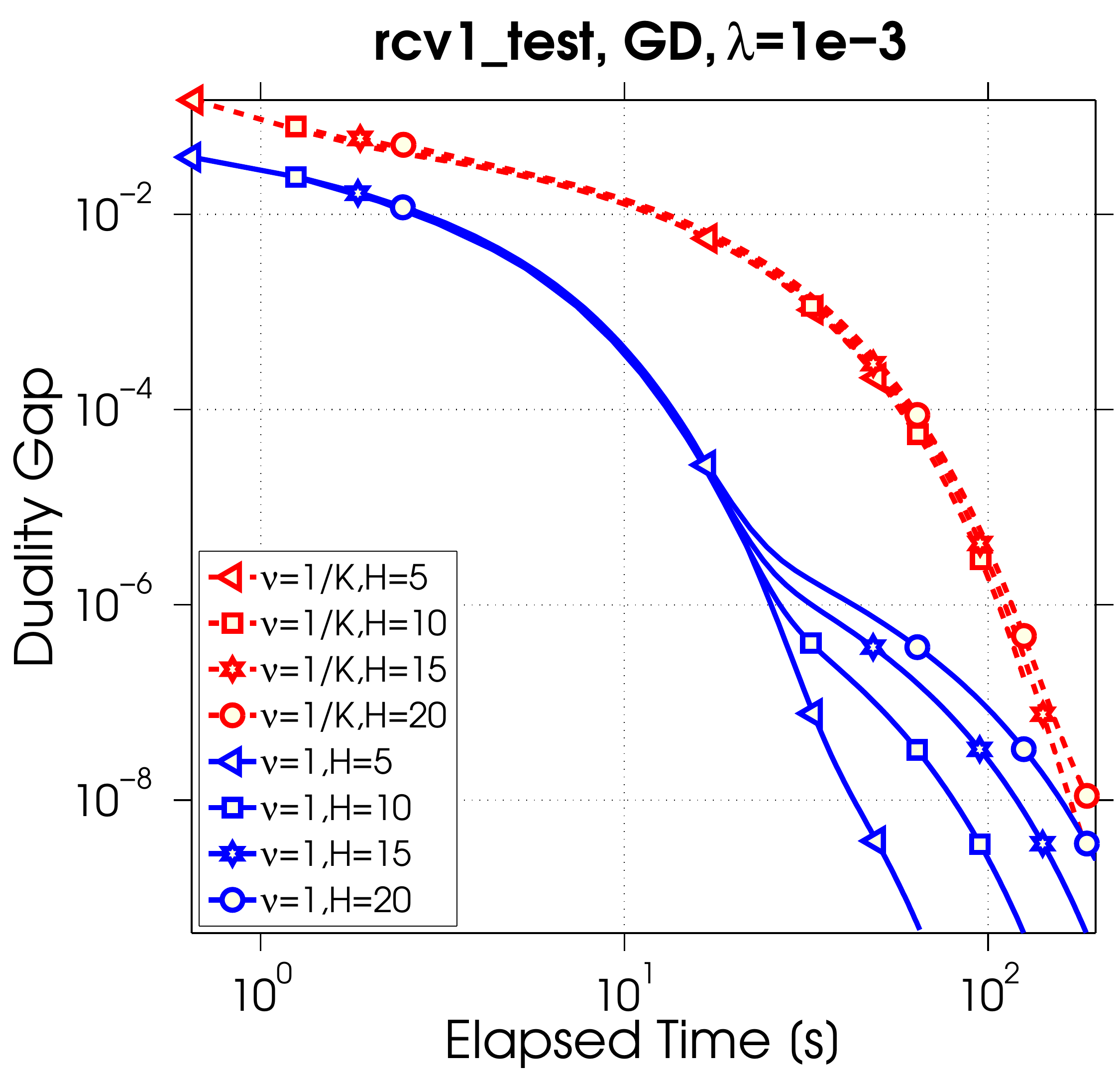}
\includegraphics[scale=.19]{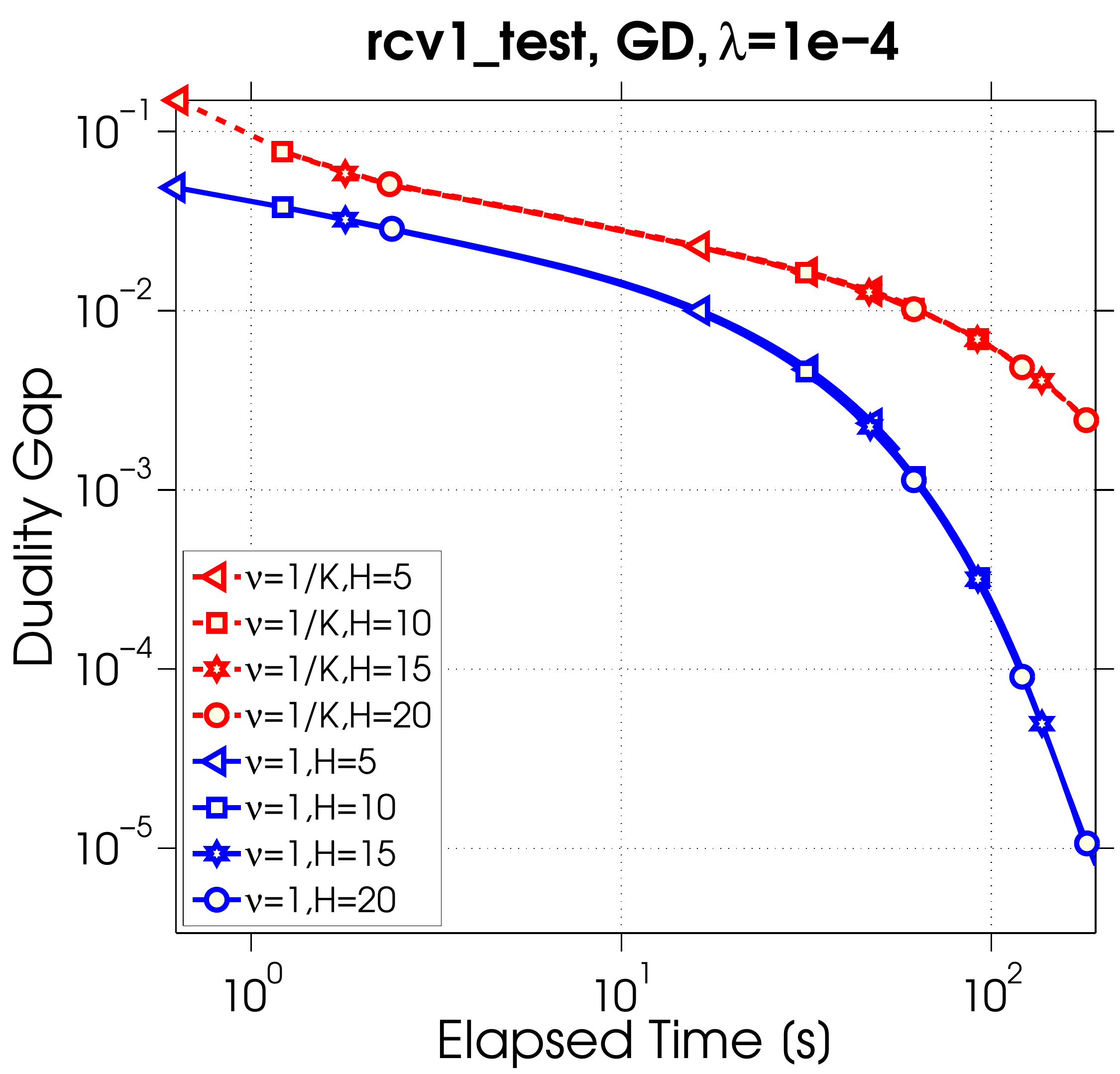}
\includegraphics[scale=.19]{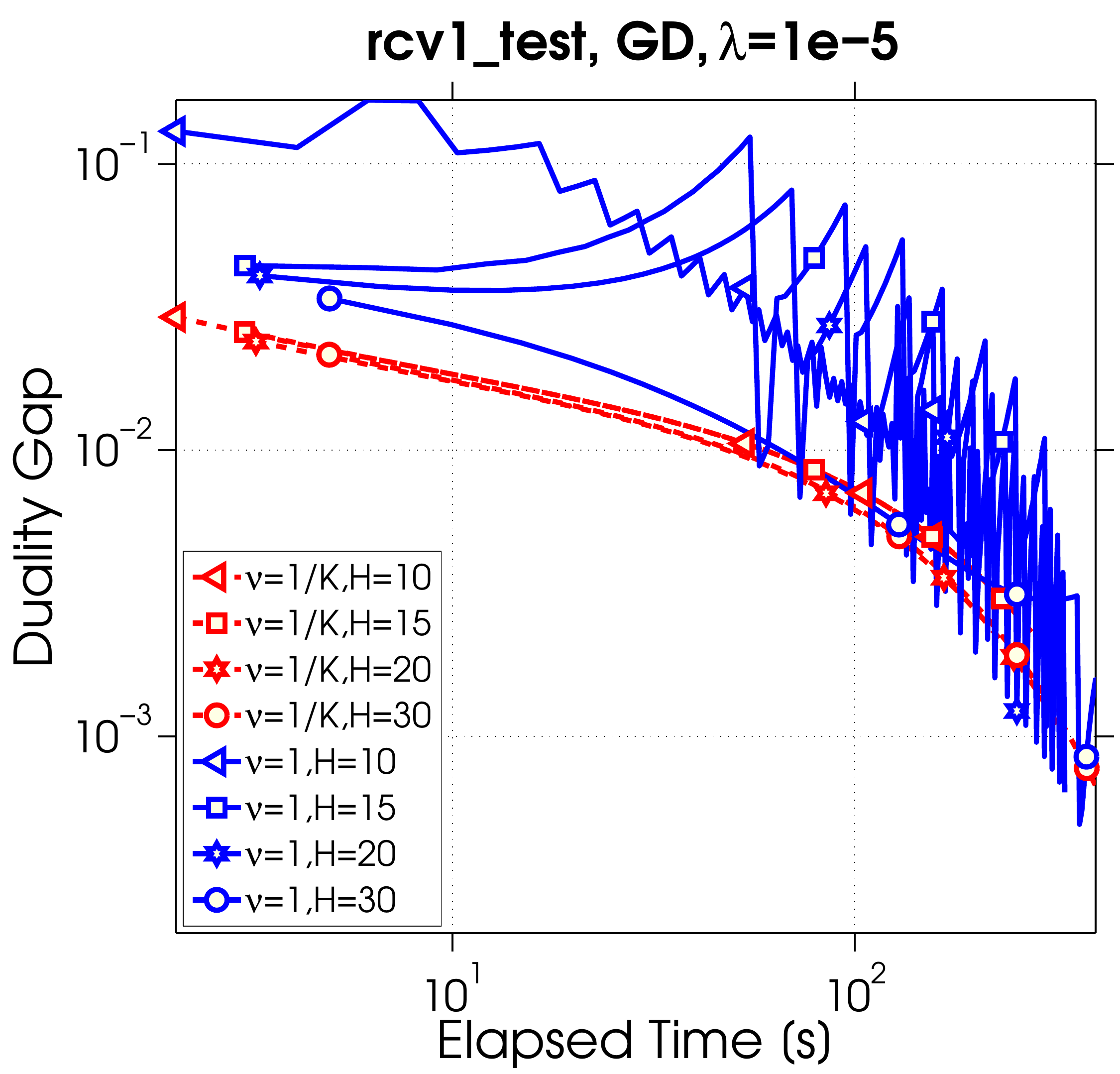}
\caption{Adding (blue solid line) vs Averaging (red dashed line) for Gradient Descent as the local solver.} 
\label{fig:soler3}
\end{figure}

\begin{figure}[H]
\centering
\includegraphics[scale=.19]{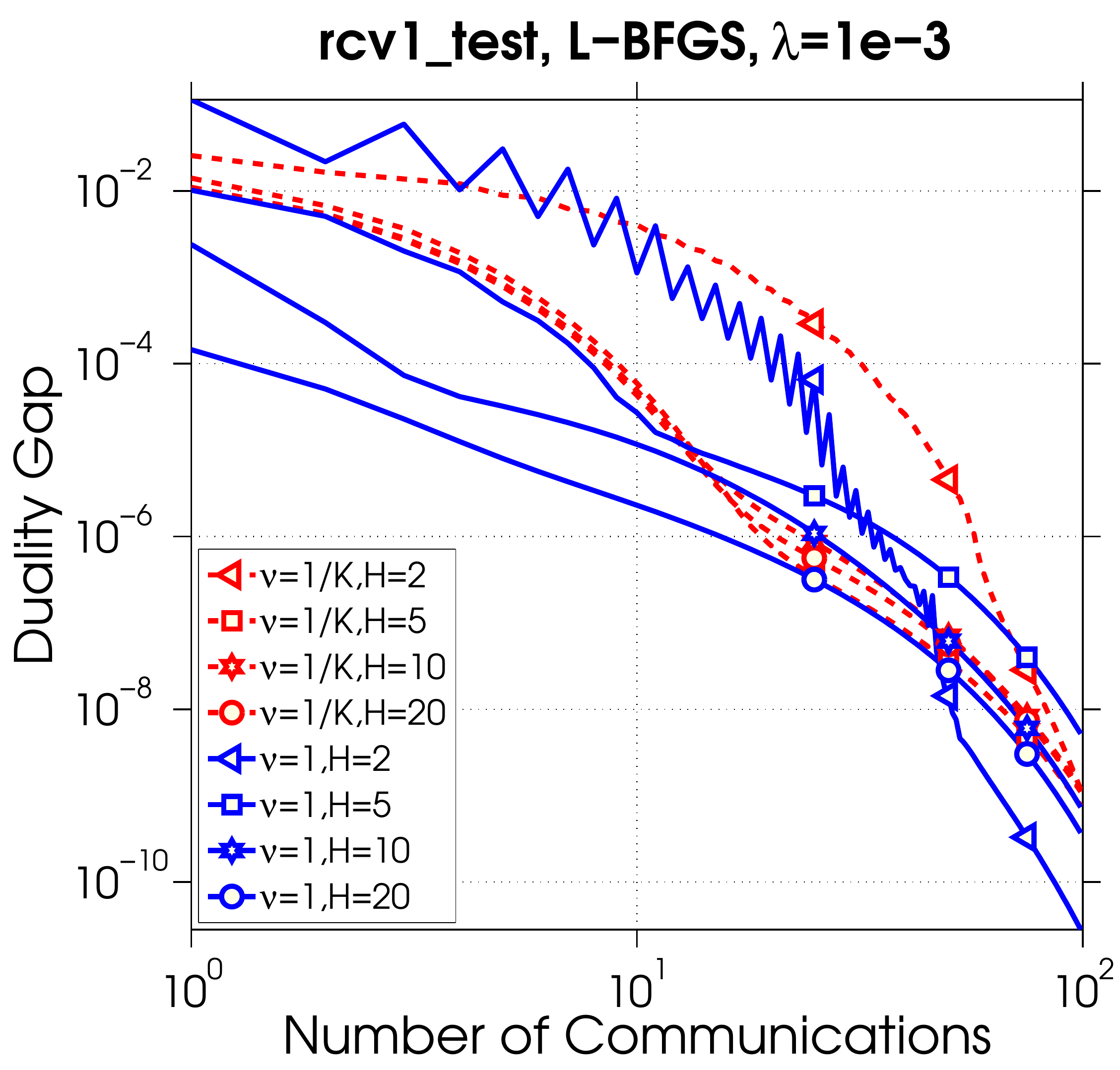}
\includegraphics[scale=.19]{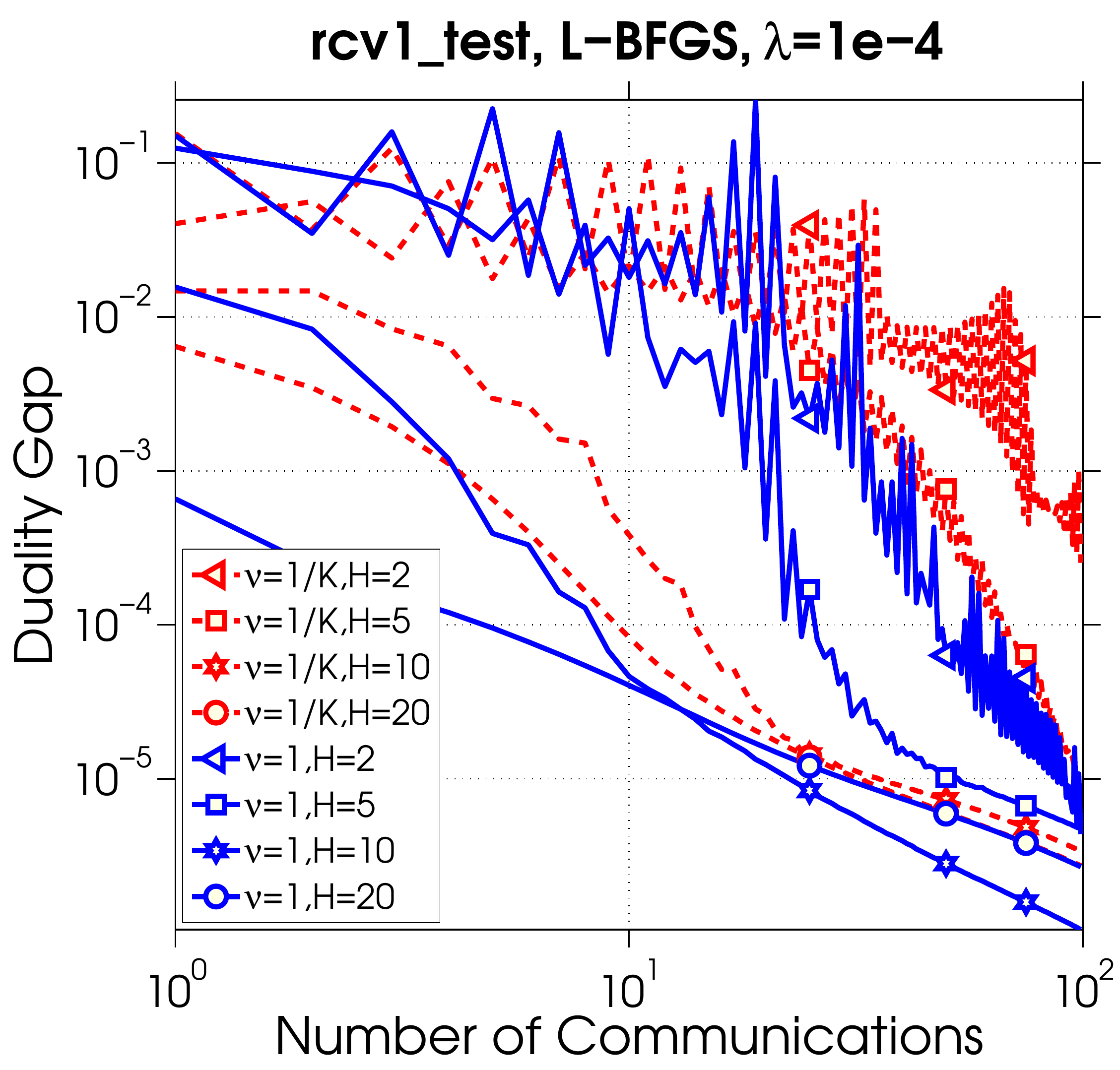}
\includegraphics[scale=.19]{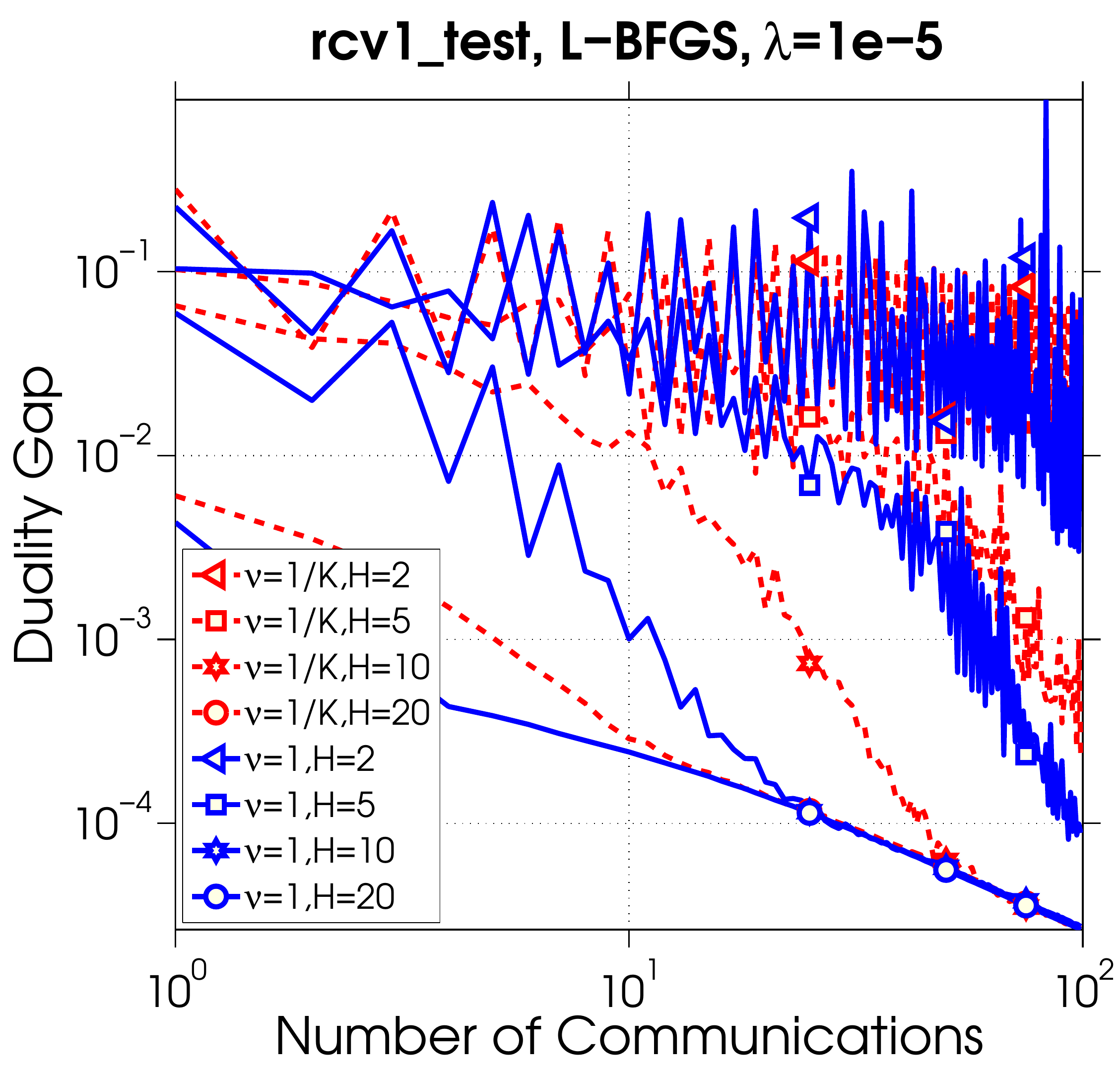}

\includegraphics[scale=.19]{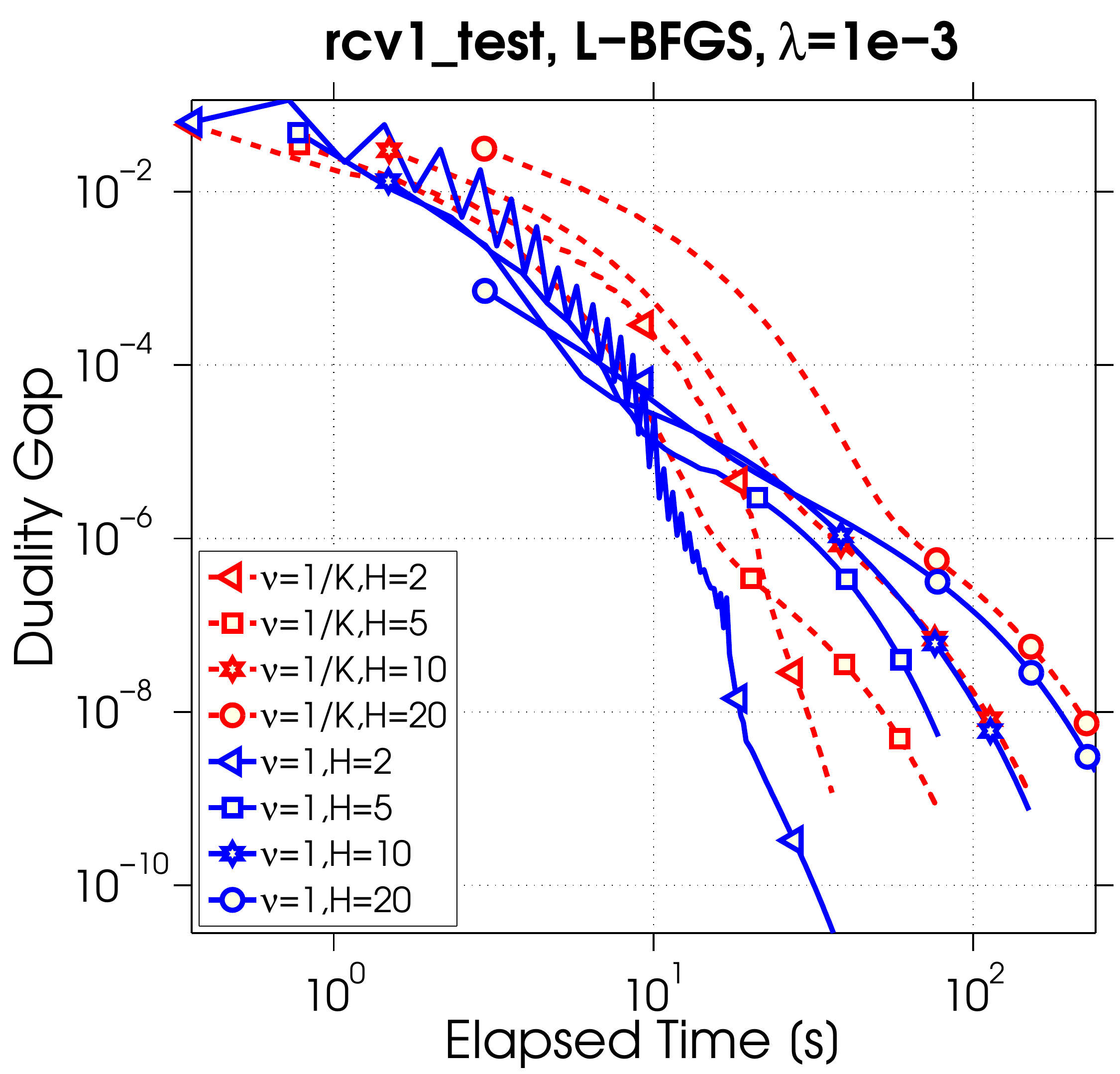}
\includegraphics[scale=.19]{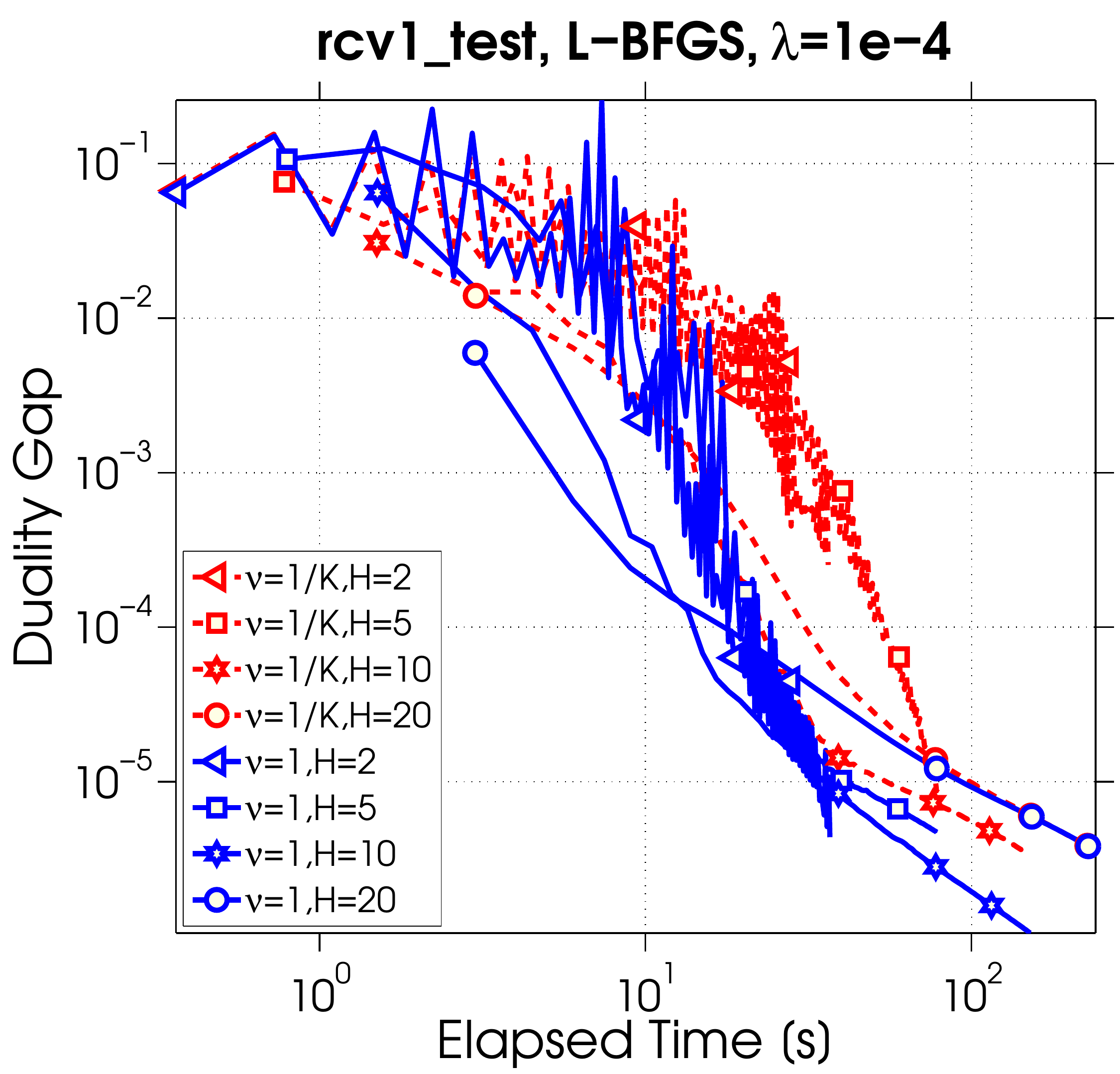}
\includegraphics[scale=.19]{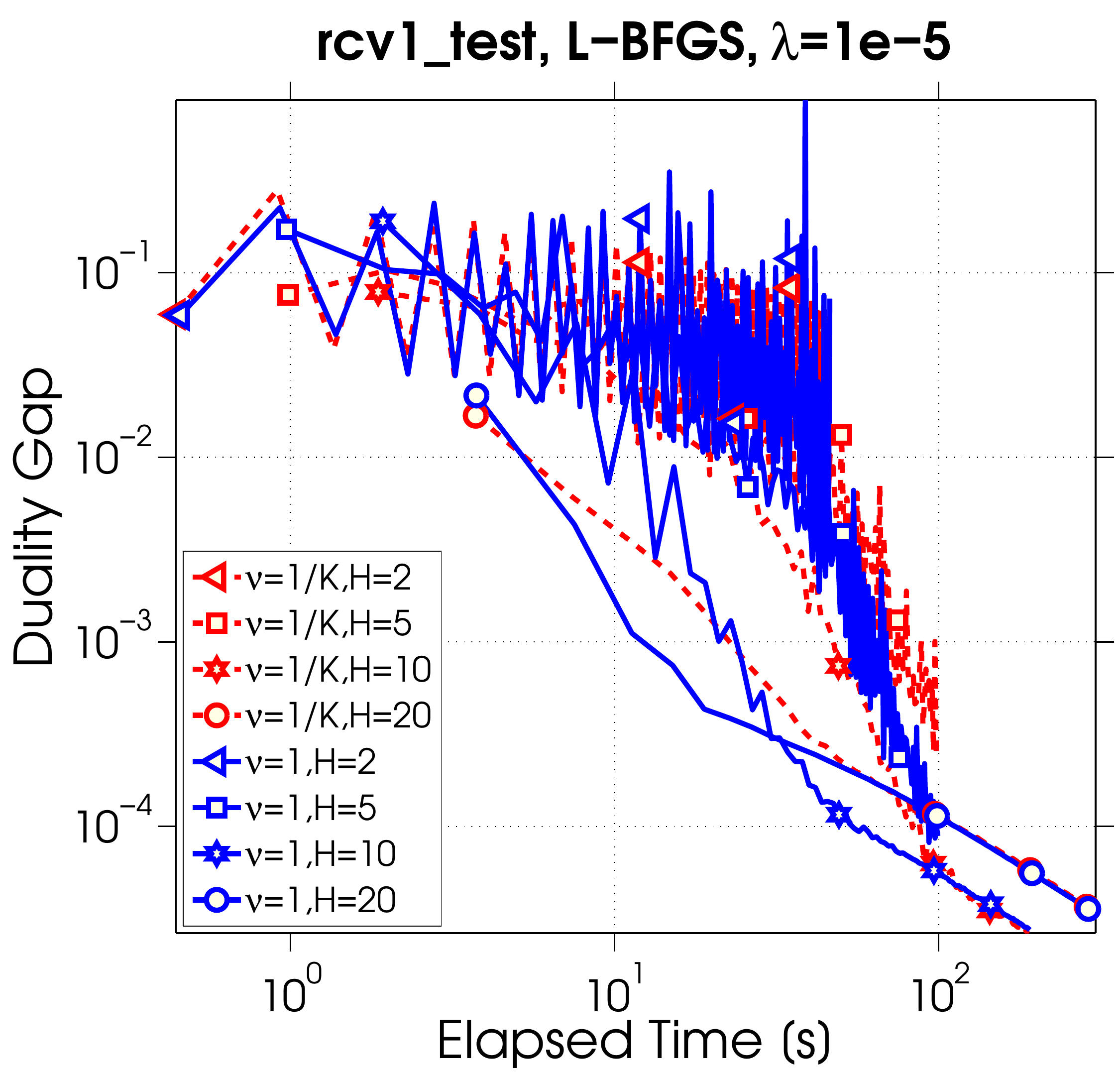}
\caption{Adding (blue solid line) vs Averaging (red dashed line) for L-BFGS as the local solver.} 
\label{fig:soler4}
\end{figure}

\begin{figure}[H]
\centering
\includegraphics[scale=.19]{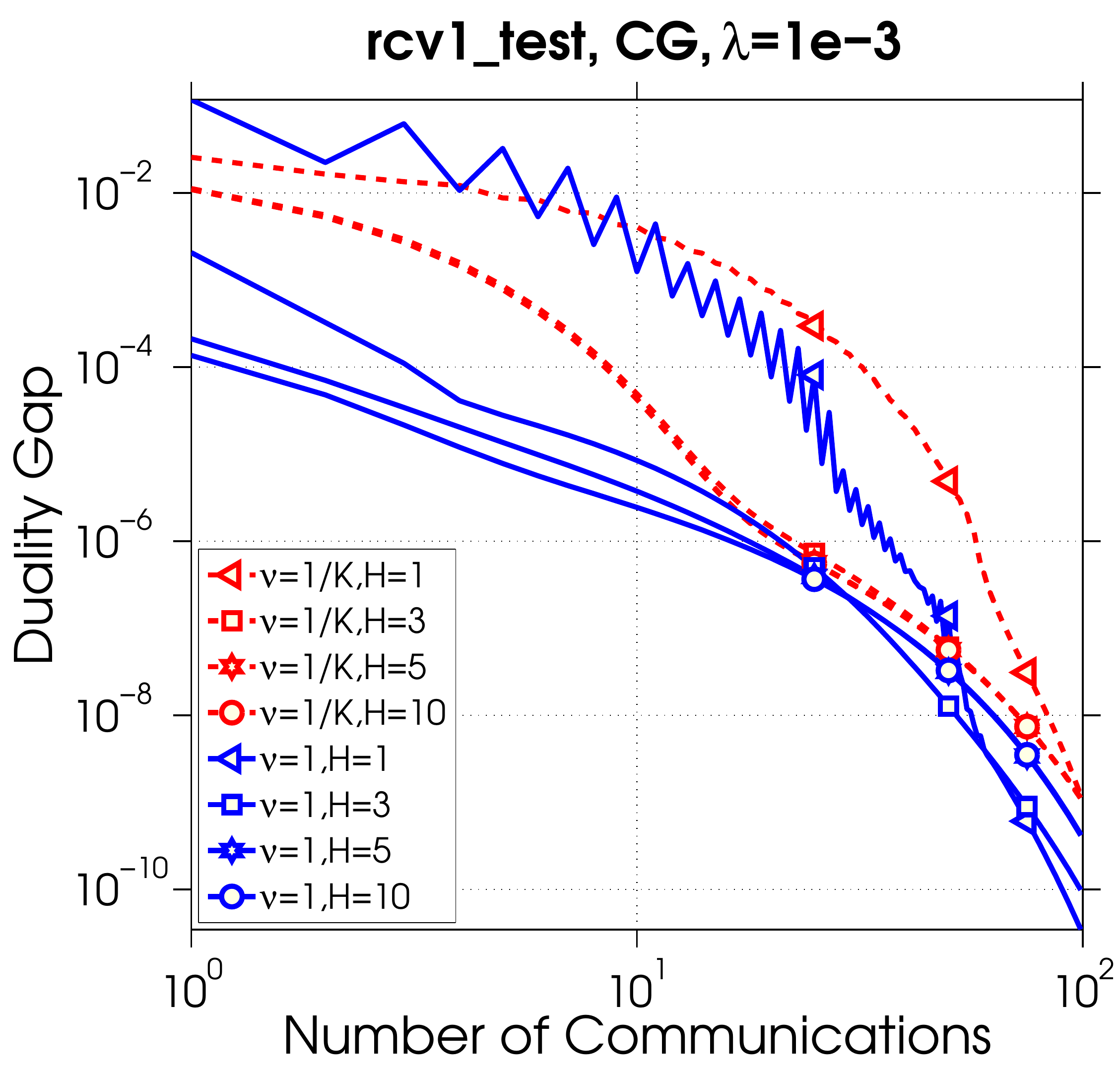}
\includegraphics[scale=.19]{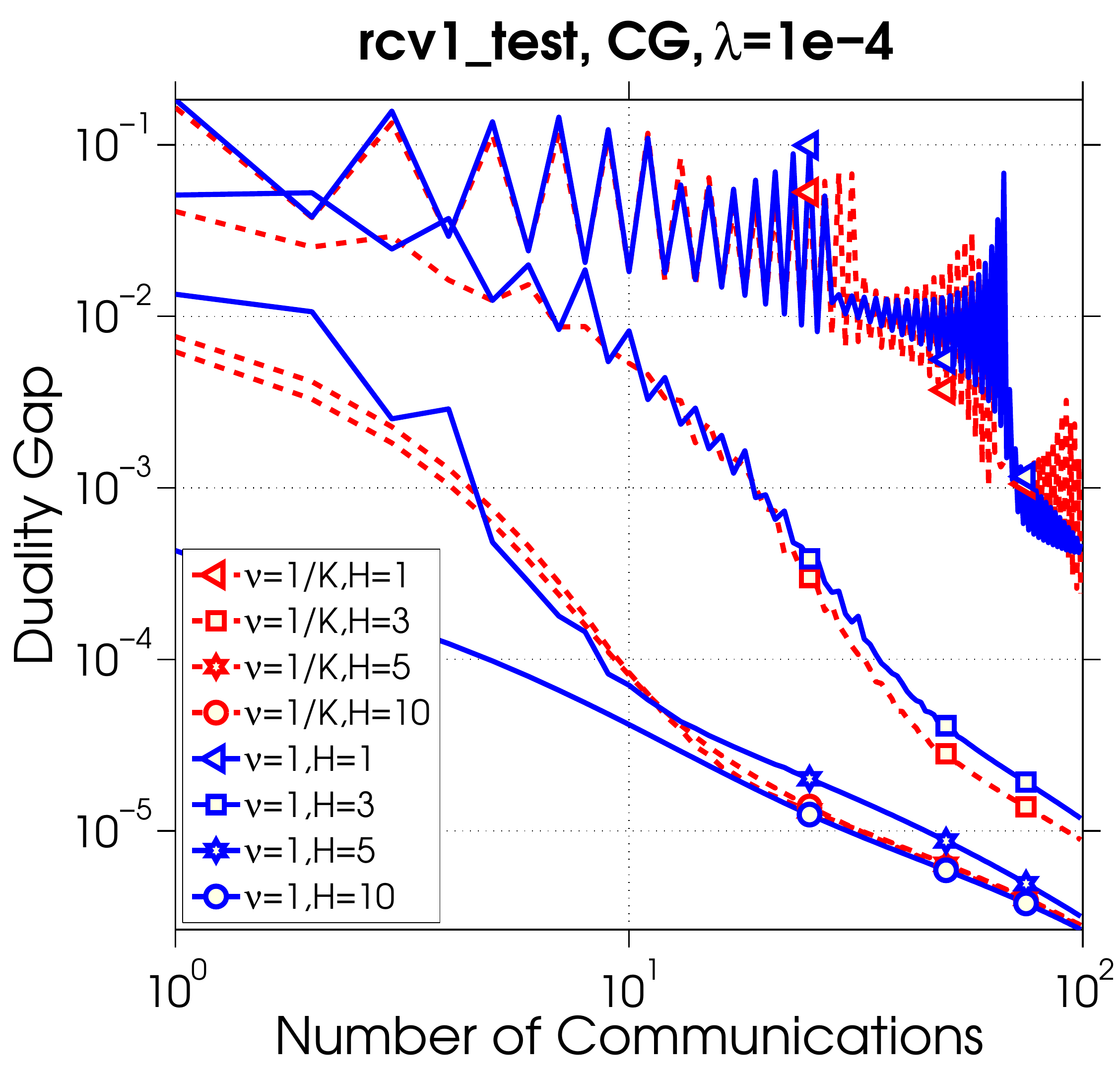}
\includegraphics[scale=.19]{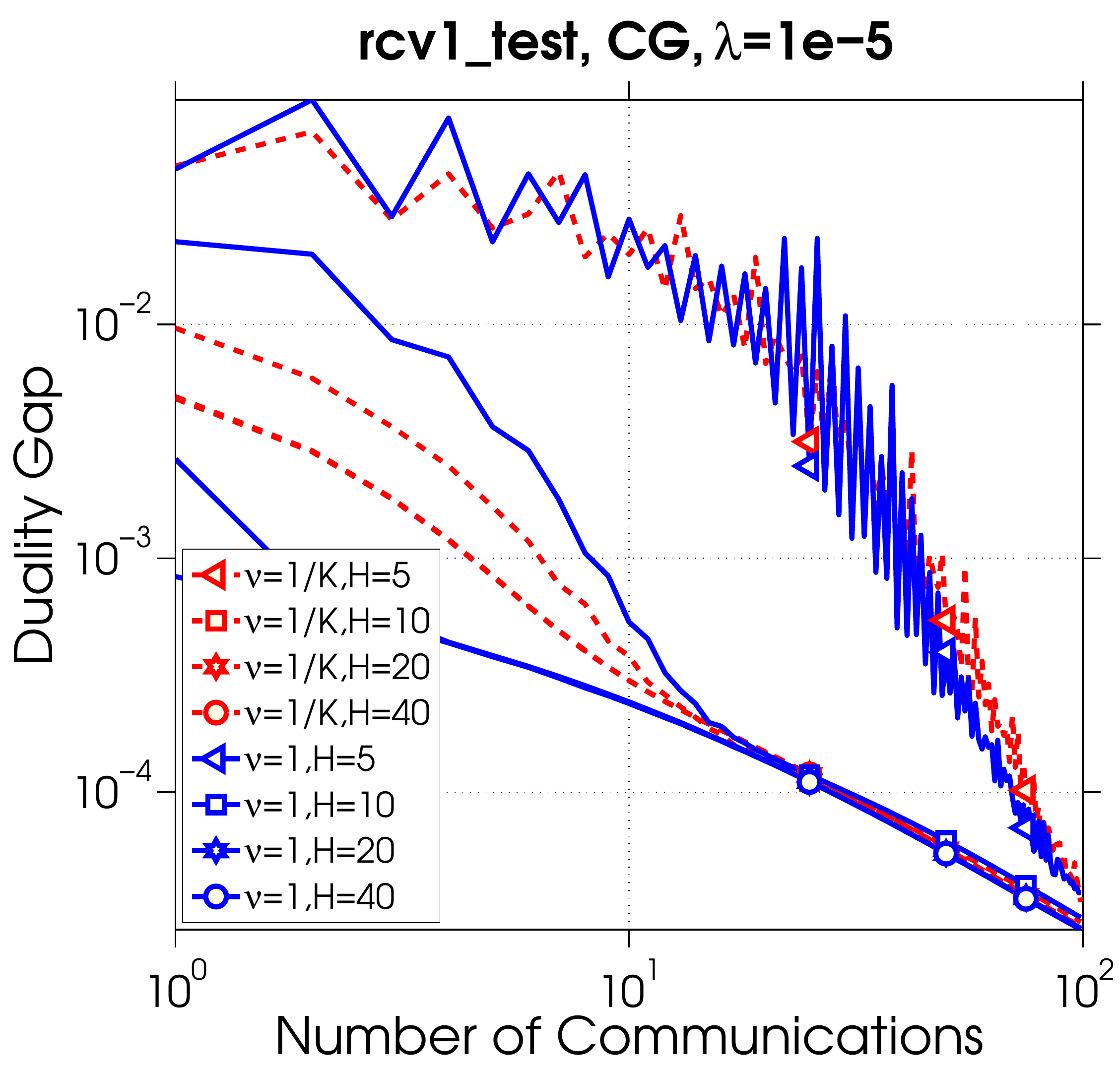}

\includegraphics[scale=.19]{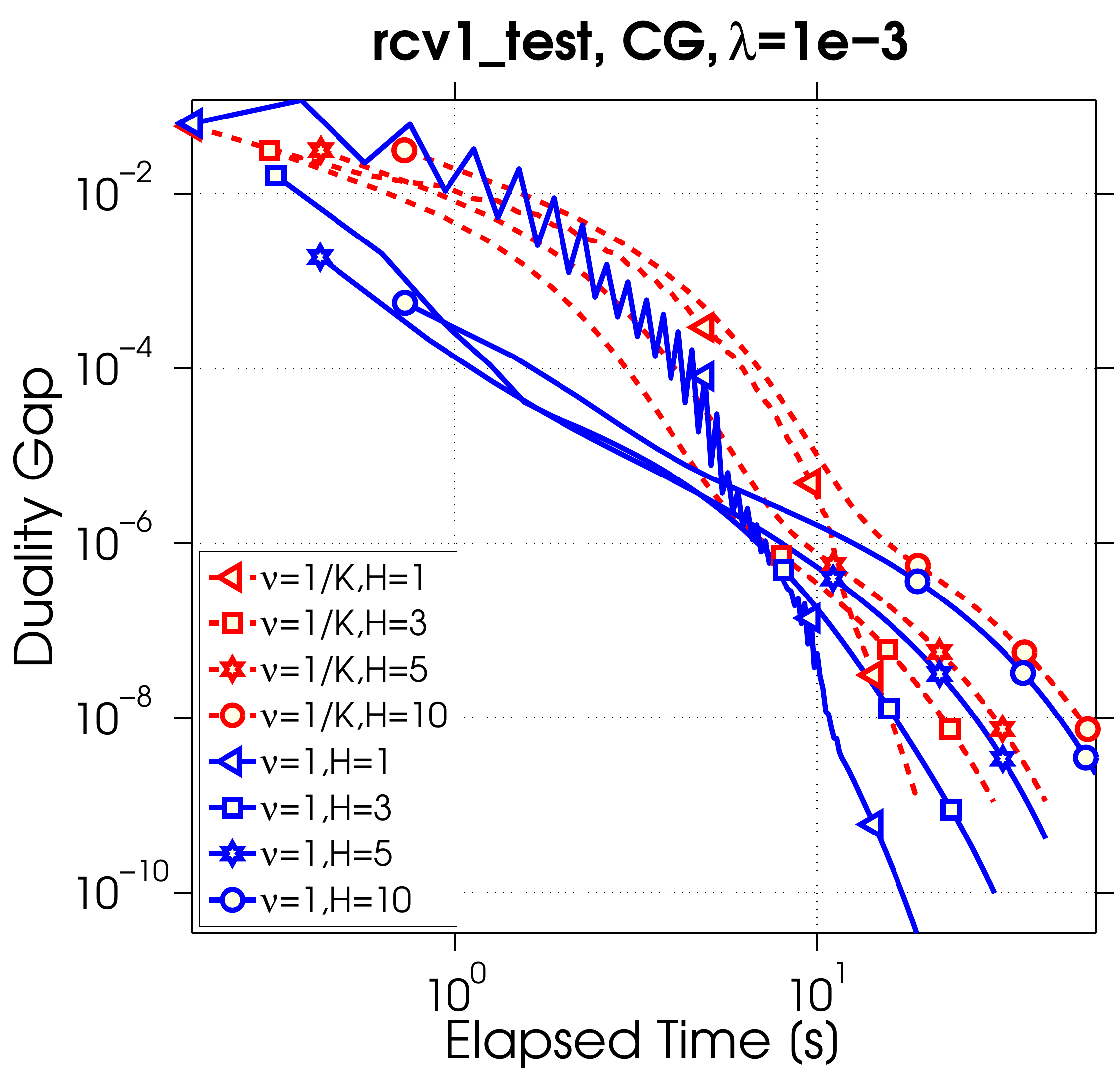}
\includegraphics[scale=.19]{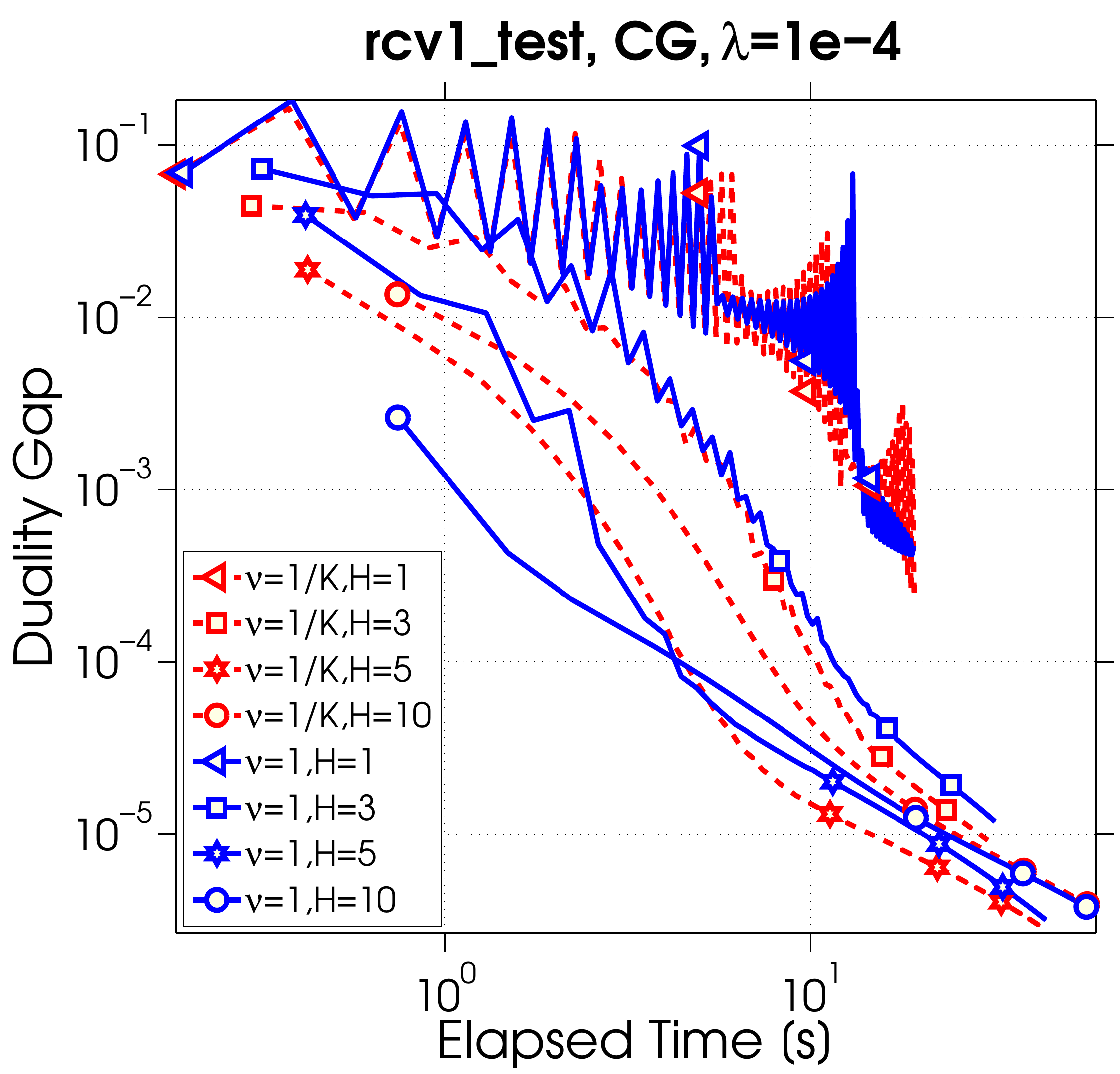}
\includegraphics[scale=.19]{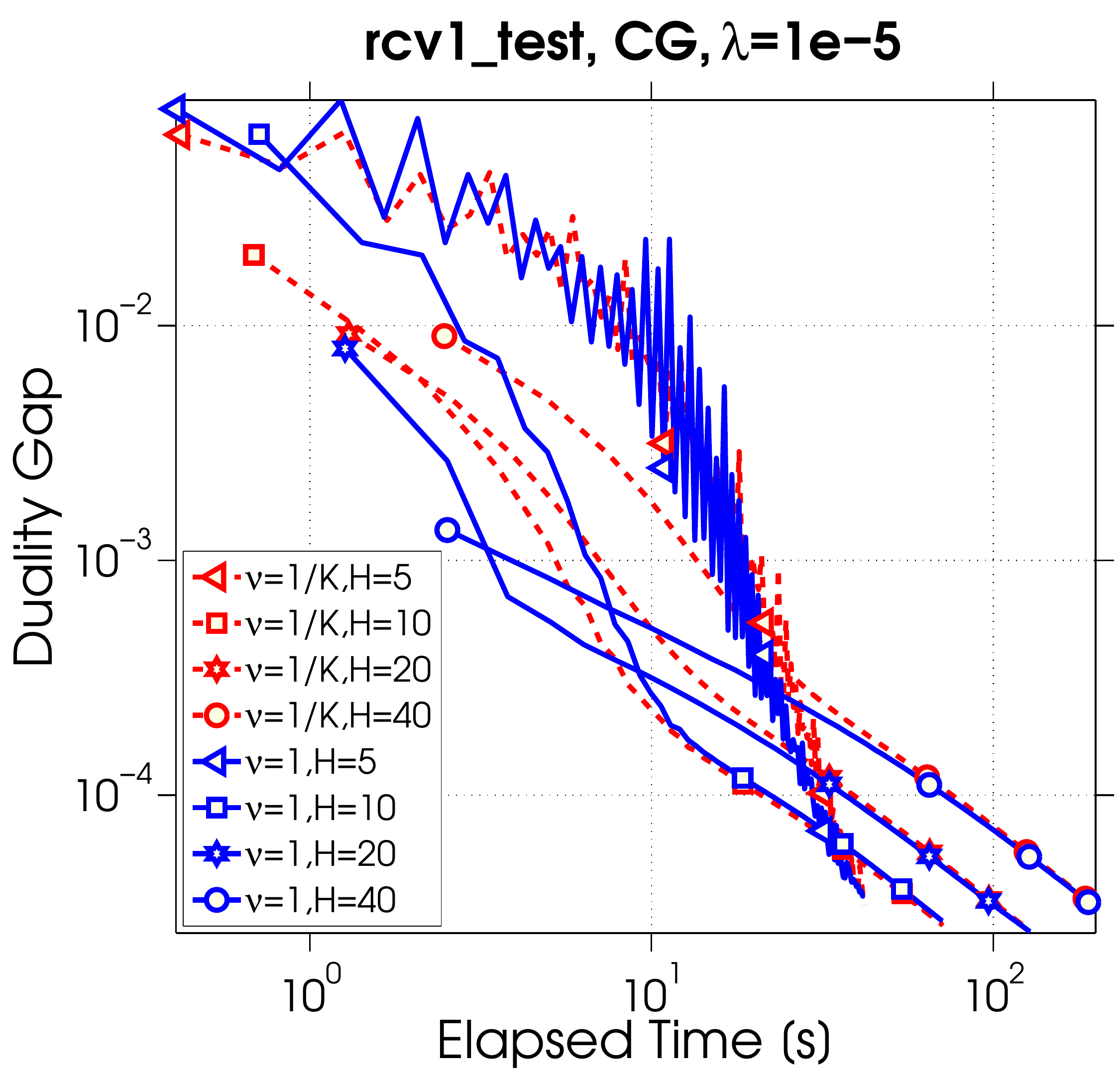}
\caption{Adding (blue solid line) vs Averaging (red dashed line) for Conjugate Gradient Method as the local solver.} 
\label{fig:soler5}
\end{figure}

\begin{figure}[H]
\centering
\includegraphics[scale=.19]{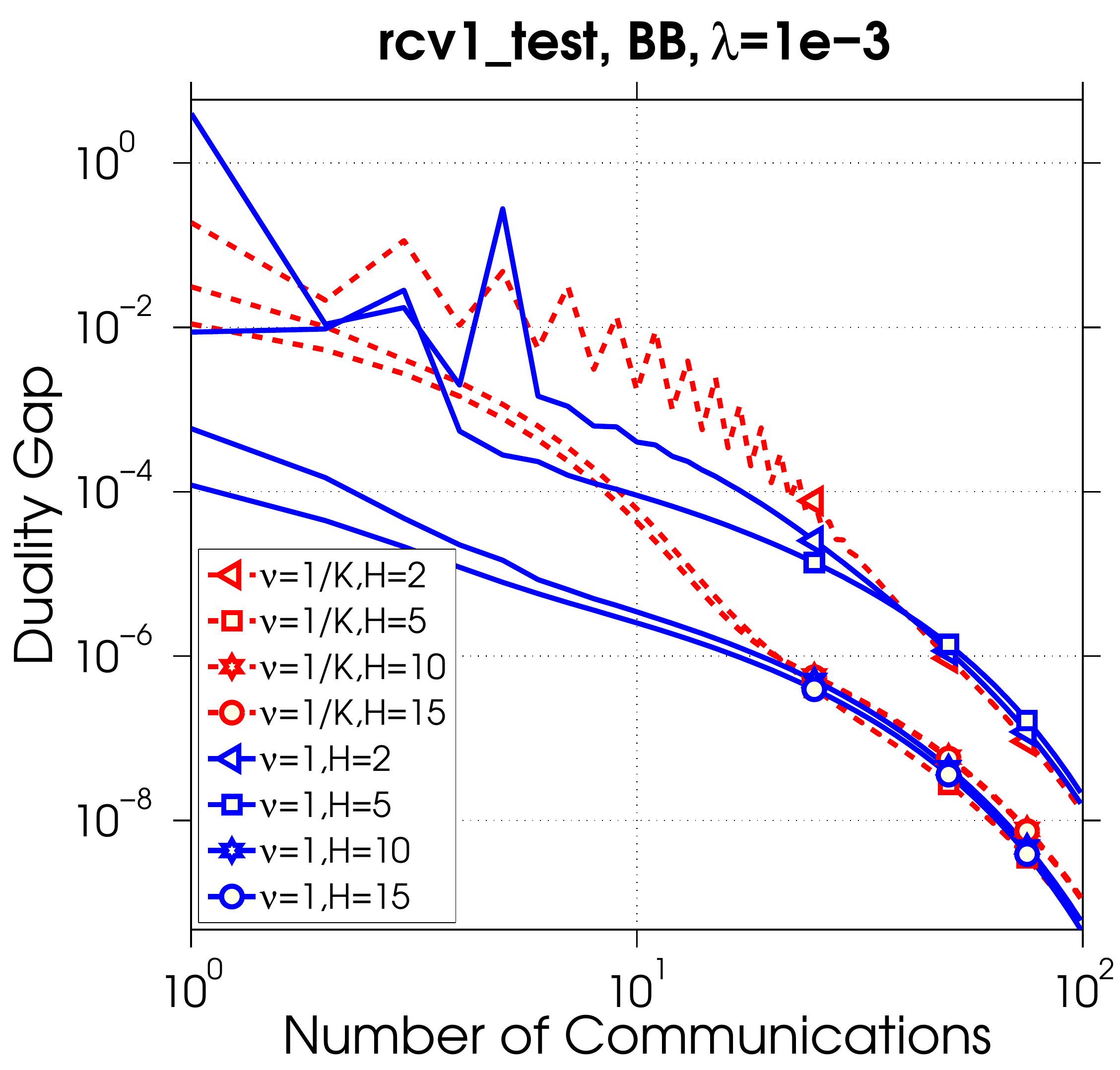}
\includegraphics[scale=.19]{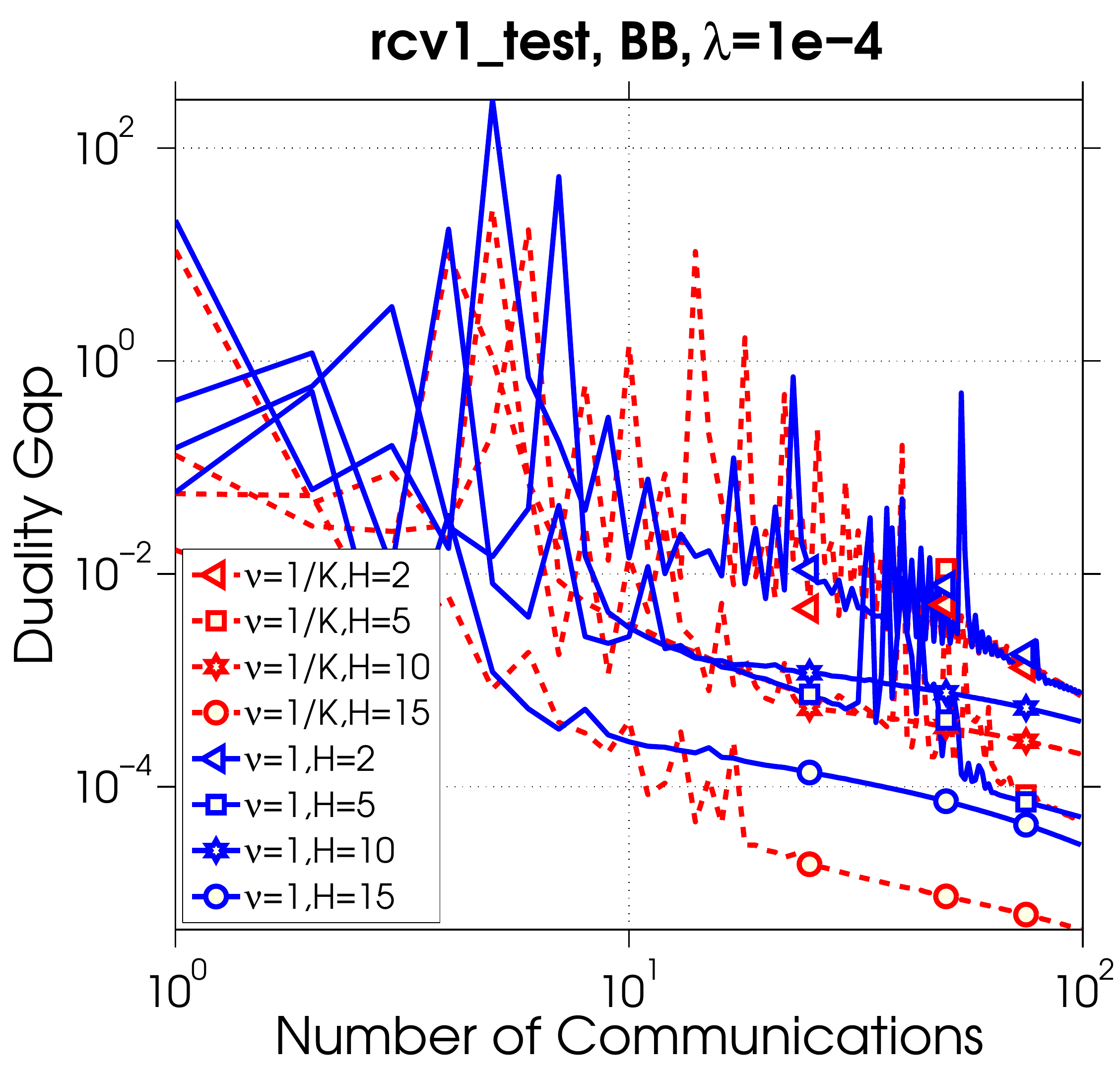}
\includegraphics[scale=.19]{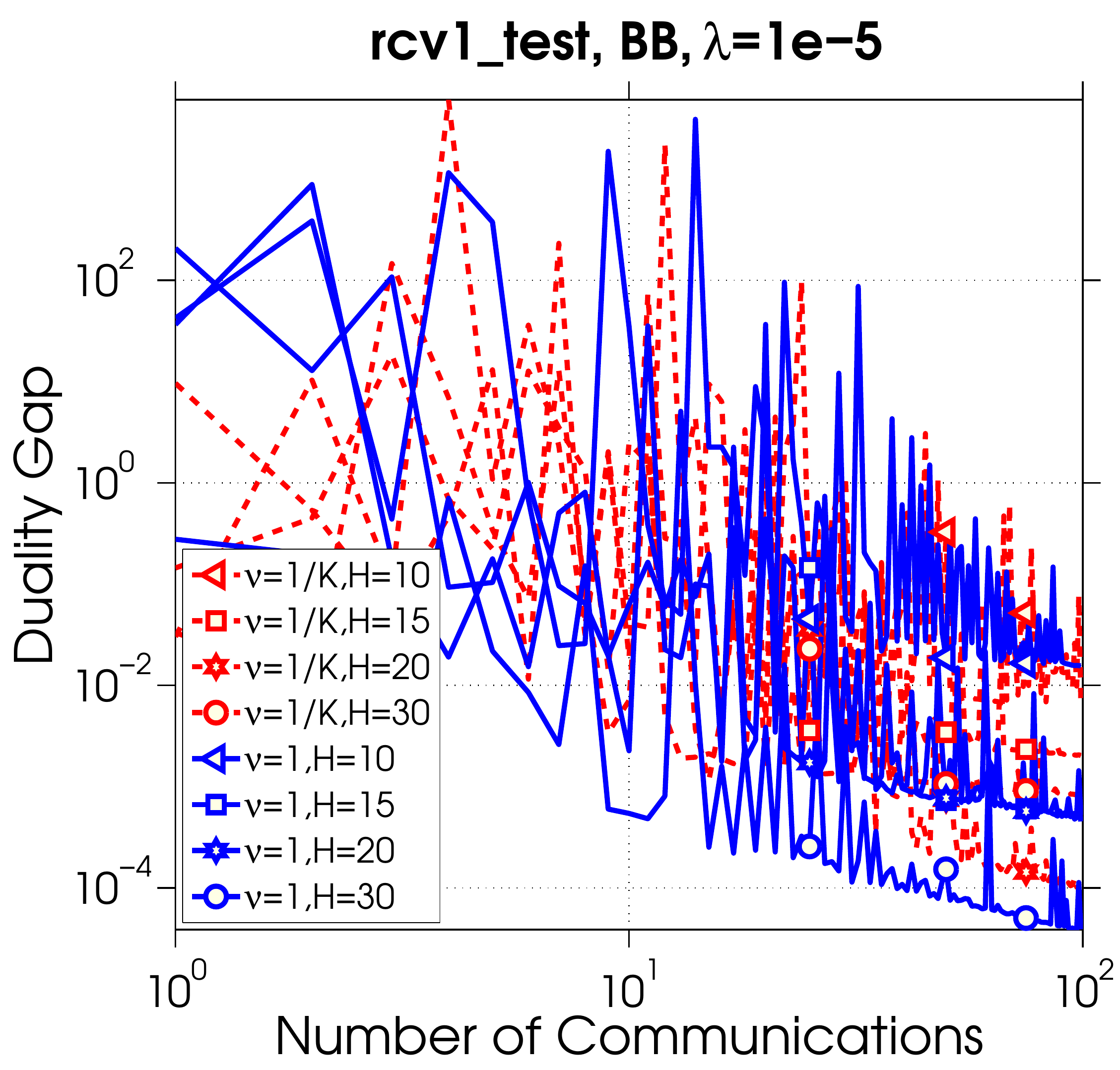}

\includegraphics[scale=.19]{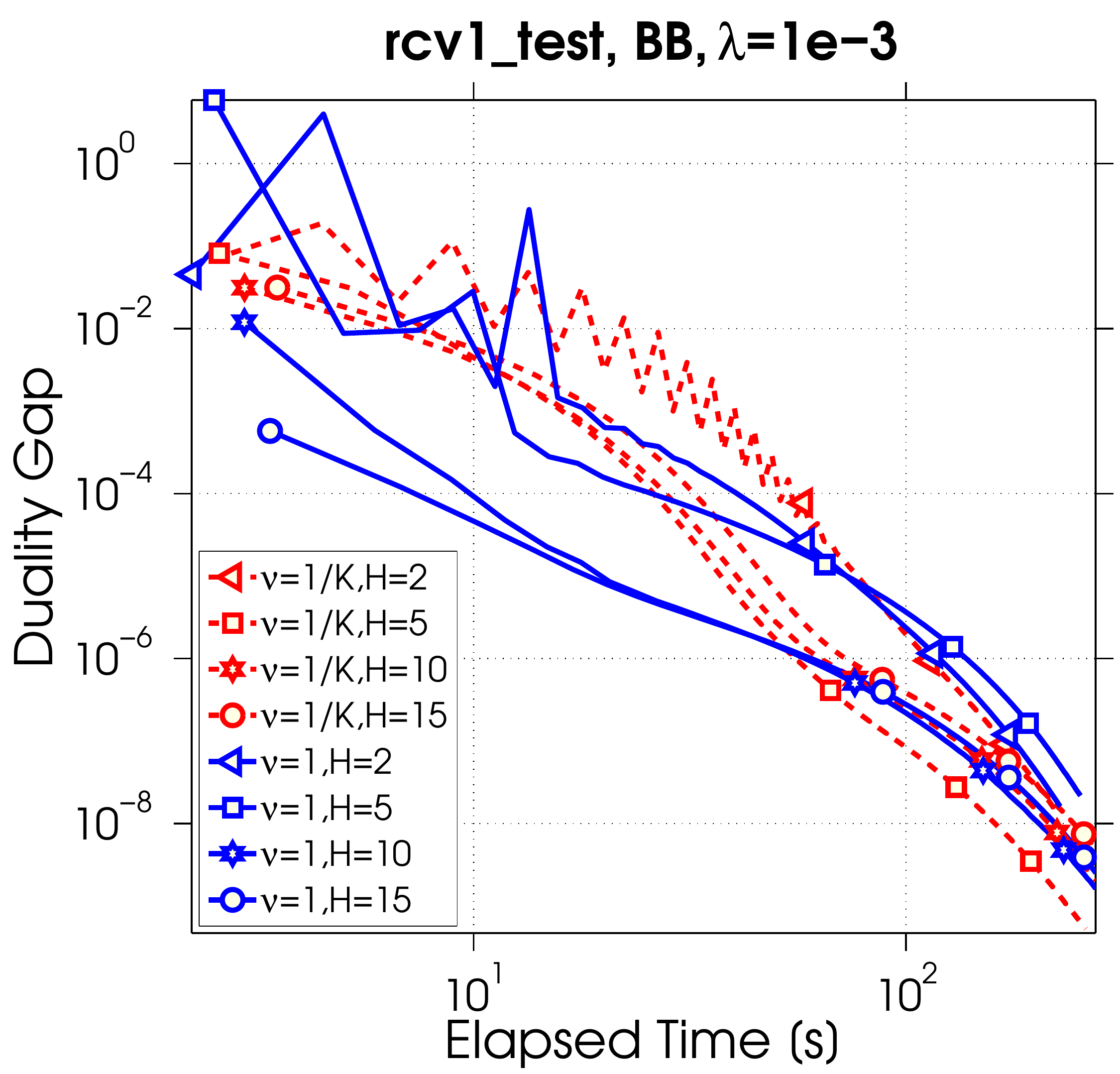}
\includegraphics[scale=.19]{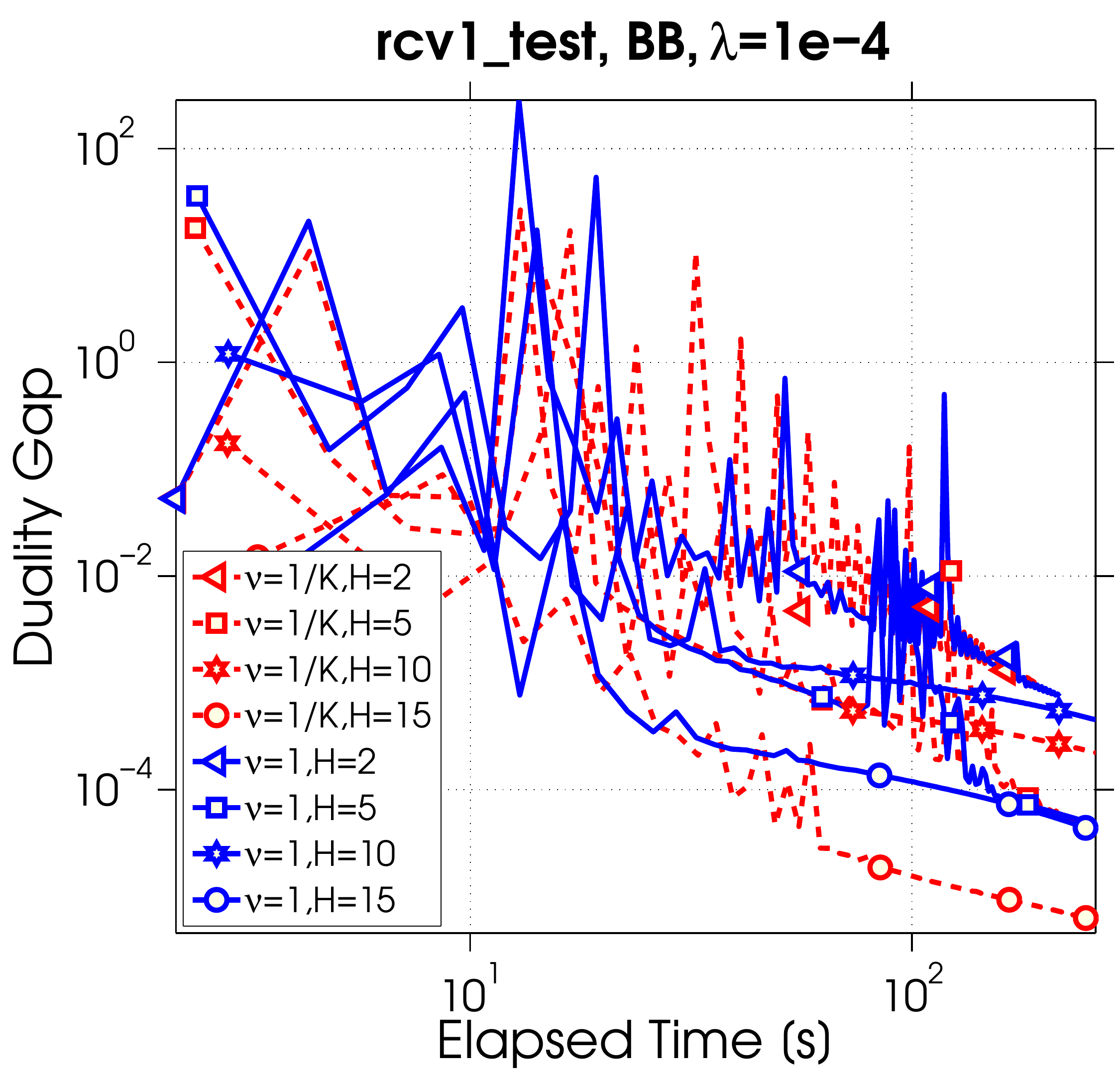}
\includegraphics[scale=.19]{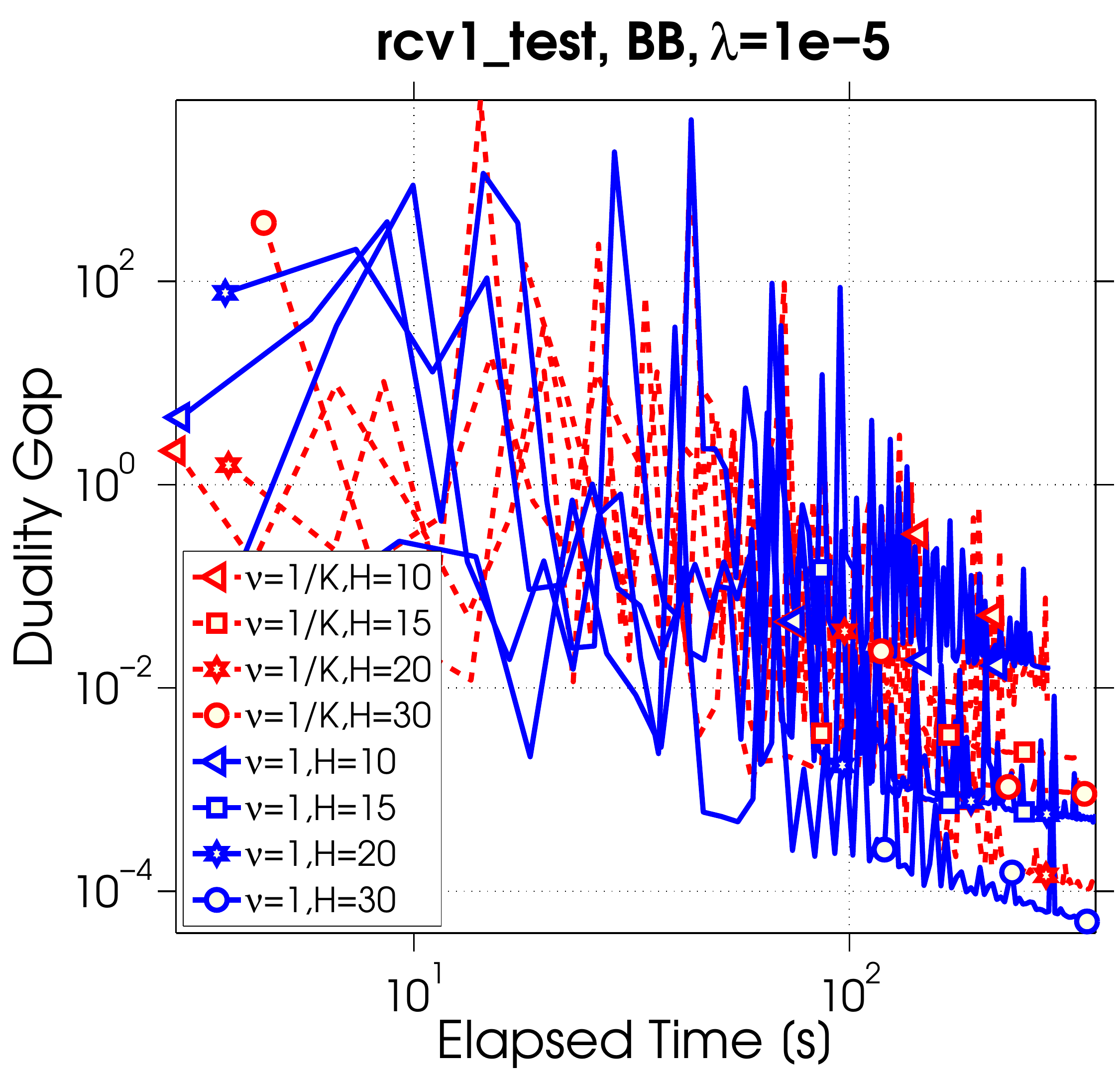}
\caption{Adding (blue solid line) vs Averaging (red dashed line) for BB as the local solver.} 
\label{fig:soler6}
\end{figure}

\begin{figure}[H]
\centering
\includegraphics[scale=.19]{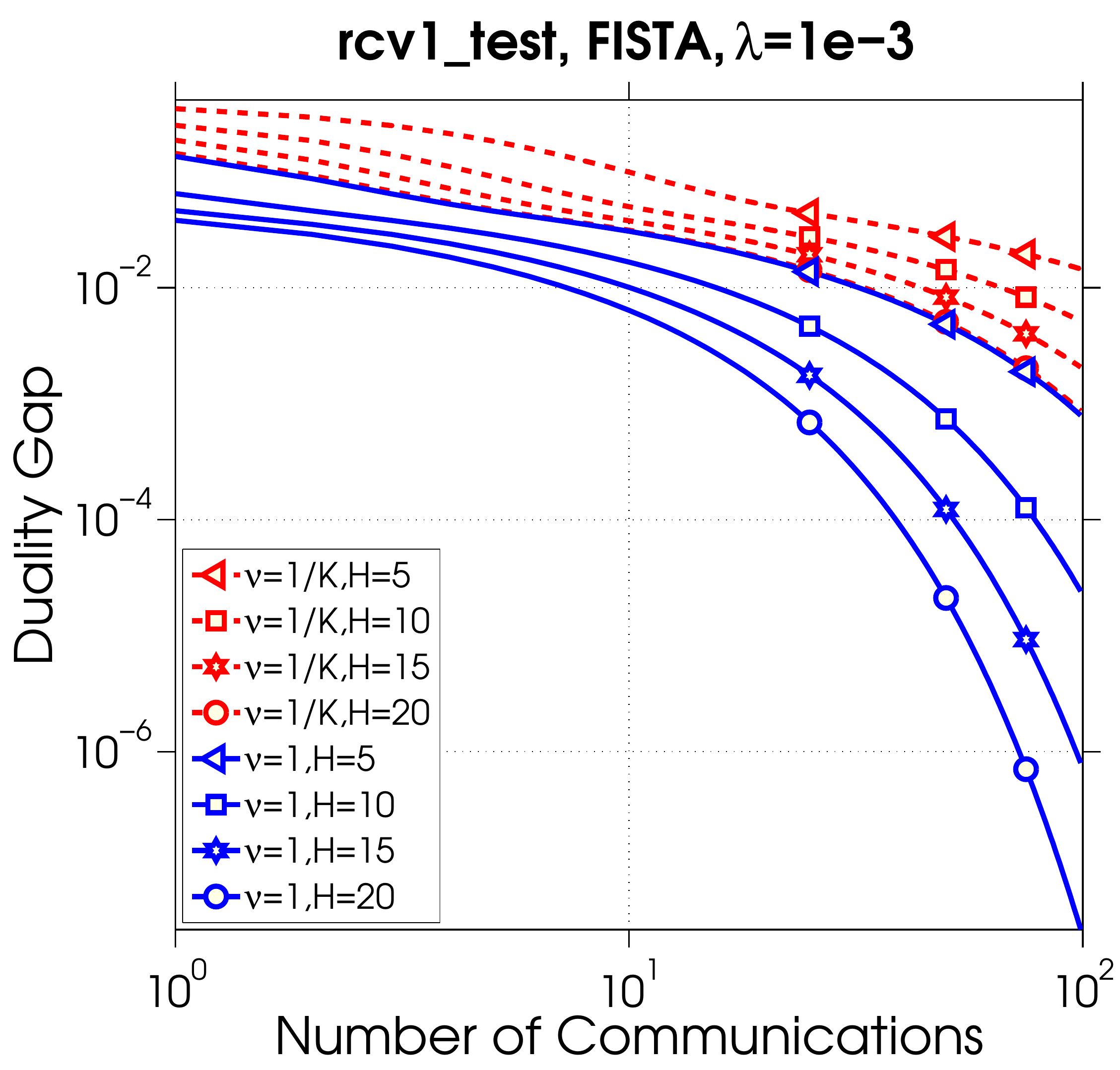}
\includegraphics[scale=.19]{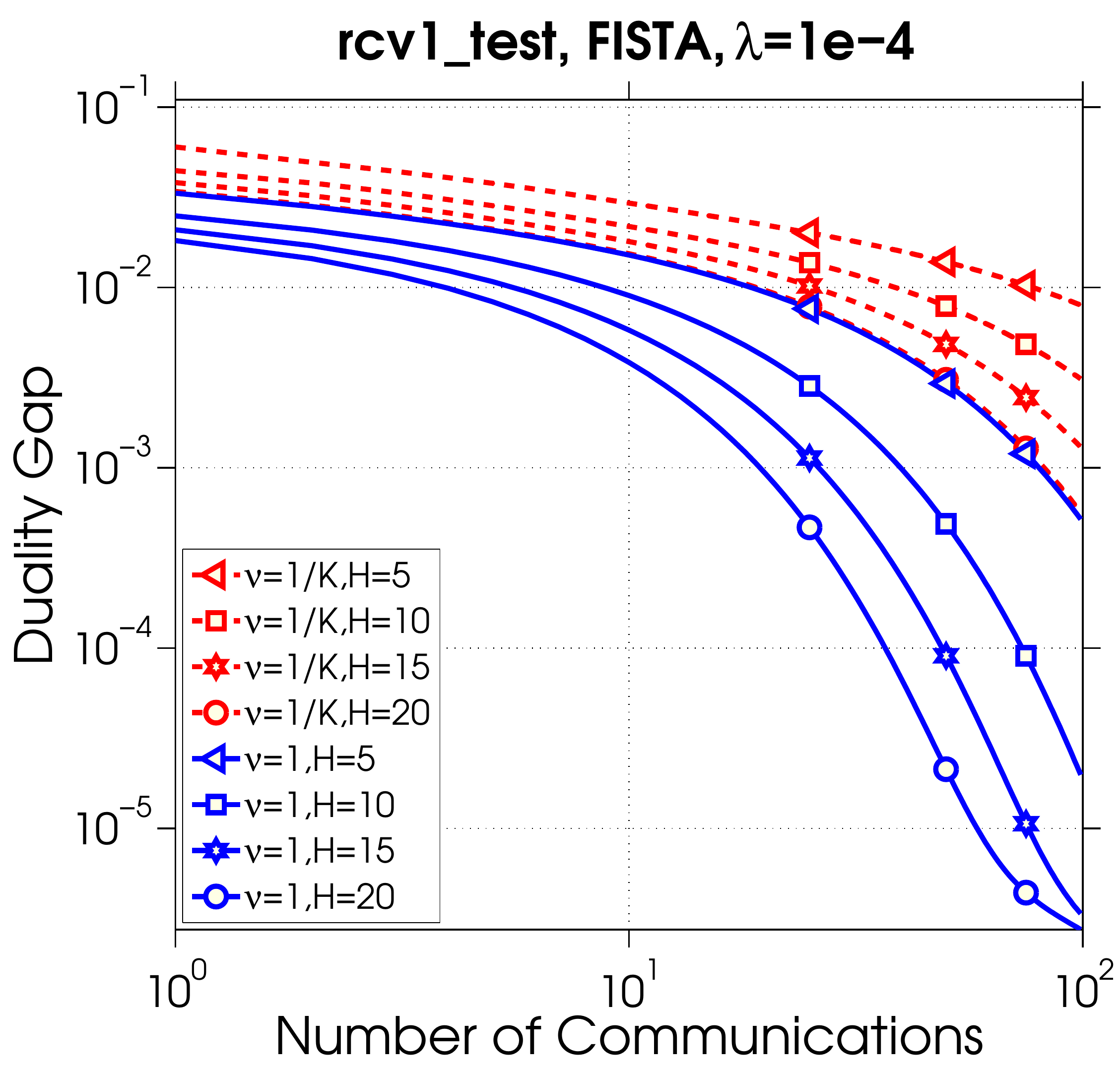}
\includegraphics[scale=.19]{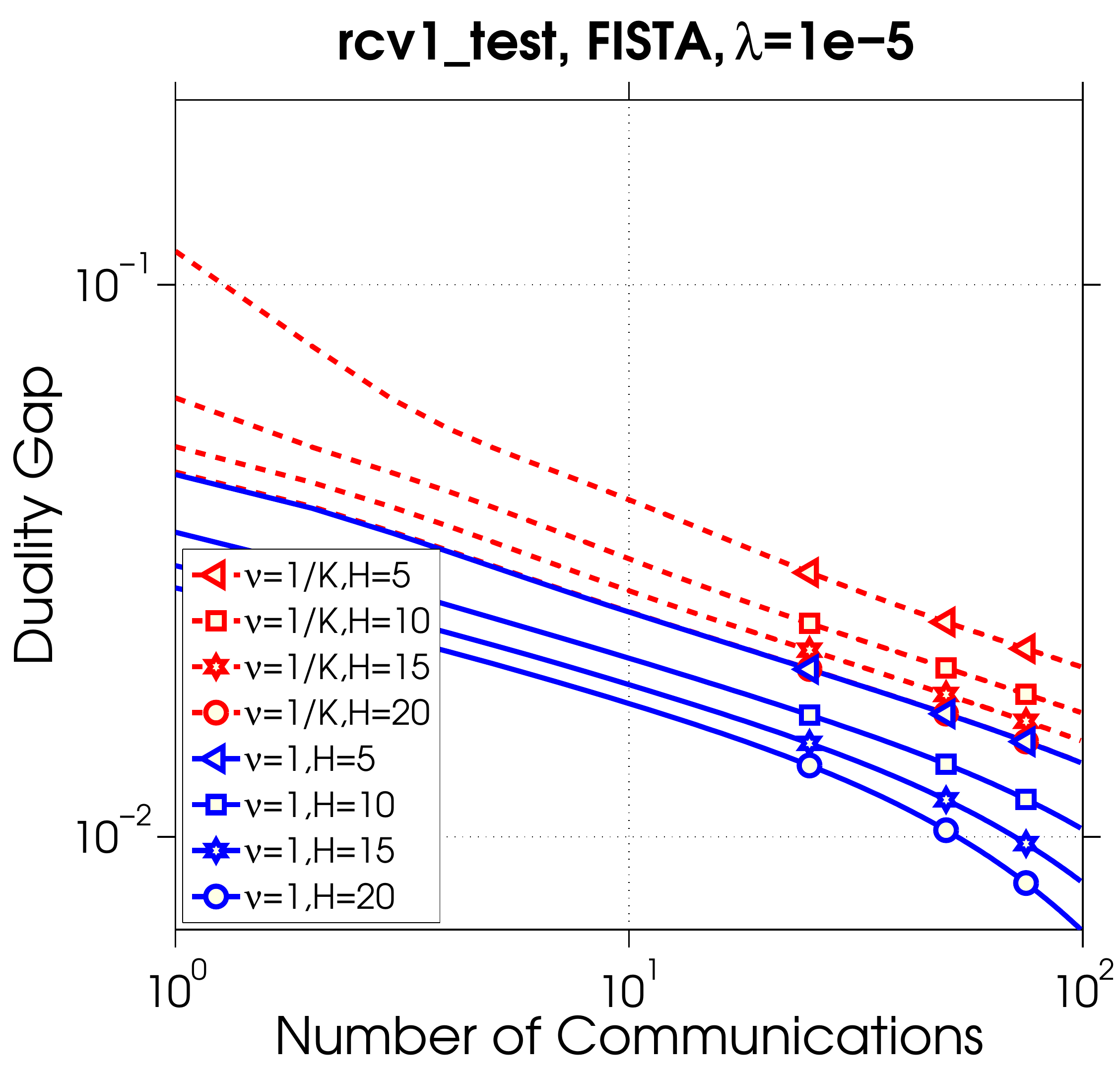}

\includegraphics[scale=.19]{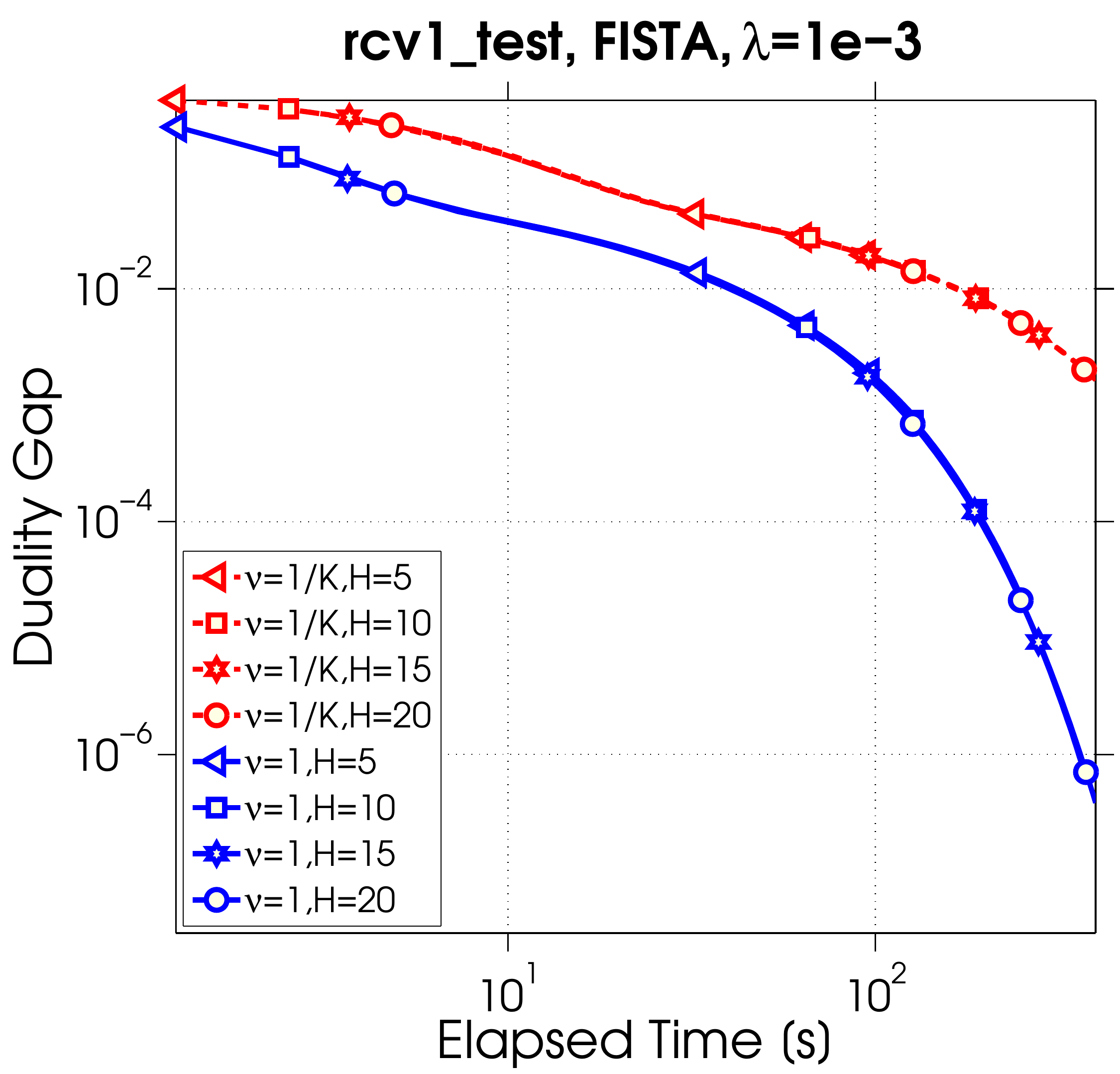}
\includegraphics[scale=.19]{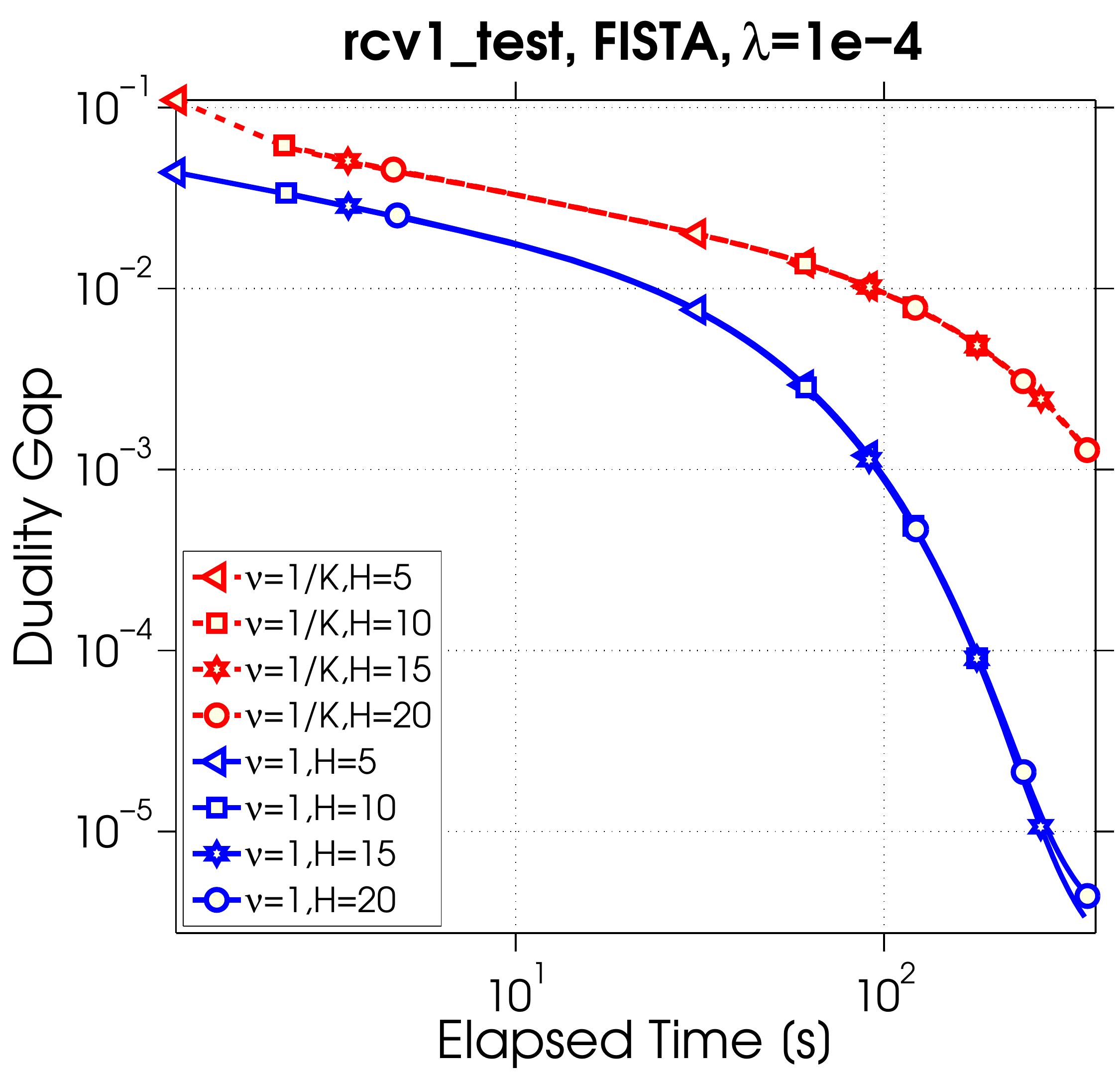}
\includegraphics[scale=.19]{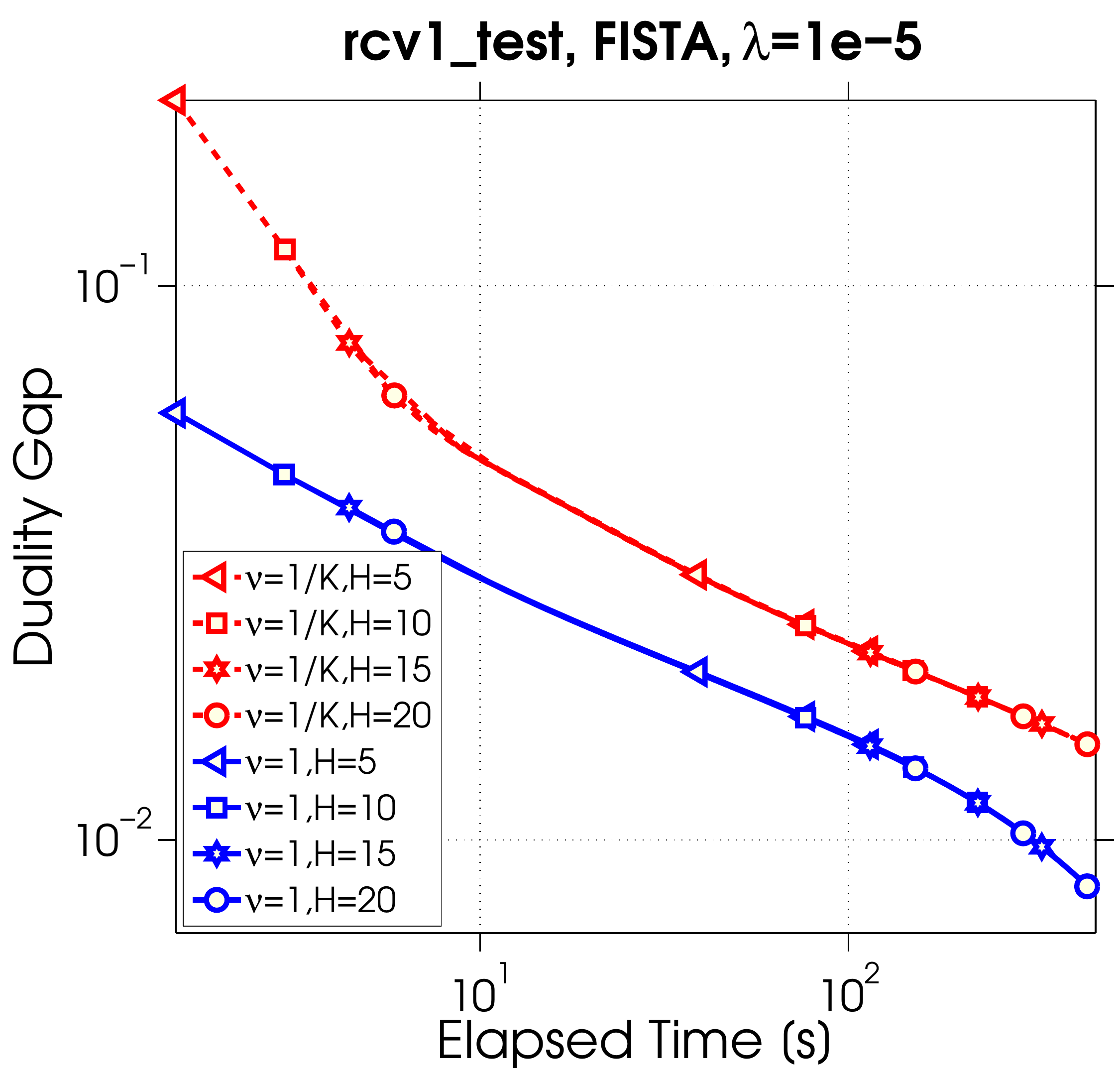}
\caption{Adding (blue solid line) vs Averaging (red dashed line) for FISTA as the local solver.} 
\label{fig:soler7}
\end{figure}

\subsection{The Effect of the Subproblem Parameter $\sigma'$}
\label{sec:cocoa:subproblemParamExps}

In this section we consider the effect of the choice of the subproblem parameter on convergence (Figure \ref{fig:sigmaTest}). We plot {the} duality gap over the number of communications for {the rcv1\_test} and epsilon datasets with quadratic loss{,} and set $K = 8$, $\lambda = 10^{-5}$. For $\aggpar=1$ (adding the local updates), we consider several different values of~$\sigma'$, ranging from $1$ to $8$. The value $\sigma'=8$ represents the safe upper bound of $\aggpar K$, as given in Lemma \ref{lem:sigmaPrimeNotBad}. 

Decreasing $\sigma'$ improves performance in terms of communication until a certain point, after which the algorithm diverges.  For the {rcv1\_test} dataset, the optimal convergence occurs around $\sigma'=5$, and diverges fast for $\sigma' \le 3$. For {the} epsilon dataset, $\sigma'$ around $6$ is the best choice and the algorithm will not converge to {the} optimal solution if $\sigma'\leq 5.$ However, more importantly, the ``safe'' upper bound of $\sigma':=\aggpar K=8$ has only slightly worse performance than the practically best (but ``un-safe'') value of~$\sigma'$. 

\begin{figure}[H]
\centering
\includegraphics[scale=.22]{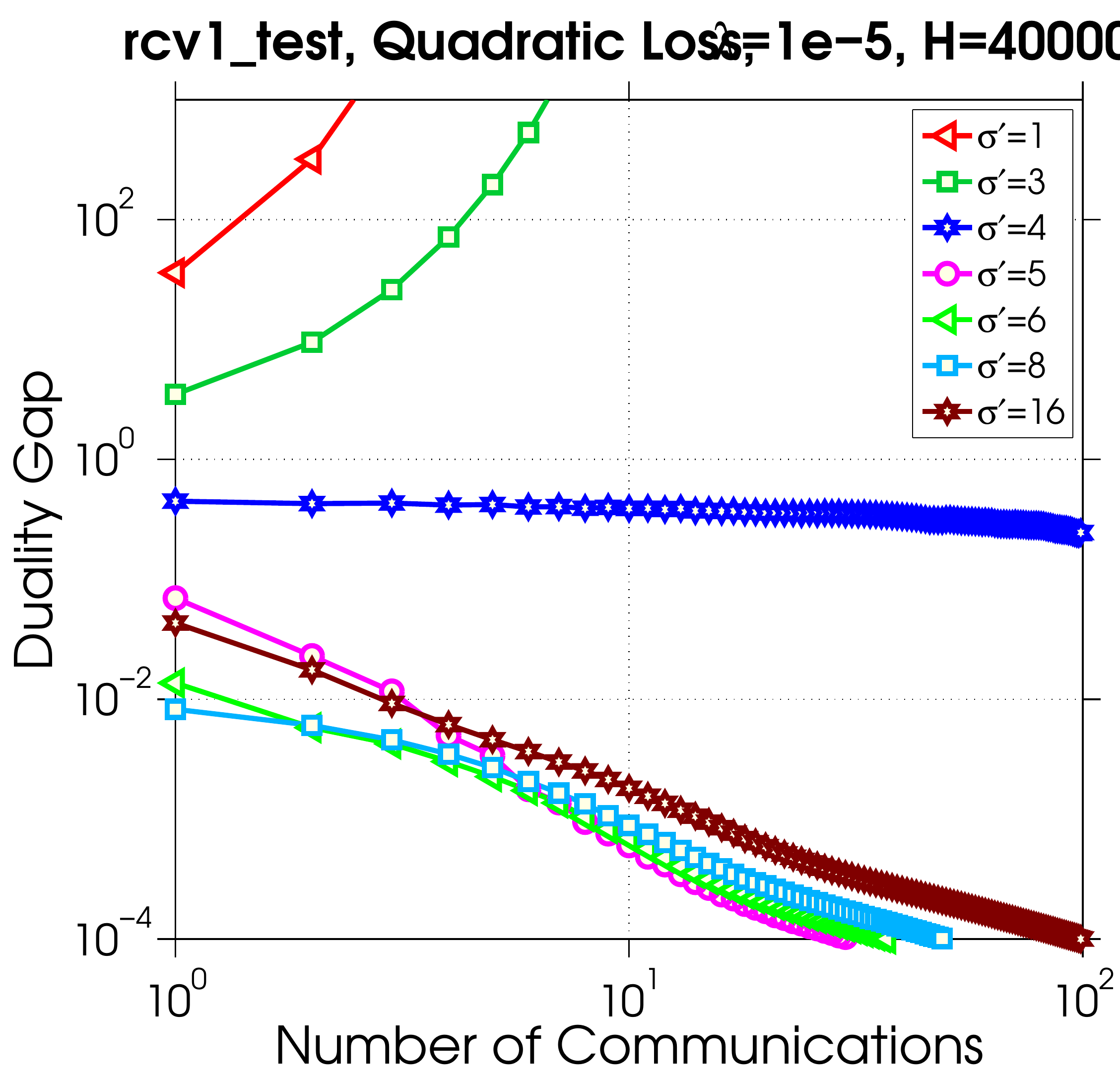}
\includegraphics[scale=.22]{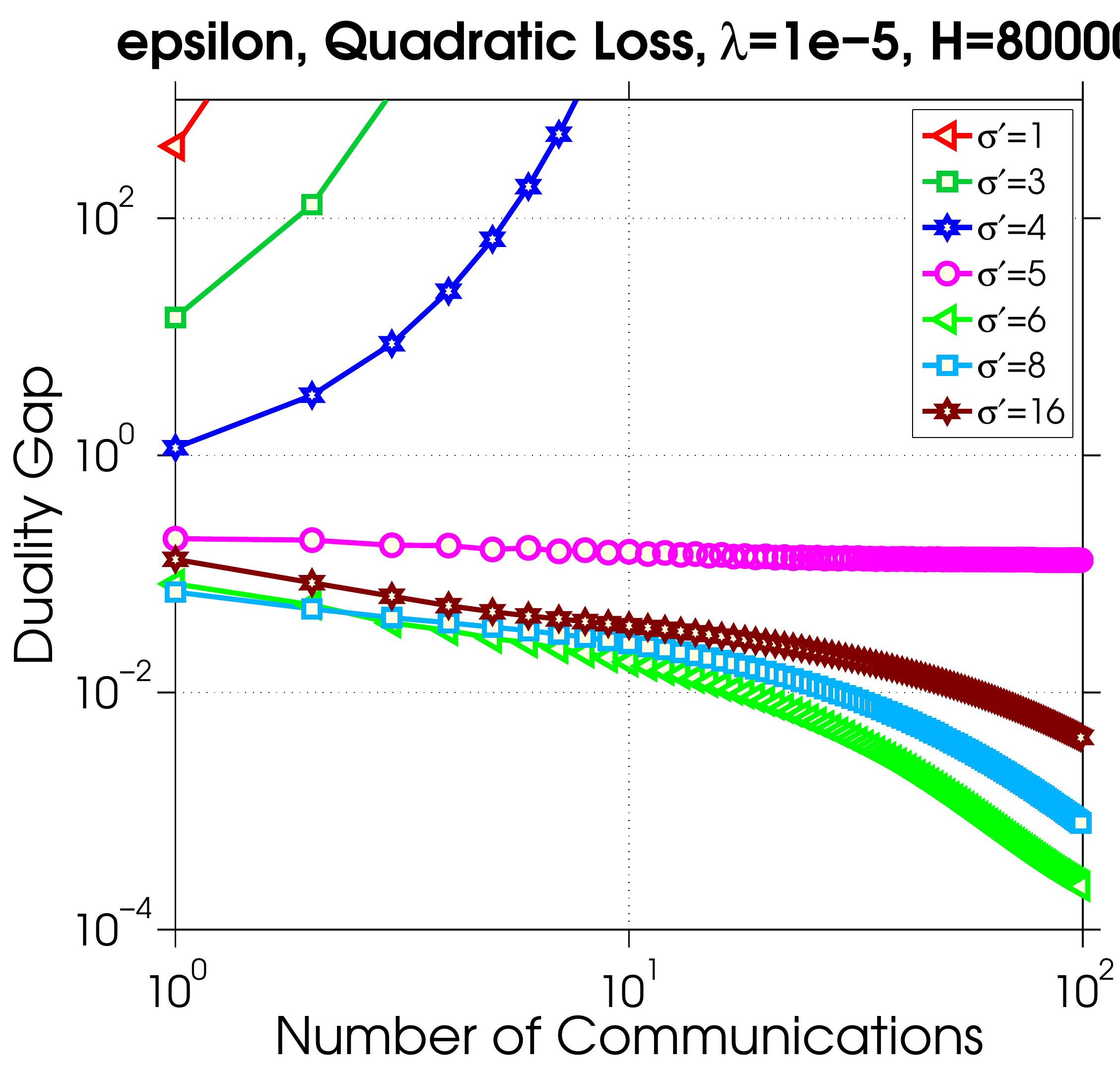}
\caption{The effect of $\sigma'$ on convergence for the rcv1\_test and epsilon datasets distributed across 8 machines.} 
\label{fig:sigmaTest}
\end{figure}

\subsection{Scaling Property}

\begin{figure}[H]
\centering
\includegraphics[scale=.19]{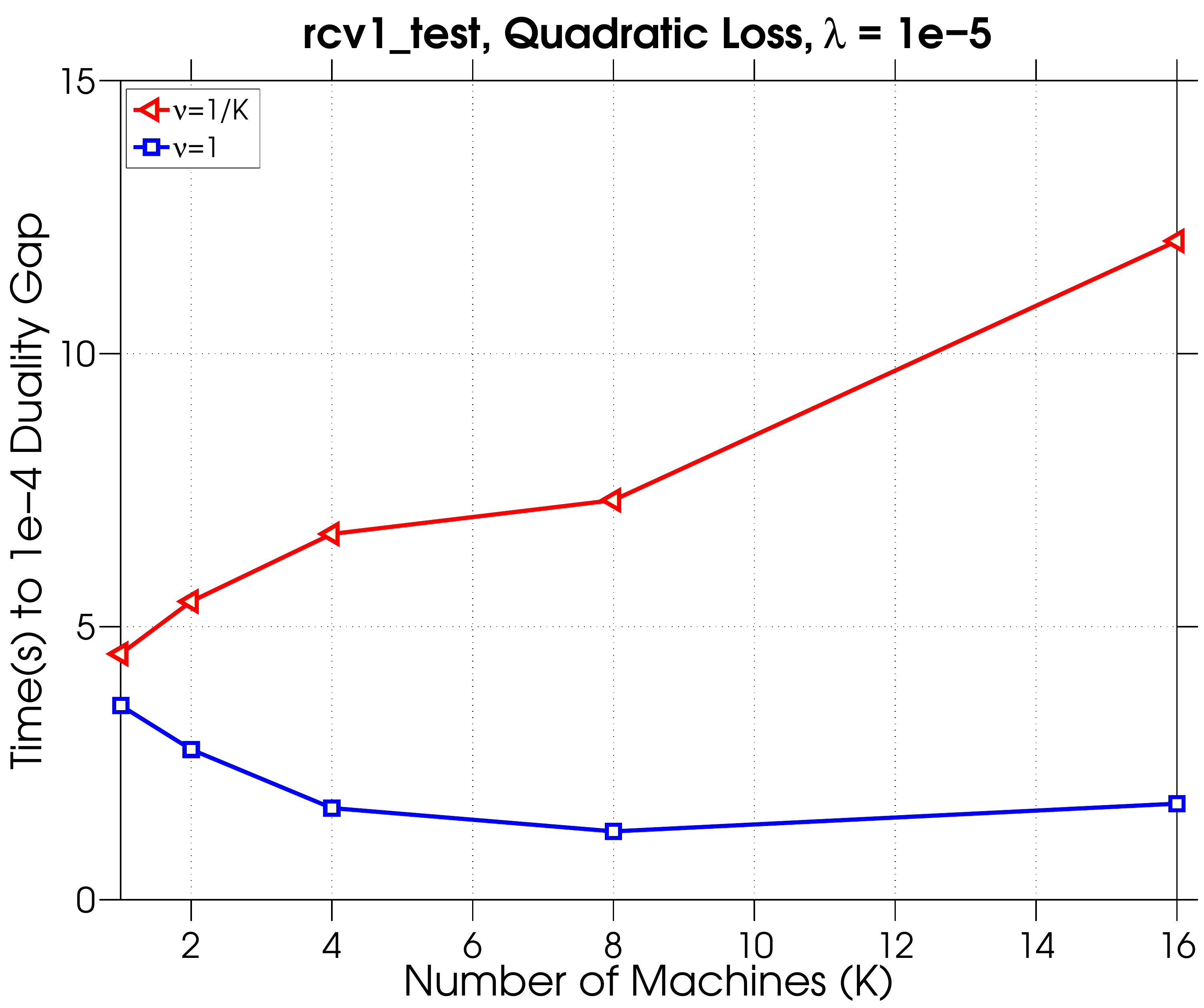}
\includegraphics[scale=.19]{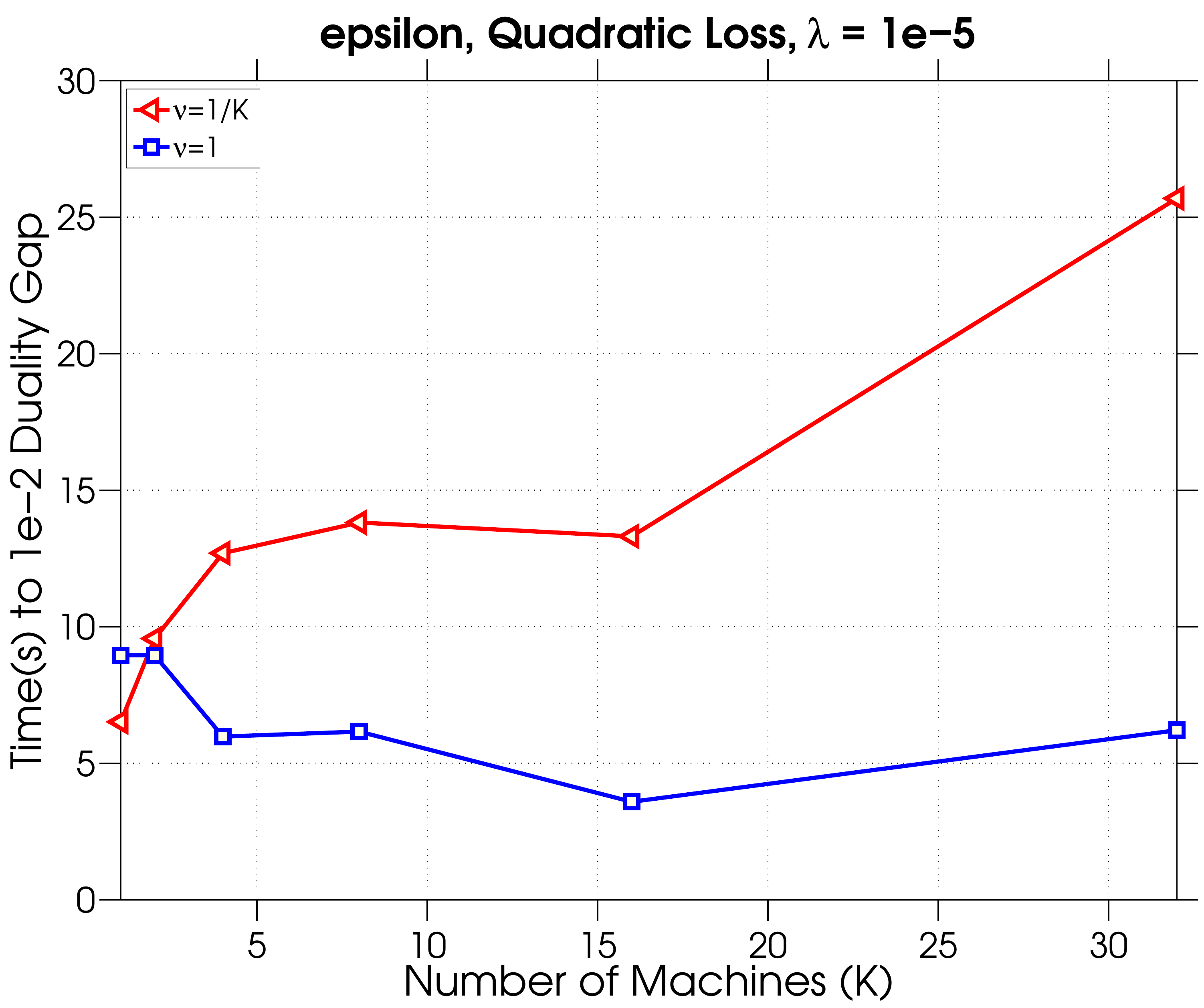}
\caption{The effect of increasing the number of machines $K$ on the time (s) to reach a solution with expected duality gap.} 
\label{fig:scaletest1}
\end{figure}

 {Here we demonstrate the ability of our framework to scale with  $K$ (number of machines). We compare the {runtime} to reach a specific tolerance on duality gap ($10^{-4}$ and $10^{-2}$) for two choices of $\aggpar$. Looking at Figure~\ref{fig:scaletest1}, we see that when choosing $\aggpar=1$, the performance improves as the number of machines increases. However, when $\aggpar = \frac{1}{K}$, the algorithm slows down as $K$ increases. {These} observations support our analysis in Section 4.  }

\subsection{Performance on a Big Dataset}
\label{sec:cocoa:hugeDatasetExp}
As shown in Figure~\ref{fig:hugedata}, we test the algorithm on the \emph{splice-site.t} dataset, whose size is about 280 GB. 
We show experiments for three different loss functions $\ell$, namely logistic loss, hinge loss and least squares loss (see Table \ref{tbl:differentLossFunctions}).
We set $\lambda = 10^{-6}$ for the squared norm regularizer. 
The dataset is distributed  across $K=4$ machines and we use {CD} as the local solver with $H = 50,000$. In all the cases, an optimal solution can be reached in about 20 minutes and again, we observe that setting the aggregation parameter $\nu:=1$ leads to faster convergence than $\nu:=\frac 1 K$ (averaging). 

Also, the number of communication rounds for the three different loss functions are almost the same if we set all the other parameters to be same. However, the {duality gap decreases in a different manner for the three loss functions}.

\begin{figure}[H]
\centering
\includegraphics[scale=.19]{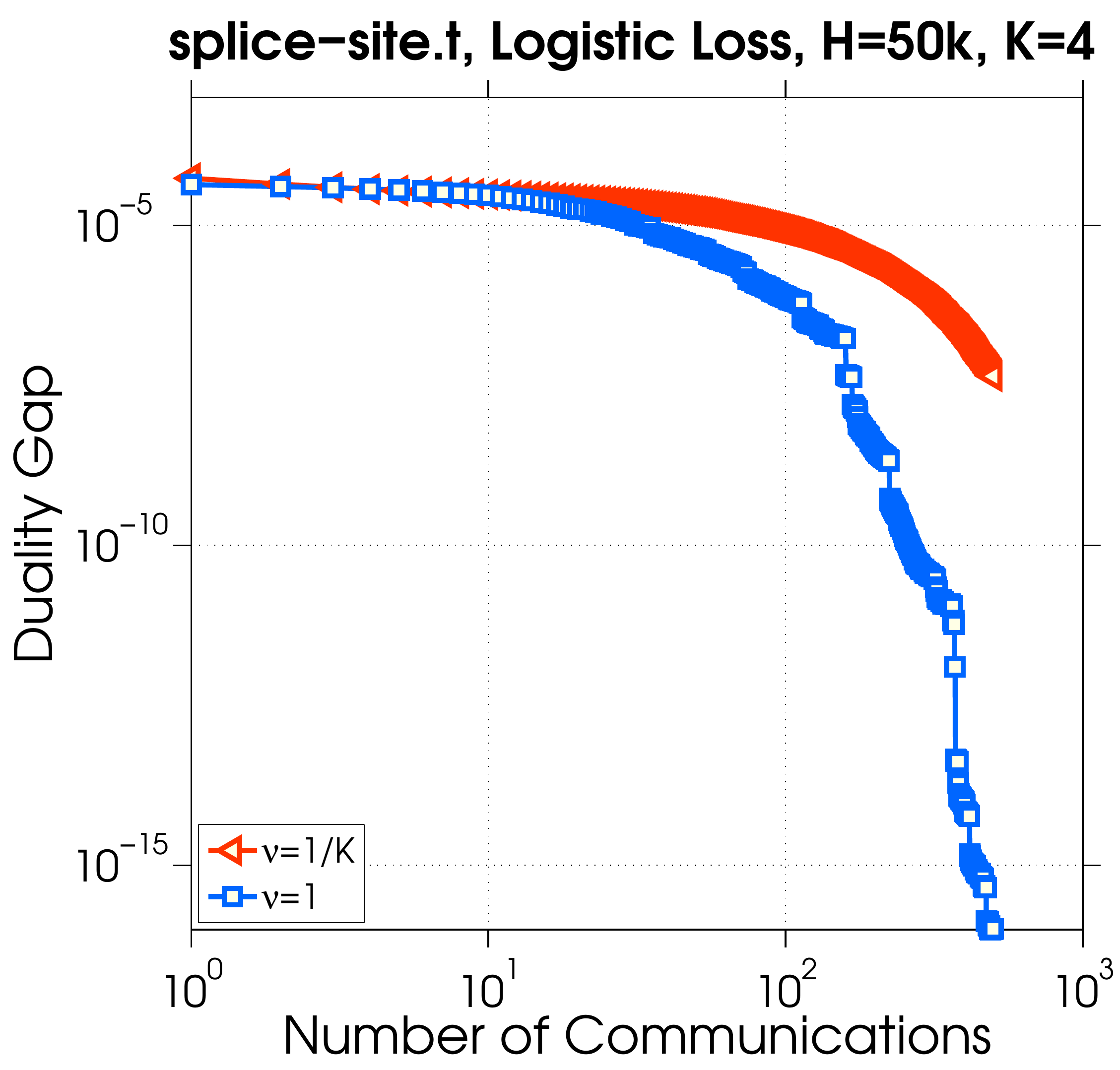}
\includegraphics[scale=.19]{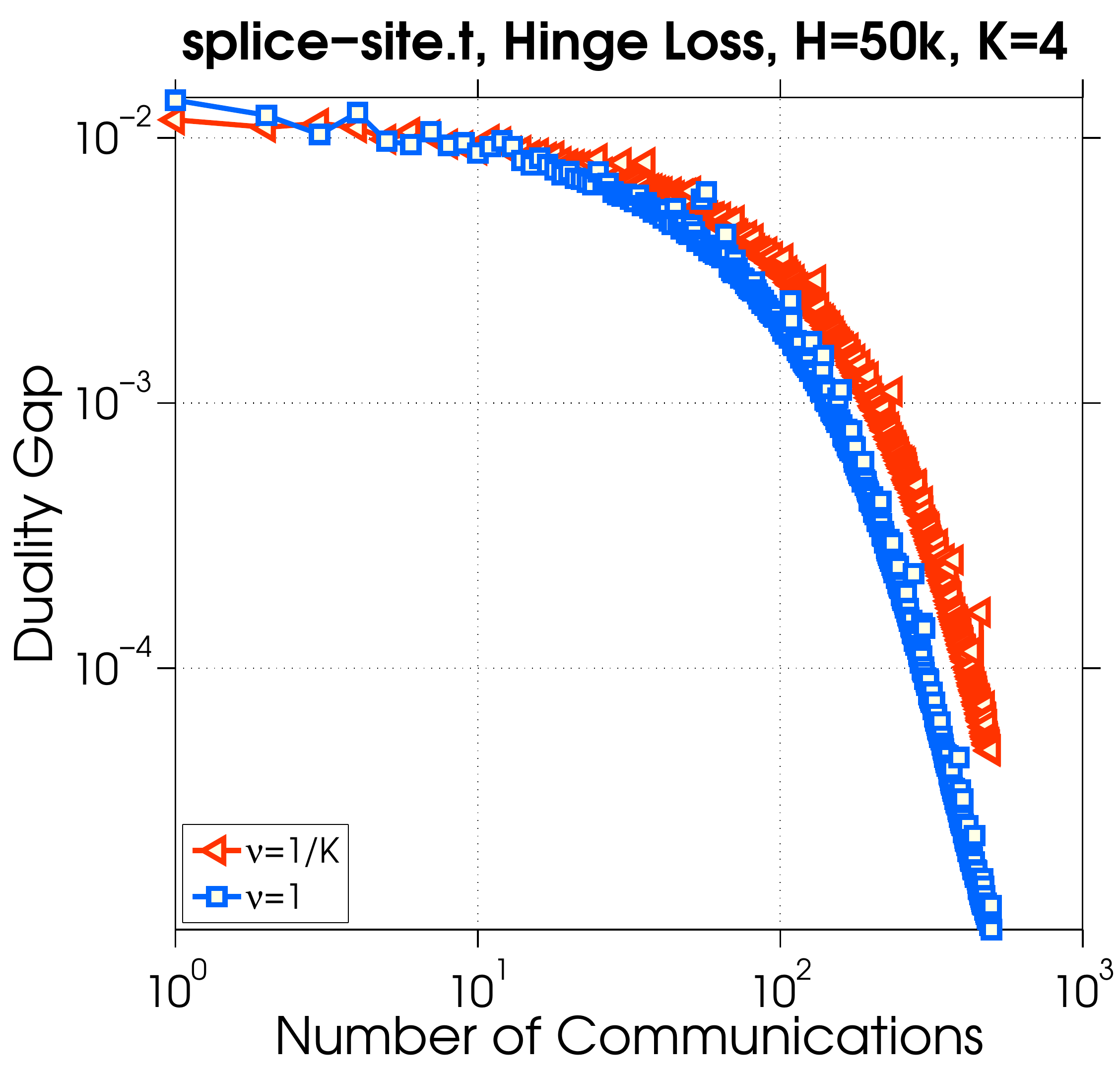}
\includegraphics[scale=.19]{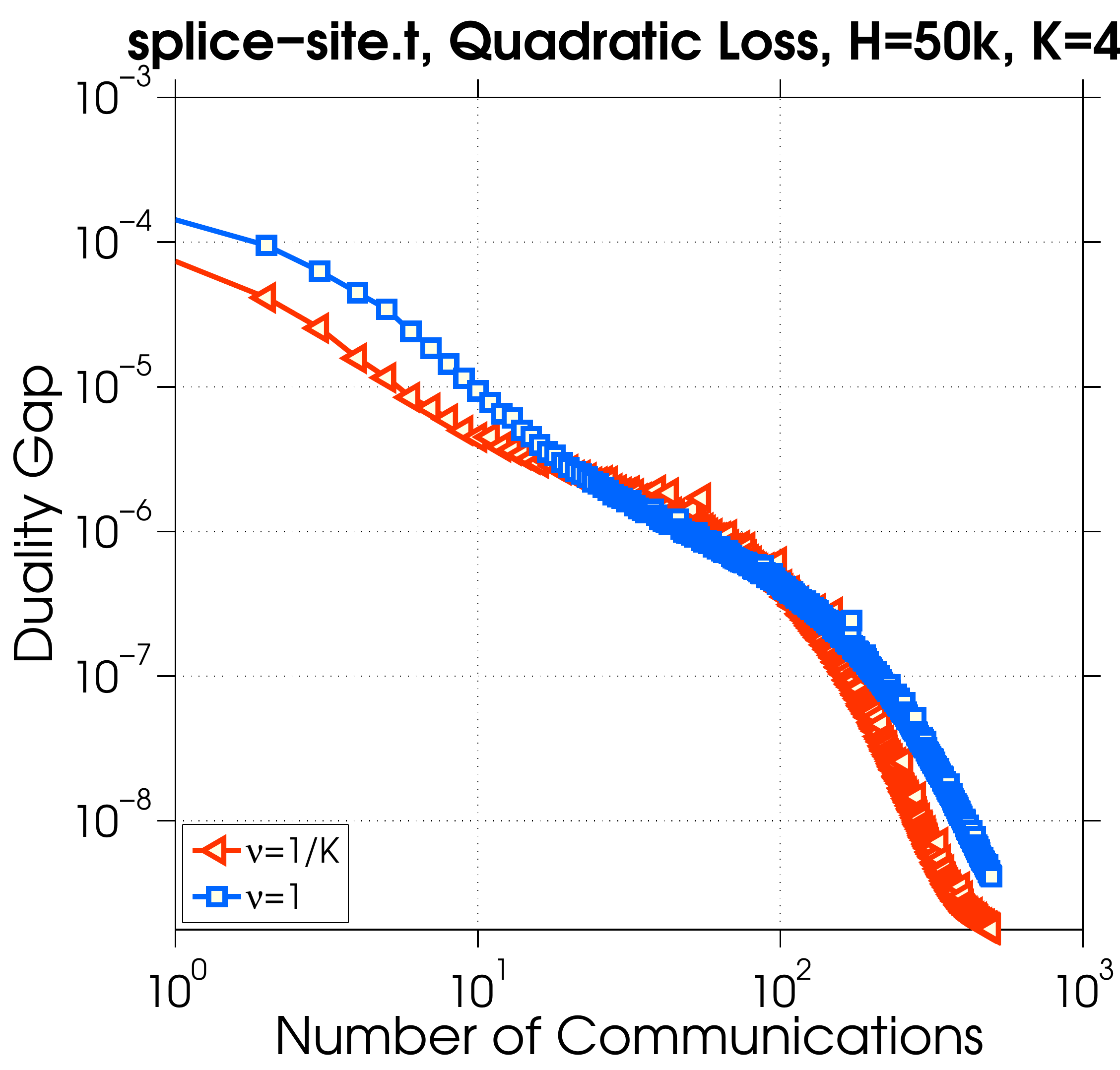}

\includegraphics[scale=.19]{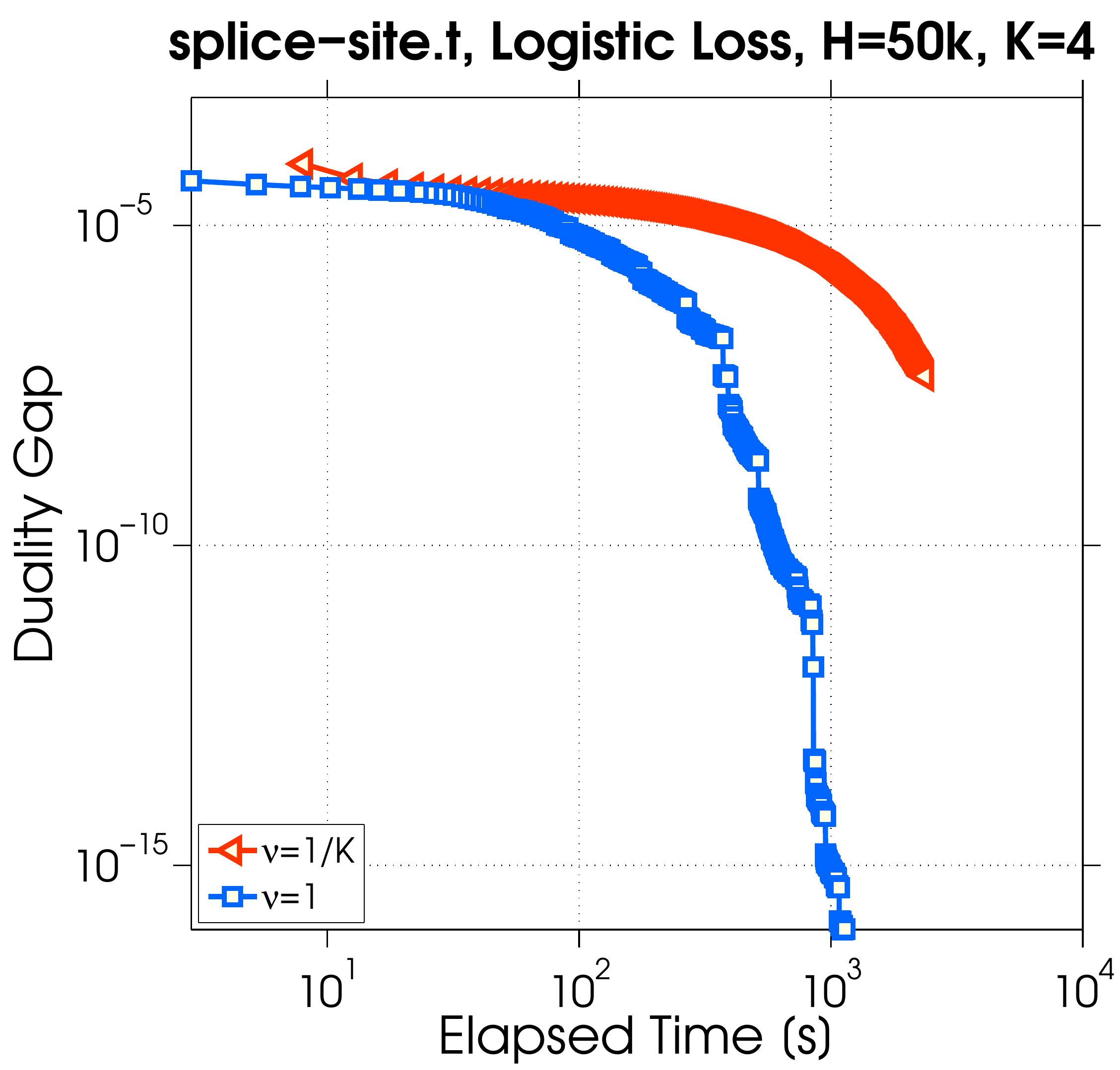}
\includegraphics[scale=.19]{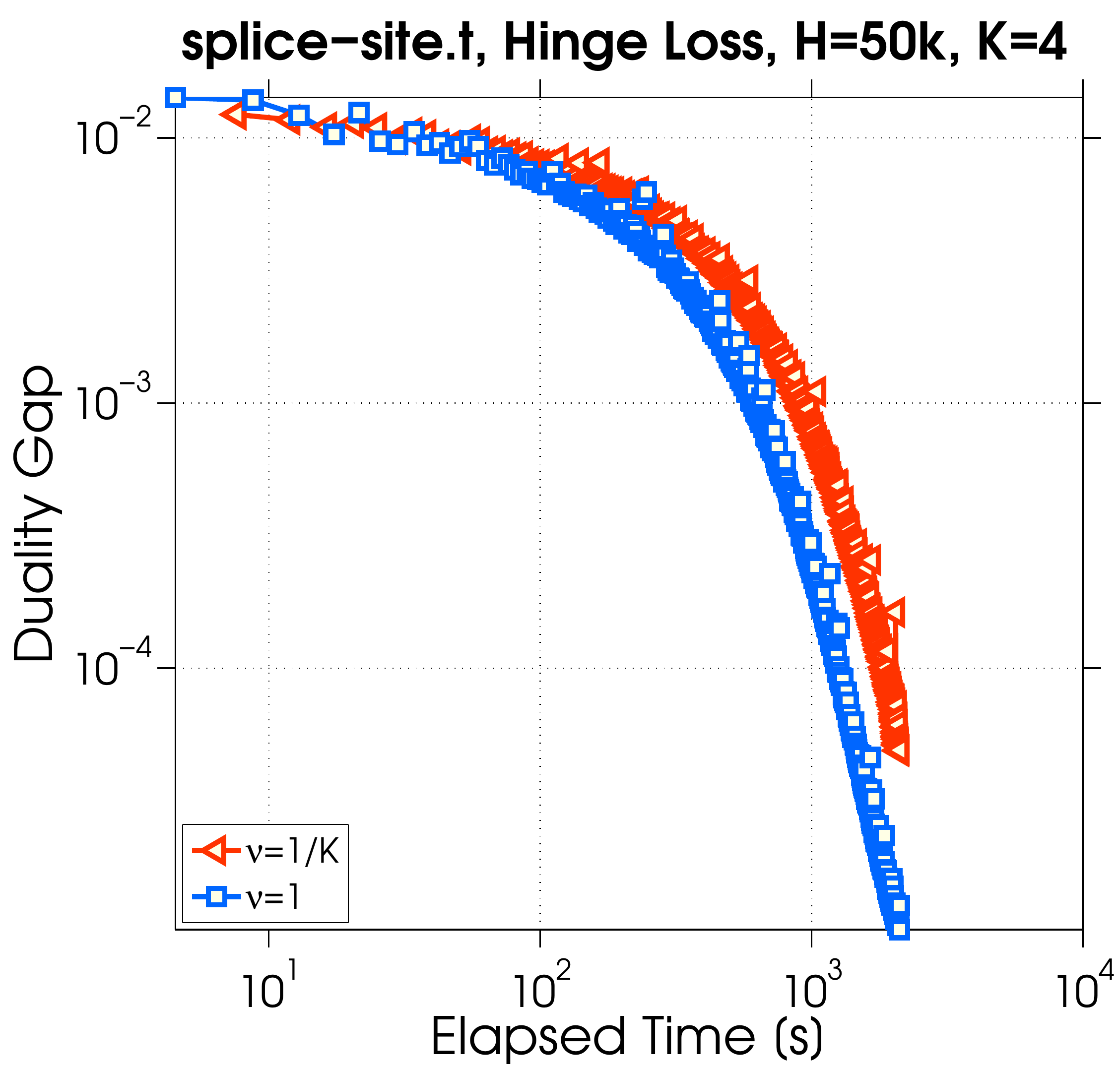}
\includegraphics[scale=.19]{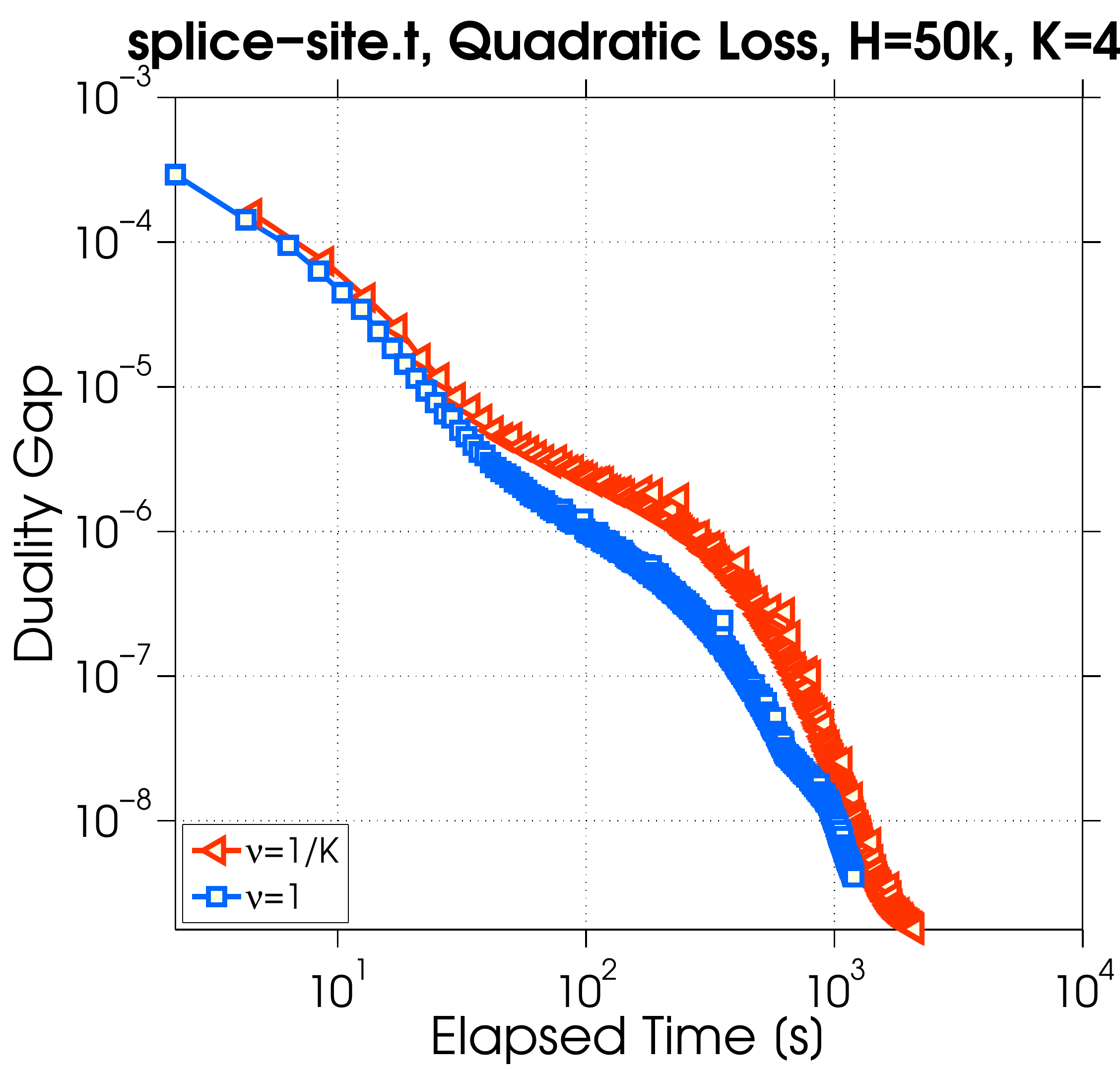}
\caption{Performance of Algorithm \ref{alg:cocoaPractical} on \emph{splice-site.t} dataset, with three different loss functions. } 
\label{fig:hugedata}
\end{figure}

\subsection{Comparison with other distributed methods}

 {Finally, we compare the \cocoap framework with several competing distributed optimization algorithms. The DiSCO algorithm~\cite{zhang2015communication} is a Newton-type method, where in each iteration the updates on iterates are computed inexactly using a Preconditioned Conjugate Gradients (PCG) method. As suggested in \cite{zhang2015communication}, in our implementation of DISCO we apply the Stochastic Average Gradient (SAG) method~\cite{SAGjournal2013}  as the solver to get the initial solutions for each local machine and solve the linear system during PCG.  DiSCO-F~\cite{ma2016distributedDISCO}, improves on the computational efficiency of original DiSCO, {by partitioning the data across} features rather than examples. The DANE algorithm~\cite{DANE} is another distributed Newton-type method, where in each iteration there are two rounds of {communication}. Also, a subproblem is to be solved in each iteration to obtain updates. For {each of} these algorithms, {we tune the hyperparameters manually to optimize} performance.

The experiments are conducted on a compute cluster with $K=4$ machines. We run all algorithms using two datasets. Since not all  methods  are primal-based in nature,  it is difficult to continue using duality gap as a measure of optimality. Instead, the norm of the gradient of the primal objective function \eqref{eq:primal} is used to compare the relative quality of the solutions obtained. 

As shown in Figure~\ref{fig:diffmeth}, in {terms} of {the} number of communications, \cocoap usually converges more rapidly than competing methods during the early  iterations, but tends to get slower later on in the iterative process. This  illustrates that the Newton-type methods can accelerate in the vicinity of the optimal solution, as expected. However, \cocoap can still beat other methods in running time. The main reason for this is the fact that the subproblems in our framework can be solved more efficiently, compared with DiSCO and DANE.}

\begin{figure}[H]
\centering
\includegraphics[scale=.13]{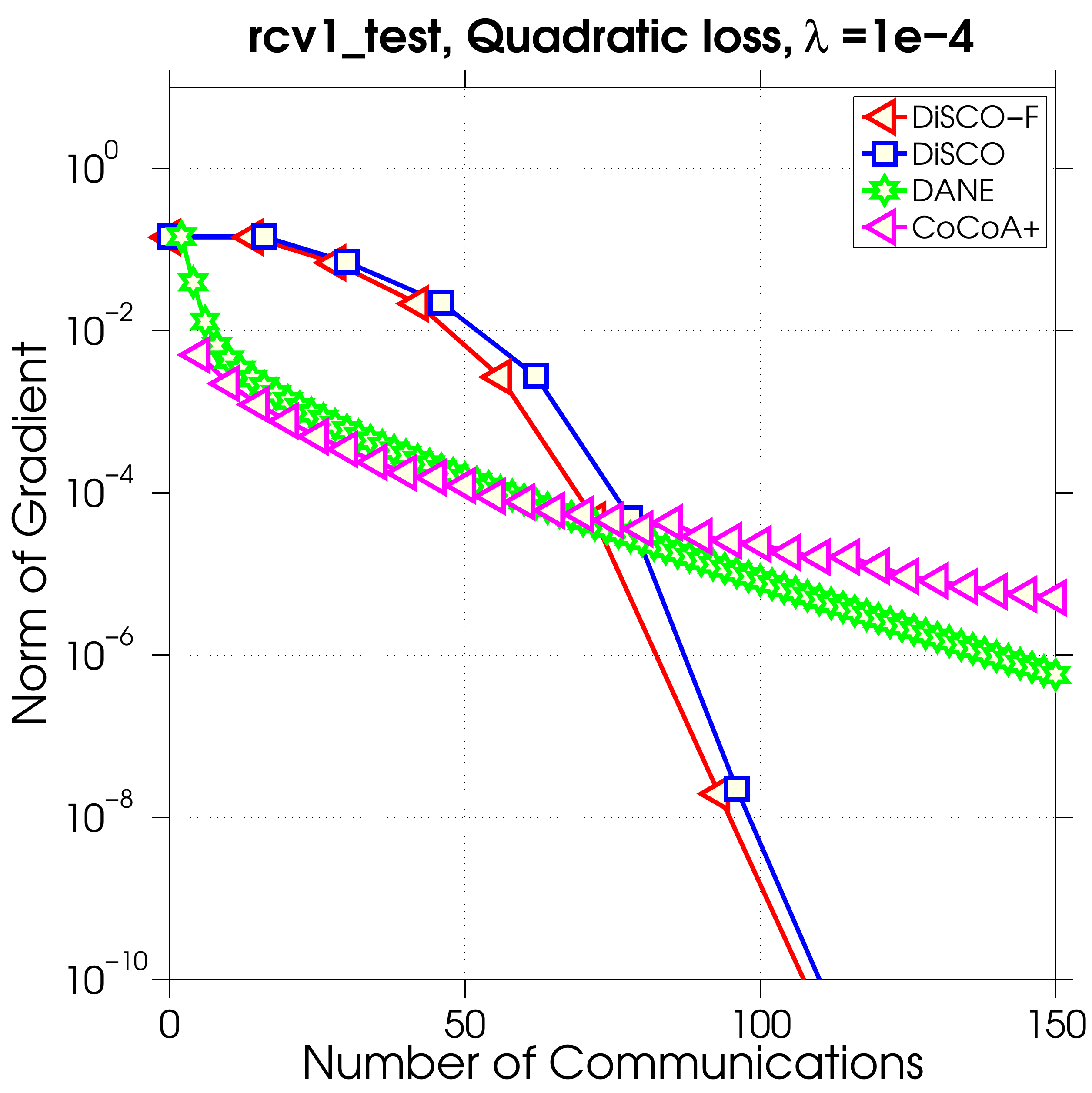}
\includegraphics[scale=.13]{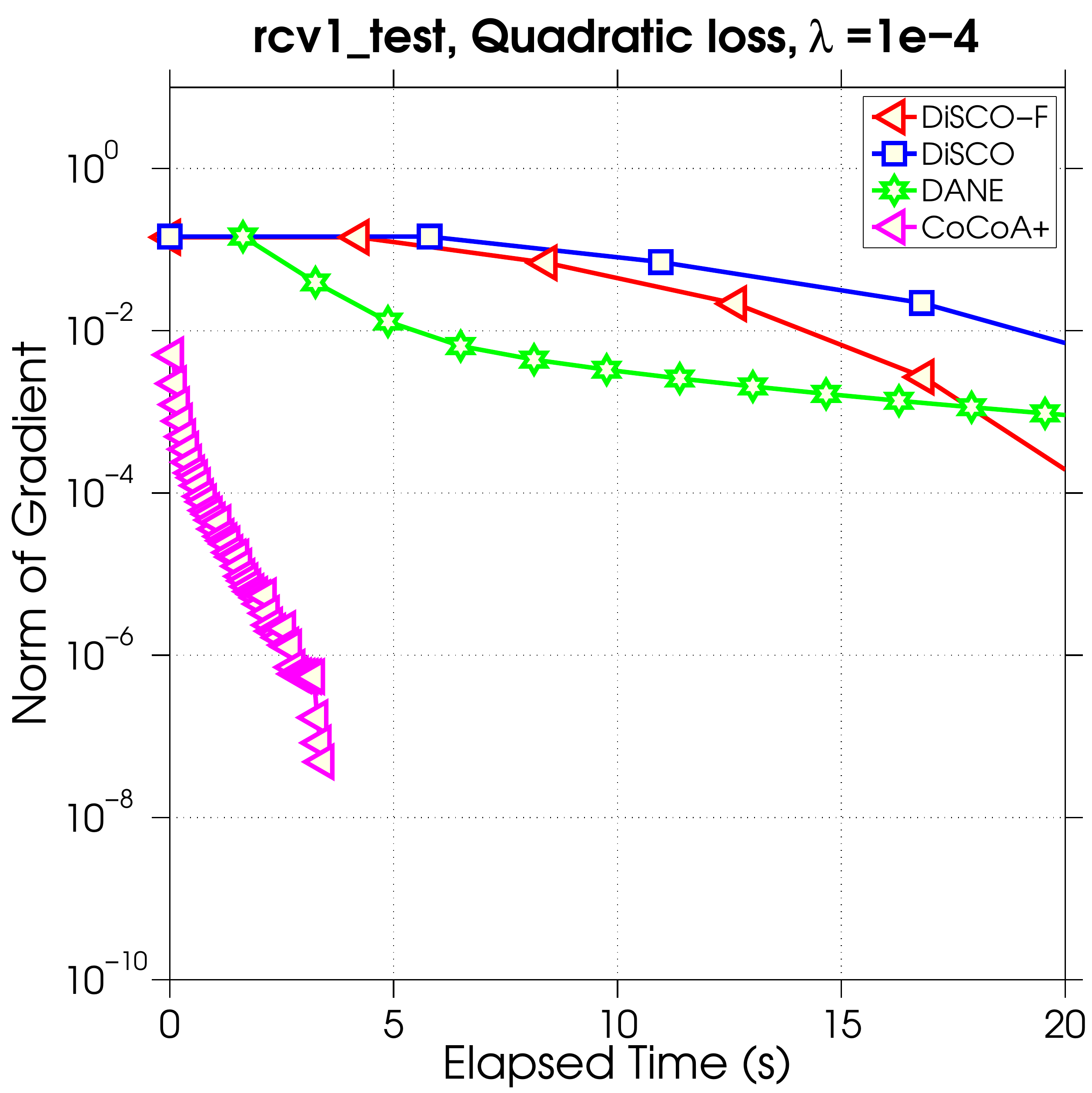}
\includegraphics[scale=.13]{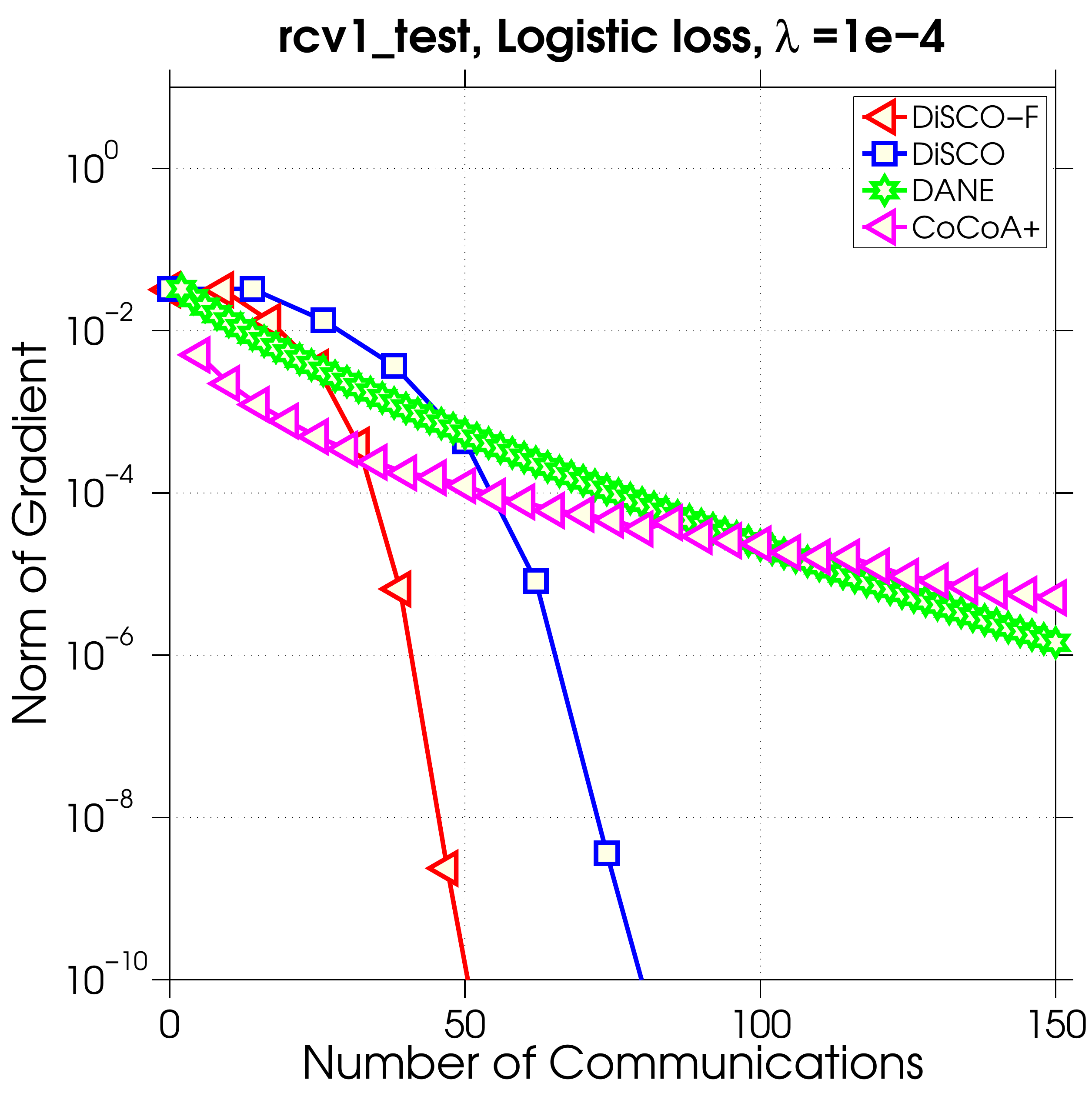}
\includegraphics[scale=.13]{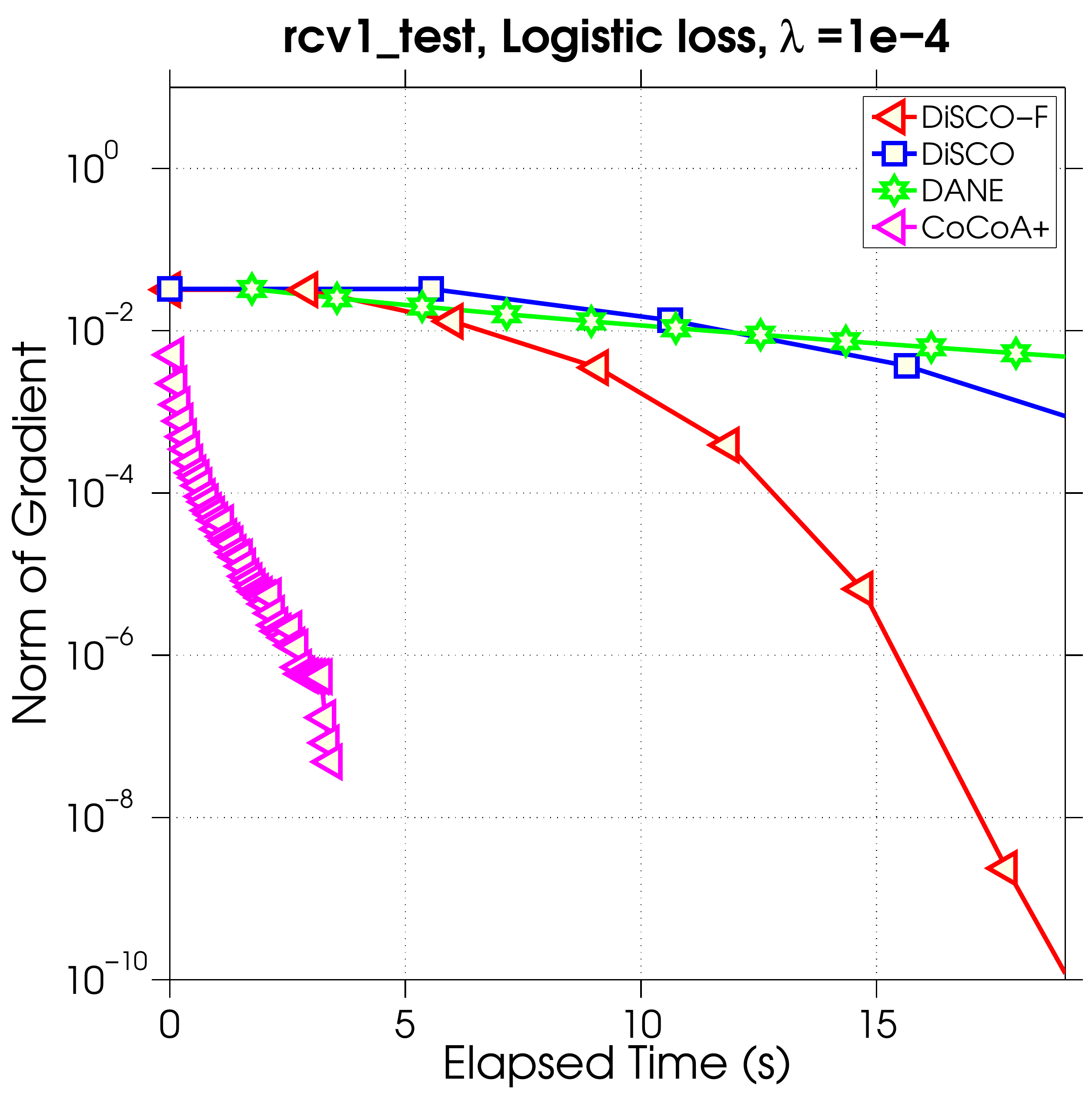}

\includegraphics[scale=.13]{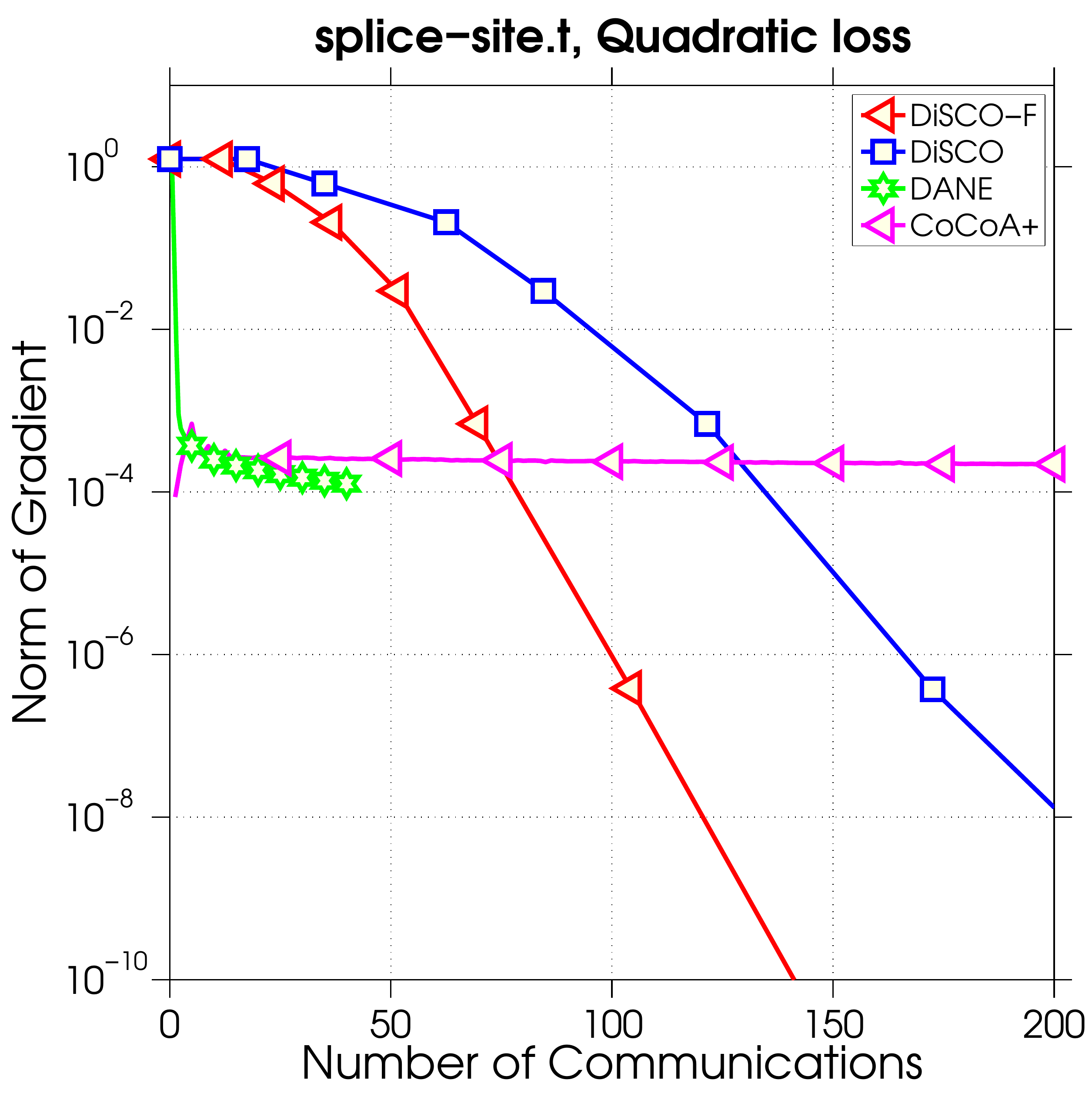}
\includegraphics[scale=.13]{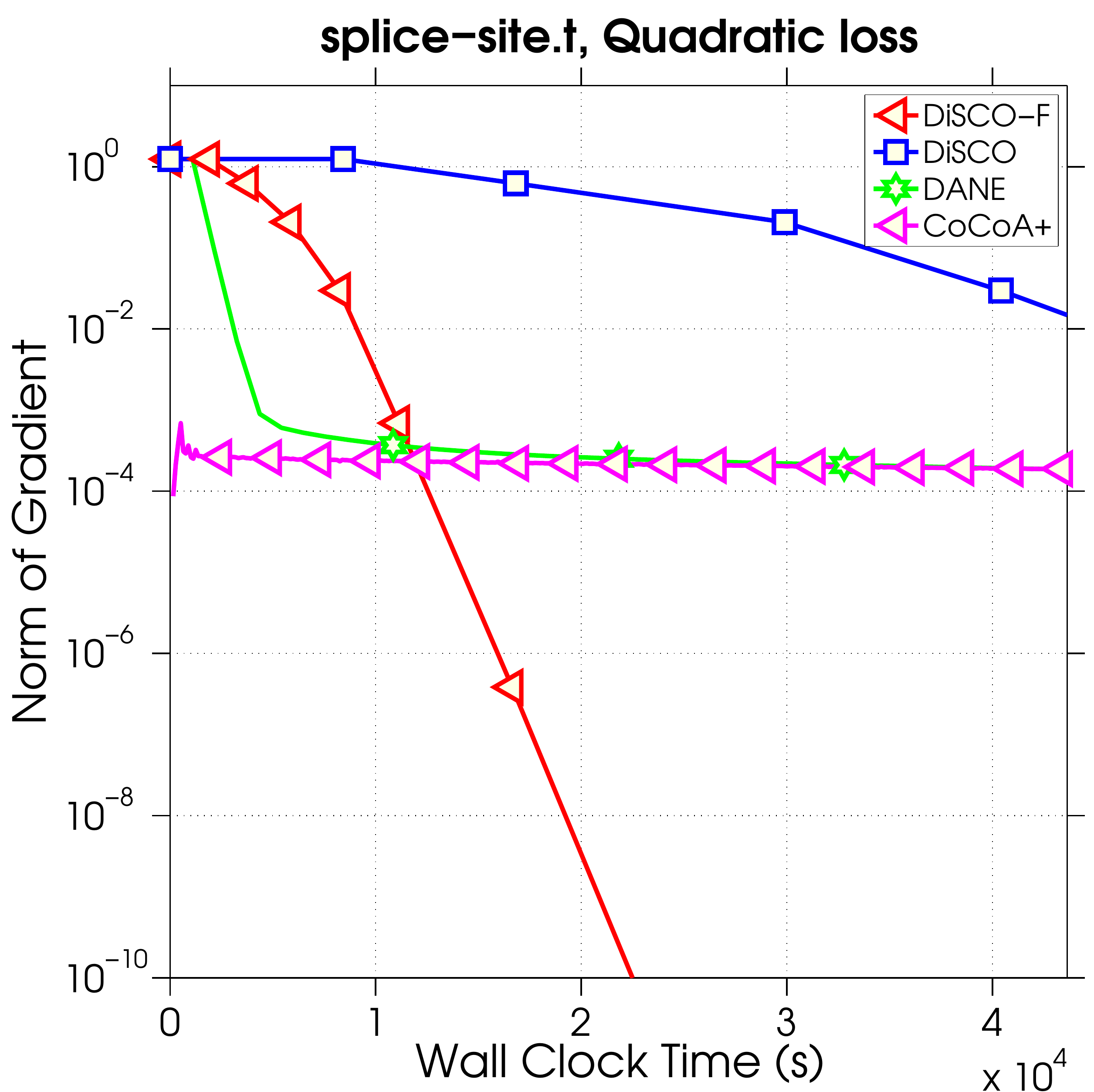}
\includegraphics[scale=.13]{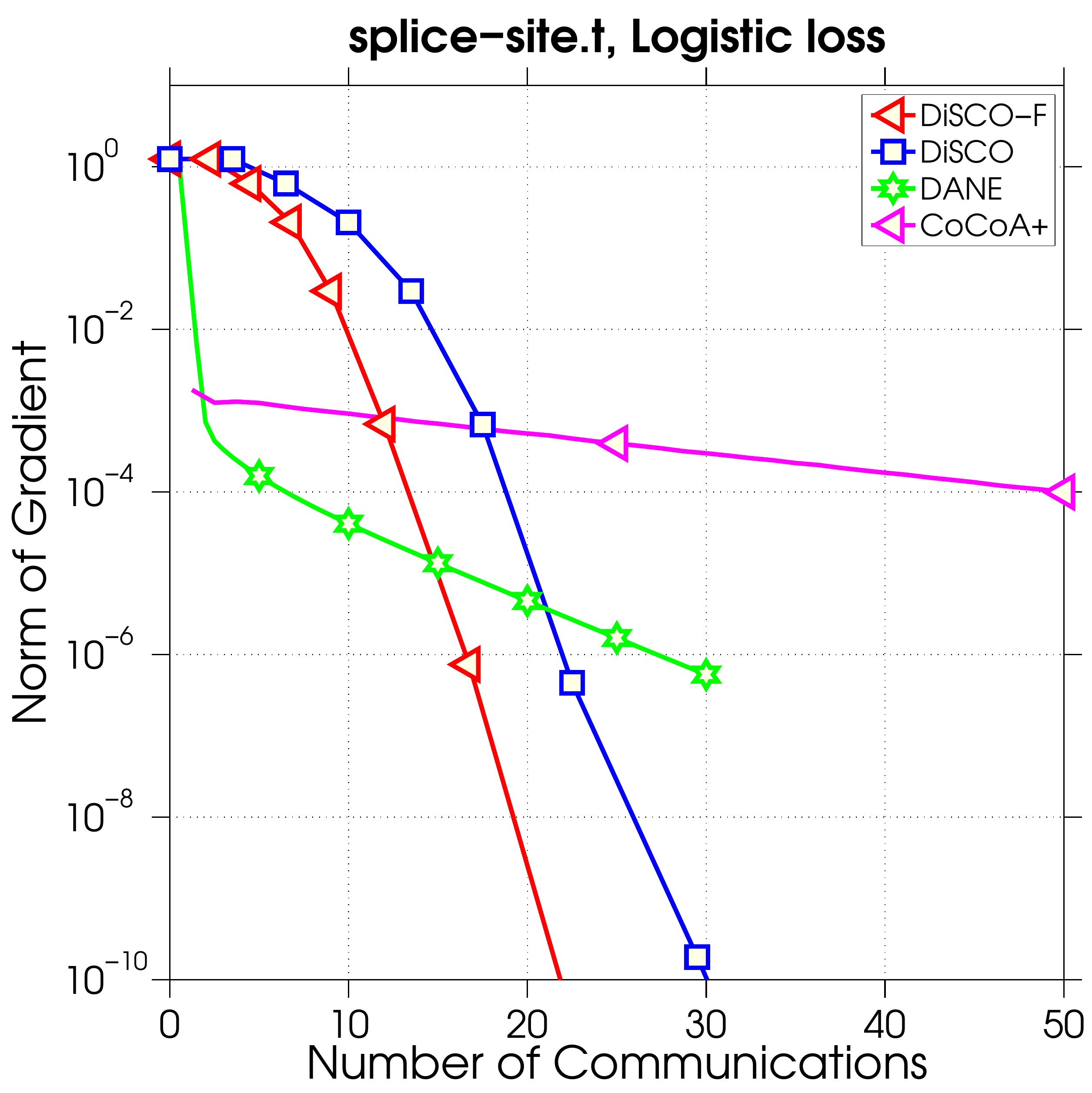}
\includegraphics[scale=.13]{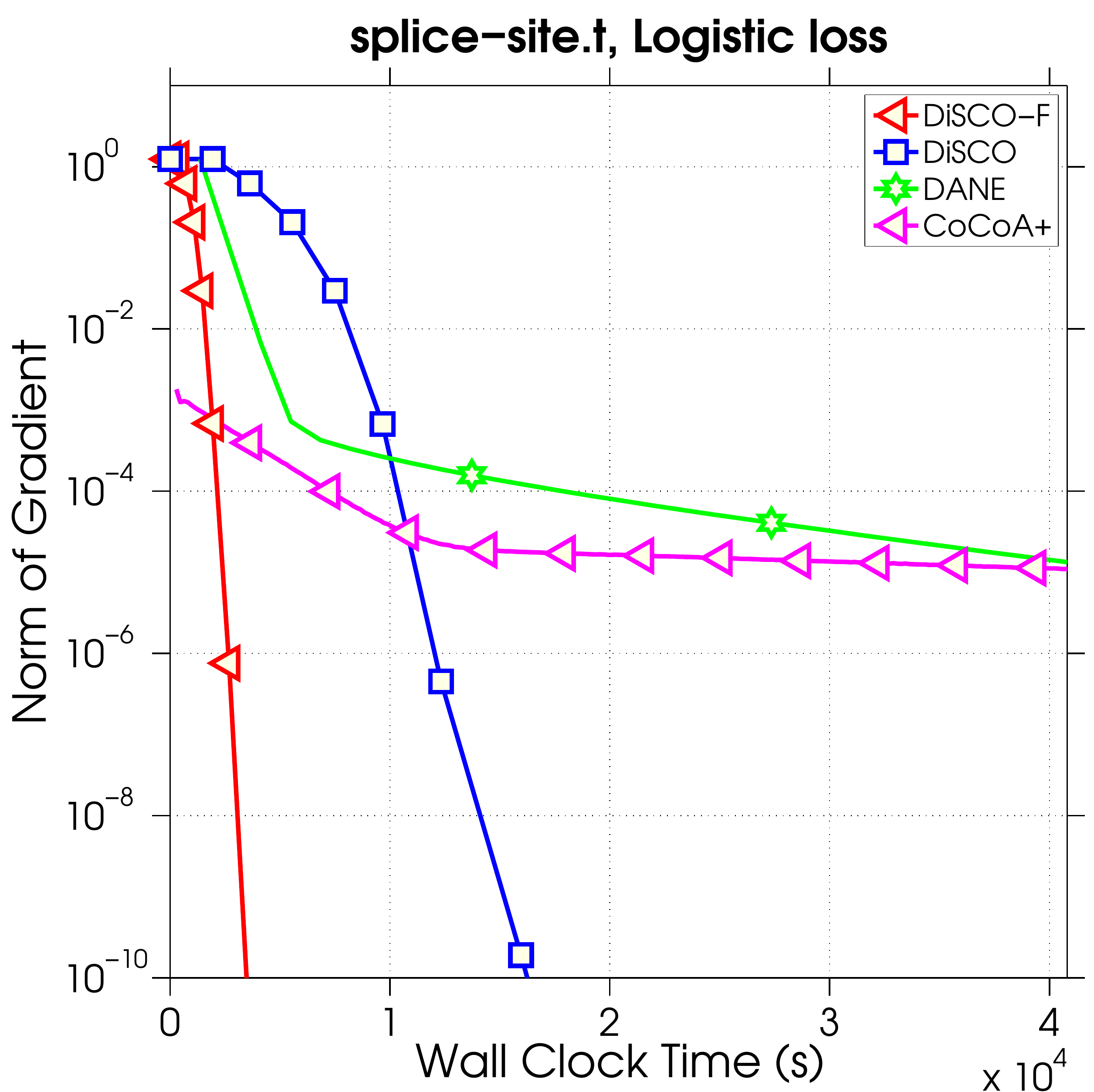}
\caption{Performance of several distributed frameworks on solving \eqref{eq:primal} with different losses on two datasets.} 
\label{fig:diffmeth}
\end{figure}

\section{Conclusion}
We present {\cocoap,} a novel framework {that} enables fast and communication-efficient \textit{additive aggregation} in distributed primal-dual optimization.  We analyze the theoretical complexity  of  \cocoap, giving strong 
primal-dual convergence rates with outer iterations scaling independently of  the number of machines. 
We extended the basic theory to allow for non-smooth loss functions, arbitrary strongly convex regularizers, and primal-dual convergence results. Our 
experimental results show significant speedups in terms of runtime over previous methods, including the 
original \cocoa framework as well as other state-of-the-art methods.

\section{Proofs}

\subsection{Proof of Lemma \ref{lem:quartz}}
 
Since $g$ is $1$-strongly convex, $g^*$ is $1$-smooth, and thus we can use \eqref{def:Lsmoothness:gstar} as follows
\begin{align*}
f(\alphav + \hv) &= \lambda g^*\left( \frac1{\lambda n} X \alphav + \frac1{\lambda n} X \hv \right) \\ 
&\overset{\eqref{def:Lsmoothness:gstar}}{\leq} \lambda \left( g^* \left( \frac1{\lambda n} X \alphav \right) + \< \nabla g^* \left( \frac1{\lambda n} X \alphav \right) , \frac1{\lambda n} X \hv > + \tfrac12 \left\| \frac1{\lambda n} X \hv  \right\|^2 \right) \\ 
&= f(\alphav) + \< \nabla f(\alphav), \hv > + \frac{1}{2 \lambda n^2} \hv^T X^T X \hv.\qedhere 
\end{align*}

\subsection{Proof of Lemma \ref{lem:step:sigma1}}

Indeed, 
\begin{align*}
D(\alphav+\nu \sum_{k=1}^K \hk)
&
=D(\alphav+\nu \hv)
\\
&\overset{\eqref{eq:dual}}{=}
 \frac1n \sum_{i=1}^n -\ell_i^*(- \alpha_i-\nu h_i)
 - \lambda g^* \left( \frac1{\lambda n} X (\alphav+\nu \hv) \right) 
\\
&\overset{\eqref{eq:fRdefinition}}{=}
 \frac1n \sum_{i=1}^n -\ell_i^*(- \alpha_i-\nu h_i)
 - f(\alphav+\nu \hv)  
\\
&\overset{\eqref{eq:quartz}}{\geq} 
 \frac{1-\nu}n \sum_{i=1}^n -\ell_i^*(- \alpha_i)
+ \nu \frac1n \sum_{i=1}^n -\ell_i^*(- \alpha_i-h_i)
\\& \qquad 
 - f(\alphav) - \nu \< \nabla f(\alphav), \hv> -\nu^2 \frac{1}{2 \lambda n^2} \hv^T X^T X \hv
\\
&\overset{\eqref{eq:dual},\eqref{eq:sigmaPrimeSafeDefinition}}{\geq} 
(1-\nu) D(\alphav)  
- \nu \sum_{k=1}^K  R_k \left( \alphak + \hk \right)
\\& \qquad 
 -\nu \frac1K \sum_{k=1}^K f(\alphav) - \nu \sum_{k=1}^K \< \nabla f(\alphav), \hk> -\nu  \sigma' \frac{1}{2 \lambda n^2} \hv^T G \hv
\\
&\overset{\eqref{eq:subproblem:sigma1}}{=}
(1-\nu) D(\alphav)  
+ \nu \frac1K \Gks(\hk; \alphav),
\end{align*}
where the first inequality follows from {Jensen's inequality} and the last equality follows from the block diagonal definition of $\nBG$ given in \eqref{eq:nBGDefinition}, i.e.
\begin{equation}
\label{eq:nBG:XTX:relation}
\hv^T \nBG \hv = \sum_{k = 1}^ K  \hk \Xk^T \Xk \hk.
\end{equation}

\subsection{Proof of Lemma \ref{lem:sigmaPrimeNotBad}}
Considering $\hv\in\R^n$ with zeros in all coordinates except those that belong to the $k$-th block $\mathcal P_k$, we have $\hv^T X^TX \hv = \hv^T \nBG \hv$, and thus $\sigma' \geq \nu$.
Let $\vsub{\hv}{k, l}$ denote $\hk - \vsub{\hv}{l}$. Since $X^TX$
is a positive semi-definite matrix, for $k, l \in \{ 1, \dots, K \}, k \neq l$ we have 
\begin{equation}
\label{eq:hklhelper}
0 \leq \vsub{\hv}{k, l}^T X^TX \vsub{\hv}{k, l} = \hk^T X^TX \hk + \vsub{\hv}{l}^T X^TX \vsub{\hv}{l} - 2 \hk^T X^TX \vsub{\hv}{l}.
\end{equation}
By taking any $\hv \in \R^n$ for which $ \hv^T \nBG \hv \leq 1$, in view of \eqref{eq:sigmaPrimeSafeDefinition}, we get
\begin{align*}
\hv^T X^TX \hv &= \sum_{k=1}^K \sum_{l=1}^K \hv^T_{[k]} X^TX \hv^T_{[l]}
= \sum_{k=1}^K \hk^T X^TX \hk^T + \sum_{k\neq l} \hv^T_{[k]} X^TX \hv^T_{[l]}
\\
&\overset{\eqref{eq:hklhelper}}{\leq} \sum_{k=1}^K \hk^T X^TX \hk^T + \sum_{k \neq l} \frac12 \left[ \hk^T X^TX \hk + \vsub{\hv}{l}^T X^TX \vsub{\hv}{l} \right]
\\
&= K \sum_{k=1}^K \hk^T X^TX \hk = K \hv^T \nBG \hv \leq K.
\end{align*}
Therefore we can conclude that $\nu \hv^TX^TX\hv \leq \nu K$ for all $\hv$ included in the definition~\eqref{eq:sigmaPrimeSafeDefinition} of $\sigma'_{\min}$, proving the claim.

\subsection{Proofs of Theorems~\ref{thm:mainResult}~and~\ref{thm:mainResult:gcc}}
\label{sec:cocoa:analysis}

Before we state the proofs of the main theorems, we will write and prove {a} few crucial lemmas.

\begin{lemma}
\label{lem:basicLemma}
Let $\ell_i^*$ be strongly\footnote{%
Note that the case of weakly convex $\ell_i^*(.)$ is explicitly allowed here as well, as the Lemma holds for the case $\gamma = 0$.
} %
convex with convexity parameter
$\gamma \geq 0$ with respect to the norm $\|\cdot\|$, $\forall i\in[n]$.
Then for all iterations~$t$ of Algorithm~\ref{alg:cocoa} under Assumption~\ref{ass:localImprovement}, and any $s\in [0,1]$, it holds that
\begin{align}
\label{eq:lemma:dualDecrease_VS_dualityGap}
&\E{
\bD(\vc{\alphav}{t+1})
-
\bD(\vc{\alphav}{t})
 }
\geq
\aggpar
(1-\Theta)
 \Big(
 s \gap(\vc{\alphav}{t})
-
\frac{\sigma' s^2}{2\lambda n^2}
\vc{R}{t}
 \Big), \vspace{-2mm}
\end{align}
where\vspace{-2mm}
\begin{align*}
\numberthis \label{eq:defOfR}
\vc{R}{t}&:=
-
\frac{ \lambda\gamma n (1-s)}{\sigma' s }
 \|\vc{\uv}{t}-\vc{\alphav}{t}\|^2 
+ 
\sum_{k=1}^K   
  \left\| X \vsubset{  (\vc{\uv}{t}  - \vc{\alphav}{t} )}{k} \right\|^2,
\end{align*}
for $\vc{\uv}{t} \in\R^n$
with 
\begin{equation}
\label{eq:defintionOfUi}
-\vc{u_i}{t}
 \in \partial \ell_i(\wv(\vc{\alphav}{t})^T \xv_i).
\end{equation}
\end{lemma}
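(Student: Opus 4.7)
The high-level strategy is: (i) turn the algorithm's one-step increase in $D$ into a lower bound involving the sum of local subproblems at the current optimum via Lemma~\ref{lem:RelationOfDTOSubproblems}; (ii) use the local-accuracy assumption to convert the random local updates $\hk^t$ into the exact maximizers $\hk^\star$ of $\Gks(\,\cdot\,;\alphav^t)$ at a controlled cost governed by $\Theta$; (iii) lower-bound $\Gks(\hk^\star)$ by evaluating $\Gks$ on a convenient feasible vector $\hk^s := s(\uk - \alphak)$ for a free parameter $s\in[0,1]$; and (iv) collapse the resulting linear-in-$s$ terms into the duality gap $\gap(\alphav^t)$ via Fenchel--Young duality.

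\emph{Step 1 (translate the update to subproblems).} Applying Lemma~\ref{lem:RelationOfDTOSubproblems} with $\hv := \sum_k \hk^t$ yields
$D(\alphav^{t+1}) - D(\alphav^t) \ge \nu\bigl[\sum_k \Gks(\hk^t;\alphav^t) - D(\alphav^t)\bigr]$, since $\sum_k \Gks(\mathbf 0;\alphav^t) = D(\alphav^t)$. Taking expectations and rearranging Assumption~\ref{ass:localImprovement} as $\E{\Gks(\hk^t)} \ge (1-\Theta)\Gks(\hk^\star) + \Theta\,\Gks(\mathbf 0)$ gives
\[
\E{D(\alphav^{t+1})-D(\alphav^t)} \;\ge\; \nu(1-\Theta)\Bigl[\textstyle\sum_{k}\Gks(\hk^\star;\alphav^t) - D(\alphav^t)\Bigr].
\]

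\emph{Step 2 (test vector and strong convexity).} Since $\hk^\star$ maximizes $\Gks$, for any $s\in[0,1]$ we have $\Gks(\hk^\star) \ge \Gks(s(\uk-\alphak))$. Plugging this test vector into definition~\eqref{eq:subproblem:sigma1} and using $\nabla f(\alphav) = \tfrac1n X^T w(\alphav)$, one obtains
\[
\textstyle\sum_k \Gks(s(\uk-\alphak);\alphav) - D(\alphav) \;=\; sR(\alphav) - s\langle\nabla f(\alphav),\uv-\alphav\rangle - \tfrac{s^2\sigma'}{2\lambda n^2}\sum_k\|X_{[k]}(\uk-\alphak)\|^2 - \bigl[\textstyle\sum_k R_k((1-s)\alphak + s\uk) - (1-s)R(\alphav)\bigr].
\]
For the last bracket, $\gamma$-strong convexity of each $\ell_i^\ast$ (which includes the weakly convex case $\gamma=0$) yields $R_k((1-s)\alphak + s\uk) \le (1-s)R_k(\alphak) + sR_k(\uk) - \tfrac{\gamma s(1-s)}{2n}\|\uk-\alphak\|^2$, giving an extra $+\tfrac{\gamma s(1-s)}{2n}\|\uv-\alphav\|^2$ term which, after factoring $-\tfrac{s^2\sigma'}{2\lambda n^2}$, produces exactly the first summand in $R^t$.

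\emph{Step 3 (linear terms collapse to $\gap(\alphav)$).} The $s$-linear piece equals $s\bigl[R(\alphav) - R(\uv) - \langle\nabla f(\alphav),\uv-\alphav\rangle\bigr]$. By definition~\eqref{eq:defintionOfUi} of $\uv$ and Fenchel--Young equality, $\ell_i(x_i^T w(\alphav)) + \ell_i^\ast(-u_i) + u_i x_i^T w(\alphav) = 0$ for every $i$. Similarly, because $w(\alphav) = \nabla g^\ast(v(\alphav))$, Fenchel--Young equality for $g$ gives $\lambda g(w(\alphav)) + f(\alphav) = \tfrac1n w(\alphav)^T X\alphav$. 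Combining these two identities inside the expression for $\gap(\alphav) = P(w(\alphav)) - D(\alphav)$ shows
\[
\gap(\alphav) \;=\; R(\alphav) - R(\uv) - \langle\nabla f(\alphav),\uv-\alphav\rangle .
\]
This is the key algebraic identity of the proof, and is what I expect to be the trickiest bookkeeping step; it is exactly the reason for the unusual-looking definition of~$\uv$.

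\emph{Step 4 (assemble).} Substituting this identity back into Step 2 gives $\sum_k \Gks(s(\uk-\alphak);\alphav^t) - D(\alphav^t) = s\,\gap(\alphav^t) - \tfrac{\sigma' s^2}{2\lambda n^2} R^t$, and chaining with Step~1 finishes the proof. The only parts that require care are (a) writing $\sum_k R_k((1-s)\alphak + s\uk)$ in a block-separable way so that the quadratic term really organizes as $\sum_k\|X_{[k]}(\uk-\alphak)\|^2$, and (b) the Fenchel--Young collapse in Step 3; everything else is routine substitution.
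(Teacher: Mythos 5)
Your proposal is correct and follows essentially the same route as the paper's proof: apply Lemma~\ref{lem:RelationOfDTOSubproblems} and Assumption~\ref{ass:localImprovement} to reduce to the exact local maximizers, lower-bound them with the test vector $s(\uv^t-\alphav^t)$, invoke $\gamma$-strong convexity of the $\ell_i^*$, and collapse the linear terms to $\gap(\alphav^t)$ via the Fenchel--Young equalities for $\ell_i$ and $g$. The only cosmetic issue is that the displays in Steps 2 and 4 should carry $\geq$ rather than $=$ once the strong-convexity bound on $R_k((1-s)\alphak+s\uk)$ has been applied, but the inequality directions chain correctly so the argument is sound.
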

\begin{proof}
For sake of notation, 
we will write 
$\alphav$ instead of $\vc{\alphav}{t}$,
$\wv$ instead of $\wv(\vc{\alphav}{t})$
and
$\uv$ instead of $\vc{\uv}{t}$.

Now, let us estimate the expected change of the dual objective. 
Using the definition of the dual update $\vc{\alphav}{t+1} := \vc{\alphav}{t} + \aggpar \, \sum_k \hk$ resulting in Algorithm~\ref{alg:cocoaPractical}, we have
\begin{align*}
&\E{ \bD(\vc{\alphav}{t})
 - \bD(\vc{\alphav}{t+1}) }
 =
\E{ \bD(\alphav)
 - \bD(\alphav +
  \aggpar \sum_{k=1}^K
  \hk) }
\\ 
&\overset{\eqref{eq:asfdjalkfjlsaflasdfa}}{\leq}
\E{ \bD(\alphav)
-(1-\aggpar)\bD(\alphav)
-\aggpar 
 \sum_{k=1}^K 
 \Ggk (\hk^t; \alphav)
} \\
&=
\aggpar
\E{
 \bD(\alphav)
- 
 \sum_{k=1}^K 
 \Ggk (\hk^t; \alphav)
}
\\
&
=
\aggpar
\E{
 \bD(\alphav)
 -
 \sum_{k=1}^K 
 \Ggk(\hk^\star; \alphav)
 +
 \sum_{k=1}^K 
 \Ggk(\hk^\star; \alphav)
- 
 \sum_{k=1}^K 
 \Ggk (\hk^t; \alphav)
}
\\ 
&\overset{\eqref{eq:localQualityOfImprovement}}{\leq}
\aggpar
\bigg(
 \bD(\alphav)
 -
 \sum_{k=1}^K 
 \Ggk(\hk^\star; \alphav)
 +
 \Theta
 \Big(
 \sum_{k=1}^K  
 \Ggk(\hk^\star; \alphav)
 -
\underbrace{  \sum_{k=1}^K  
 \Ggk({\bf 0}; \alphav)
 }_{\bD(\alphav)}
 \Big)
\bigg)
\\
&=
\aggpar
(1-\Theta)
\Big(
\underbrace{
 \bD(\alphav)
 -
 \sum_{k=1}^K 
 \Ggk(\hk^\star; \alphav)
 }_{C}
\Big).
\numberthis
\label{eq:Afasfwafewaef}
\end{align*} 
Now, let us upper bound 
the $C$ term 
(we will denote by
$\hv^\star 
 = \sum_{k=1}^K \hk^\star$):
\begin{align*}
C&
\overset{\eqref{eq:dual},
\eqref{eq:subproblem:sigma1}}{=}
   \frac1n 
 \sum_{i =1}^n 
 \left(
\ell_i^*(-\alpha_i - h^*_i)
-\ell_i^*(- \alpha_i)
\right)
 +\< \nabla f(\alphav), \hv >
 + \sum_{k=1}^K 
\frac\lambda2
 \sigma'   \Big\|\frac1{\lambda n} X \hk^\star\Big\|^2
\\
&\leq  
   \frac1n 
 \sum_{i =1}^n 
 \left(
\ell_i^*(-\alpha_i - s (u_i - \alpha_i))
-\ell_i^*(- \alpha_i)
\right)
 +
\< \nabla f(\alphav), s (\uv  - \alphav ) > 
 \\ &\qquad + \sum_{k=1}^K 
\frac\lambda2
 \sigma'   \Big\|\frac1{\lambda n}X\vsubset{s (\uv  - \alphav )}{k}\Big\|^2
\\
&\overset{\mbox{Strong conv.}}{\leq} 
   \frac1n 
 \sum_{i =1}^n 
 \left(
s \ell_i^*(-u_i )
+
(1-s)
\ell_i^*(-\alpha_i )
-
\frac{\gamma}{2}
(1-s)s (u_i -\alpha_i)^2
-\ell_i^*(- \alpha_i)
\right)
\\&\quad\quad\quad\quad\quad   +
\< \nabla f(\alphav), s (\uv  - \alphav ) > 
 + \sum_{k=1}^K 
\frac\lambda2
 \sigma'   \Big\|\frac1{\lambda n} X \vsubset{s (\uv  - \alphav )}{k}\Big\|^2 
\\
&=
   \frac1n 
 \sum_{i =1}^n 
 \left(
s \ell_i^*(-u_i )
  -s 
\ell_i^*(-\alpha_i )
-
\frac{\gamma}{2}
(1-s)s (u_i -\alpha_i)^2
\right)
\\&\qquad 
+\< \nabla f(\alphav), s (\uv  - \alphav ) > 
  + \sum_{k=1}^K 
\frac\lambda2
 \sigma'   \Big\|\frac1{\lambda n} X \vsubset{s (\uv  - \alphav )}{k}\Big\|^2.  
\end{align*}
The convex conjugate maximal property implies that
\begin{equation}
\label{eq:adjwofcewa}
\ell_i^*(-u_i)
\overset{\eqref{eq:defintionOfUi}}{=} -u_i \wv(\alphav)^T \xv_i
  -\ell_i(\wv(\alphav)^T \xv_i).
\end{equation}
Moreover, from the definition of the primal and dual optimization problems \eqref{eq:primal},
\eqref{eq:dual}, we can write the duality gap as
\begin{align}
\notag
\gap(\alphav) &:= P(\wv(\alphav))-\bD(\alphav)
\\
\notag
&\overset{
\eqref{eq:primal},
\eqref{eq:dual}
}{=}
 \frac1{n} 
 \sum_{i=1}^n
 \left(
  \ell_i( \xv_i^T \wv(\alphav)) 
 +  \ell_i^*(- \alpha_i)
  \right)
    +\lambda g(\wv(\alphav)) + \lambda g^*(\vv(\alphav))   
\\
\notag
&= \frac1{n} 
 \sum_{i=1}^n
 \left(
  \ell_i( \xv_i^T \wv(\alphav)) 
 +  \ell_i^*(- \alpha_i)
  \right)
    +\lambda g(\nabla g^*(\vv(\alphav))) + \lambda g^*(\vv(\alphav))  
\\
\notag
&= \frac1{n} 
 \sum_{i=1}^n
 \left(
  \ell_i( \xv_i^T \wv(\alphav)) 
 +  \ell_i^*(- \alpha_i)
  \right)
    +\lambda \vv(\alphav)^T \wv(\alphav)
\\
\label{eq:asdfjiwjfeojawfa}
&= \frac1{n} 
 \sum_{i=1}^n
 \left(
  \ell_i( \xv_i^T \wv(\alphav)) 
 +  \ell_i^*(- \alpha_i)
 +  \wv(\alphav)^T\xv_i \alpha_i
  \right).    
\end{align}
Hence,
\begin{align*}
C
&\overset{
\eqref{eq:adjwofcewa}}
{\leq}
  \frac1n 
 \sum_{i =1}^n 
 \left( 
-s u_i \wv(\alphav)^T \xv_i
  -s\ell_i(\wv(\alphav)^T \xv_i)
  -s 
\ell_i^*(-\alpha_i )
\underbrace{-s \wv(\alphav)^T \xv_i \alpha_i
+s \wv(\alphav)^T \xv_i \alpha_i
}_{0}
\right)
\\&\qquad 
+ \frac1n \sum_{i=1}^n \left(-
\frac{\gamma}{2}
(1-s)s (u_i -\alpha_i)^2
\right)
 +\< \nabla f(\alphav), s (\uv  - \alphav ) > 
 + \sum_{k=1}^K 
\frac\lambda2
 \sigma'   \Big\|\frac1{\lambda n} X \vsubset{s (\uv  - \alphav )}{k}\Big\|^2 
\\
&=
  \frac1n 
 \sum_{i =1}^n 
 \left( 
  -s\ell_i(\wv(\alphav)^T \xv_i)
  -s\ell_i^*(-\alpha_i )
  -s \wv(\alphav)^T \xv_i \alpha_i
\right)
\\ &\qquad +
  \frac1n 
 \sum_{i =1}^n 
 \left(  s \wv(\alphav)^T \xv_i
( \alpha_i-u_i )
 -
\frac{\gamma}{2}
(1-s)s (u_i -\alpha_i)^2
\right)
\\&\qquad  +\frac1n  
\wv(\alphav)^T X  s (\uv  - \alphav )
 + \sum_{k=1}^K 
\frac\lambda2
 \sigma'   \Big\|\frac1{\lambda n} X \vsubset{s (\uv  - \alphav )}{k}\Big\|^2  
\\
&\overset{\eqref{eq:asdfjiwjfeojawfa}}{=}
 -s \gap(\alphav)
-
\frac{\gamma}{2}
(1-s)s 
  \frac1n 
 \sum_{i =1}^n 
 \|\uv-\alphav\|^2 
 + 
\frac{\sigma' s^2}{2\lambda n^2}
\sum_{k=1}^K   
  \| X \vsubset{  (\uv  - \alphav )}{k}\|^2.
  \numberthis 
  \label{eq:asdfafdas}
 \end{align*}
Now, the claimed improvement bound
\eqref{eq:lemma:dualDecrease_VS_dualityGap}
follows
by plugging 
\eqref{eq:asdfafdas}
into \eqref{eq:Afasfwafewaef}.
\end{proof}

\begin{lemma}
\label{lemma:BoundOnR}
If $\ell_i$ are $L$-Lipschitz 
continuous for all $i\in [n]$, then\vspace{-3mm}
\begin{equation}
\label{eq:asfjoewjofa}
\forall t: 
\vc{R}{t}
\leq 4L^2 
\underbrace{\sum _{k=1}^K 
\sigma_k  |\mathcal{P}_k|}_{=: \sigma}, \vspace{-2mm} 
\end{equation}
where\vspace{-1mm}
\begin{equation}
\label{eq:definitionOfSigmaK}
\sigma_k \eqdef
 \max_{\vsubset{\alphav}{k} \in \R^n}
 \frac{\|\Xk \vsubset{\alphav}{k}\|^2}{
 \|\vsubset{\alphav}{k}\|^2}.
\end{equation}
\end{lemma}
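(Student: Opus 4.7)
The plan is to bound $R^{(t)}$ by specializing the general expression \eqref{eq:defOfR} to the Lipschitz setting and then estimating the two factors (the matrix-norm term and the coefficient vector) separately.

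First, I would observe that when $\ell_i$ is merely $L$-Lipschitz continuous (without smoothness), the conjugates $\ell_i^*$ are only weakly convex, i.e.\ $\gamma = 0$. In this regime, the first summand of $R^{(t)}$ in \eqref{eq:defOfR}, namely $-\tfrac{\lambda \gamma n (1-s)}{\sigma' s}\|u^{(t)} - \alpha^{(t)}\|^2$, vanishes identically, so it suffices to bound
\[
R^{(t)} \;=\; \sum_{k=1}^K \bigl\| X\,(u^{(t)} - \alpha^{(t)})_{[k]} \bigr\|^2.
\]
Since $(u^{(t)} - \alpha^{(t)})_{[k]}$ is supported on $\mathcal{P}_k$, we have $X(u^{(t)} - \alpha^{(t)})_{[k]} = X_{[k]} (u^{(t)} - \alpha^{(t)})_{[k]}$, and the definition \eqref{eq:definitionOfSigmaK} of $\sigma_k$ immediately yields
\[
\bigl\| X_{[k]} (u^{(t)} - \alpha^{(t)})_{[k]} \bigr\|^2 \;\leq\; \sigma_k \bigl\|(u^{(t)} - \alpha^{(t)})_{[k]}\bigr\|^2 \;=\; \sigma_k \sum_{i \in \mathcal{P}_k} (u_i^{(t)} - \alpha_i^{(t)})^2.
\]

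Next, I would bound each scalar $|u_i^{(t)} - \alpha_i^{(t)}|$ by $2L$. This is the main step and relies on a standard conjugate-duality fact: if $\ell_i$ is $L$-Lipschitz continuous on $\mathbb{R}$, then $\operatorname{dom}(\ell_i^*) \subseteq [-L, L]$, since any $s$ with $|s|>L$ makes the supremum defining $\ell_i^*(s)$ infinite. Because $\alpha^{(t)}$ lies in the effective domain of the dual objective $D$, each dual coordinate satisfies $-\alpha_i^{(t)} \in [-L,L]$, hence $|\alpha_i^{(t)}| \leq L$. Similarly, by \eqref{eq:defintionOfUi} the quantity $-u_i^{(t)}$ is a subgradient of $\ell_i$ at $w(\alpha^{(t)})^T x_i$, and subgradients of an $L$-Lipschitz convex function are bounded in absolute value by $L$, giving $|u_i^{(t)}| \leq L$. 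The triangle inequality then yields $(u_i^{(t)} - \alpha_i^{(t)})^2 \leq 4L^2$.

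Combining these, I get
\[
R^{(t)} \;\leq\; \sum_{k=1}^K \sigma_k \sum_{i\in \mathcal{P}_k} (u_i^{(t)} - \alpha_i^{(t)})^2 \;\leq\; \sum_{k=1}^K \sigma_k \cdot 4L^2 |\mathcal{P}_k| \;=\; 4L^2 \sigma,
\]
which is the desired bound. The only nontrivial step is the dual-domain argument giving $|\alpha_i^{(t)}|, |u_i^{(t)}| \leq L$; the rest is a clean chain of definitions and the triangle inequality.
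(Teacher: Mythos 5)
Your proposal is correct and follows essentially the same route as the paper: drop the $\gamma=0$ term, apply the definition of $\sigma_k$ blockwise, and bound $\|(\uv^t-\alphav^t)_{[k]}\|^2$ by $4L^2|\mathcal{P}_k|$. The only difference is that the paper handles the last step by citing Lemma~3 of \cite{SDCA}, whereas you prove that fact inline — correctly — via $\dom(\ell_i^*)\subseteq[-L,L]$ (so $|\alpha_i^t|\le L$ for any dual-feasible iterate) and the $L$-bound on subgradients of an $L$-Lipschitz function (so $|u_i^t|\le L$), which is precisely the content of the cited lemma.
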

\begin{proof}
For general convex functions, the strong convexity parameter is 
$\gamma=0$, and hence the definition of $\vc{R}{t}$ becomes
\begin{align*} 
\vc{R}{t}
\overset{\eqref{eq:defOfR}}{=}
  \sum _{k=1}^K   
  \| X \vsubset{  (\vc{\uv} {t} - \vc{\alphav}{t} )}{k}\|^2
\overset{\eqref{eq:definitionOfSigmaK}}{\leq}   
\sum _{k=1}^K 
\sigma_k  
  \|   \vsubset{  (\vc{\uv} {t} - \vc{\alphav}{t} )}{k}\|^2
\overset{\mbox{\cite[Lemma 3]{SDCA}}}{\leq}   
\sum _{k=1}^K 
\sigma_k  |\mathcal{P}_k| 4L^2. 
\end{align*}
\end{proof}

\subsubsection{Proof of Theorem \ref{thm:convergenceNonsmooth}}

At first let us estimate expected change of dual feasibility. By using the main Lemma \ref{lem:basicLemma}, we have
\begin{align*} 
 &\E{ \bD(\alphav^\star)-\bD(\vc{\alphav}{t+1}) }
 =
\E{ \bD(\alphav^\star)-\bD(\vc{\alphav}{t+1})+\bD(\vc{\alphav}{t})-\bD(\vc{\alphav}{t}) }
\\
&
\overset{\eqref{eq:lemma:dualDecrease_VS_dualityGap}
}{=}
\bD(\alphav^\star)-\bD(\vc{\alphav}{t})
-\aggpar
(1-\Theta)  
 s \gap(\vc{\alphav}{t})
+
\aggpar
(1-\Theta)
\frac{\sigma'}{2\lambda }
\left( \frac s{  n} \right)^2
\vc{R}{t}
\\
&
\overset{\eqref{eq:gap}
}{=}
\bD(\alphav^\star)-\bD(\vc{\alphav}{t})
-\aggpar
(1-\Theta)
   s  (P(\wv(\vc{\alphav}{t}))-\bD(\vc{\alphav}{t}))
+
\aggpar
(1-\Theta)  \frac{\sigma'}{2\lambda }
\left( \frac s{  n} \right)^2
\vc{R}{t} 
\\
&\leq
\bD(\alphav^\star)-\bD(\vc{\alphav}{t})
-\aggpar
(1-\Theta)
 s  (\bD(\alphav^\star )-\bD(\vc{\alphav}{t}) )
+
\aggpar
(1-\Theta) 
\frac{\sigma'}{2\lambda }
\left( \frac s{  n} \right)^2
\vc{R}{t} \\
&
\overset{\eqref{eq:asfjoewjofa}}{\leq} 
\left( 
 1-\aggpar
(1-\Theta)
   s
\right) 
   (\bD(\alphav^\star )-\bD(\vc{\alphav}{t}))
+
\aggpar
(1-\Theta) 
\frac{\sigma'}{2\lambda }
\left( \frac s{  n} \right)^2
4L^2  \sigma.
\numberthis 
\label{eq:asoifejwofa}
\end{align*}
Using
\eqref{eq:asoifejwofa}
recursively we have 
 \begin{align*} 
 &\E{ \bD(\alphav^\star)-\bD(\vc{\alphav}{t}) }
 =
\left( 
 1-\aggpar
(1-\Theta)
   s
\right)^t 
   (\bD(\alphav^\star )-\bD(\vc{\alphav}{0}))
\\&\qquad \qquad\qquad \qquad +
\aggpar
(1-\Theta) 
\frac{\sigma'}{2\lambda }
\left(\frac s{  n}\right)^2
4L^2  \sigma 
\sum_{j=0}^{t-1}
\left( 
 1-\aggpar
(1-\Theta)
   s
\right)^j
\\
&=
\left( 
 1-\aggpar
(1-\Theta)
   s
\right)^t 
   (\bD(\alphav^\star )-\bD(\vc{\alphav}{0}))
+
\aggpar
(1-\Theta) 
\frac{\sigma'}{2\lambda }
\left( \frac s{  n} \right)^2
4L^2  \sigma 
\frac{1-\left( 
 1-\aggpar
(1-\Theta)
   s
\right)^t}
     { 
  \aggpar
(1-\Theta)
   s }
\\
&\leq
\left( 
 1-\aggpar
(1-\Theta)
   s
\right)^t 
   (\bD(\alphav^\star )-\bD(\vc{\alphav}{0}))
+
 s
\frac{4L^2  \sigma   \sigma'}{2\lambda n^2}. 
\numberthis
\label{eq:asfwefcaw}  
 \end{align*}
The choice of 
$s:=1$ and $t= t_0:= \max\{0,\lceil  
\frac1{\aggpar (1-\Theta)}
\log(
 2\lambda n^2 (\bD(\alphav^\star )-\bD(\vc{\alphav}{0}))
  / (4 L^2 \sigma \sigma')
  )
 \rceil\}$
will lead to 
\begin{align}\label{eq:induction_step1}
\E{ \bD(\alphav^\star)-\bD(\vc{\alphav}{t}) }
 &\leq  
\left( 
 1-\aggpar
(1-\Theta)  
\right)^{t_0}
  (\bD(\alphav^\star )-\bD(\vc{\alphav}{0}))
+ 
\frac{4L^2  \sigma   \sigma'}{2\lambda n^2}
\notag 
\\&\leq 
\frac{4L^2  \sigma   \sigma'}{2\lambda n^2}
+
\frac{4L^2  \sigma   \sigma'}{2\lambda n^2}
=
\frac{4L^2  \sigma   \sigma'}{\lambda n^2}.
\end{align} 
Now, we are going to show that 
\begin{align}
\label{eq:expectationOfDualFeasibility}
\forall t\geq t_0 :  \E{ \bD(\alphav^\star )-\bD(\vc{\alphav}{t}) }
&\leq 
\frac{4L^2  \sigma   \sigma'}{\lambda n^2( 1+ \frac12  \aggpar (1-\Theta)  (t-t_0))}.
\end{align}
Clearly, \eqref{eq:induction_step1} implies that \eqref{eq:expectationOfDualFeasibility} holds for $t=t_0$.
Now imagine that it holds for any $t\geq t_0$ then we show that it also has to hold for $t+1$. 
Indeed, using 
\begin{equation}
\label{eq:asdfjoawjdfas}
s=
\frac{1}
 {1+ \frac12 \aggpar (1-\Theta) (t-t_0)} \in [0,1]
\end{equation} 
  we obtain
\begin{align*}
&\E{
\bD(\alphav^\star )-\bD(\vc{\alphav}{t+1}) }
\overset{\eqref{eq:asoifejwofa}
}{\leq}
\left( 
 1-\aggpar
(1-\Theta)
   s
\right) 
   (\bD(\alphav^\star )-\bD(\vc{\alphav}{t}))
+
\aggpar
(1-\Theta) 
\frac{\sigma'}{2\lambda }
\left(\frac s{  n}\right)^2
4L^2  \sigma
\\
&\overset{\eqref{eq:expectationOfDualFeasibility}
}{\leq}
\left( 
 1-\aggpar
(1-\Theta)
   s
\right) 
   \frac{4L^2  \sigma   \sigma'}{\lambda n^2( 1+ \frac12  \aggpar (1-\Theta)  (t-t_0))}
+
\aggpar
(1-\Theta) 
\frac{\sigma'}{2\lambda }
\left( \frac s{  n} \right)^2
4L^2  \sigma
\\
&
\overset{\eqref{eq:asdfjoawjdfas}}{=}
\frac{4L^2  \sigma   \sigma'}
     {\lambda n^2}
\left( 
\frac{
1+ \frac12 \aggpar (1-\Theta) (t-t_0)
-\aggpar
(1-\Theta)
+
\aggpar
(1-\Theta) 
\frac{1}{2}
}
 {(1+ \frac12 \aggpar (1-\Theta) (t-t_0))^2}
\right)
\\
&=
\frac{4L^2  \sigma   \sigma'}
     {\lambda n^2}
\underbrace{\left( 
\frac{
1+ \frac12 \aggpar (1-\Theta) (t-t_0)
-\frac12 \aggpar
(1-\Theta)
}
 {(1+ \frac12 \aggpar (1-\Theta) (t-t_0))^2}
\right)}_{E}.
\end{align*}
Now, we will upperbound $E$ as follows
\begin{align*}
E&=
\frac1
{1+ \frac12 \aggpar (1-\Theta) (t+1-t_0)}
\underbrace{
\frac{
(1+ \frac12 \aggpar (1-\Theta) (t+1-t_0))
(1+ \frac12 \aggpar (1-\Theta) (t-1-t_0))
}
 {(1+ \frac12 \aggpar (1-\Theta) (t-t_0))^2}}_{\leq 1}
 \\
&\leq  
\frac1
{1+ \frac12 \aggpar (1-\Theta) (t+1-t_0)},
\end{align*}
where in the last inequality we have used the fact that geometric mean
 is less or equal to arithmetic mean. 
 
If $\overline \alphav$ is defined as in \eqref{eq:averageOfAlphaDefinition}
then we obtain that
\begin{align*}
\E{ \gap(\overline\alphav) } &=  
 \E{ \gap\left(\sum_{t=T_0}^{T-1} \frac1{T-T_0} \vc{\alphav}{t}\right) }
 \leq
  \frac1{T-T_0} \E{ \sum_{t=T_0}^{T-1} \gap\left( \vc{\alphav}{t}\right) }
\\
&
\overset{
\eqref{eq:lemma:dualDecrease_VS_dualityGap}
,\eqref{eq:asfjoewjofa}
}{\leq}
  \frac1{T-T_0} \E{ \sum_{t=T_0}^{T-1} 
\left(
\frac1{\aggpar
(1-\Theta)
 s}
(
\bD(\vc{\alphav}{t+1})
-
\bD(\vc{\alphav}{t})
 )
 +
\frac{4L^2 \sigma \sigma' s}{2\lambda n^2 }
\right)  
  }
\\  
 &=
\frac1{\aggpar
(1-\Theta)
 s}
   \frac1{T-T_0} 
   \E{
\bD(\vc{\alphav}{T})
-
\bD(\vc{\alphav}{T_0})
  }
+\frac{4L^2 \sigma \sigma' s}{2\lambda n^2 }  
\\  
 &\leq
\frac1{\aggpar
(1-\Theta)
 s}
   \frac1{T-T_0} 
   \E{
\bD(\alphav^\star)
-
\bD(\vc{\alphav}{T_0})
  } 
+\frac{4L^2 \sigma \sigma' s}{2\lambda n^2 }.  
\numberthis \label{eq:askjfdsanlfas}
  \end{align*}
Now, if $T\geq \left\lceil \frac1{\aggpar (1-\Theta)} \right\rceil+T_0$ such that $T_0\geq t_0$
we obtain
\begin{align*}
\E{ \gap(\overline\alphav) }
&\overset{\eqref{eq:askjfdsanlfas}
,\eqref{eq:expectationOfDualFeasibility}
}{\leq}
\frac1{\aggpar
(1-\Theta)
 s}
   \frac1{T-T_0} 
\left(
\frac{4L^2  \sigma   \sigma'}{\lambda n^2( 1+ \frac12  \aggpar (1-\Theta)  (T_0-t_0))}
\right)
+\frac{4L^2 \sigma \sigma' s}{2\lambda n^2 }
\\
&=
\frac{
4L^2  \sigma   \sigma'}{\lambda n^2}
\left(
\frac1{\aggpar
(1-\Theta)
 s}
   \frac1{T-T_0} 
\frac{1}{ 1+ \frac12  \aggpar (1-\Theta)  (T_0-t_0)}
+\frac{  s}{2 }
\right). 
\numberthis
\label{eq:fawefwafewa}
\end{align*}
Choosing 
\begin{equation}
\label{eq:afskoijewofaw}
s=\frac{1}{(T-T_0) \aggpar (1-\Theta)} \in [0,1]
\end{equation}
gives us
\begin{align*}
\E{ \gap(\overline\alphav) }
&
\overset{\eqref{eq:fawefwafewa},
\eqref{eq:afskoijewofaw}}{\leq}
\frac{
4L^2  \sigma   \sigma'}{\lambda n^2}
\left(
\frac{1}{ 1+ \frac12  \aggpar (1-\Theta)  (T_0-t_0)}
+\frac{1}{(T-T_0) \aggpar (1-\Theta)} \frac{  1}{2 }
\right). 
\numberthis
\label{eq:afsjweofjwafea}
\end{align*}
To have right hand side of
\eqref{eq:afsjweofjwafea}
smaller then 
$\epsilon_\gap$
it is sufficient to choose
$T_0$ and $T$ such that
\begin{eqnarray}
\label{eq:sfadwafeewafa}
\frac{4L^2  \sigma   \sigma'}{\lambda n^2}
\left(
\frac{1}{ 1+ \frac12  \aggpar (1-\Theta)  (T_0-t_0)}
\right)
&\leq & \frac12 \epsilon_\gap,
\\
\label{eq:sfadwafeewafa2}
\frac{4L^2  \sigma   \sigma'}{\lambda n^2}
\left(
\frac{1}{(T-T_0) \aggpar (1-\Theta)} \frac{  1}{2 }
\right)
&\leq & \frac12 \epsilon_\gap.
\end{eqnarray}
Hence{, if}
\begin{eqnarray*}
t_0+
\frac{2}{ \aggpar (1-\Theta) }
\left(
\frac
{8L^2  \sigma   \sigma'}
{\lambda n^2 \epsilon_\gap}
-1
\right)
&\leq & 
 T_0 
,
\\
T_0
+
\frac
{4L^2  \sigma   \sigma'}
{\lambda n^2 \epsilon_\gap
\aggpar (1-\Theta)}
&\leq &  T,  
\end{eqnarray*}
then 
\eqref{eq:sfadwafeewafa}
and
\eqref{eq:sfadwafeewafa2}
are satisfied.

\subsubsection{Proof of Theorem \ref{thm:convergenceSmoothCase}
}
If the function $\ell_i(.)$ is $(1/\gamma)$-smooth then $\ell_i^*(.)$ is $\gamma$-strongly convex with respect to the
$\|\cdot\|$ norm.
From \eqref{eq:defOfR}
we have
\begin{align*}
\vc{R}{t}&
\overset{\eqref{eq:defOfR}}{=}
-
\frac{ \lambda\gamma n (1-s)}{\sigma' s }
   \|\vc{\uv}{t}-\vc{\alphav}{t}\|^2 
+ 
 \sum_{k=1}^K   
  \| X \vsubset{  (\vc{\uv}{t}  - \vc{\alphav}{t} )}{k}\|^2
\\
&
\overset{\eqref{eq:definitionOfSigmaK}}{\leq}  
-
\frac{ \lambda\gamma n (1-s)}{\sigma' s }
   \|\vc{\uv}{t}-\vc{\alphav}{t}\|^2 
+ 
 \sum_{k=1}^K   
 \sigma_k
  \|  \vsubset{   (\vc{\uv}{t}  - \vc{\alphav}{t}  )}{k}\|^2
\\
&\leq
-
\frac{ \lambda\gamma n (1-s)}{\sigma' s }
   \|\vc{\uv}{t}-\vc{\alphav}{t}\|^2 
+
\sigma_{\max} 
 \sum_{k=1}^K   
  \|  \vsubset{ (  \vc{\uv}{t}  - \vc{\alphav}{t}  )}{k}\|^2
\\
&=
\left(
-
\frac{ \lambda\gamma n (1-s)}{\sigma' s }
+\sigma_{\max}
\right)
   \|\vc{\uv}{t}-\vc{\alphav}{t}\|^2.
\numberthis
   \label{eq:afjfocjwfcea} 
\end{align*}
 If we plug 
 \begin{equation}
 s=
  \frac{ \lambda\gamma n }
      {\lambda\gamma n+
\sigma_{\max} \sigma'}\in [0,1]
\label{eq:fajoejfojew}
\end{equation} 
into
\eqref{eq:afjfocjwfcea}
we obtain that
$\forall t: \vc{R}{t}\leq 0$.
Putting the  same $s$
into
\eqref{eq:lemma:dualDecrease_VS_dualityGap}
will give us
\begin{align*}
\E{
\bD(\vc{\alphav}{t+1})
-
\bD(\vc{\alphav}{t})
 }
&\overset{\eqref{eq:lemma:dualDecrease_VS_dualityGap}
,\eqref{eq:fajoejfojew}}{\geq}
\aggpar
(1-\Theta)
 \frac{ \lambda\gamma n }
      {\lambda\gamma n+
\sigma_{\max} \sigma'} \gap(\vc{\alphav}{t})
\notag
\\&\geq
\aggpar
(1-\Theta)
 \frac{ \lambda\gamma n }
      {\lambda\gamma n+
\sigma_{\max} \sigma'} \bD(\alphav^\star)-\bD(\vc{\alphav}{t}).
\numberthis
\label{eq:fasfawfwaf}
\end{align*}
Using the fact that
$\E{ \bD(\vc{\alphav}{t+1})-\bD(\vc{\alphav}{t}) }
= \E{ \bD(\vc{\alphav}{t+1})-\bD(\alphav^\star) }
+\bD(\alphav^\star)-\bD(\vc{\alphav}{t})
$
we have 
\begin{align*}
\E{ \bD(\vc{\alphav}{t+1})-\bD(\alphav^\star) }
+\bD(\alphav^\star)-\bD(\vc{\alphav}{t})
\overset{
\eqref{eq:fasfawfwaf}}
{
\geq
}
\aggpar
(1-\Theta)
 \frac{ \lambda\gamma n }
      {\lambda\gamma n+
\sigma_{\max} \sigma'} \bD(\alphav^\star)-\bD(\vc{\alphav}{t})
\end{align*}
which is equivalent with
\begin{align*}
\E{ \bD(\alphav^\star)-\bD(\vc{\alphav}{t+1}) }
\leq 
\left(
1-\aggpar
(1-\Theta)
 \frac{ \lambda\gamma n }
      {\lambda\gamma n+
\sigma_{\max} \sigma'}\right)
\bD(\alphav^\star)-\bD(\vc{\alphav}{t}).
\numberthis 
\label{eq:affpja}
\end{align*}
Therefore if we denote by $\vc{\epsilon_\bD}{t} = \bD(\alphav^\star)-\bD(\vc{\alphav}{t})$
we have that
\begin{align*}
 \E{ \vc{\epsilon_\bD}{t} }
 &\overset{\eqref{eq:affpja}}{\leq}   \left(
 1-\aggpar
(1-\Theta)
 \frac{ \lambda\gamma n }
      {\lambda\gamma n+
\sigma_{\max} \sigma'}
   \right)^t \vc{\epsilon_\bD}{0}
\leq 
\left(
 1-\aggpar
(1-\Theta)
 \frac{ \lambda\gamma n }
      {\lambda\gamma n+
\sigma_{\max} \sigma'}
   \right)^t
\\&\leq \exp\left(-t \aggpar
(1-\Theta)
 \frac{ \lambda\gamma n }
      {\lambda\gamma n+
\sigma_{\max} \sigma'}
     \right).
\end{align*}
The right hand side will be smaller than some $\epsilon_\bD$ if 
$$
 t   
    \geq 
\frac{1}
   {\aggpar
(1-\Theta)}
\frac
{\lambda\gamma n+
\sigma_{\max} \sigma'}
{ \lambda\gamma n }
    \log \frac1{\epsilon_\bD}.
$$
Moreover, to bound the duality gap, we have
\begin{align*}
\aggpar
(1-\Theta)
 \frac{ \lambda\gamma n }
      {\lambda\gamma n+
\sigma_{\max} \sigma'} \gap(\vc{\alphav}{t})
&
\overset{
\eqref{eq:fasfawfwaf}
}{\leq}
\E{
\bD(\vc{\alphav}{t+1})
-
\bD(\vc{\alphav}{t})
 }
\leq 
\E{
\bD(\alphav^\star)
-
\bD(\vc{\alphav}{t})
 }. 
\end{align*}
Therefore  $\gap(\vc{\alphav}{t})\leq 
\frac1{
\aggpar
(1-\Theta)}
 \frac      {\lambda\gamma n+
\sigma_{\max} \sigma'} 
{ \lambda\gamma n }    \vc{\epsilon_\bD}{t}$.  
Hence if $\epsilon_\bD \leq 
\aggpar
(1-\Theta)
 \frac{ \lambda\gamma n }
      {\lambda\gamma n+
\sigma_{\max} \sigma'} 
 \epsilon_\gap $
then $\gap(\vc{\alphav}{t})\leq \epsilon_\gap$.
Therefore
after 
$$
 t   
    \geq 
\frac{1}
   {\aggpar
(1-\Theta)}
\frac
{\lambda\gamma n+
\sigma_{\max} \sigma'}
{ \lambda\gamma n }
    \log 
\left(
\frac{1}
   {\aggpar
(1-\Theta)}
\frac
{\lambda\gamma n+
\sigma_{\max} \sigma'}
{ \lambda\gamma n }
    \frac1{\epsilon_\gap}
    \right) 
$$
iterations we have obtained a duality gap less than $\epsilon_\gap$.

\part{Federated Optimization and Learning}

\chapter{Federated Optimization: Distributed Machine Learning for On-device Intelligence}
\label{ch:feopt}

\section{Introduction}
\label{sec:intro}
Mobile phones and tablets are now the primary computing devices for
many people. In many cases, these devices are rarely separated from 
their owners \cite{CNNSmartphoneUsage},
and the combination of rich user interactions and powerful sensors
means they have access to an unprecedented amount of data, much of it
private in nature. Models learned on such data hold the promise of
greatly improving usability by powering more intelligent applications,
but the sensitive nature of the data means there are risks and
responsibilities to storing it in a centralized location.

We advocate an alternative --- {\em federated learning} --- that leaves the training data distributed on the mobile devices, and learns a shared model by aggregating locally computed updates via a central coordinating server. This is a direct application of the principle of focused collection or data minimization proposed by the 2012 White House report on the privacy of consumer data \cite{whitehouse13privacy}. Since these updates are specific to improving the current model, they can be purely ephemeral --- there is no reason to store them on the server once they have been applied. Further, they will never contain more information than the raw training data (by the data processing inequality), and will generally contain much less. A principal advantage of this approach is the decoupling of model training from the need for direct access to the raw training data. Clearly, some trust of the server coordinating the training is still required, and depending on the details of the model and algorithm, the updates may still contain private information. However, for applications where the training objective can be specified on the basis of data available on each client, federated learning can significantly reduce privacy and security risks by limiting the attack surface to only the device, rather than the device and the cloud.

If additional privacy is needed, randomization techniques from differential privacy can be used.  The centralized algorithm could be modified to produce a differentially private model~\cite{chaudhuri11dperm, dwork14book, abadi2016deep}, which allows the model to be released while protecting the privacy of the individuals contributing updates to the training process.  If protection from even a malicious (or compromised) coordinating server is needed, techniques from local differential privacy can be applied to privatize the individual updates \cite{duchi14privacy}.  Details of this are beyond the scope of the current work, but it is a promising direction for future research.

A more complete discussion of applications of federated learning as
well as privacy ramifications can be found
in~\cite{mcmahan2016federated}. Our focus in this work will be on \fedopt, the
optimization problem that must be solved in order to make federated
learning a practical alternative to current approaches.

\subsection{Problem Formulation}
The optimization community has seen an explosion of interest in solving problems with finite-sum structure in recent years. In general, the objective is formulated as
\begin{equation}
\label{eq:problem}
\min_{w \in \R^d} P(w) \qquad \text{where} \qquad P(w) \eqdef \frac{1}{n} \sum_{i=1}^n f_i(w).
\end{equation}
The main source of motivation are problems arising in machine learning. The problem structure~\eqref{eq:problem} covers linear or logistic regressions, support vector machines, but also more complicated models such as conditional random fields or neural networks.

We suppose we have a set of input-output pairs $\{ x_i, y_i \}_{i = 1}^n$, and a loss function, giving rise to the functions $f_i$. Typically, $x_i \in \R^d$ and $y_i \in \R$ or $y_i \in \{ -1, 1 \}$. Simple examples include 
\begin{itemize}
\item linear regression:  $f_i(w) = \frac12 (x_i^Tw - y_i)^2$, $y_i \in \R$
\item logistic regression: $f_i(w) = -\log(1 + \exp(-y_i x_i^T w))$, $y_i \in \{-1, 1\}$ 
\item support vector machines: $f_i(w) = \max\{0, 1 - y_i x_i^T w \}$, $y_i \in \{-1, 1\}$
\end{itemize}

More complicated non-convex problems arise in the context of neural networks, where rather than via the linear-in-the-features mapping $x_i^T w$, the network makes prediction through a non-convex function of the feature vector $x_i$. However, the resulting loss can still be written as $f_i(w)$, and gradients can be computed efficiently using backpropagation.

The amount of data that businesses, governments and academic projects collect is rapidly increasing. Consequently, solving problem~\eqref{eq:problem} arising in practice is often impossible on a single \node, as merely storing the whole dataset on a single \node becomes infeasible. This necessitates the use of a distributed computational framework, in which the training data describing the problem is stored in a distributed fashion across a number of interconnected \nodes and the optimization problem is solved collectively by the cluster of nodes.

Loosely speaking, one can use any network of \nodes to simulate a single powerful \node, on which one can run any algorithm. The
practical issue is that the time it takes to communicate between a
processor and memory on the same \node is normally many
orders of magnitude smaller than the time needed for two \nodes to
communicate; similar conclusions hold for the energy required
\cite{shalf2011exascale}. Further, in order to take advantage of parallel
computing power on each node, it is necessary to subdivide the problem
into subproblems suitable for independent/parallel computation.

State-of-the-art optimization algorithms are typically inherently sequential. Moreover, they usually rely on performing a large number of very fast iterations. The problem stems from the fact that if one needs to perform a round of communication after each iteration, practical performance drops down dramatically, as the round of communication is much more time-consuming than a single iteration of the algorithm.

These considerations have lead to the development of novel algorithms specialized for distributed optimization (we defer thorough review until Section~\ref{sec:relatedWork}). For now, we note that most of the results in literature work in the setting where the data is evenly distributed, and further suppose that $K \ll n / K$ where $K$ is the number of \nodes. This is indeed often close to reality when data is stored in a large data center. Additionally, an important subfield of the field of distributed learning relies on the assumption  that each machine has a representative sample of the data available locally. That is, it is assumed that each machine has an \iid sample from the underlying distribution. However, this assumption is often too strong; in fact,  even in the data center paradigm this is often not the case since the data on a single \node can be close to each other on a temporal scale, or clustered by its geographical origin. Since the patterns in the data can change over time, a feature might be present frequently on one \node, while not appear on another at all.

The \fedopt setting describes a novel optimization scenario where none of the above assumptions  hold. We outline this setting in more detail in the following section.

\subsection{The Setting of \FedOpt}
\label{sec:intro:challenge}

The main purpose of this work is to bring to the attention of the machine learning and optimization communities a new and increasingly practically relevant setting for distributed optimization, where none of the typical assumptions are satisfied, and communication efficiency is of utmost importance. In particular, algorithms for \fedopt must handle training data with the following characteristics:
\begin{itemize}
\item \textbf{Massively Distributed}: Data points are stored across a large number of \nodes $K$. In particular, the number of \nodes can be much bigger than the average number of training examples stored on a given \node ($n/K$).
\item \textbf{Non-\iid}: Data on each \node may be drawn from a different distribution; that is, the data points available locally are far from being a representative sample of the overall distribution.
\item \textbf{Unbalanced}: Different \nodes may vary by orders of magnitude in the number of training examples they hold.
\end{itemize}

In this work, we are particularly concerned with \textbf{sparse} data, where some features occur on  a small subset of nodes or data points only. Although this is not necessary characteristic of the setting of \fedopt, we will show that the sparsity structure can be used to develop an effective algorithm for \fedopt. Note that data arising in the largest machine learning problems being solved nowadays, ad click-through rate predictions, are extremely sparse.

We are particularly interested in the setting where training data lives on users' mobile devices (phones and tablets), and the data may be privacy sensitive. The data $\{x_i, y_i\}$ is generated through device usage, e.g., via interaction with apps. Examples include predicting the next word a user will type (language modeling for smarter keyboard apps), predicting which photos a user is most likely to share, or predicting which notifications are most important. 

To train such models using traditional distributed learning algorithms, one would collect the training examples in a centralized location (data center) where it could be shuffled and distributed evenly over proprietary compute nodes. In this chapter we propose and study an alternative model: the training examples are not sent to a centralized location, potentially saving  significant network bandwidth and providing additional privacy protection. In exchange, users allow some use of their devices' computing power, which shall be used to train the model.

In the communication model of this chapter, in each round we send an update
$\delta \in \R^d$ to a centralized server, where $d$ is the dimension of the model
being computed/improved. The update $\delta$ could be a gradient vector, for
example.  While it is certainly possible that in some applications the
$\delta$ may encode some private information of the user, it is likely
much less sensitive (and orders of magnitude smaller) than the
original data itself. For example, consider the case where the raw
training data is a large collection of video files on a mobile
device.  The size of the update $\delta$ will be \emph{independent} of
the size of this local training data corpus. 
We show that a global model can be trained using a small number of
communication rounds, and so this also reduces the network
bandwidth needed for training by orders of magnitude compared to
copying the data to the datacenter.

Further, informally, we choose $\delta$ to be the minimum piece of
information necessary to improve the global model; its utility for
other uses is significantly reduced compared to the original
data. Thus, it is natural to design a system that does not store these
$\delta$'s longer than necessary to update the model, again increasing
privacy and reducing liability on the part of the centralized model
trainer. This setting, in which a single vector $\delta \in \R^d$ is communicated in each round, covers most existing first-order methods, including dual methods such as CoCoA+ \cite{ma2015distributed}.

Communication constraints arise naturally in the massively distributed setting, as network connectivity may be limited (e.g., we may wish to deffer all communication until the mobile device is charging and connected to a wi-fi network).  Thus, in realistic scenarios we may be limited to only a single round of communication per day. This implies that, within reasonable bounds, we have access to essentially unlimited local computational power. Consequently, the practical objective is solely to minimize the number of  communication rounds.

The main purpose of this work is initiate research into, and design a first practical implementation of \fedopt. Our results suggest that with suitable optimization algorithms, very little is lost by not having an \iid sample of the data available, and that even in the presence of a large number of \nodes, we can still achieve convergence in relatively few rounds of communication.

\section{Related Work}
\label{sec:relatedWork}

In this section we provide a detailed overview of the relevant literature. We particularly focus on algorithms that can be used to solve  problem~\eqref{eq:problem} in various contexts. First, in Sections~\ref{sec:relatedWork:general} and \ref{sec:oh0s9hs} we look at algorithms designed to be run on a single computer. In Section~\ref{sec:relatedWork:distributedSetting} we follow with a discussion of the distributed setting, where no single \node has direct access to all data describing $f$. We describe a paradigm for measuring the efficiency of distributed methods, followed by overview of existing methods and commentary  on whether they were designed with communication efficiency in mind or not.

\subsection{Baseline Algorithms}
\label{sec:relatedWork:general}

In this section we shall describe several fundamental baseline algorithms which can be used to solve problems of the form \eqref{eq:problem}.

\paragraph{Gradient Descent.} A trivial benchmark for solving  problems of structure~\eqref{eq:problem} is {\em Gradient Descent} (GD) in the case when functions $f_i$ are smooth (or Subgradient Descent for non-smooth functions) \cite{nesterov2004convex}. The GD algorithm performs the iteration \[w^{t+1} = w^t - h_t \nabla P(w^t),\] where $h_t>0$ is a  stepsize parameter. As we mentioned earlier, the number of functions, or equivalently, the number of training data pairs, $n$,  is typically very large. This makes GD impractical, as it needs to process the whole dataset in order to evaluate a single gradient and update the model. 

Gradient descent can be substantially accelerated, in theory and practice, via the addition of a momentum term. Acceleration ideas for gradient methods in convex optimization can be traced back to the work of Polyak \cite{Polyak-heavy-ball} and Nesterov \cite{Nesterov-1983, nesterov2004convex}. While accelerated GD methods have a substantially better convergence rate, in each iteration they still need to do at least one pass over all data. As a result, they are not practical for problems where $n$ very large.

\paragraph{Stochastic Gradient Descent.} At present a basic, albeit in practice extremely popular, alternative to GD is { \em Stochastic Gradient Descent} (SGD), dating back to the seminal work of Robbins and Monro \cite{RM1951}. In the context of~\eqref{eq:problem}, SGD samples a random function (i.e., a random data-label pair) $i_t \in \{1, 2, \dots, n\}$ in iteration $t$, and performs the update \[w^{t+1} = w^t - h_t \nabla f_{i_t}(w^t),\] where $h_t>0$ is a stepsize parameter. Intuitively speaking, this method works because if $i_t$ is sampled uniformly at random from indices $1$ to $n$, the update direction is an unbiased estimate of the gradient --- $\E{ \nabla f_{i_t}(w) } = \nabla P(w)$. However, noise introduced by sampling slows down the convergence, and a diminsihing sequence of stepsizes $h_k$ is necessary for convergence. For a theoretical analysis for convex functions we refer the reader to \cite{nemirovski2009robust, moulines2011non, NeedellWard2015} and \cite{pegasos, takac-minibatch} for SVM problems. In a recent review \cite{bottou2016optimization}, the authors outline further research directions. For a more practically-focused discussion, see \cite{bottou2012stochastic}. In the context of neural networks, computation of stochastic gradients is referred to as \emph{backpropagation} \cite{lecun2012efficientbackprop}. Instead of specifying the functions $f_i$ and its gradients explicitly, backpropagation is a general way of computing the gradient. Performance of several competitive algorithms for training deep neural networks has been compared in \cite{ngiam2011optimization}.

One common trick that has been practically observed to provide superior performance, is to replace random sampling in each iteration by going through all the functions in a random order. This ordering is replaced by another random order after each such cycle \cite{bottou2009curiously}. Theoretical understanding of this phenomenon had been a long standing open problem, understood recently in \cite{gurbuzbalaban2015randomreshuffling}.

The core differences between GD and SGD can be summarized as follows. GD has a fast convergence rate, but each iteration in the context of \eqref{eq:problem} is potentially very slow, as it needs to process the entire dataset in each iteration. On the other hand, SGD has slower convergence rate, but each iteration is fast, as the work needed is independent of number of data points $n$. For the problem structure of \eqref{eq:problem}, SGD is usually better, as for practical purposes relatively low accuracy is required, which SGD can in extreme cases achieve after single pass through data, while GD would make just a single update. However, if a high accuracy was needed, GD or its faster variants would prevail.

\subsection{A Novel Breed of Randomized Algorithms}\label{sec:oh0s9hs}

Recent years have seen an explosion of new randomized methods which, in a first approximation, combine the benefits of cheap iterations of SGD with fast convergence of GD. Most of these methods can be said to belong to one of two classes --- dual methods of the randomized coordinate descent variety, and primal methods of the stochastic gradient descent with variance reduction variety.

\paragraph{Randomized Coordinate Descent.}  Although the idea of coordinate descent has been around for several decades in various contexts (and for quadratic functions dates back even much further, to works on the Gauss-Seidel methods), it came to prominence in machine learning and optimization with the work of  Nesterov \cite{nesterovRCDM} which equipped the method with a randomization strategy. Nesterov's work on {\em Randomized Coordinate Descent} (RCD) popularized the method and demonstrated that randomization can be very useful for problems of structure \eqref{eq:problem}.

The RCD algorithm in each iteration chooses a random coordinate $j_t \in \{ 1, \dots, d\}$ and performs the update \[w^{t+1} = w^t - h_{j_t} \nabla_{j_t} P(w^t) e_{j_t},\] where $h_{j_t}>0$ is a stepsize parameter, $\nabla_j P(w)$ denotes the $j^{th}$ partial derivative of function $f$, and $e_j$ is the $j^{th}$ unit standard basis vector in $\R^d$. For the case of generalized linear models, when the data exhibits certain sparsity structure, it is possible to evaluate the partial derivative $\nabla_j P(w)$ efficiently, i.e., without need to process the entire dataset, leading to a practically efficient algorithm, see for instance  \cite[Section 6]{RichtarikTakacIteration}.

Numerous follow-up works extended the concept to proximal setting \cite{RichtarikTakacIteration}, single processor parallelism \cite{bradley2011PCDM, RT:PCDM} and develop efficiently implementable acceleration \cite{leeSidfordCD}. All of these three properties were connected in a single algorithm in \cite{APPROX}, to which we refer the reader for a review of the early developments in the area of RCD, particularly to overview in Table~1 therein.

\paragraph{Stochastic Dual Coordinate Ascent.} When an explicit strongly convex, but not necessarily smooth, regularizer is added to the average loss \eqref{eq:problem}, it is possible to write down its (Fenchel) dual and the dual variables live in $n$-dimensional space. Applying RCD leads to an algorithm for solving \eqref{eq:problem} known under the name {\em Stochastic Dual Coordinate Ascent} \cite{SDCA}. This method has gained broad popularity with practicioners, likely due to the fact that for a number of loss functions, the method comes without the need to tune any hyper-parameters. The work \cite{SDCA} was first to show that by applying RCD  \cite{RichtarikTakacIteration} to the dual problem, one also solves the primal problem \eqref{eq:problem}. For a theoretical and computational comparison of applying RCD to the primal versus the dual problems, see \cite{faceoff}.

A directly primal-dual randomized coordinate descent method called Quartz, was developed in \cite{QUARTZ}. It has been recently shown in SDNA \cite{SDNA} that incorporating curvature information contained in random low dimensional subspaces spanned by a few coordinates can sometimes lead to dramatic speedups. Recent works \cite{dfSDCAupdate, csiba2015primal} interpret the SDCA method in primal-only setting, shedding light onto why this method works as a SGD method with a version of variance reduction property. 

\bigskip
We now move the the second class of novel randomized algorithms which can be generally interpreted as variants of SGD, with an attempt to reduce variance inherent in the process of gradient estimation.

\paragraph{Stochastic Average Gradient.} The first notable algorithm from this class is the {\em Stochastic Average Gradient} (SAG) \cite{SAG, SAGjournal2013}. The SAG algorithm stores an average of $n$ gradients of functions $f_i$ evaluated at different points in the history of the algorithm. In each iteration, the algorithm, updates randomly chosen gradient out of this average, and makes a step in the direction of the average. This way, complexity of each iteration is independent of $n$, and the algorithm enjoys a fast convergence. The drawback of this algorithm is that it needs to store $n$ gradients in memory because of the update operation. In the case of generalized linear models, this memory requirement can be reduced to the need of $n$ scalars, as the gradient is a scalar multiple of the data point. This methods has been recently extended for use in Conditional Random Fields \cite{SAGCRF}. Nevertheless, the memory requirement makes the algorithm infeasible for application even in relatively small neural networks.

A followup algorithm SAGA \cite{saga} and its simplification \cite{defazio2016simple}, modifies the SAG algorithm to achieve unbiased estimate of the gradients. The memory requirement is still present, but the method significantly simplifies theoretical analysis, and yields a slightly stronger convergence guarantee.

\paragraph{Stochastic Variance Reduced Gradient.} Another algorithm from the SGD class of methods is {\em Stochastic Variance Reduced Gradient}\footnote{The same algorithm was simultaneously introduced as Semi-Stochastic Gradient Descent (S2GD) \cite{S2GD}. Since the former work gained more attention, we will for clarity use the name SVRG throughout this chapter.} (SVRG) \cite{SVRG} and \cite{S2GD, proxSVRG, konecny2015mini}. The SVRG algorithm runs in two nested loops. In the outer loop, it computes full gradient of the whole function, $\nabla P(w^t)$, the expensive operation one tries to avoid in general. In the inner loop, the update step is iteratively computed as \[w = w - h [\nabla f_i(w) - \nabla f_i(w^t) + \nabla P(w^t)].\] The core idea is that the stochastic gradients are used to estimate the change of the gradient between point $w^t$ and $w$, as opposed to estimating the gradient directly. We return to more detailed description of this algorithm in Section~\ref{sec:SVRG}.

The SVRG has the advantage that it does not have the additional memory requirements of SAG/SAGA, but it needs to process the whole dataset every now and then. Indeed, comparing to SGD, which typically makes significant progress in the first pass through data, SVRG does not make any update whatsoever, as it needs to compute the full gradient. This and several other practical issues have been recently addressed in \cite{practicalSVRG}, making the algorithm competitive with SGD early on, and superior in later iterations. Although there is nothing that prevents one from applying SVRG and its variants in deep learning, we are not aware of any systematic assessment of its performance in this setting. Vanilla experiments in \cite{SVRG, reddi2016stochastic} suggest that SVRG matches basic SGD, and even outperforms in the sense that variance of the iterates seems to be significantly smaller for SVRG. However, in order to draw any meaningful conclusions, one would need to perform extensive experiments and compare with state-of-the-art methods usually equipped with numerous heuristics.

There already exist attempts at combining SVRG type algorithms with randomized coordinate descent \cite{S2CD, wang2014randomized}. Although these works highlight some interesting theoretical properties, the algorithms do not seem to be practical at the moment; more work is needed in this area. The first attempt to unify algorithms such as SVRG and SAG/SAGA already appeared in the SAGA paper \cite{saga}, where the authors interpret SAGA as a midpoint between SAG and SVRG. Recent work \cite{reddi2015variance} presents a general algorithm, which recovers SVRG, SAGA, SAG and GD as special cases, and obtains an asynchronous variant of these algorithms as a byproduct of the formulation. SVRG can be equipped with momentum (and negative momentum), leading to a new accelerated SVRG method known as Katyusha \cite{Katyusha}. SVRG can be further accelerated via a raw clustering mechanism \cite{SVRG-rawclusters}.

\paragraph{Stochastic Quasi-Newton Methods.} A third class of new algorithms are the {\em Stochastic quasi-Newton} methods \cite{byrd2014stochastic, bordes2009sgd}. These algorithms in general try to mimic the limited memory BFGS method (L-BFGS) \cite{LBFGS}, but model the local curvature information using inexact gradients --- coming from the SGD procedure. A recent attempt at combining these methods with SVRG can be found in \cite{moritz2015linearly}. In \cite{SBFGS}, the authors utilize recent progress in the area of stochastic matrix inversion \cite{inverse} revealing new connections with quasi-Newton methods, and devise a new  stochastic limited memory BFGS method working in tandem  with SVRG.  The fact that the theoretical understanding of this branch of research is the least understood and having several details making the implementation more difficult compared to the methods above may limit its wider use. However, this approach could be most promising for deep learning once understood better.

One important aspect of machine learning is that the Empirical Risk Minimization problem \eqref{eq:problem} we are solving is just a proxy for the Expected Risk we are ultimately interested in. When one can find exact minimum of the empirical risk, everything reduces to balancing approximation--estimation tradeoff that is the object of abundant literature --- see for instance \cite{vapnik1999overview}. An assessment of asymptotic performance of some optimization algorithms as \emph{learning} algorithms in large-scale learning problems\footnote{See \cite[Section 2.3]{BottouBousquet} for their definition of large scale learning problem.} has been introduced in \cite{BottouBousquet}. Recent extension in \cite{practicalSVRG} has shown that the variance reduced algorithms (SAG, SVRG, \dots) can in certain setting be better \emph{learning} algorithms than SGD, not just better optimization algorithms.

\paragraph{Further Remarks.} A general method, referred to as Universal Catalyst \cite{lin2015universal, frostig2015regularizing}, effectively enables conversion of a number of the algorithms mentioned in the previous sections to their `accelerated' variants. The resulting convergence guarantees nearly match lower bounds in a number of cases. However, the need to tune additional parameter makes the method rather impractical.

Recently, lower and upper bounds for complexity of stochastic methods on problems of the form \eqref{eq:problem} were recently obtained in \cite{Srebro-lower_and_upper2016}.

\subsection{Distributed Setting}
\label{sec:relatedWork:distributedSetting}

In this section we review the literature concerning algorithms for solving \eqref{eq:problem} in the distributed setting. When we speak about distributed setting, we refer to the case when the data describing the functions $f_i$ are not stored on any single storage device. This can include setting where one's data just don't fit into a single RAM/computer/node, but two is enough. This also covers the case where data are distributed across several datacenters around the world, and across many \nodes in those datacenters. The point is that in the system, there is no single processing unit that would have direct access to all the data. Thus, the distributed setting does not include single processor parallelism\footnote{It should be noted that some of the works presented in this section were originally presented as parallel algorithms. We include them anyway as many of the general ideas in carry over to the distributed setting.}. Compared with local computation on any single \node, the cost of communication between \nodes is much higher both in terms of speed and energy consumption \cite{bekkerman2011scaling, shalf2011exascale}, introducing new computational challenges, not only for optimization procedures.

We first review a theoretical decision rule for determining the practically best algorithm for a given problem in Section~\ref{sec:relatedWork:paradigm}, followed by overview of distributed algorithms in Section~\ref{sec:relatedWork:distributedAlgorithms}, and communication efficient algorithms in Section~\ref{sec:relatedWork:commEffAlgs}. The following paradigm highlights why the class of communication efficient algorithms are not only preferable choice in the trivial sense. The communication efficient algorithms provide us with much more flexible tools for designing overall optimization procedure, which can make the algorithms inherently adaptive to differences in computing resources and architectures.

\subsubsection{A Paradigm for Measuring Distributed Optimization Efficiency}
\label{sec:relatedWork:paradigm}

This section reviews a paradigm for comparing efficiency of distributed algorithms. Let us suppose we have many algorithms $\A$ readily available to solve the problem~\eqref{eq:problem}. The question is: ``How do we decide which algorithm is the best for our purpose?'' Initial version of this reasoning already appeared in \cite{ma2015distributed}, and applies also to \cite{reddi2016aide}.

First, consider the basic setting on a single machine. Let us define $\mathcal{I}_\A(\epsilon)$ as the number of iterations algorithm $\A$ needs to converge to some fixed $\epsilon$ accuracy. Let $\mathcal{T}_\A$ be the time needed for a single iteration. Then, in practice, the best algorithm is one that minimizes the following quantity.\footnote{Considering only algorithms that can be run on a given machine.}
\begin{equation}
\label{eq:paradigmBasic}
\text{TIME} = \mathcal{I}_\A(\epsilon) \times \mathcal{T}_\A.
\end{equation}

The number of iterations $\mathcal{I}_\A(\epsilon)$ is usually given by theoretical guarantees or observed from experience. The $\mathcal{T}_\A$ can be empirically observed, or one can have idea of how the time needed per iteration varies between different algorithms in question. The main point of this simplified setting is to highlight key issue with extending algorithms to the distributed setting.

The natural extension to distributed setting is the following. Let $c$ be time needed for communication during a single iteration of the algorithm $\A$. For sake of clarity, we suppose we consider only algorithms that need to communicate a single vector in $\R^d$ per round of communication. Note that essentially all first-order algorithms fall into this category, so this is not a restrictive assumption, which effectively sets $c$ to be a constant, given any particular distributed architecture one has at disposal.

\begin{equation}
\label{eq:paradigm}
\text{TIME} = \mathcal{I}_\A(\epsilon) \times (c + \mathcal{T}_\A)
\end{equation}

The communication cost $c$ does not only consist of actual exchange of the data, but also many other things like setting up and closing a connection between \nodes. Consequently, even if we need to communicate very small amount of information, $c$ always remains above a nontrivial threshold.

Most, if not all, of the current state-of-the-art algorithms that are the best in setting of~\eqref{eq:paradigmBasic}, are stochastic and rely on doing very large number (big $\mathcal{I}_\A(\epsilon)$) of very fast (small $\mathcal{T}_\A$) iterations. As a result, even relatively small $c$ can cause the practical performance of those algorithms drop down dramatically, because $c \gg \mathcal{T}_\A$.

This has been indeed observed in practice, and motivated development of new methods, designed with this fact in mind from scratch, which we review in Section~\ref{sec:relatedWork:distributedAlgorithms}. Although this is a good development for academia --- motivation to explore new setting, it is not necessarily a good news for the industry.

Many companies have spent significant resources to build excellent algorithms to tackle their problems of form~\eqref{eq:problem}, fine tuned to the specific patterns arising in their data and side applications required. When the data companies collect grows too large to be processed on a single machine, it is understandable that they would be reluctant to throw away their fine tuned algorithms. This issue was first time explicitly addressed in CoCoA~\cite{ma2015distributed}, which is rather framework than a algorithm, which works as follows (more detailed description follows in Section~\ref{sec:relatedWork:commEffAlgs}).

The CoCoA framework formulates a general way to form a specific subproblem on each \node, based on data available locally and a single shared vector that needs to be distributed to all \nodes. Within a iteration of the framework, each \node uses \emph{any} optimization algorithm $\A$, to reach a relative $\Theta$ accuracy on the local subproblem. Updates from all \nodes are then aggregated to form an update to the global model.

The efficiency paradigm changes as follows:

\begin{equation}
\label{eq:paradigmNew}
\text{TIME} = \mathcal{I}(\epsilon, \Theta) \times (c + \mathcal{T}_\A(\Theta))
\end{equation}

The number of iterations $\mathcal{I}(\epsilon, \Theta)$ is independent of choice of the algorithm $\A$ used as a local solver, because there is theory predicting how many iterations of the CoCoA framework are needed to achieve $\epsilon$ accuracy, if we solve the local subproblems to relative $\Theta$ accuracy. Here, $\Theta = 0$ would mean we require the subproblem to be solved to optimality, and $\Theta = 1$ that we don't need any progress whatsoever. The general upper bound on number of iterations of the CoCoA framework is $\mathcal{I}(\epsilon, \Theta) = \frac{\mathcal{O}(\log(1/\epsilon))}{1 - \Theta}$ \cite{Jaggi:cocoa, Ma:2015ti, ma2015distributed} for strongly convex objectives. From the inverse dependence on $1 - \Theta$, we can see that there is a fundamental limit to the number of communication rounds needed. Hence, it will probably not be efficient to spend excessive resources to attain very high local accuracy (small $\Theta$). Time per iteration $\mathcal{T}_\A(\Theta)$ denotes the time algorithm $\A$ needs to reach the relative $\Theta$ accuracy on the local subproblem.

This efficiency paradigm is more powerful for a number of reasons.
\begin{enumerate}
\item It allows practicioners to continue using their fine-tuned solvers, that can run only on single machine, instead of having to implement completely new algorithms from scratch.

\item The actual performance in terms of number of rounds of communication is independent from the choice of optimization algorithm, making it much easier to optimize the overall performance.

\item Since the constant $c$ is architecture dependent, running optimal algorithm on one \node network does not have to be optimal on another. In the setting~\eqref{eq:paradigm}, this could mean moving from one cluster to another, a completely different algorithm is optimal, which is a major change. In the setting~\eqref{eq:paradigmNew}, this can be improved by simply changing $\Theta$, which is typically implicitly determined by number of iterations algorithm $\A$ runs for.
\end{enumerate}

In this work we propose a different way to formulate the local subproblems, which does not rely on duality as in the case of CoCoA. We also highlight that some algorithms seem to be particularly suitable to solve those local subproblems, effectively leading to novel algorithms for distributed optimization.

\subsubsection{Distributed Algorithms}
\label{sec:relatedWork:distributedAlgorithms}

As discussed below in Section \ref{sec:relatedWork:paradigm}, this setting creates unique challenges. Distributed optimization algorithms typically require a small number (1--4) of communication rounds per iteration. By communication round we typically understand a single MapReduce operation \cite{dean2008mapreduce}, implemented efficiently for iterative procedures \cite{MPI}, such as optimization algorithms. Spark \cite{zaharia2010spark} has been established as a popular open source framework for implementing distributed iterative algorithms, and includes several of the algorithms mentioned in this section.

Optimization in distributed setting has been studied for decades, tracing back to at least works of Bertsekas and Tsitsiklis \cite{bertsekas1989parallel, bertsekas1983distributed, tsitsiklis1984problems}. Recent decade has seen an explosion of interest in this area, greatly motivated by rapid increase of data availability in machine learning applications.

Much of the recent effort was focused on creating new optimization algorithms, by building variants of popular algorithms suitable for running on a single processor (See Section~\ref{sec:relatedWork:general}). A relatively common feature of many of these efforts is a) The computation overhead in the case of synchronous algorithms, and b) The difficulty of analysing asynchronous algorithms without restrictive assumptions. By computation overhead we mean that if optimization program runs in a compute-communicate-update cycle, the update part cannot start until all \nodes finish their computation. This causes some of the \nodes be idle, while remaining \nodes finish their part of computation, clearly an inefficient use of computational resources. This pattern often diminishes or completely reverts potential speed-ups from distributed computation. In the asynchronous setting in general, an update can be applied to a parameter vector, followed by computation done based on a now-outdated version of that parameter vector. Formally grasping this pattern, while keeping the setting realistic is often quite challenging. Consequently, this is very open area, and optimal choice of algorithm in any particular case is often heavily dependent on the problem size, details in its structure, computing architecture available, and above all, expertise of the practitioner.

This general issue is best exhibited with numerous attempts at parallelizing the Stochastic Gradient Descent and its variants. As an example, \cite{dekel2012optimal, duchi2012dual} provide theoretically linear speedup with number of \nodes, but are difficult to implement efficiently, as the \nodes need to synchronize frequently in order to compute reasonable gradient averages. As an alternative, no synchronization between workers is assumed in \cite{Niu:2011wo, agarwal2011distributed, Duchi:2013te}. Consequently, each worker reads $w^t$ from memory, parameter vector $w$ at time point $t$, computes a stochastic gradient $\nabla f_i(w^t)$ and applies it to already changed state of the parameter vector $w^{t+\tau}$. The above mentioned methods assume that the delay $\tau$ is bounded by a constant, which is not necessarily realistic assumption\footnote{A bound on the delay $\tau$ can be deterministic or probabilistic. However, in practice, the delays are mostly about the number of \nodes in the network, and there rare very long delays, when a variety of operating system-related events can temporarily postpone computation of a single \node. To the best of our knowledge, no formal assumptions reflect this setting well. In fact, two recent works \cite{mania2015perturbed, leblond2016asaga} highlight subtle but important issue with labeling of iterates in the presence of asynchrony, rendering most of the existing analyses of asynchronous optimization algorithms incorrect.}. Some of the works also introduce assumptions on the sparsity structures or conditioning of the Hessian of $f$. Asymptotically optimal convergent rates were proven in \cite{duchi2015asynchronous} with considerably milder assumptions. Improved analysis of asynchronous SGD was also presented in \cite{de2015taming}, simultaneously with a version that uses lower-precision arithmetic was introduced without sacrificing performance, which is a trend that might find use in other parts of machine learning in the following years.

The negative effect of asynchronous distributed implementations of SGD seem to be negligible, when applied to the task of training very large deep networks --- which is the ultimate industrial application of today. The practical usefulness has been demonstrated for instance by Google's Downpour SGD \cite{largeNN} and Microsoft's Project Adam \cite{chilimbi2014project}.

The first distributed versions of Coordinate Descent algorithms were the Hydra and its accelerated variant, Hydra$^2$, \cite{richtarik2013distributed, fercoq2014fast}, which has been demonstrated to be very efficient on large sparse problems implemented on a computing cluster. An extended version with description of implementation details is presented in \cite{marecek2014distributed}. Effect of asynchrony has been explored and partially theoretically understood in the works of \cite{liu2013asynchronous, liu2015asynchronous}. Another asynchronous, rather framework than an algorithm, for coordinate updates, applicable to wider class of objectives is presented in \cite{peng2015arock}.

The data are assumed to be partitioned to \nodes by features/coordinates in the above algorithms. This setting can be restrictive if one is not able to distribute the data beforehand, but instead the data are distributed ``as is'' --- in which case the data are most commonly distributed by data points. This does not need to be an issue, if a dual version of coordinate descent is used --- in which the distribution is done by data points \cite{takac2015distributed} followed by works on Communication Efficient Dual Coordinate Ascent, described in next section. The use of duality however requires usage of additional explicit strongly convex regularization term, hence can be used to solve smaller class of problems. Despite the apparent practical disadvantages, variants of distributed coordinate descent algorithms are among the most widely used methods in practice.

Moving to variance reduced methods, distributed versions of SAG/SAGA algorithms have not been proposed yet. However, several distributed versions of the SVRG algorithm already exist. A scheme for replicating data to simulate iid sampling in distributed environment was proposed in \cite{lee2015distributed}. Although the performance is well analyzed, the setting requires significantly stronger control of data distribution which is rarely practically feasible. A relatively similar method to Algorithm~\ref{alg:DS2GDnaive} presented here has been proposed in \cite{reddi2016aide}, which was analyzed, and in \cite{mahajan2015efficient}, a largely experimental work that can be also cast as communication efficient --- described in detail in Section~\ref{sec:relatedWork:commEffAlgs}.

Another class of algorithms relevant for this work is Alternating Direction Method of Multipliers (ADMM) \cite{Boyd:2010bw, deng2016global}. These algorithms are in general applicable to much broader class of problems, and hasn't been observed to perform better than other algorithms presented in this section, in the machine learning setting of \eqref{eq:problem}.

\subsubsection{Communication-Efficient Algorithms}
\label{sec:relatedWork:commEffAlgs}

In this Section, we describe algorithms that can be cast as ``communication efficient''. The common theme of the algorithms presented here, is that in order to perform better in the sense of \eqref{eq:paradigm}, one should design algorithms with high $\mathcal{T}_\A$, in order to make the cost of communication $c$ negligible.

Before moving onto specific methods, it is worth the noting some of the core limits concerning the problem we are trying to solve in distributed setting. Fundamental limitations of stochastic versions of the problem \eqref{eq:problem} in terms of runtime, communication costs and number of samples used are studied in \cite{shamir2014distributed}. Efficient algorithms and lower bounds for distributed statistical estimation are established in \cite{Zhang:2013wq, zhang2013information}. 

However, these works do not fit into our framework, because they assume that each \node has access to data generated \iid from a single distribution. In the case of \cite{Zhang:2013wq, zhang2013information} also $K \ll n / K$, that the number of \nodes $K$ is much smaller than the number of data point on each \node is also assumed. As we stress in the Introduction, these assumptions are far from being satisfied in our setting. Intuitively, relaxing these assumptions should make the problem harder. However, it is not as straightforward to conclude this, as there are certainly particular non-iid data distributions that simplify the problem --- for instance if data are distributed according to separability structure of the objective. Lower bounds on communication complexity of distributed convex optimization of \eqref{eq:problem} are presented in \cite{Arjevani:2015vka}, concluding that for \iid data distributions, existing algorithms already achieve optimal complexity in specific settings. 

Probably first, rather extreme, work \cite{zinkevich2010parallelized} proposed to parallelize SGD in a single round of communication. Each node simply runs SGD on the data available locally, and their outputs are averaged to form a final result. This approach is however not very robust to differences in data distributions available locally, and it has been shown \cite[Appendix A]{DANE} that in general it cannot perform better than using output of a single machine, ignoring all the other data.

Shamir et al. proposed the DANE algorithm, Distributed Approximate Newton \cite{DANE}, to exactly solve a general subproblem available locally, before averaging their solutions. The method relies on similarity of Hessians of local objectives, representing their iterations as an average of inexact Newton steps. We describe the algorithm in greater detail in Section~\ref{sec:algorithms:DANE} as our proposed work builds on it. A quite similar approach was proposed in \cite{mahajan2015efficient}, with richer class class of subproblems that can be formulated locally, and solved approximately. An analysis of inexact version of DANE  and its accelerated variant, AIDE, appeared recently in \cite{reddi2016aide}. Inexact DANE is closely related to the algorithms presented in this chapter. We, however, continue in different direction shaped by the setting of \fedopt.

The DiSCO algorithm \cite{zhang2015communication} of Zhang and Xiao is based on inexact damped Newton method. The core idea is that the inexact Newton steps are computed by distributed preconditioned conjugate gradient, which can be very fast, if the data are distributed in an \iid fashion, enabling a good preconditioner to be computed locally. The theoretical upper bound on number of rounds of communication improves upon DANE and other methods, and in certain settings matches the lower bound presented in \cite{Arjevani:2015vka}. The DiSCO algorithm is related to \cite{lin2014large, zhuang2015distributed}, a distributed truncated Newton method. Although it was reported to perform well in practice, the total number of conjugate gradient iterations may still be high to be considered a communication efficient algorithm.

Common to the above algorithms is the assumption that each \node has access to data points sampled \iid from the same distribution. This assumption is not required only in theory, but can cause the algorithms to converge significantly slower or even diverge (as reported for instance in \cite[Table 3]{DANE}). Thus, these algorithms, at least in their default form, are not suitable for the setting of Federated Optimization presented here.

An algorithm that bypasses the need for \iid data assumption is CoCoA, which provably converges under any distribution of the data, while the convergence rate does depend on properties of the data distribution. The first version of the algorithm was proposed as DisDCA in \cite{Yang:2013vl}, without convergence guarantees. First analysis was introduced in \cite{Jaggi:cocoa}, with further improvements in \cite{Ma:2015ti}, and a more general version in \cite{ma2015distributed}. Recently, its variant for L1-regularized objectives was introduced in \cite{Smith:2015ua}.

The CoCoA framework formulates general local subproblems based on the dual form of \eqref{eq:problem} (See for instance \cite[Eq.\ (2)]{ma2015distributed}). Data points are distributed to \nodes, along with corresponding dual variables. Arbitrary optimization algorithm is used to attain a relative $\Theta$ accuracy on the local subproblem --- by changing only local dual variables. These updates have their corresponding updates to primal variable $w$, which are synchronously aggregated (could be averaging, adding up, or anything in between; depending on the local subproblem formulation).

From the description in this section it appears that the CoCoA framework is the only usable tool for the setting of Federated Optimization. However, the theoretical bound on number of rounds of communications for ill-conditioned problems scales with the number of \nodes $K$. Indeed, as we will show in Section~\ref{sec:experiments} on real data, CoCoA framework does converge very slowly.

\section{Algorithms for \FedOpt}
\label{sec:algorithms}

In this section we introduce the first algorithm that was designed with the unique challenges of \fedopt in mind. Before proceeding with the explanation, we first revisit two important and at first sight unrelated algorithms. The connection between these algorithms helped to motivate our research. Namely, the algorithms are the Stochastic Variance Reduced Gradient (\SVRG) \cite{SVRG, S2GD}, a stochastic method with explicit variance reduction, and the Distributed Approximate Newton (DANE) \cite{DANE} for distributed optimization.

The descriptions are followed by their connection, giving rise to a new distributed optimization algorithm, at first sight almost identical to the \SVRG algorithm, which we call Federated \SVRG (\algname).

Although this algorithm seems to work well in practice in simple circumstances, its performance is still unsatisfactory in the general setting we specify in Section~\ref{sec:problem}. We proceed by making the FSVRG algorithm adaptive to different local data sizes, general sparsity patterns and significant differences in patterns in data available locally, and those present in the entire data set.

\subsection{Desirable Algorithmic Properties}
It is a useful thought experiment to consider the properties one would
hope to find in an algorithm for the non-\iid, unbalanced, and
massively-distributed setting we consider.  In particular:
\begin{enumerate}[(A)]
\item \label{propStartOpt} If the algorithm is initialized to the optimal solution, it stays there.
\item \label{propOneNode} If all the data is on a single \node, the algorithm should converge in $\BO(1)$ rounds of communication.
\item \label{propDisjointFeatures} If each feature occurs on a single \node, so the problems are fully decomposable (each machine is essentially learning a disjoint block of parameters), then the algorithm should converge in $\BO(1)$ rounds of communication\footnote{This is valid only for generalized linear models.}.
\item \label{propIdentical} If each \node contains an identical dataset, then the algorithm should converge in $\BO(1)$ rounds of communication. 
\end{enumerate}
For convex problems, ``converges'' has the usual technical meaning of finding a solution sufficiently close to the global minimum, but these properties also make sense for non-convex problems where ``converge'' can be read as ``finds a solution of sufficient quality''. In these statements, $\BO(1)$ round is ideally exactly one round of communication.

Property \ref{propStartOpt} is valuable in any optimization setting.
Properties \ref{propOneNode} and \ref{propDisjointFeatures} are
extreme cases of the \fedopt setting (non-\iid, unbalanced, and sparse),
whereas \ref{propIdentical} is an extreme case of the classic
distributed optimization setting (large amounts of \iid data per
machine). Thus, \ref{propIdentical} is the least important property
for algorithms in the \fedopt setting.

\subsection{\SVRG}
\label{sec:SVRG}

The \SVRG algorithm \cite{SVRG, S2GD} is a stochastic method designed to solve problem~\eqref{eq:problem} on a single \node. We present it as Algorithm~\ref{alg:S2GD} in a slightly simplified form.

\begin{algorithm}[!h]
\begin{algorithmic}[1]
\State \textbf{parameters:} $m$ = number of stochastic steps per epoch, $h$ = stepsize
\For {$s = 0, 1, 2, \dots$}
	\State Compute and store $\nabla P(w^t) = \frac{1}{n} \sum_{i=1}^n \nabla f_i(w^t)$ \label{line:fullgrad}
	\Comment Full pass through data
	\State Set $w = w^t$
	\For {$t = 1$ to $m$}
		\State Pick $i \in \{ 1, 2, \dots, n \}$, uniformly at random
		\State $w = w - h \left( \nabla f_i(w) - \nabla f_i(w^t) + \nabla P(w^t) \right) $ 
		\Comment Stochastic update \label{line:stoch_update}
	\EndFor
	\State $w^{t+1} = w$
\EndFor
\end{algorithmic}

\caption{\SVRG}
\label{alg:S2GD}
\end{algorithm}

The algorithm runs in two nested loops. In the outer loop, it computes gradient of the entire function $P$ (Line~\ref{line:fullgrad}). This constitutes for a full pass through data --- in general expensive operation one tries to avoid unless necessary. This is followed by an inner loop, where $m$ fast stochastic updates are performed. In practice, $m$ is typically set to be a small multiple (1--5) of $n$. Although the theoretically optimal choice for $m$ is a small multiple of a condition number \cite[Theorem 6]{S2GD}, this is often of the same order as $n$ in practice.

The central idea of the algorithm is to avoid using the stochastic gradients to estimate the entire gradient $\nabla P(w)$ directly. Instead, in the stochastic update in Line \ref{line:stoch_update}, the algorithm evaluates two stochastic gradients, $\nabla f_i(w)$ and $\nabla f_i(w^t)$. These gradients are used to estimate the change of the gradient of the entire function between points $w^t$ and $w$, namely $\nabla P(w) - \nabla P(w^t)$. Using this estimate together with $\nabla P(w^t)$ pre-computed in the outer loop, yields an unbiased estimate of $\nabla P(w)$.

Apart from being an unbiased estimate, it could be intuitively clear that if $w$ and $w^t$ are close to each other, the variance of the estimate $\nabla f_i(w) - \nabla f_i(w^t)$ should be small, resulting in estimate of $\nabla P(w)$ with small variance. As the inner iterate $w$ goes further, variance grows, and the algorithm starts a new outer loop to compute new full gradient $\nabla P(w^{t+1})$ and reset the variance.

The performance is well understood in theory. For $\lambda$-strongly convex $P$ and $L$-smooth functions $f_i$, convergence results are in the form
\begin{equation}
\label{eq:S2GD:convergence}
\E{ P(w^t) - P(w^*)} \leq c^t [P(w^0) - P(w^*)],
\end{equation}
where $w^*$ is the optimal solution, and $c = \Theta \left( \frac{1}{m h} \right) + \Theta(h)$.\footnote{See \cite[Theorem 4]{S2GD} and \cite[Theorem 1]{SVRG} for details.}

It is possible to show \cite[Theorem 6]{S2GD} that for appropriate choice of parameters $m$ and $h$, the convergence rate~\eqref{eq:S2GD:convergence} translates to the need of $$ \left( n + \BO ( L/\lambda) \right) \log(1/\epsilon) $$
evaluations of $\nabla f_i$ for some $i$ to achieve $\E{ P(w) - P(w^*)} < \epsilon$.

\subsection{Distributed Problem Formulation}
\label{sec:problem}

In this section, we introduce notation and specify the structure of the distributed version of the problem we consider \eqref{eq:problem}, focusing on the case where the $f_i$ are convex.
We assume the data $\{x_i, y_i\}_{i=1}^n$, describing functions $f_i$ are stored
across a large number of \nodes.

Let $K$ be the number of \nodes.  Let $\pp_k$ for $k \in \{1, \dots, K\}$ denote a partition of data point indices $\{1, \dots, n\}$, so $\pp_k$ is the set stored on \node $k$, and define $n_k = |\pp_k|$. That is, we assume that $\pp_k \cap \pp_l = \emptyset$ whenever $k \neq l$, and $\sum_{k=1}^K n_k = n$.
We then define local empirical loss as
\begin{equation}\label{eq:98hs98hs8}
F_k(w) \eqdef \frac{1}{n_k} \sum_{i \in \mathcal{P}_k} f_i(w),
\end{equation}
which is the local objective based on the data stored on machine $k$. We can then rephrase the objective~\eqref{eq:problem} as
\begin{equation}
\label{eq:problem:distributed}
P(w) = \sum_{k = 1}^K \frac{n_k}{n} F_k(w) 
= \sum_{k=1}^K \frac{n_k}{n} \cdot \frac{1}{n_k} \sum_{i \in \mathcal{P}_k} f_i(w).
\end{equation}

The way to interpret this structure is to see the empirical loss $P(w) = \frac1n \sum_{i=1}^n f_i(w)$ as a convex combination of the local empirical losses $F_k(w)$, available locally to \node $k$. Problem \eqref{eq:problem} then takes the simplified form
\begin{equation}
\label{eq:problem:distributed:simple}
\min_{w\in \R^d} P(w) \equiv \sum_{k=1}^K \frac{n_k}{n} F_k(w).
\end{equation}

\subsection{DANE}
\label{sec:algorithms:DANE}

In this section, we introduce a general reasoning providing stronger intuitive support for the DANE algorithm \cite{DANE}, which we describe in detail below. We will follow up on this reasoning in Appendix~\ref{sec:appendix} and draw a connection between two existing methods that was not known in the literature.

If we wanted to design a distributed algorithm for solving the above problem \eqref{eq:problem:distributed:simple}, where \node $k$ contains the data describing function $F_k$. The first, and as we shall see, a rather naive idea is to ask each node to minimize their local functions, and average the results (a variant of this idea appeared in \cite{zinkevich2010parallelized}):
$$ w_k^{t+1} = \arg \min_{w\in \R^d} F_k(w), 
\qquad w^{t+1} = \sum_{k=1}^K \frac{n_k}{n} w_k^{t+1}. $$

Clearly, it does not make sense to run this algorithm for more than one iteration as the output $w$ will always be the same. This is simply because $w_k^{t+1}$ does not depend on $t$. In other words, this method effectively performs just a single round of communication. While the simplicity is appealing, the drawback of this method is that it can't work. Indeed, there is no reason to expect that in general the solution of \eqref{eq:problem:distributed:simple} will be a weighted average of the local solutions, unless the local functions are all the same --- in which case we do not need a distributed algorithm in the first place and can instead solve the much simpler problem $\min_{w\in \R^d} F_1(w)$. This intuitive reasoning can be also formally supported, see for instance \cite[Appendix A]{DANE}.

One remedy to the above issue is to modify the local problems before each aggregation step. One of the simplest strategies would be to perturb the local function $F_k$ in iteration $t$ by a quadratic term of the form: $-( a_k^t)^T w + \tfrac{\mu}{2}\|w-w^t\|^2$ and to ask each node to solve the perturbed problem instead. With this change, the improved method then takes the form
\begin{equation}
\label{eq:alg:perturbation}
w_k^{t+1} = \arg \min_{w\in \R^d} F_k(w) - (a_k^t)^T w  + \frac{\mu}{2}\|w-w^t\|^2, \qquad w^{t+1} = \frac{1}{K}\sum_{k=1}^K w_k^{t+1}.
\end{equation}

The idea behind iterations of this form is the following. We would like each node $k\in [K]$ to use as much curvature information stored in $F_k$ as possible. By keeping the function $F_k$ in the subproblem in its entirety, we are keeping the curvature information nearly intact --- the Hessian of the subproblem is $\nabla^2 F_k + \mu I$, and we can even choose $\mu = 0$.

As described, the method is not yet well defined, since we have not described how the vectors $a_k^t$ would change from iteration to iteration, and how one should choose $\mu$.  In order to get some insight into how such a method might work, let us examine the optimality conditions. Asymptotically as $t\to \infty$, we would like $a_k^t$ to be such that the minimum of each subproblem is equal to $w^*$; the minimizer of \eqref{eq:problem:distributed:simple}. Hence, we would wish for $w^*$ to be the solution of 
$$ \nabla F_k(w) - a_k^t +\mu(w - w^t) = 0. $$

Hence, in the limit, we would ideally like to choose $a_k^t = \nabla F_k(w^*) + \mu(w^* - w^t) \approx \nabla F_k(w^*)$, since $w^* \approx w^t$.  Not knowing $w^*$ however, we cannot hope to be able to simply set $a_k^t$ to this value. Hence, the second option is to come up with an update rule which would guarantee that $a_k^t $ converges to $\nabla F_k(w^*)$ as $t \to \infty$. Notice at this point that it has been long known in the optimization community that the gradient of the objective at the optimal point is intimately related to the optimal solution of a dual problem. Here the situation is further complicated by the fact that we need to learn $K$ such gradients. In the following, we show that DANE is in fact a particular instantiation of the scheme above.

\paragraph{DANE.} We present the Distributed Approximate Newton algorithm (DANE) \cite{DANE}, as Algorithm~\ref{alg:DANE}. The algorithm was originally analyzed for solving the problem of structure \eqref{eq:problem:distributed}, with $n_k$ being identical for each $k$ --- i.e., each computer has the same number of data points. Nothing prevents us from running it in our more general setting though.

\begin{algorithm}[!h]
\caption{Distributed Approximate Newton (DANE)}\label{alg:DANE}
\begin{algorithmic}[1]
\State {\bf Input:} regularizer $\mu \geq 0$, parameter $\eta$ (default: $\mu = 0, \eta = 1$)
\For {$s = 0, 1, 2, \dots $}
  \State Compute $\nabla P(w^t) = \frac{1}{n} \sum_{i=1}^n \nabla f_i(w^t)$ and distribute to all machines \label{line:gradient}
  \State For each \node $k \in \{1, \dots, K\}$, solve \label{line:subproblem}
  \begin{equation}
  \label{eq:DANEsubproblem}
  w_k = \argmin_{w \in \R^d} \left\{ F_k(w) - \left( \nabla F_k(w^t) - \eta \nabla P(w^t)  \right)^T w  + \frac{\mu}{2} \| w - w^t \|^2 \right\}
  \end{equation}
  \State Compute $w^{t+1} = \frac{1}{K}\sum_{k=1}^K w_k$ \label{line:aggregate}
\EndFor
\end{algorithmic}
\end{algorithm}

As alluded to earlier, the main idea of DANE is to form a local subproblem, dependent only on local data, and gradient of the entire function --- which can be computed in a single round of communication (Line \ref{line:gradient}). The subproblem is then solved exactly (Line \ref{line:subproblem}), and updates from individual \nodes are averaged to form a new iterate (Line \ref{line:aggregate}). This approach allows any algorithm to be used to solve the local subproblem~\eqref{eq:DANEsubproblem}. As a result, it often achieves communication efficiency in the sense of requiring expensive local computation between rounds of communication, hopefully rendering the time needed for communication insignificant (see Section~\ref{sec:relatedWork:paradigm}). Further, note that DANE belongs to the family of distributed method that operate via the quadratic perturbation trick \eqref{eq:alg:perturbation} with 
$$ a_k^t = \nabla F_k(w^t) - \eta \nabla P(w^t). $$
If we assumed that the method works, i.e., that $w^t \to w^*$ and hence $\nabla P(w^t) \to \nabla P(w^*) = 0$, then $a_k^t \to \nabla F_k(w^*)$, which agrees with the earlier discussion.

In the default setting when $\mu = 0$ and $\eta = 1$, DANE achieves desirable property
\ref{propIdentical} (immediate convergence when all local datasets are
identical), since in this case $\nabla F_k(w^t) - \eta \nabla P(w^t)
= 0$, and so we exactly minimize $F_k(w) = P(w)$ on each machine.
For any choice of $\mu$ and $\eta$, DANE also achieves property
\ref{propStartOpt}, since in this case $\nabla P(w^t) = 0$, and
$w^t$ is a minimizer of $F_k(w) - \nabla F_k(w^t)\cdot w$ as well
as of the regularization term.
Unfortunately, DANE does not achieve the more \fedopt-specific desirable properties \ref{propOneNode} and \ref{propDisjointFeatures}.

The convergence analysis for DANE assumes that the functions are twice differentiable, and relies on the assumption that each \node has access to \iid samples from the same underlying distribution. This implies that that the Hessians of $\nabla^2 F_k(w)$ are similar to each other \cite[Lemma 1]{DANE}. In case of linear regression, with $\lambda = \BO(1 / \sqrt{n})$-strongly convex functions, the number of DANE iterations needed to achieve $\epsilon$-accuracy is $\BO(K \log(1/\epsilon))$. However, for general $L$-smooth loss, the theory is significantly worse, and does not match its practical performance.

The practical performance also depends on the additional local regularization parameter $\mu$. For small number of \nodes $K$, the algorithm converges quickly with $\mu = 0$. However, as reported \cite[Figure 3]{DANE}, it can diverge quickly with growing $K$. Bigger $\mu$ makes the algorithm more stable at the cost of slower convergence. Practical choice of $\mu$ remains an open question.

\subsection{\SVRG meets DANE}

As we mentioned above, the DANE algorithm can perform poorly in certain settings, even without the challenging aspects of \fedopt. Another point that is seen as drawback of DANE is the need to find the \emph{exact} minimum of~\eqref{eq:DANEsubproblem} --- this can be feasible for quadratics with relatively small dimension, but infeasible or extremely expensive to achieve for other problems. We adapt the idea from the CoCoA algorithm \cite{ma2015distributed}, in which an arbitrary optimization algorithm is used to obtain relative $\Theta$ accuracy on a locally defined subproblem. We replace the exact optimization with an approximate solution obtained by using any optimization algorithm.

Considering all the algorithms one could use to solve~\eqref{eq:DANEsubproblem}, the \SVRG algorithm seems to be a particularly good candidate. Starting the local optimization of \eqref{eq:DANEsubproblem} from point $w^t$, the algorithm automatically has access to the derivative at $w^t$, which is identical for each \node\xspace --- $\nabla P(w^t)$. Hence, the \SVRG algorithm can skip the initial expensive operation, evaluation of the entire gradient (Line~3, Algorithm~\ref{alg:S2GD}), and proceed only with the stochastic updates in the inner loop.

It turns out that this modified version of the DANE algorithm is equivalent to a distributed version of \SVRG.

\begin{proposition}
\label{prop:equivalence}
Consider the following two algorithms.
\begin{enumerate}
\item Run the DANE algorithm (Algorithm~\ref{alg:DANE}) with $\eta = 1$ and $\mu = 0$, and use \SVRG (Algorithm~\ref{alg:S2GD}) as a local solver for \eqref{eq:DANEsubproblem}, running it for a single iteration, initialized at point $w^t$.
\item Run a distributed variant of the \SVRG algorithm, described in Algorithm~\ref{alg:DS2GDnaive}.
\end{enumerate}

The algorithms are equivalent in the following sense. If both start from the same point $w^t$, they generate identical sequence of iterates $\{ w^t \}$.
\end{proposition}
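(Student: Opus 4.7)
\medskip
\noindent\textbf{Proof proposal.}
The plan is to unfold the definition of both procedures, match them update-by-update, and exhibit an algebraic cancellation which makes the DANE perturbation terms disappear once \SVRG\ is applied as the local solver. Since both algorithms are (conditionally) deterministic once the sequence of sampled indices and initial point are fixed, it suffices to show that they perform the same update at each inner step on each machine, starting from the same $w^t$.

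First I would instantiate Algorithm~\ref{alg:DANE} with $\eta=1$, $\mu=0$. The local subproblem on machine $k$ becomes
\begin{equation*}
H_k(w) \eqdef F_k(w) - \bigl(\nabla F_k(w^t) - \nabla P(w^t)\bigr)^{\!T} w,
\end{equation*}
and writing $H_k = \tfrac{1}{n_k}\sum_{i\in\mathcal{P}_k} h_i$ with $h_i(w) = f_i(w) - \tfrac{1}{n_k}(\nabla F_k(w^t) - \nabla P(w^t))^{\!T}w$ exposes the finite-sum structure needed by \SVRG. The key observation to record at this stage is that $\nabla H_k(w^t) = \nabla P(w^t)$; this means the outer full-gradient evaluation in Algorithm~\ref{alg:S2GD}, which is normally the expensive step, is already available from the DANE communication round, and coincides with the quantity used by the distributed \SVRG\ variant.

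Next I would write out one inner \SVRG\ step on $H_k$ started at $w^t$. Sampling $i\in\mathcal{P}_k$ uniformly, the update reads
\begin{equation*}
w \gets w - h\bigl(\nabla h_i(w) - \nabla h_i(w^t) + \nabla H_k(w^t)\bigr).
\end{equation*}
Substituting the definitions of $h_i$ and $H_k$, the constant vector $\tfrac{1}{n_k}(\nabla F_k(w^t)-\nabla P(w^t))$ appears with opposite signs in $\nabla h_i(w)$ and $\nabla h_i(w^t)$ and cancels, leaving
\begin{equation*}
w \gets w - h\bigl(\nabla f_i(w) - \nabla f_i(w^t) + \nabla P(w^t)\bigr),
\end{equation*}
which is precisely the inner update of distributed \SVRG\ using only local data together with the globally communicated $\nabla P(w^t)$. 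Thus the entire inner loop of the two procedures is literally identical on each machine provided they use the same sample indices.

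Finally I would close the induction on $t$: averaging the local iterates $w_k$ at the end of the DANE outer iteration matches the aggregation step of Algorithm~\ref{alg:DS2GDnaive}, producing the same $w^{t+1}$. Since the initial point $w^0$ is shared and each outer iteration preserves the equality of iterates, the generated sequences $\{w^t\}$ coincide. The main (and essentially only) technical point is the cancellation argument above; once it is stated cleanly the rest is bookkeeping. A minor subtlety I would flag is that the equivalence holds pathwise only under a common coupling of the random sample indices used by the two algorithms, but this is the natural reading of the statement and is how \SVRG\ is specified in Algorithm~\ref{alg:S2GD}.
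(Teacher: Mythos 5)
Your proof is correct and follows essentially the same route as the paper: it unfolds the DANE subproblem with $\eta=1$, $\mu=0$, forms the SVRG stochastic update direction for that subproblem, and observes that the deterministic linear perturbation cancels between the two stochastic gradients while the subproblem's full gradient at $w^t$ equals $\nabla P(w^t)$, so each inner step reduces to $\nabla f_i(w)-\nabla f_i(w^t)+\nabla P(w^t)$, after which the averaging/aggregation steps trivially coincide. One minor bookkeeping slip: in the decomposition $H_k=\tfrac{1}{n_k}\sum_{i\in\mathcal{P}_k}h_i$ the linear term in each $h_i$ should be $-(\nabla F_k(w^t)-\nabla P(w^t))^{T}w$ without the extra $1/n_k$ factor (otherwise the sum does not reproduce $H_k$), though this constant is irrelevant both to the cancellation and to the identity $\nabla H_k(w^t)=\nabla P(w^t)$, so the argument is unaffected.
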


\begin{proof}
We construct the proof by showing that single step of the \SVRG algorithm applied to the problem \eqref{eq:DANEsubproblem} on computer $k$ is identical to the update on Line~\ref{line:DSVRGupdate} in Algorithm~\ref{alg:DS2GDnaive}.

The way to obtain a stochastic gradient of \eqref{eq:DANEsubproblem} is to sample one of the functions composing $F_k(w) = \frac{1}{n_k} \sum_{i \in \mathcal{P}_k} f_i(w)$, and add the linear term $\nabla F_k(w^t) - \eta P(w^t)$, which is known and does not need to be estimated. Upon sampling an index $i \in \mathcal{P}_k$, the update direction follows as 
\begin{align*}
\left[ \nabla f_i(w) - \nabla F_k(w^t) - \nabla P(w^t) \right] - \left[ \nabla f_i(w^t) - \nabla F_k(w^t) - \nabla P(w^t) \right] + \nabla P(w^t) = \\
\nabla f_i(w) - \nabla f_i(w^t) + \nabla P(w^t)
\end{align*}
which is identical to the direction in Line~\ref{line:DSVRGupdate} in Algorithm~\ref{alg:DS2GDnaive}. The claim follows by chaining the identical updates to form identical iterate $w^{t+1}$.
\end{proof}

\begin{algorithm}[!h]
\begin{algorithmic}[1]
\State \textbf{parameters:} $m$ = \# of stochastic steps per epoch, $h$ = stepsize, data partition $\{\pp_k\}_{k=1}^K$
\For {$s = 0, 1, 2, \dots$}
	\Comment Overall iterations
	\State Compute $\nabla P(w^t) = \frac{1}{n} \sum_{i=1}^n \nabla f_i(w^t)$
	\For {$k = 1$ to $K$} \textbf{in parallel} over \nodes $k$
	\Comment Distributed loop
	\State Initialize: $w_k = w^t$
	\For {$t = 1$ to $m$}
		\Comment Actual update loop
		\State Sample $i \in \mathcal{P}_k$ uniformly at random
		\State $ w_k = w_k - h \left( \nabla f_i(w_k) - \nabla f_i(w^t) + \nabla P(w^t) \right) $
		\label{line:DSVRGupdate}
		\EndFor
	\EndFor
	\State $w^{t+1} = w^t + \frac{1}{K} \sum_{k=1}^K (w_k - w^t)$
	\Comment Aggregate
\EndFor
\end{algorithmic}

\caption{naive Federated \SVRG (\algname)}
\label{alg:DS2GDnaive}
\end{algorithm}

\begin{remark}
The algorithms considered in Proposition~\ref{prop:equivalence} are inherently stochastic. The statement of the proposition is valid under the assumption that in both cases, identical sequence of samples $i \in \mathcal{P}_k$ would be generated by all \nodes $k \in \{1, 2, \dots, K\}$.
\end{remark}

\begin{remark}
In the Proposition~\ref{prop:equivalence} we consider the DANE algorithm with particular values of $\eta$ and $\mu$. The Algorithm~\ref{alg:DS2GDnaive} and the Proposition can be easily generalized, but we present only the default version for the sake of clarity.
\end{remark}

Since the first version of this work, this connection has been mentioned in \cite{reddi2016aide}, which analyzes an inexact version of the DANE algorithm. We proceed by adapting the above algorithm to other challenges arising in the context of \fedopt.

\subsection{Federated \SVRG}

Empirically, the Algorithm~\ref{alg:DS2GDnaive} fits in the model of distributed optimization efficiency described in Section~\ref{sec:relatedWork:paradigm}, since we can balance how many stochastic iterations should be performed locally against communication costs. However, several modifications are necessary to achieve good performance in the full \fedopt setting (Section~\ref{sec:problem}). 
Very important aspect that needs to be addressed is that the number of data points available to a given node can differ greatly from the average number of data points available to any single \node. Furthermore, this setting always comes with the data available locally being clustered around a specific pattern, and thus not being a representative sample of the overall distribution we are trying to learn.
In the Experiments section we focus on the case of L2 regularized logistic regression, but the ideas carry over to other generalized linear prediction problems.

\subsubsection{Notation}

Note that in large scale generalized linear prediction problems, the data arising are almost always sparse, for example due to bag-of-words style feature representations. This means that only a small subset of $d$ elements of vector $x_i$ have nonzero values. In this class of problems, the gradient $\nabla f_i(w)$ is a multiple of the data vector $x_i$. This creates additional complications, but also potential for exploitation of the problem structure and thus faster algorithms. Before continuing, let us summarize and denote a number of quantities needed to describe the algorithm. 
\begin{itemize}[noitemsep]
\item $n$ --- number of data points / training examples / functions.
\item $\mathcal{P}_k$ --- set of indices, corresponding to data points stored on device $k$.
\item $n_k = |\mathcal{P}_k|$ --- number of data points stored on device $k$.
\item $n^j = \left| \{ i \in \{ 1, \dots, n\} : x_i^T e_j \neq 0 \} \right|$ --- the number of data points with nonzero $j^{th}$ coordinate
\item $n_k^j = \left| \{  i \in \mathcal{P}_k : x_i^T e_j \neq 0 \} \right| $ --- the number of data points stored on \node $k$ with nonzero $j^{th}$ coordinate
\item $\phi^j = n^j / n$ --- frequency of appearance of nonzero elements in $j^{th}$ coordinate
\item $\phi_k^j = n_k^j / n_k$ --- frequency of appearance of nonzero elements in $j^{th}$ coordinate on \node $k$
\item $s_k^j = \phi^j / \phi_k^j$ --- ratio of global and local appearance frequencies on \node $k$ in $j^{th}$ coordinate
\item $S_k = \text{Diag}(s_k^j)$ --- diagonal matrix, composed of $s_k^j$ as $j^{th}$ diagonal element
\item $\omega^j = \left|\{ \mathcal{P}_k : n_k^j \neq 0 \}\right|$ --- Number of \nodes that contain data point with nonzero $j^{th}$ coordinate
\item $a^j = K / \omega^j$ --- aggregation parameter for coordinate $j$
\item $A = \text{Diag}(a_j)$ --- diagonal matrix composed of $a_j$ as $j^{th}$ diagonal element
\end{itemize}

With these quantities defined, we can state our proposed algorithm as Algorithm~\ref{alg:DS2GDv7}. Our experiments show that this algorithm works very well in practice, but the motivation for the particular scaling of the updates may not be immediately clear.  In the following section we provide the intuition that lead to the development of this algorithm.

\begin{algorithm}[!h]
\begin{algorithmic}[1]
\State \textbf{parameters:} $h$ = stepsize, data partition $\{\pp_k\}_{k=1}^K$, \newline {\color{white}} \hspace{64pt} diagonal matrices $A, S_k \in \R^{d \times d}$ for $k \in \{1, \dots, K\}$ 
\For {$s = 0, 1, 2, \dots$}
	\Comment Overall iterations
	\State Compute $\nabla P(w^t) = \frac{1}{n} \sum_{i=1}^n \nabla f_i(w^t)$
	\For {$k = 1$ to $K$} \textbf{in parallel} over \nodes $k$
	\Comment Distributed loop
	\State Initialize: $w_k = w^t$ and $h_k = h / n_k$
	\State Let $\{ i_t \}_{t=1}^{n_k}$ be random permutation of $\mathcal{P}_k$
	\For {$t = 1, \dots, n_k$}
		\Comment Actual update loop
		\State $ w_k = w_k - h_k \left( S_k \left[ \nabla f_{i_t}(w_k) - \nabla f_{i_t}(w^t) \right] + \nabla P(w^t) \right) $
		\EndFor
	\EndFor
	\State $w^t = w^t + A \sum_{k=1}^K \frac{n_k}{n} (w_k - w^t)$
	\Comment Aggregate
\EndFor
\end{algorithmic}
\caption{Federated \SVRG (\algname)}
\label{alg:DS2GDv7}
\end{algorithm}

\subsubsection{Intuition Behind \algname Updates}
\label{sec:algorithms:intuition}

The difference between the Algorithm~\ref{alg:DS2GDv7} and Algorithm~\ref{alg:DS2GDnaive} is in the introduction of the following properties.

\begin{enumerate}
\item Local stepsize --- $h_k = h / n_k$.
\item Aggregation of updates proportional to partition sizes --- $\frac{n_k}{n} (w_k - w^t)$
\item Scaling stochastic gradients by diagonal matrix --- $S_k$
\item Per-coordinate scaling of aggregated updates --- $A (w_k - w^t)$
\end{enumerate}

Let us now explain what motivated us to get this particular implementation. 

As a simplification, assume that at some point in time, we have for some $w$, $w_k = w$ for all $k \in [K]$. In other words, all the \nodes have the same local iterate. Although this is not exactly the case in practice, thinking about the issue in this simplified setting will give us insight into what would be meaningful to do if it was true. Further, we can hope that the reality is not too far from the simplification and it will still work in practice. Indeed, all \nodes do start from the same point, and adding the linear term $\nabla F_k(w^t) - \nabla P(w^t)$ to the local objective forces all \nodes to move in the same direction, at least initially.

Suppose the \nodes are about to make a single step synchronously. Denote the update direction on \node $k$ as $G_k = \nabla f_i(w) - \nabla f_i(w^t) + \nabla P(w^t)$, where $i$ is sampled uniformly at random from $\mathcal{P}_k$.

If we had only one \node, i.e., $K = 1$, it is clear that we would have $\E{G_1} = \nabla P(w^t)$. If $K$ is more than $1$, the values of $G_k$ are in general biased estimates of $\nabla P(w^t)$. We would like to achieve the following: $\E{ \sum_{k=1}^K \alpha_k G_k } = \nabla P(w^t)$, for some choice of $\alpha_k$. This is motivated by the general desire to make stochastic first-order methods to make a gradient step in expectation.

We have 
\begin{equation*}
\E{ \sum_{k=1}^K \alpha_k G_k } = \sum_{k=1}^K \alpha_k \frac{1}{n_k} \sum_{i \in \mathcal{P}_k} \left[ \nabla f_i(w) - \nabla f_i(w^t) + \nabla P(w^t) \right].
\end{equation*}
By setting $\alpha_k = \frac{n_k}{n}$, we get
\begin{equation*}
\E{ \sum_{k=1}^K \alpha_k G_k } = \frac{1}{n} \sum_{k=1}^K \sum_{i \in \mathcal{P}_k} \left[ \nabla f_i(w) - \nabla f_i(w^t) + \nabla P(w^t) \right] = \nabla P(w).
\end{equation*}

This motivates the aggregation of updates from \nodes proportional to $n_k$, the number of data points available locally (Point 2).

Next, we realize that if the local data sizes, $n_k$, are not identical, we likely don't want to do the same number of local iterations on each \node $k$. Intuitively, doing one pass through data (or a fixed number of passes) makes sense. As a result, the aggregation motivated above does not make perfect sense anymore. Nevertheless, we can even it out, by setting the stepsize $h_k$ inversely proportional to $n_k$, making sure each \node makes progress of roughly the same magnitude overall. Hence, $h_k = h / n_k$ (Point 1).

To motivate the Point 3, scaling of stochastic gradients by diagonal matrix $S_k$, consider the following example. We have $1,000,000$ data points, distributed across $K = 1,000$ \nodes. When we look at a particular feature of the data points, we observe it is non-zero only in $1,000$ of them. Moreover, all of them happen to be stored on a single \node, that stores only these $1,000$ data points. Sampling a data point from this \node and evaluating the corresponding gradient, will clearly yield an estimate of the gradient $\nabla P(w)$ with $1000$-times larger magnitude. This would not necessarily be a problem if done only once. However, repeatedly sampling and overshooting the magnitude of the gradient will likely cause the iterative process to diverge quickly.

Hence, we scale the stochastic gradients by a diagonal matrix. This can be seen as an attempt to enforce the estimates of the gradient to be of the correct magnitude, conditioned on us, algorithm designers, being aware of the structure of distribution of the sparsity pattern.

Let us now highlight some properties of the modification in Point 4. Without any extra information, or in the case of fully dense data, averaging the local updates is the only way that actually makes sense --- because each \node outputs approximate solution of a proxy to the overall objective, and there is no induced separability structure in the outputs such as in CoCoA \cite{ma2015distributed}. However, we could do much more in the other extreme. If the sparsity structure is such that each data point only depends on one of disjoint groups of variables, and the data were distributed according to this structure, we would efficiently have several disjoint problems. Solving each of them locally, and adding up the results would solve the problem in single iteration --- desired algorithm property (C).

What we propose is an interpolation between these two settings, on a per-variable basis. If a variable appears in data on each \node, we are going to take average. However, the less \nodes a particular variable appear on, the more we want to trust those few \nodes in informing us about the meaningful update to this variable --- or alternatively, take a longer step. Hence the per-variable scaling of aggregated updates.

\subsection{Further Notes}
\label{sec:algorithms:furthernotes}

Looking at the Proposition~\ref{prop:equivalence}, we identify equivalence of two algorithms, take the second one and try modify it to make it suitable for the setting of \fedopt. A question naturally arise: Is it possible to achieve the same by modifying the first algorithm suitable for \fedopt\ --- by only altering the local optimization objective? 

We indeed tried to experiment with idea, but we don't report the details for two reasons. First, the requirement of exact solution of the local subproblem is often impractical. Relaxing it gradually moves us to the setting we presented in the previous sections. But more importantly, using this approach we have only managed to get results significantly inferior to those reported later in the Experiments section.

\section{Experiments}
\label{sec:experiments}

In this section we present the first experimental results in the setting of \fedopt. In particular, we provide results on a dataset based on public Google+ posts\footnote{The posts were public at the time the experiment was performed, but since a user may decide to delete the post or make it non-public, we cannot release (or even permanently store) any copies of the data.}, clustered by user --- simulating each user as a independent \node. This preliminary experiment demonstrates why none of the existing algorithms are suitable for \fedopt, and the robustness of our proposed method to challenges arising there. 

\subsection{Predicting Comments on Public Google+ Posts}
\label{sec:experiments:gplus}
The dataset presented here was generated based on public Google+ posts. We randomly picked $10,000$ authors that have at least $100$ public posts in English, and try to predict whether a post will receive at least one comment (that is, a binary classification task). 

We split the data chronologically on a per-author basis, taking the earlier $75\%$ for training and the following $25\%$ for testing. The total number of training examples is $n = 2,166,693$. We created a simple bag-of-words language model, based on the $20,000$ most frequent words in dictionary based on all Google+ data. This results in a problem with dimension $d = 20,002$. The extra two features represent a bias term and variable for unknown word. We then use a logistic regression model to make a prediction based on these features.

We shape the distributed optimization problem as follows. Suppose that each user corresponds to one \node, resulting in $K = 10,000$. The average $n_k$, number of data points on \node $k$ is thus roughly $216$. However, the actual numbers $n_k$ range from $75$ to $9,000$, showing the data is in fact substantially unbalanced.

It is natural to expect that different users can exhibit very different patterns in the data generated. This is indeed the case, and hence the distribution to \nodes cannot be considered an \iid sample from the overall distribution. Since we have a bag-of-words model, our data are very sparse --- most posts contain only small fraction of all the words in the dictionary. This, together with the fact that the data are naturally clustered on a per-user basis, creates additional challenge that is not present in the traditional distributed setting. 

\begin{figure}[!h]
\centering
\includegraphics[width=0.5\textwidth]{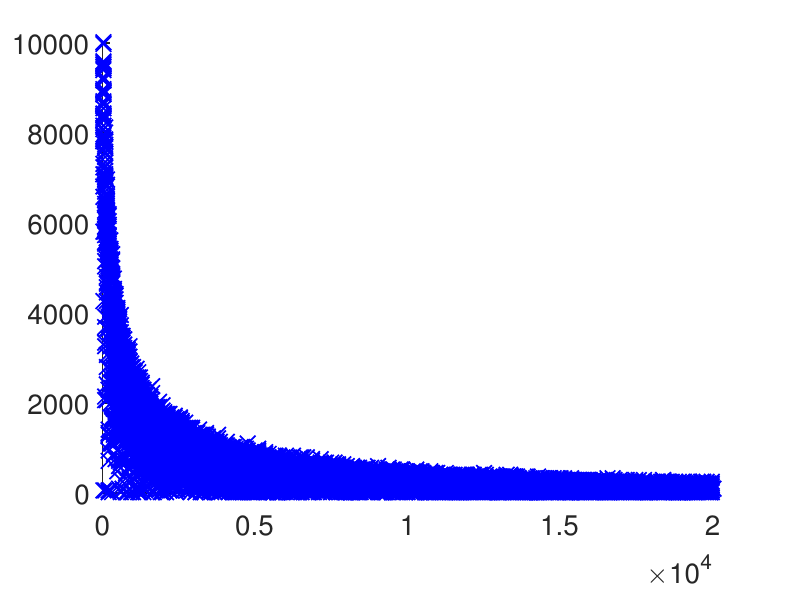}
\caption{Features vs. appearance on \nodes.  The $x$-axis is a feature index, and the $y$-axis represents the number of \nodes where a given feature is present.}
\label{fig:omegaprime}
\end{figure}

Figure~\ref{fig:omegaprime} shows the frequency of different features across \nodes. Some features are present everywhere, such as the bias term, while most features are relatively rare. In particular, over $88\%$ of features are present on fewer than $1,000$ \nodes. However, this distribution does not necessarily resemble the overall appearance of the features in data examples. For instance, while an unknown word is present in data of almost every user, it is far from being contained in every data point.

\paragraph{Naive prediction properties.} Before presenting the results, it is useful to look at some of the important basic prediction properties of the data. We use L2-regularized logistic regression, with regularization parameter $\lambda = 1/n$. We chose $\lambda$ to be the best in terms of test error in the optimal solution.
\begin{itemize} \itemsep -2pt
\item If one chooses to predict $-1$ (no comment), classification error is $\textbf{33.16}\%$. 
\item The optimal solution of the global logistic regression problem yields $\textbf{26.27}\%$ test set error. 
\item Predicting the per-author majority from the training data yields $\textbf{17.14}\%$ test error. That is, predict $+1$ or $-1$ for all the posts of an author, based on which label was more common in that author's training data. This indicates that knowing the author is actually more useful than knowing what they said, which is perhaps not surprising.
\end{itemize}

In summary, this data is representative for our motivating application in \fedopt. It is possible to improve upon naive baseline using a fixed global model. Further, the per-author majority result suggests it is possible to improve further by adapting the global model to each user individually. Model personalization is common practice in industrial applications, and the techniques used to do this are orthogonal to the challenges of \fedopt. Exploring its performance is a natural next step, but beyond the scope of this work.

While we do not provide experiments for per user personalized models, we remark that this could be a good descriptor of how far from \iid the data is distributed. Indeed, if each \node has access to an \iid sample, any adaptation to local data is merely over-fitting. However, if we can significantly improve upon the global model by per user/\node adaptation, this means that the data available locally exhibit patterns specific to the particular \node.

The performance of the Algorithm~\ref{alg:DS2GDv7} is presented below. The only parameter that remains to be chosen by user is the stepsize $h$. We tried a set of stepsizes, and retrospectively choose one that works best --- a typical practice in machine learning.

\begin{figure}[!h]
\centering
\includegraphics[width=0.48\textwidth]{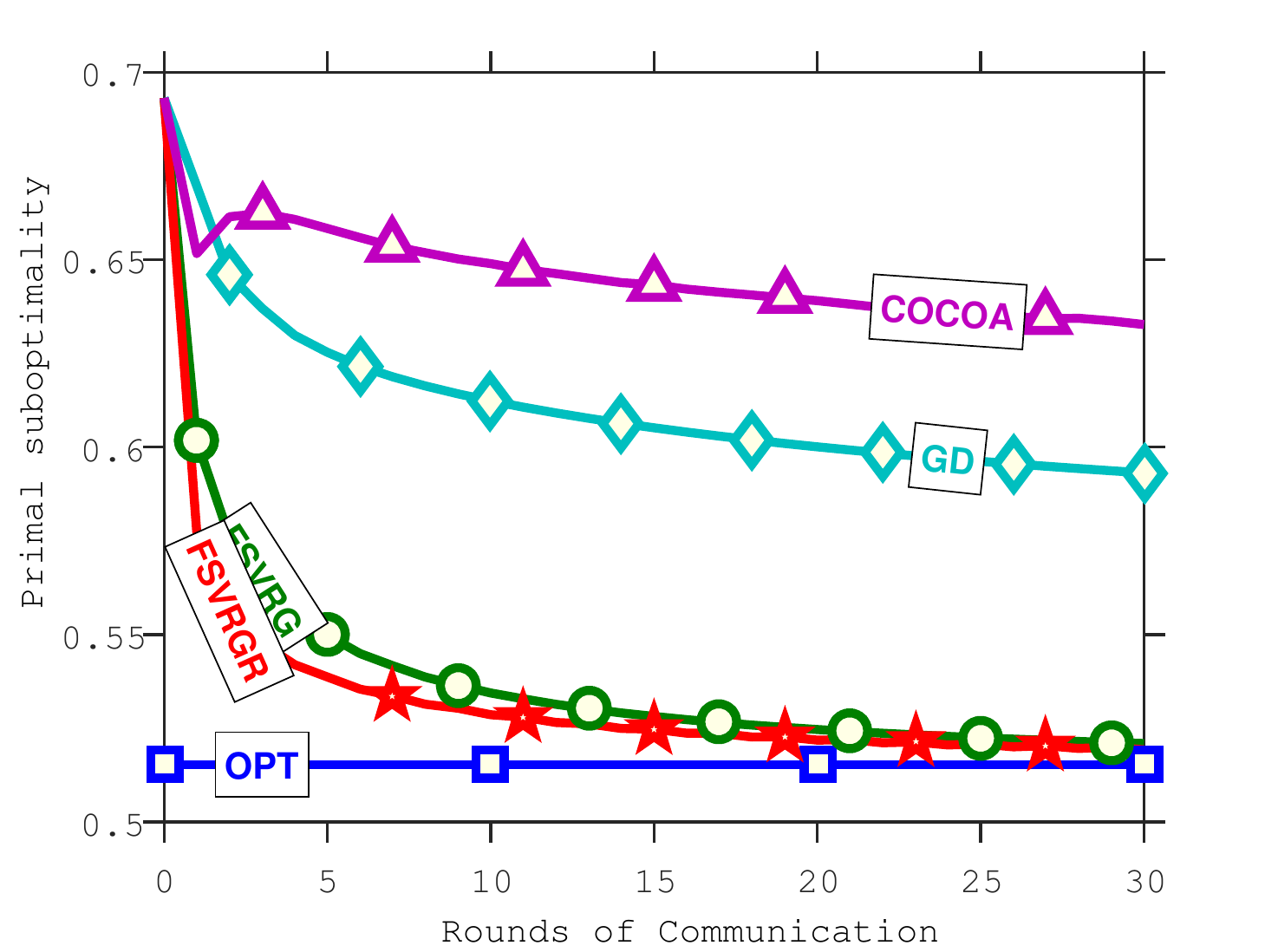}
\includegraphics[width=0.48\textwidth]{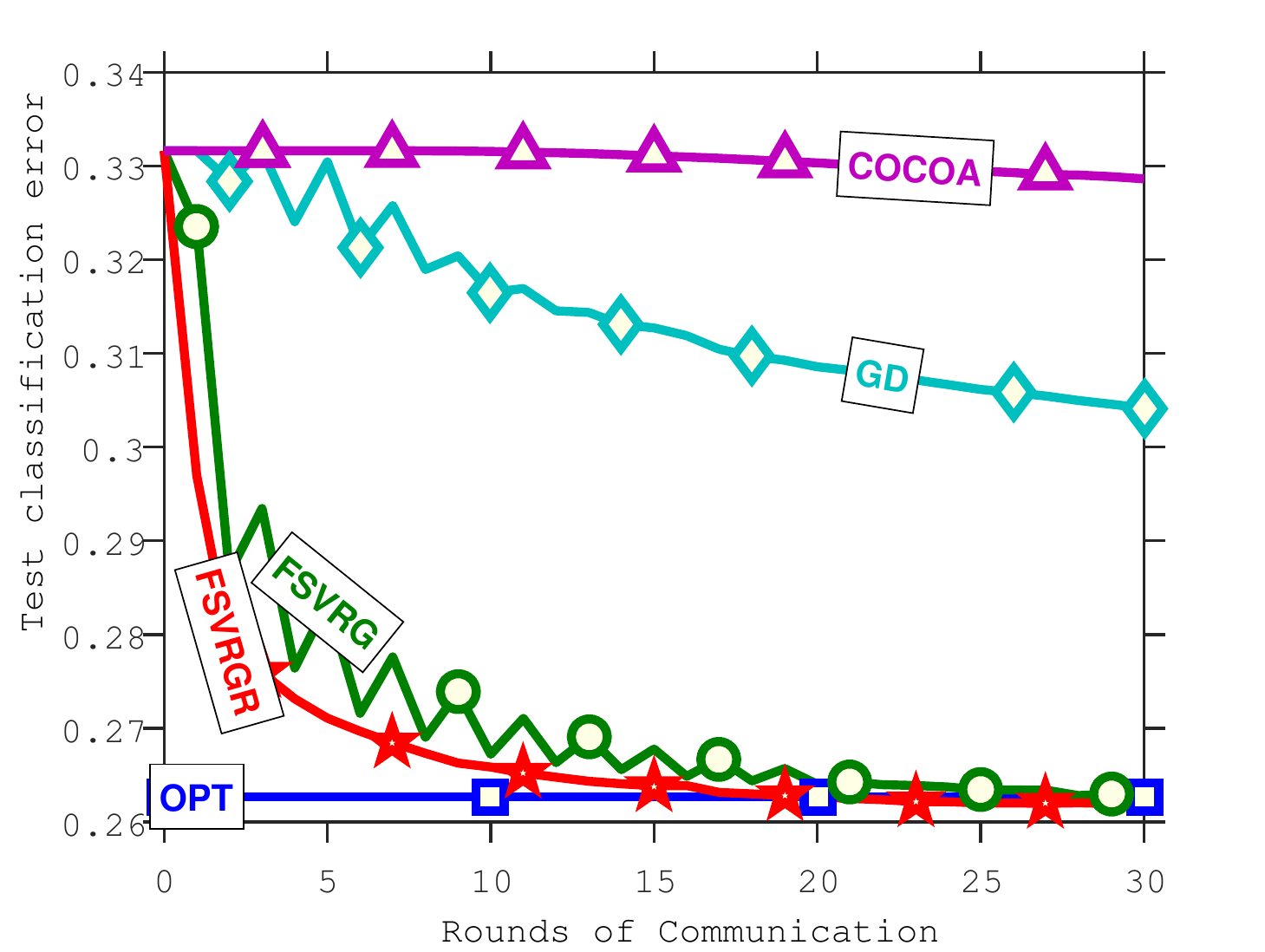}
\caption{Rounds of communication vs. objective function (left) and test prediction error (right).}
\label{fig:ex_final_001}
\end{figure}

In Figure~\ref{fig:ex_final_001}, we compare the following optimization algorithms\footnote{We thank Mark Schmidt for his \texttt{prettyPlot} function, available on his website.}:
\begin{itemize} \itemsep -2pt
 \item The blue squares (OPT) represent the best possible offline value (the optimal value of the optimization task in the first plot, and the test error corresponding to the optimum in the second plot).
 \item  The teal diamonds (GD) correspond to a simple distributed gradient descent. 
 \item The purple triangles (COCOA) are for the CoCoA+ algorithm \cite{ma2015distributed}.
\item The green circles (FSVRG) give values for our proposed algorithm.
\item The red stars (FSVRGR) correspond to the same algorithm applied to the same problem with randomly reshuffled data.  That is, we keep the unbalanced number of examples per node, but populate each node with randomly selected examples.
\end{itemize}

The first thing to notice is that CoCoA+ seems to be worse than trivial benchmark --- distributed gradient descent. This behavior can be predicted from theory, as the overall convergence rate directly depends on the best choice of aggregation parameter $\sigma'$. For sparse problems, it is upperbounded by the maximum of the values reported in Figure~\ref{fig:omegaprime}, which is $K$, and it is close to it also in practice. Although it is expected that the algorithm could be modified to depend on average of these quantities (which could be orders of magnitude smaller), akin to coordinate descent algorithms \cite{RichtarikTakacIteration}, it has not been done yet. Note that other communication efficient algorithms fail to converge altogether.

The algorithm we propose, FSVRG, converges to optimal test classification accuracy in just $30$ iterations. Recall that in the setting of \fedopt we introduced in Section~\ref{sec:intro:challenge}, minimization of rounds of communication is the principal goal. However, concluding that the approach is stunningly superior to existing methods would not be completely fair nor correct. The conclusion is that the \emph{FSVRG is the first algorithm to tackle \fedopt}, a problem that existing methods fail to generalize to. It is important to stress that none of the existing methods were designed with these particular challenges in mind, and we formulate the first benchmark.

Since the core reason other methods fail to converge is the non-IID data distribution, we test our method on the same problem, with data randomly reshuffled among the same number of \nodes (FSVRGR; red stars). Since the difference in convergence is subtle, we can conclude that the techniques described in Section~\ref{sec:algorithms:intuition} serve its purpose and make the algorithm robust to challenges present in \fedopt.

This experiment demonstrates that learning from massively decentralized data, clustered on a per-user basis is indeed problem we can tackle in practice. Since the earlier version of this work \cite{konecny2015federated}, additional experimental results were presented in \cite{mcmahan2016federated}. We refer the reader to this paper for experiments in more challenging setting of deep learning, and a further discussion on how such system would be implemented in practice.

\section{Conclusions and Future Challenges}
\label{sec:conclusions}

We have introduced a new setting for distributed optimization, which we call \emph{\fedopt}. This setting is motivated by the outlined vision, in which users do not send the data they generate to companies at all, but rather provide part of their computational power to be used to solve optimization problems. This comes with a unique set of challenges for distributed optimization. In particular, we argue that the massively distributed, non-\iid, unbalanced, and sparse properties of \fedopt problems need to be addressed by the optimization community.

We explain why existing methods are not applicable or effective in this setting. Even the distributed algorithms that can be applied converge very slowly in the presence of large number of \nodes on which the data are stored. We demonstrate that in practice, it is possible to design algorithms that work surprisingly efficiently in the challenging setting of \fedopt, which makes the vision conceptually feasible.

We realize that it is important to scale stochastic gradients on a per-coordinate basis, differently on each \node to improve performance. To the best of our knowledge, this is the first time such per-\node scaling has been used in distributed optimization. Additionally, we use per-coordinate aggregation of updates from each \node, based on distribution of the sparsity patterns in the data.

Even though our results are encouraging, there is a lot of room for future work.  One natural direction is to consider fully asynchronous versions of our algorithms, where the updates are applied as soon as they arrive. Another is developing a better theoretical understanding of our algorithm, as we believe that development of a strong understanding of the convergence properties will drive further research in this area.

Study of the \fedopt problem for non-convex objectives is another important avenue of research. In particular, neural networks are the most important example of a machine learning tool that yields non-convex functions $f_i$, without any convenient general structure. Consequently, there are no useful results describing convergence guarantees of optimization algorithms. Despite the lack of theoretical understanding, neural networks are now state-of-the-art in many application areas, ranging from natural language understanding to visual object detection.  Such applications arise naturally in \fedopt settings, and so extending our work to such problems is an important direction.

The non-\iid data distribution assumed in \fedopt, and mobile applications in particular, suggest that one should consider the problem of training a \emph{personalized} model together with that of learning a global model.  That is, if there is enough data available on a given node, and we assume that data is drawn from the same distribution as future test examples for that node, it may be preferable to make predictions based on a personalized model that is biased toward good performance on the local data, rather than simply using the global model.


\section{Appendix: Distributed Optimization via Quadratic Perturbations}
\label{sec:appendix}

This appendix follows from the discussion motivating DANE algorithm by a general algorithmic perturbation template~\eqref{eq:alg:perturbation} for $\lambda$-strongly convex objectives. We use this to propose a similar but new method, which unlike DANE converges under arbitrary data partitioning $\{\mathcal{P}_k\}_{k=1}^K$, and we highlight its relation to the dual CoCoA algorithm for distributed optimization.

For simplicity and ease of drawing the above connections we assume that $n_k$ is identical for all $k \in \{ 1, 2, \dots, K \}$ throughout the appendix. All the arguments can be simply extended, but would unnecessarily complicate the notation for current purpose.

\subsection{New Method}

We now present a new method (Algorithm~\ref{alg:PRIMAL}), which also belongs to the family of quadratic perturbation methods \eqref{eq:alg:perturbation}. However, the perturbation vectors $a_k^t$ are different from those of DANE. In particular, we set
\[a_k^t  \eqdef \nabla F_k(w^t) - (\eta \nabla F_k(w^t) +  g_k^t),\]
where $\eta>0$ is a parameter, and the vectors $g_k^t$ are maintained by the method. As we show in Lemma~\ref{lem:sum_zero}, Algorithm~\ref{alg:PRIMAL} satisfies \[\sum_{k=1}^K g_k^t = 0\] for all iterations $t$. This implies that  $\tfrac{1}{K}\sum_{k=1}^K a_k^t = (1-\eta) \nabla P(w^t)$. That is, both DANE and the new method use a linear perturbation which, when averaged over the nodes, involves the gradient of the objective function $f$ at the latest iterate $w^t$. Therefore, the methods have one more property in common beyond both being of the form  \eqref{eq:alg:perturbation}. However, as we shall see in the rest of this section, Algorithm~\ref{alg:PRIMAL} allows an insightful dual interpretation. Moreover, while DANE may not converge for arbitrary problems (even when restricted to ridge regression)---and is only known to converge under the assumption that the data stored on each node are in some precise way similar, Algorithm~\ref{alg:PRIMAL} converges for any ridge regression problem and any data partitioning.

Let us denote by $X_k$ the matrix obtained by stacking the data points $x_i$ as column vectors for all $i \in \mathcal{P}_k$. We have the following Lemma.
\begin{lemma}
\label{lem:sum_zero} 
For all $t \geq 0$ we have $\sum_{k=1}^K g_k^t = 0$.
\end{lemma}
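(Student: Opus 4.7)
Since the goal is an invariant that must hold at every iteration, the natural strategy is induction on $t$. I would first check the base case $t=0$: Algorithm~\ref{alg:PRIMAL} is (by design) initialized with $g_k^0 = 0$ for every $k \in \{1,\dots,K\}$, so $\sum_{k=1}^K g_k^0 = 0$ trivially. This initialization is essentially forced by the interpretation of the perturbation vectors~$a_k^t$: at $t=0$ we want $\tfrac{1}{K}\sum_k a_k^0 = (1-\eta)\nabla P(w^0)$ so that the averaged linear correction matches the averaged gradient, and this identity together with $\tfrac{1}{K}\sum_k \nabla F_k(w^0) = \nabla P(w^0)$ uniquely requires $\sum_k g_k^0 = 0$.

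For the inductive step, assume $\sum_{k=1}^K g_k^t = 0$. The update rule of Algorithm~\ref{alg:PRIMAL} for the auxiliary vectors must be of the form $g_k^{t+1} = g_k^t + \Delta_k^t$ for some increments $\Delta_k^t$ computed on each node. I would then verify that the increments sum to zero, i.e., $\sum_{k=1}^K \Delta_k^t = 0$, which combined with the inductive hypothesis yields $\sum_k g_k^{t+1} = 0$. The most natural form, and the one I expect the algorithm to use, is an increment of the type
\[
\Delta_k^t \;=\; c\bigl( h_k^t - \bar h^t \bigr), \qquad \bar h^t \eqdef \tfrac{1}{K}\sum_{j=1}^K h_j^t,
\]
for some scalar $c$ and locally computed quantity $h_k^t$ (for instance $h_k^t = w_k^{t+1}-w^t$ or $h_k^t = \nabla F_k(w^{t+1})$); any such centered increment satisfies $\sum_k \Delta_k^t = 0$ by construction. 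Equivalently, the update could be phrased as $g_k^{t+1} = g_k^t + c(\nabla F_k(w^{t+1}) - \nabla P(w^{t+1}))$, in which case the zero-sum property follows immediately from $\tfrac{1}{K}\sum_k \nabla F_k = \nabla P$.

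The only potential obstacle is making sure the aggregation step of Algorithm~\ref{alg:PRIMAL} (which combines the local updates from each node into~$w^{t+1}$) is consistent with the definition of the $g_k^{t+1}$; specifically, that the quantity used to update $g_k^t$ is exactly the deviation of some locally computed vector from its average across nodes, since that is what guarantees the cancellation. Once this bookkeeping is confirmed by unfolding one iteration of the algorithm, the induction closes and the lemma follows. In summary, the proof is a one-line induction whose substance lies entirely in writing down the update rule for $g_k^t$ and observing that it is built from a node-wise deviation from the network mean, hence has vanishing sum.
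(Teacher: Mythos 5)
Your induction is exactly the paper's argument: the inductive step in Algorithm~\ref{alg:PRIMAL} is $g_k^{t+1} = g_k^t + \eta\lambda(w_k^{t+1} - w^{t+1})$ with $w^{t+1} = \frac{1}{K}\sum_{k}w_k^{t+1}$, which is precisely the centered increment $c(h_k^t - \bar h^t)$ you anticipated (with $c=\eta\lambda$ and $h_k^t = w_k^{t+1}-w^t$), so the increments sum to zero and the induction closes.

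The one point to correct is your base case. The algorithm does \emph{not} initialize $g_k^0 = 0$; it sets $g_k^0 = \eta\bigl(\tfrac{K}{n}X_k\alpha_k^0 - \lambda w^0\bigr)$, and each individual $g_k^0$ is nonzero in general. The sum still vanishes, but for a different reason: since $w^0 = \tfrac{1}{\lambda n}\sum_{k=1}^K X_k\alpha_k^0$, one has $\sum_k \tfrac{K}{n}X_k\alpha_k^0 = K\lambda w^0$, hence $\sum_k g_k^0 = \eta(K\lambda w^0 - K\lambda w^0) = 0$. This is a one-line computation, so the gap is easily repaired, but as written your justification of the base case rests on a premise about the initialization that is false.
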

\begin{proof}
The statement holds for $t=0$. Indeed,
\[\sum_{k=1}^K g_k^t = \eta\sum_{k=1}^K \left(\frac{K}{n}X_k \alpha_k^0 - \lambda w^0\right) =  0,\]
where the last step follows from the definition of $w^0$. Assume now that the statement hold for $t$. Then
\[\sum_{k=1}^K g_k^{t+1} = \sum_{k=1}^K \left(g_k^t +\eta \lambda(w_k^{t+1} - w^{t+1}) \right)\\
= \eta \lambda \sum_{k=1}^K (w_k^{t+1} - w^{t+1}).\]
The first equation follows from the way $g_k$ is updated in the algorithm. The second equation follows from the inductive assumption, and the last equation follows from the definition of $w^{t+1}$ in the algorithm.
\end{proof}

\begin{algorithm}[t]
\begin{algorithmic}[1]
\State \textbf{Input:} $\sigma \in [1,K]$
\State \textbf{Choose:} $\alpha_k^0\in \R^{|{\cal P}_k|}$ for $k=1,2,\dots,K$ 
\State \textbf{Set:} $\eta = \frac{K}{\sigma}, \; \mu = \lambda (\eta-1)$
\State \textbf{Set:} $w^0 = \frac{1}{\lambda n} \sum_{k=1}^K X_k \alpha_k^0$
\State \textbf{Set:} $g_k^0 = \eta (\frac{K}{n} X_k \alpha_k^0 - \lambda w^0)$ for $k=1,2,\dots,K$
\For {$t = 0,1,2,\dots$}
	 \For {$k = 1$ \textbf{to} $K$} 
	      \State	   $w^{t+1}_k =\arg \min_{w\in \R^d} F_k(w) -  \left( \nabla F_k(w^t)  - (\eta \nabla F_k(w^t)  +   g_k^t)\right)^T w + \frac{\mu}{2}\|w-w^t\|^2 $ 
	\EndFor	
	\State 	 $w^{t+1} = \frac{1}{K} \sum_{k=1}^K w^{t+1}_k$	 	 
	 \For {$k = 1$ \textbf{to} $K$} 
		  \State  $g_k^{t+1} = g_k^t + \lambda \eta (w_k^{t+1} - w^{t+1})$	 
	 \EndFor  	  
\State \textbf{return} $w^t$
\EndFor
\end{algorithmic}
\caption{Primal Method}
\label{alg:PRIMAL}

\end{algorithm}

\subsection{L2-Regularized Linear Predictors}

In the rest of this section we consider the case of L2-regularized {\em linear predictors}.  That is, we focus on problem \eqref{eq:problem}  with $f_i$ of the form
\[f_i(w) = \phi_i(x_i^T w) + \frac{\lambda}{2}\|w\|^2,\]
where $\lambda>0$ is a regularization parameter. 
This leads to L2 regularized empirical risk minimization (ERM) problem
\begin{equation} 
\label{eq:ERM-linear-pred} 
\min_{w\in \R^d} \left\{ P(w)\eqdef \frac{1}{n}\sum_{i=1}^n\phi_i(x_i^T w) + \frac{\lambda}{2}\|w\|^2 \right\}.
\end{equation}
We assume that the loss functions $\phi_i:\R  \to \R$ are convex and  $1/\gamma$-smooth for some $\gamma>0$; these are standard assumptions.   As usual, we allow the loss function $\phi_i$ to depend on the label $y_i$. For instance, we may choose the quadratic loss: $\phi_i(t) = \tfrac{1}{2}(t-y_i)^2$ (for which $\gamma=1$).

Let $X = [x_1,\dots,x_n]\in \R^{d\times n}$. As described in Section~\ref{sec:problem}, we assume that the data $(x_i,y_i)_{i=1}^n$ is distributed among $K$ nodes of a computer cluster as follows: node $k=1,2,\dots,K$ contains pairs $(x_i,y_i)$ for $i\in {\cal P}_k$, where ${\cal P}_1,\dots,{\cal P}_K$ forms a partition of the set $[n]=\{1,2,\dots,n\}$. Letting $X = [X_1,\dots,X_K]$, where $X_k\in \R^{d\times |{\cal P}_k|}$ is a submatrix of $A$ corresponding to columns $i\in {\cal P}_k$, and $y_k\in \R^{|{\cal P}_k|}$ is the subvector of $y$ corresponding to entries $i\in {\cal P}_k$. Hence, node $k$ contains the pair $(X_k,y_k)$. With this notation, we can write the problem in the form \eqref{eq:problem:distributed:simple}, where 
\begin{equation}
\label{eq:iuhd9898g9} 
F_k(w) = \frac{K}{n}\sum_{i\in {\cal P}_k} \phi_i(x_i^T w) + \frac{\lambda}{2}\|w\|^2.
\end{equation}

\subsection{A Dual Method: Dual Block Proximal Gradient Ascent}

The dual of \eqref{eq:ERM-linear-pred} is the problem

\begin{equation}
\label{eq:dual_9878979s8}
\max_{\alpha\in \R^n} \left\{ D(\alpha) \eqdef - \frac{1}{2\lambda n^2}\left\| X \alpha\right\|^2 - \frac{1}{n}\sum_{i=1}^n\phi_i^*(-\alpha_i ) \right\},
\end{equation}
where $\phi_i^*$ is the convex conjugate of $\phi_i$. Since we assume that $\phi_i$ is $1/\gamma$ smooth, it follows that $\phi_i^*$ is $\gamma$ strongly convex. Therefore, $D$ is a strongly concave function.

\paragraph{From dual solution to a primal solution.} It is well known that if $\alpha^*$ is the optimal solution of the dual problem \eqref{eq:ERM-linear-pred}, then
$w^* \eqdef \frac{1}{\lambda n} X \alpha^*$
is the optimal solution of the primal problem. Therefore, for any dual algorithm producing a sequence of iterates $\alpha^t$, we can define a corresponding primal algorithm  via the linear mapping  \begin{equation}\label{eq:primal_from_dual}w^t \eqdef \frac{1}{\lambda n}X \alpha^t.\end{equation} Clearly, if $\alpha^t\to \alpha^*$, then $w^t \to w^*$.
We shall now design a method for maximizing the dual function $D$ and then in Theorem~\ref{thm:equiv} we claim that for quadratic loss functions, Algorithm~\ref{alg:PRIMAL} arises as an image, defined via   \eqref{eq:primal_from_dual},  of dual iterations of this  dual ascent method.

\paragraph{Design of the dual gradient ascent method.} Let $\xi(\alpha) \eqdef \tfrac{1}{2}\|X\alpha\|^2$. Since $\xi$ is a convex quadratic, we have
\[\xi(\alpha + h) = \xi(\alpha) + \langle \nabla \xi(\alpha), h\rangle + \frac{1}{2}h^T \nabla^2 \xi(\alpha)h,\\
\leq  \xi(\alpha) + \langle \nabla \xi(\alpha), h\rangle + \frac{\sigma}{2}\|h\|_B^2,
\]
where $\nabla \xi (\alpha)  = X^T X \alpha $ and $\nabla^2 \xi (\alpha) = X^T X $. Further, we define the block-diagonal matrix $B\eqdef Diag(X_1^T X_1, \dots,X_{K}^T X_K)$, and a norm associate with this matrix: \[\|h\|_B^2 \eqdef \sum_{k=1}^K \|X_k h_k\|^2.\] By $\sigma$ we refer to a large enough constant for which  $X^TX\preceq \sigma B$. In order to avoid unnecessary  technicalities, we shall assume that the matrices $X_k^T X_k$ are positive definite, which implies that $\|\cdot\|_B$ is a norm. It can be shown that $1\leq \sigma \leq K$. Clearly, $\xi$ is $\sigma$-smooth with respect to the norm $\|\cdot \|_B$. In view of the above, for all $h\in \R^n$ we can estimate $D$ from below as follows:
\begin{eqnarray*}
D(\alpha^t+h) &\geq &  -\frac{1}{\lambda n^2}\left(\xi (\alpha^t) + \langle \nabla \xi (\alpha^t), h \rangle + \frac{\sigma}{2}\sum_{k=1}^K \|X_k h_k\|^2\right) - \frac{1}{n}\sum_{i=1}^n \phi_i^*(-\alpha_i^t-h_i)\\
&=& -\frac{1}{\lambda n^2} \xi(\alpha^t) - \sum_{k=1}^K \left[ \frac{1}{\lambda n^2} \langle \nabla_k \xi (\alpha^t),h_k\rangle +  \frac{\sigma}{2\lambda n^2}\|X_k h_k\|^2 + \frac{1}{n}\sum_{i\in {\cal P}_k} \phi_i^*(-\alpha_i^t - h_i) \right] ,
\end{eqnarray*}
 where  $\nabla_k \xi (\alpha^t)$ corresponds to the subvector of $\nabla \xi(\alpha^t)$ formed by entries $i\in {\cal P}_k$. 

We  now let $h^t = (h^t_1,\dots,h^t_K)$  be the maximizer of this lower bound. Since the lower bound is separable in the blocks $\{h^t_k\}_k$, we can simply set
\begin{equation}\label{eq:DUAL-eq} h^t_k := \arg\min_{u\in \R^{|{\cal P}_k|}} \left\{D_k^t(u) \eqdef \frac{1}{\lambda n^2} \langle \nabla_k \xi (\alpha^{t}),u\rangle +  \frac{\sigma}{2\lambda n^2}\|X_k u\|^2 + \frac{1}{n}\sum_{i\in {\cal P}_k} \phi_i^*(-\alpha_i^t - u_i) \right\}.\end{equation}
Having computed $h_k^t$ for all $k$, we can set $\alpha_k^{t+1} = \alpha_k^t + h_k^t$ for all $k$, or equivalently, $\alpha^{t+1} = \alpha^t + h^t$.  This is formalized as Algorithm~\ref{alg:DUAL}.  Algorithm~\ref{alg:DUAL}  is a proximal gradient ascent method applied to the dual problem, with smoothness being measured using the block norm $\|h\|_B$. It is known that gradient ascent converges at a linear rate for smooth and strongly convex (for minimization problems) objectives.

\begin{algorithm}[t]
\begin{algorithmic}[1]
\State \textbf{Input:} $\sigma \in [1,K]$
\State \textbf{Choose:} $\alpha_k^0\in \R^{|{\cal P}_k|}$ for $k=1,2,\dots,K$
\For {$t = 0,1,2,\dots$}
	 \For {$k = 1$ \textbf{to} $K$} 
	 		\State	   $h^{t+1}_k =\arg\min_{u\in \R^{|{\cal P}_k|}}   D_k^t(u)$
	 		\Comment See \eqref{eq:DUAL-eq}
	 \EndFor
	 \State	 $\alpha^{t+1} = \alpha^t + h^t$
\EndFor	 	
\State \textbf{return} $w^t$
\end{algorithmic}
\caption{Dual Method}
\label{alg:DUAL}

\end{algorithm}

One of the main insights of this section is the following equivalence result.

\begin{theorem}[Equivalence of Algorithms \ref{alg:PRIMAL} and \ref{alg:DUAL} for Quadratic Loss] \label{thm:equiv} Consider the ridge regression problem. That is, set  $\phi_i(t) = \tfrac{1}{2}(t-y_i)^2$ for all $i$. Assume $\alpha_1^0, \dots, \alpha_K^0$ is chosen in the same way in Algorithms~\ref{alg:PRIMAL} and \ref{alg:DUAL}. Then the dual iterates $\alpha^t$ and the primal iterates $w^t$ produced by the two algorithms are related via \eqref{eq:primal_from_dual} for all $t\geq 0$. 
\end{theorem}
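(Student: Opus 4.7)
The plan is to prove Theorem~\ref{thm:equiv} by induction on $t$, strengthening the statement to simultaneously track the auxiliary vectors $g_k^t$. More precisely, I would establish by induction on $t\geq 0$ the two identities
\[
\text{(a)}\quad w^t = \tfrac{1}{\lambda n} X \alpha^t,
\qquad
\text{(b)}\quad g_k^t = \eta\bigl(\tfrac{K}{n}X_k \alpha_k^t - \lambda w^t\bigr) \ \ \text{for all } k.
\]
The base case $t=0$ is immediate: (a) is precisely how $w^0$ is initialized in Algorithm~\ref{alg:PRIMAL}, and (b) is precisely how $g_k^0$ is initialized there.

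\paragraph{Closed-form updates.}
For the inductive step, the next step is to derive explicit first-order optimality characterizations of the two block subproblems in the quadratic case $\phi_i(t)=\tfrac12(t-y_i)^2$, $\phi_i^*(s)=\tfrac12 s^2+y_i s$. Setting the gradient of the dual block objective $D_k^t(u)$ to zero, and using the inductive hypothesis (a) to rewrite $\nabla_k\xi(\alpha^t)=X_k^T X\alpha^t = \lambda n X_k^T w^t$, yields
\[
\bigl(\tfrac{\sigma}{\lambda n} X_k^T X_k + I\bigr)\, h_k^t \;=\; y_k - \alpha_k^t - X_k^T w^t. \tag{$\ast$}
\]
Similarly, writing $\Delta w_k := w_k^{t+1}-w^t$, expanding $\nabla F_k$ as in \eqref{eq:iuhd9898g9}, and using $\mu=\lambda(\eta-1)$ so that $\lambda+\mu=\lambda\eta$, the optimality condition for the primal subproblem reduces (after the $\nabla F_k(w^t)$ terms combine) to
\[
\bigl(\tfrac{K}{n} X_k X_k^T + \lambda\eta\, I\bigr)\Delta w_k \;=\; -\eta\,\nabla F_k(w^t) - g_k^t. \tag{$\ast\ast$}
\]

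\paragraph{Matching the two updates.}
The crux is then the following algebraic identity: under hypotheses (a) and (b) at time $t$, the vector $\Delta w_k = \tfrac{K}{\lambda n} X_k h_k^t$ satisfies ($\ast\ast$). To see this, substitute this candidate into the left-hand side of ($\ast\ast$) and factor out $\tfrac{K^2}{\sigma n}X_k$ (using $\eta=K/\sigma$) to get
$\tfrac{K^2}{\sigma n} X_k\bigl(\tfrac{\sigma}{\lambda n}X_k^T X_k+I\bigr)h_k^t$; by ($\ast$) this equals $\tfrac{\eta K}{n} X_k(y_k-\alpha_k^t-X_k^T w^t)$. Expanding $\eta\nabla F_k(w^t)$ and using the inductive hypothesis (b) to substitute $g_k^t = \eta(\tfrac{K}{n}X_k\alpha_k^t-\lambda w^t)$, one checks that the right-hand side of ($\ast\ast$) equals the same expression. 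Since the matrix on the left of ($\ast\ast$) is positive definite, its solution is unique, so indeed $\Delta w_k = \tfrac{K}{\lambda n}X_k h_k^t$. Averaging over $k$ then gives $w^{t+1}-w^t = \tfrac{1}{K}\sum_k \Delta w_k = \tfrac{1}{\lambda n}\sum_k X_k h_k^t = \tfrac{1}{\lambda n}X h^t$, which combined with $\alpha^{t+1}=\alpha^t+h^t$ delivers (a) at time $t+1$.

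\paragraph{Propagating (b) and main obstacle.}
Finally, I would verify (b) at $t+1$ by plugging the update $g_k^{t+1} = g_k^t + \lambda\eta(w_k^{t+1}-w^{t+1})$, using $w_k^{t+1}-w^{t+1}= \tfrac{K}{\lambda n}X_k h_k^t - \tfrac{1}{\lambda n}X h^t$, the inductive (b), and $w^{t+1}=w^t+\tfrac{1}{\lambda n}X h^t$; a short rearrangement yields $g_k^{t+1}=\eta(\tfrac{K}{n}X_k\alpha_k^{t+1}-\lambda w^{t+1})$. The main obstacle I anticipate is the bookkeeping in the algebraic identity linking ($\ast$) and ($\ast\ast$): although each manipulation is routine, one has to juggle several parameters ($\eta,\mu,\sigma,\lambda,K,n$) and keep track of how $X_k$, $X_k^T X_k$, and $X_k X_k^T$ intertwine through the factorization $\tfrac{K}{n}X_k X_k^T + \lambda\eta I = \tfrac{K^2}{\sigma n}X_k(\tfrac{\sigma}{\lambda n}X_k^T X_k + I)X_k^{-1}\cdot\text{(something)}$; the clean way I sketched above uses the fact that the vector $\Delta w_k$ we are after lies in the column space of $X_k$, which is what allows us to move from the ``primal'' $d\times d$ matrix to the ``dual'' $|\mathcal{P}_k|\times|\mathcal{P}_k|$ matrix.
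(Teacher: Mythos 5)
Your proposal is correct. I verified the key computation: with $\eta=K/\sigma$ and $\mu=\lambda(\eta-1)$, substituting $\Delta w_k=\tfrac{K}{\lambda n}X_k h_k^t$ into the left side of your ($\ast\ast$) gives $\tfrac{K^2}{\sigma n}X_k\bigl(\tfrac{\sigma}{\lambda n}X_k^TX_k+I\bigr)h_k^t=\tfrac{\eta K}{n}X_k\bigl(y_k-\alpha_k^t-X_k^Tw^t\bigr)$ by ($\ast$), which indeed matches $-\eta\nabla F_k(w^t)-g_k^t$ after invoking invariant (b); positive definiteness of $\tfrac{K}{n}X_kX_k^T+\lambda\eta I$ then pins down $\Delta w_k$, and both invariants propagate exactly as you describe.

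Your route is genuinely different from the paper's. The paper never writes down optimality conditions: it observes that the dual block subproblem \eqref{eq:DUAL-eq} is itself the dual of a small ridge regression in $d$ variables, so its associated primal solution is $s_k^t=\tfrac{\sigma}{\lambda n}X_kh_k^t$; it then performs the change of variables $w=w^t+\tfrac{K}{\sigma}s$ and, through a page-long expansion (tracking constants $\beta_1,\beta_2$), shows the resulting objective is a positive multiple of the local objective of Algorithm~\ref{alg:PRIMAL} plus a constant, so the two subproblems share a minimizer. The invariant $g_k^t=\eta\bigl(\tfrac{K}{n}X_k\alpha_k^t-\lambda w^t\bigr)$ is established by induction in both proofs (the paper does this in Lemma~\ref{lem:sum_zero} and at the end of its argument). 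What your approach buys is brevity and transparency: comparing two linear systems and invoking uniqueness replaces the objective-level bookkeeping, and the observation that $\Delta w_k$ lies in the column space of $X_k$ is exactly the right way to pass between the $d\times d$ and $|\mathcal{P}_k|\times|\mathcal{P}_k|$ matrices. What the paper's approach buys is structural insight: it exhibits the local primal subproblem as literally the primal counterpart of the local dual subproblem, which is what underpins the remark that CoCoA+ arises from solving Step 5 of Algorithm~\ref{alg:DUAL} inexactly. One cosmetic caveat: the factorization you sketch in your final paragraph involving ``$X_k^{-1}$'' is not literally meaningful since $X_k$ is rectangular, but your actual argument in the preceding paragraph avoids this entirely, so nothing is at stake.
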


Since the dual method converges linearly, in view of the above theorem, so does the primal method. Here we only remark that the popular algorithm CoCoA+ \cite{ma2015distributed} arises if Step 5 in Algorithm~\ref{alg:DUAL} is done inexactly. Hence, we show that duality provides a deep relationship between the CoCoA+ and DANE algorithms, which were previously considered completely different.

\subsection{Proof of Theorem~\ref{thm:equiv} }

In this part we prove the theorem.

\paragraph{Primal and Dual Problems.} Since $\phi_i(t) = \tfrac{1}{2}(t-y_i)^2$, the primal problem \eqref{eq:ERM-linear-pred} is a ridge regression problem of the  form 
\begin{equation}
\label{eq:P}
\min_{w\in \R^d} P(w) = \frac{1}{2n}\|X^T w-y\|^2 + \frac{\lambda}{2}\|w\|^2,
\end{equation}
where $X\in \R^{d\times n}$ and $y\in \R^n$. In view of \eqref{eq:dual_9878979s8}, the dual of \eqref{eq:P} is 
\begin{equation}
\label{eq:D} 
\min_{\alpha\in \R^n} D(\alpha) =  \frac{1}{2\lambda n^2}\|X\alpha\|^2 + \frac{1}{2n}\|\alpha\|^2 -\frac{1}{n}y^T \alpha.
\end{equation}

\paragraph{Primal Problem: Distributed Setup.} The primal objective function is of the form \eqref{eq:problem:distributed:simple}, where  in view of \eqref{eq:iuhd9898g9}, we have
$F_k(w) = \frac{K}{2n}\|X_k^T w - y_k\|^2 + \frac{\lambda}{2}\|w\|^2$. Therefore, 
\begin{equation} \label{eq:DF_k} \nabla F_k(w) = \frac{K}{n}X_k(X_k^T w - y_k) + \lambda w\end{equation}
and
$\nabla P(w) = \frac{1}{K}\sum_k \nabla F_k(w) = \frac{1}{K}\sum_k  \left(\frac{K}{n}X_k(X_k^T w - y_k) + \lambda w \right).$

\paragraph{Dual Method.}  Since $D$ is a quadratic, we have
\begin{eqnarray*}D(\alpha^t + h) &=& D(\alpha^t) + \nabla D(\alpha^t)^T h + \frac{1}{2}h^T \nabla^2 D(\alpha^t)h,\end{eqnarray*}
with \[\nabla D(\alpha^t)  = \frac{1}{\lambda n^2}X^T X \alpha^t + \frac{1}{n}(\alpha^t - y), \qquad \nabla^2 D(\alpha^t) = \frac{1}{\lambda n^2}X^T X + \frac{1}{n}I.\] 
We know that  $X^T X\preceq \sigma Diag(X_1^T X_1, \dots,X_{K}^T X_K)$. With this approximation, for all $h\in \R^n$ we can estimate $D$ from above by a node-separable quadratic function as follows:
\begin{eqnarray*}
D(\alpha^t+h) &\leq&  D(\alpha^t) + \left(\frac{1}{\lambda n^2}X^T X \alpha^t + \frac{1}{n}(\alpha^t - y)\right)^T h + \frac{1}{2n}\|h\|^2 + \frac{\sigma}{2\lambda n^2}\sum_{k=1}^K \|X_k h_k\|^2\\
&=& D(\alpha^t) + \frac{1}{n} \left[  \frac{1}{\lambda n}(X\alpha^t)^T X h + (\alpha^t - y)^T h + \frac{1}{2}\|h\|^2 + \frac{\sigma}{2\lambda n}\sum_{k=1}^K \|X_k h_k\|^2  \right]\\
&=&D(\alpha^t) + \frac{1}{n} \sum_{k=1}^K \left( (w^t)^T X_k h_k + (\alpha^t_k - y_k)^T h_k+ \frac{1}{2}\|h_k\|^2 + \frac{\sigma}{2\lambda n}\|X_k h_k\|^2  \right).
\end{eqnarray*}

Next, we shall define 
\begin{equation}\label{eq:xD}h_k^t \eqdef \arg \min_{h_k \in \R^{|{\cal P }_k|}}  \frac{\sigma}{2\lambda n}\|X_k h_k\|^2     + \frac{1}{2}\|h_k\|^2 - ( y_k - X_k^T w^t - \alpha^t_k )^T h_k \end{equation}
for $k=1,2,\dots,K$ and then set \begin{equation}
\label{eq:xxx}\alpha^{t+1} = \alpha^t + h^t.\end{equation}

\paragraph{Primal Version of the Dual Method.} Note that \eqref{eq:xD} has the same form as \eqref{eq:D}, with $X$ replaced by $X_k$, $\lambda$ replaced by $\lambda/\sigma$ and $y$ replaced by $c_k:=y_k - X_k^T w^t - \alpha^t_k$. Hence, we know that 
\begin{equation}\label{eq:s}s_k^t \eqdef \frac{1}{(\lambda/\sigma ) n}X_k h_k^t\end{equation}
is the optimal solution of the primal problem of \eqref{eq:xP}:
\begin{equation}\label{eq:xP}s_k^t =\arg \min_{s\in \R^d}  \frac{1}{2n}\|X_k^T s-c_k\|^2 + \frac{\lambda/\sigma}{2}\|s\|^2.\end{equation}

Hence, the primal version of  method \eqref{eq:xxx} is given by
\begin{eqnarray*}w^{t+1}&\overset{\eqref{eq:primal_from_dual}}{=}& \frac{1}{\lambda n}X \alpha^{t+1} \overset{\eqref{eq:xxx}}{=}\frac{1}{\lambda n} X (\alpha^t + h^t) \overset{\eqref{eq:primal_from_dual}}{=} w^t + \frac{1}{\lambda n} \sum_{k=1}^K X_k h^t_k\\
& = &\frac{1}{K} \sum_{k=1}^K \left( w^t + \frac{K}{\sigma}\frac{\sigma}{\lambda n}  X_k h^t_k \right) \overset{\eqref{eq:s}}{=} \frac{1}{K} \sum_{k=1}^K \left( w^t + \frac{K}{\sigma}s_k^t \right).
\end{eqnarray*}

With the change of variables $w := w^t + \frac{K}{\sigma}s$ (i.e., $s = \frac{\sigma}{K}(w-w^t)$), from \eqref{eq:xP} we know that $w_k^{t+1}:=w^t + \frac{K}{\sigma}s_k^t$ solves
\begin{equation}\label{eq:xPnew}w_k^{t+1} =\arg \min_{w\in \R^d} \left\{L_k(w) \eqdef \frac{1}{2n}\left\|X_k^T \frac{\sigma}{K}(w-w^t)-c_k \right\|^2 + \frac{\lambda/\sigma}{2} \left\|\frac{\sigma}{K}(w-w^t) \right\|^2 \right\}\end{equation}
and $w^{t+1} = \frac{1}{K}\sum_{k=1}^K w_k^{t+1}$.

Let us now rewrite the function in \eqref{eq:xPnew} so as to connect it to Algorithm~\ref{alg:PRIMAL}:
\begin{eqnarray*} L_{k}(w) &=& \frac{1}{2n}\left\|X_k^T \frac{\sigma}{K}(w-w^t)-c_k \right\|^2 + \frac{\lambda/\sigma}{2} \left\|\frac{\sigma}{K}(w-w^t) \right\|^2\\
&=& \frac{1}{2n}\frac{\sigma^2}{K^2}\left\|  (X_k^T w-y_k)  - \underbrace{\left(X_k^T w^t -y_k + \frac{K}{\sigma} c_k\right)}_{d_k} \right\|^2 \\
&&\qquad + \frac{\lambda \sigma^2}{2K^3}\|w\|^2 - \frac{\lambda \sigma^2}{2K^3}\|w\|^2 + \frac{\lambda/\sigma}{2} \left\|\frac{\sigma}{K}(w-w^t) \right\|^2\\
&=& \frac{1}{2n}\frac{\sigma^2}{K^2} \left( \left\| X_k^T w - y_k\right\|^2 +  \|d_k\|^2 - 2 (X_k^T w - y_k)^T d_k\right) \\
&& \qquad + \frac{\lambda \sigma^2}{2K^3}\|w\|^2 - \frac{\lambda \sigma^2}{2K^3}\|w\|^2 + \frac{\lambda\sigma}{2 K^2} \left\| w-w^t \right\|^2\\
&=&\frac{\sigma^2}{K^3} \left(\frac{K}{2n}  \left\| X_k^T w - y_k\right\|^2 +  \frac{K}{2n}\|d_k\|^2 - \frac{K}{n} (X_k^T w - y_k)^T d_k\right) + \frac{\lambda \sigma^2}{2K^3}\|w\|^2 \\
&& \qquad - \frac{\lambda \sigma^2}{2K^3}\|w\|^2 + \frac{\lambda\sigma}{2 K^2} \left\| w-w^t \right\|^2\\ 
&=&\frac{\sigma^2}{K^3} \underbrace{\left(\frac{K}{2n}  \left\| X_k^T w - y_k\right\|^2 + \frac{\lambda }{2}\|w\|^2 \right)}_{F_k(w)} + \frac{\sigma^2}{K^3} \left(  \frac{K}{2n}\|d_k\|^2 - \frac{K}{n} (X_k^T w - y_k)^T d_k\right)  \\
&& \qquad - \frac{\lambda \sigma^2}{2K^3}\|w\|^2 + \frac{\lambda\sigma}{2 K^2} \left\| w-w^t \right\|^2\\
&=&\frac{\sigma^2}{K^3} F_k(w) - \frac{\sigma^2}{K^2 n}  (X_k^T w - y_k)^T d_k + \frac{\sigma^2}{2n K^2} \|d_k\|^2  - \frac{\lambda \sigma^2}{2K^3}\|w\|^2 + \frac{\lambda\sigma}{2 K^2} \left\| w-w^t \right\|^2\\
&=&\frac{\sigma^2}{K^3} F_k(w) - \frac{\sigma^2}{K^2 n}  (X_k d_k)^T w     - \frac{\lambda \sigma^2}{2K^3}\|w\|^2 + \frac{\lambda\sigma}{2 K^2} \left\| w-w^t \right\|^2 \\
&&\qquad+ \underbrace{\left( \frac{\sigma^2}{2n K^2} \|d_k\|^2+  \frac{\sigma^2}{K^2 n}  y_k^T d_k\right)}_{\beta_1}.
\end{eqnarray*}

Next, since $\|w\|^2 = \|w-w^t\|^2 - \|w^t\|^2 + 2(w^t)^T w$, we can further write 

\begin{align*} 
L_{k}(w) &= \frac{\sigma^2}{K^3} F_k(w) - \frac{\sigma^2}{K^2 n}  (X_k d_k)^T w     - \frac{\lambda \sigma^2}{2K^3} ( \|w-w^t\|^2 - \|w^t\|^2 + 2(w^t)^T w) \\
& \qquad + \frac{\lambda\sigma}{2 K^2} \left\| w-w^t \right\|^2 + \beta_1\\
&=\frac{\sigma^2}{K^3} F_k(w) - \frac{\sigma^2}{K^2 n}  (X_k d_k)^T w  - \frac{\lambda \sigma^2}{K^3}   (w^t)^T w + \left(\frac{\lambda\sigma}{2 K^2} - \frac{\lambda \sigma^2}{2K^3} \right) \left\| w-w^t \right\|^2 \\
& \qquad + \underbrace{ \frac{\lambda \sigma^2}{2K^3} \|w^t\|^2 + \beta_1}_{\beta_2}\\
&= \frac{\sigma^2}{K^3} \left(F_k(w) - \left(\frac{K}{n} X_k d_k + \lambda w^t \right)^T w  + \frac{\lambda}{2}\left(\frac{K}{\sigma}-1\right)\|w-w^t\|^2 \right) + \beta_2\\
&= \frac{\sigma^2}{K^3} \left(F_k(w) - \left(\nabla F_k(w^t) - \frac{K^2}{\sigma n} X_k ( X_k^T w^t -y_k + \alpha_k^t)\right)^T w  + \frac{\mu}{2}\|w-w^t\|^2 \right) + \beta_2\\
&= \frac{\sigma^2}{K^3} \left(F_k(w) - \left(\nabla F_k(w^t) - \frac{ K}{\sigma}\underbrace{\frac{K}{ n} X_k ( X_k^T w^t -y_k + \alpha_k^t)}_{z^t_k}\right)^T w  + \frac{\mu}{2}\|w-w^t\|^2 \right) + \beta_2\\
&= \frac{\sigma^2}{K^3} \left(F_k(w) - \left(\nabla F_k(w^t) - (\eta \nabla F_k(w^t) + g_k^t) \right)^T w  + \frac{\mu}{2}\|w-w^t\|^2 \right) + \beta_2,
\end{align*}
where the last step follows from the claim that  $\eta z_k^t = \eta \nabla F_k(w^t) + g_k^t$. We now prove the claim. First, we have
\begin{eqnarray*}
\eta z_k^t &=& \eta \frac{K}{ n} X_k ( X_k^T w^t -y_k + \alpha_k^t)\\
&=& \eta \frac{K}{ n} X_k ( X_k^T w^t -y_k) + \eta \frac{K}{ n} X_k  \alpha_k^t\\
&=& \eta \left(\frac{K}{ n} X_k ( X_k^T w^t -y_k) + \lambda w^t \right) + \eta \left(\frac{K}{ n} X_k  \alpha_k^t - \lambda w^t \right) \\
&\overset{\eqref{eq:DF_k}}{=} & \eta \nabla F_k(w^t) +  \eta \left(\frac{K}{ n} X_k  \alpha_k^t - \lambda w^t \right).
\end{eqnarray*}
Due to the definition of $g_k^0$ in Step 5 of Algorithm~\ref{alg:PRIMAL} as $g_k^0 = \eta (\frac{K}{n} X_k \alpha_k^0 - \lambda w^0)$, we observe that the claim holds for $t=0$. If we show that 
\[g_k^{t} =  \eta \left(\frac{K}{ n} X_k  \alpha_k^{t} - \lambda w^{t} \right)\]
for all $t\geq 0$, then we are done.  This can be shown by induction.  This finishes the proof of  Theorem~\ref{thm:equiv}.

\chapter{Randomized Distributed Mean Estimation: Accuracy vs Communication}
\label{ch:mean}

\section{Introduction} 
\label{sec:introduction}

In this chapter, we address the problem of approximately computing the arithmetic mean of $n$ vectors,  $X_1, \dots, X_n \in \R^d$, stored in a distributed fashion across $n$ compute nodes, subject to a constraint on the communication cost. 

In particular, we consider a star network topology  with a single server at the center and $n$ nodes connected to it. All nodes send an encoded (possibly via a lossy randomized transformation) version of their vector  to the server, after which the server  performs a decoding operation to estimate the true mean \[X \eqdef \frac{1}{n}\sum_{i=1}^n X_i.\] The purpose of the encoding operation is to compress the vector so as to save on communication cost, which is typically the bottleneck in practical applications. 

To better illustrate the setup, consider the naive approach in which all nodes send the vectors without performing any encoding operation, followed by the application of a simple averaging decoder by the server. This results in zero estimation error at the expense of maximum communication cost of $ndr$ bits, where $r$ is the number  of bits needed to communicate a single floating point entry/coordinate  of $X_i$.

\subsection{Background and  Contributions}

The distributed mean estimation problem was recently  studied in a statistical framework where it is assumed that  the vectors $X_i$ are independent and identically distributed samples from some specific underlying distribution. In such a setup, the goal  is to estimate the true mean of the underlying distribution \cite{zhang2012communication, zhang2013information, comm_distr2014, braverman2015communication}. These works formulate lower and upper bounds on the communication cost  needed to achieve  the minimax optimal estimation error.

In contrast, we do not make any statistical assumptions on the source of the vectors, and study the trade-off between expected communication costs and mean square error of the estimate. Arguably, this setup is a more robust and accurate model of  the distributed mean estimation problems arising as subproblems in  applications such as reduce-all operations within algorithms for distributed and federated optimization \cite{richtarik2013distributed, Ma:2015ti, ma2015distributed, reddi2016aide, konecny2016federated}. In these applications, the averaging operations need to be done repeatedly throughout the iterations of a master learning/optimization algorithm, and  the vectors $\{X_i\}$  correspond to updates to a global model/variable. In these applications, the vectors evolve throughout the iterative process in a complicated  pattern, typically approaching zero as the master algorithm converges to optimality. Hence, their statistical properties change, which renders  fixed statistical assumptions not satisfied in practice.

For instance, when training a deep neural network model in a distributed environment, the vector $X_i$ corresponds to a stochastic gradient based on a minibatch of data stored on node $i$. In this setup we do not  have any useful prior statistical knowledge about the high-dimensional vectors to be aggregated. It has recently been observed that when communication cost is high, which is typically the case for commodity clusters, and even more so in a federated optimization framework, it is can be very useful to sacrifice on estimation accuracy in favor of reduced communication \cite{mcmahan2016federated, Federated_learning2016}.

In this chapter we propose a {\em parametric family of randomized methods for estimating the mean $X$}, with parameters being a set of {\em probabilities} $p_{ij}$ for $i=1,\dots,n$ and $j=1,2,\dots,d$ and {\em node centers} $\mu_i \in \R^d$ for $i=1,2,\dots,n$. The exact meaning of these parameters is explained in Section~\ref{sec:encode}.   By varying the probabilities, at one extreme, we recover the exact  method described, enjoying zero estimation error at the expense of full communication cost. At the opposite extreme are methods with arbitrarily small expected communication cost, which is achieved at the expense of suffering an exploding estimation error. Practical methods appear somewhere on the continuum between these two extremes, depending on the specific requirements of the application at hand.  Suresh et al.\ \cite{Distributed_mean} propose a method combining a pre-processing step via a random structured rotation, followed by randomized binary quantization. Their quantization protocol arises as a suboptimal special case of our parametric family of methods.

To illustrate our results, consider the  special case in which we choose to communicate a single bit per element of $X_i$ only. We then obtain an $\mathcal{O}\left( \frac{r}{n}R \right)$ bound on the mean square error, where $r$ is number of bits used to represent a floating point value, and $R = \frac1n \sum_{i=1}^n \| X_i - \mu_i 1 \|^2$ with $\mu_i \in \R$ being the average of elements of $X_i$, and $1$ the all-ones vector in $\R^d$ (see Example 7 in Section~\ref{sec:examples}). Note that this bound improves upon the performance of the method of \cite{Distributed_mean} in two aspects. First, the bound is independent of $d$, improving from logarithmic dependence. Further, due to a preprocessing rotation step, their method requires $\mathcal{O}(d \log d)$ time to be implemented on each node, while our method is linear in  $d$. This and other special cases are summarized in Table~\ref{tbl:main1} in Section~\ref{sec:examples}. 

While the above already improves upon the state of the art, the improved results are in fact obtained for a suboptimal choice of the parameters of our method (constant probabilities $p_{ij}$, and node centers fixed to the mean $\mu_i$). One can decrease the MSE further by optimizing over the probabilities and/or node centers (see Section~\ref{sec:opt}). However, apart from a very low communication cost regime in which we have a closed form expression for the optimal probabilities, the problem needs to be solved numerically, and hence we do not have expressions for how much improvement is possible. We illustrate the effect of fixed and optimal probabilities on the trade-off between communication cost and MSE experimentally on a few selected datasets in Section~\ref{sec:opt} (see Figure~\ref{fig:uniform_vs_optimal}).

\subsection{Outline}

In Section~\ref{sec:3protocols} we formalize the concepts of encoding and decoding protocols. 
In Section~\ref{sec:encode} we describe a parametric family of randomized (and unbiased) encoding protocols and give a simple formula for the mean squared error. Subsequently, in Section~\ref{sec:comm} we formalize the notion of communication cost, and describe several communication protocols, which are optimal under different circumstances. We give simple instantiations of our protocol in Section~\ref{sec:examples}, illustrating the trade-off between communication costs and accuracy. In Section~\ref{sec:opt} we address the question of the optimal choice of parameters of our protocol. Brief remarks on possible extensions that are summarized in Section~\ref{sec:further}. Finally, we present experimental application in Federated Learning in Section~\ref{sec:meanFL}.

\section{Three Protocols}
\label{sec:3protocols}

In this work we consider (randomized) {\em encoding protocols} $\alpha$, {\em communication protocols} $\beta$ and {\em decoding protocols} $\gamma$ using which the averaging is performed inexactly as follows. Node $i$ computes a (possibly stochastic) estimate of $X_i$ using the encoding protocol, which we denote $Y_i = \alpha(X_i) \in \R^d$, and sends it to the server using communication protocol $\beta$. By $\beta(Y_i)$ we denote the number of bits that need to be transferred under $\beta$. The server then estimates $X$ using the decoding protocol $\gamma$ of the estimates:
$$ Y \eqdef \gamma(Y_1, \dots, Y_n). $$

The objective of this work is to study the trade-off between the (expected) number of bits that need to be communicated, and the accuracy of $Y$ as an estimate of $X$. 

In this work we focus on encoders which are unbiased, in the following sense.

\begin{definition}[Unbiased and Independent Encoder] 
We say that encoder $\alpha$ is unbiased if  $\EE{\alpha}{\alpha(X_i)} = X_i$ for all $i=1,2,\dots,n$. We say that it is independent, if $\alpha(X_i)$ is independent from $\alpha(X_j)$ for all $i\neq j$.
\end{definition}

\begin{example}[Identity Encoder]
A trivial  example of an encoding protocol is the identity function: $\alpha(X_i) = X_i$.  It is both unbiased and independent. This encoder does not lead to any savings in communication that would be otherwise infeasible though.
\end{example}

We now formalize the notion of accuracy of estimating $X$ via $Y$. Since $Y$ can be random, the notion of accuracy will naturally be  probabilistic.

\begin{definition}[Estimation Error / Mean Squared Error] 
The {\em mean squared error} of protocol $(\alpha,\gamma)$ is the quantity 
\begin{eqnarray*} 
MSE_{\alpha,\gamma}(X_1, \dots, X_n) &=& \EE{\alpha,\gamma}{\| Y - X \|^2}\\
& =& \EE{\alpha, \gamma}{\left\| \gamma(\alpha(X_1),\dots,\alpha(X_n)) - X\right\|^2}.\end{eqnarray*}
\end{definition}

To illustrate the above concept, we now give a few examples:

\begin{example}[Averaging Decoder]
\label{ex:avg_decoder}
If $\gamma$ is the averaging function, i.e., $\gamma(Y_1, \dots, Y_n) = \frac1n \sum_{i=1}^n Y_i,$ then 
\[MSE_{\alpha,\gamma}(X_1, \dots, X_n) = \frac{1}{n^2}\EE{\alpha}{\left\| \sum_{i=1}^n \alpha(X_i) - X_i \right\|^2}.\]
\end{example}

The next example generalizes the identity encoder and averaging decoder.

\begin{example}[Linear Encoder and Inverse Linear Decoder]
\label{ex:linear_encoder}
Let $A:\R^d\to \R^d$ be linear and invertible.   Then we can set $Y_i = \alpha(X_i) \eqdef A X_i$ and $\gamma(Y_1,\dots,Y_n) \eqdef A^{-1} \left(\frac{1}{n}\sum_{i=1}^n Y_i\right)$. If $A$ is random, then $\alpha$ and $\gamma$ are random (e.g., a structured random rotation, see \cite{yu2016orthogonal}). Note that \[\gamma(Y_1,\dots,Y_n) =  \frac{1}{n}\sum_{i=1}^n A^{-1} Y_i = \frac{1}{n}\sum_{i=1}^n X_i = X,\]
and hence the MSE of $(\alpha,\gamma)$ is zero.
\end{example}

We shall now prove a simple result for unbiased and independent encoders used in subsequent sections.
\begin{lemma}[Unbiased and Independent Encoder + Averaging Decoder] 
\label{lem:general_MSE} 
If the encoder $\alpha$ is unbiased and independent, and $\gamma$ is the averaging decoder, then 
\[MSE_{\alpha,\gamma}(X_1, \dots, X_n) = \frac{1}{n^2}\sum_{i=1}^n \EE{\alpha}{\|Y_i-X_i\|^2} = \frac{1}{n^2}\sum_{i=1}^n \VV{\alpha}{\alpha(X_i)} .\]
\end{lemma}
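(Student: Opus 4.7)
The plan is straightforward: chain Example~\ref{ex:avg_decoder} with the standard fact that for a sum of independent mean-zero random vectors, the expected squared norm decomposes as the sum of individual expected squared norms (cross terms vanish).

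First, I would invoke Example~\ref{ex:avg_decoder} directly, since $\gamma$ is assumed to be the averaging decoder. This gives
\[
MSE_{\alpha,\gamma}(X_1,\dots,X_n) = \frac{1}{n^2}\EE{\alpha}{\Big\|\sum_{i=1}^n (\alpha(X_i)-X_i)\Big\|^2}.
\]
Next, I would introduce the shorthand $Z_i \eqdef \alpha(X_i) - X_i = Y_i - X_i$. By unbiasedness of $\alpha$, $\EE{\alpha}{Z_i}=0$ for every $i$, and by independence of the $\alpha(X_i)$ across $i$, the vectors $Z_i$ are mutually independent. Expanding the squared norm via the inner product,
\[
\EE{\alpha}{\Big\|\sum_{i=1}^n Z_i\Big\|^2} = \sum_{i=1}^n \EE{\alpha}{\|Z_i\|^2} + \sum_{i\neq j} \EE{\alpha}{\langle Z_i, Z_j\rangle}.
\]
For the cross terms, independence lets me factor the expectation, and then $\EE{\alpha}{\langle Z_i, Z_j\rangle} = \langle \EE{\alpha}{Z_i}, \EE{\alpha}{Z_j}\rangle = 0$. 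Combining this with the previous display yields the first claimed equality
\[
MSE_{\alpha,\gamma}(X_1,\dots,X_n) = \frac{1}{n^2}\sum_{i=1}^n \EE{\alpha}{\|Y_i-X_i\|^2}.
\]

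For the second equality, I would just apply the definition of variance of a random vector. Since $\alpha$ is unbiased, $X_i = \EE{\alpha}{\alpha(X_i)}$, so
\[
\VV{\alpha}{\alpha(X_i)} \eqdef \EE{\alpha}{\|\alpha(X_i)-\EE{\alpha}{\alpha(X_i)}\|^2} = \EE{\alpha}{\|\alpha(X_i) - X_i\|^2} = \EE{\alpha}{\|Y_i - X_i\|^2},
\]
which finishes the proof. There is no substantive obstacle here; the only things to be careful about are (i) using independence correctly to kill cross terms (unbiasedness alone is not enough, though in fact pairwise uncorrelatedness would suffice), and (ii) making clear that $X_i$ is deterministic from the encoder's point of view so that it coincides with $\EE{\alpha}{\alpha(X_i)}$ in the variance identity.
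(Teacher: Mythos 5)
Your proof is correct and follows essentially the same route as the paper: reduce via the averaging decoder to $\frac{1}{n^2}\EE{\alpha}{\|\sum_i (Y_i-X_i)\|^2}$, then use unbiasedness and independence to drop the cross terms and identify each remaining term with $\VV{\alpha}{\alpha(X_i)}$. Your version just spells out the cross-term cancellation more explicitly than the paper does.
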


\begin{proof} 
Note that $\EE{\alpha}{Y_i} = X_i$ for all $i$. We have
\begin{eqnarray}
MSE_{\alpha}(X_1,\dots,X_n) &=& \EE{\alpha}{\|Y-X\|^2} \notag \\
&\overset{(*)}{=}& \frac{1}{n^2} \EE{\alpha}{\left\|\sum_{i=1}^n Y_i - X_i \right\|^2} \notag \\
&\overset{(**)}{=}& \frac{1}{n^2} \sum_{i=1}^n \EE{\alpha}{\left\|Y_i - \EE{\alpha}{Y_i} \right\|^2} \notag \\
&=& \frac{1}{n^2} \sum_{i=1}^n \VV{\alpha}{\alpha(X_i)} \notag,
\end{eqnarray}
where (*) follows from unbiasedness and (**) from independence.
\end{proof}

One may wish to define the encoder as a combination of two or more separate encoders: $\alpha(X_i) = \alpha_2(\alpha_1(X_i))$. See \cite{Distributed_mean} for an example where $\alpha_1$ is a random rotation and $\alpha_2$ is binary quantization.

\section{A Family of Randomized Encoding Protocols}
\label{sec:encode}

Let $X_1,\dots, X_n\in \R^d$ be given. We shall write $X_{i} = (X_{i}(1),\dots, X_{i}(d))$ to denote the entries  of vector $X_i$. In addition, with each $i$ we also associate a parameter $\mu_i\in \R$. We refer to $\mu_i$ as the center of data at node $i$, or simply as {\em node center}. For now, we assume these parameters are fixed and we shall later comment on how to choose them optimally.

We shall define \emph{support} of $\alpha$ on node $i$ to be the set $S_i \eqdef \{ j \;:\; Y_i(j) \neq \mu_i \}$. We now define two parametric families of randomized encoding protocols. The first results in $S_i$ of random size, the second has  $S_i$ of a fixed size.

\subsection{Encoding Protocol with Variable-size Support}

With each pair $(i,j)$ we associate a parameter $0< p_{ij}\leq 1$,  representing a probability. The collection of parameters $\{p_{ij}, \mu_i\}$ defines an encoding protocol $\alpha$ as follows:
\begin{equation}
\label{eq:randomized_protocol}
Y_{i}(j) = 
\begin{cases}
\frac{X_{i}(j)}{p_{ij}} - \frac{1-p_{ij}}{p_{ij}} \mu_i & \quad \text{with probability} \quad p_{ij}, \\
\mu_i & \quad \text{with probability} \quad 1-p_{ij}.
\end{cases}
\end{equation}

\begin{remark}
Enforcing the probabilities to be positive, as opposed to nonnegative,  leads to vastly simplified notation in what follows. However, it is more natural to allow $p_{ij}$ to be  zero, in which case we have $Y_i(j)=\mu_i$ with probability 1. This raises issues such as potential lack of unbiasedness, which can be resolved, but only at the expense of a larger-than-reasonable notational overload.
\end{remark}


In the rest of this section, let $\gamma$ be the averaging decoder (Example~\ref{ex:avg_decoder}). Since $\gamma$ is fixed and deterministic, we shall for simplicity write $\EE{\alpha}{\cdot}$ instead of $\EE{\alpha,\gamma}{\cdot}$. Similarly, we shall write $MSE_\alpha(\cdot)$ instead of $MSE_{\alpha, \gamma}(\cdot)$.

We now prove two lemmas describing properties of the encoding protocol $\alpha$. Lemma~\ref{lem:unbiasedness} states that the protocol yields an unbiased estimate of the average $X$ and Lemma~\ref{lem:MSE} provides the expected mean square error of the estimate.

\begin{lemma}[Unbiasedness] 
\label{lem:unbiasedness}
The encoder $\alpha$ defined in \eqref{eq:randomized_protocol} is unbiased. That is,  $\EE{\alpha}{\alpha(X_i)} = X_i$ for all $i$. As a result, $Y$ is an unbiased estimate of the true average: $\EE{\alpha}{Y} = X$.
\end{lemma}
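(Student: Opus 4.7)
The proof is a direct computation, so the plan is straightforward and essentially has no real obstacle. The only care needed is to verify unbiasedness coordinate-by-coordinate first, and then lift this to the vector statement and to the averaged estimator $Y$.

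The plan is as follows. First, fix $i\in\{1,\dots,n\}$ and $j\in\{1,\dots,d\}$, and compute $\EE{\alpha}{Y_i(j)}$ directly from the two-case definition in \eqref{eq:randomized_protocol}. Using the law of total expectation over the Bernoulli event that determines which branch is selected,
\[
\EE{\alpha}{Y_i(j)} \;=\; p_{ij}\left(\frac{X_i(j)}{p_{ij}} - \frac{1-p_{ij}}{p_{ij}}\mu_i\right) + (1-p_{ij})\,\mu_i \;=\; X_i(j) - (1-p_{ij})\mu_i + (1-p_{ij})\mu_i \;=\; X_i(j).
\]
This is where the particular scaling factor $1/p_{ij}$ and the additive correction $-\frac{1-p_{ij}}{p_{ij}}\mu_i$ in the definition of the encoder earn their keep; they are engineered precisely so that the two contributions to $\EE{\alpha}{Y_i(j)}$ combine to produce $X_i(j)$ for any choice of $\mu_i$ and any $p_{ij}\in(0,1]$. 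Note that the assumption $p_{ij}>0$ is what makes the first branch well-defined, so no degenerate cases need to be handled.

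Second, since this identity holds for every coordinate $j$, I would conclude that $\EE{\alpha}{\alpha(X_i)} = X_i$ as vectors, which is the first claim of the lemma.

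Third, to derive the statement for $Y$, I would use the averaging decoder $Y = \frac{1}{n}\sum_{i=1}^n Y_i$ together with linearity of expectation:
\[
\EE{\alpha}{Y} \;=\; \frac{1}{n}\sum_{i=1}^n \EE{\alpha}{\alpha(X_i)} \;=\; \frac{1}{n}\sum_{i=1}^n X_i \;=\; X.
\]
No independence between the $Y_i$ is needed here; only the (coordinate-wise) unbiasedness and linearity of expectation. The main obstacle is essentially nil — this is a verification lemma whose only purpose is to confirm that the designed scaling constants in \eqref{eq:randomized_protocol} yield an unbiased estimator, and it is a prerequisite for the variance/MSE computation carried out in the next lemma via Lemma~\ref{lem:general_MSE}.
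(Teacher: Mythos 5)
Your proof is correct and follows essentially the same route as the paper's: a coordinate-wise computation of $\EE{\alpha}{Y_i(j)}$ from the two-case definition, which simplifies to $X_i(j)$, followed by linearity of expectation to pass to $Y$. No issues.
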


\begin{proof}
Due to linearity of expectation, it is enough to show that $\EE{\alpha}{Y(j)}=X(j)$ for all $j$. Since $Y(j) = \frac{1}{n}\sum_{i=1}^n Y_{i}(j)$ and $X(j) = \frac{1}{n}\sum_{i=1}^n X_{i}(j)$, it suffices to show that $\EE{\alpha}{Y_i(j)}=X_i(j)$:
$$ \EE{\alpha}{Y_i(j)} = p_{ij} \left( \frac{X_{i}(j)}{p_{ij}} - \frac{1-p_{ij}}{p_{ij}} \mu_i(j) \right) + (1-p_{ij}) \mu_i(j) = X_{i}(j), $$
and the claim is proved.
\end{proof}

\begin{lemma}[Mean Squared Error]
\label{lem:MSE}
Let $\alpha = \alpha(p_{ij},\mu_i)$ be the encoder defined in \eqref{eq:randomized_protocol}. Then
\begin{equation}
\label{eq:MSE_general}
MSE_{\alpha}(X_1, \dots, X_n) = \frac{1}{n^2} \sum_{i,j} \left( \frac{1}{p_{ij}} - 1 \right) \left( X_i(j) - \mu_i \right)^2.
\end{equation}
\end{lemma}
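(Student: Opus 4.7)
The plan is to reduce the statement to a routine coordinate-wise variance computation. First, I would invoke Lemma~\ref{lem:unbiasedness} together with the fact that the randomness used in producing $Y_i = \alpha(X_i)$ is independent across nodes $i$ (the Bernoulli coin flips defining \eqref{eq:randomized_protocol} are drawn independently at each node). Thus $\alpha$ is unbiased and independent in the sense required by Lemma~\ref{lem:general_MSE}, and since we are using the averaging decoder, that lemma gives
\[
MSE_{\alpha}(X_1,\dots,X_n) \;=\; \frac{1}{n^2}\sum_{i=1}^{n} \VV{\alpha}{\alpha(X_i)}.
\]

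Next, I would observe that within a single node $i$, the coin flips across the coordinates $j=1,\dots,d$ are also independent, so the coordinates of $Y_i$ are independent random variables. Hence $\VV{\alpha}{\alpha(X_i)} = \sum_{j=1}^{d}\Var(Y_i(j))$, and using unbiasedness of each coordinate (Lemma~\ref{lem:unbiasedness}) we may write $\Var(Y_i(j))=\E{(Y_i(j)-X_i(j))^2}$. It then remains to compute this second moment from the two-point distribution \eqref{eq:randomized_protocol}: with probability $p_{ij}$ we have $Y_i(j) - X_i(j) = \tfrac{1-p_{ij}}{p_{ij}}(X_i(j)-\mu_i)$, and with probability $1-p_{ij}$ we have $Y_i(j)-X_i(j) = -(X_i(j)-\mu_i)$.

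Substituting and simplifying,
\[
\Var(Y_i(j)) \;=\; (X_i(j)-\mu_i)^2 \!\left[\, p_{ij}\!\cdot\!\tfrac{(1-p_{ij})^2}{p_{ij}^{2}} + (1-p_{ij}) \,\right]
\;=\; (X_i(j)-\mu_i)^2\cdot (1-p_{ij})\!\left[\tfrac{1-p_{ij}}{p_{ij}}+1\right]
\;=\; \left(\tfrac{1}{p_{ij}}-1\right)(X_i(j)-\mu_i)^2.
\]
Summing over $j$ and $i$ and dividing by $n^2$ yields \eqref{eq:MSE_general}.

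There is no real obstacle here: the only thing to be careful about is justifying the two independence claims (across nodes, and across coordinates within a node), both of which are immediate from the description of the sampling scheme in \eqref{eq:randomized_protocol}. The rest is an elementary variance calculation for a Bernoulli-type random variable, which telescopes neatly thanks to the specific affine form chosen for the ``active'' value $\frac{X_i(j)}{p_{ij}} - \frac{1-p_{ij}}{p_{ij}}\mu_i$ — this is exactly the value that makes $Y_i(j)-\mu_i$ an unbiased rescaling of $X_i(j)-\mu_i$, and it is this structure that is responsible for the clean factor $(1/p_{ij}-1)$ in the final expression.
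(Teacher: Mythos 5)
Your proof is correct and follows essentially the same route as the paper: invoke Lemma~\ref{lem:general_MSE} for the unbiased, across-node-independent encoder with the averaging decoder, then compute the two-point second moment $\E{(Y_i(j)-X_i(j))^2}$ coordinate by coordinate, obtaining the factor $(1/p_{ij}-1)$. The only cosmetic difference is that your appeal to independence \emph{across coordinates} is unnecessary (and not actually part of the protocol's specification): the identity $\E{\|Y_i-X_i\|^2}=\sum_j \E{(Y_i(j)-X_i(j))^2}$ needed here follows from linearity of expectation alone, which is how the paper argues.
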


\begin{proof} 
Using Lemma~\ref{lem:general_MSE}, we have
\begin{eqnarray}
MSE_{\alpha}(X_1,\dots,X_n) &=&  \frac{1}{n^2} \sum_{i=1}^n \EE{\alpha}{\left\|Y_i - X_i \right\|^2} \notag \\
&=&\frac{1}{n^2} \sum_{i=1}^n \EE{\alpha}{\sum_{j=1}^d (Y_i(j) - X_i(j) )^2} \notag \\
&=&\frac{1}{n^2} \sum_{i=1}^n \sum_{j=1}^d \EE{\alpha}{ (Y_i(j) - X_i(j) )^2}.
\label{eq:6867sgs7}
\end{eqnarray}
For any $i,j$ we further have
\begin{align*}
\EE{\alpha}{ (Y_i(j) - X_i(j) )^2} &= 
p_{ij} \left( \frac{X_{i}(j)}{p_{ij}} - \frac{1-p_{ij}}{p_{ij}} \mu_i - X_i(j) \right)^2 + (1 - p_{ij}) \left( \mu_i - X_i(j) \right)^2 \\
&= \frac{(1 - p_{ij})^2}{p_{ij}} \left( X_i(j) - \mu_i \right)^2 + (1 - p_{ij}) \left( \mu_i - X_i(j) \right)^2 \\
&= \left( \frac{1-p_{ij}}{p_{ij}} \right) \left( X_i(j) - \mu_i \right)^2.
\end{align*}
It suffices to substitute the above into \eqref{eq:6867sgs7}.
\end{proof}

\subsection{Encoding Protocol with Fixed-size Support}

Here we propose an alternative encoding protocol, one with deterministic support size. As we shall see later, this results in deterministic communication cost.

Let $\sigma_k(d)$ denote the set of all subsets of $\{1, 2, \dots, d\}$ containing $k$ elements. The protocol $\alpha$ with a single integer parameter $k$ is then working as follows: First, each node $i$ samples $\mathcal{D}_i \in \sigma_k(d)$ uniformly at random, and then sets
\begin{equation}
\label{eq:randomized_protocol_2}
Y_{i}(j) = 
\begin{cases}
\frac{d X_{i}(j)}{k} - \frac{d-k}{k} \mu_i & \quad \text{if} \quad j \in \mathcal{D}_i, \\
\mu_i & \quad \text{otherwise}.
\end{cases}
\end{equation}

Note that due to the design, the size of the support of $Y_i$ is always $k$, i.e., $|S_i| = k$. Naturally, we can expect this protocol to perform practically the same as the protocol \eqref{eq:randomized_protocol} with $p_{ij} = k/d$, for all $i, j$. Lemma~\ref{lem:MSE_2} indeed suggests this is the case. While this protocol admits  a more efficient communication protocol (as we shall see in Section~),  protocol \eqref{eq:randomized_protocol} enjoys a larger parameters space, ultimately leading to better MSE.  We comment on this tradeoff in subsequent sections.

As for the data-dependent protocol, we prove basic properties. The proofs are similar to those of Lemmas~\ref{lem:unbiasedness} and \ref{lem:MSE} and we defer them to Appendix~\ref{sec:app:alternative_protocol}.

\begin{lemma}[Unbiasedness] 
\label{lem:unbiasedness_2}
The encoder $\alpha$ defined in \eqref{eq:randomized_protocol} is unbiased. That is,  $\EE{\alpha}{\alpha(X_i)} = X_i$ for all $i$. As a result, $Y$ is an unbiased estimate of the true average: $\EE{\alpha}{Y} = X$.
\end{lemma}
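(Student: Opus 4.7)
The plan is to mirror the proof structure of Lemma~\ref{lem:unbiasedness}, exploiting the fact that the only source of randomness in encoder \eqref{eq:randomized_protocol_2} on node $i$ is the uniformly random choice of the subset $\mathcal{D}_i \in \sigma_k(d)$. By linearity of expectation, it suffices to establish $\EE{\alpha}{Y_i(j)} = X_i(j)$ for every fixed $i$ and $j$, after which the identity $\EE{\alpha}{Y} = X$ follows immediately from $Y = \tfrac{1}{n}\sum_i Y_i$ together with linearity.

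The key combinatorial fact I would use is that for a uniformly random $k$-element subset $\mathcal{D}_i$ of $\{1,\dots,d\}$, each fixed index $j$ satisfies $\Prob{j \in \mathcal{D}_i} = k/d$. This is standard: the number of $k$-subsets containing $j$ is $\binom{d-1}{k-1}$, so the probability is $\binom{d-1}{k-1}/\binom{d}{k} = k/d$. I would state this as a one-line observation.

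Given this, the calculation is a direct substitution into \eqref{eq:randomized_protocol_2}:
\begin{align*}
\EE{\alpha}{Y_i(j)}
&= \tfrac{k}{d}\!\left(\tfrac{d X_i(j)}{k} - \tfrac{d-k}{k}\mu_i\right) + \left(1 - \tfrac{k}{d}\right)\mu_i \\
&= X_i(j) - \tfrac{d-k}{d}\mu_i + \tfrac{d-k}{d}\mu_i
\;=\; X_i(j).
\end{align*}
This gives $\EE{\alpha}{\alpha(X_i)} = X_i$ coordinatewise, and then $\EE{\alpha}{Y} = \tfrac{1}{n}\sum_{i=1}^n \EE{\alpha}{\alpha(X_i)} = \tfrac{1}{n}\sum_{i=1}^n X_i = X$.

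I do not anticipate any real obstacle here; the proof is essentially a one-line expectation computation once the marginal inclusion probability $k/d$ is noted. The only subtle point worth flagging is that although the coordinates $\{Y_i(j)\}_{j=1}^d$ within a single node are \emph{not} independent (they are coupled through the joint choice of $\mathcal{D}_i$), unbiasedness is a purely marginal property and so this dependence is irrelevant for the present lemma. It will, however, matter for the companion mean-squared-error lemma analogous to Lemma~\ref{lem:MSE}, which is the natural next statement to establish.
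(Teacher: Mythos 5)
Your proof is correct and takes essentially the same route as the paper: the paper's own argument also reduces to the per-coordinate marginal computation, explicitly counting $\binom{d-1}{k-1}$ subsets containing $j$ out of $\binom{d}{k}$, which is exactly your inclusion probability $k/d$. Your streamlined presentation via $\Prob{j \in \mathcal{D}_i} = k/d$, and your remark that within-node coordinate dependence is irrelevant for unbiasedness, are both fine.
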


\begin{lemma}[Mean Squared Error]
\label{lem:MSE_2} 
Let $\alpha = \alpha(k)$ be encoder defined as in \eqref{eq:randomized_protocol_2}. Then
\begin{equation}
\label{eq:MSE_general_2}
MSE_{\alpha}(X_1, \dots, X_n) = \frac{1}{n^2} \sum_{i=1}^n \sum_{j=1}^d \left( \frac{d-k}{k} \right) \left( X_i(j) - \mu_i \right)^2.
\end{equation}
\end{lemma}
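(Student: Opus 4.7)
\textbf{Proof proposal for Lemma~\ref{lem:MSE_2}.}

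The plan is to mirror the proof of Lemma~\ref{lem:MSE}, exploiting the fact that the randomness is independent across nodes (so Lemma~\ref{lem:general_MSE} applies, together with Lemma~\ref{lem:unbiasedness_2}), and that within a single node we only need marginal second moments of each coordinate, not a joint distribution. Concretely, I would first invoke Lemma~\ref{lem:general_MSE} to reduce the task to computing $\EE{\alpha}{\|Y_i - X_i\|^2}$ for each $i$, and then expand the squared norm coordinatewise via linearity of expectation:
$$\EE{\alpha}{\|Y_i - X_i\|^2} = \sum_{j=1}^d \EE{\alpha}{(Y_i(j) - X_i(j))^2}.$$

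The next step is to evaluate each marginal term $\EE{\alpha}{(Y_i(j) - X_i(j))^2}$. Here the key observation is that although the coordinates $\{Y_i(j)\}_{j=1}^d$ are \emph{not} independent (they share the random set $\mathcal{D}_i$ of fixed size $k$), the marginal probability that any particular $j$ belongs to $\mathcal{D}_i$ is exactly $k/d$, since $\mathcal{D}_i$ is drawn uniformly from $\sigma_k(d)$. Thus each coordinate $Y_i(j)$ marginally follows a two-point distribution: it equals $\tfrac{d X_i(j)}{k} - \tfrac{d-k}{k}\mu_i$ with probability $k/d$, and $\mu_i$ with probability $(d-k)/d$. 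Substituting into the squared deviation gives, in the first case, a residual of $\tfrac{d-k}{k}(X_i(j)-\mu_i)$ and in the second case a residual of $\mu_i - X_i(j)$. Taking the probability-weighted sum yields
$$\EE{\alpha}{(Y_i(j)-X_i(j))^2} = \frac{k}{d}\cdot\frac{(d-k)^2}{k^2}(X_i(j)-\mu_i)^2 + \frac{d-k}{d}(X_i(j)-\mu_i)^2,$$
which simplifies algebraically to $\tfrac{d-k}{k}(X_i(j)-\mu_i)^2$. Summing over $j$ and $i$ and dividing by $n^2$ gives the claimed formula.

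The main subtlety — though not really an obstacle — is the lack of coordinate-wise independence within a node. I would explicitly point out that this does not matter for our computation: linearity of expectation lets us pull the expectation inside the sum over $j$, so only the marginal distribution of each $Y_i(j)$ enters, and these marginals are precisely the same as in the variable-support scheme \eqref{eq:randomized_protocol} with $p_{ij} = k/d$. This also explains why the resulting MSE formula \eqref{eq:MSE_general_2} coincides with \eqref{eq:MSE_general} under that parameter choice, as foreshadowed in the paragraph preceding the lemma.
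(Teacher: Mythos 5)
Your proposal is correct and follows essentially the same route as the paper's proof: reduce via Lemma~\ref{lem:general_MSE} to coordinatewise marginal second moments, note the marginal inclusion probability $k/d$ (which the paper obtains by the explicit count $\binom{d-1}{k-1}/\binom{d}{k}$), and simplify the two-point weighted sum to $\tfrac{d-k}{k}(X_i(j)-\mu_i)^2$. Your remark that only marginals matter despite the dependence induced by the fixed-size set $\mathcal{D}_i$ is exactly the implicit content of the paper's subset-averaging computation.
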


\section{Communication Protocols}
\label{sec:comm}

Having defined the encoding protocols $\alpha$, we need to specify the way the encoded vectors $Y_i = \alpha(X_i)$, for $i=1,2,\dots,n$, are communicated to the server. Given a specific  {\em communication protocol} $\beta$, we write $\beta(Y_i)$ to denote the (expected) number of bits that are communicated by node $i$ to the server. Since $Y_i = \alpha(X_i)$ is in general not deterministic, $\beta(Y_i)$ can be a random variable.

\begin{definition}[Communication Cost] 
The {\em communication cost} of communication protocol $\beta$ under randomized encoding $\alpha$ is the total expected number of bits transmitted to the server: 
\begin{equation}
\label{eq:989d8dd} 
C_{\alpha,\beta}(X_1,\dots,X_n) = \EE{\alpha}{\sum_{i=1}^n  \beta(\alpha(X_i))}. 
\end{equation}
\end{definition}

Given $Y_i$, a good communication protocol is able to encode $Y_i=\alpha(X_i)$ using a few bits only.  Let $r$ denote the number of bits used to represent a floating point number.  Let $\bar{r}$ be the  the number of bits representing $\mu_i$.

In the rest of this section we describe several communication protocols $\beta$ and calculate their communication cost.

\subsection{Naive} 
Represent $Y_i=\alpha(X_i)$ as $d$ floating point numbers. Then for all encoding protocols $\alpha$ and all $i$ we have $\beta(\alpha(X_i)) = dr$, whence \[C_{\alpha,\beta} = \EE{\alpha}{\sum_{i=1}^n \beta(\alpha(X_i))} = ndr.\]

\subsection{Varying-length}
We will use a single variable for every element of the vector $Y_i$, which does not have constant size. The first bit decides whether the value represents $\mu_i$ or not. If yes, end of variable, if not, next $r$ bits represent the value of $Y_i(j)$. In addition, we need to communicate $\mu_i$, which takes $\bar r$ bits\footnote{The distinction here is because $\mu_i$ can be chosen to be data independent, such as $0$, so we don't have to communicate anything (i.e., $\bar r = 0$)}. We thus have
\begin{equation}
\label{eq:s09y09hjfff}
\beta(\alpha(X_i)) = \bar r + \sum_{j=1}^d \left(1_{(Y_i(j) = \mu_i)} + (r+1) \times 1_{(Y_i(j) \neq \mu_i)}\right),
\end{equation}
where $1_{e}$ is the indicator function of event $e$. The expected number of bits communicated is given by

\begin{align*} 
\label{eq:9y80s9y0sd}
C_{\alpha,\beta} = \EE{\alpha}{\sum_{i=1}^n\beta(\alpha(X_i)))} &\overset{\eqref{eq:s09y09hjfff}}{=} n \bar r +\sum_{i=1}^n \sum_{j=1}^d \left(1-p_{ij} + (r+1) p_{ij} \right) \\
&= n \bar r +\sum_{i=1}^n \sum_{j=1}^d \left(1 + r p_{ij} \right)
\end{align*}
In the special case when $p_{ij} = p>0 $ for all $i,j$, we get
$$ C_{\alpha,\beta} = n(\bar{r} + d + pdr). $$

\subsection{Sparse Communication Protocol for Encoder~\eqref{eq:randomized_protocol}} 
We can represent $Y_i$ as a sparse vector; that is, a list of pairs $(j, Y_i(j))$ for which $Y_i(j) \neq \mu_i$. The number of bits to represent each pair is $\lceil \log(d)\rceil + r$. Any index not found in the list, will be interpreted by server as having value $\mu_i$. Additionally, we have to communicate the value of $\mu_i$ to the server, which takes $\bar r$ bits. We assume that the value $d$, size of the vectors, is known to the server. Hence,
\[\beta(\alpha(X_i)) = \bar r + \sum_{j=1}^d 1_{(Y_i(j) \neq \mu_i)} \times \left( \lceil \log d \rceil + r \right) . \]
Summing up through $i$ and  taking expectations, the the communication cost is given by
\begin{equation}
\label{eq:s09y09y9ff}
C_{\alpha,\beta} = \EE{\alpha}{\sum_{i=1}^n\beta(\alpha(X_i))} = n \bar{r} + (\lceil \log d \rceil + r)  \sum_{i=1}^n \sum_{j=1}^d p_{ij}.
\end{equation}
In the special case when $p_{ij} = p>0 $ for all $i,j$, we get
$$ C_{\alpha,\beta} = n\bar{r} + (\lceil \log d \rceil + r) ndp. $$

\begin{remark}
A practical improvement upon this could be to (without loss of generality) assume that the pairs $(j, Y_i(j))$ are ordered by $j$, i.e., we have $\{ (j_s, Y_i(j_s)) \}_{s=1}^k$ for some $k$ and $j_1 < j_2 < \dots < j_k$. Further, let us denote $j_0 = 0$. We can then use a variant of variable-length quantity \cite{wikiVLQ} to represent the set $\{ (j_s - j_{s-1}, Y_i(j_s)) \}_{s=1}^k$. With careful design one can hope to reduce the $\log(d)$ factor in the average case. Nevertheless, this does not improve the worst case analysis we focus on in this chapter, and hence we do not delve deeper in this.
\end{remark}

\subsection{Sparse Communication Protocol for Encoder \eqref{eq:randomized_protocol_2}} 

We now describe a sparse  communication protocol compatible only with fixed length encoder defined in \eqref{eq:randomized_protocol_2}. Note that subset selection can  be compressed in the form of a random seed, letting us avoid the $\log(d)$ factor in \eqref{eq:s09y09y9ff}. This includes the protocol defined in \eqref{eq:randomized_protocol_2} but also \eqref{eq:randomized_protocol} with uniform probabilities $p_{ij}$.

In particular, we can represent $Y_i$ as a sparse vector containing the list of the values for which $Y_i(j) \neq \mu_i$, ordered by $j$. Additionally, we need to communicate  the value  $\mu_i$ (using $\bar r$ bits) and a random seed (using $\bar r_s$ bits), which can be used to reconstruct the indices $j$, corresponding to the communicated values. Note that for any fixed $k$ defining protocol \eqref{eq:randomized_protocol_2}, we have $|S_i|=k$. Hence, communication cost is deterministic:
\begin{equation}
\label{eq:beta_deterministic_sparse}
C_{\alpha, \beta} = \sum_{i=1}^n \beta(\alpha(X_i)) = n (\bar r + \bar r_s) + nkr.
\end{equation}

In the case of the variable-size-support encoding protocol \eqref{eq:randomized_protocol} with $p_{ij} = p > 0$ for all $i, j$, the  sparse communication protocol described here yields expected communication cost
\begin{equation}
\label{eq:exp_beta_deterministic_sparse}
C_{\alpha,\beta} = \EE{\alpha}{\sum_{i=1}^n\beta(\alpha(X_i))} = n (\bar r + \bar r_s) + ndpr.
\end{equation}

\subsection{Binary}
If the elements of $Y_i$ take only two different values, $Y_i^{min}$ or $Y_i^{max}$, we can use a {\em binary communication protocol}. That is, for each node $i$, we communicate the values of $Y_i^{min}$ and $Y_i^{max}$ (using $2r$ bits), followed by a single bit per element of the array indicating whether $Y_i^{max}$ or $Y_i^{min}$ should be used. The resulting (deterministic) communication cost is 
\begin{equation}
\label{eq:binary_comm_cost}
C_{\alpha, \beta} = \sum_{i=1}^n \beta(\alpha(X_i)) = n (2 r) + nd.
\end{equation}

\subsection{Discussion}
In the above, we have presented several communication protocols of different complexity. However, it is not possible to claim any of them is the most efficient one. Which communication protocol is the best, depends on the specifics of the used encoding protocol. Consider the extreme case of encoding protocol \eqref{eq:randomized_protocol} with $p_{ij} = 1$ for all $i, j$. The naive communication protocol is clearly the most efficient, as all other protocols need to send some additional information.

However, in the interesting case when we consider small communication budget, the sparse communication protocols are the most efficient. Therefore, in the following sections, we focus primarily on optimizing the performance using these protocols.

\section{Examples}
\label{sec:examples}

In this section, we highlight on  several instantiations of our protocols, recovering existing techniques and formulating novel ones. We comment on the resulting trade-offs between  communication cost and estimation error.

\subsection{Binary Quantization}

We start by recovering an existing method, which turns every element of the vectors $X_i$ into a particular binary representation.

\begin{example}
\label{ex:suresh}
If we set the parameters of protocol \eqref{eq:randomized_protocol} as $\mu_i = X_i^{min}$ and $p_{ij} = \frac{X_i(j) - X_i^{min}}{\Delta_i}$, where $\Delta_i\eqdef X_i^{max} - X_i^{min}$ (assume, for simplicity, that $\Delta_i \neq 0$), we exactly recover the  quantization algorithm proposed in \cite{Distributed_mean}: 
\begin{equation}
\label{eq:protocol_special_case_felix}
Y_{i}(j) = 
\begin{cases}
X_i^{max} & \quad \text{with probability} \quad \frac{X_i(j) - X_i^{min}}{\Delta_i}, \\
X_i^{min} & \quad \text{with probability} \quad \frac{X_i^{max} - X_i(j)}{\Delta_i}.
\end{cases}
\end{equation}

Using the formula \eqref{eq:MSE_general} for the encoding protocol $\alpha$, we get
\begin{align*}
MSE_\alpha &= \frac{1}{n^2} \sum_{i=1}^n \sum_{j=1}^d \frac{X_i^{max} - X_i(j)}{X_i(j) - X_i^{min}} \left( X_i(j) - X_i^{min} \right)^2 \leq \frac{d}{2 n} \cdot \frac1n \sum_{i=1}^n \|X_i\|^2.
\end{align*}
This exactly recovers the MSE bound established in \cite[Theorem 1]{Distributed_mean}. Using the binary communication protocol yields the communication cost of $1$ bit per element if $X_i$, plus a two real-valued scalars \eqref{eq:binary_comm_cost}.
\end{example}

\begin{remark}
\label{rem:rotation_quantization_protocol}
If we use the above protocol jointly with randomized linear encoder and decoder (see Example~\ref{ex:linear_encoder}), where the linear transform is the randomized Hadamard transform, we recover the method described in \cite[Section 3]{Distributed_mean} which yields improved $MSE_\alpha = \frac{2\log d + 2}{n} \cdot \frac1n \sum_{i=1}^n \|X_i\|^2$ and can be implemented in $\mathcal{O} (d \log d)$ time.
\end{remark}

\subsection{Sparse Communication Protocols}

Now we move to comparing the communication costs and estimation error of various instantiations of the encoding protocols, utilizing the deterministic sparse communication protocol and uniform probabilities.

For the remainder of this section, let us only consider instantiations of our protocol where $p_{ij} = p > 0$ for all $i, j$, and assume that the node centers are set to the vector averages, i.e., $\mu_i = \frac1d \sum_{j=1}^d X_i(j)$. Denote $R = \frac1n \sum_{i=1}^n \sum_{j=1}^d (X_i(j) - \mu_i)^2$. For simplicity, we also assume that $|S| = nd$, which is what we can in general expect without any prior knowledge about the vectors $X_i$. 

The properties of the following examples follow from Equations~\eqref{eq:MSE_general} and \eqref{eq:exp_beta_deterministic_sparse}. When considering the communication costs of the protocols, keep in mind that the trivial benchmark is $C_{\alpha, \beta} = ndr$, which is achieved by simply sending the vectors unmodified. Communication cost of $C_{\alpha, \beta} = nd$ corresponds to the interesting special case when we use (on average) one bit per element of each $X_i$.

\begin{example}[Full communication]
\label{ex:full}
If we choose $p = 1$, we get
$$ C_{\alpha, \beta} = n(\bar r_s + \bar r) + ndr, \qquad MSE_{\alpha, \gamma} = 0. $$
In this case, the encoding protocol is lossless, which ensures $MSE = 0$. Note that in this case, we could get rid of the $n(\bar r_s + \bar r)$ factor by using naive communication protocol.
\end{example}

\begin{example}[Log MSE]
\label{ex:log_MSE}
If we choose $p = 1 / \log d$, we get
$$ C_{\alpha, \beta} = n(\bar r_s + \bar r) + \frac{ndr}{\log d}, \qquad MSE_{\alpha, \gamma} = \frac{\log(d) - 1}{n} R. $$
This protocol order-wise matches the $MSE$ of the method in Remark~\ref{rem:rotation_quantization_protocol}. However, as long as $d > 2^r$, this protocol attains this error with \emph{smaller} communication cost. In particular, this is on expectation \emph{less} than a single bit per element of $X_i$. Finally, note that the factor $R$ is always smaller or equal to the factor $\frac1n \sum_{i=1}^n \| X_i \|^2$ appearing in Remark~\ref{rem:rotation_quantization_protocol}.
\end{example}

\begin{example}[1-bit per element communication]
\label{ex:1_bit}
If we choose $p = 1 / r$, we get
$$ C_{\alpha, \beta} = n(\bar r_s + \bar r) + nd, \qquad MSE_{\alpha, \gamma} = \frac{r - 1}{n} R. $$
This protocol communicates on expectation single bit per element of $X_i$ (plus additional $\bar r_s + \bar r$ bits per client), while attaining bound on $MSE$ of $\mathcal{O}(r/n)$. To the best of out knowledge, this is the first method to attain this bound without additional assumptions. \end{example}

\begin{example}[Alternative 1-bit per element communication]
\label{ex:1_bit_alternative}
If we choose $p = \frac{d - \bar r_s - \bar r}{dr}$, we get
$$ C_{\alpha, \beta} =  nd, \qquad MSE_{\alpha, \gamma} = \frac{\frac{dr}{d - \bar r_s - \bar r} - 1}{n} R. $$
This alternative protocol attains on expectation exactly single bit per element of $X_i$, with (a slightly more complicated) $\mathcal{O} (r/n)$ bound on $MSE$.
\end{example}

\begin{example}[Below 1-bit communication]
\label{ex:below_1_bit}
If we choose $p = 1 / d$, we get
$$ C_{\alpha, \beta} = n(\bar r_s + \bar r) + nr, \qquad MSE_{\alpha, \gamma} = \frac{d - 1}{n} R. $$
This protocol attains the MSE of protocol in Example~\ref{ex:suresh} while at the same time communicating on average significantly less than a single bit per element of $X_i$.
\end{example}

\begin{table}
\begin{center}
\begin{tabular}{|c|c|c|c|}
\hline
Example & $p$ & $C_{\alpha, \beta}$ & $MSE_{\alpha, \gamma}$ \\
\hline
Example \ref{ex:full} (Full) & $1$ & $ndr$ & $0$ \\
Example \ref{ex:log_MSE} (Log $MSE$) & $1 / \log d$ & $n(\bar r_s + \bar r) + \frac{ndr}{\log d}$ & $(\log(d) - 1) \tfrac{R}{n}$ \\
Example \ref{ex:1_bit} ($1$-bit) & $1 / r$ & $n(\bar r_s + \bar r) + nd$ & $(r - 1) \tfrac{R}{n}$ \\
Example \ref{ex:below_1_bit} (below $1$-bit) & $1 / d$ & $n(\bar r_s + \bar r) + nr$ & $(d - 1) \tfrac{R}{n}$ \\
\hline
\end{tabular}
\end{center}
\caption{Summary of achievable communication cost and estimation error, for various choices of probability $p$.}
\label{tbl:main1}
\end{table}

We summarize these examples in Table~\ref{tbl:main1}.

Using the deterministic sparse protocol, there is an obvious lower bound on the communication cost --- $n(\bar r_s + \bar r)$. We can bypass this threshold by using the sparse protocol, with a data-independent choice of $\mu_i$, such as $0$, setting $\bar r = 0$. By setting $p = \epsilon / d( \lceil \log d \rceil + r)$, we get arbitrarily small expected communication cost of $C_{\alpha, \beta} = \epsilon$, and the cost of exploding estimation error $MSE_{\alpha, \gamma} = \mathcal{O}(1 / \epsilon n)$.

Note that all of the above examples have random communication costs. What we present is the \emph{expected} communication cost of the protocols. All the above examples can be modified to use the encoding protocol with fixed-size support defined in $\eqref{eq:randomized_protocol_2}$ with the parameter $k$ set to the value of $pd$ for corresponding $p$ used above, to get the same results. The only practical difference is that the communication cost will be deterministic for each node, which can be useful for certain applications.

\section{Optimal Encoders} \label{sec:opt}

Here we consider $(\alpha, \beta, \gamma)$, where $\alpha=\alpha(p_{ij},\mu_i)$ is the encoder defined in \eqref{eq:randomized_protocol}, $\beta$ is the associated the sparse communication protocol, and $\gamma$ is the averaging decoder.  Recall from Lemma~\ref{eq:MSE_general} and \eqref{eq:s09y09y9ff}
that the mean square error and communication cost are given by:
\begin{equation}
\label{eq:MSE+C}
MSE_{\alpha,\gamma} = \frac{1}{n^2} \sum_{i,j} \left( \frac{1}{p_{ij}} - 1 \right) \left( X_i(j) - \mu_i \right)^2 , \quad C_{\alpha,\beta} =  n \bar{r} + (\lceil \log d \rceil + r)  \sum_{i=1}^n \sum_{j=1}^d p_{ij}.
\end{equation}
 
Having these closed-form formulae as functions of the parameters $\{p_{ij}, \mu_i\}$, we can now ask questions such as:
\begin{enumerate}
\item Given a communication budget, which encoding protocol has the smallest mean squared error?
\item Given a bound on the mean squared error, which encoder  suffers the minimal communication cost? 
\end{enumerate}
Let us now address the first question; the second question can  be handled in a similar fashion.  In particular, consider the optimization problem
\begin{eqnarray}
\text{minimize}& & \sum_{i,j} \left(\frac{1}{p_{ij}} -1\right) (X_i(j)-\mu_i)^2 \notag \\
\text{subject to} && \mu_i \in \R, \quad i=1,2,\dots,n \notag\\
&& \sum_{i,j} p_{ij} \leq B \label{eq:optimalCCyyy}\\
&& 0< p_{ij} \leq 1, \quad i=1,2,\dots,n; \quad j=1,2,\dots,d,
\end{eqnarray}
where $B>0$ represents a  bound on the part of the total communication cost in \eqref{eq:MSE+C} which depends on the choice of the probabilities $p_{ij}$.

Note that while the constraints in \eqref{eq:optimalCCyyy} are convex (they are linear), the objective is not jointly convex in $\{p_{ij},\mu_i\}$. However, the objective is convex in $\{p_{ij}\}$ and convex in $\{\mu_i\}$. This suggests a simple {\em alternating minimization} heuristic for solving the above problem: 
\begin{enumerate}
\item  Fix the probabilities and optimize over the node centers,
 \item Fix the node centers and optimize over probabilities.
\end{enumerate}

These two steps are  repeated until a suitable convergence criterion is reached. Note that the first step has a closed form solution. Indeed, the problem decomposes across the node centers to $n$ univariate unconstrained convex quadratic minimization problems, and the solution is given by
\begin{equation}\label{eq:opt_node_centers_given_prob}\mu_i = \frac{\sum_j w_{ij} X_i(j)}{\sum_{j} w_{ij}},\qquad w_{ij}\eqdef \frac{1}{p_{ij}}-1.\end{equation}
The second step does not  have a closed form solution in general; we provide an analysis of  this step in Section~\ref{sec:0y09fyhff}.

\begin{remark}Note that the upper bound $\sum_{i,j}(X_i(j)-\mu_i)^2 / p_{ij}$ on the objective is jointly convex in $\{p_{ij},\mu_i\}$. We may therefore instead optimize this upper bound by a suitable convex optimization algorithm.
\end{remark}

\begin{remark} An alternative and a more practical model to \eqref{eq:optimalCCyyy} is to choose per-node budgets $B_1,\dots,B_n$ and require $\sum_{j} p_{ij} \leq B_i$ for all $i$. The problem becomes separable across the nodes, and can therefore be solved by each node independently. If we set $B=\sum_i B_i$, the optimal solution obtained this way will lead to MSE which is lower bounded by the MSE obtained through \eqref{eq:optimalCCyyy}.
\end{remark}

\subsection{Optimal Probabilities for Fixed Node Centers}\label{sec:0y09fyhff}

Let the node centers $\mu_i$ be fixed. Problem \eqref{eq:optimalCCyyy} (or, equivalently, step 2 of the alternating minimization method described above)  then takes the form 
\begin{eqnarray}
\text{minimize}& & \sum_{i,j} \frac{(X_i(j)-\mu_i)^2}{p_{ij}} \notag \\
\text{subject to} && \sum_{i,j} p_{ij} \leq B \label{eq:optimalCCxxx}\\
&& 0< p_{ij} \leq 1, \quad  i=1,2,\dots n, \quad j=1,2,\dots,d \notag.
\end{eqnarray}

Let $S = \{(i,j)\;:\; X_i(j) \neq \mu_i\}$. Notice that as long as $B\geq |S|$, the optimal solution is to set $p_{ij}=1$ for all $(i,j)\in S$ and $p_{ij}= 0$ for all $(i,j)\notin S$.\footnote{We interpret $0/0$ as $0$ and do not worry about infeasibility. These issues can be properly formalized by allowing $p_{ij}$ to be zero in the encoding protocol and in \eqref{eq:optimalCCxxx}. However, handling this singular situation requires a notational overload which we are not willing to pay.} In such a case, we have $MSE_{\alpha,\gamma} = 0.$ Hence, we can without loss of generality assume that $B \leq |S|$.

While we are not able to derive a closed-form solution to this problem, we can formulate upper and lower bounds on the optimal estimation error, given a bound on the communication cost formulated via $B$.

\begin{theorem}[MSE-Optimal Protocols subject to a Communication Budget]
\label{thm:main}
Consider problem \eqref{eq:optimalCCxxx} and fix any $B\leq |S|$. Using the sparse communication protocol $\beta$, the optimal encoding protocol $\alpha$ has communication complexity \begin{equation}\label{eq:9sy09yhiffs}C_{\alpha,\beta} = n\bar r + (\lceil \log d\rceil + r )B,\end{equation} and the mean squared error satisfies the bounds
\begin{equation}\label{eq:s0y09yhff} \left(\frac{1}{B}-1\right)\frac{R}{n} \leq MSE_{\alpha,\gamma} \leq \left(\frac{|S|}{B}-1\right)\frac{R}{n},\end{equation}
where $R = \frac{1}{n}\sum_{i=1}^n\sum_{j=1}^d (X_i(j)-\mu_i)^2 = \frac{1}{n}\sum_{i=1}^n \|X_i - \mu_i 1\|^2$.
Let $a_{ij}=|X_{i}(j)-\mu_i|$ and $W=\sum_{i,j} a_{ij}$. If, moreover, $B \leq \sum_{(i,j)\in S} a_{ij}/ \max_{(i,j)\in S} a_{ij}$ (which is true, for instance, in the ultra-low communication regime with $B\leq 1$), then 
\begin{equation}\label{eq:s08y09fh9ff}MSE_{\alpha,\gamma}  = \frac{W^2}{n^2 B}- \frac{R}{n}.\end{equation}
\end{theorem}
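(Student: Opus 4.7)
The plan is to reduce Theorem~\ref{thm:main} to elementary applications of Cauchy--Schwarz, following a standard recipe for linearly-constrained inverse-type objectives. Throughout I will write the objective in the form $\Phi(p) := \sum_{(i,j)\in S} a_{ij}^2/p_{ij}$, so that $\text{MSE}_{\alpha,\gamma}=(\Phi(p)-nR)/n^2$ (the pairs $(i,j)\notin S$ have $a_{ij}=0$ and can be assigned vanishing probabilities without affecting either the objective or, up to an arbitrarily small slack, the budget).

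First I would verify that at the optimum the constraint $\sum p_{ij}\le B$ is tight and then read off the cost. Since $\Phi$ is strictly decreasing in each $p_{ij}$ and $B\le |S|$, any unused budget can be absorbed by increasing some $p_{ij}\in S$ that is still strictly below $1$, which strictly decreases $\Phi$. Hence at any optimum $\sum p_{ij}=B$, and substituting into the sparse-protocol cost formula \eqref{eq:s09y09y9ff} gives exactly the claimed communication complexity \eqref{eq:9sy09yhiffs}.

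Next I would establish the two-sided bound \eqref{eq:s0y09yhff}. For the upper bound I exhibit the feasible uniform choice $p_{ij}=B/|S|$ for $(i,j)\in S$, which is admissible because $B\le |S|$ ensures $p_{ij}\le 1$; a direct substitution gives $\Phi(p)=(|S|/B)\,nR$ and hence $\text{MSE}\le (|S|/B-1)R/n$. For the lower bound I apply Cauchy--Schwarz in the form
\[\Bigl(\sum_{(i,j)\in S}a_{ij}\Bigr)^2=\Bigl(\sum_{(i,j)\in S}\sqrt{p_{ij}}\cdot\tfrac{a_{ij}}{\sqrt{p_{ij}}}\Bigr)^2\le\Bigl(\sum p_{ij}\Bigr)\Phi(p)\le B\,\Phi(p),\]
yielding $\Phi(p)\ge W^2/B$. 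Since $a_{ij}\ge 0$, the crude inequality $W^2=(\sum a_{ij})^2\ge \sum a_{ij}^2=nR$ then gives $\Phi(p)\ge nR/B$, so $\text{MSE}\ge (1/B-1)R/n$, which is the stated lower bound.

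Finally, for the closed-form expression \eqref{eq:s08y09fh9ff} I would check that the Cauchy--Schwarz bound $\Phi(p)\ge W^2/B$ is attained by $p_{ij}=Ba_{ij}/W$ for $(i,j)\in S$. This assignment automatically satisfies $\sum p_{ij}=B$, and the box constraint $p_{ij}\le 1$ is exactly equivalent to $B\max_{(i,j)\in S}a_{ij}\le W$, i.e.\ to the additional hypothesis $B\le \sum a_{ij}/\max a_{ij}$. Under this hypothesis the choice is feasible and the Cauchy--Schwarz inequality is tight (the two vectors $(\sqrt{p_{ij}})$ and $(a_{ij}/\sqrt{p_{ij}})$ are proportional), so $\Phi(p)=W^2/B$ and thus $\text{MSE}=W^2/(n^2B)-R/n$, as claimed. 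The main obstacle is not analytical---all estimates are routine---but bookkeeping: one must handle the support set $S$ cleanly so that zero-contribution coordinates do not consume the budget, and verify that the feasibility condition for the equality case matches exactly the hypothesis stated in the theorem.
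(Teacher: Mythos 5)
Your proof is correct and follows essentially the same route as the paper: the same uniform choice $p_{ij}=B/|S|$ for the upper bound, the same value $W^2/B$ as the lower bound on $\sum_{(i,j)\in S}a_{ij}^2/p_{ij}$ followed by $W^2\geq nR$, and the same observation that the proportional assignment $p_{ij}=Ba_{ij}/W$ is feasible exactly when $B\leq \sum a_{ij}/\max a_{ij}$, which yields the closed form. The only cosmetic difference is that you obtain the lower bound via Cauchy--Schwarz for all feasible $p$, whereas the paper derives the same quantity by writing down the stationarity condition $a_{ij}/p_{ij}=\theta$ of the relaxed problem; your variant is, if anything, slightly tidier since it avoids asserting optimality conditions without proof.
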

\begin{proof}
Setting $p_{ij} = B/|S|$ for all $(i,j)\in S$ leads to a feasible solution of \eqref{eq:optimalCCxxx}. In view of \eqref{eq:MSE+C}, one then has 
\[MSE_{\alpha,\gamma} = \frac{1}{n^2} \left(\frac{|S|}{B}-1\right)\sum_{(i,j)\in S}  \left( X_i(j) - \mu_i \right)^2 =  \left(\frac{|S|}{B}-1\right) \frac{R}{n},\]
where 
$R = \frac{1}{n}\sum_{i=1}^n\sum_{j=1}^d (X_i(j)-\mu_i)^2 = \frac{1}{n}\sum_{i=1}^n \|X_i - \mu_i 1\|^2$.

If we relax the problem by removing the constraints $p_{ij}\leq 1$, the optimal solution satisfies  $a_{ij}/p_{ij} = \theta>0$  for all $(i,j)\in S$.  At optimality the bound involving $B$ must be tight, which leads to
$\sum_{(i,j)\in S} a_{ij}/\theta = B$, whence $\theta = \tfrac{1}{B}\sum_{(i,j)\in S} a_{ij}$. So, $p_{ij} = a_{ij}B/\sum_{(i,j)\in S}a_{ij}$. The optimal MSE therefore satisfies the lower bound
\[MSE_{\alpha,\gamma} \geq \frac{1}{n^2} \sum_{(i,j)\in S} \left(\frac{1}{p_{ij}}-1\right) \left( X_i(j) - \mu_i \right)^2 =  \frac{1}{n^2 B}W^2 - \frac{R}{n},\]
where $W \eqdef \sum_{(i,j)\in S} a_{ij} \geq \left(\sum_{(i,j)\in S} a_{ij}^2\right)^{1/2} = (nR)^{1/2}$. Therefore,
$MSE_{\alpha,\gamma} \geq \left(\frac{1}{B}-1\right) \frac{R}{n}$. If $B \leq \sum_{(i,j)\in S}a_{ij} / \max_{(i,j)\in S} a_{ij}$, then $p_{ij}\leq 1$ for all $(i,j)\in S$, and hence we have optimality. (Also note that, by Cauchy-Schwarz inequality,  $W^2\leq n R |S|$.)
\end{proof}


\subsection{Trade-off Curves}

To illustrate the trade-offs between communication cost and estimation error (MSE) achievable by the protocols discussed in this section, we present simple numerical examples  in Figure~\ref{fig:uniform_vs_optimal}, on three synthetic data sets with  $n=16$ and $d=512$. We choose an array of values  for $B$, directly bounding the communication cost via \eqref{eq:9sy09yhiffs}, and evaluate the $MSE$ \eqref{eq:MSE_general} for three encoding protocols (we use the sparse communication protocol and averaging decoder). All these protocols have the same communication cost, and only differ in the selection of the parameters $p_{ij}$ and $\mu_i$. In particular, we consider 
\begin{itemize}
\item[(i)] uniform probabilities $p_{ij}=p>0$ with average node centers $\mu_i = \frac{1}{d}\sum_{j=1}^d X_i(j)$ (blue dashed line), 
\item[(ii)]  optimal probabilities $p_{ij}$ with average node centers $\mu_i = \frac{1}{d}\sum_{j=1}^d X_i(j)$ (green dotted line), and 
\item[(iii)] optimal probabilities with optimal node centers,  obtained via the alternating minimization approach described above (red solid line).
\end{itemize}

In order to put a scale on the horizontal axis, we assumed that $r = 16$. Note that, in practice, one would choose $r$ to be as small as possible without adversely affecting  the application utilizing our  distributed mean estimation method. The three plots represent $X_i$ with entries drawn in an i.i.d.\ fashion from Gaussian ($\mathcal{N}(0, 1)$), Laplace ($\mathcal{L}(0, 1)$) and chi-squared ($\chi^2(2)$) distributions, respectively. As we can see, in the case of non-symmetric distributions, it is not necessarily optimal to set the node centers to  averages. 

As expected, for fixed node centers, optimizing over probabilities results in improved performance, across the entire trade-off curve. That is, the curve shifts downwards. In the first two plots based on data from symmetric distributions (Gaussian and Laplace), the average node centers are nearly optimal,  which explains why the red solid and green dotted lines coalesce. This can be also established formally. In the third plot, based on the non-symmetric chi-squared data, optimizing over node centers leads to further improvement, which gets more pronounced with increased communication budget. It is possible to generate data where the difference between any pair of the three trade-off curves becomes arbitrarily large. 

Finally, the black cross represents performance of the quantization protocol from Example~\ref{ex:suresh}. This approach appears as a single point in the trade-off space due to lack of any parameters to be fine-tuned.

\begin{figure}
\centering
\includegraphics[width=0.32\linewidth]{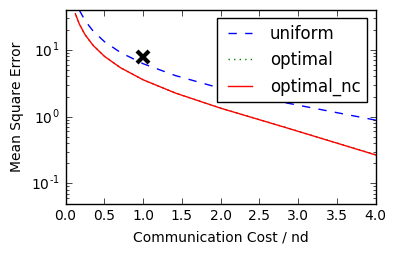}
\includegraphics[width=0.32\linewidth]{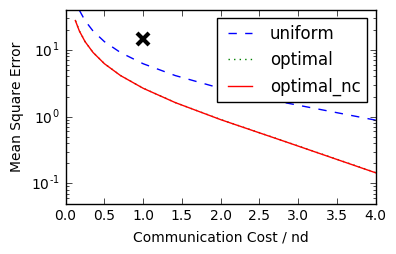}
\includegraphics[width=0.32\linewidth]{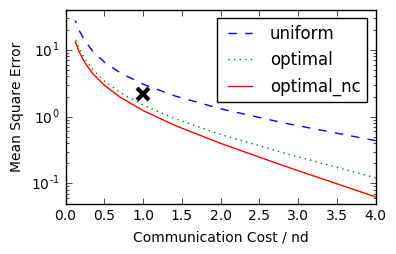}
\caption{{\em Trade-off curves} between communication cost and estimation error (MSE) for four protocols. The plots correspond to vectors $X_i$ drawn in an i.i.d.\ fashion from Gaussian, Laplace and $\chi^2$ distributions, from left to right. The black cross marks the performance of binary quantization (Example~\ref{ex:suresh}).}
\label{fig:uniform_vs_optimal}
\end{figure}

%
%
%

\section{Further Considerations} \label{sec:further}

In this section we outline further ideas worth consideration. However, we leave a detailed analysis to future work.

\subsection{Beyond Binary Encoders}

We can generalize the binary encoding protocol~\eqref{eq:randomized_protocol} to a $k$-ary protocol. To illustrate the concept without unnecessary notation overload, we present only the ternary (i.e., $k=3$) case.

Let the collection of parameters $\{p'_{ij}, p''_{ij}, \bar{X}'_i, \bar{X}''_i\}$ define an encoding protocol $\alpha$ as follows:

\begin{equation}
\label{eq:randomized_protocol_3}
Y_{i}(j) = 
\begin{cases}
\bar X'_i & \quad \text{with probability} \quad p'_{ij}, \\
\bar X''_i & \quad \text{with probability} \quad p''_{ij}, \\
\frac{1}{1 - p'_{ij} - p''_{ij}} \left( X_i(j) - p'_{ij} \bar X'_i - p''_{ij} \bar X''_i \right) & \quad \text{with probability} \quad 1 - p'_{ij} - p''_{ij}.
\end{cases}
\end{equation}

It is straightforward to generalize Lemmas~\ref{lem:unbiasedness} and \ref{lem:MSE} to this case. We omit the proofs for brevity.

\begin{lemma}[Unbiasedness] 
\label{lem:unbiasedness_3} 
The encoder $\alpha$ defined in \eqref{eq:randomized_protocol_3} is unbiased. That is,  $\EE{\alpha}{\alpha(X_i)} = X_i$ for all $i$. As a result, $Y$ is an unbiased estimate of the true average: $\EE{\alpha}{Y} = X$.
\end{lemma}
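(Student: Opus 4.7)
The plan is to mirror the proof of Lemma~\ref{lem:unbiasedness} and establish the coordinate-wise identity $\EE{\alpha}{Y_i(j)} = X_i(j)$ by direct computation from \eqref{eq:randomized_protocol_3}; once this is in hand, linearity of expectation together with $Y = \frac{1}{n}\sum_{i} Y_i$ and $X = \frac{1}{n}\sum_i X_i$ will finish the argument, exactly as at the end of the proof of Lemma~\ref{lem:unbiasedness}.

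The key step is simply to evaluate the expectation of $Y_i(j)$ as a three-term sum corresponding to the three branches of the ternary encoder. Concretely, the design of the third branch (the value $\frac{1}{1-p'_{ij}-p''_{ij}}(X_i(j) - p'_{ij}\bar X'_i - p''_{ij}\bar X''_i)$ assigned with probability $1-p'_{ij}-p''_{ij}$) is exactly what is required so that the $(1-p'_{ij}-p''_{ij})$ factors cancel. Carrying out the computation:
\[
\EE{\alpha}{Y_i(j)} = p'_{ij}\bar X'_i + p''_{ij}\bar X''_i + \bigl(X_i(j) - p'_{ij}\bar X'_i - p''_{ij}\bar X''_i\bigr) = X_i(j).
\]

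Having shown $\EE{\alpha}{Y_i(j)} = X_i(j)$ for all coordinates $j$, we conclude $\EE{\alpha}{\alpha(X_i)} = X_i$ (i.e., $\alpha$ is unbiased). Then, since $Y = \gamma(Y_1,\dots,Y_n) = \frac{1}{n}\sum_{i=1}^n Y_i$ under the averaging decoder, linearity yields $\EE{\alpha}{Y} = \frac{1}{n}\sum_{i=1}^n X_i = X$.

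There is no real obstacle here: the ternary case introduces no new structural difficulty beyond the binary case, and implicit side conditions (e.g., $p'_{ij}, p''_{ij} \geq 0$ and $p'_{ij} + p''_{ij} < 1$, so that the third probability is well defined and strictly positive) are guaranteed by the definition of the protocol. The only notational care needed is to carry the two probability parameters and two center parameters through the computation; the cancellation in the third branch is what makes the identity hold.
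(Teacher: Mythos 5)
Your computation is correct and is exactly the argument the paper has in mind: the paper omits the proof of this lemma, stating only that it is a straightforward generalization of Lemma~\ref{lem:unbiasedness}, and your coordinate-wise expectation calculation with the cancellation of the $(1-p'_{ij}-p''_{ij})$ factor, followed by linearity under the averaging decoder, is precisely that generalization.
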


\begin{lemma}[Mean Squared Error]
\label{lem:MSE_3}
Let $\alpha = \alpha \left( p'_{ij}, p''_{ij}, \bar{X}'_i, \bar{X}''_i\right) $ be the protocol defined in \eqref{eq:randomized_protocol_3}. Then
\begin{equation*}
MSE_{\alpha}(X_1, \dots, X_n) = \frac{1}{n^2} \sum_{i=1}^n \sum_{j=1}^d \left( p'_{ij} \left( X_i(j) - \bar X'_i \right)^2 + p''_{ij} \left( X_i(j) - \bar X''_i \right)^2 + \left( p'_{ij} \bar X'_i + p''_{ij} \bar X''_i \right)^2 \right).
\end{equation*}
\end{lemma}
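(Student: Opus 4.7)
\textbf{Proof proposal for Lemma~\ref{lem:MSE_3}.}

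The plan is to mirror the structure of the proof of Lemma~\ref{lem:MSE} step by step, replacing the two-case computation with a three-case computation. First I would invoke Lemma~\ref{lem:general_MSE}: the ternary encoder $\alpha$ is unbiased by Lemma~\ref{lem:unbiasedness_3}, and because the randomizations on different nodes $i$ are drawn independently, it is also an independent encoder. Hence
\[
MSE_{\alpha}(X_1,\dots,X_n) \;=\; \frac{1}{n^2}\sum_{i=1}^n \EE{\alpha}{\|Y_i - X_i\|^2}.
\]
Next, since $\|Y_i - X_i\|^2 = \sum_{j=1}^d (Y_i(j)-X_i(j))^2$ and expectation is linear, the task reduces to computing $\EE{\alpha}{(Y_i(j)-X_i(j))^2}$ for each coordinate $(i,j)$ separately. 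No coordinate-wise independence within a single node is required for this step.

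For a fixed pair $(i,j)$, write $a=\bar X'_i$, $b=\bar X''_i$, $x=X_i(j)$, $p=p'_{ij}$, $q=p''_{ij}$, $r=1-p-q$, and let $c=(x-pa-qb)/r$ denote the value taken in the third branch. By Lemma~\ref{lem:unbiasedness_3} we have $\EE{\alpha}{Y_i(j)}=x$, so $\EE{\alpha}{(Y_i(j)-x)^2}$ equals the variance of $Y_i(j)$. Summing contributions from the three branches according to their probabilities yields
\[
\EE{\alpha}{(Y_i(j)-x)^2} \;=\; p\,(a-x)^2 + q\,(b-x)^2 + r\,(c-x)^2.
\]
This already produces the first two summands in the claimed identity; the third term is obtained by substituting $c=(x-pa-qb)/r$ into $r(c-x)^2$ and carrying out the algebraic simplification, during which the factor $1/r$ cancels against terms in the expansion so that only a polynomial in $p,q,a,b$ remains.

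The main obstacle is this last algebraic manipulation: unlike the binary case, where the simplification $(1-p)^2/p + (1-p) = (1-p)/p$ collapses cleanly to $\left(\tfrac{1}{p_{ij}}-1\right)(X_i(j)-\mu_i)^2$, here three branches interact and one must carefully expand $(x-pa-qb)^2/r$, regroup the $x^2$, $xa$, $xb$, $a^2$, $b^2$, $ab$ terms, and verify that the combination matches $\bigl(p'_{ij}\bar X'_i + p''_{ij}\bar X''_i\bigr)^2$ after collecting. Once this bookkeeping is done, summing the per-coordinate variances over $i$ and $j$ and dividing by $n^2$ gives the stated formula. I would also sanity-check the derivation by verifying the degenerate limit $q\to 0$ (with $b$ arbitrary), which should collapse back to Lemma~\ref{lem:MSE} applied with center $a$; any discrepancy would indicate a typo in one of the three summands to be tracked down before committing to the final expression.
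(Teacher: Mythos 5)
Your setup is the natural one (and the one the paper implicitly intends, since it omits the proof as a "straightforward generalization" of Lemma~\ref{lem:MSE}): unbiasedness plus independence across nodes, Lemma~\ref{lem:general_MSE}, and a per-coordinate variance computation over the three branches. Up to the display $\EE{\alpha}{(Y_i(j)-x)^2}=p(a-x)^2+q(b-x)^2+r(c-x)^2$ everything is correct. The genuine gap is exactly the step you defer to "bookkeeping". Since $c-x=\frac{p(x-a)+q(x-b)}{r}$, the third branch contributes $r(c-x)^2=\frac{\left(p(x-a)+q(x-b)\right)^2}{1-p-q}$: the factor $1/r$ does \emph{not} cancel, and this quantity is not equal to $\left(p'_{ij}\bar X'_i+p''_{ij}\bar X''_i\right)^2$. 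A concrete check: take $p=q=\tfrac14$, $a=b=0$, $x=1$; the encoder outputs $0$ or $2$ each with probability $\tfrac12$, so the per-coordinate variance is $1$, while the right-hand side of the lemma contributes $\tfrac14+\tfrac14+0=\tfrac12$. Your own proposed sanity check ($q\to 0$) would have exposed this: the displayed formula degenerates to $p(x-a)^2+(pa)^2$, whereas collapsing to the binary protocol (Lemma~\ref{lem:MSE} with center $a$ and scaled-value probability $1-p$) gives $\tfrac{p}{1-p}(x-a)^2$.

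So the proof cannot be completed as proposed, because the identity you are asked to reach is not an identity as printed; the correct per-coordinate expression is $p'_{ij}\left(X_i(j)-\bar X'_i\right)^2+p''_{ij}\left(X_i(j)-\bar X''_i\right)^2+\frac{\left(p'_{ij}(X_i(j)-\bar X'_i)+p''_{ij}(X_i(j)-\bar X''_i)\right)^2}{1-p'_{ij}-p''_{ij}}$, which does reduce correctly to Lemma~\ref{lem:MSE} when $p''_{ij}=0$. If you carry your argument through with this third summand (or flag the stated formula as a typo and prove the corrected version), the rest of your outline stands; asserting that the stated third summand drops out of the algebra is the step that fails.
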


We expect the $k$-ary protocol to lead to better (lower) MSE bounds, but at the expense of an increase in communication cost. Whether or not the trade-off offered by $k>2$ is better than that for the $k=2$ case investigated in this work is an interesting question to consider.

\subsection{Preprocessing via Random Rotations}

Following the idea  proposed in \cite{Distributed_mean}, one can explore an encoding protocol $\alpha_Q$ which arises as the composition of a random rotation, $Q$, applied to $X_i$ for all $i$, followed by the protocol $\alpha$ described in Section~\ref{sec:encode}.  Letting $Z_i = Q X_i$ and $Z=\frac{1}{n}\sum_i Z_i$, we thus have
\[Y_i = \alpha(Z_i), \qquad i=1,2,\dots,n.\]
With this protocol we associate the decoder
$\gamma(Y_1,\dots,Y_n) = \frac{1}{n}\sum_{i=1}^n Q^{-1} Y_i .$

Note that
\begin{eqnarray*}
MSE_{\alpha, \gamma}&=& \E{\left\|\gamma(Y_1,\dots,Y_n) - X\right\|^2} \\
&=& \E{\left\|Q^{-1}\gamma(Y_1,\dots,Y_n) - Q^{-1}Z\right\|^2} \\
&=& \E{\left\|\gamma(\alpha(Z_1),\dots,\alpha(Z_n)) - Z\right\|^2} \\
&=& \E{\E{\left\|\gamma(\alpha(Z_1),\dots,\alpha(Z_n)) - Z\right\|^2\;|\; Q}}.
\end{eqnarray*}

This approach is motivated by the following observation: a random rotation can be identified by a single random seed, which is easy to  communicate to the server without the need to communicate all floating point entries defining $Q$. So, a random rotation pre-processing step implies only a minor  communication overhead. However, {\em if} the preprocessing step helps to dramatically reduce the MSE, we get an improvement. Note that the inner expectation above is the formula for MSE of our basic encoding-decoding protocol, given that the data is $Z_i = QX_i$ instead of $\{X_i\}$. The outer expectation is over $Q$. Hence,  we would like the to find a mapping $Q$ which tends to transform the data $\{X_i\}$ into new data $\{Z_i\}$ with better MSE, in expectation.

From now on, for simplicity assume the node centers are set to the average, i.e.,  $\bar{Z}_i = \frac{1}{d}\sum_{j=1}^d Z_i(j)$. For any vector $x\in \R^d$, define
$$ \sigma(x) \eqdef \sum_{j=1}^d (x(j) - \bar{x})^2 = \|x-\bar{x}1\|^2, $$
where $\bar{x} = \tfrac{1}{d}\sum_j x(j)$ and $1$ is the vector of all ones. Further, for simplicity assume that $p_{ij}=p$ for all $i,j$. Then using Lemma~\ref{lem:MSE}, we get
$$ MSE = \frac{1-p}{pn^2}  \sum_{i=1}^n \EE{Q}{ \|Z_i - \bar{Z}_i 1\|^2} =  \frac{1-p}{pn^2}  \sum_{i=1}^n \EE{Q}{\sigma(Q X_i)}. $$

It is interesting to investigate whether   choosing $Q$ as a random rotation, rather than identity (which is the implicit choice done in previous sections), leads to improvement in MSE, i.e., whether we can in some well-defined sense obtain an inequality of the type
\[ \sum_i \EE{Q}{\sigma(Q  X_i)} \ll \sum_i \sigma(X_i).\]

This is the case for the quantization protocol proposed in \cite{Distributed_mean}, which arises as a special case of our more general protocol. This is because the quantization protocol is suboptimal within our family of encoders. Indeed, as we have shown, with a different choice of the parameter we  can obtained results which improve, in theory, on the rotation + quantization approach. This suggests that perhaps combining an appropriately chosen rotation pre-processing step with our optimal encoder, it may be possible to achieve further improvements in MSE for any fixed communication budget.  Finding suitable random rotations $Q$ requires  a careful study which we leave to future research.

\section{Application to Federated Learning}
\label{sec:meanFL}

In this Section, we experiment with applying some of these techniques in the context of Federated learning \cite{konecny2016federated, mcmahan2016federated,
  konecny2015federated}. As discussed in previous chapter, by Federated Optimization or Learning we refer to a setting where we
train a shared global model under the coordination of a central
server, from a federation of participating devices. The participating
devices (clients) are typically large in number and have slow or
unstable internet connections. A motivating example for federated
optimization arises when the training data is kept locally on users'
mobile devices, and the devices are used as nodes performing
computation on their local data in order to update a global model. The
framework differs from conventional distributed machine learning
\cite{reddi2016aide, ma2015distributed, DANE, zhang2015communication, largeNN, chilimbi2014project}
due to the the large number of clients, highly unbalanced and
non-i.i.d.\ data and unreliable network connections.

Federated learning offers distinct practical advantages compared to performing
learning in the data center. The model update is generally less
privacy-sensitive than the data itself, and the server never needs to
store these updates. Thus, when applicable, federated learning can
significantly reduce privacy and security risks by limiting the attack
surface to only the device, rather than the device and the cloud. This
approach also leverages the data-locality and computational power of
the large number of mobile devices.

For simplicity, we consider synchronized algorithms for federated learning \cite{mcmahan2016federated,chen16revisiting}, where a typical round consists of the following steps:
\begin{enumerate}[noitemsep]
  \item A subset of clients is selected, each of which downloads the current model.
  \item Each client in the subset computes an updated model based on their local data.
  \item The updated models are sent from the selected clients to the sever.
  \item The server aggregates these models (typically by averaging) to
    construct an improved global model.
\end{enumerate}

A naive implementation of the above framework requires that each
client sends a full model (or a full model gradient) back to the
server in each round. For large models, this step is likely to be the
bottleneck of federated learning due to the asymmetric property of
internet connections: the uplink is typically much slower than
downlink. The US average broadband speed was 55.0Mbps download vs. 
18.9Mbps upload, with some ISPs being significantly more asymmetric,
e.g., Xfinity at 125Mbps down vs.\ 15Mbps up \cite{speedtest}. 
Cryptographic protocols used to protect individual update \cite{bonawitz2016practical} further increase the size of the data needed to be communicated back to server.
It is therefore important to investigate methods which can reduce the
uplink communication cost. In this section, we study two general
approaches:
\begin{itemize}
\item Structured updates, where we learn an update from a restricted
  lower-dimensional space.
\item Sketched updates, where we learn a full model update, but then
    compress it before sending to the server.
\end{itemize}
These approaches can be combined, e.g., first learning a structured
update and then sketching it; however, we do not experiment with this
combination in the current work.

In the following, we formally describe the problem. 
The goal of federated learning is to learn a model with parameters embodied in a real matrix $\bW\in \R^{d_1\times d_2}$ from data stored across a large number of clients. We first provide a communication-naive version of the federated learning. In round $t\geq 0$, the server distributes the current model $\bW_{t}$ to a subset $S_t$ of $n_t$ clients (for example, to a selected subset of clients whose devices are plugged into power, have access to broadband, and are idle). These clients independently update the model based on their local data. Let the updated local models be $\bW^1_{t}, \bW^2_{t}, \dots, \bW^{n_t}_{t}$, so  the updates can be written as $\bH^i_{t} := \bW^i_{t} - \bW_{t}$, for $ i \in S_t$. Each selected client then sends the update back to the sever, where the global update is computed by aggregating\footnote{A weighted sum might be used to replace the average based on specific implementations.} all the client-side updates: 
\[
  \bW_{t+1} = \bW_{t} + \eta_t \bH_{t}, \qquad \bH_t := \frac{1}{n_t}\sum_{i \in S_t}  \bH^i_{t}.
\]
 The sever  chooses the learning rate $\eta_t$ (for simplicity, we choose $\eta_t = 1$). Recent works show that a careful choice of the server-side learning rate can lead to faster convergence \cite{Ma:2015ti, ma2015distributed, konecny2016federated}. 

In this chapter, we describe federated learning for neural networks, where we use a separate 2D matrix $\bW$ to represent the parameters of each layer. We suppose that $\bW$ gets right-multiplied, i.e., $d_1$ and $d_2$ represent the output and input dimensions respectively. Note that the parameters of a fully connected layer are naturally represented as 2D matrices. However, the kernel of a convolutional layer is a 4D tensor of the shape $\# \text{input} \times \text{width} \times \text{height} \times \# \text{output}$. In such a case, $\bW$ is reshaped from the kernel to the shape $(\# \text{input} \times \text{width} \times \text{height}) \times \# \text{output}$. 

The goal of increasing communication efficiency of federated learning is to reduce the cost of sending $\bH^i_{t}$ to the server. We propose two general strategies of achieving this, discussed next.

\subsection{Structured Update}

The first type of communication efficient update restricts the updates $\bH^i_{t}$ to have a pre-specified {\em structure}. Two types of structures are considered:

{\bf Low rank.} We enforce $\bH^i_{t} \in \R^{d_1 \times
  d_2}$ to be low-rank matrices of rank at most $k$, where $k$ is a fixed number. We express $\bH^i_{t}$ as the product of two matrices:
$\bH^i_{t} = \bA^i_{t} \bB^i_{t}$, where $\bA^i_{t} \in \R^{d_1 \times
  k}$, $\bB^i_{t} \in \R^{k \times d_2}$, and $\bA^i_{t}$ is
generated randomly and fixed, and only $\bB^i_{t}$ is optimized.
Note that $\bA^i_{t}$ can then be compressed in the form of a random seed
and the clients only need to send $\bB^i_{t}$ to the server. We also
tried fixing $\bB_t^i$ and training $\bA^i_t$, as well as training
both $\bA_t^i$ and $\bB^i_t$; neither performed as well. Our approach
seems to perform as well as the best techniques considered in
\cite{denil2013predicting}, 

{\bf Random mask.} We restrict the update $\bH^i_{t}$ to be  sparse matrices, following a pre-defined random sparsity pattern (i.e., a random mask). The pattern is generated afresh in each round and for each client. Similar to the low-rank approach, the sparse pattern can be fully specified by a random seed, and therefore it is only required to send the values of the non-zeros entries of $\bH^i_{t}$. This strategy can be seen as the combination of the master training method and a randomized block coordinate minimization approach \cite{QUARTZ, ALPHA}.

\subsection{Sketched Update}
The second type of communication-efficient update, which we call {\em sketched}, first computes the full unconstrained $\bH^i_{t}$, and then encodes the update in a (lossy) compressed form before sending to the server. The server decodes the updates before doing the aggregation. Such sketching methods have application in many domains \cite{woodruff2014sketching}. This technique corresponds to the general focus of previous sections of this chapter. 

We propose two ways of performing the sketching:

{\bf Subsampling.} 
Instead of sending $\bH^i_t$, each client only communicates matrix $\hat{\bH}^i_t$ which is formed from a random subset of the (scaled) values of $\bH^i_t$. The server then averages the sampled updates, producing the global update $\hat{\bH}_t$. This can be done so that the average of the sampled updates is an unbiased estimator of the true average: $\E{\hat{\bH}_t} = \bH_t$. Similar to the random mask structured update, the mask is randomized independently for each client in each round, and the mask itself is stored as a synchronized seed. It was recently shown that, in a certain setting,  the {\em expected iterates} of SGD converge to the optimal point \cite{ASDA}. Perturbing the iterates by a random matrix of zero mean, which is what our subsampling strategy would do, does not affect this type of convergence. 

{\bf Probabilistic quantization.}
Another way of compressing the updates is by {\em quantizing} the weights. 
We first describe the algorithm of quantizing each scalar to one bit. 
Consider the update $\bH^i_t$, let $h = (h_1,\dots,h_{d_1\times d_2}) = \text{vec}(\bH^i_t)$, and let 
$h_{\max} = \max_j(h_j)$, $h_{\min} = \min_j(h_j)$. 
The compressed update of $h$, denoted by $\tilde{h}$, is generated as follows:
\[
    \tilde{h}_j= 
\begin{cases}
    h_{\max},&    \text{with probability}\quad \frac{h_j - h_{\min}}{h_{\max} - h_{\min}} \\
    h_{\min},   & \text{with probability} \quad \frac{h_{\max} - h_j}{h_{\max} - h_{\min}} 
\end{cases}\enspace .
\]
It is easy to show that $\tilde{h}$ is an unbiased estimator of $h$.
This method provides $32\times$ of compression compared to a 4 byte float. One can also generalize the above to more than 1 bit for each scalar. For $b$-bit quantization, we first equally divide $[h_{\min}, h_{\max}]$ into $2^b$ intervals. Suppose $h_i$ falls in the interval bounded by $h'$ and $h''$. The quantization operates by replacing $h_{\min}$ and $h_{\max}$ of the above equation by $h'$ and $h''$, respectively. Incremental, randomized and distributed optimization algorithms can be similarly analyzed in a quantized updates setting \cite{quantized2005,golovin13randomization,quantized2016}.

{\bf Improving the quantization by structured random rotations.} 
The above 1-bit and multi-bit quantization approaches work best when the scales are approximately equal across different dimensions. 
For example, when $\max = 100$ and $\min= -100$ and most of values are $0$, the 1-bit quantization will lead to large quantization error. We note that performing a random rotation of $h$ before the quantization (multiplying $h$ by an orthogonal matrix) will resolve this issue. 
In the decoding phase, the server needs to perform the inverse rotation before aggregating all the updates. Note that in practice, the dimension of $h$ can be as high as $d = 1e6$, and it is computationally prohibitive to generate ($\mathcal{O}(d^3)$) and apply  ($\mathcal{O}(d^2)$) a rotation matrix. In this work, we use a type of structured rotation matrix which is the product of a Walsh-Hadamard matrix and a binary diagonal matrix, motivated by the recent advance in this topic \cite{yu2016orthogonal}. This reduces the computational complexity of generating and applying the matrix to $\mathcal{O}(d)$ and $\mathcal{O}(d \log d)$.

\subsection{Experiments}

We conducted the experiments using federated learning to train deep
neural networks for the CIFAR-10 image classification task
\cite{krizhevsky09learningmultiple}.
There are 50,000 training examples, which we partitioned into 100 
clients each containing 500 training examples. The 
model architecture was taken from the TensorFlow tutorial
\cite{tensorflowcifar}, which consists of two convolutional layers
followed by two fully connected layers and then a linear
transformation layer to produce logits, for a total of over 1,000,000
parameters. While this model is not the state-of-the-art, it is
sufficient for our needs, as our goal is to evaluate our compression
methods, not achieve the best possible accuracy on this task.

We employ the Federated Averaging algorithm
\cite{mcmahan2016federated}, which significantly decreases the number
of rounds of communication required to train a good model. However, we
expect our techniques will show a similar reduction in communication
costs when applied to synchronized SGD. For Federated Averaging, on
each round we select 10 clients at random, each of which
performs 10 epochs of SGD with a learning rate of $\eta$
on their local dataset using minibatches of 50 images, for a total of 100
local updates. From this updated model we compute the deltas for each
layer $\bH^i_{t}$.

\begin{table}[h]
\vspace{0.05in}
\begin{small}
\begin{tabular}{lllll}
                      & (Low) Rank       & Sampling Probabilities           & model size & reduction \\ 
\hline
Full Model (baseline) & 64, 64, 384, 192 & 1, 1, 1, 1                       & 4.075 MB   & --- \\
Medium subsampling    & 64, 64, 12, 6    & 1, 1, 0.03125, 0.03125           & 0.533 MB   & 7.6$\times$ \\
High subsampling      & 8, 8, 12, 6      & 0.125, 0.125, 0.03125, 0.03125   & 0.175 MB   & 23.3$\times$  \\
\hline
\end{tabular}
  \caption{%
  \footnotesize
    Low rank and sampling parameters for the CIFAR experiments. The
    Sampling Probabilities column gives the fraction of elements uploaded
    for the two convolutional layers and the two fully-connected layers,
    respectively; these parameters are used by \StructMask, \SketchMask, and
    \SketchRotMask. The Low Rank column gives the rank restriction $k$ for these 
    four layers. The final softmax layer is small, so we do not compress updates 
    to it.
  }
\label{table:params}
\end{small}
\end{table}

We define medium and high low-rank/sampling parameterizations that
result in the same compression rates for both approaches, as given in
Table~\ref{table:params}.  The left and center columns of
Figure~\ref{fig:results} present non-quantized results for test-set
accuracy, both as a function of the number of rounds of the algorithm,
and the total number of megabytes uploaded. For all experiments,
learning rates were tuned using a multiplicative grid of resolution
$\sqrt{2}$ centered at 0.15; we plot results for the learning rate
with the best median accuracy over rounds 400 -- 800. We used a
multiplicative learning-rate decay of 0.988, which we selected by tuning only for
the baseline algorithm.

\newlength{\plotwidth}
\setlength{\plotwidth}{0.321\linewidth}
\begin{figure}[h]
  \begin{center}
  \includegraphics[width=\plotwidth]{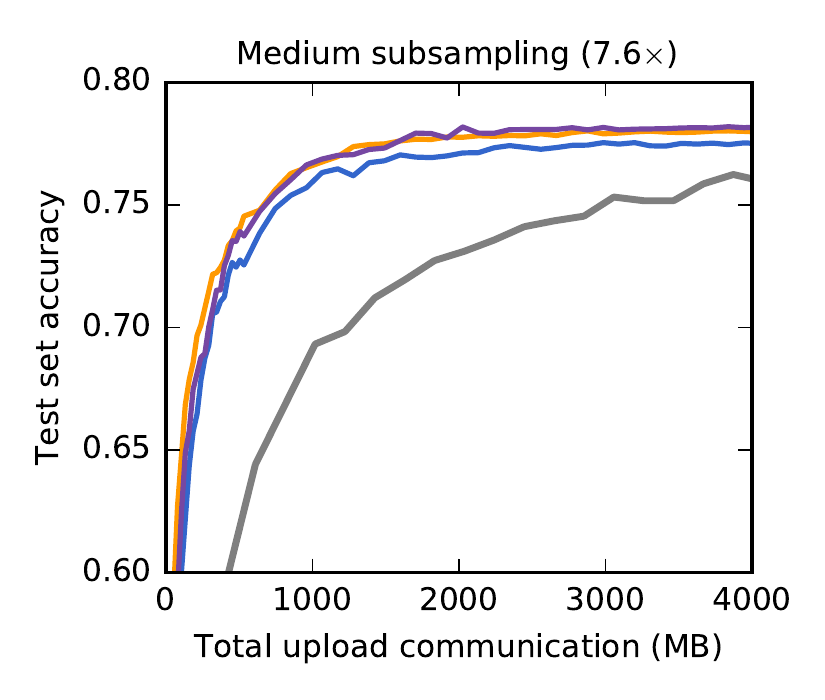}
  \includegraphics[width=\plotwidth]{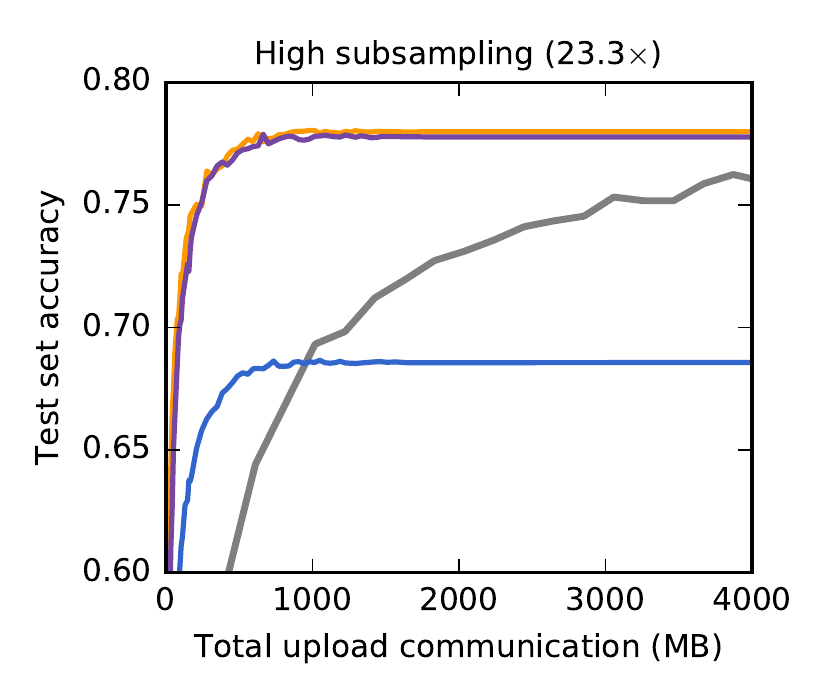}
  \includegraphics[width=\plotwidth]{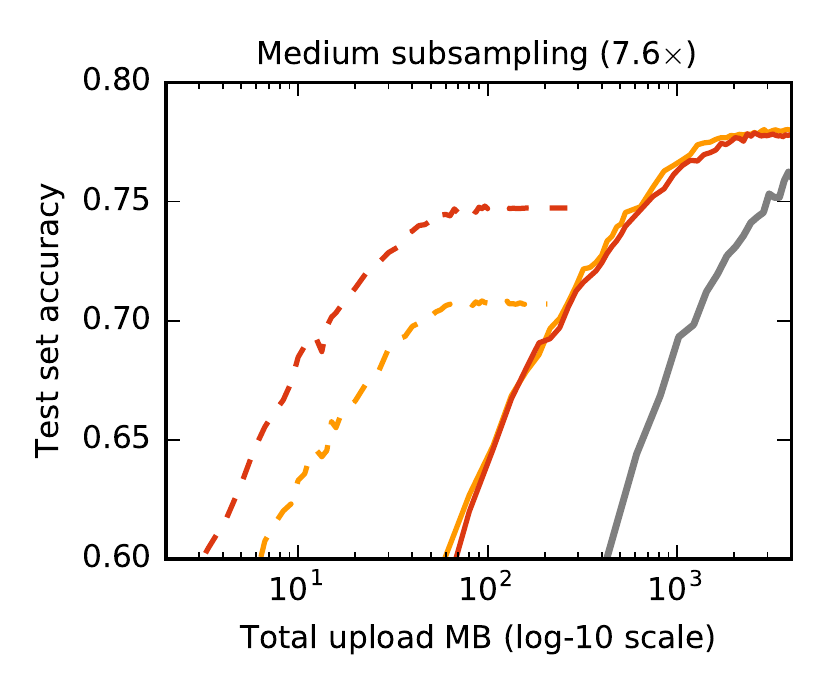}

  \includegraphics[width=\plotwidth]{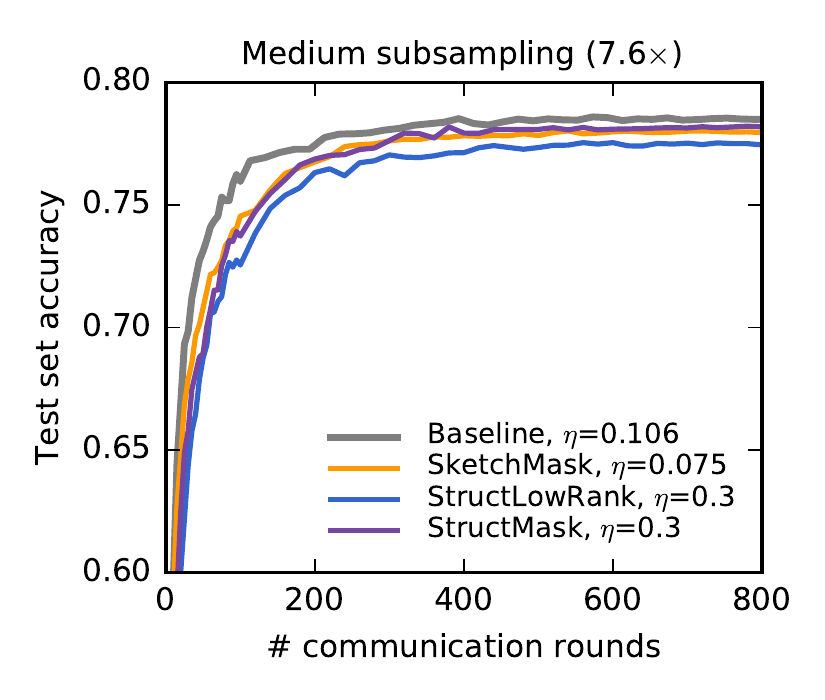}
  \includegraphics[width=\plotwidth]{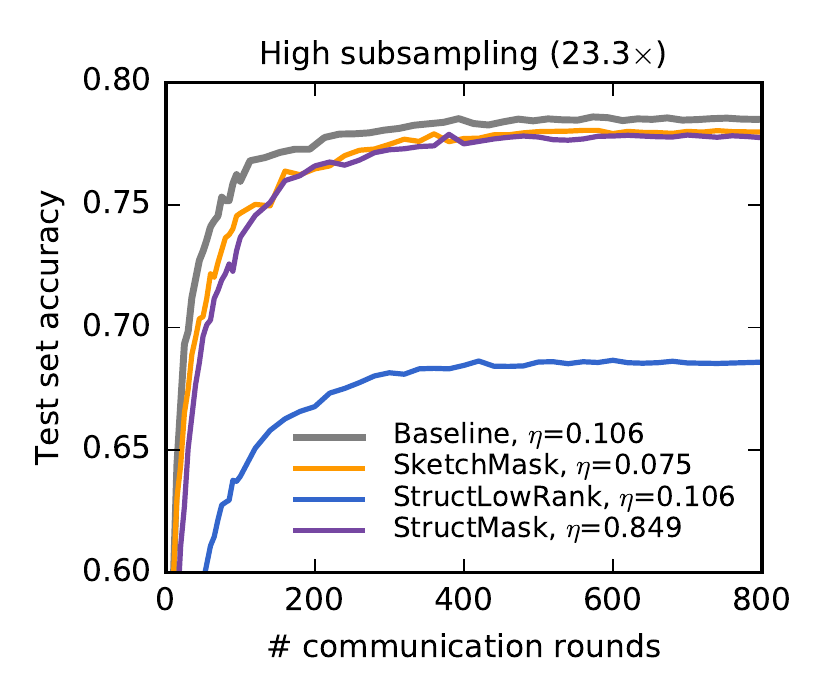}
  \includegraphics[width=\plotwidth]{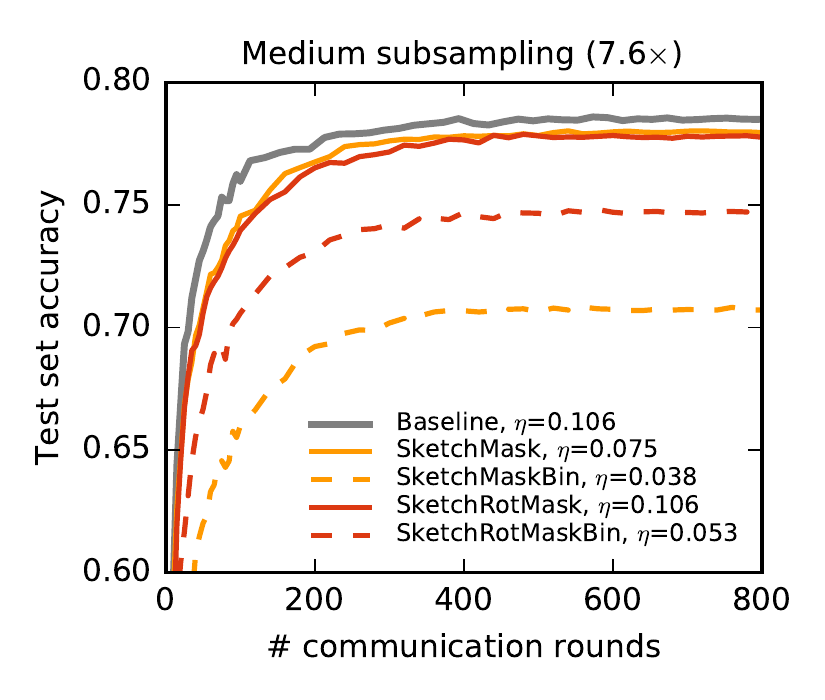}

  \end{center}
  \captionof{figure}{\footnotesize Non-quantized results (left and
    middle columns), and results including binary quantization
    (dashed lines \SketchRotMaskBin and \SketchMaskBin, right column). Note the
    $x$-axis is on a log scale for top-right plot. We achieve over
    70\% accuracy with fewer than 100MB of
    communication.
  }
\label{fig:results}
\end{figure}

For medium subsampling, all three approaches provide a dramatic
improvement in test set accuracy after a fixed amount of bandwidth
usage; the lower row of plots shows little loss in accuracy as a
function of the number of update rounds.  The exception is that the
\StructLowRank approach performs poorly for the high subsampling
parameters. This may suggest that requiring a low-rank update
structure for the convolution layers works poorly. Also, perhaps
surprisingly, we see no advantage for \StructMask, which optimizes 
for a random sparse set of coefficients, as compared to
\SketchMask, which chooses a sparse set of parameters to update 
\emph{after} a full update is learned.

The right two plots in Figure~\ref{fig:results} give results for 
\SketchMask and \SketchRotMask, with and without binary quantization;
we consider only the medium subsampling regime which is
representative. We observe that (as expected) introducing the random
rotation without quantization has essentially no effect (solid red and
orange lines). However, binary quantization dramatically decreases the
total communication cost, and further introducing the random rotation
significantly speeds convergence, and also allows us to converge
to a higher level of accuracy. We are able to learn a reasonable model
(70\% accuracy) in only $\sim$100MB of communication, two orders of
magnitude less than the baseline.

\section{Additional Proofs}
\label{sec:app:alternative_protocol}

In this section we provide proofs of Lemmas~\ref{lem:unbiasedness_2} and \ref{lem:MSE_2}, describing properties of the encoding protocol $\alpha$ defined in \eqref{eq:randomized_protocol_2}. For completeness, we also repeat the statements.

\begin{lemma}[Unbiasedness]
The encoder $\alpha$ defined in \eqref{eq:randomized_protocol} is unbiased. That is,  $\EE{\alpha}{\alpha(X_i)} = X_i$ for all $i$. As a result, $Y$ is an unbiased estimate of the true average: $\EE{\alpha}{Y} = X$.
\end{lemma}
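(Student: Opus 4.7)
The plan is to prove the coordinatewise identity $\EE{\alpha}{Y_i(j)} = X_i(j)$ for every $i$ and $j$, and then lift it to the full vector-valued statement $\EE{\alpha}{\alpha(X_i)} = X_i$ via linearity of expectation, since the encoder in \eqref{eq:randomized_protocol} acts componentwise and a vector identity reduces to coordinatewise identities. The additional statement $\EE{\alpha}{Y} = X$ then follows immediately from the definition of the averaging decoder $\gamma$ (Example~\ref{ex:avg_decoder}) together with a second application of linearity of expectation over the $n$ nodes. This is the same skeleton used for Lemma~\ref{lem:unbiasedness} earlier in the chapter, repeated here for completeness, so no new conceptual machinery is needed.

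The only concrete step that requires an actual computation is the coordinatewise expectation. By the two-point law in \eqref{eq:randomized_protocol},
\[
\EE{\alpha}{Y_i(j)} \;=\; p_{ij}\left(\frac{X_i(j)}{p_{ij}} - \frac{1-p_{ij}}{p_{ij}}\mu_i\right) + (1-p_{ij})\,\mu_i.
\]
I expect the factor of $p_{ij}$ in the first summand to cancel the $p_{ij}$ in the two denominators, producing $X_i(j) - (1-p_{ij})\mu_i$, and this to combine with the second summand $(1-p_{ij})\mu_i$ to give $X_i(j)$ as required. The cancellation is essentially the design property of the encoder: the ``large'' value $X_i(j)/p_{ij} - (1-p_{ij})\mu_i/p_{ij}$ is precisely the unique scalar which, when averaged against $\mu_i$ under the $(p_{ij},1-p_{ij})$ Bernoulli law, reproduces $X_i(j)$. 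Note that the standing assumption $p_{ij} > 0$ is exactly what keeps the expression well defined.

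To conclude, once the componentwise expectations match, the unbiasedness of $Y_i = \alpha(X_i)$ as a whole follows, and then
\[
\EE{\alpha}{Y} = \EE{\alpha}{\gamma(Y_1,\dots,Y_n)} = \frac{1}{n}\sum_{i=1}^n \EE{\alpha}{Y_i} = \frac{1}{n}\sum_{i=1}^n X_i = X,
\]
where the interchange of expectation and the finite sum is elementary. There is no genuine obstacle in this proof; the main (and only) thing to be careful about is the algebraic bookkeeping of the prefactors $1/p_{ij}$ and $(1-p_{ij})/p_{ij}$ in the high-value branch so that the cross-terms in $\mu_i$ cancel exactly and $X_i(j)$ is recovered without residual dependence on $\mu_i$ or $p_{ij}$.
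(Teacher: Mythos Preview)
Your proof is correct and follows essentially the same approach as the paper: compute the coordinatewise expectation $\EE{\alpha}{Y_i(j)}$ directly from the two-point distribution in \eqref{eq:randomized_protocol}, obtain $X_i(j)$ via the exact cancellation you describe, and then invoke linearity of expectation twice (once to pass to the vector $\alpha(X_i)$, once to pass to the average $Y$). The only cosmetic difference is that the paper phrases the reduction in the reverse order (first reducing $\EE{\alpha}{Y(j)}=X(j)$ to $\EE{\alpha}{Y_i(j)}=X_i(j)$, then computing), but the computation and logic are identical.
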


\begin{proof}
Since $Y(j) = \frac{1}{n}\sum_{i=1}^n Y_{i}(j)$ and $X(j) = \frac{1}{n}\sum_{i=1}^n X_{i}(j)$, it suffices to show that $\EE{\alpha}{Y_i(j)}=X_i(j)$:
\begin{align*}
\EE{\alpha}{Y_i(j)} &= \frac{1}{|\sigma_k(d)|} \sum_{\sigma \in \sigma_k(d)} \left[ 1_{(j \in \sigma)} \left( \frac{d X_{i}(j)}{k} - \frac{d-k}{k} \mu_i \right) + 1_{(j \not\in \sigma)} \mu_i \right] \\
&= \binom{d}{k}^{-1} \left[ \binom{d-1}{k-1} \left( \frac{d X_{i}(j)}{k} - \frac{d-k}{k} \mu_i \right) + \binom{d-1}{k} \mu_i \right] \\
&= \binom{d}{k}^{-1} \left[ \binom{d-1}{k-1} \frac{d}{k} X_{i}(j) + \left( \binom{d-1}{k} - \binom{d-1}{k-1} \frac{d-k}{k} \right) \mu_i \right] \\
&= X_i(j)
\end{align*}
and the claim is proved.
\end{proof}

\begin{lemma}[Mean Squared Error]
Let $\alpha = \alpha(k)$ be encoder defined as in \eqref{eq:randomized_protocol_2}. Then
\begin{equation*}
MSE_{\alpha}(X_1, \dots, X_n) = \frac{1}{n^2} \sum_{i=1}^n \sum_{j=1}^d \frac{d-k}{k} \left( X_i(j) - \mu_i \right)^2.
\end{equation*}
\end{lemma}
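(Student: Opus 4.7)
The plan is to mirror the proof template of Lemma~\ref{lem:MSE}, with the one essential modification being how the per-coordinate expectation is computed. The encoder $\alpha$ defined in \eqref{eq:randomized_protocol_2} acts on each node $i$ independently (the random set $\mathcal{D}_i \in \sigma_k(d)$ is sampled independently across nodes) and is unbiased by Lemma~\ref{lem:unbiasedness_2}. Thus Lemma~\ref{lem:general_MSE} applies with the averaging decoder $\gamma$, and immediately gives
\[
MSE_\alpha(X_1,\dots,X_n) = \frac{1}{n^2}\sum_{i=1}^n \EE{\alpha}{\|Y_i - X_i\|^2} = \frac{1}{n^2}\sum_{i=1}^n\sum_{j=1}^d \EE{\alpha}{(Y_i(j) - X_i(j))^2}.
\]

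The main (and only nontrivial) step is then to evaluate $\EE{\alpha}{(Y_i(j) - X_i(j))^2}$ for fixed $i,j$. Since $\mathcal{D}_i$ is chosen uniformly from $\sigma_k(d)$, a fixed coordinate $j$ lies in $\mathcal{D}_i$ with probability $\binom{d-1}{k-1}/\binom{d}{k} = k/d$. Conditional on $j \in \mathcal{D}_i$, the algebra $\frac{d X_i(j)}{k} - \frac{d-k}{k}\mu_i - X_i(j) = \frac{d-k}{k}(X_i(j)-\mu_i)$ gives the squared deviation $\frac{(d-k)^2}{k^2}(X_i(j)-\mu_i)^2$, and conditional on $j \notin \mathcal{D}_i$ the squared deviation is just $(X_i(j)-\mu_i)^2$. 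Combining,
\[
\EE{\alpha}{(Y_i(j) - X_i(j))^2} = \frac{k}{d}\cdot\frac{(d-k)^2}{k^2}(X_i(j)-\mu_i)^2 + \frac{d-k}{d}(X_i(j)-\mu_i)^2.
\]
Factoring out $(X_i(j)-\mu_i)^2 \cdot \frac{d-k}{d}$ yields $\frac{d-k}{d}\left(\frac{d-k}{k}+1\right) = \frac{d-k}{k}$, so the per-coordinate expectation equals $\frac{d-k}{k}(X_i(j)-\mu_i)^2$, and plugging back into the double sum gives the claimed formula.

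I do not expect any genuine obstacle here; the only subtlety is purely combinatorial, namely computing $\Pr(j \in \mathcal{D}_i) = k/d$ correctly for the uniform sampling from $\sigma_k(d)$. The rest is an algebraic simplification, and the structure of the proof is essentially identical to that of Lemma~\ref{lem:MSE} — the encoder~\eqref{eq:randomized_protocol_2} behaves, in expectation for each fixed coordinate, as if it were the variable-support encoder~\eqref{eq:randomized_protocol} with $p_{ij} = k/d$, which is why the resulting MSE matches \eqref{eq:MSE_general} evaluated at $p_{ij}=k/d$.
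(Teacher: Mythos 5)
Your proof is correct and follows essentially the same route as the paper: apply Lemma~\ref{lem:general_MSE} (justified by unbiasedness and independence across nodes) and then evaluate the per-coordinate expectation, with your marginal-probability computation $\Pr(j\in\mathcal{D}_i)=k/d$ being just a repackaging of the paper's direct count of subsets $\binom{d-1}{k-1}/\binom{d}{k}$ and $\binom{d-1}{k}/\binom{d}{k}$. The algebraic simplification to $\frac{d-k}{k}(X_i(j)-\mu_i)^2$ matches the paper's.
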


\begin{proof} 
Using Lemma~\ref{lem:general_MSE}, we have
\begin{eqnarray}
MSE_{\alpha}(X_1,\dots,X_n) &=& \frac{1}{n^2} \sum_{i=1}^n \EE{\alpha}{\left\|Y_i - X_i \right\|^2} \notag \\
&=&\frac{1}{n^2} \sum_{i=1}^n \EE{\alpha}{\sum_{j=1}^d (Y_i(j) - X_i(j) )^2} \notag \\
&=&\frac{1}{n^2} \sum_{i=1}^n \sum_{j=1}^d \EE{\alpha}{ (Y_i(j) - X_i(j) )^2}.
\label{eq:6867sgs7_v2}
\end{eqnarray}
Further, 
\begin{align*}
\EE{\alpha}{ (Y_i(j) - X_i(j) )^2} &= 
\binom{d}{k}^{-1} \sum_{\sigma \in \sigma_k(d)} \left[ 1_{(j \in \sigma)} \left( \frac{d X_{i}(j)}{k} - \frac{d-k}{k} \mu_i - X_i(j) \right)^2 + 1_{(j \not\in \sigma)} \left( \mu_i - X_i(j) \right)^2 \right] \\
&= \binom{d}{k}^{-1} \left[ \binom{d-1}{k-1} \frac{(d-k)^2}{k^2} \left( X_i(j) - \mu_i \right)^2 + \binom{d-1}{k} \left( \mu_i - X_i(j) \right)^2 \right] \\
&= \frac{d-k}{k} \left( X_i(j) - \mu_i \right)^2.
\end{align*}
It suffices to substitute the above into \eqref{eq:6867sgs7_v2}.
\end{proof}

\part{Conclusion}

\chapter{Conclusion and Future Challenges}

In this thesis, we addressed the problem of minimizing a finite average of functions:
\begin{equation*}
\min_{w \in \R^d} \frac1n \sum_{i=1}^n f_i(w).
\end{equation*}

In Chapters \ref{ch:s2gd} and \ref{ch:s2cd}, we formulated two stochastic algorithms with a variance reduction property for solving the above optimization problem. By using stochastic gradients in a particular way, relying on computation of the entire gradient occasionally, we are able to achieve progressive reduction of variance of the stochastic estimates of the gradients. We have demonstrated that these approaches vastly outperform traditional methods at the optimization task.

In Chapters~\ref{ch:ms2gd} and \ref{ch:cocoa}, we described methods that can utilize parallel and distributed computing architectures. The \cocoa framework for distributed optimization addresses primarily communication efficiency --- a common problem of many distributed optimization algorithms. In particular, Section~\ref{sec:cocoa:problem} introduces a conceptual framework in which one can think of the overall efficiency of distributed optimization algorithms, and explains the usefulness of decomposing the choice of local optimization procedure from design of general local subproblems closely related to the overall optimization objective. The \cocoa framework is the first systematic realization of this approach.

In Chapters~\ref{ch:feopt} and \ref{ch:mean}, we introduced the concept of Federated Optimization, which goes beyond what is usually addressed in the area of distributed optimization, by changing the often implicit assumptions on distribution of data among different computing nodes. A particular motivating example arises in the case of user-generated data. Instead of collecting the data and storing them in datacenters, one could keep the data in users' possession, and run optimization algorithms on this massively distributed collection of user devices. We have shown that this idea is conceptually feasible, and has since been experimentally deployed by Google in Android Keyboard \cite{FederatedBlogpost}.

In the following, we try to highlight some of the important challenges that are remaining open in the field. 

The area of single-machine optimization algorithms for finite average of functions recently saw numerous contributions, such as \cite{SDCA, SAGjournal2013, SVRG, S2GD, saga, Katyusha, nguyen2017sarah}. Two major questions spanning all of these works concern the adaptability to strong convexity which is not explicitly known, and the use of stochastic higher-order information.

In general, methods relying on duality, such as SDCA \cite{SDCA}, are easier to use in practice, compared to their primal-only alternatives. This is because for a number of problems structures, SDCA comes without the need to tune any hyperparameters. However, the construction of the dual relies on explicit knowledge of the strong convexity parameter of the objective, or a lower bound of it, which then drives the convergence speed of the algorithm, both in theory and in practice. The explicit knowledge usually comes from the use of regularizer. Nevertheless, the true strong convexity parameter can be larger, caused by for instance structure in the data or can be larger locally near optimum. In contrast, the primal-only algorithms in general adapt to whatever is the true strong convexity, without the having to explicitly know what it is.

These benefits cause disagreements in the community over which algorithms are more useful in practice. It is likely that primal methods can be enhanced with adaptive techniques, removing the need of tuning hyperparameters, and the dual methods can be made adaptive to strong convexity without its explicit knowledge. Both of these ideas were to some extent addressed, but none of them provide a satisfactory answer. In particular, the authors of \cite{SAGjournal2013} provide a heuristic stepsize which `usually works' for their method, but to the best of our knowledge, there is no such method supported by theory. Recent contribution in \cite{wang2017exploiting} shows that an existing primal-dual method can be made adaptive to the true strong convexity from data, if explicitly known, or can be estimated during the run of the algorithm. While this presents a step forward, it falls short of the simplicity of primal methods in handling this issue.

On another front, it is expected that the use of higher-order information will improve the performance of existing methods on this task. The challenge is to do so utilizing stochastic information arising from the structure of the problem, while maintaining computational stability and efficiency. There have been several attempts to do this recently, see \cite[Section 3.4]{bottou2016optimization} for a detailed overview. However, all of them either fail to be computationally efficient and thus inferior to the existing methods, or work well only under restrictive assumptions. It is widely expected that progress in this regard will have significant influence.

An expected theoretical advance in communication-efficient distributed algorithms is the acceleration of the \cocoap framework in the sense of \cite{Nesterov-1983}. It is commonly agreed that this is possible, but it has not been successfully done yet. In terms of practice, scalability and robustness are the major issues going forward. All of the existing methods to some extent either don't scale to large number of nodes, or require assumptions about the way data is distributed across individual nodes. In particular, many method assume that each node has access to data drawn iid from the same distribution.

Assumptions on the distribution of data are problematic even in the datacenter setting, where the data are commonly partitioned `as is', and reshuffling data to match the assumptions is either infeasible of very impractical. The data can be naturally clustered based on its geographical origin and/or time at which it was collected. Methods that can converge for any distribution of the data, such as \cocoap, are naturally very useful in many practical applications. Nevertheless, the performance of \cocoap was observed to degrade when scaled to large number of nodes such as in the context of \FedOpt, see for instance Figure~\ref{fig:ex_final_001}.

An interesting question bridging traditional distributed optimization with \FedOpt, is whether the latter could serve as useful computational model for the former. Scaling distributed training of deep neural networks has been particularly problematic, generating large number of works in recent years. A recent work \cite{chen2016revisiting} revisits the conventional belief that synchronous methods are inferior to their asynchronous variants, showing that a synchronous alternative with backups can be superior to both. In a sense this can be thought of as a negative result for the optimization and systems community, showing the lack of sufficient robustness of methods used in practice. Nevertheless, the task of training large modes very fast remain of significant interest, see for instance \cite{goyal2017accurate}, in which the authors use up to $256$ GPUs. This is considered to be a huge number, yet nowhere near the scale of convex problems in click-through rate prediction systems \cite{mcmahan2013ad}.

The concept of \FedOpt offers a systematic way to decompose the necessary computation into more independent blocks, alleviating most of the issues related to communication necessary between individual nodes. As the concept seems stable enough to scale to support training of recurrent neural networks on phones \cite{FederatedBlogpost}, it could support scaling parallelism of training deep neural networks to many more computing nodes in a stable way. To the best of our knowledge, none experiments in this direction has been done yet.

Deployment of \FedOpt in general opens up many new research questions. For instance, if the goal of the overall system is to make sure servers do not get any data about individual users, a natural question to ask is whether server can reconstruct what data could have caused the observed communication patterns. In this regard, recent work proposed a cryptographic protocol for secure aggregation, ensuring that the server can only see an aggregate function of individual updates, such as an average, across many participating users \cite{bonawitz2017practical}. 

While the secure aggregation protocol provides an intuitive and practical obstacle for recovering the data of a participating user, it does not provide any formal guarantees. Another option is to provide quantitative guarantees for differential privacy \cite{dwork14book}. Adding a carefully designed noise to each user's update would likely ensure the final trained model is differentially private. While this have not been done, it is an important problem to address.

As we remarked in Section~\ref{sec:meanFL}, communication is a likely candidate to be a bottleneck in practice. We proposed few techniques to reduce the amount of bits each user has to upload. However, it is not as straightforward to make all of the ideas compatible with the above secure aggregation protocol, or noise insertion commonly used to obtain differential privacy. As an example, the secure aggregation protocol expects a vector of a particular structure where on which addition can be performed directly. This structure can be obtained by naive update compression mechanisms, but not using data adaptive ones, which naturally give better performance.

Finally, if and when \FedOpt becomes commonly used tool in machine learning practice, it will require innovation in tools for machine learning engineers. Many of the common blocks of machine learning workflow will become unavailable. For instance, it will be harder to rapidly experiment with novel network architectures if they are not trained only in a datacenter, increasing the need to get more theoretical understanding into what makes training of neural networks hard or easy, and what makes them generalize well. Further, any concerns about system stability are mostly irrelevant for machine learning engineers, as it is not an issue if a system does not train or crashes in datacenter. However, crashing every phone an experiment runs on would be a major issue. These examples and many more will drive thorough rethinking of how machine learning tools are used to ultimately design products.

\bibliographystyle{plain}
\bibliography{references}

\end{document}